\pgfplotsset{compat=1.17}
\newlength\TopY
\newlength\BotY
\newlength\TotalHeight
\newcommand{\eq}[1]{\begin{align}#1\end{align}}
\newcommand{\bhline}[1]{\noalign{\hrule height #1}}
\newcommand*\samethanks[1][\value{footnote}]{\footnotemark[#1]}
\titlespacing*{\paragraph}{0pt}{2.5pt plus 1pt minus 1pt}{1em}
\newcommand{\leqnomode}{\tagsleft@true}
\newcommand{\reqnomode}{\tagsleft@false}
\newcommand{\tri}[1]{{\left\vert\kern-0.25ex\left\vert\kern-0.25ex\left\vert #1 
    \right\vert\kern-0.25ex\right\vert\kern-0.25ex\right\vert}}
\newcommand{\RN}[1]{%
  \textup{\uppercase\expandafter{\romannumeral#1}}%
}
\newcommand{\rnu}[1]{%
  \textup{\expandafter{\romannumeral#1}}%
}
\DeclareMathOperator*{\argmin}{arg\,min}
\def\E{\mathbb{E}}
\def\G{\mathbb{G}}
\def\N{\mathbb{N}}
\def\cN{\mathcal{N}}
\def\cP{\mathcal{P}}
\def\R{\mathbb{R}}
\def\Z{\mathbb{Z}}
\newcommand{\mpr}{\mathbb{P}}
\newcommand{\grad}{\nabla}
\DeclarePairedDelimiter\ceil{\lceil}{\rceil}
\DeclarePairedDelimiter\floor{\lfloor}{\rfloor}
\DeclarePairedDelimiter\abs{\lvert}{\rvert}%
\DeclarePairedDelimiter\norm{\lVert}{\rVert}%
\newcommand*{\inner}[2]{\left  \langle #1,  #2 \right \rangle}
\newcommand*{\rom}[1]{\expandafter\@slowromancap\romannumeral #1@}
\def\tr{\text{Tr}}
\def\sym{\mathrm{Sym}}
\newcommand*{\indic}[1]{\mathbbm{1}_{  #1  }   }
\newcommand{\bs}[1]{\boldsymbol{#1}}
\def\cR{\mathcal{R}}
\def\c{\upkappa}
\def\rks{\mathsf{rk}_{\star}} 
\def\rk{\mathsf{rk}} 
\def\rs{r_s}
\def\ru{r_u}
\def\rus{r_{u_{\star}}}
\def\W{\bs{W}}
\def\Z{\bs{Z}}
\def\Th{\bs{\Theta}}
\def\lr{\upeta}
\def\normL{\norm{\bs{\Lambda}}_\mathrm{F}}
\def\V{\bs{V}}
\def\tL{\widetilde{\bs{\Lambda}}}
\def\Lae{\bs{\Lambda}_{\mathrm{e}}}
\def\La{\bs{\Lambda}}
\newcommand{\uV}[1]{\mspace{2.2mu}\underline{\mspace{-2.2mu}\bs{V}\mspace{-2.2mu}}_{#1} \mspace{2.2mu}}
\newcommand{\uVs}[1]{\mspace{2.2mu}\underline{\mspace{-2.2mu}\bs{V}\mspace{-2.2mu}}_{#1}^{\,2} \mspace{2.2mu}}
\newcommand{\uVc}[1]{\mspace{2.2mu}\underline{\mspace{-2.2mu}\bs{V}\mspace{-2.2mu}}_{#1}^{\,3} \mspace{2.2mu}}
\newcommand{\uVq}[1]{\mspace{2.2mu}\underline{\mspace{-2.2mu}\bs{V}\mspace{-2.2mu}}_{#1}^{\,4} \mspace{2.2mu}}
\newcommand{\oV}[1]{\bar{\bs{V}}_{#1}}
\newcommand{\uK}[1]{\underline{\bs{K}}_{#1}}
\newcommand{\oK}[1]{\bar{\bs{K}}_{#1}}
\newcommand{\uB}[1]{\underline{\bs{B}}_{#1}}
\newcommand{\oB}[1]{\bar{\bs{B}}_{#1}}
\newcommand{\uG}[1]{\underline{\bs{G}}_{#1}}
\newcommand{\oG}[1]{\bar{\bs{G}}_{#1}}
\newcommand{\unu}[1]{\underline{\bs{\nu}}_{#1}}
\newcommand{\oze}[1]{\bar{\bs{\zeta}}_{#1}}
\newcommand{\uze}[1]{\underline{\bs{\zeta}}_{#1}}
\def\Llf{\bs{\Lambda}_{\ell_1} }
\def\Llftop{\bs{\Lambda}_{\ell_1,11} }
\def\Lls{\bs{\Lambda}_{\ell_2} }
\def\Llstop{\bs{\Lambda}_{\ell_2,11} }
\def\Luf{\bs{\Lambda}_{u_1} }
\def\Luftop{\bs{\Lambda}_{u_1,11} }
\def\Lus{\bs{\Lambda}_{u_2} }
\def\A{\bs{A}}
\def\M{\bs{M}}
\def\G{\bs{G}}
\def\T{\bs{T}}
\def\F{\bs{F}}
\def\D{\bs{D}}
\def\S{\bs{S}}
\def\U{\bs{U}}
\def\hL{\hat{\bs{\Lambda}}}
\def\sTh{\mathsf{\Theta}}
\def\sM{\mathsf{M}}
\def\sG{\mathsf{G}}
\def\Lst{\grad_{\text{St}} \bs{L}_{t+1}}
\def\Pst{ \bs{\mathcal{P}}_{t+1}}
\newcounter{relctr} %% <- counter for relations
\newcommand\labelrel[2]{%
  \begingroup
    \refstepcounter{relctr}%
    \stackrel{\tiny{(\alph{relctr})}}{\mathstrut{#1}}%
    \originallabel{#2}%
  \endgroup
}
\declaretheoremstyle[
  headfont=\color{blue}\normalfont\bfseries,
  bodyfont=\color{blue}\normalfont\itshape,
]{colored}
\declaretheoremstyle[
  headfont=\color{purple}\normalfont\bfseries,
  bodyfont=\color{purple}\normalfont\itshape,
]{pcolored}
\declaretheoremstyle[
  headfont=\color{violet}\normalfont\bfseries,
  bodyfont=\color{violet}\normalfont\itshape,
]{vcolored}
\declaretheoremstyle[
  headfont=\color{red}\normalfont\bfseries,
  bodyfont=\color{red}\normalfont\itshape,
]{rcolored}
\declaretheorem[
  % style=colored,
  name=Theorem,
]{theorem}
\declaretheorem[
  % style=pcolored,
  name=Corollary,
]{corollary}
\declaretheorem[
  % style= vcolored,
  name=Lemma,
]{lemma}
\declaretheorem[	
  name=Remark,
]{remark}
\declaretheorem[
  % style=colored,
  name=Proposition,
]{proposition}
\def\BibTeX{{\rm B\kern-.05em{\sc i\kern-.025em b}\kern-.08em
    T\kern-.1667em\lower.7ex\hbox{E}\kern-.125emX}}
\title{Learning quadratic neural networks in high dimensions:\\ SGD dynamics and scaling laws}
\author{
G\'erard Ben Arous\thanks{New York University. \texttt{benarous@cims.nyu.edu}.},\,\,
Murat A. Erdogdu\thanks{University of Toronto and Vector Institute. \texttt{\{erdogdu,vural\}@cs.toronto.edu}.},\,\,
Nuri Mert Vural\samethanks[2],\,\,
Denny Wu\thanks{New York University and Flatiron Institute. \texttt{dennywu@nyu.edu}.}
\vspace{-3mm}
}
\begin{document}
\maketitle

% \vspace{-2.5mm}

\begin{abstract}

We study the optimization and sample complexity of gradient-based training of a two-layer neural network with quadratic activation function in the high-dimensional regime, where the data is generated as $f_*(\bs{x}) \propto \sum_{j=1}^{r}\lambda_j \sigma\left(\langle \boldsymbol{\theta_j}, \boldsymbol{x}\rangle\right), \boldsymbol{x} \sim \mathcal{N}(0,\boldsymbol{I}_d)$, $\sigma$ is the 2nd Hermite polynomial, and $\lbrace\boldsymbol{\theta}_j \rbrace_{j=1}^{r} \subset \mathbb{R}^d$ are orthonormal signal directions. We consider the extensive-width regime $r \asymp d^\beta$ for $\beta \in [0, 1)$, and assume a power-law decay on the (non-negative) second-layer coefficients $\lambda_j\asymp j^{-\alpha}$ for $\alpha \geq 0$. 
We present a sharp analysis of the SGD dynamics in the feature learning regime, for both the population limit and the finite-sample (online) discretization, and derive scaling laws for the prediction risk that highlight the power-law dependencies on the optimization time, sample size, and model width. Our analysis combines a precise characterization of the associated matrix Riccati differential equation with novel matrix monotonicity arguments to establish convergence guarantees for the infinite-dimensional effective dynamics. 

\end{abstract}

\allowdisplaybreaks
\section{Introduction} 

We study the problem of learning a two-layer neural network (NN) with quadratic activation on isotropic Gaussian data. The target function (or the ``teacher'' model) is defined as
\begin{equation}\textstyle
y = f_*(\bs{x}) + \epsilon   ~~\text{with}~~  
 f_*(\bs{x}) =
\frac{1}{\norm{\bs{\Lambda}}_\text{F}} \sum_{j=1}^{r} \!
\lambda_j \sigma\left(\langle \boldsymbol{\theta}_j, \boldsymbol{x}\rangle\right) ~\text{and}~  \boldsymbol{x} \sim \mathcal{N}(0,\boldsymbol{I}_d),    \label{eq:target}
\end{equation}
where $\sigma(z) = z^2 - 1$ is the 2nd Hermite polynomial; $\epsilon$ is zero-mean, independent sub-Gaussian noise; $\{ \boldsymbol{\theta}_j \}_{j=1}^{r} \subset \mathbb{R}^d$ are orthogonal signal directions (index features), $\lambda_1 > \lambda_2 > \dots > \lambda_{r} > 0$ are their respective contributions, and $\bs{\Lambda} = \text{diag}(\lambda_1, \cdots, \lambda_r)$ collects the second-layer coefficients. The normalization in front of the sum ensures that the output magnitude remains constant. 
Our goal is to learn this target network using a ``student'' two-layer neural network with quadratic activation and $\rs$ neurons, trained via a gradient-based optimization algorithm. This setting encompasses several well-known problems:
\begin{itemize}[leftmargin=*,itemsep=0.6mm,topsep=0.6mm]
    \item \textit{Phase retrieval ($r =1$)}. The problem of learning one quadratic neuron (i.e., phase retrieval) has been studied extensively~\cite{fienup1982phase,chen2015solving,tan2019online}. The quadratic $\sigma$ has information exponent $k=2$ (defined as the index of the lowest non-zero Hermite coefficient~\cite{dudeja2018learning,benarous2021online}). This entails that randomly initialized parameters are close to a saddle point in high dimensions; hence the SGD dynamics exhibit a plateau (``search'' phase) of length $\log d$ before the loss decreases sharply (``descent'' phase). 
    \item \textit{Multi-spike PCA ($r = \Theta_d(1)$)}. The target function \eqref{eq:target} is a subclass of Gaussian multi-index models, for which various algorithms have been proposed for the finite-rank case $\rs = \Theta_d(1)$ \cite{chen2020learning,damian2022neural,bietti2023learning}. The setting also closely relates to the multi-spike PCA problem, for which online SGD \cite{arous2024high} and other streaming algorithms has been studied \cite{oja1985stochastic,jain2016streaming,allen2017first}. 
    
    \item \textit{Linear-width quadratic NN $(r \asymp d)$.} 
    The regime where the teacher width $\rs$ grows proportionally with dimensionality $d$ has also been studied, typically in the well-conditioned setting (e.g., identical $\lambda_j$'s). Recent works characterized the objective landscape \cite{soltanolkotabi2018theoretical,du2018power,venturi2019spurious,gamarnik2019stationary,ghorbani2019limitations}, optimization dynamics \cite{sarao2020optimization,martin2023impact}, and statistical efficiency \cite{maillard2024bayes,erba2025nuclear}.
\end{itemize}

In this work we focus on the ``extensive-rank'' regime where $r \asymp d^\beta$ for $\beta \in (0, 1)$ and $\rs \asymp d^{\gamma}$ for $\gamma \in [0,1)$, and place a power-law assumption on the second-layer coefficients: $\lambda_j\asymp j^{-\alpha}$ for $\alpha \geq 0$. Our setting is motivated by the following lines of research. 

\begin{figure}[t]
% \vspace{-2.mm}
  \centering
  \begin{subfigure}{0.48\textwidth}
    \centering
    \scalebox{0.46}{
\begin{tikzpicture}
  %─── Left: 5‐row groupplot
  \begin{groupplot}[
    group style={
      group name=taskgrid,
      group size=1 by 5,
      vertical sep=0.55cm
    },
    width=5cm,
    height=3.2cm,
    xmin=0, xmax=5,
    ymin=0, ymax=1.05,
    axis lines=left,
    xtick=\empty,
    ytick={0,0.5,1},
    grid=both
  ]
    % Task 1
    \nextgroupplot[
      ylabel={\Large Task 1},
      y label style={rotate=270, anchor=east, xshift=2pt}
    ]
      \addplot[line width=1.3pt,blue,domain=0:1]{1};
      \addplot[line width=1.3pt,blue,domain=1:5]{0};
      \addplot[line width=1.3pt,blue] coordinates{(1,1)(1,0)};
      
    % Task 2 (here we grab 'task2anchor')
    \nextgroupplot[
      ylabel={\Large Task 2},
      y label style={rotate=270, anchor=east, xshift=2pt}
    ]
      \addplot[line width=1.3pt,blue,domain=0:2]{0.5};
      \addplot[line width=1.3pt,blue,domain=2:5]{0};
      \addplot[line width=1.3pt,blue] coordinates{(2,0.5)(2,0)};
      \coordinate (task2anchor) at (rel axis cs:1.02,0.5);

    % Task 3
    \nextgroupplot[
      ylabel={\Large Task 3},
      y label style={rotate=270, anchor=east, xshift=2pt}
    ]
      \addplot[line width=1.3pt,blue,domain=0:3]{0.25};
      \addplot[line width=1.3pt,blue,domain=3:5]{0};
      \addplot[line width=1.3pt,blue] coordinates{(3,0.25)(3,0)};

    % Dots
    \nextgroupplot[hide axis]
      \node at (rel axis cs:0.5,0.5){\Huge$\vdots$};

    % Task 4
    \nextgroupplot[
    ylabel={\Large Task $r$},
      y label style={rotate=270, anchor=east, xshift=2pt}, xlabel={\Large Time $t$}]
      \addplot[line width=1.3pt,blue,domain=0:4.5]{0.1};
      \addplot[line width=1.3pt,blue,domain=4.5:5]{0};
      \addplot[line width=1.3pt,blue] coordinates{(4.5,0.1)(4.5,0)};
  \end{groupplot}

  %─── Put the arrow back, using that anchor:
  \draw[very thick,->,>=latex]
    ($(task2anchor)+(0.25cm,-2cm)$)
    -- ($(task2anchor)+(1.75cm,-2cm)$)
    node[midway,above,yshift=4pt,font=\Large]{Sum};

  %─── Now measure & place the right‐hand plot:
  \pgfextracty{\TopY}{\pgfpointanchor{taskgrid c1r1}{north east}}
  \pgfextracty{\BotY}{\pgfpointanchor{taskgrid c1r5}{south east}}
  \setlength{\TotalHeight}{\TopY}
  \addtolength{\TotalHeight}{-\BotY}

  \node[anchor=north west,inner sep=0pt]
    at ($(taskgrid c1r1.north east)+(2cm,0)$)
  {%
    \adjustbox{height=1.05\TotalHeight}{\input{loglog.tex}}%
  };
\end{tikzpicture}
        }
    \caption{Additive model hypothesis for scaling laws.}
    \label{fig:left}
  \end{subfigure}
   ~~~~~~\begin{subfigure}{0.42\textwidth}
   \centering
     \includegraphics[width=0.95\textwidth]{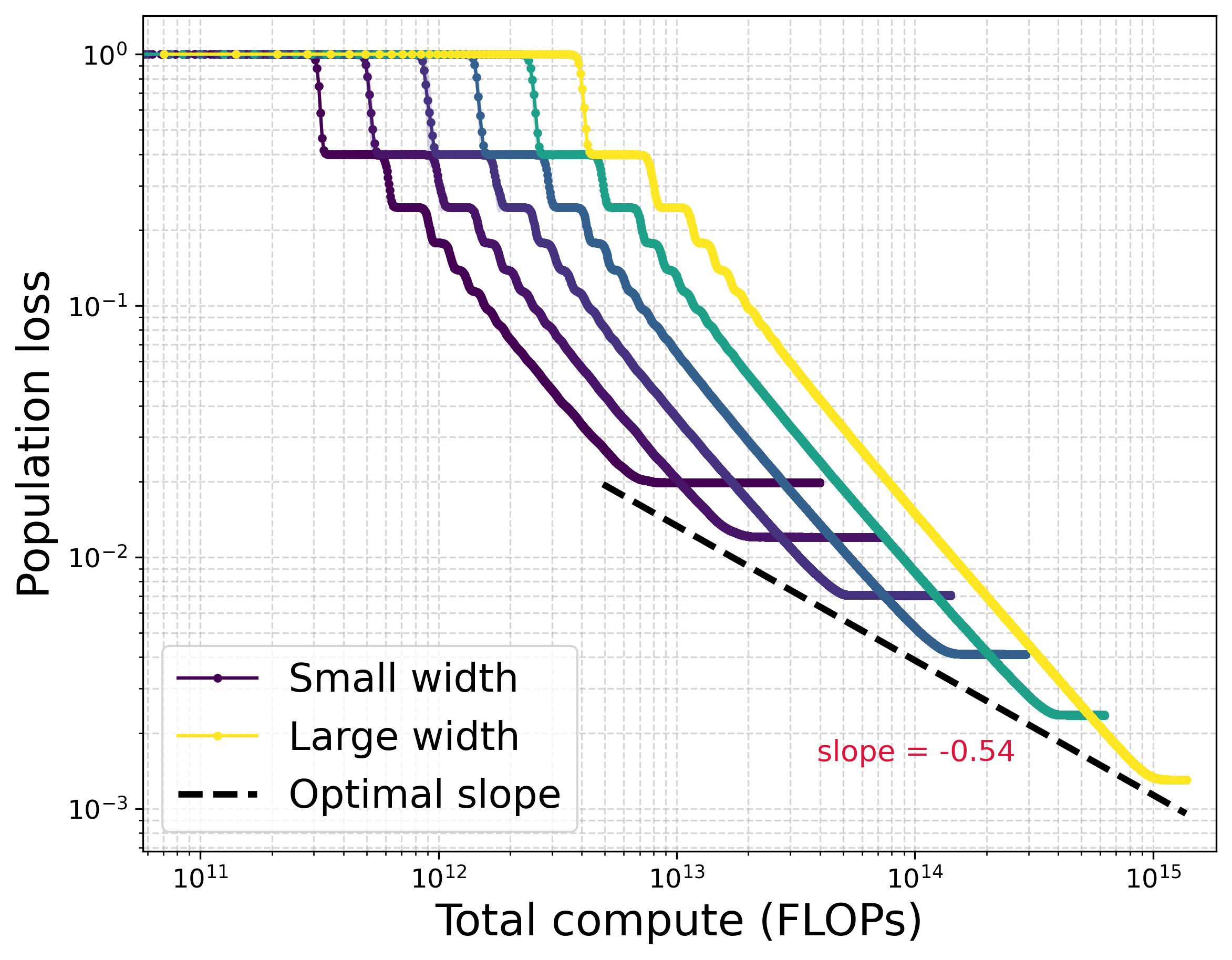}
     \caption{SGD risk curves for quadratic NN.}
    \label{fig:right}
    \end{subfigure}
\caption{\small $(a)$ Illustration of the additive model hypothesis, i.e., sum of emergent learning curves at different timescales yields a power law in the cumulative loss. $(b)$ Population loss vs.~compute for two-layer quadratic NNs trained with online SGD with batch size $d$ on squared loss. We set $d=3200$, and for the teacher model $r=2400$, $\alpha=1$. }
  \label{fig:intro}
% \vspace{-3mm}
\end{figure}

\paragraph{Neural scaling laws \& emergence.} 
Recent empirical studies on large language models (LLMs) reveal that increasing the model or training data size often results in a predictable, power-law decrease in the loss known as \textit{neural scaling laws} \cite{hestness2017deep,kaplan2020scaling,hoffmann2022training}. While such scaling of generalization error has been derived for sketched linear models \cite{maloney2022solvable,bordelon2024dynamical,paquette2024phases,lin2024scaling,defilippis2024dimension}, these analyses assume random projection with no \textit{feature learning}, and hence cannot capture the NN's ability to learn useful features~\cite{girshick2014rich,devlin2018bert} that adapt to the underlying data structure. We aim to investigate a setting where the training of a nonlinear NN beyond the ``lazy'' regime exhibits a nontrivial scaling law.

Feature learning in neural networks is often studied theoretically through the learning of \emph{multi-index models}, where the target function depends on a small number of latent directions (see \cite{bruna2025survey} and references therein). For these low-dimensional targets, it is known that the training dynamics typically exhibit \emph{emergent} (or staircase-like) behavior --- long plateaus followed by sharp drops in loss \cite{benarous2022high,abbe2023sgd}. 
To reconcile this emergent loss curve with smooth power-law decay, recent works hypothesized that the pretraining objective can be decomposed into a sum of losses on individual tasks \cite{michaud2024quantization,nam2024exactly}, the learning of each exhibits a sharp transition, and the superposition of numerous emergent risk curves at different timescales yields a power-law scaling of the cumulative loss (see Figure~\ref{fig:intro}(a)). In this context, the two-layer network \eqref{eq:target} can be viewed as a sum of single-index phase retrieval tasks, where the length of each $\sim\log d$ plateau in the risk trajectory can be modulated by the second-layer coefficient $\lambda_j$. This motivates the following question:

\begin{center}
\textit{\textbf{Q1:} Does gradient-based training of a two-layer quadratic network yield power-law loss scaling, when the target function is an additive model with varying second-layer coefficients $\{\lambda_j\}_{j=1}^{r}$?}
\end{center}
In Figure~\ref{fig:intro}(b) we empirically observe the affirmative: when the target function has smoothly decaying second-layer weights, online SGD training yields a power-law risk curve that resembles the scaling laws in \cite{kaplan2020scaling,hoffmann2022training}. 
The goal of this work is to rigorously establish such scaling laws.

\paragraph{Learning extensive-width neural networks.} Prior works on multi-index models have shown that when $r =\Theta_d(1)$, gradient-based training succeeds with polynomial sample complexity depending on properties of the link function \cite{abbe2022merged,damian2022neural,bietti2022learning}. 
The ``extensive-rank'' regime where $r \asymp d^\beta$ for $\beta>0$ is relatively under-explored (except for the linear width regime $r \asymp d$ \cite{martin2023impact,maillard2024bayes}); this setting is arguably closer to the practical neural network training (compared to the narrow-width setting), and also bears connections to several observations in the LLM literature such as \textit{superposition} \cite{elhage2022toy} and \textit{skill localization} \cite{dai2021knowledge,wang2022finding,panigrahi2023task}, where the model simultaneously acquires a large number of ``skills'' during pretraining (see e.g., \cite{oko2024learning}). 

The learning dynamics of \eqref{eq:target} with divergingly many neurons is challenging to analyze primarily due to the fact that the effective dynamics may not be captured by a finite set of \textit{summary statistics} \cite{benarous2022high} (as in the finite-$r$ case).  Recent works~\cite{oko2024learning,simsek2024learning} addressed this challenge by assuming that the activation $\sigma$ has information exponent $k \geq 3$, which allows the learning dynamics to decouple across feature directions. However, the case $k \le 2$, which includes the quadratic activation studied in this work, remained open: existing analyses either assumed ``isotropic'' feature contributions ($\lambda_1 = \lambda_{r}$) \cite{ren2024learning,simsek2024learning}, or established a computational complexity for SGD that scales with $d^{\Theta(\lambda_1 / \lambda_{r})}$ \cite{li2020learning}, which leads to pessimistic \textit{exponential} dimension dependency in the power-law setting we consider. We therefore ask the following question.
\begin{center}
    \textit{\textbf{Q2:} Can we establish optimization and sample complexity of learning an extensive-width \\ quadratic neural network \eqref{eq:target} with anisotropic, power-decaying feature contributions? } 
\end{center}

\vspace{-3.2mm}

\subsection{Our Contributions}  

We analyze the risk trajectory of learning \eqref{eq:target} with both gradient flow on the mean squared error (MSE) loss and its online SGD discretization on Stiefel manifold, covering the extensive-width and power-law settings. We derive scaling laws for feature recovery and population risk as a function of teacher and student network widths $\rs,r$, the decay exponent $\alpha$, the optimization time, and the sample size (for the discretized dynamics). Our contributions are summarized as follow (see also Table \ref{tab:summary}). 

\begin{enumerate}[leftmargin=*,topsep=0.75mm]
    \item In Section \ref{sec:continuous}, we analyze the population gradient flow and tightly characterize the loss decay with respect to time and the student width $r_s$. We show that the signal directions are recovered sequentially, and the population MSE follows a smooth power law specified by the decay rate $\alpha>0$. 
    \item  In Section \ref{sec:discrete}, we consider the online stochastic gradient descent (SGD) dynamics on the Stiefel manifold and derive scaling laws with respect to sample size. When specializing to the isotropic setting $\alpha=0$, our sample complexity improves upon \cite{ren2024learning} in the extensive-width setting and matches the information theoretic limit (in terms of $d,r$ dependence) up to polylogarithmic factors. 
\end{enumerate}

The following technical challenges in the extensive-width regime are central to our analysis:  
\begin{itemize}[leftmargin=*,topsep=0.75mm]
\item \textit{Coupled population dynamics.} As $r, r_s \to \infty$, we must track infinitely many overlapping student and teacher neurons. \cite{oko2024learning,simsek2024learning} assumed high information exponent $k > 2$, to decouple the dynamics into $r$ independent single-index models, but such property does not hold in our quadratic case ($k = 2$). We address this by leveraging the closed-form solution of the quadratic problem \cite{martin2023impact}, which satisfies a \textit{Matrix Riccati ODE}. A key ingredient in our analysis is its \textit{monotonicity with respect to its initialization}, illustrated in Figures~\ref{fig:nonmonotone}(a),  which enables sharp risk bounds via comparisons to decoupled models.
\item \textit{Operator norm discretization error.} Prior works~\cite{benarous2021online, bietti2023learning, arous2024high} focused on finite-$r$ settings, where Frobenius norm control of the SGD noise was sufficient and natural: it allows bounding error direction-wise without incurring additional dimension dependence. However, in the extensive-width regime, such bounds become pessimistic and lead to suboptimal $r$-dependent rates. Hence we need to establish \emph{operator norm} concentration around the population dynamics.
\item \textit{Matrix-monotone comparison framework.}  
To control discretization error in operator norm, we extend the monotonicity-based argument from the first item to discrete time and introduce a novel comparison-based discretization technique. Our approach constructs matrix-valued reference sequences corresponding to decoupled dynamics that tightly bound the discrete evolution from above and below. This yields sharp operator norm control even when the true trajectories are non-monotone (see Figure~\ref{fig:nonmonotone}), as the analysis avoids relying on the trajectory itself by comparing against simpler bounding sequences.  
% This framework may be of independent interest for future
% analyses requiring operator norm control of discretization error.
\end{itemize}

\begin{table*}[t]  
% \vspace{-1mm} 
\captionsetup{singlelinecheck=off}
\begin{center}
\renewcommand{\arraystretch}{1.5}
\scalebox{0.95}{
 % \scalebox{0.95}{
\begin{NiceTabular}{ccccc}[hlines]
\bhline{1.5pt}
Algorithm & Decay rate ($\lambda_j$) & Risk scaling law & Result \\
\bhline{1.5pt}
\Block{2-1}{Gradient flow} 
& $\alpha>0.5$ & $\bar{t}^{- \frac{2 \alpha - 1}{\alpha}} +  \rs^{- (2 \alpha  - 1)}$  & \Block{2-1}{Theorem \ref{thm:gfresult}}  \\
&  $\alpha<0.5$ &  $(1 - \bar{t}^{\frac{1 - 2\alpha}{\alpha}} )_+  + (1 - (\rs/r)^{1 - 2\alpha})_+$   & \\
\bhline{1pt}                               
\Block{2-1}{Online SGD (Stiefel)} 
& $\alpha>0.5$ & $(\eta\bar{t})^{- \frac{2 \alpha - 1}{\alpha}} +  \rs^{- (2 \alpha  - 1)}$ & \Block{2-1}{Theorem \ref{thm:sgdresult}}  \\
& $\alpha<0.5$ & $(1 - (\eta\bar{t})^{\frac{1 - 2\alpha}{\alpha}} )_+  + (1 - (\rs/r)^{1 - 2\alpha})_+$ & \\
% & Finite $ r$ & TBD & TBD  \\
\bhline{1pt}  
\end{NiceTabular} 
}
\renewcommand{\arraystretch}{.1}

% \vspace{2.5mm}
 \vspace{-0.5mm}  
\caption[]{\small Scaling laws for learning quadratic neural network \eqref{eq:target} using population gradient flow and its online SGD discretization. We omit constant factors in the risk scaling for ease of presentation. 
\begin{itemize}[leftmargin=*]
\item In $\alpha > 0.5$, for population gradient flow, $\bar{t}\sim t\cdot\log d$ is the rescaled time; for online SGD, $\bar{t}\sim t\cdot\log d$ where $t$ is the number of gradient steps, which is equal to the sample size, and $\eta\sim 1/(d \ \mathrm{poly log} (d))$ is the step size. 
\item In $\alpha< 0.5$, for population gradient flow, $\bar{t}\sim t\cdot r \log d$ is the rescaled time; for online SGD, $\bar{t}\sim t \cdot r \log d$ where $t$ is the number of gradient steps    and $\eta\sim 1/(dr^\alpha \ \mathrm{poly log} (d))$ is the step size. 
\end{itemize}
}
\label{tab:summary} 
\end{center}
%\vspace{-5mm}
\vspace{-2mm}
\end{table*} 

\subsection{Additional Related Works}

\paragraph{Learning multi-index models with SGD.} 
When $r =1$, the target is a \textit{single-index model} with quadratic link function. The SGD learning of single-index models has been extensively studied in the feature learning literature \cite{benarous2021online,ba2022high,mousavi2022neural,ba2023learning,mousavi2023gradient,moniri2023theory,mahankali2024beyond,berthier2024learning,damian2024smoothing,glasgow2025propagation}; while this model has $d$ parameters to be estimated, the quadratic link (with information exponent $k=2$) incurs an additional $\log d$ factor in the complexity of online SGD. More generally, the setting where $r =\Theta_d(1)$ is covered by recent analyses of \textit{multi-index models} \cite{abbe2022merged,abbe2023sgd,bietti2023learning,dandi2023two,collins2023hitting,arous2024high,vural2024pruning,mousavi2024learning}; however, these
learning guarantees for multi-index models typically yield superpolynomial complexity when the target function is rank-extensive. 
The sample complexity of gradient-based learning is also connected to statistical query lower bounds \cite{damian2024computational,dandi2024benefits,lee2024neural,arnaboldi2024repetita}.  

\paragraph{Quadratic NNs and additive models.} Prior theoretical works on learning two-layer neural network with quadratic activation function have studied the loss landscape \cite{soltanolkotabi2018theoretical,du2018power,venturi2019spurious,gamarnik2019stationary,ghorbani2019limitations} and the optimization dynamics 
\cite{sarao2020optimization,arnaboldi2023escaping,martin2023impact,ren2024learning}. 
While existing optimization and statistical guarantees may cover the extensive-width regime (see e.g., \cite{du2018power,martin2023impact,ren2024learning}), to our knowledge, precise scaling laws have not been established in our extensive-rank and power-law setting.
\eqref{eq:target} is also an instance of the \textit{additive model} \cite{stone1985additive,hastie1987generalized,bach2017breaking} where the individual functions are given as (orthogonal) single-index models with \textit{unknown} index features. For this model, \cite{oko2024learning,simsek2024learning} established learning guarantees in the well-conditioned regime, under the assumption that the link function $\sigma$ has information exponent $k>2$. 

\section{Background and Problem Setting}
\label{sec:background}

\subsection{Student-teacher Setting}
 
\paragraph{Teacher Network.}  
We consider the task of learning a teacher network with a quadratic (second-order Hermite) activation function written as
\eq{
y = f_*(\bs{x}) + \epsilon ~~ \text{with} ~~  f_*(\bs{x}) \coloneqq \frac{1}{ \norm{\bs{\Lambda}}_\text{F}} \sum_{j = 1}^{r} \lambda_j  \big( \inner{\bs{\theta}_j}{\bs{x}}^2  -   1 \big)  ~ \text{and}  ~ \bs{x} \sim \cN(0, \bs{I}_d),  \label{eq:target2}
}
where $\bs{x} \in \R^d$ is the input;  $\epsilon$ is zero-mean, independent sub-Gaussian noise; $r$ is the teacher network width; and $\{\bs{\theta}_j\}_{j=1}^{r} \subset \mathbb{R}^d$ is an orthonormal set of unknown signal vectors. We collect these as columns of the matrix $\bs{\Theta} \in \mathbb{R}^{d \times r}$. The contributions of these vectors are determined by the unknown second-layer coefficients $\lambda_1 > \lambda_2 > \dots > \lambda_{r} > 0$ with a power-law decay $\lambda_j  \asymp j^{-\alpha}$ for $\alpha\geq 0$, and $\bs{\Lambda}$ is a diagonal matrix whose $j$-th diagonal entry is $\lambda_j$. 
The normalization in front of summation ensures $\E[y^2]$ is constant. We assume $\epsilon$ is sub-Gaussian for mathematical convenience.
We focus on the regime where $r \asymp d^\beta$ for $\beta \in (0,1)$.

\begin{remark}
The orthogonality of $\{\bs{\theta}_j\}_{j=1}^r$ can be assumed without loss of generality: consider teacher models in the form of \eqref{eq:target2} with arbitrary first-layer weights $\bs{\Theta}$ and normalization $\E[ f_*(\bs{x}) ]=0$, the output can be written as $ f_*(\bs{x}) \propto\mathrm{Tr}\big( \bs{\Theta}\bs{\Lambda}\bs{\Theta}^\top  \bs{x}\bs{x}^\top   \big) + \mathrm{cst}$; hence we may redefine $(\lambda_j,\bs{\theta}_j)$ via the spectral decomposition. 
\end{remark}

\paragraph{Student Network.} We learn the target model with a quadratic student network defined as  
\eq{
\hat y(\bs{x}, \bs{W}) = \frac{1}{ \sqrt{\rs}} \sum_{j = 1}^{\rs}  \inner{\bs{w}_j}{\bs{x}}^2 - \norm{\bs{w}_j}_2^2,
\label{eq:student}
}
where $\rs$ is the width of the student network, and $\{\bs{w}_j\}_{j=1}^{\rs} \subset \mathbb{R}^d$ denotes the set of trainable weights. We collect these weights as the columns of the matrix $\bs{W} \in \mathbb{R}^{d \times \rs}$, and omit the dependence on $\bs{x}$ in $\hat y(\bs{x}, \bs{W})$ when clear from the context. 
Note that the norm subtraction ensures $\mathbb{E}_{\bs{x}}[\hat{y}(\bs{x}, \bs{W})] = 0$. 
We may equivalently write the student network as $\hat y(\bs{x}, \bs{W}) = \frac{1}{ \sqrt{\rs}} \sum_{j = 1}^{\rs}  \norm{\bs{w}_j}_2^2 \cdot (\inner{\bar{\bs{w}}_j}{\bs{x}}^2 - 1)$ where $\bar{\bs{w}}_j$ is unit-norm;
since our student does not have trainable second-layer, 
the norm component $\norm{\bs{w}_j}_2^2$ allows the model to adapt to 
the target second-layer $\lambda_j$; this homogeneous parameterization has been studied in prior works 
\cite{chizat2020implicit,ge2021understanding}.

\subsection{Training Objective}
Training constitutes to minimizing the squared loss; we define the instantaneous loss on $(\bs{x},y)$ as
\eq{\textstyle
\mathcal{L}(\bs{W}; (\bs{x}, y)) \coloneqq \frac{1}{16} \big(y - \hat y(\bs{x}, \bs{W}) \big)^2, 
}
where the prefactor is included for notational convenience in the gradient computation. 
We omit the dependence on $(\bs{x}, y)$ when clear from context. The population risk can be written as
\eq{\textstyle
R(\bs{W}) \coloneqq \mathbb{E}_{(\bs{x}, y)}[ \mathcal{L}(\bs{W}) ] = \frac{1}{8} \big \lVert \frac{1}{\sqrt{\rs}} \bs{W} \bs{W}^\top - \frac{1}{\normL} \bs{\Theta} \bs{\Lambda} \bs{\Theta}^\top \big \rVert_\text{F}^2 + \tfrac{1}{16}\E[\epsilon^2].
\label{eq:poprisk}
}

\paragraph{Alignment.}
Observe that the student network is invariant to right-multiplication of its weight matrix by an orthonormal matrix, i.e., $\hat y(\bs{x}, \bs{W}) = \hat y(\bs{x}, \bs{W} \bs{O})$ for any $\bs{O} \in \R^{\rs \times \rs}$ with $\bs{O}^\top \bs{O} = \bs{I}$. Consequently, any notion of alignment that depends on individual directions in $\bs{W}$ may not be informative. To capture directional learning in a way that respects this symmetry, we define alignment in terms of the subspace spanned by the student weights. We formalize this using the polar decomposition:
\eq{
\bs{W} \coloneqq \bs{U} \bs{Q}^{1/2}, \quad \text{where} \quad \bs{Q} \coloneqq \bs{W}^\top \bs{W} \quad \text{and} \quad \bs{U}^\top \bs{U} = \bs{I}_{\rs}. \label{def:polarcoord}
}
Here, $\bs{Q}$ denotes the radial component of the student weights, while $\bs{U}$ is an orthonormal matrix that encodes their directional component. We quantify the alignment between the student network and the $j$th teacher feature by the squared norm of the projection of $\bs{\theta}_j$ onto the column space of $\bs{W}$:
\eq{
\mathrm{Alignment}(\bs{W}, \bs{\theta}_j) \coloneqq \norm{ \bs{U}^\top \bs{\theta}_j }_2^2. \label{def:alignment}
}
$\mathrm{Alignment}(\bs{W}, \bs{\theta}_j)$ takes values in the interval $[0,1]$;
it is 0 if $\bs{\theta}_j$ is orthogonal to $\bs{W}$ (no alignment),
while it is 1 if $\bs{\theta}_j$ is in the column space of $\bs{W}$ (perfect alignment)\footnote{The definition in \eqref{def:alignment} may fail to converge to $1$ when $\alpha = 0$ and $\rs < r$, due to rotational symmetry in the teacher network. In this case, a more suitable notion of alignment can be defined using the principal angles between the subspaces spanned by $\bs{W}$ and $\bs{\Theta}$, which provides a rotation-invariant characterization of directional overlap. Specifically, for $\alpha = 0$, we define $\mathrm{Alignment}(\bs{W}, \bs{\theta}_j)$ as the $j$th largest eigenvalue of the matrix $\bs{\Theta}^\top \bs{U} \bs{U}^\top \bs{\Theta}$. \vspace{-3mm}}.

\section{Continuous Dynamics: Population Gradient Flow}
\label{sec:continuous}

We first analyze the continuous-time population gradient flow dynamics for \eqref{eq:poprisk}, given as
\eq{
\partial_t \bs{W}_t = - \grad R(\bs{W}_t),  ~~ \text{where} ~  \bs{W}_{0} \in \R^{d \times \rs}, ~  \bs{W}_{0,ij} \sim_{iid} \cN \left(0, 1/d \right), \tag{GF} \label{eq:gf} 
}
and the population gradient reads
\eq{
\grad R(\bs{W}_t) =-\frac{1}{2 \sqrt{\rs} \normL} \left( \Th \bs{\Lambda}\Th^\top - \frac{\normL}{\sqrt{\rs}} \W_t \W_t^\top   \right) \W_t.
}
For notational convenience, we define
\eq{ 
\mathcal{R}(t) \! \coloneqq \!   \textstyle\big \lVert \frac{1}{\sqrt{\rs}} \bs{W}_t \bs{W}_t^\top - \frac{1}{\normL} \bs{\Theta} \bs{\Lambda} \bs{\Theta}^\top \big \rVert_\text{F}^2 ~~ \text{and} ~~ \mathcal{A}(t, \bs{\theta}_j) \! \coloneqq \! \mathrm{Alignment}(\bs{W}_t, \bs{\theta}_j).
}
The following theorem sharply characterizes the timescale for alignment and the limiting risk curve. 
% For
% ease of exposition, we drop the prefactor $\frac{1}{8}$ in the excess risk so that it starts at $1$.

% \eq{
% \mathcal{R}(t) \coloneqq R(\W_t) ~~ \text{and} ~~ \mathcal{A}(t, \bs{\theta}_j) \coloneqq \mathrm{Alignment}(\bs{W}_t, \bs{\theta}_j).
% }
\begin{theorem}
\label{thm:gfresult}
Let $\lambda_j = j^{-\alpha}$ and $r \asymp d^\beta$ for some $\alpha \geq 0$ and $\beta \in (0,1)$. Consider the regime
\eq{\label{eq:rs-scale}
\begin{cases}
\frac{\rs}{r} \to \varphi \in (0,\infty) \hspace*{-0.5em} & \text{and } ~ d \geq \Omega_{\alpha, \beta, \varphi}(1), \quad  \text{if } \alpha \in [0, 0.5), \\[0.75ex]
\rs \asymp 1,                             & \text{and } ~ d \geq \Omega_{\alpha,\rs}(1), \quad \text{~if } \alpha > 0.5.
\end{cases}  
}
Define the effective student width and effective timescale as
\eq{
r_{\mathrm{eff}} \coloneqq \begin{cases}
\lfloor \rs (1 - \log^{\nicefrac{-1}{8}} \! d) \wedge r \rfloor, & \text{if } \alpha \in [0,0.5) \\
\rs, & \text{if } \alpha > 0.5.
\end{cases} ~~ \text{and} ~~ 
\mathsf{T}_{\mathrm{eff}}   \coloneqq \sqrt{\rs} \norm{\bs{\Lambda}}_\mathrm{F} \log \nicefrac{d}{\rs}.
}
Then, the population \eqref{eq:gf} dynamics satisfy the following   with probability $1 - o(1/d^2) - \Omega(1/\rs^2)$:
\begin{enumerate}[leftmargin = *, itemsep=0mm]
    \item \textbf{Alignment:} For $j \leq r_{\mathrm{eff}}$  and $t > 0$ satisfying $t \asymp r^\alpha$ when $\alpha  \in [0,0.5)$ and $t \asymp 1$ when $\alpha > 0.5$, we have
        \eq{
            \mathcal{A}  \big(t  \mathsf{T}_{\mathrm{eff}}, \bs{\theta}_j \big) = \mathbbm{1}\{t \geq  \tfrac{1}{\lambda_j} \} + o_d(1). \label{eq:GF-alignment}
        }
    \item  \textbf{Risk curve:} Under the same time scaling,  
        \eq{
        \mathcal{R} \big(t  \mathsf{T}_{\mathrm{eff}} \big) = 1 -\frac{1}{\normL^2} \sum_{j = 1}^{r_{\mathrm{eff}}} \lambda_j^2   \mathbbm{1}\{t  \geq \tfrac{1}{\lambda_j}\}    + o_d(1). \label{eq:GF-risk}
        }
\end{enumerate}
\end{theorem}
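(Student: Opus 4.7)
The plan is to reduce the gradient flow to a matrix Riccati ODE for the symmetric PSD matrix $\bs{G}_t := \W_t \W_t^\top / \sqrt{\rs}$, and then analyze its dynamics by comparison with a decoupled scalar logistic system through a matrix-monotonicity argument. Starting from \eqref{eq:gf}, a direct computation gives
\[
\partial_t \bs{G}_t \,=\, \frac{1}{2\sqrt{\rs}\,\normL}\bigl(\Th\bs{\Lambda}\Th^\top \bs{G}_t + \bs{G}_t\, \Th\bs{\Lambda}\Th^\top\bigr) \,-\, \frac{1}{\sqrt{\rs}}\,\bs{G}_t^2,
\]
and the population risk rewrites as $\mathcal{R}(t) = \tfrac{1}{8}\lVert \bs{G}_t - \Th\bs{\Lambda}\Th^\top / \normL \rVert_\mathrm{F}^2$, so the entire analysis can be carried out on this $d\times d$ symmetric matrix of rank at most $\rs$.

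Next, I would pass to the eigenbasis of $\Th\bs{\Lambda}\Th^\top$, where the teacher is diagonal with entries $\lambda_1>\dots>\lambda_r>0$ followed by $d-r$ zeros. Under the diagonal ansatz $\bs{G}_t = \mathrm{diag}(g_1(t),\dots,g_r(t),0,\dots)$, each entry obeys the scalar logistic equation
\[
\dot g_j(t) \,=\, \frac{\lambda_j}{\sqrt{\rs}\,\normL}\, g_j(t) \,-\, \frac{1}{\sqrt{\rs}}\, g_j(t)^2,
\]
which, starting from $g_j(0)\asymp \sqrt{\rs}/d$ (coming from $\lVert\W_0^\top\bs{\theta}_j\rVert_2^2 \asymp \rs/d$), escapes its saddle at precisely the time $\mathsf{T}_{\mathrm{eff}}/\lambda_j$ and then saturates at $\lambda_j/\normL$. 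Plugging the limiting values into the risk formula immediately yields the heuristic form \eqref{eq:GF-risk}, and the sharp switching-times structure in \eqref{eq:GF-alignment}.

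The main obstacle, and the heart of the proof, is that $\bs{G}_0$ is \emph{not} diagonal in the teacher basis, so the nonlinear coupling through $\bs{G}_t^2$ could a priori destroy the clean sequential-learning picture. The plan is to control this by the monotonicity property of the quadratic Riccati flow highlighted in the introduction: I would construct two reference trajectories $\bs{G}_t^{\mathrm{low}} \preceq \bs{G}_t \preceq \bs{G}_t^{\mathrm{up}}$ whose initial conditions commute with $\Th\bs{\Lambda}\Th^\top$ and whose ordered eigenvalues sandwich the spectrum of $\bs{G}_0$ on the teacher subspace (this is where Gaussian concentration of $\W_0$ enters). Each reference trajectory then reduces to the decoupled logistic system above, and the Riccati monotonicity transfers the decoupled emergence times to the true trajectory, giving sharp eigenvalue control on $\bs{G}_t$ and, through the polar decomposition \eqref{def:polarcoord}, the alignment statement \eqref{eq:GF-alignment}.

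The remaining pieces are (i) high-probability control of the initialization via Wishart concentration for $\W_0^\top\W_0$ and Gaussian concentration of the overlap $\Th^\top\W_0$; (ii) the case split $\alpha > 1/2$ versus $\alpha < 1/2$, dictated by whether the tail $\sum_{j > \reff}\lambda_j^2$ is negligible or dominant in $\normL^2$; and (iii) collecting the eigenvalue estimates into the explicit risk formula \eqref{eq:GF-risk} via the expansion of $\lVert \bs{G}_t - \Th\bs{\Lambda}\Th^\top/\normL\rVert_\mathrm{F}^2$. The most delicate quantitative step will be the $\alpha<1/2$ regime, where many $\lambda_j$ are of comparable size and the upper and lower reference trajectories must stay simultaneously tight across all $\reff \asymp r$ active directions; this is what I expect to dictate the cutoff $\lfloor \rs(1-\log^{-1/8} d)\wedge r\rfloor$ appearing in the definition of $\reff$.
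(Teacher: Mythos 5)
Your high-level strategy is the paper's: reduce to a matrix Riccati ODE, diagonalize, reduce to decoupled scalar logistic dynamics, and sandwich the true trajectory between two monotone reference trajectories with diagonal initializations. That is exactly the scheme the paper sketches in Section~\ref{sec:sketch} and carries out in Appendix~\ref{sec:thmgfresult}, and the case split by $\alpha$ is handled as you anticipate.

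The genuine gap is in how you plan to obtain the \textbf{alignment} statement~\eqref{eq:GF-alignment}. You track only the weight Gram matrix $\bs{G}_t \propto \W_t\W_t^\top$ and say the alignment will follow ``through the polar decomposition.'' But $\mathrm{Alignment}(\W,\bs{\theta}_j) = \bs{\theta}_j^\top\W(\W^\top\W)^{-1}\W^\top\bs{\theta}_j$, and the map $\W\W^\top \mapsto \W(\W^\top\W)^{-1}\W^\top$ (spectral projection onto the column space) is \emph{not} Lipschitz near rank-deficiency — exactly the regime that matters at the transition times, where $\bs{G}_t$ has eigenvalues of widely varying magnitude. Sharp eigenvalue control on $\bs{G}_t$ does not transfer directly to diagonal-entry control on the alignment matrix. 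The paper avoids this entirely by introducing a \emph{second} Gram matrix, $\bs{G}_U(t) = \Th^\top\U(t)\U(t)^\top\Th$, and observing (Proposition~\ref{prop:grammatrixodes}) that it satisfies its own \emph{autonomous} Riccati ODE, $\partial_t\bs{G}_U = \tfrac{0.5}{\normL\sqrt{\rs}}(\bs{\Lambda}\bs{G}_U + \bs{G}_U\bs{\Lambda} - 2\bs{G}_U\bs{\Lambda}\bs{G}_U)$, whose diagonal entries \emph{are} the alignments. The same diagonalization-and-monotonicity machine is then applied directly to $\bs{G}_U$. You should adopt this two-matrix structure: $\bs{G}_W$ for the risk curve~\eqref{eq:GF-risk} and $\bs{G}_U$ for the alignment~\eqref{eq:GF-alignment}. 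A secondary point: the paper's analysis leans heavily on the explicit closed-form solution of both Riccati equations (Lemma~\ref{lem:riccsolution}), which gives the sharp transition-time asymptotics; the scalar-logistic heuristic alone does not pin down the $\log(d/\rs)$ timescale with the right constants.
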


\begin{remark}
We make the following remarks about our result in Theorem \ref{thm:gfresult}:
\begin{itemize}[leftmargin=*]
\item  The spectral decay $\alpha$ determines both the choice of student width $\rs$ and the learning timescale in Theorem \ref{thm:gfresult}. Specifically, when $\alpha > 1/2$ (i.e., light-tailed regime), the coefficients $\{ \lambda_j \}_{j = 1}^r$ are square-summable, making the teacher model effectively finite-dimensional. Hence only finitely many directions need to be learned to achieve small loss, and a timescale of order $\log d$ and finite-width student suffices. In contrast, for the heavy-tailed regime $\alpha < 1/2$, we need to recover linear-in-$r$ directions, which require proportional student width $r_s / r \to \varphi$ and a longer timescale $r \log d$. 
This difference will be made explicit in Corollary~\ref{cor:asympriskcont}. 
\item Theorem \ref{thm:gfresult} verifies the \textit{additive model} hypothesis \cite{michaud2024quantization} for quadratic neural networks in the feature learning regime; specifically, \eqref{eq:GF-alignment} identifies sharp transition time in alignment between student weights and the $j$-th teacher direction, and \eqref{eq:GF-risk} suggests that the cumulative loss can be decomposed into individual emergent risk curves where the timescale is decided by the signal strength $\lambda_j$. 
\end{itemize}

\end{remark}

\paragraph{Neural scaling laws.}
As a corollary of Theorem \ref{thm:gfresult}, we obtain the following risk characterization.
\begin{corollary}
\label{cor:asympriskcont}
By Theorem \ref{thm:gfresult}, the asymptotic risk of \eqref{eq:gf} is given as follows: 
\begin{itemize}[leftmargin = *, itemsep=0mm]
    \item   Heavy-tailed regime ($\alpha \in [0,0.5)$): Almost surely, for all $t > 0$
    $$\mathcal{R}(t r \log d)
    \xrightarrow
    {d \to \infty}
    %\mathcal{R}_{\text{heavy}}(\tau) = 
    \big(1 - C t^{\frac{1 - 2 \alpha}{\alpha}}  \big)_+  \vee \big(1 - \varphi^{1 - 2\alpha}\big)_+.$$ 
    % where $C$ is an $(\alpha,\beta)$-dependent constant.  
    \item Light-tailed regime ($\alpha > 0.5$):  With probability  $1 - \Omega(1/\rs)$, for all $t > 0$, the risk $\mathcal{R}(t \log d)$ converges as $d \to \infty$ to a deterministic limit satisfying
    \eq{
    \mathcal{R}(t \log d) \xrightarrow{d \to \infty}
    \Theta\left( t^{- \frac{2\alpha - 1}{\alpha}} + \rs^{-(2\alpha - 1)} \right).
    }
    % where $C$ is an $\alpha$-dependent constant.
\end{itemize}
\end{corollary}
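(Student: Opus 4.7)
The plan is to derive the corollary as a direct algebraic consequence of the explicit risk formula \eqref{eq:GF-risk} in Theorem \ref{thm:gfresult}, using integral estimates for power-law sums. The first step is to match the time parameterizations. Since $\lambda_j = j^{-\alpha}$, I would estimate $\normL^2 = \sum_{j=1}^r j^{-2\alpha}$ by integral comparison: in the heavy-tailed case ($2\alpha<1$) this gives $\normL^2 \sim r^{1-2\alpha}/(1-2\alpha)$, so $\mathsf{T}_{\mathrm{eff}}= \sqrt{\rs}\normL\log(d/\rs) \asymp r^{1-\alpha}\log d$ (using $\rs \asymp r$), and hence the corollary's time $t\cdot r\log d$ corresponds to $t_{\mathrm{thm}}\asymp t\,r^{\alpha}$ in Theorem \ref{thm:gfresult}; in the light-tailed case ($2\alpha>1$), $\normL^2\to \zeta(2\alpha)$ and $\rs\asymp 1$, so $\mathsf{T}_{\mathrm{eff}}\asymp \log d$ and $t_{\mathrm{thm}}\asymp t$.

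Next, I would analyze the indicator sum in \eqref{eq:GF-risk}. The condition $t_{\mathrm{thm}}\geq 1/\lambda_j$ becomes $j\leq t_{\mathrm{thm}}^{1/\alpha}$, so the effective upper limit is $J:=\min(t_{\mathrm{thm}}^{1/\alpha},\reff)$, and by integral approximation $\sum_{j\leq J} j^{-2\alpha}$ is computed in closed form in each regime. In the heavy-tailed regime the ratio $\frac{1}{\normL^2}\sum_{j\leq J}\lambda_j^2$ equals $(J/r)^{1-2\alpha}(1+o_d(1))$; substituting $t_{\mathrm{thm}}=t r^{\alpha}$ in the signal-limited case $J=t_{\mathrm{thm}}^{1/\alpha}$ the powers of $r$ cancel to give $Ct^{(1-2\alpha)/\alpha}$, while in the student-width-limited case $J=\reff$ with $\reff/r\to \varphi\wedge 1$ the ratio tends to $\varphi^{1-2\alpha}$ (with the convention that values $\geq 1$ saturate the risk to $0$). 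The maximum of these two floors yields the $\vee$ expression after taking positive parts. In the light-tailed regime I would instead write $\sum_{j\leq J} j^{-2\alpha} = \zeta(2\alpha) - \sum_{j>J} j^{-2\alpha}$; the remainder satisfies $\sum_{j>J} j^{-2\alpha}\sim J^{1-2\alpha}/(2\alpha-1)$, so $\mathcal{R}\asymp J^{1-2\alpha}=\max(t^{-(2\alpha-1)/\alpha},\rs^{-(2\alpha-1)})$, and the max of two positive quantities is $\Theta$ of their sum.

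Finally, to upgrade the pointwise $o_d(1)$ from Theorem \ref{thm:gfresult} to convergence \emph{for all} $t>0$, I would use that $\mathcal{R}(\cdot)$ is monotone non-increasing along \eqref{eq:gf} (since $\partial_t\mathcal{R}=-\|\grad R(\W_t)\|_\mathrm{F}^2\leq 0$) and the limit is continuous in $t$; pointwise convergence on a countable dense set then implies uniform convergence on compact subsets of $(0,\infty)$. For the almost-sure statement in the heavy-tailed case, the failure probability in Theorem \ref{thm:gfresult} is $o(1/d^2)+O(1/\rs^2)=o(1/d^{2\wedge 2\beta})$, which is summable along $d\in\N$ so Borel–Cantelli gives the claim; in the light-tailed case $\rs\asymp 1$ is finite, hence the weaker probability $1-\Omega(1/\rs)$.

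The computation is essentially routine once Theorem \ref{thm:gfresult} is in hand; the only delicate points are keeping the two different time rescalings straight between the two regimes and correctly identifying which of the two competing floors ($t$-limited versus $\rs$-limited) governs the risk. The saturation behavior at $\varphi\geq 1$ (heavy-tailed) and the matching of constants in the $\Theta(\cdot)$ bound (light-tailed) both follow transparently from the integral comparisons above.
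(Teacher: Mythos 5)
Your proposal follows essentially the same route as the paper: apply Theorem \ref{thm:gfresult}, rewrite $\frac{1}{\normL^2}\sum_{j=1}^{r_{\mathrm{eff}}}\lambda_j^2\mathbbm{1}\{t\ge 1/\lambda_j\}$ as a normalized power-law sum truncated at $J=\min(t_{\mathrm{thm}}^{1/\alpha},r_{\mathrm{eff}})$, pass to the limit, and extend from fixed $t$ to all $t>0$ via monotonicity of $\mathcal{R}$ along the flow. The only cosmetic difference is that you compute the limiting ratio by direct integral comparison where the paper packages the same estimate as weak convergence of the measures $\mu_d\{j/r\}\propto j^{-2\alpha}$ plus the Portmanteau theorem; these are interchangeable and yield identical limits, so this is not a substantive deviation.

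One step does not work as written. For the heavy-tailed almost-sure claim you invoke Borel--Cantelli with failure probability ``$o(1/d^2)+O(1/\rs^2)=o(1/d^{2\wedge 2\beta})$, which is summable.'' But with $\rs\asymp r\asymp d^\beta$ and $\beta\in(0,1)$, the sequence $d^{-2\beta}$ is summable over $d\in\N$ only when $\beta>1/2$; for $\beta\le 1/2$ your bound does not permit Borel--Cantelli. The fix, which the paper uses implicitly, is to not take the theorem's $\Omega(1/\rs^2)$ at face value in the heavy-tailed regime: Lemma \ref{lem:goodevents} shows that in this regime $\mathcal{G}_{\mathrm{init}}$ fails with probability at most $3\rs\exp(-\rs/(2\log^2 d))$, i.e.\ exponentially small in $\rs\asymp d^\beta$, so the true per-$d$ failure probability is $o(1/d^2)$ and Borel--Cantelli applies for every $\beta\in(0,1)$. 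With that replacement the rest of your argument (the algebra identifying $Ct^{(1-2\alpha)/\alpha}$, the $\varphi^{1-2\alpha}$ floor from $r_{\mathrm{eff}}/r\to\varphi\wedge 1$, the $\zeta$-tail estimate in the light-tailed case, and the Dini-style extension to all $t$ using continuity of the heavy-tailed limit and that the light-tailed claim is only a $\Theta$ bound) is correct.
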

Corollary \ref{cor:asympriskcont} shows that, over appropriate timescales, the cumulative effect of these emergent transitions yields a smoothly decaying risk curve. Intuitively speaking, the power-law exponent arises from the Riemann integral approximation of the infinite sum \eqref{eq:GF-risk} -- see Appendix~\ref{app:scaling-proof} for details. 

The asymptotic risk behavior in Corollary \ref{cor:asympriskcont} is visualized in Figure \ref{fig:asymptoticriskgf}  (see also Figure~\ref{fig:intro}(b) for empirical simulation). The figure illustrates how the sharp, step-like emergent curve at $\alpha = 0$ (as observed in earlier works on multi-index learning \cite{benarous2021online,abbe2023sgd}) gradually transitions into a smooth curve as $\alpha$ increases. Notably, in the light-tailed regime $\alpha>1/2$, our risk curve resembles the neural scaling laws in \cite{kaplan2020scaling,hoffmann2022training} which takes the form of $\cR \sim 1/(\text{Data size})^a + 1/(\text{Model size})^{b}$, where the data size can be connected to optimization time under the one-pass discretization, which we analyze in the ensuing section.

\begin{figure}[t]
\vspace{-5mm}
    \centering
    \begin{subfigure}[t]{0.56\textwidth}
        \centering
        \includegraphics[width=\textwidth]{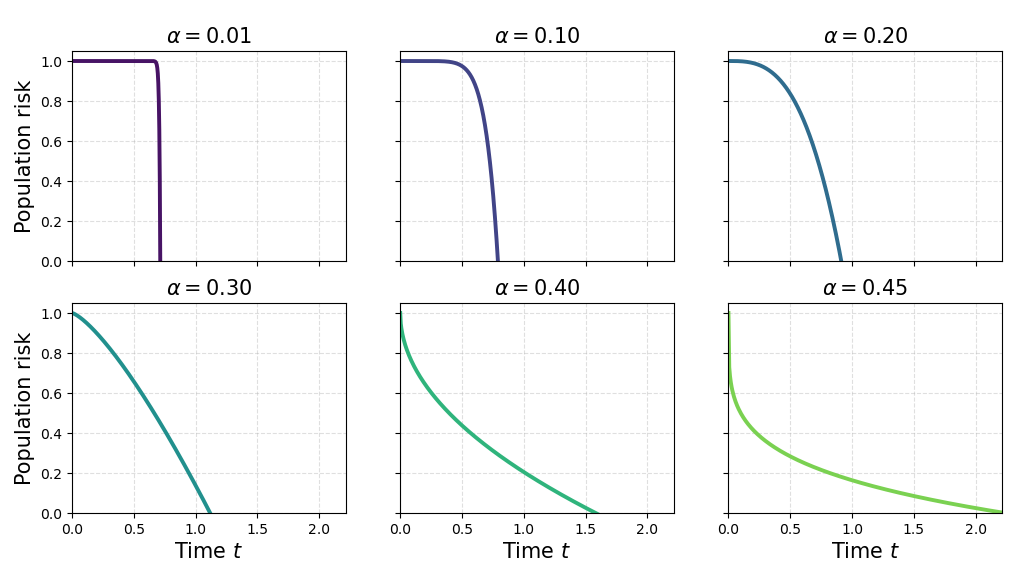}
        \caption{Heavy-tailed regime $(\alpha<1/2)$.}
    \end{subfigure} 
    \begin{subfigure}[t]{0.345\textwidth}
        \centering
        \includegraphics[width=\textwidth]{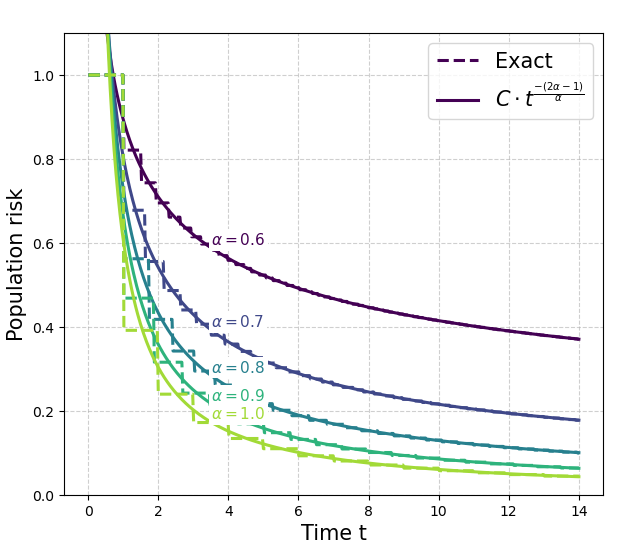}
        \caption{Light-tailed regime $(\alpha>1/2)$.}
    \end{subfigure}
    \caption{\small Illustration of the limiting risk trajectories and scaling behavior given in Corollary \ref{cor:asympriskcont}.}
    \label{fig:asymptoticriskgf}
    % \vspace{-0.5mm}
\end{figure}

\section{Discrete Dynamics: Online Stochastic Gradient Descent}
\label{sec:discrete}
% \vspace{-1mm}
Now we analyze the finite-sample, discrete-time counterpart of the population dynamics \eqref{eq:gf} and establish computational and statistical guarantees. 
We first discretize the directional component of the dynamics via online SGD with Stiefel constraint (see Proposition \ref{prop:sgdrotation}), and then introduce a fine-tuning step with negligible statistical and computational cost to fit the radial component; this mirrors the layer-wise training paradigm commonly used in theoretical analyses of gradient-based feature learning \cite{abbe2022merged,damian2022neural,ba2022high,barak2022hidden}.  
The procedure is summarized in Algorithm~\ref{alg:sgd}.  

\vspace{-1.mm}

\begin{algorithm}[H]
\caption{Online Stochastic Gradient Descent (Stiefel)
\label{alg:sgd}}
\begin{algorithmic}[1] 
\setlength{\itemsep}{0.25ex}
\For{$t = 1,2, \dots$}
    \State $\bs{\widetilde{W}}_{t} = \bs{W}_{t-1} - \eta \nabla_{\mathrm{St}} \mathcal{L}(\bs{W}_{t-1})$  
    \State $\bs{W}_{t} = \bs{\widetilde{W}}_{t} \left( \bs{\widetilde{W}}_{t}^\top \bs{\widetilde{W}}_{t} \right)^{-1/2}$ \Comment{Feature learning}
\EndFor \vspace{0.4em}
\State \label{line:finetuning} $\bs{W}^{\mathrm{final}}_{t} \! = \! \bs{W}_t \bs{\Omega}_*$  where $ \bs{\Omega}_* =  {\displaystyle \argmin_{\bs{\Omega}  \in \R^{\rs \times \rs}} } \sum_{j = 1}^{N_{\mathrm{Ft}}} \mathcal{L}\big(\W_t \bs{\Omega}; (\bs{x}_{t+j}, y_{t+j}) \big)$ \Comment{Fine-tuning}
\end{algorithmic}  
\end{algorithm}
\vspace{-2mm}

In the feature learning step, we update the first-layer weights $\W_t$ to recover the subspace spanned by the teacher directions. To this end, we use online SGD on Stiefel manifold~\cite{arous2024high} with polar retraction.  The Riemannian gradient on the Stiefel manifold is given by:
\eq{\textstyle
\nabla_{\mathrm{St}} \mathcal{L}(\bs{W}_{t-1}) & \coloneqq \nabla  \mathcal{L}(\textstyle\bs{W}_{t-1})- \frac{1}{2}  \bs{W}_{t-1} \left( \bs{W}_{t-1}^\top  \nabla  \mathcal{L}(\bs{W}_{t-1}) + \nabla  \mathcal{L}(\bs{W}_{t-1})^\top  \bs{W}_{t-1}  \right), 
}
where the instantaneous loss is defined for  the sample $(\bs{x}_{t}, y_{t})$.
Since the goal is to ensure subspace alignment as in \eqref{def:alignment}, the overlap of individual student-teacher weights is not relevant during this phase.

After the feature learning phase, we perform a fine-tuning step to rotate $\bs{W}_t$ so that each $\bs{w}_j$ aligns with the corresponding teacher direction $\bs{\theta}_j$. This is achieved by solving an empirical risk minimization problem over $N_{\mathrm{Ft}}$ fresh samples.  The optimal fine-tuning matrix $\bs{\Omega}_*$ admits a closed-form solution that is also numerically easy to compute. Importantly, the computational and statistical complexity of this step scales only quadratically with the student width $\rs$, which is negligible compared to the cost of feature learning. 
The derivation and complexity analysis for this phase are provided in Appendix~\ref{sec:finetuning}.

\begin{remark}
Recall that the stage-wise training procedure is not required in our continuous-time analysis in Section~\ref{sec:continuous}. This is because we employ a Stiefel gradient similar to \cite{bietti2023learning,arous2024high} -- which alone cannot fit the radial component -- to simplify the discretization analysis. 
We conjecture that a standard Euclidean discretization of \eqref{eq:gf} can also achieve the same risk scaling; see Figure~\ref{fig:intro}(b) for empirical evidence. 
\end{remark}

We define the population risk of the output of Algorithm~\ref{alg:sgd}, the alignment with a teacher direction $\bs{\theta}_j$, and the optimal risk achievable by a student neural network with width $\rs$ respectively as
\eq{\textstyle
\mathcal{R}(t) \coloneqq R(\W_t^{\mathrm{final}}), \quad
\mathcal{A}(t, \bs{\theta}_j) \coloneqq \mathrm{Alignment}(\bs{W}_t, \bs{\theta}_j), \quad
\mathcal{R}_{\mathrm{opt}} \coloneqq \frac{1}{\normL^2} \sum_{j = (\rs \wedge r) + 1}^{r} \lambda_j^2.
}
Intuitively, $\cR_{\mathrm{opt}}$ is the risk achieved by exactly fitting the top $r_s\le r$ components of the teacher model. 
Note that the alignment $\mathcal{A}(t, \bs{\theta}_j)$ depends only on the directional component of $\bs{W}_t$; thus, this quantity remains unchanged during fine-tuning. 
The following theorem characterizes the alignment and risk curve for the discrete-time Algorithm~\ref{alg:sgd}.

\begin{theorem}
\label{thm:sgdresult}
Let the parameters $\{\lambda_j\}_{j=1}^r$, $r$,$r_s$, $r_{\mathrm{eff}}$ and $\mathsf{T}_{\mathrm{eff}}$,
and the scaling regime~\eqref{eq:rs-scale}
be as in Theorem~\ref{thm:gfresult}.
Suppose the student weights are initialized uniformly on the Stiefel manifold, and that the step size $\eta$ and fine-tuning sample size  $N_{\mathrm{Ft}}$ satisfy
\eq{
\eta  \asymp \frac{1}{d} \begin{cases}
\frac{1}{r^{\alpha} \log^{C_{\alpha}} (1 + d/\rs)}, & \alpha \in [0,0.5) \\[0.4em] 
\frac{1}{\log^{C_{\alpha}} \! d}, & \alpha > 0.5    
\end{cases}  ~~ \text{and} ~~ N_{\textrm{Ft}} \asymp \rs^2 \log^5 d,
}
for some constant $C_\alpha > 0$ depending only on $\alpha$.
Then with probability $1 - o_d(1/d^2) - \Omega(1/\rs^2)$, 
\begin{enumerate}[leftmargin = *]
    \item \textbf{Runtime and sample complexity:} If 
    \eq{
    T \geq  \begin{cases}
        d r^{1+\alpha} \log^{C_{\alpha} +1}(1 + d/\rs), & \alpha \in [0,0.5) \\
        d \log^{C_{\alpha} +1} d, & \alpha > 0.5.
    \end{cases} \label{eq:samplecomplexity}
    }
    we have
    $
    \mathcal{R}(T) =  \mathcal{R}_{\mathrm{opt}} + o_d(1).
    $
    \item \textbf{Alignment and Risk curve:} For $t > 0$ satisfying $t \asymp r^\alpha/\eta$ when $\alpha  \in [0,0.5)$ and $ t \asymp 1/\eta$ when $\alpha > 0.5$, 
    \[
    \bullet~ \mathcal{A}\big(t \mathsf{T}_{\mathrm{eff}}, \bs{\theta}_j\big) 
    = \mathbbm{1}\{ \eta t \geq \tfrac{1}{\lambda_j}\} + o_d(1).
     \quad\bullet~ \mathcal{R}\big(t \mathsf{T}_{\mathrm{eff}}\big) 
    = 1 - \frac{1}{\normL^2} \sum_{j = 1}^{r_{\mathrm{eff}}} \lambda_j^2 \, \mathbbm{1}\{ \eta t \geq \tfrac{1}{\lambda_j} \} + o_d(1). 
    \]
\end{enumerate}
\end{theorem}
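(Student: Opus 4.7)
The plan is to transfer the population gradient flow characterization of Theorem~\ref{thm:gfresult} to the discrete Stiefel-SGD iteration by coupling with a decoupled matrix Riccati recursion, splitting the analysis into the feature-learning phase (lines 1--4 of Algorithm~\ref{alg:sgd}) and the fine-tuning phase (line~\ref{line:finetuning}). For the feature-learning phase, since the alignment $\mathcal{A}(t, \bs{\theta}_j) = \norm{\bs{U}_t^\top \bs{\theta}_j}_2^2$ depends only on the polar factor $\bs{U}_t$ of $\bs{W}_t$, I would track the projection $\bs{P}_t \coloneqq \bs{U}_t \bs{U}_t^\top$ (or its conjugation $\bs{\Theta}^\top \bs{P}_t \bs{\Theta}$) and derive a discrete-time Riccati-type recursion that parallels the continuous matrix Riccati ODE analyzed in Section~\ref{sec:continuous}. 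The update decomposes as $\bs{W}_t = \bs{W}_{t-1} - \eta\,\grad R(\bs{W}_{t-1}) + \eta\,\bs{\xi}_t + \eta^2\,\bs{E}_t$, where $\bs{\xi}_t$ is the single-sample martingale noise and $\bs{E}_t$ captures the polar-retraction curvature correction.

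The core step is to apply the matrix-monotone comparison framework highlighted in the contributions. Concretely, I would construct two matrix-valued reference sequences $\uT{t}, \oT{t}$ corresponding to decoupled diagonal Riccati iterations evaluated at slightly shifted times $t \pm \delta_d$, which sandwich the SGD trajectory in Loewner order: $\uT{t} \preceq \bs{\Theta}^\top \bs{P}_t \bs{\Theta} \preceq \oT{t}$ with high probability. The eigenvalues of the bounding sequences evolve independently along each teacher direction and satisfy explicit logistic-type recursions whose transition times can be read off directly from the Riccati solution, reproducing the $\eta t \asymp 1/\lambda_j$ emergence predicted by Theorem~\ref{thm:gfresult}. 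Since both the alignment and the population risk are monotone functionals of $\bs{P}_t$, the sandwich immediately yields the claimed alignment and risk curves; in particular, the runtime bound~\eqref{eq:samplecomplexity} corresponds to the last transition at $\eta t \asymp 1/\lambda_{r_{\mathrm{eff}}}$ multiplied by the $\log d$ search-phase length.

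The main obstacle is showing that the Loewner sandwich survives under the martingale noise $\bs{\xi}_t$. A naive Frobenius-norm bound on $\sum_s \eta\,\bs{\xi}_s$ would cost an extra factor of $\sqrt{d}$ or worse, incompatible with the step size $\eta \asymp 1/(d\,r^\alpha \mathrm{polylog}\,d)$ needed for the scaling law. Instead, I would control $\bigl\lVert \sum_{s \leq t} \eta\,\bs{\xi}_s \bigr\rVert_{\mathrm{op}}$ via a matrix Freedman-type inequality, exploiting that $\bs{x}_t \sim \cN(0, \bs{I}_d)$ is rotation-invariant and that the conditional variance proxy $\sum_s \eta^2\,\E_{s-1}[\bs{\xi}_s \bs{\xi}_s^\top]$ admits an operator-norm bound scaling only with $d$ (not $d r$). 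A union bound over the $O(d\,\mathrm{polylog}\,d)$ iterations, combined with the per-step $O(\eta^2)$ retraction error, then keeps the cumulative discretization error $o_d(1)$ over the entire horizon, which is crucial for preserving the monotone sandwich.

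Finally, for the fine-tuning phase, once the Stiefel iterate achieves the subspace alignment established above, the residual problem reduces to choosing a rotation $\bs{\Omega} \in \R^{\rs \times \rs}$ that fits the teacher-side coefficients. Since $\hat y(\bs{x}, \bs{W}_t \bs{\Omega})$ is a quadratic form in $\bs{x}$ whose coefficient matrix is linear in $\bs{\Omega}\bs{\Omega}^\top \in \R^{\rs \times \rs}$, this is an empirical risk minimization problem over an $O(\rs^2)$-dimensional parameter space with polynomial-in-$d$ sub-exponential features. Standard ERM concentration then yields excess risk $O(\rs^2 \log^5 d / N_{\mathrm{Ft}}) = o_d(1)$ under the stated $N_{\mathrm{Ft}} \asymp \rs^2 \log^5 d$, completing the proof.
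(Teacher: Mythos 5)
Your proposal identifies the right high-level ingredients---tracking the projected Gram matrix $\bs{\Theta}^\top \bs{P}_t \bs{\Theta}$, sandwiching it between decoupled diagonal Riccati iterates in Loewner order, controlling the martingale noise in operator norm via a matrix Freedman-type inequality, and running standard ERM concentration for the fine-tuning phase---and this is indeed the architecture of the paper's proof. However, two technical obstacles are understated or elided in a way that would derail the argument if executed as written.

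First, the Euler discretization of the matrix Riccati drift is \emph{not} order-preserving: the map $\G \mapsto \G + \lr(\bs{\Lambda}\G + \G\bs{\Lambda} - 2\G\bs{\Lambda}\G)$ does not satisfy $\G^+ \succeq \G^- \Rightarrow \cdot$. You cannot simply run ``decoupled diagonal Riccati iterations'' and expect a Loewner sandwich to propagate forward in time. The paper circumvents this by constructing the auxiliary rational iteration $\G(\G_t,\eta) = \G_t - \frac{\eta}{2}(2\G_t - \bs{I}_r)\bs{\Lambda}(2\G_t - \bs{I}_r)\bigl(\bs{I}_r + \eta\bs{\Lambda}(2\G_t-\bs{I}_r)\bigr)^{-1} + \eta\bs{\Lambda}$ from~\eqref{eq:gdnricc}, which matches the Euler step to second order \emph{and} is provably monotone (Proposition~\ref{prop:monotoneupdatetext}, proved via the operator-monotonicity of $\bs{S} \mapsto -(\bs{S}+\bs{A})^{-1}$). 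This construction is non-obvious and is the linchpin of the entire discrete sandwich; your plan assumes monotonicity of the discrete iteration for free.

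Second, the naive operator-norm bound $\bigl\|\sum_{s\le t}\eta\,\bs{\xi}_s\bigr\|_{\mathrm{op}}=o_d(1)$ is too coarse to preserve the sandwich. At initialization $\G_0 \asymp (\rs/d)\,\bs{I}_r$, so the iterate is of size $\Theta(\rs/d)$ (potentially $\Theta(1/d)$ in the light-tailed case); a noise bound that is merely $o_d(1)$ would overwhelm the early trajectory and destroy the Loewner ordering exactly when it matters. What the paper actually proves (Proposition~\ref{prop:friedmanineq} and Corollary~\ref{cor:boundsqv}) is a \emph{self-normalizing} bound: the accumulated noise $\unu{t}$ is shown to satisfy $-\c_d\,r^{-\alpha/2}\,\T_t \preceq \unu{t} \preceq \c_d\,r^{-\alpha/2}\,\T_t$, where $\T_t$ is a deterministic reference sequence growing from $\Theta(\c_d\rs/d)\bs{I}$ to $\Theta(\c_d)\bs{I}$ that tracks the drift's own growth, and the quadratic variation is controlled in the conjugated frame $\T_t^{-1/2}(\cdot)\T_t^{-1/2}$. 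This adaptive normalization is what makes the variance small \emph{relative to the current iterate scale}, and it requires that the step size, reference sequence, and stopping time $\mathcal{T}_{\mathrm{bad}}$ be co-designed; a flat operator-norm Freedman bound would be off by the factor $\rs/d$ near initialization.
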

\begin{remark}
We make the following remarks on the sample complexity. 
\begin{itemize}[leftmargin=*]
    \item The bound in \eqref{eq:samplecomplexity} implies a complexity of  $n\asymp T \! \simeq \!  d r^{1+\alpha} \, \mathrm{polylog}(1 + d / r_s)$  in the heavy-tailed case, and $T \simeq d \, \mathrm{polylog}(d)$ in the light-tailed case. Note that due to the one-pass nature of the algorithm, the runtime and sample complexity are identical (up to the negligible fine-tuning step). 
    \item In the light-tailed regime ($\alpha > 1/2$), the required sample size $n \simeq d \, \mathrm{polylog}(d)$ is information theoretically optimal up to logarithmic factors. Note that kernel methods and neural networks in the lazy regime \cite{jacot2018neural,chizat2018note} requires $n\gtrsim d^2$ samples to learn a quadratic target function; thus our sample complexity bound illustrates the benefit of feature learning. 
    \item In the heavy-tailed regime ($\alpha<1/2$), we obtain (nearly) information theoretically optimal sample complexity when $\alpha=0$ (see discussion below). For the intermediate regime $\alpha\in (0,1/2)$, we conjecture that the optimal sample complexity is $T \simeq d r$, which implies our current bound is suboptimal by a factor of $r^{\alpha}$. 
\end{itemize}
\end{remark}

\paragraph{Isotropic Setting $(\alpha=0)$.} 
In the isotropic case, where the goal is to estimate the $r$-dimensional subspace spanned by the teacher weights, the above theorem yields a sample and runtime complexity $n\asymp T \asymp d r \, \mathrm{polylog}(1 + d / r_s)$. This interpolates between the $n \simeq d \, \mathrm{polylog}(d)$ rate for phase retrieval $r = 1$~\cite{tan2019online,benarous2021online}, and $n \simeq d^2$  as $r \to d$, which matches the sample complexity in the linear-width regime \cite{maillard2024bayes,erba2025nuclear}. 
Notably, our $r$-dependence improves upon the recent work of \cite{ren2024learning}, which established a sufficient sample size of $n \gtrsim d \, \mathrm{poly}(r)$ for a similar quadratic setting.  
We expect our result to be optimal up to polylogarithmic factors due to the intrinsic $dr$-dimensional nature of the subspace recovery problem.

\paragraph{Scaling laws in discrete time.} As indicated by the alignment and risk expressions in Theorem~\ref{thm:sgdresult}, a sufficiently small learning rate $\eta$ ensures that running online SGD for $t$ steps closely tracks the population gradient flow trajectory \eqref{eq:gf} at time $\eta t$, exhibiting the same scaling behavior. The following corollary formalizes the discrete-time counterpart of Corollary~\ref{cor:asympriskcont}.

\begin{corollary}
\label{cor:asympriskdis}
We consider $\eta t  \xrightarrow{d \to \infty} t_{\mathrm{c}} > 0$.  By Theorem \ref{thm:sgdresult}, we have
\begin{itemize}[leftmargin = *,noitemsep]
    \item   Heavy-tailed case ($\alpha \in [0,0.5)$): Almost surely,  
    $$\mathcal{R}(t r \log d)
    \xrightarrow{d \to \infty}
    \big(1 - C  t_c^{\frac{1 - 2 \alpha}{\alpha}}  \big)  \vee (1 - \varphi^{1 - 2\alpha})_+.$$ 
    \item Light-tailed case ($\alpha > 0.5$): With probability  $1 - \Omega(1/\rs)$,   $\mathcal{R}(t \log d)$ has an asymptotic limit satisfying 
    $$\mathcal{R}(t \log d)
    \xrightarrow{d \to \infty}
    \Theta  \big(   t_c^{- \frac{2 \alpha - 1}{\alpha}} + \rs^{- (2 \alpha  - 1)}  \big).$$
\end{itemize}
\end{corollary}

\section{Overview of Proof Techniques}
\label{sec:sketch}

To avoid notational confusion between discrete-time and continuous-time dynamics, we adopt the following convention throughout this section. Subscripts (e.g., $\W_t$) denote discrete-time quantities, while parentheses (e.g., $\W(t)$) denote continuous-time trajectories. Specifically, $\{\W_t\}_{t \in \mathbb{N}}$ refers to the iterates of online SGD; $\{\W(t)\}_{t \geq 0}$ denotes the continuous-time gradient flow governed by~\eqref{eq:gf}.

Since our proof strategy heavily relies on the matrix (Loewner) order for symmetric matrices, we introduce the following notations. For symmetric matrices $\bs{G}_1, \bs{G}_2 \in \R^{d \times d}$, we write $\bs{G}_1 \prec \bs{G}_2$ (respectively, $\bs{G}_1 \preceq \bs{G}_2$) if $\bs{G}_2 - \bs{G}_1$ is positive definite (respectively, positive semidefinite). The reverse relations are denoted by $\bs{G}_1 \succ \bs{G}_2$ and $\bs{G}_1 \succeq \bs{G}_2$. Figure~\ref{fig:nonmonotone}(a) illustrates this ordering by comparing the level sets of the quadratic forms $\{\bs{v} : \bs{v}^\top \G_i \bs{v} = 1 \}$ for $i = 1, 2$. In particular, $\G_1 \preceq \G_2$ implies that the level sets of $\G_2$ are strictly contained within those of $\G_1$, as shown by the dashed ellipses.

\vspace{-2.5mm}

\subsection{Proof Sketch of Theorem \ref{thm:gfresult}}
\vspace{-0.5mm}

We first observe that both the population risk $R\big(\W(t)\big)$ and the alignment $\mathrm{Alignment}\big(\W(t), \bs{\theta}_j\big)$ depend on $\W(t)$ through two Gram matrices:  the weight Gram matrix $\G_W(t) \coloneqq \W(t)\W(t)^\top$, and the  alignment Gram matrix $\G_U(t) \coloneqq \Th^\top \U(t)\U(t)^\top \Th$, where $\{\U(t)\}_{t \geq 0}$ denotes the directional component of $\W(t)$, as defined in~\eqref{def:polarcoord}. The proof proceeds by analyzing the evolution of these matrices, each governed by an autonomous ODE; in particular, a matrix Riccati differential equation.   
\begin{proposition}
\label{prop:grammatrixodes}
   The Gram matrices defined above satisfy the following matrix Riccati ODEs:
  \begin{itemize}[leftmargin=*]
      \item  \textbf{Weight Gram matrix:}  $\partial_t \G_{W}(t) \! = \! \frac{0.5}{\normL \sqrt{\rs} }   \Big(\Th \bs{\Lambda} \Th^\top  \G_{W}(t)  +  \G_{W}(t) \Th \bs{\Lambda} \Th^\top - \frac{2 \normL}{\sqrt{\rs}}  \G^2_{W}(t) \Big)$.  \vspace{-1mm}
      \item  \textbf{Alignment Gram matrix:} $\partial_t \G_{U}(t) = \frac{0.5}{\normL \sqrt{\rs} } \big(  \bs{\Lambda}    \G_{U}(t)  + \G_{U}(t)  \bs{\Lambda}  - 2  \G_{U}(t)  \bs{\Lambda} \G_{U}(t) \big)$.
  \end{itemize}
\end{proposition}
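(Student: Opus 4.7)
The starting point is to write the gradient flow equation in explicit form, namely
\eq{
\partial_t \W(t) \;=\; \frac{1}{2\sqrt{\rs}\normL}\,\Th\bs{\Lambda}\Th^\top \W(t) \;-\; \frac{1}{2\rs}\,\W(t)\W(t)^\top \W(t),
}
obtained by substituting the stated expression for $\grad R(\W)$ into \eqref{eq:gf}.  The $\G_W$ case is then a direct calculation: differentiating $\G_W = \W\W^\top$ via the product rule, applying the above formula for $\partial_t \W$ to both factors, and combining the two symmetric contributions gives exactly
\eq{
\partial_t \G_W \;=\; \frac{1}{2\sqrt{\rs}\normL}\bigl(\Th\bs{\Lambda}\Th^\top \G_W + \G_W \Th\bs{\Lambda}\Th^\top\bigr) \;-\; \frac{1}{\rs}\,\G_W^{2},
}
which is the claimed identity after factoring out $\tfrac{1}{2\sqrt{\rs}\normL}$.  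No polar decomposition is needed here, so I would dispatch this case first in a single paragraph.

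The $\G_U$ case is the substantive part.  The key idea is that $\U\U^\top$ equals the orthogonal projector $\bs{P}(t) \coloneqq \W(\W^\top\W)^{-1}\W^\top = \W \bs{Q}^{-1}\W^\top$ onto the column space of $\W$ (valid whenever $\W$ has full column rank, which holds generically along the flow).  Differentiating this expression and using $\partial_t \bs{Q}^{-1} = -\bs{Q}^{-1}(\partial_t\bs{Q})\bs{Q}^{-1}$ with $\partial_t\bs{Q}=(\partial_t\W)^\top\W + \W^\top(\partial_t\W)$, the terms reorganize cleanly into
\eq{
\partial_t \bs{P} \;=\; (\bs{I}-\bs{P})\,(\partial_t \W)\,\bs{Q}^{-1}\W^\top \;+\; \W\bs{Q}^{-1}(\partial_t \W)^\top(\bs{I}-\bs{P}).
}
The crucial simplification is the identity $(\bs{I}-\bs{P})\W = 0$: applied to the cubic term $\tfrac{1}{2\rs}\W\bs{Q}$ appearing in $\partial_t \W$, it makes this term vanish entirely, so only the linear-in-$\Th\bs{\Lambda}\Th^\top$ contribution survives.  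Expanding the result and using $\bs{P}^2 = \bs{P}$ yields
\eq{
\partial_t \bs{P} \;=\; \frac{1}{2\sqrt{\rs}\normL}\bigl(\Th\bs{\Lambda}\Th^\top\bs{P} + \bs{P}\Th\bs{\Lambda}\Th^\top - 2\bs{P}\Th\bs{\Lambda}\Th^\top\bs{P}\bigr).
}

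The final step is to conjugate by $\Th$: using orthonormality $\Th^\top\Th = \bs{I}_r$ and the definition $\G_U = \Th^\top\bs{P}\Th$, the identity above pulls back to
\eq{
\partial_t \G_U \;=\; \frac{1}{2\sqrt{\rs}\normL}\bigl(\bs{\Lambda}\,\G_U + \G_U\,\bs{\Lambda} - 2\,\G_U\,\bs{\Lambda}\,\G_U\bigr),
}
which is the stated Riccati equation.  The main obstacle I anticipate is handling the derivative of the polar decomposition cleanly --- the ``naive'' approach of differentiating $\U = \W\bs{Q}^{-1/2}$ directly introduces $\partial_t \bs{Q}^{-1/2}$, which has no closed form when $\bs{Q}$ and $\partial_t\bs{Q}$ do not commute.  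Working throughout with the projector $\bs{P} = \W\bs{Q}^{-1}\W^\top$ and exploiting the annihilation $(\bs{I}-\bs{P})\W = 0$ sidesteps this issue; this is the single insight that makes the computation tractable, and I would flag it explicitly in the write-up.
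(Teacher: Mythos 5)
Your proof is correct. The paper does not supply an explicit proof of this proposition (treating it as a direct computation), and your derivation is exactly the natural one. The $\G_W$ case is indeed immediate from the product rule, and your key idea for $\G_U$ --- working directly with the projector $\bs{P} = \W\bs{Q}^{-1}\W^\top = \bs{U}\bs{U}^\top$ and exploiting the annihilation $(\bs{I}-\bs{P})\W = 0$ to kill the cubic term, rather than grappling with $\partial_t\bs{Q}^{-1/2}$ --- is the clean way to handle it; the final conjugation by $\Th$ with $\Th^\top\Th = \bs{I}_r$ then gives the stated Riccati form.
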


Both equations in Proposition~\ref{prop:grammatrixodes} take the form of matrix Riccati ODEs~\cite{Bittanti1991TheRE}, whose structural properties play a central role in the proof. To illustrate the core idea, we focus on the alignment dynamics.   For simplicity, we write $\G(t) \coloneqq\G_{U}(t)$  and consider 
\eq{
\partial_t \G(t) =  \frac{0.5}{\normL \sqrt{\rs} }  \left( \bs{\Lambda} \G(t) +    \G(t) \bs{\Lambda}  - 2 \G(t)  \bs{\Lambda} \G(t) \right). \label{eq:contriccati}
}
Note that $\mathrm{Alignment}\big(\W(t), \bs{\theta}_j\big)$ corresponds to the $j$\textsuperscript{th} diagonal entry of $\G(t)$. To characterize its trajectory, we leverage the monotonicity of the matrix Riccati flow with respect to its initialization, i.e., if $\bs{G}_0^+ \! \succeq \! \bs{G}_0^-$,  the corresponding solutions satisfy $\G(t, \bs{G}_0^+) \succeq \G(t, \bs{G}_0^-)$ for all $t \geq 0$, where $\G(t, \bs{G}_0)$ denotes the solution to~\eqref{eq:contriccati} with initial condition $\bs{G}_0$. Our proof strategy builds on this monotonicity and proceeds as follows:
\begin{enumerate}[leftmargin=*,]
\item \textbf{Diagonalization \& decoupling.} If $\G_0$ is diagonal, the solution $\{ \G(t) \}_{t \geq 0}$ remains diagonal under~\eqref{eq:contriccati}, reducing the dynamics to independent scalar ODEs that govern each diagonal entry. Moreover, each scalar ODE admits a closed-form solution, allowing us to track the evolution of individual alignment terms. 
\item \textbf{Asymptotic characterization.} For general $\bs{G}_0$, we construct diagonal matrices $\bs{G}_0^+ \succeq \bs{G}_0 \succeq \bs{G}_0^-$. By monotonicity, the corresponding trajectories upper and lower bound $\{ \G(t) \}_{t \geq 0}$. These bounding systems are diagonal and decoupled, and as $d \to \infty$, their trajectories converge to the same limit.  
\end{enumerate}

We apply this strategy in Appendix \ref{sec:thmgfresult} to derive the exact asymptotics stated in Theorem~\ref{thm:gfresult}.

\begin{remark}
We remark a conceptual point about the monotonicity of Riccati flow: {while the Riccati flow is monotone with respect to its initialization, this does not imply that its solution is monotone in time.} That is, the trajectory $\G(t)$ may not evolve monotonically in matrix order, even though a larger initialization yields a trajectory that remains above that of a smaller one for all $t \geq 0$. This distinction is illustrated in Figure~\ref{fig:nonmonotone}.
\end{remark}

\begin{figure}[t]
% \vspace{-2mm}
    \centering
    \begin{subfigure}[t]{0.48\textwidth}
        \centering
        \includegraphics[width=0.9\textwidth]{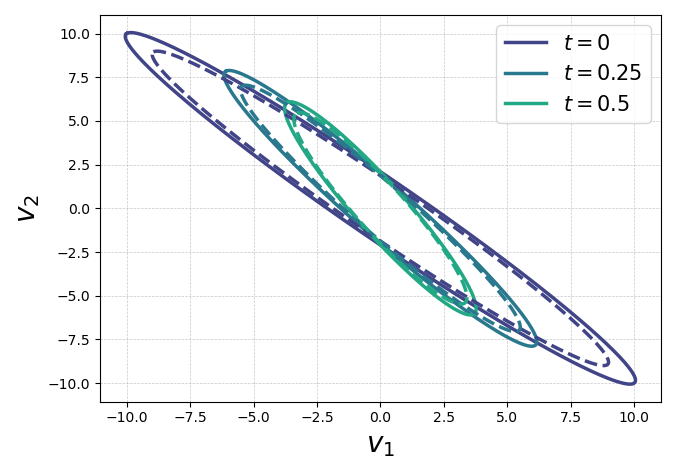}
        \vspace{-2mm}
        \caption{Trajectory in matrix order.}
    \end{subfigure} 
    \begin{subfigure}[t]{0.48\textwidth}
        \centering
        \includegraphics[width=0.9\textwidth]{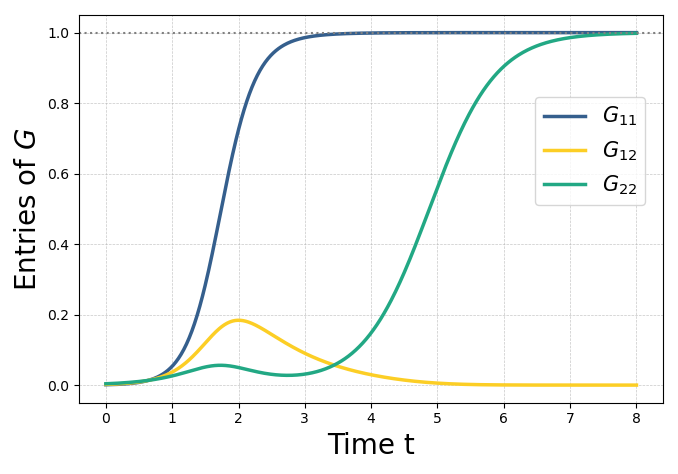}
        \vspace{-2mm}
        \caption{Trajectory of entries of the Gram matrix.}
    \end{subfigure} 
\caption{\small Solutions of the matrix Riccati ODE in \eqref{eq:contriccati} with $\lambda_1 = 2$, $\lambda_2 = 1$, $\rs = 2$.
\emph{(a)} To visualize the dynamics under matrix order, we plot the level sets of $\G(t)$ at times $t \in \{0, 0.25, 0.5\}$ for two initializations: $\G(0)$ (solid) and a scaled version $1.25\,\G(0)$ (dashed). The dashed ellipses remain enclosed within the solid ones at all times, illustrating monotonicity of the Riccati flow \textit{with respect to initialization}. However, note that $\G(t)$ is not monotone in Loewner order over time, as seen from the lack of nesting among the solid ellipses. 
\emph{(b)} Entry-wise evolution of $\G(t)$ under a random initialization with $d = 1024$. The diagonal entry $\G_{22}(t)$ exhibits non-monotonic behavior, illustrating that the solution trajectory $\G(t)$ need not be monotone in time; the off-diagonal entry $\G_{12}(t)$ is also shown for reference.}
\label{fig:nonmonotone}
% \vspace{-1em}
\end{figure}
 
\subsection{Proof Sketch of Theorem \ref{thm:sgdresult}}
\subsubsection*{Extending Monotonicity Arguments to Discrete Dynamics}

We begin by observing that online SGD on the Stiefel manifold approximates the directional component of the continuous-time gradient flow, with stochastic gradients arising from online sampling. This connection becomes apparent when comparing the discrete and continuous dynamics:
\begin{alignat}{2}
\textbf{SGD on Stiefel:}  ~
& \bs{\widetilde{W}}_{t} \! = \! \bs{W}_{t-1} \! - \! \eta \nabla_{\mathrm{St}} \mathcal{L}(\bs{W}_{t-1})  ~~ \Rightarrow   ~~
&& \textbf{GF on Stiefel:}  ~
  \partial_t \widehat{\W}(t) \! = \! -\nabla_{\mathrm{St}} R\big( \widehat{\W}(t) \big).   \label{eq:continousdirectional} \\
& \bs{W}_{t} \! = \!  \bs{\widetilde{W}}_{t} \left( \bs{\widetilde{W}}_{t}^\top \bs{\widetilde{W}}_{t} \right)^{-1/2}
\ && 
\end{alignat}
The proposition below formalizes the idea that online SGD approximates the directional dynamics of the continuous gradient flow at the population level. For the statement, recall that $\bs{U}(t)$ denotes the directional component of the gradient flow solution $\bs{W}(t)$ from \eqref{eq:gf}, as defined in \eqref{def:polarcoord}. 
\begin{proposition}
\label{prop:sgdrotation}
Let $\widehat{\bs{W}}(t)$ be the solution to the continuous-time gradient flow on the Stiefel manifold defined in \eqref{eq:continousdirectional}, initialized with $\bs{U}(0)$. Then for all $t \geq 0$, the column spaces of $\widehat{\bs{W}}(t)$ and $\bs{U}(t)$ coincide.
\end{proposition}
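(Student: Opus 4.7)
The plan is to show that both projections $P(t) := \bs{U}(t)\bs{U}(t)^\top = \bs{W}(t)(\bs{W}(t)^\top\bs{W}(t))^{-1}\bs{W}(t)^\top$ and $\widehat{P}(t) := \widehat{\bs{W}}(t)\widehat{\bs{W}}(t)^\top$ satisfy the same autonomous matrix ODE on the set of rank-$\rs$ orthogonal projections in $\R^{d\times d}$, with common initial data $P(0) = \bs{U}(0)\bs{U}(0)^\top = \widehat{P}(0)$. Since the right-hand side will be polynomial in the projection (hence locally Lipschitz on any bounded set), uniqueness of ODE solutions forces $P(t) = \widehat{P}(t)$ for all $t \geq 0$, which is exactly the claim that $\widehat{\bs{W}}(t)$ and $\bs{U}(t)$ share a column space.

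Both derivations rest on writing the Euclidean gradient in the structured form $\grad R(\bs{W}) = \bs{A}(\bs{W})\bs{W}$ with the symmetric matrix $\bs{A}(\bs{W}) = \tfrac{1}{2\rs}\bs{W}\bs{W}^\top - \tfrac{1}{2\sqrt{\rs}\normL}\bs{\Theta}\bs{\Lambda}\bs{\Theta}^\top$, and on one algebraic cancellation of the radial piece of $\bs{A}$. For $\widehat{P}$, the Stiefel gradient formula simplifies (since $\bs{A}$ is symmetric) to $\grad_{\mathrm{St}} R(\widehat{\bs{W}}) = (\bs{I} - \widehat{P})\bs{A}(\widehat{\bs{W}})\widehat{\bs{W}}$, and because $(\bs{I}-\widehat{P})\widehat{P} = 0$ the $\tfrac{1}{2\rs}\widehat{P}$ component is annihilated, giving $\partial_t \widehat{\bs{W}} = \tfrac{1}{2\sqrt{\rs}\normL}(\bs{I}-\widehat{P})\bs{\Theta}\bs{\Lambda}\bs{\Theta}^\top\widehat{\bs{W}}$. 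Expanding $\partial_t \widehat{P} = (\partial_t \widehat{\bs{W}})\widehat{\bs{W}}^\top + \widehat{\bs{W}}(\partial_t\widehat{\bs{W}})^\top$ then yields
\begin{equation*}
\partial_t\widehat{P} \;=\; \tfrac{1}{2\sqrt{\rs}\normL}\left[(\bs{I}-\widehat{P})\bs{\Theta}\bs{\Lambda}\bs{\Theta}^\top\widehat{P} + \widehat{P}\,\bs{\Theta}\bs{\Lambda}\bs{\Theta}^\top(\bs{I}-\widehat{P})\right].
\end{equation*}
For $P$, I differentiate $P = \bs{W}(\bs{W}^\top\bs{W})^{-1}\bs{W}^\top$ along \eqref{eq:gf} using $\partial_t(\bs{W}^\top\bs{W})^{-1} = 2(\bs{W}^\top\bs{W})^{-1}\bs{W}^\top\bs{A}\bs{W}(\bs{W}^\top\bs{W})^{-1}$; the three boundary contributions collapse into $\partial_t P = -\bs{A}P - P\bs{A} + 2P\bs{A}P = -(\bs{I}-P)\bs{A}P - P\bs{A}(\bs{I}-P)$. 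Using the polar form $\bs{W} = \bs{U}\bs{Q}^{1/2}$, one checks $\bs{W}\bs{W}^\top P = \bs{U}\bs{Q}\bs{U}^\top = \bs{W}\bs{W}^\top$, so $(\bs{I}-P)\bs{W}\bs{W}^\top P = 0$ and the $\tfrac{1}{2\rs}\bs{W}\bs{W}^\top$ piece of $\bs{A}$ again drops out, producing term-for-term the same ODE as the one displayed above.

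With matching autonomous ODEs and matching initial conditions, uniqueness on the compact set of rank-$\rs$ orthogonal projections gives $P(t) = \widehat{P}(t)$ for all $t\ge 0$, and hence column-space equality. The main obstacle is performing the two cancellations of the radial $\bs{W}\bs{W}^\top$ term cleanly; both reduce to elementary identities, namely $\widehat{P}^2 = \widehat{P}$ on the Stiefel manifold and $\bs{W}\bs{W}^\top P = \bs{W}\bs{W}^\top$ via the polar decomposition, after which identification of the two ODEs is a mechanical expansion. Conceptually, this ``radial/directional decoupling'' also justifies the staged design of Algorithm~\ref{alg:sgd}: the Grassmannian trajectory of the full gradient flow is driven only by the teacher operator $\bs{\Theta}\bs{\Lambda}\bs{\Theta}^\top$ and is insensitive to the radial factor $\bs{Q}$, so the remaining rotational/radial information can be recovered separately by the cheap fine-tuning step in line~\ref{line:finetuning}.
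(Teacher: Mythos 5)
Your proof is correct and follows the same route as the paper: you show that the projection matrices $\widehat{\bs{W}}(t)\widehat{\bs{W}}(t)^\top$ and $\bs{W}(t)(\bs{W}(t)^\top\bs{W}(t))^{-1}\bs{W}(t)^\top$ satisfy the same autonomous (locally Lipschitz) ODE with the same initial data and conclude by uniqueness. The paper's own proof states this in one line ("have the exact same dynamics"); your argument simply fills in the derivation of that common ODE, including the two cancellations of the radial $\bs{W}\bs{W}^\top$ piece.
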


This result justifies studying the online SGD on Steifel manifold via the directional dynamics of \eqref{eq:gf}.
 To this end, we introduce the discrete analog of $\G(t)$ above as $\G_t = \Th^\top \W_t \W_t^\top \Th$. Extending the analysis to discrete time is non-trivial due to the \emph{loss of monotonicity} in the Euler discretization of the Riccati dynamics~\eqref{eq:contriccati}. In particular, the update
\eq{
\underbrace{ 
\G_{t} \! = \!  \G_{t-1}  \! + \!  \tfrac{0.5 \eta}{\normL \sqrt{\rs}} \left(  \bs{\Lambda} \G_{t-1} \! +  \!\G_{t-1}\bs{\Lambda}   -  2 \G_{t-1} \bs{\Lambda}  \G_{t-1}\right)   }_{\text{non-monotone dynamics}}
+ ~\text{(2nd-order terms and noise)}   \hspace{1.2em}
\label{eq:gndiscrete}
}
no longer preserves the matrix order structure crucial to the continuous-time argument.

To overcome this, we construct an auxiliary discrete system that approximates~\eqref{eq:gndiscrete} up to second-order terms while preserving monotonicity. Specifically, we define the map
\eq{
\G(\G_t ,\eta)   \coloneqq    \bs{G}_t   -  \frac{\eta}{2} (2 \bs{G}_t - \bs{I}_r) \bs{\Lambda}  (2 \bs{G}_t - \bs{I}_r)  \left(   \bs{I}_r \! + \!  \eta \bs{\Lambda} (2 \bs{G}_t - \bs{I}_r )   \right)^{-1}   +   \eta \bs{\Lambda} \hspace{1em}\label{eq:gdnricc}
}
which matches~\eqref{eq:gndiscrete} up to second-order terms. Indeed, expanding the inverse term gives 
\eq{
\textstyle
\G(\G_t , \eta)  & = \underbrace{ \bs{G}_t  - \frac{\eta}{2}  (2 \bs{G}_t - \bs{I}_r) \bs{\Lambda}  (2 \bs{G}_t - \bs{I}_r) + \eta \bs{\Lambda} }_{  = \bs{G}_t +  \eta \left(  \bs{\Lambda} \bs{G}_t + \bs{G}_t\bs{\Lambda}  - 2 \bs{G}_t \bs{\Lambda}  \bs{G}_t\right) } + ~ \text{2nd-order terms}.
}
The key advantage of the iteration~\eqref{eq:gdnricc} is that it preserves matrix order:
\begin{proposition}
\label{prop:monotoneupdatetext}
    For $  \eta > 0$, if $\bs{G}^{+}_t \succeq \bs{G}^{-}_t \succeq 0$, we have   $\G(\G^+_t, \eta) \succeq \G(\G^-_t, \eta)$.
\end{proposition}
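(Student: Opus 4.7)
The plan is to reduce the Loewner-monotonicity claim to the standard antitonicity of the matrix inverse on the positive-definite cone by rewriting $\G(\bs{G}, \eta)$ in a form where $\bs{G}$ enters through only a single invertible matrix. Setting $\bs{M} = 2\bs{G} - \bs{I}_r$ and applying the resolvent identity $\bs{A}(\bs{I}_r + \eta \bs{A})^{-1} = \eta^{-1}\bigl(\bs{I}_r - (\bs{I}_r + \eta\bs{A})^{-1}\bigr)$ with $\bs{A} = \bs{\Lambda}\bs{M}$, the quadratic middle term in \eqref{eq:gdnricc} collapses, and using the cancellation $\bs{G} - \tfrac{1}{2}\bs{M} = \tfrac{1}{2}\bs{I}_r$ yields
\[
\G(\bs{G}, \eta) \;=\; \tfrac{1}{2}\bs{I}_r + \eta \bs{\Lambda} + \tfrac{1}{2}\bs{M}\,(\bs{I}_r + \eta \bs{\Lambda}\bs{M})^{-1}.
\]
Applying the identity a second time, after writing $\bs{M} = \bs{\Lambda}^{-1}\cdot (\bs{\Lambda}\bs{M})$ and noting the algebraic reshuffling $\bs{\Lambda}^{-1}(\bs{I}_r + \eta\bs{\Lambda}\bs{M})^{-1} = ((\bs{I}_r + \eta\bs{\Lambda}\bs{M})\bs{\Lambda})^{-1} = (\bs{\Lambda} + \eta\bs{\Lambda}\bs{M}\bs{\Lambda})^{-1}$, this collapses to
\[
\G(\bs{G}, \eta) \;=\; \tfrac{1}{2}\bs{I}_r + \eta\bs{\Lambda} + \tfrac{1}{2\eta}\bs{\Lambda}^{-1} \;-\; \tfrac{1}{2\eta}\,\bs{P}(\bs{G})^{-1}, \qquad \bs{P}(\bs{G}) \;:=\; \bs{\Lambda}(\bs{I}_r - \eta\bs{\Lambda}) + 2\eta\,\bs{\Lambda}\bs{G}\bs{\Lambda}.
\]
The crucial point is that $\bs{G}$ appears only inside a single matrix inverse.

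From here monotonicity is essentially immediate. The map $\bs{G}\mapsto \bs{P}(\bs{G})$ is affine with positive-semidefinite ``slope'': $\bs{G}^+ \succeq \bs{G}^-$ gives $\bs{P}(\bs{G}^+) - \bs{P}(\bs{G}^-) = 2\eta\,\bs{\Lambda}(\bs{G}^+ - \bs{G}^-)\bs{\Lambda} \succeq 0$. Under the step-size condition $\eta\lambda_1 < 1$ implicitly required for $\bs{I}_r + \eta\bs{\Lambda}\bs{M}$ to be invertible whenever $\bs{G}\succeq 0$, the baseline term $\bs{\Lambda}(\bs{I}_r - \eta\bs{\Lambda})$ is positive definite, and adding $2\eta\,\bs{\Lambda}\bs{G}\bs{\Lambda}\succeq 0$ preserves positive-definiteness, so $0 \prec \bs{P}(\bs{G}^-) \preceq \bs{P}(\bs{G}^+)$. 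The antitonicity of the matrix inverse on the positive-definite cone then gives $\bs{P}(\bs{G}^+)^{-1} \preceq \bs{P}(\bs{G}^-)^{-1}$, and since the remaining three terms in the reformulation do not depend on $\bs{G}$, subtracting yields
\[
\G(\bs{G}^+,\eta) - \G(\bs{G}^-,\eta) \;=\; -\tfrac{1}{2\eta}\bigl(\bs{P}(\bs{G}^+)^{-1} - \bs{P}(\bs{G}^-)^{-1}\bigr) \;\succeq\; 0.
\]

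The only nontrivial work is the algebraic reformulation; the final comparison is a textbook application of operator-monotone calculus. The main obstacle to watch for is the non-commutativity of $\bs{\Lambda}$ and $\bs{M}$, which is handled by the two careful applications of the resolvent identity above — they absorb all cross terms and isolate the $\bs{G}$-dependence in a single inverse, which is exactly the feature that makes the monotonicity of the matrix inverse directly applicable.
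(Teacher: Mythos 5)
Your proof is correct and takes essentially the same route as the paper: the paper's proof is a one-line pointer to Proposition~\ref{prop:monotonecharac} (monotonicity of $\bs{S}\mapsto -(\bs{S}+\bs{A})^{-1}$ on $\{\bs{S}\succ -\bs{A}\}$), and your reformulation $\G(\bs{G},\eta)=\tfrac{1}{2}\bs{I}_r + \eta\bs{\Lambda} + \tfrac{1}{2\eta}\bs{\Lambda}^{-1} - \tfrac{1}{2\eta}\bs{P}(\bs{G})^{-1}$ with $\bs{P}(\bs{G})=\bs{\Lambda}(\bs{I}_r-\eta\bs{\Lambda})+2\eta\bs{\Lambda}\bs{G}\bs{\Lambda}$ is precisely the algebra that makes that proposition applicable with $\bs{A}=\bs{\Lambda}(\bs{I}_r-\eta\bs{\Lambda})$ and $\bs{S}=2\eta\bs{\Lambda}\bs{G}\bs{\Lambda}$. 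You also correctly flag the implicit step-size condition $\eta\lambda_1<1$ needed for $\bs{P}(\bs{G}^-)\succ 0$ (equivalently, for well-definedness of $\G(\cdot,\eta)$ on the PSD cone), which the paper's statement of this proposition leaves tacit.
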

We use this to bound the non-monotone dynamics~\eqref{eq:gndiscrete} via monotone iterates. Roughly,  we show that for small enough step size 
$\eta$,  the following holds: 
\eq{
\G \big(\G_{t-1},   (1 + \varepsilon) \lr \big) +  \text{ Noise}   \succeq  \bs{G}_{t}   \succeq  \G \big(\G_{t-1},  (1 - \varepsilon) \lr \big) +   \text{ Noise}  \label{eq:boundingdiscrete} 
}
for some $\varepsilon = o_d(1)$,  where we denote the effective learning rate in~\eqref{eq:gndiscrete} with $\lr = \tfrac{\eta}{\sqrt{\rs} \normL}$.  We then follow the same bounding argument used in the continuous case by defining the upper and lower reference sequences, 
\eq{
\G_{t}^\pm  = \G \big(\G^{\pm}_{t-1},  (1 \pm \varepsilon)\lr \big) + \text{Noise}, ~~ \text{where} ~~ \G_0^+ \succeq \G_0 \succeq \G_0^- \succeq 0,
}
show that $\G_t^+ \succeq \G_t \succeq \G_t^-$ for all $t \in \N$.
Finally, by choosing $\G^{\pm}_0$ to be diagonal, the bounding dynamics reduce to decoupled scalar recursions, which can be analyzed explicitly. This allows us to establish concentration of the original iterates $\{ \G_{t} \}_{t \in \N}$ around the bounding sequences, leading to operator-norm convergence of the discrete-time dynamics to their continuous-time counterparts. See Appendix~\ref{sec:defs} for full argument.

\subsubsection*{Risk Decomposition for Fine Tuning}

The fine-tuning step relies on the following decomposition of the population risk:

\begin{proposition}
\label{prop:riskdecomptext}
    For any $\bs{\Omega} \in \R^{\rs \times \rs}$, the population risk defined in \eqref{eq:poprisk} can be written as:
    \eq{
R(\W_t \bs{\Omega}) =  \frac{1}{\rs}  \norm*{ \bs{\Omega} \bs{\Omega}^\top - \tfrac{\sqrt{\rs}}{\normL}  \W_t^\top \Th \bs{\Lambda} \Th^\top \W_t }_F^2 + \frac{1}{\normL^2} \Big( \normL^2 -   \norm{\bs{\Lambda}^{\frac{1}{2}}  \G_t \bs{\Lambda}^{\frac{1}{2}}}_F^2   \Big) ,
}
where  $\G_t = \Th^\top \W_t \W_t^\top \Th$ is the discrete alignment Gram matrix defined in the previous part.
\end{proposition}

We observe  that both the second term and the matrix $\bs{W}_t^\top \bs{\Theta} \bs{\Lambda} \bs{\Theta}^\top \bs{W}_t$ are independent of $\bs{\Omega}$. Hence, the fine-tuning step reduces to a least squares problem in the matrix $\bs{\Omega} \bs{\Omega}^\top$ in population, which is approximated via empirical risk minimization over a fresh batch of samples. By standard concentration arguments, a sample size of $N_{\mathrm{Ft}} \geq \rs^2 \mathrm{polylog} d$ suffices to ensure that the empirical minimizer approximates the population solution with high probability. Full details are provided in Appendix~\ref{sec:finetuning}.

\section{Conclusion}
\label{sec:concl}

In this work, we presented a comprehensive theoretical analysis of gradient-based learning in high-dimensional, extensive-width two-layer neural networks with quadratic activation.  We established precise scaling
laws that characterize both the population gradient flow and its empirical, discrete-time approximation.
These results demonstrate how anisotropic signal strengths in the target function fundamentally shapes the
convergence behavior and sample efficiency of gradient-based learning. 

\paragraph{Beyond quadratic activations.} 
An immediate direction for future research is to extend our analysis to more general activation functions. Link functions with higher information exponent is studied in a companion work \cite{ren2025emergence}, where the precise risk scaling is established by exploiting a decoupling structure that is unique to the information exponent $k>2$ setting. 
Importantly, many commonly-used activation functions (ReLU, GeLU, etc.) have information exponent $k=1$ and also contain a nonzero $\mathrm{He}_2$ component. For such nonlinearities, we conjecture that SGD dynamics exhibits a multi-phase risk curve (analogous to the incremental learning phenomenon in \cite{abbe2023sgd,bietti2023learning}), where the higher Hermite modes affects the learning dynamics after the low-order terms are learned. In Figure \ref{fig:scalinglaw-relu} we report the SGD risk curves for ReLU networks, in which we observe $(i)$ an initial loss drop driven by the $\mathrm{He}_1$ component (which finds a degenerate rank-1 subspace), followed by $(ii)$ a power-law decay phase driven by the quadratic $\mathrm{He}_2$ component where the empirical scaling exponent align closely with our theoretical predictions, and finally $(iii)$ a slope change late in training likely due to higher Hermite terms (in Figure~\ref{fig:scalinglaw-mixture} we confirm that this ``late'' phase is absent if we remove these higher-order components). Understanding such complex multi-phase learning dynamics remains an interesting challenge for future work.

\begin{figure}[!htb]
  \begin{subfigure}{0.45\textwidth}
    \centering
    \includegraphics[width=0.95\textwidth]{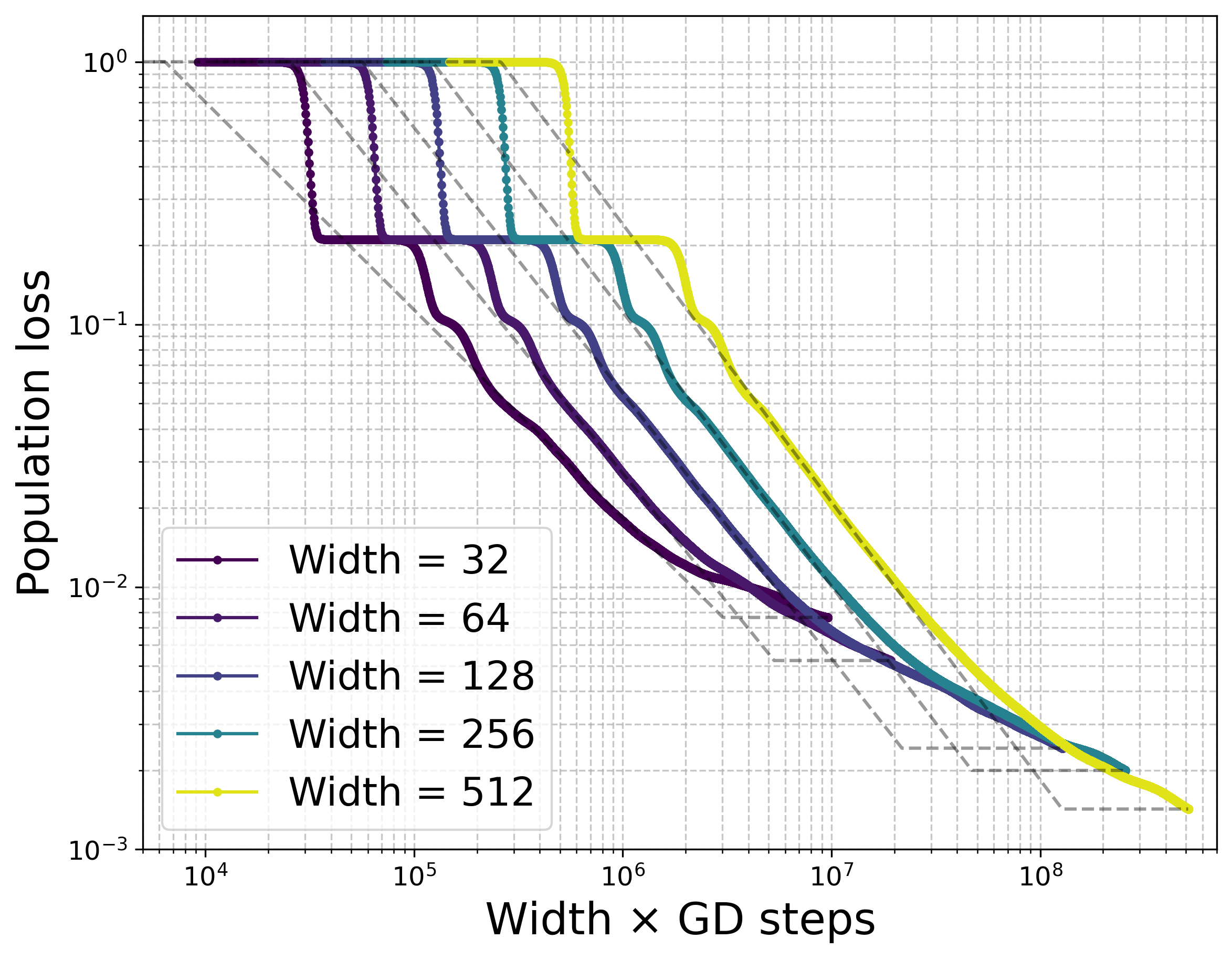}
    \caption{$\alpha = 1$. Ideal exponent: $-1$; empirical:  $-1.01$.\!\!}
    \label{fig:relua10}
  \end{subfigure}
   ~~~~~~\begin{subfigure}{0.45\textwidth}
   \centering
     \includegraphics[width=0.95\textwidth]{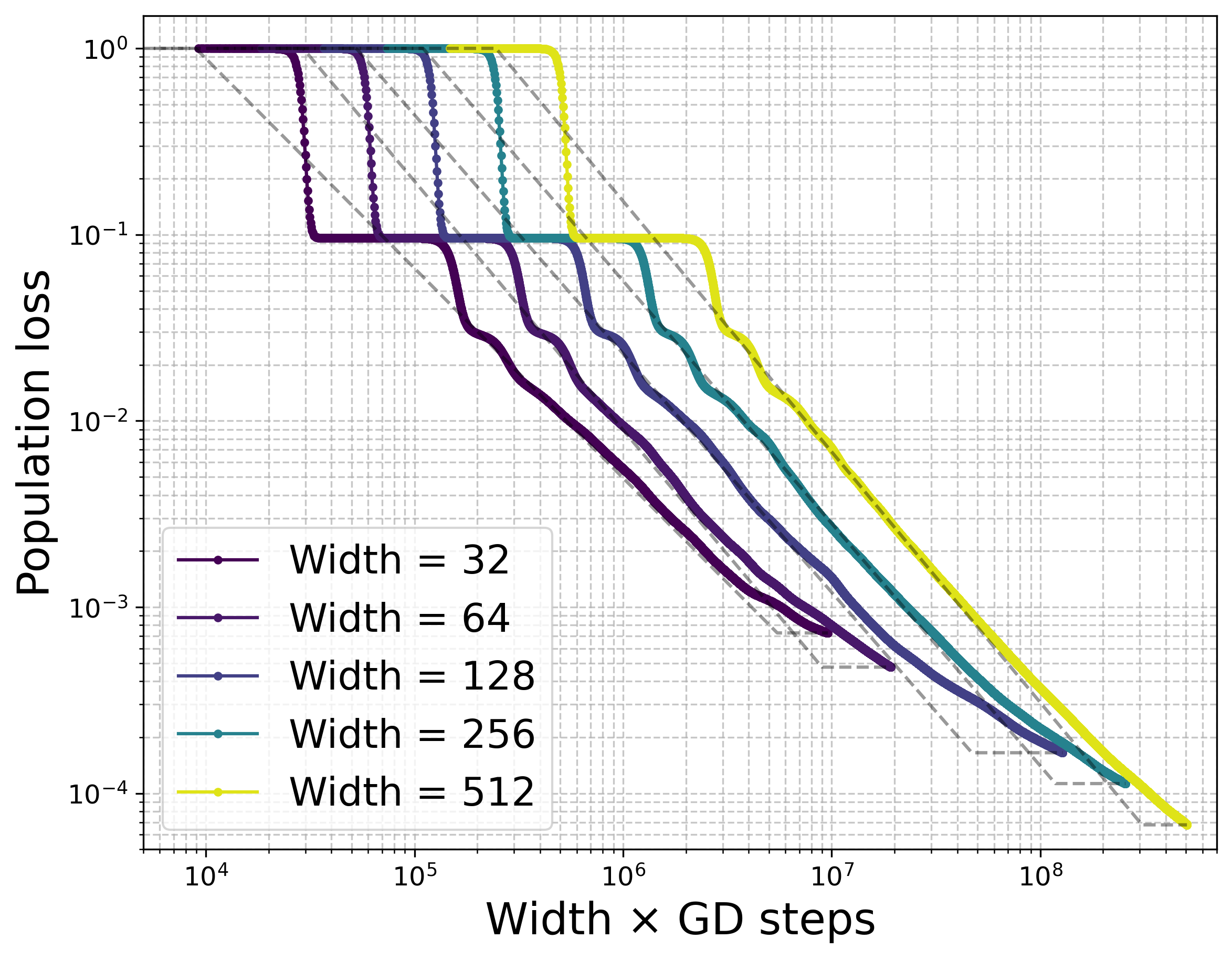}
     \caption{$\alpha = \frac{3}{2}$. Ideal exponent; $-\frac{4}{3}$, empirical:: $- 1.26$.\!\!}
    \label{fig:relua15}
    \end{subfigure} 
\caption{\small  Population loss vs.~compute for two-layer ReLU network (power-law second-layer with exponent $\alpha$) trained with population gradient descent. The student network adopts the 2-homogeneous parameterization as in \eqref{eq:student}. Observe that after the initial loss drop due to the $\mathrm{He}_1$ component, the risk curves follow a power-law scaling where the exponent (dashed lines) nearly matches our theoretical prediction for the quadratic setting $\frac{1-2\alpha}{\alpha}$.}
  \label{fig:scalinglaw-relu} 
% \vspace{-3mm}
\end{figure}

\bigskip

\subsection*{Acknowledgment}

The authors thank Florent Krzakala, Jason D.~Lee, and Lenka Zdeborov{\'a} for discussion and feedback. 
The research of GBA was supported in part by NSF grant 2134216. 
MAE was partially supported by the
NSERC Grant [2019-06167], the CIFAR AI Chairs program, and the CIFAR Catalyst grant. 
Part of this work was completed when NMV interned at the Flatiron Institute. 

\bigskip

{

% \fontsize{10}{10}\selectfont     

\bibliography{ref}  

@article{panigrahi2023task,
  title={Task-Specific Skill Localization in Fine-tuned Language Models},
  author={Panigrahi, Abhishek and Saunshi, Nikunj and Zhao, Haoyu and Arora, Sanjeev},
  journal={arXiv preprint arXiv:2302.06600},
  year={2023}
}

@article{mahankali2024beyond,
  title={Beyond ntk with vanilla gradient descent: A mean-field analysis of neural networks with polynomial width, samples, and time},
  author={Mahankali, Arvind and Zhang, Haochen and Dong, Kefan and Glasgow, Margalit and Ma, Tengyu},
  journal={Advances in Neural Information Processing Systems},
  volume={36},
  year={2023}
}

@article{bietti2022learning,
  title={Learning single-index models with shallow neural networks},
  author={Bietti, Alberto and Bruna, Joan and Sanford, Clayton and Song, Min Jae},
  journal={Advances in Neural Information Processing Systems},
  volume={35},
  pages={9768--9783},
  year={2022}
}

@article{bietti2023learning,
  title={On Learning {G}aussian Multi-index Models with Gradient Flow},
  author={Bietti, Alberto and Bruna, Joan and Pillaud-Vivien, Loucas},
  journal={arXiv preprint arXiv:2310.19793},
  year={2023}
}

@article{martin2023impact,
  title={On the Impact of Overparameterization on the Training of a Shallow Neural Network in High Dimensions},
  author={Martin, Simon and Bach, Francis and Biroli, Giulio},
  journal={arXiv preprint arXiv:2311.03794},
  year={2023}
}

@article{ghorbani2019limitations,
  title={Limitations of lazy training of two-layers neural network},
  author={Ghorbani, Behrooz and Mei, Song and Misiakiewicz, Theodor and Montanari, Andrea},
  journal={Advances in Neural Information Processing Systems},
  volume={32},
  year={2019}
}

@article{gamarnik2019stationary,
  title={Stationary points of shallow neural networks with quadratic activation function},
  author={Gamarnik, David and K{\i}z{\i}lda{\u{g}}, Eren C and Zadik, Ilias},
  journal={arXiv preprint arXiv:1912.01599},
  year={2019}
}

@article{sarao2020optimization,
  title={Optimization and generalization of shallow neural networks with quadratic activation functions},
  author={Mannelli, Sarao Stefano and Vanden-Eijnden, Eric and Zdeborov{\'a}, Lenka},
  journal={Advances in Neural Information Processing Systems},
  volume={33},
  pages={13445--13455},
  year={2020}
}

@article{devlin2018bert,
  title={Bert: Pre-training of deep bidirectional transformers for language understanding},
  author={Devlin, Jacob and Chang, Ming-Wei and Lee, Kenton and Toutanova, Kristina},
  journal={arXiv preprint arXiv:1810.04805},
  year={2018}
}

@article{dai2021knowledge,
  title={Knowledge neurons in pretrained transformers},
  author={Dai, Damai and Dong, Li and Hao, Yaru and Sui, Zhifang and Chang, Baobao and Wei, Furu},
  journal={arXiv preprint arXiv:2104.08696},
  year={2021}
}

@article{wang2022finding,
  title={Finding skill neurons in pre-trained transformer-based language models},
  author={Wang, Xiaozhi and Wen, Kaiyue and Zhang, Zhengyan and Hou, Lei and Liu, Zhiyuan and Li, Juanzi},
  journal={arXiv preprint arXiv:2211.07349},
  year={2022}
}

@article{hastie1987generalized,
  title={Generalized additive models: some applications},
  author={Hastie, Trevor and Tibshirani, Robert},
  journal={Journal of the American Statistical Association},
  volume={82},
  number={398},
  pages={371--386},
  year={1987},
  publisher={Taylor \& Francis}
}

@article{benarous2021online,
  title={Online stochastic gradient descent on non-convex losses from high-dimensional inference},
  author={Ben Arous, Gerard and Gheissari, Reza and Jagannath, Aukosh},
  journal={The Journal of Machine Learning Research},
  volume={22},
  number={1},
  pages={4788--4838},
  year={2021},
  publisher={JMLRORG}
}

@article{maillard2024bayes,
  title={Bayes-optimal learning of an extensive-width neural network from quadratically many samples},
  author={Maillard, Antoine and Troiani, Emanuele and Martin, Simon and Krzakala, Florent and Zdeborov{\'a}, Lenka},
  journal={arXiv preprint arXiv:2408.03733},
  year={2024}
}

@inproceedings{chen2020learning,
  title={Learning polynomials in few relevant dimensions},
  author={Chen, Sitan and Meka, Raghu},
  booktitle={Conference on Learning Theory},
  pages={1161--1227},
  year={2020},
  organization={PMLR}
}

@inproceedings{damian2022neural,
  title={Neural networks can learn representations with gradient descent},
  author={Damian, Alexandru and Lee, Jason and Soltanolkotabi, Mahdi},
  booktitle={Conference on Learning Theory},
  pages={5413--5452},
  year={2022},
  organization={PMLR}
}

@inproceedings{abbe2023sgd,
  title={{SGD} learning on neural networks: leap complexity and saddle-to-saddle dynamics},
  author={Abbe, Emmanuel and Adsera, Enric Boix and Misiakiewicz, Theodor},
  booktitle={The Thirty Sixth Annual Conference on Learning Theory},
  pages={2552--2623},
  year={2023},
  organization={PMLR}
}

@inproceedings{dudeja2018learning,
  title={Learning single-index models in gaussian space},
  author={Dudeja, Rishabh and Hsu, Daniel},
  booktitle={Conference On Learning Theory},
  pages={1887--1930},
  year={2018},
  organization={PMLR}
}

@article{dandi2024benefits,
  title={The Benefits of Reusing Batches for Gradient Descent in Two-Layer Networks: Breaking the Curse of Information and Leap Exponents},
  author={Dandi, Yatin and Troiani, Emanuele and Arnaboldi, Luca and Pesce, Luca and Zdeborov{\'a}, Lenka and Krzakala, Florent},
  journal={arXiv preprint arXiv:2402.03220},
  year={2024}
}

@article{ba2022high,
  title={High-dimensional asymptotics of feature learning: How one gradient step improves the representation},
  author={Ba, Jimmy and Erdogdu, Murat A and Suzuki, Taiji and Wang, Zhichao and Wu, Denny and Yang, Greg},
  journal={Advances in Neural Information Processing Systems},
  volume={35},
  pages={37932--37946},
  year={2022}
}

@inproceedings{abbe2022merged,
  title={The merged-staircase property: a necessary and nearly sufficient condition for sgd learning of sparse functions on two-layer neural networks},
  author={Abbe, Emmanuel and Adsera, Enric Boix and Misiakiewicz, Theodor},
  booktitle={Conference on Learning Theory},
  pages={4782--4887},
  year={2022},
  organization={PMLR}
}

@inproceedings{ba2023learning,
title={Learning in the Presence of Low-dimensional Structure: A Spiked Random Matrix Perspective},
author={Jimmy Ba and Murat A Erdogdu and Taiji Suzuki and Zhichao Wang and Denny Wu},
booktitle={Thirty-seventh Conference on Neural Information Processing Systems},
year={2023},
url={https://openreview.net/forum?id=HlIAoCHDWW}
}

@article{stone1985additive,
  title={Additive regression and other nonparametric models},
  author={Stone, Charles J},
  journal={The annals of Statistics},
  volume={13},
  number={2},
  pages={689--705},
  year={1985},
  publisher={Institute of Mathematical Statistics}
}

@inproceedings{du2018power,
  title={On the power of over-parametrization in neural networks with quadratic activation},
  author={Du, Simon and Lee, Jason},
  booktitle={International conference on machine learning},
  pages={1329--1338},
  year={2018},
  organization={PMLR}
}

@article{bruna2025survey,
  title={Survey on Algorithms for multi-index models},
  author={Bruna, Joan and Hsu, Daniel},
  journal={arXiv preprint arXiv:2504.05426},
  year={2025}
}

@article{ge2021understanding,
  title={Understanding deflation process in over-parametrized tensor decomposition},
  author={Ge, Rong and Ren, Yunwei and Wang, Xiang and Zhou, Mo},
  journal={Advances in Neural Information Processing Systems},
  volume={34},
  pages={1299--1311},
  year={2021}
}

@inproceedings{chizat2020implicit,
  title={Implicit bias of gradient descent for wide two-layer neural networks trained with the logistic loss},
  author={Chizat, Lenaic and Bach, Francis},
  booktitle={Conference on learning theory},
  pages={1305--1338},
  year={2020},
  organization={PMLR}
}

@inproceedings{girshick2014rich,
  title={Rich feature hierarchies for accurate object detection and semantic segmentation},
  author={Girshick, Ross and Donahue, Jeff and Darrell, Trevor and Malik, Jitendra},
  booktitle={Proceedings of the IEEE conference on computer vision and pattern recognition},
  pages={580--587},
  year={2014}
}

@article{bach2017breaking,
  title={Breaking the curse of dimensionality with convex neural networks},
  author={Bach, Francis},
  journal={The Journal of Machine Learning Research},
  volume={18},
  number={1},
  pages={629--681},
  year={2017},
  publisher={JMLR. org}
}

@article{chizat2018note,
  title={On lazy training in differentiable programming},
  author={Chizat, Lenaic and Oyallon, Edouard and Bach, Francis},
  journal={Advances in Neural Information Processing Systems},
  volume={32},
  year={2019} 
}

@inproceedings{jacot2018neural,
  title={Neural tangent kernel: Convergence and generalization in neural networks},
  author={Jacot, Arthur and Gabriel, Franck and Hongler, Cl{\'e}ment},
  booktitle={Advances in neural information processing systems},
  pages={8571--8580},
  year={2018}
}

@article{vershynin2010introduction,
  title={Introduction to the non-asymptotic analysis of random matrices},
  author={Vershynin, Roman},
  journal={arXiv preprint arXiv:1011.3027},
  year={2010}
}

@article{Barilari2014comparison,
  title={COMPARISON THEOREMS FOR CONJUGATE POINTS IN SUB-RIEMANNIAN GEOMETRY},
  author={Davide Barilari and Luca Rizzi},
  journal={ESAIM: Control, Optimisation and Calculus of Variations},
  year={2014},
  volume={22},
  pages={439-472},
  url={https://api.semanticscholar.org/CorpusID:62806633}
}

@inproceedings{li2020learning,
  title={Learning over-parametrized two-layer neural networks beyond ntk},
  author={Li, Yuanzhi and Ma, Tengyu and Zhang, Hongyang R},
  booktitle={Conference on learning theory},
  pages={2613--2682},
  year={2020},
  organization={PMLR}
}

@article{barak2022hidden,
  title={Hidden progress in deep learning: Sgd learns parities near the computational limit},
  author={Barak, Boaz and Edelman, Benjamin and Goel, Surbhi and Kakade, Sham and Malach, Eran and Zhang, Cyril},
  journal={Advances in Neural Information Processing Systems},
  volume={35},
  pages={21750--21764},
  year={2022}
}

@article{tan2019online,
  title={Online Stochastic Gradient Descent with Arbitrary Initialization Solves Non-smooth, Non-convex Phase Retrieval},
  author={Tan, Yan Shuo and Vershynin, Roman},
  journal={Journal of Machine Learning Research},
  volume={24},
  number={58},
  pages={1--47},
  year={2023}
}

@inproceedings{
mousavi2023gradient,
title={Gradient-based feature learning under structured data},
author={Mousavi-Hosseini, Alireza and Wu, Denny and Suzuki, Taiji and Erdogdu, Murat A.},
booktitle={Thirty-seventh Conference on Neural Information Processing Systems (NeurIPS 2023)},
year={2023}
}

@inproceedings{oko2024learning,
  title={Learning sum of diverse features: computational hardness and efficient gradient-based training for ridge combinations},
  author={Oko, Kazusato and Song, Yujin and Suzuki, Taiji and Wu, Denny},
  booktitle={Conference on Learning Theory},
  year={2024},
  organization={PMLR}
}

@article{damian2024computational,
  title={The Computational Complexity of Learning Gaussian Single-Index Models},
  author={Damian, Alex and Pillaud-Vivien, Loucas and Lee, Jason D and Bruna, Joan},
  journal={arXiv preprint arXiv:2403.05529},
  year={2024}
}

@article{benarous2022high,
  title={High-dimensional limit theorems for sgd: Effective dynamics and critical scaling},
  author={Ben Arous, Gerard and Gheissari, Reza and Jagannath, Aukosh},
  journal={Advances in Neural Information Processing Systems},
  volume={35},
  pages={25349--25362},
  year={2022}
}

@article{collins2023hitting,
  title={Hitting the High-Dimensional Notes: An ODE for SGD learning dynamics on GLMs and multi-index models},
  author={Collins-Woodfin, Elizabeth and Paquette, Courtney and Paquette, Elliot and Seroussi, Inbar},
  journal={arXiv preprint arXiv:2308.08977},
  year={2023}
}

@article{arnaboldi2024repetita,
  title={Repetita Iuvant: Data Repetition Allows SGD to Learn High-Dimensional Multi-Index Functions},
  author={Arnaboldi, Luca and Dandi, Yatin and Krzakala, Florent and Pesce, Luca and Stephan, Ludovic},
  journal={arXiv preprint arXiv:2405.15459},
  year={2024}
}

@article{moniri2023theory,
  title={A theory of non-linear feature learning with one gradient step in two-layer neural networks},
  author={Moniri, Behrad and Lee, Donghwan and Hassani, Hamed and Dobriban, Edgar},
  journal={arXiv preprint arXiv:2310.07891},
  year={2023}
}

@article{vural2024pruning,
  title={Pruning is Optimal for Learning Sparse Features in High-Dimensions},
  author={Vural, Nuri Mert and Erdogdu, Murat A},
  journal={arXiv preprint arXiv:2406.08658},
  year={2024}
}

@article{lee2024neural,
  title={Neural network learns low-dimensional polynomials with SGD near the information-theoretic limit},
  author={Lee, Jason D and Oko, Kazusato and Suzuki, Taiji and Wu, Denny},
  journal={arXiv preprint arXiv:2406.01581},
  year={2024}
}

@inproceedings{mousavi2022neural,
  title={Neural Networks Efficiently Learn Low-Dimensional Representations with SGD},
  author={Mousavi-Hosseini, Alireza and Park, Sejun and Girotti, Manuela and Mitliagkas, Ioannis and Erdogdu, Murat A},
  booktitle={The Eleventh International Conference on Learning Representations},
  year={2022}
}

@article{berthier2024learning,
  title={Learning time-scales in two-layers neural networks},
  author={Berthier, Rapha{\"e}l and Montanari, Andrea and Zhou, Kangjie},
  journal={Foundations of Computational Mathematics},
  pages={1--84},
  year={2024},
  publisher={Springer}
}

@article{mousavi2024learning,
  title={Learning multi-index models with neural networks via mean-field langevin dynamics},
  author={Mousavi-Hosseini, Alireza and Wu, Denny and Erdogdu, Murat A},
  journal={arXiv preprint arXiv:2408.07254},
  year={2024}
}

@article{glasgow2025propagation,
  title={Propagation of Chaos in One-hidden-layer Neural Networks beyond Logarithmic Time},
  author={Glasgow, Margalit and Wu, Denny and Bruna, Joan},
  journal={arXiv preprint arXiv:2504.13110},
  year={2025}
}

@article{erba2025nuclear,
  title={The Nuclear Route: Sharp Asymptotics of ERM in Overparameterized Quadratic Networks},
  author={Erba, Vittorio and Troiani, Emanuele and Zdeborov{\'a}, Lenka and Krzakala, Florent},
  journal={arXiv preprint arXiv:2505.17958},
  year={2025}
}

@article{ren2025emergence,
  title={Emergence and scaling laws in SGD learning of shallow neural networks},
  author={Ren, Yunwei and Nichani, Eshaan and Wu, Denny and Lee, Jason D},
  journal={arXiv preprint arXiv:2504.19983},
  year={2025}
}

@article{oja1985stochastic,
  title={On stochastic approximation of the eigenvectors and eigenvalues of the expectation of a random matrix},
  author={Oja, Erkki and Karhunen, Juha},
  journal={Journal of mathematical analysis and applications},
  volume={106},
  number={1},
  pages={69--84},
  year={1985},
  publisher={Elsevier}
}

@inproceedings{jain2016streaming,
  title={Streaming pca: Matching matrix bernstein and near-optimal finite sample guarantees for oja’s algorithm},
  author={Jain, Prateek and Jin, Chi and Kakade, Sham M and Netrapalli, Praneeth and Sidford, Aaron},
  booktitle={Conference on learning theory},
  pages={1147--1164},
  year={2016},
  organization={PMLR}
}

@inproceedings{allen2017first,
  title={First efficient convergence for streaming k-pca: a global, gap-free, and near-optimal rate},
  author={Allen-Zhu, Zeyuan and Li, Yuanzhi},
  booktitle={2017 IEEE 58th Annual Symposium on Foundations of Computer Science (FOCS)},
  pages={487--492},
  year={2017},
  organization={IEEE}
}

@article{damian2024smoothing,
  title={Smoothing the landscape boosts the signal for sgd: Optimal sample complexity for learning single index models},
  author={Damian, Alex and Nichani, Eshaan and Ge, Rong and Lee, Jason D},
  journal={Advances in Neural Information Processing Systems},
  volume={36},
  year={2024}
}

@article{arous2024high,
  title={High-dimensional optimization for multi-spiked tensor PCA},
  author={Arous, G{\'e}rard Ben and Gerbelot, C{\'e}dric and Piccolo, Vanessa},
  journal={arXiv preprint arXiv:2408.06401},
  year={2024}
}

@article{dandi2023two,
  title={How two-layer neural networks learn, one (giant) step at a time},
  author={Dandi, Yatin and Krzakala, Florent and Loureiro, Bruno and Pesce, Luca and Stephan, Ludovic},
  journal={arXiv preprint arXiv:2305.18270},
  year={2023}
}

@article{fienup1982phase,
  title={Phase retrieval algorithms: a comparison},
  author={Fienup, James R},
  journal={Applied optics},
  volume={21},
  number={15},
  pages={2758--2769},
  year={1982},
  publisher={Optical Society of America}
}

@article{chen2015solving,
  title={Solving random quadratic systems of equations is nearly as easy as solving linear systems},
  author={Chen, Yuxin and Candes, Emmanuel},
  journal={Advances in Neural Information Processing Systems},
  volume={28},
  year={2015}
}

@article{arnaboldi2023escaping,
  title={Escaping mediocrity: how two-layer networks learn hard generalized linear models with SGD},
  author={Arnaboldi, Luca and Krzakala, Florent and Loureiro, Bruno and Stephan, Ludovic},
  journal={arXiv preprint arXiv:2305.18502},
  year={2023}
}

@article{soltanolkotabi2018theoretical,
  title={Theoretical insights into the optimization landscape of over-parameterized shallow neural networks},
  author={Soltanolkotabi, Mahdi and Javanmard, Adel and Lee, Jason D},
  journal={IEEE Transactions on Information Theory},
  volume={65},
  number={2},
  pages={742--769},
  year={2018},
  publisher={IEEE}
}

@article{venturi2019spurious,
  title={Spurious valleys in one-hidden-layer neural network optimization landscapes},
  author={Venturi, Luca and Bandeira, Afonso S and Bruna, Joan},
  journal={Journal of Machine Learning Research},
  volume={20},
  number={133},
  pages={1--34},
  year={2019}
}

@article{ren2024learning,
  title={Learning orthogonal multi-index models: A fine-grained information exponent analysis},
  author={Ren, Yunwei and Lee, Jason D},
  journal={arXiv preprint arXiv:2410.09678},
  year={2024}
}

@article{michaud2024quantization,
  title={The quantization model of neural scaling},
  author={Michaud, Eric and Liu, Ziming and Girit, Uzay and Tegmark, Max},
  journal={Advances in Neural Information Processing Systems},
  volume={36},
  year={2024}
}

@article{nam2024exactly,
  title={An exactly solvable model for emergence and scaling laws},
  author={Nam, Yoonsoo and Fonseca, Nayara and Lee, Seok Hyeong and Louis, Ard},
  journal={arXiv preprint arXiv:2404.17563},
  year={2024}
}

@article{kaplan2020scaling,
  title={Scaling laws for neural language models},
  author={Kaplan, Jared and McCandlish, Sam and Henighan, Tom and Brown, Tom B and Chess, Benjamin and Child, Rewon and Gray, Scott and Radford, Alec and Wu, Jeffrey and Amodei, Dario},
  journal={arXiv preprint arXiv:2001.08361},
  year={2020}
}

@article{hestness2017deep,
  title={Deep learning scaling is predictable, empirically},
  author={Hestness, Joel and Narang, Sharan and Ardalani, Newsha and Diamos, Gregory and Jun, Heewoo and Kianinejad, Hassan and Patwary, Md and Ali, Mostofa and Yang, Yang and Zhou, Yanqi},
  journal={arXiv preprint arXiv:1712.00409},
  year={2017}
}

@article{hoffmann2022training,
  title={Training compute-optimal large language models},
  author={Hoffmann, Jordan and Borgeaud, Sebastian and Mensch, Arthur and Buchatskaya, Elena and Cai, Trevor and Rutherford, Eliza and Casas, Diego de Las and Hendricks, Lisa Anne and Welbl, Johannes and Clark, Aidan and others},
  journal={arXiv preprint arXiv:2203.15556},
  year={2022}
}

@article{maloney2022solvable,
  title={A solvable model of neural scaling laws},
  author={Maloney, Alexander and Roberts, Daniel A and Sully, James},
  journal={arXiv preprint arXiv:2210.16859},
  year={2022}
}

@article{paquette2024phases,
  title={4+ 3 Phases of Compute-Optimal Neural Scaling Laws},
  author={Paquette, Elliot and Paquette, Courtney and Xiao, Lechao and Pennington, Jeffrey},
  journal={arXiv preprint arXiv:2405.15074},
  year={2024}
}

@article{lin2024scaling,
  title={Scaling Laws in Linear Regression: Compute, Parameters, and Data},
  author={Lin, Licong and Wu, Jingfeng and Kakade, Sham M and Bartlett, Peter L and Lee, Jason D},
  journal={arXiv preprint arXiv:2406.08466},
  year={2024}
}

@article{bordelon2024dynamical,
  title={A dynamical model of neural scaling laws},
  author={Bordelon, Blake and Atanasov, Alexander and Pehlevan, Cengiz},
  journal={arXiv preprint arXiv:2402.01092},
  year={2024}
}

@article{defilippis2024dimension,
  title={Dimension-free deterministic equivalents for random feature regression},
  author={Defilippis, Leonardo and Loureiro, Bruno and Misiakiewicz, Theodor},
  journal={arXiv preprint arXiv:2405.15699},
  year={2024}
}

@article{elhage2022toy,
  title={Toy models of superposition},
  author={Elhage, Nelson and Hume, Tristan and Olsson, Catherine and Schiefer, Nicholas and Henighan, Tom and Kravec, Shauna and Hatfield-Dodds, Zac and Lasenby, Robert and Drain, Dawn and Chen, Carol and others},
  journal={arXiv preprint arXiv:2209.10652},
  year={2022}
}

@article{simsek2024learning,
  title={Learning Gaussian Multi-Index Models with Gradient Flow: Time Complexity and Directional Convergence},
  author={Simsek, Berfin and Bendjeddou, Amire and Hsu, Daniel},
  journal={arXiv preprint arXiv:2411.08798},
  year={2024}
}

@inproceedings{Durrett1993ProbabilityTA,
  title={Probability: Theory and Examples},
  author={Richard Durrett},
  year={1993},
  url={https://api.semanticscholar.org/CorpusID:16740132}
}

@inproceedings{Bittanti1991TheRE,
  title={The Riccati equation},
  author={Sergio Bittanti and Alan J. Laub and Jan C. Willems},
  year={1991},
  url={https://api.semanticscholar.org/CorpusID:118235965}
}

@article{LaurentMassart,
 ISSN = {00905364, 21688966},
 URL = {http://www.jstor.org/stable/2674095},
 author = {B. Laurent and P. Massart},
 journal = {The Annals of Statistics},
 number = {5},
 pages = {1302--1338},
 publisher = {Institute of Mathematical Statistics},
 title = {Adaptive Estimation of a Quadratic Functional by Model Selection},
 urldate = {2025-06-01},
 volume = {28},
 year = {2000}
}

@article{Tropp2010UserFriendlyTB,
  title={User-Friendly Tail Bounds for Sums of Random Matrices},
  author={Joel A. Tropp},
  journal={Foundations of Computational Mathematics},
  year={2010},
  volume={12},
  pages={389-434},
  url={https://api.semanticscholar.org/CorpusID:17735965}
}

@article{Bubeck2014ConvexOA,
  title={Convex Optimization: Algorithms and Complexity},
  author={S{\'e}bastien Bubeck},
  journal={Found. Trends Mach. Learn.},
  year={2014},
  volume={8},
  pages={231-357},
  url={https://api.semanticscholar.org/CorpusID:207179138}
}
\bibliographystyle{alpha} 

\newpage

}

\appendix

\tableofcontents

\newpage

\allowdisplaybreaks

\section{Additional Figures}
\label{app:experiment}

% REMOVE THIS FIGURE FOR ARXIV
\begin{figure}[!hbt]
% \vspace{-5mm}
    \centering
    \begin{subfigure}[t]{0.48\textwidth}
        \centering
        \includegraphics[width=0.9\textwidth]{_figures/twodimensional.png}
        \vspace{-2mm}
        \caption{Trajectory in matrix order.}
    \end{subfigure} 
    \begin{subfigure}[t]{0.48\textwidth}
        \centering
        \includegraphics[width=0.9\textwidth]{_figures/entriesofG.png}
        \vspace{-2mm}
        \caption{Trajectory of entries of the Gram matrix.}
    \end{subfigure} 
\caption{\small Solutions of the matrix Riccati ODE in \eqref{eq:contriccati} with $\lambda_1 = 2$, $\lambda_2 = 1$, $\rs = 2$.
\emph{(a)} To visualize the dynamics under matrix order, we plot the level sets of $\G(t)$ at times $t \in \{0, 0.25, 0.5\}$ for two initializations: $\G(0)$ (solid) and a scaled version $1.25\,\G(0)$ (dashed). The dashed ellipses remain enclosed within the solid ones at all times, illustrating monotonicity of the Riccati flow \textit{with respect to initialization}. However, note that $\G(t)$ is not monotone in Loewner order over time, as seen from the lack of nesting among the solid ellipses. 
\emph{(b)} Entry-wise evolution of $\G(t)$ under a random initialization with $d = 1024$. The diagonal entry $\G_{22}(t)$ exhibits non-monotonic behavior, illustrating that the solution trajectory $\G(t)$ need not be monotone in time. 
% ; the off-diagonal entry $\G_{12}(t)$ is also shown for reference.
}
\label{fig:nonmonotone}
% \vspace{-1.5mm}
\end{figure}

\begin{figure}[!htb]
  \centering
   \begin{subfigure}{0.45\textwidth}
    \centering
    \includegraphics[width=0.95\textwidth]{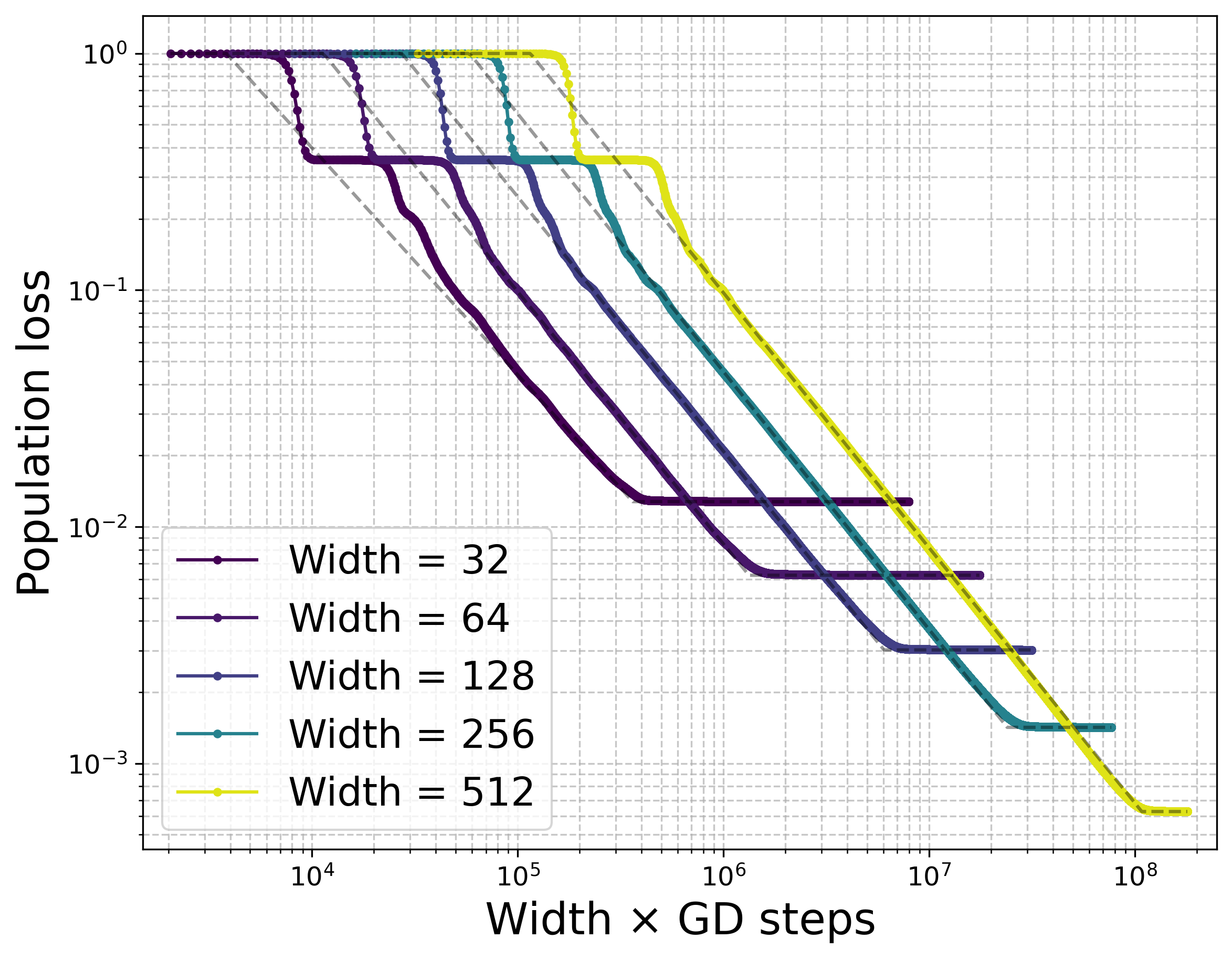}
    \caption{$\alpha = 1$. Ideal: $-1$, empirical slope:  $-1.08$.\!\!}
    \label{fig:h12a1}
  \end{subfigure}
   ~~~~~~\begin{subfigure}{0.45\textwidth}
   \centering
     \includegraphics[width=0.95\textwidth]{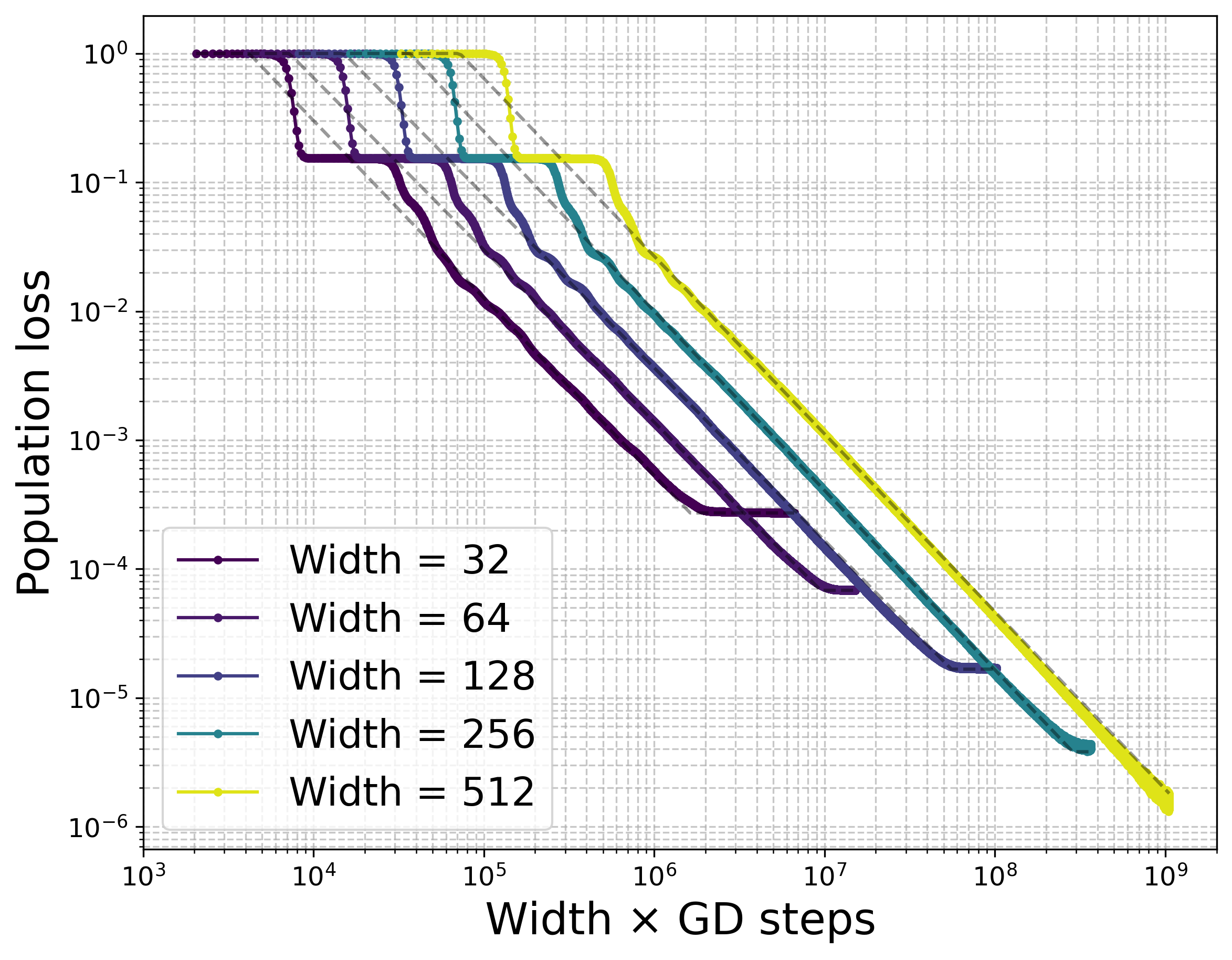}
     \caption{$\alpha = \frac{3}{2}$. Ideal: $- \frac{4}{3}$, empirical slope:  $-1.38$.\!\!}
    \label{fig:h12a15}
    \end{subfigure}

\caption{\small Population loss vs.~compute for two-layer neural network with activation function $\sigma \propto \mathrm{He}_1 + \mathrm{He}_2$, trained with population gradient descent. The student network adopts the 2-homogeneous parameterization as in \eqref{eq:student}. Observe that after the initial loss drop due to the $\mathrm{He}_1$ component, the risk curves exhibit a power-law scaling where the exponent (dashed lines) nearly matches our theoretical prediction for the quadratic setting $\frac{1-2\alpha}{\alpha}$; and unlike the ReLU setting (Figure \ref{fig:scalinglaw-relu}), the loss immediately plateaus after the power-law phase. } 
  \label{fig:scalinglaw-mixture} 
\end{figure}

\paragraph{Experiment Setting.}  In Figures \ref{fig:scalinglaw-relu}  and \ref{fig:scalinglaw-mixture}, we plot the mean squared error loss for gradient descent with a constant step size on the population loss, using activations $\sigma = \mathrm{ReLU}$ and $\sigma = \mathrm{He}_1 + \mathrm{He}_2$. The teacher model has orthogonal first-layer neurons and power-law decay in the second-layer coefficients with $\alpha \in \{1.0, 1.5\}$. Both teacher and student networks use the same activation function, which we normalize to have zero-mean an unit $L^2$ norm. The student network uses the $2$-homogeneous parameterization:
\eq{
\hat y(\bs{W}) = \frac{1}{\sqrt{\rs}} \sum_{ i = 1}^{\rs} \norm{\bs{w}_i}_2^2 \cdot \sigma(\inner{\bs{w}_i}{\bs{x}}) ~~ \text{where} ~~ \sigma \in  \{ \textstyle \frac{\mathrm{ReLU} - 1/\sqrt{2\pi}}{0.5},  \frac{\mathrm{He}_1 + \mathrm{He}_2}{3}  \}. 
}
We set dimension $d = 5000$, number of teacher neurons $r = 2400$, student widths $\rs \in \{32, 64, 128, 256, 512\}$, and learning rate $\eta =  0.5/\sqrt{r}$.   To estimate the scaling exponents, we first identify the range of compute exhibiting a linear trend by visual inspection, and then fit the exponent via least squares. The dashed lines in the plots correspond to these fitted lines, and the reported empirical exponents represent the median values across different student widths. 

\bigskip

\section{Preliminaries for Proofs}
 \label{sect:prelimproof}
\smallskip

\textbf{Proof organization.}  Section~\ref{sect:prelimproof} introduces the notations and definitions used throughout the paper. In Section~\ref{app:background}, we provide a brief review of matrix Riccati ODEs and difference equations, along with the necessary supporting statements. The main results are proved in Section~\ref{sec:mainproofs}. In Section~\ref{sec:finetuning} we discuss the fine-tuning phase for the discretized algorithm. Additional proofs related to online SGD and auxiliary lemmas are deferred to Sections~\ref{sec:onlinesgd} and~\ref{sec:auxstats}, respectively.

\smallskip
\textbf{Notation and Definitions.}
We use $[n] \coloneqq \{1, 2, \ldots, n\}$ to denote the first $n$ natural numbers. The Euclidean inner product and norm are denoted by $\inner{\cdot}{\cdot}$ and $\norm{\cdot}_2$, respectively.  For matrices, $\norm{\cdot}_2$ and $\norm{\cdot}_\text{F}$ denote the operator norm and Frobenius norm.  The positive part is denoted by $(x)_+ \coloneqq \max\{x, 0\}$. We write $f_d = o_d(1)$ if $f_d \to 0$ as $d \to \infty$, and $f_d \ll g_d$ if $f_d/g_d \to 0$. We use $O(\cdot)$ or $\Omega(\cdot)$ to suppress constants in upper and lower bounds respectively, and we use subscript to indicate parameter dependence, e.g., $O_\alpha(\cdot)$.

\smallskip
 The symmetric part of a square matrix $\bs{M} \in \R^{d \times d}$ is given by $\sym(\bs{M}) \coloneqq \frac{1}{2}(\bs{M} + \bs{M}^\top)$. For symmetric matrices $\bs{A}, \bs{B} \in \R^{d \times d}$, we write $\bs{A} \prec \bs{B}$ (or $\bs{A} \preceq \bs{B}$) if $\bs{B} - \bs{A}$ is positive definite (or positive semidefinite). 
 Moreover, if $\bs{A}$ and $\bs{B}$ are mutually diagonalizable, we write $\bs{A} \bs{B}^{-1} = \frac{\bs{A}}{\bs{B}}$.
 
\smallskip
We follow the convention that subscripts (e.g., $\W_t$) refer to discrete-time quantities, and parentheses (e.g., $\W(t)$) refer to continuous-time quantities. The overlap matrices of interest are defined as
\eq{
\underbrace{ \G_W(t) \coloneqq \W(t) \W(t)^\top }_{\text{Weight Gram matrix}}, \quad 
\underbrace{\G_U(t) \coloneqq \Th^\top \bs{U}(t) \bs{U}(t)^\top \Th }_{\text{Alignment Gram matrix}}, \quad 
\underbrace{\G_t \coloneqq \Th^\top \W_t \W_t^\top \Th. }_{\text{Discrete alignment Gram matrix}}
}

Let $\Z \in \R^{d \times \rs}$ be a Gaussian matrix with i.i.d.\ entries distributed as $\mathcal{N}(0,1/d)$. We define $\Z_{1:m}  \in \R^{m \times \rs}$ as the submatrix formed by the first $m$ rows:
\eq{
\Z = \begin{bmatrix}
\Z_{1:m} \\
\Z_{\text{rest}}
\end{bmatrix}.
}
Without loss of generality, we assume the teacher directions coincide with the standard basis vectors, i.e., $\bs{\theta}_j = \bs{e}_j$. With this, the initialization satisfies:
\eq{
\G_W(0) = \Z \Z^\top, \qquad 
\G_U(0) = \G_0 = \Z_{1:r} (\Z^\top \Z)^{-1} \Z_{1:r}^\top.  \label{def:init}
}
We start with characterizing ``good events'' for initial matrices given by the following lemma:
\begin{lemma}
\label{lem:goodevents}
\hfill \\[0.6em]
\textbf{Both cases ($\alpha \in [0,0.5) \cup (0.5,\infty)$).}  For $d \geq \Omega(1)$, the following holds:
\begin{enumerate}[label=(E.\arabic*)]
\item   \label{event:Zbound}    $ \tfrac{1}{1.05} \leq \lambda_{\mathrm{min}} (\Z^\top \Z  ) \leq \lambda_{\mathrm{max}} (  \Z^\top \Z) \leq 1.05$.
\item   \label{event:Gbound}  $  1.05 \Z_{1:r} \Z_{1:r}^\top \succeq \G_U(0) = \G_0 \succeq       \frac{1}{1.05} \Z_{1:r} \Z_{1:r}^\top$.
\end{enumerate}
\textbf{Heavy-tailed case ($\alpha \in [0,0.5)$).}   For $d \geq \Omega_{\varphi}(1)$, the following holds:
\begin{enumerate}[label=(H.\arabic*)]
\item \label{event:htZlb}  For  $m  \leq  \rs (1 -   \log^{\nicefrac{-1}{2}} d) \wedge r$ uniformly,  we consider $\lambda_{\mathrm{min}} (  \Z_{1:m} \Z_{1:m}^\top ) \geq  \tfrac{\rs}{5 d} \big( 1 -  \tfrac{m}{\rs}   \big)^2 $.
\item \label{event:htZlbiso}  For all $m  \leq  \rs (1 -   \log^{\nicefrac{-1}{2}} d) \wedge r$ uniformly,  $\lambda_{m}(    \Z_{1:r} \Z_{1:r}^\top ) \geq  \tfrac{\rs}{5 d} \big( 1 -     \tfrac{m}{\rs}    \big)^2$.
\item \label{event:htZub}    $\lambda_{\mathrm{max}} ( \Z_{1:r} \Z_{1:r}^\top ) \leq \frac{2\rs}{d} \big(1 + \frac{1}{\sqrt{\varphi}} \big)^2$.
\end{enumerate}
\textbf{Light-tailed case ($\alpha \in [0,0.5)$).}   For $d \geq \Omega(1)$, the following holds:
\begin{enumerate}[label=(L.\arabic*)]
\item   \label{event:ltZlb}  $\tfrac{1}{\rs^5 d}  \leq  \lambda_{\mathrm{min}} (  \Z_{1:\rs} \Z_{1:\rs}^\top )$
\item   \label{event:ltZub}   For   $m  \in \{ 1, 2, \cdots, 5 \rs,  \ceil{\log^{2.5} d},\ceil{\log^{6} d}, r \}$ uniformly,  $ \lambda_{\mathrm{max}} ( \Z_{1:m} \Z_{1:m}^\top ) \leq  \tfrac{5(\rs\vee m)}{d}$
\end{enumerate}
We define
\eq{
\mathcal{G}_{\text{init}} \equiv \begin{cases}
  \ref{event:Zbound}  \cap \ref{event:Gbound} \cap   \ref{event:htZlb}  \cap  \ref{event:htZlbiso}   \cap   \ref{event:htZub} , & \alpha \in [0,0.5) \\
    \ref{event:Zbound}  \cap \ref{event:Gbound} \cap   \ref{event:ltZlb} \cap    \ref{event:ltZub} , & \alpha \in > 0.5.
\end{cases}
}
We have 
\eq{
\mpr[ \mathcal{G}_{\text{init}} ] \geq \begin{cases}
 1 - 3  \rs \exp \big( \tfrac{- \rs}{2 \log^2  d} \big)  , & \alpha \in [0,0.5) \\
1 - \Omega(1/\rs^2), & \alpha  > 0.5.
\end{cases}
}
\end{lemma}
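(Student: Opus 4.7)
The lemma is a collection of Gaussian singular-value estimates for submatrices of $\bs{Z}$, so my plan is to derive every event from two classical ingredients. The first is the Davidson--Szarek bound: for an $n \times p$ matrix $\bs{A}$ with i.i.d.\ $\mathcal{N}(0,1)$ entries, $\sqrt{\max(n,p)} - \sqrt{\min(n,p)} - t \le \sigma_{\min}(\bs{A}) \le \sigma_{\max}(\bs{A}) \le \sqrt{\max(n,p)} + \sqrt{\min(n,p)} + t$ with probability at least $1 - 2e^{-t^2/2}$. The second, needed only for the square-matrix event \ref{event:ltZlb}, is Edelman's sharp small-ball estimate $\mathbb{P}[\sigma_{\min}(\bs{A}) \le \epsilon/\sqrt{n}] \lesssim \epsilon$ for $n \times n$ Gaussian matrices. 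Since each $\bs{Z}_{1:m}$ has $\mathcal{N}(0, 1/d)$ entries, after rescaling by $\sqrt{d}$ every item in the lemma reduces to a direct application of one of these two tools.

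For the universal events I apply Davidson--Szarek to $\bs{Z}$ itself with $t \asymp \sqrt{r_s}$; since $r_s/d = o_d(1)$ in either regime this gives $\sigma_i(\bs{Z})^2 \in [1/1.05,\,1.05]$ for all $i$ with probability $1 - e^{-\Omega(r_s)}$, proving \ref{event:Zbound}. Event \ref{event:Gbound} is then purely algebraic: on \ref{event:Zbound} one has $(1/1.05)\bs{I}_{r_s} \preceq (\bs{Z}^\top \bs{Z})^{-1} \preceq 1.05\,\bs{I}_{r_s}$, and sandwiching the middle factor of $\bs{G}_0 = \bs{Z}_{1:r}(\bs{Z}^\top \bs{Z})^{-1}\bs{Z}_{1:r}^\top$ pinches $\bs{G}_0$ between $(1/1.05)\bs{Z}_{1:r}\bs{Z}_{1:r}^\top$ and $1.05\,\bs{Z}_{1:r}\bs{Z}_{1:r}^\top$. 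A small observation that saves work: \ref{event:htZlbiso} is \emph{not} a new random event, since Cauchy interlacing applied to the $m$-row submatrix gives $\sigma_m(\bs{Z}_{1:r}) \ge \sigma_{\min}(\bs{Z}_{1:m})$, so \ref{event:htZlb} implies \ref{event:htZlbiso} deterministically.

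For the core heavy-tailed event \ref{event:htZlb}, I would apply Davidson--Szarek to each $\bs{Z}_{1:m}$ with $m \le r_s$. The identity $(1 - m/r_s)^2 = (1 - \sqrt{m/r_s})^2(1 + \sqrt{m/r_s})^2$ recasts the target as $(1/5d)(\sqrt{r_s} - \sqrt{m})^2(1 + \sqrt{m/r_s})^2$; under $m/r_s \le 1 - \log^{-1/2} d$ the elementary estimate $1 - \sqrt{1 - x} \ge x/2$ gives $\sqrt{r_s} - \sqrt{m} \ge \sqrt{r_s}/(2\sqrt{\log d})$, and $1 + \sqrt{m/r_s} \le 2$, so choosing $t \asymp \sqrt{r_s}/\sqrt{\log d}$ with a sufficiently small constant makes the Davidson--Szarek lower bound exceed the target with per-$m$ failure probability $\exp(-\Omega(r_s/\log d))$; a union bound over the at most $r_s$ admissible values yields the advertised $r_s \cdot \exp(-r_s/(2\log^2 d))$ bound (with slack to spare). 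Event \ref{event:htZub} is the matching upper-edge $(\sqrt{r_s} + \sqrt{r} + t)^2/d$ together with $r_s/r \to \varphi$, the factor $2$ absorbing $o_d(1)$ slack. In the light-tailed regime, \ref{event:ltZlb} is Edelman applied to $\sqrt{d}\,\bs{Z}_{1:r_s}$ with $\epsilon \asymp 1/r_s^2$, which directly produces $\lambda_{\min}(\bs{Z}_{1:r_s}\bs{Z}_{1:r_s}^\top) \ge 1/(r_s^5 d)$ with failure probability $O(1/r_s^2)$; \ref{event:ltZub} concerns only a constant-size list of $m$, so Davidson--Szarek applied individually (using the crude $(\sqrt{r_s} + \sqrt{m})^2 \le 4(r_s \vee m)$ to leave room for $t$) plus a union bound suffice. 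The only mildly delicate step, and the one I expect to require the most bookkeeping, is tracking the boundary-layer gap $\log^{-1/2} d$ in \ref{event:htZlb} carefully enough to land the exponent at exactly the advertised $r_s/(2\log^2 d)$ level; everything else is routine Gaussian concentration.
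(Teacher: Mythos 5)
Your proposal is correct and follows essentially the same route as the paper: Vershynin's Corollary 5.35 (equivalently Davidson--Szarek) for all the rectangular-submatrix events, Vershynin's Theorem 5.38 (equivalently the Edelman small-ball estimate) for the square-matrix event \ref{event:ltZlb}, Cauchy interlacing to get \ref{event:htZlbiso} for free from \ref{event:htZlb}, algebraic sandwiching of $(\Z^\top\Z)^{-1}$ for \ref{event:Gbound}, and a union bound. The only deviations are cosmetic constant choices, e.g.\ you take $t\asymp\sqrt{r_s}$ for \ref{event:Zbound} where the paper takes $t=\sqrt{d/\log d}$, and you use $(\sqrt{r_s}+\sqrt{m})^2\le 4(r_s\vee m)$ for \ref{event:ltZub} which is if anything a cleaner way to land the constant $5$ than the paper's $t=0.4\sqrt{r_s}$; both land within the stated failure-probability budget.
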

 
 \begin{proof}
We will use the following:
\begin{enumerate}[label=(S.\arabic*), leftmargin=*]
\item   \label{cond:lowerbound} By \cite[Corollary 5.35]{vershynin2010introduction},   for $m \leq \rs$ and $\sqrt{\rs} - \sqrt{m} \geq t > 0$ 
\eq{
& \mpr \left[      \lambda_{\mathrm{min}} \left(  \bs{Z}_{1:m} \bs{Z}_{1:m}^\top  \right) \! \geq \!   \tfrac{\rs}{d}  \left( 1 \! - \! \sqrt{  \tfrac{m}{\rs}  } \! - \! \tfrac{t}{\sqrt{\rs}} \right)^2 \right] \geq 1 - 2 e^{- t^2}, 
}
and   for  $m \geq \rs$ and $\sqrt{m} - \sqrt{\rs} \geq t > 0$ 
\eq{
& \mpr \left[      \lambda_{\mathrm{min}} \left( \bs{Z}_{1:m}^\top  \bs{Z}_{1:m}   \right) \! \geq \!   \tfrac{m}{d}  \left( 1 \! - \! \sqrt{  \tfrac{\rs}{m}  } \! - \! \tfrac{t}{\sqrt{m}} \right)^2 \right] \geq 1 - 2 e^{- t^2}.  
}

\item  \label{cond:upperbound}	By \cite[Corollary 5.35]{vershynin2010introduction},  for any fixed $m$  
\eq{
\mpr \left[  \tfrac{m}{d} \left( 1 + \sqrt{  \tfrac{\rs}{m}  } + \tfrac{t}{\sqrt{m}} \right)^2 \geq \lambda_{\mathrm{max}} \left(  \bs{Z}_{1:m}^\top   \bs{Z}_{1:m} \right)   \right] \geq 1 - 2 e^{- t^2}.
}
\item  \label{cond:squarematrix} By \cite[Theorem 5.38]{vershynin2010introduction}, there exists $C, c > 0$ such that
\eq{
\mpr \left[     \lambda_{\mathrm{min}} \left(  \bs{Z}_{1:\rs} \bs{Z}_{1:\rs}^\top  \right)\geq \tfrac{  \varepsilon^2}{4 d \rs}  \right] \geq 1 -  C \varepsilon -  e^{- c \rs}.
}
\end{enumerate}
For the heavy tailed case,  we  consider $d$ is large enough to guarantee  $\abs{\tfrac{\rs}{r} - \varphi} \leq \frac{\varphi}{2}.$ We have
\begin{itemize}[leftmargin=*]
\item  By using \ref{cond:lowerbound} and \ref{cond:upperbound} with $m = d$,  $t = \sqrt{ \tfrac{d}{\log d} }$, we can show that $\mpr[  \ref{event:Zbound} ] \geq 1 - e^{ \frac{ - d}{\log d}}$ for $d \geq \Omega(1)$.
 \item By   \eqref{def:init} and   \ref{event:Zbound} ,    \ref{event:Gbound}  follows.
 \item For   \ref{event:htZlb}, by using \ref{cond:lowerbound} with $t = \frac{\sqrt{\rs} - \sqrt{m}}{\sqrt{\log d}} \geq  \sqrt{  \tfrac{\rs }{2 \log^2 d} }$, we have with probability $1 - 2 \rs \exp \big( \tfrac{- \rs}{2 \log^2  d} \big)$,   for $m \leq \rs (1 -   \log^{\nicefrac{-1}{2}} d) \wedge r$ uniformly:
\eq{
 \lambda_{\mathrm{min}} \left(  \bs{Z}_{1:m} \bs{Z}_{1:m}^\top  \right) >   \tfrac{\rs}{d} \left( 1 - \tfrac{1}{\sqrt{\log d}} \right)^2    \left( 1  - \sqrt{  \tfrac{m}{\rs}  }  \right)^2 \geq  \tfrac{\rs}{5 d}  \left( 1  -   \tfrac{m}{\rs}   \right)^2 . 
}
Therefore,   $\mpr[  \ref{event:htZlb} ] \geq 1 - 2 \rs \exp \big( \tfrac{- \rs}{2 \log  d} \big)$.
\item By Cauchy's eigenvalue interlacing theorem,   $\lambda_{m}(    \Z_{1:r} \Z_{1:r}^\top ) \geq  \lambda_{\mathrm{min}} \left(  \bs{Z}_{1:m} \bs{Z}_{1:m}^\top  \right)$. Therefore, by  \ref{event:htZlb} ,  \ref{event:htZlbiso} follows. 
\item For \ref{event:htZub} , by using $m = r$ and $t = 0.4 \sqrt{\rs}$ in   \ref{cond:upperbound}, we have 	 $\mpr[  \ref{event:htZub}  ] \geq 1 - 2 e^{- 0.16 \rs}$.
\item  For   \ref{event:ltZlb}, by using \ref{cond:squarematrix} with $\varepsilon = \frac{2}{\rs^2}$,  we have  $\mpr  [   \ref{event:ltZlb} ] \geq 1 - \Omega(1/\rs^2)$.
\item  For   \ref{event:ltZub},  by using  \ref{cond:upperbound}  with $t = 0.4 \sqrt{\rs}$, we have with probability $1 -  (10 \rs + 6) e^{-0.16 \rs}$ for $m \in [5\rs] \cup \{\ceil{\log^{2.5} d} ,\ceil{\log^{6} d}, r \}$ uniformly:
\eq{
 \lambda_{\mathrm{max}} ( \Z_{1:m} \Z_{1:m}^\top ) \leq   \tfrac{\rs}{d} \left( 1.4 + \sqrt{  \tfrac{m}{\rs}  }  \right)^2 \leq \frac{5 (\rs \vee m)}{d}.
}
\end{itemize}
By union bound, we have the result.
 \end{proof}

\section{Background: Matrix Riccati Dynamical Systems}
\label{app:background}
We begin by reviewing Riccati dynamical systems in both continuous and discrete time, establishing the necessary background for the arguments that follow. For the following, we define
\eq{
 \Lae \coloneqq \begin{bmatrix}
 \bs{\Lambda}  & 0 \\
0 & 0
\end{bmatrix} ~~ \text{and} ~~   \tL \coloneqq 
 \tfrac{\sqrt{\rs}}{\normL}   \Lae.
}
For notational convenience,     we adapt the abuse of notation:
\eq{
\frac{ \Lae}{\bs{I}_d - \exp (   - t   \Lae)} = \lim_{\varepsilon \to 0}  \frac{ (    \Lae  + \varepsilon \bs{I}_d)}{\bs{I}_d - \exp (  - t (  \Lae  + \varepsilon \bs{I}_d) )}  =   \begin{bmatrix}
\frac{        \bs{\Lambda}  }{\bs{I}_r - \exp (    - t      \bs{\Lambda}  )} & 0 \\ 
0 &   \frac{1}{t} \bs{I}_{d - r}
\end{bmatrix}.
}
\subsection{Continous-time Matrix Riccati ODE}
In this paper, we study continuous-time matrix Riccati differential equations of the following form:
\begin{itemize}[leftmargin=*]
\item Weight Gram matrix: For  $T_W = \rs$
\eq{
\partial_t  \G_{W}(t) =  \frac{0.5}{T_W}   \Big( \tL \G_{W}(t)  + \G_{W}(t)  \tL -  2   \G^2_{W}(t)     \Big). \label{def:weightGramode}
}
\item   Alignment Gram matrix:  For $T_U = \normL \sqrt{\rs}$, 
\eq{
\partial_t  \G_{U}(t) =  \frac{0.5}{T_U}   \Big( \La \G_{U}(t)  + \G_{U}(t)   \La -  2   \G_{U}(t)  \La \G_{U}(t)     \Big). \label{def:alignmentGramode}
}
\end{itemize}
 
For $\alpha = 0$, we assume that the ODEs are expressed in the eigenbasis of $\G_W(0)$ or $\G_U(0)$, ensuring that the trajectories remain diagonal. The solutions of these ODEs are characterized in the following statement:

\begin{lemma} 
\label{lem:riccsolution}
\eqref{def:weightGramode}  and \eqref{def:alignmentGramode} admit the following solutions:
\eq{
\bs{G}_W(t) & = 
    \frac{  \tL}{\bs{I}_d   -   \exp  (  \nicefrac{-  t \tL}{T_W}  )}  \\
    & -  \frac{     \tL  \exp (  \nicefrac{- 0.5 t \tL}{T_W} ) }{\bs{I}_d  -  \exp (  \nicefrac{-  t \tL}{T_W} )}     \bigg (\G_{W}(0)  +   \frac{    \tL  \exp  (  \nicefrac{-   t \tL}{T_W} ) }{\bs{I}_d \! - \! \exp  (  \nicefrac{-  t \tL}{T_W} )}    \bigg )^{-1}     \frac{  \tL  \exp (  \nicefrac{- 0.5 t \tL}{T_W} ) }{\bs{I}_d \! - \! \exp (    \nicefrac{- t \tL}{T_W}  )}  \\[1em]
    \bs{G}_U(t) &  = 
    \frac{  \bs{I}_r }{\bs{I}_r - \exp  (  \nicefrac{-  t \La}{T_U}  )} \\
    & -  \frac{      \exp (  \nicefrac{- 0.5 t \La}{T_U} ) }{\bs{I}_r - \exp (  \nicefrac{-  t \La}{T_U} )}     \bigg (\G_{U}(0)  +     \frac{      \exp (  \nicefrac{-   t \La}{T_U} ) }{\bs{I}_r - \exp (  \nicefrac{-  t \La}{T_U} )}      \bigg )^{-1}   \frac{      \exp (  \nicefrac{- 0.5 t \La}{T_U} ) }{\bs{I}_r - \exp (  \nicefrac{-  t \La}{T_U} )}    
}
Moreover,    $( \G_{W}(t) )_{t \geq 0}$ and   $(\G_{U}(t) )_{t \geq 0}$ are monotone with respect to  $\G_{W}(0)  \succeq 0$ and $\G_{U}(0)  \succeq 0$ respectively.
\end{lemma}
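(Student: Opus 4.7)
The plan is to linearize both Riccati ODEs through the change of variables $\bs{H}(t) \coloneqq \bs{G}(t)^{-1}$, solve the resulting linear equations explicitly, and then recover the Woodbury form displayed in the lemma by applying the Sherman--Morrison--Woodbury identity. Differentiating $\bs{H}\bs{G}=\bs{I}$ gives $\partial_t\bs{H}=-\bs{H}(\partial_t\bs{G})\bs{H}$; substituting the respective Riccati right-hand sides and using $\bs{G}\bs{H}=\bs{H}\bs{G}=\bs{I}$ causes the quadratic terms to collapse, producing the linear Lyapunov-type ODEs
\eq{\partial_t\bs{H}_U=-\tfrac{1}{2T_U}\bigl(\bs{H}_U\La+\La\bs{H}_U\bigr)+\tfrac{1}{T_U}\La,\qquad \partial_t\bs{H}_W=-\tfrac{1}{2T_W}\bigl(\bs{H}_W\tL+\tL\bs{H}_W\bigr)+\tfrac{1}{T_W}\bs{I}_d.}
Particular solutions are $\bs{H}_U^{\mathrm{p}}=\bs{I}_r$ and $\bs{H}_W^{\mathrm{p}}=\tL^{-1}$ (in the regularized sense implicit in the paper's convention for $\Lae/(\bs{I}_d-\exp(-t\Lae))$). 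Since $\La$ and $\tL$ are diagonal, the homogeneous semigroup acts by conjugation with matrix exponentials, yielding for instance $\bs{H}_U(t)=\bs{I}_r+e^{-t\La/(2T_U)}(\bs{G}_U(0)^{-1}-\bs{I}_r)e^{-t\La/(2T_U)}$ and analogously for $\bs{H}_W(t)$.

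To pass from these $\bs{H}$-representations to the displayed formulas, I will invoke the Woodbury identity. Writing $\bs{H}_U(t)=\bs{X}+\bs{U}\bs{C}\bs{V}$ with $\bs{X}=\bs{I}_r-e^{-t\La/T_U}$, $\bs{U}=\bs{V}=e^{-t\La/(2T_U)}$, and $\bs{C}=\bs{G}_U(0)^{-1}$, direct computation gives $\bs{V}\bs{X}^{-1}\bs{U}=e^{-t\La/T_U}/(\bs{I}_r-e^{-t\La/T_U})$ and $\bs{X}^{-1}\bs{U}=\bs{V}\bs{X}^{-1}=e^{-t\La/(2T_U)}/(\bs{I}_r-e^{-t\La/T_U})$, so that $(\bs{X}+\bs{U}\bs{C}\bs{V})^{-1}$ reproduces the stated expression for $\bs{G}_U(t)$, with the bracketed matrix $\bs{G}_U(0)+e^{-t\La/T_U}/(\bs{I}_r-e^{-t\La/T_U})$ arising precisely as $\bs{C}^{-1}+\bs{V}\bs{X}^{-1}\bs{U}$. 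The weight case is identical in structure; the extra $\tL$ factors visible in the statement originate from $\bs{X}_W^{-1}=\tL/(\bs{I}_d-e^{-t\tL/T_W})$, where $\bs{X}_W=\tL^{-1}(\bs{I}_d-e^{-t\tL/T_W})$ appears after grouping $\bs{H}_W(t)$ into the homogeneous part plus $\tL^{-1}$. Because both $\tL$ and $\bs{G}_W(0)$ may be rank-deficient, I will carry out the derivation under the regularizations $\tL\to\tL+\varepsilon\bs{I}_d$ and $\bs{G}_W(0)\to\bs{G}_W(0)+\delta\bs{I}_d$, then send $\varepsilon,\delta\to 0^+$; the limit agrees with the paper's explicit convention for the singular factor.

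Monotonicity with respect to initialization is then immediate from the $\bs{H}$-representation. If $\bs{G}^{+}(0)\succeq\bs{G}^{-}(0)\succ 0$, then $\bs{H}^{+}(0)\preceq\bs{H}^{-}(0)$, and the difference evolves by congruence:
\eq{\bs{H}^{+}(t)-\bs{H}^{-}(t)=e^{-t\La/(2T_U)}\bigl(\bs{H}^{+}(0)-\bs{H}^{-}(0)\bigr)e^{-t\La/(2T_U)}\preceq 0,}
with the analogous identity for $\bs{H}_W$; inverting flips the order back to give $\bs{G}^{+}(t)\succeq\bs{G}^{-}(t)$ for all $t\ge 0$. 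Positive-semidefinite (rather than strictly positive) initializations are handled again by $\bs{G}(0)\to\bs{G}(0)+\delta\bs{I}$ and continuity of the Riccati flow in its initial datum. The main technical nuisance---rather than a deep obstacle---is the bookkeeping for these regularizations around the rank-deficient $\tL$ and $\bs{G}_W(0)$, ensuring that each limiting expression is interpreted exactly in the sense prescribed by the paper.
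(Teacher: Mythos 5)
Your proof is correct, and it takes a more constructive route than the paper's. The paper simply asserts that the stated formulas satisfy the ODEs (to be checked by differentiation), invokes local Lipschitzness of the Riccati drift for uniqueness, and then reads off monotonicity from its Proposition~\ref{prop:monotonecharac} (antitonicity of $\bs{S}\mapsto -(\bs{S}+\bs{A})^{-1}$) applied to the Woodbury form of the solution. You instead derive the formulas from scratch: the reciprocal substitution $\bs{H}=\bs{G}^{-1}$ collapses the quadratic term into a linear Lyapunov ODE whose solution is a constant particular part plus a homogeneous congruence flow, and Woodbury then transforms $\bs{H}(t)^{-1}$ into exactly the displayed expressions. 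This is more work than the paper's two-line verification, but it explains where the Woodbury form comes from. Your monotonicity argument is the same antitonicity-of-inversion fact in different clothing: you apply it twice, at $t=0$ and at $t$, with a positive-preserving congruence in between, whereas the paper applies it once to the final formula; the mathematical content is identical. One thing worth tightening if you want this to be fully self-contained: the $\bs{H}$-derivation tacitly assumes $\bs{G}(t)$ stays invertible along the flow, and for a clean proof of the lemma as stated (with possibly singular $\tL$ and merely positive semidefinite initial data) you should, after the regularized derivation produces the candidate formula, close the loop by a direct differentiation check and by citing uniqueness from local Lipschitzness, as the paper does. The regularization bookkeeping you flag for the rank-deficient $\tL$ and $\G_W(0)$ directions is genuine but routine and consistent with the paper's limiting convention.
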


\begin{proof}
 One can check by direct differentiation that the given closed-form expressions satisfy the ODEs above. The uniqueness of the solutions follow the local Lipschitzness of the drifts. Monotonicity is a consequence of Proposition \ref{prop:monotonecharac}.
\end{proof}

\subsection{Discrete-time Matrix Riccati Difference Equations}
In this section, we will study a particular discretization of Alignment Gram matrix ODE, given as
\eq{
\G_{t+1} = \G_t   - \frac{\eta}{2}  (2 \G_t - \bs{I}_r) \La  (2 \G_t - \bs{I}_r)  \big(\bs{I}_r + \eta \La  (2 \G_t - \bs{I}_r) \big)^{-1} + \eta   \bs{\Lambda}. \label{eq:discreteupdate}
}
For convenience, we will make a change of variable and define $\V_t \coloneqq   2 \La^{\frac{1}{2}} \G_t  \La^{\frac{1}{2}}  - \La$. We write  \eqref{eq:discreteupdate} in terms of $\V_t$ as follows:
\eq{
\V_{t+1} =   \V_t   - \eta \V_t^2  \big(\bs{I}_r + \eta   \V_t \big)^{-1} + \eta   \bs{\Lambda}^2. \label{eq:discreteupdat2}
}
We characterize the dynamics of $(\V_t)_{t \in \N}$ as follows:

\begin{lemma}
\label{lem:discretericcati2}
We consider
\eq{
\begin{bmatrix} 
\bs{X}_{t+1,1} \\
\bs{X}_{t+1,2}
\end{bmatrix} =  \begin{bmatrix} 
\bs{X}_{t, 1}  \\
\bs{X}_{t, 2} 
\end{bmatrix}  + \eta \bs{H}
\begin{bmatrix} 
\bs{X}_{t, 1} \\
\bs{X}_{t, 2}
\end{bmatrix} ~~ \text{where} ~~
\begin{bmatrix} 
\bs{X}_{0,1} \\
\bs{X}_{0,2}
\end{bmatrix} =
\begin{bmatrix} 
\bs{I}_r \\
\bs{V}_0
\end{bmatrix}   ~ \text{and} ~ \bs{H} \coloneqq \begin{bmatrix}
0 &  \bs{I}_r \\ 
 \bs{\Lambda}^2 &   \eta  \bs{\Lambda}^2
\end{bmatrix}.
}

The following hold for all $n \in \N$:
\begin{enumerate}[label=(R.\arabic*), leftmargin=*]
\item \label{res:posdef} We have
\eq{
\begin{bmatrix}
\bs{A}_{t,11} & \bs{\Lambda}^{-1} \bs{A}_{t,12} \\
 \bs{\Lambda}  \bs{A}_{t,12} & \bs{A}_{t,22}
\end{bmatrix} \coloneqq (\bs{I}_{2r} + \eta   \bs{H})^t.    \label{eq:matrixpowerform}
} 
where $\bs{A}_{t,11}$,  $\bs{A}_{t,12}$,  $\bs{A}_{t,22}$ are positive definite diagonal matrices.
\item \label{res:symdet} $\bs{A}_{t,11} + \eta \bs{\Lambda} \bs{A}_{t,12} = \bs{A}_{t,22}$ and $\bs{A}_{t,22} \bs{A}_{t,11} -  \bs{A}_{t,12}^2  = \bs{I}_r$.
\item  \label{res:ratiolb} For $\eta \leq 1$, we have  $\bs{A}_{t,11} \bs{A}_{t,12}^{-1} \succ \big( \bs{I}_r + \tfrac{\eta^2}{4} \bs{\Lambda}^2 \big)^{1/2} - \tfrac{\eta}{2}  \bs{\Lambda}$ and $\bs{A}_{t,22} \bs{A}_{t,12}^{-1} \succ \big( \bs{I}_r + \tfrac{\eta^2}{4} \bs{\Lambda}^2 \big)^{1/2} + \tfrac{\eta}{2}  \bs{\Lambda}$.
\item  \label{res:ratiobound} If $\norm{\eta \bs{\Lambda} }_2 < 1$,  
\eq{
\bs{A}_{t,22} \bs{A}_{t,12}^{-1} \succ \frac{(\bs{I}_r + \eta \bs{\Lambda})^t +(\bs{I}_r - \eta \bs{\Lambda})^t  }{(\bs{I}_r + \eta \bs{\Lambda})^t  - (\bs{I}_r - \eta \bs{\Lambda})^t  } \succeq \bs{A}_{t,11} \bs{A}_{t,12}^{-1}.
}
\end{enumerate}
Moreover, if $\bs{X}_{t,1}$ and $\bs{X}_{t+1,1}$ are invertible:
\begin{enumerate}[label=(R.\arabic*), leftmargin=*]
\setcounter{enumi}{6}
\item  \label{res:update}  For $\bs{V}_{t+1} \coloneqq  \bs{X}_{t+1,2} \bs{X}_{t+1,1}^{-1}$, and $\bs{V}_t \coloneqq  \bs{X}_{2,t} \bs{X}_{t,1}^{-1}$, we have
\eq{
\bs{V}_{t+1}  = \bs{V}_t - \eta  \bs{V}_t^2   \left(\bs{I}_r +   \eta \bs{V}_t \right )^{-1} +  \eta  \bs{\Lambda}^2. 
}
\end{enumerate}
\end{lemma}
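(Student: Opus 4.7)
The plan is to reduce every claim to a scalar $2\times 2$ problem by exploiting simultaneous diagonalizability. Since $\bs{\Lambda}$ is diagonal, the matrix $\bs{I}_{2r} + \eta\bs{H}$ commutes with the permutation that pairs the $j$-th coordinate of the top block with the $j$-th coordinate of the bottom block; thus $(\bs{I}_{2r} + \eta\bs{H})^t$ decomposes into $r$ independent powers of $\bs{I}_2 + \eta\bs{H}_j$ with $\bs{H}_j \coloneqq \bigl(\begin{smallmatrix} 0 & 1 \\ \lambda_j^2 & \eta\lambda_j^2 \end{smallmatrix}\bigr)$. The characteristic polynomial of $\bs{H}_j$ yields eigenvalues $\kappa_\pm = \rho_\pm^2$ for $\bs{I}_2 + \eta\bs{H}_j$, where $\rho_\pm \coloneqq \sqrt{1 + \eta^2\lambda_j^2/4} \pm \eta\lambda_j/2$ satisfy the crucial identities $\rho_+\rho_- = 1$ and $\rho_+ - \rho_- = \eta\lambda_j$. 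Computing the spectral decomposition explicitly produces the closed-form entries
\[
a_{t,11}(\lambda_j) = \frac{\rho_+^{2t-1} + \rho_-^{2t-1}}{\sqrt{4 + \eta^2\lambda_j^2}}, \quad a_{t,12}(\lambda_j) = \frac{\rho_+^{2t} - \rho_-^{2t}}{\sqrt{4 + \eta^2\lambda_j^2}}, \quad a_{t,22}(\lambda_j) = \frac{\rho_+^{2t+1} + \rho_-^{2t+1}}{\sqrt{4 + \eta^2\lambda_j^2}},
\]
from which (R.1) follows immediately: all three diagonal matrices are strictly positive for $t \geq 1$ since $\rho_+ > 1 > \rho_- > 0$.

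Given the closed forms, both identities in (R.2) reduce to elementary algebra in $\rho_\pm$: the first expands telescopically as $(\rho_+ - \rho_-)(\rho_+^{2t} - \rho_-^{2t}) = \rho_+^{2t+1} + \rho_-^{2t+1} - \rho_+^{2t-1} - \rho_-^{2t-1}$, and the second follows from $(\rho_+ + \rho_-)^2 = 4 + \eta^2\lambda_j^2$ combined with $\rho_+\rho_- = 1$. For (R.3), the lower bound $a_{t,11}/a_{t,12} > \rho_-$ is equivalent, after cross-multiplication, to $\rho_-^{2t-1} + \rho_-^{2t+1} > 0$, which is immediate; the upper bound $a_{t,22}/a_{t,12} > \rho_+$ is analogous. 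Statement (R.7) is a direct block-matrix computation: commutativity of diagonal matrices gives $\bs{X}_{t+1,1} = \bs{X}_{t,1}(\bs{I}_r + \eta\bs{V}_t)$ and $\bs{X}_{t+1,2} = \bs{X}_{t,2} + \eta\bs{\Lambda}^2\bs{X}_{t+1,1}$, leading to $\bs{V}_{t+1} = \bs{V}_t(\bs{I}_r + \eta\bs{V}_t)^{-1} + \eta\bs{\Lambda}^2$; the algebraic identity $\bs{V}_t - \bs{V}_t(\bs{I}_r + \eta\bs{V}_t)^{-1} = \eta\bs{V}_t^2(\bs{I}_r + \eta\bs{V}_t)^{-1}$ then yields the stated form.

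The main obstacle will be (R.4), which requires a precise comparison between two a priori distinct growth rates: $\rho_\pm^{2t}$ in the outer bounds and $(1 \pm \eta\lambda_j)^t$ in the middle expression. Writing $c \coloneqq \eta\lambda_j$ and $\rho_\pm = e^{\pm\theta}$ with $\sinh\theta = c/2$, the outer quantities become $\cosh((2t \mp 1)\theta)/\sinh(2t\theta)$ while the middle term simplifies to $R_t \coloneqq \coth(t \, \mathrm{arctanh}(c))$; since $\mathrm{arctanh}(c) \neq 2\,\mathrm{arcsinh}(c/2)$ in general, no direct closed-form match is available. The plan is to observe that all three quantities evolve under M\"obius-type recursions: $R_t$ obeys $R_{t+1} = (R_t + c)/(1 + cR_t)$, while $P_t \coloneqq a_{t,22}/a_{t,12}$ (and $P_t - c = a_{t,11}/a_{t,12}$ via (R.2)) satisfies $P_{t+1} = ((1+c^2)P_t + c)/(1 + cP_t)$. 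I would then establish the sandwich $P_t - c \leq R_t < P_t$ by induction: the base case $t = 1$ gives $P_1 - c = R_1 = 1/c$ and $P_1 - R_1 = c > 0$ (consistent with the $\succeq$ and $\succ$ in the statement), and the inductive step combines the derivative comparison $\phi_P'(x) = 1/(1+cx)^2 > (1-c^2)/(1+cx)^2 = \phi_R'(x)$ with monotonicity of each map to propagate the sandwich forward. Conceptually, this is the one-dimensional M\"obius-semigroup analog of the Loewner-monotonicity argument deployed for the continuous Riccati flow elsewhere in the paper.
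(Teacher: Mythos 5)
Your proposal is correct, but it takes a genuinely different route from the paper's argument. The paper proves (R.1) and (R.2) by pure induction on the block recursion $\A_{t+1,\cdot} = f(\A_{t,\cdot})$ without ever producing closed forms; you instead decouple into $r$ scalar $2\times 2$ problems, diagonalize $\bs{I}_2 + \eta\bs{H}_j$, and read off $a_{t,11} = (\rho_+^{2t-1}+\rho_-^{2t-1})/(\rho_++\rho_-)$, etc., with $\rho_+\rho_-=1$, $\rho_+-\rho_-=\eta\lambda_j$. This makes (R.1)--(R.3) one-line verifications and, as you note, exposes the boundary case $\A_{0,12}=0$ (positivity only for $t\geq 1$). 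For (R.3), your argument also bypasses the spurious $\eta\leq 1$ hypothesis, and correctly invokes (R.2) to transfer the bound to $\A_{t,22}\A_{t,12}^{-1}$. For (R.4) both proofs ultimately run an induction in $t$ comparing $P_t=a_{t,22}/a_{t,12}$ and $Q_t=a_{t,11}/a_{t,12}$ to $R_t=\coth(t\,\mathrm{arctanh}\,c)$: the paper discards the $\eta^2\bs{\Lambda}^2$ terms to produce a one-sided $\phi_R$-based surrogate, while you keep the exact M\"obius maps $\phi_P,\phi_Q,\phi_R$ and use the pointwise ordering $\phi_Q\leq\phi_R\leq\phi_P$ on $[0,\infty)$ together with monotonicity of each map. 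Both close; your phrasing via the derivative comparison $\phi_P'>\phi_R'$ is one clean way to get the pointwise ordering (integrating from the common fixed value $\phi_P(0)=\phi_R(0)=c$), although you never write down $\phi_Q$ explicitly and would need to do so (or argue via $Q_{t+1}=P_{t+1}-c$ and $\phi_P(x+c)-c=\phi_Q(x)$) to fully close the lower leg of the sandwich. One small error to flag: in your sketch of (R.7), $\bs{X}_{t,1}$ and $\bs{V}_t$ are \emph{not} diagonal in general (they depend on the arbitrary symmetric $\bs{V}_0$), so commutativity does not apply. The correct factorization is $\bs{X}_{t+1,1}=(\bs{I}_r+\eta\bs{V}_t)\bs{X}_{t,1}$ (left multiplication), which still yields $\bs{X}_{t+1,1}^{-1}=\bs{X}_{t,1}^{-1}(\bs{I}_r+\eta\bs{V}_t)^{-1}$ and the stated update once you substitute and simplify; your right-multiplied form $\bs{X}_{t,1}(\bs{I}_r+\eta\bs{V}_t)$ is wrong as written.
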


\begin{proof}[Proof of Lemma \ref{lem:discretericcati2}]
We have
\eq{
\bs{I}_{2r} + \eta   \bs{H}  = \begin{bmatrix}
\bs{I}_r & \eta \bs{I}_r \\
\eta \bs{\Lambda}^2 & \bs{I}_r + \eta^2 \bs{\Lambda}^2
\end{bmatrix}. \label{def:updmatrix}
}
Let
\eq{
(\bs{I}_{2r} + \eta   \bs{H})^t \eqqcolon \begin{bmatrix}
\bs{\tilde{A}}_{t,11} & \bs{\tilde{A}}_{t,12} \\
\bs{\tilde{A}}_{21,t} & \bs{\tilde{A}}_{t,12}
\end{bmatrix}.  \label{eq:matrixpower}
}
Since each submatrix  in \eqref{def:updmatrix} is diagonal positive definite,  the matrices in  \eqref{eq:matrixpowerform} are also diagonal positive definite.  To prove \ref{res:posdef} and the first part of \ref{res:symdet}  we use proof by induction.  We assume   $\bs{\tilde{A}}_{t,11} + \eta \bs{\tilde{A}}_{21, t} = \bs{\tilde{A}}_{t,12}$ and  $\bs{\tilde{A}}_{21, t} \bs{\tilde{A}}^{-1}_{12, t} = \bs{\Lambda}^2$.
We have
\eq{
 \bs{\tilde{A}}_{12, t+1} \! = \! \bs{\tilde{A}}_{t,12}  + \eta   \bs{\tilde{A}}_{t,12}
  \labelrel={ricc:eqq0}   \bs{\tilde{A}}_{t,12} \!  + \!  \eta  ( \bs{\tilde{A}}_{t,11} + \eta \bs{\tilde{A}}_{21, t}   )  
& \labelrel={ricc:eqq1}   \eta \bs{\tilde{A}}_{t,11} + \left( \bs{I}_r + \eta^2   \bs{\Lambda}^2 \right)     \bs{\tilde{A}}_{t,12}  \\
& \labelrel={ricc:eqq2}   \eta \bs{\tilde{A}}_{t,11} + \bs{\Lambda}^{-2} \!  \left( \bs{I}_r + \eta^2   \bs{\Lambda}^2 \right)     \bs{\tilde{A}}_{21, t}     \\
& =   \bs{\Lambda}^{-2}   \bs{\tilde{A}}_{21, t+1} .
}
where  \eqref{ricc:eqq0}  follows the first assumption, \eqref{ricc:eqq1}  and \eqref{ricc:eqq2} follow the second assumption. 
Moreover,
\eq{
\bs{\tilde{A}}_{11, t+1} + \eta \bs{\tilde{A}}_{21, t+1} & =  \bs{\tilde{A}}_{t,11} + \eta \bs{\tilde{A}}_{21, t} + \eta^2 \bs{\Lambda}^2    \bs{\tilde{A}}_{t,11} + \eta (\bs{I}_r + \eta^2 \bs{\Lambda}^2)  \bs{\tilde{A}}_{21, t}  \\
& =  (\bs{I}_r + \eta^2 \bs{\Lambda}^2)  (  \bs{\tilde{A}}_{t,11} + \eta \bs{\tilde{A}}_{21, t}  ) + \eta  \bs{\tilde{A}}_{21, t}  \\
&  \labelrel={ricc:eqq3}  (\bs{I}_r + \eta^2 \bs{\Lambda}^2)   \bs{\tilde{A}}_{t,12}  +   \eta \bs{\Lambda}^2  \bs{\tilde{A}}_{t,12}   
  =   \bs{\tilde{A}}_{22, t+1}. 
}
where \eqref{ricc:eqq3}  follows the first and second assumptions. For the second part of  \ref{res:symdet}  we again use proof by induction.  We assume 
$\bs{A}_{t,22} \bs{A}_{t,11} -  \bs{A}_{t,12}^2  = \bs{I}_r$.  We have
\eq{
  \bs{\tilde{A}}_{11, t+1}   \bs{\tilde{A}}_{22, t+1}   -  \bs{\tilde{A}}_{12, n+1}   \bs{\tilde{A}}_{21, t+1}  
  & = \Big(   \bs{\tilde{A}}_{t,11} + \eta  \bs{\tilde{A}}_{21, t}  \Big) \Big( \eta \bs{\Lambda}^2  \bs{\tilde{A}}_{t,12}   + \left( \bs{I}_r + \eta^2 \bs{\Lambda}^2 \right)  \bs{\tilde{A}}_{t,12} \Big) \\
  & - \Big( \eta \bs{\Lambda}^2  \bs{\tilde{A}}_{t,11} +   \left( \bs{I}_r + \eta^2 \bs{\Lambda}^2 \right)  \bs{\tilde{A}}_{21, t}   \Big) \Big( \bs{\tilde{A}}_{t,12}  + \eta \bs{\tilde{A}}_{t,12} \Big) \\
  & =    \bs{\tilde{A}}_{t,11}   \bs{\tilde{A}}_{t,12}  -  \bs{\tilde{A}}_{t,12}    \bs{\tilde{A}}_{21, t} = \bs{I}_r.
}
For   \ref{res:ratiolb}, by using  \ref{res:symdet},  we have
\eq{
 \bs{A}_{t,11} \left(  \bs{A}_{t,11} + \eta \bs{\Lambda} \bs{A}_{t,12}    \right)  -  \bs{A}_{t,12}^2  = \bs{I}_r &  \Rightarrow \big(\bs{A}_{t,11} \bs{A}^{-1}_{t,12} \big)^2 + \eta \bs{\Lambda}  \big(\bs{A}_{t,11} \bs{A}^{-1}_{t,12} \big) - \bs{I}_r \succ 0 \\
&  \Rightarrow \bs{A}_{t,11} \bs{A}^{-1}_{t,12} \succ  \left( \bs{I}_r + \frac{\eta^2}{4}  \bs{\Lambda}  \right)^{1/2}  -  \frac{\eta}{2} \bs{\Lambda}.
}
The second part follows   \ref{res:symdet}.  For \ref{res:ratiobound}, we recall that
\eq{
& \bs{A}_{t+1,12} = \bs{A}_{t,12} + \eta \bs{\Lambda} \bs{A}_{t,22}  =  (\bs{I}_r + \eta^2 \bs{\Lambda}^2)\bs{A}_{t,12} + \eta \bs{\Lambda} \bs{A}_{t,11}  \label{eq:riccfact1} \\
& \bs{A}_{t+1,22} = \eta   \bs{\Lambda}  \bs{A}_{t,12} + \left( \bs{I}_r + \eta^2 \bs{\Lambda}^2 \right)  \bs{A}_{t,22} \succ \eta   \bs{\Lambda}  \bs{A}_{t,12} + \bs{A}_{t,22}.  \label{eq:riccfact2}  
% & \bs{A}_{11, n+1} =  \bs{A}_{t,11} + \eta   \bs{\Lambda}  \bs{A}_{t,12}. \label{eq:riccfact3}
}
We use proof by induction.  Suppose the lower bound for $\tfrac{\bs{A}_{t,22}}{\bs{A}_{t,12}}$ holds.  We have
\eq{
\frac{\bs{A}_{t+1,22}}{\bs{A}_{t+1,12}} & \labelrel\succ{ricc:ineqq4} \frac{\eta   \bs{\Lambda}  \bs{A}_{t,12} + \bs{A}_{t,22} }{\bs{A}_{t,12} + \eta \bs{\Lambda} \bs{A}_{t,22}}   \\
 &  \labelrel\succ{ricc:ineqq5} \left( \eta   \bs{\Lambda}  +  \frac{(\bs{I}_r + \eta \bs{\Lambda})^t +(\bs{I}_r - \eta \bs{\Lambda})^t  }{(\bs{I}_r + \eta \bs{\Lambda})^t  - (\bs{I}_r - \eta \bs{\Lambda})^t } \right) \left(  \bs{I}_r + \eta \bs{\Lambda}   \frac{(\bs{I}_r + \eta \bs{\Lambda})^t +(\bs{I}_r - \eta \bs{\Lambda})^t  }{(\bs{I}_r + \eta \bs{\Lambda})^t  - (\bs{I}_r - \eta \bs{\Lambda})^t  } \right)^{-1}   \\[0.5em]
& =  \frac{(\bs{I}_r + \eta \bs{\Lambda})^{t+1}  +(\bs{I}_r - \eta \bs{\Lambda})^{t+1}  }{(\bs{I}_r + \eta \bs{\Lambda})^{t+1}  - (\bs{I}_r - \eta \bs{\Lambda})^{t+1}  }.
}
where \eqref{ricc:ineqq4} follows \eqref{eq:riccfact2} and  \eqref{ricc:ineqq5} follows the induction hypothesis with that  $x \to \frac{x + \eta \lambda}{1+ \eta \lambda x}$ is monotonic increasing for $\eta \lambda < 1$.  For the upper bound,   suppose the lower bound for $\tfrac{\bs{A}_{t,11}}{\bs{A}_{t,12}}$ holds. We have
\eq{
\frac{\bs{A}_{t+1,11}}{\bs{A}_{t+1,12}} & \labelrel\preceq{ricc:ineqq6} \frac{\bs{A}_{t,11} + \eta   \bs{\Lambda}  \bs{A}_{t,12}}{\bs{A}_{t,12} + \eta \bs{\Lambda} \bs{A}_{t,11}}
 \\
 &  \labelrel\preceq{ricc:ineqq7} \left( \eta   \bs{\Lambda}  +  \frac{(\bs{I}_r + \eta \bs{\Lambda})^t +(\bs{I}_r - \eta \bs{\Lambda})^t  }{(\bs{I}_r + \eta \bs{\Lambda})^t  - (\bs{I}_r - \eta \bs{\Lambda})^t  } \right) \left( \bs{I}_r + \eta \bs{\Lambda}   \frac{(\bs{I}_r + \eta \bs{\Lambda})^t +(\bs{I}_r - \eta \bs{\Lambda})^t  }{(\bs{I}_r + \eta \bs{\Lambda})^t  - (\bs{I}_r - \eta \bs{\Lambda})^t  } \right)^{-1}   \\[0.5em]
& =  \frac{(\bs{I}_r + \eta \bs{\Lambda})^{t+1}  +(\bs{I}_r - \eta \bs{\Lambda})^{t+1}  }{(\bs{I}_r + \eta \bs{\Lambda})^{t+1}  - (\bs{I}_r - \eta \bs{\Lambda})^{t+1}  }.
}
where \eqref{ricc:ineqq6} follows \eqref{eq:riccfact1},  and \eqref{ricc:ineqq7} follows the induction hypothesis. 

Lastly if  $\bs{X}_{t,1}$ and  $\bs{X}_{t+1,1}$ are invertible,
\eq{
 \bs{X}_{t+1,2} \bs{X}^{-1}_{t+1,1} 
 & = \left(  \bs{X}_{t,2} \bs{X}^{-1}_{t,1} + \eta^2 \bs{\Lambda}^2  \bs{X}_{t,2} \bs{X}^{-1}_{t,1}   + \eta \bs{\Lambda}^2 \right) \left( \bs{I}_r + \eta  \bs{X}_{t,2} \bs{X}^{-1}_{t,2}  \right)^{-1} \\
& =  \bs{X}_{t,2} \bs{X}^{-1}_{t,2}  \left( \bs{I}_r + \eta \bs{X}_{t,2} \bs{X}^{-1}_{t,2} \right)^{-1} + \eta \bs{\Lambda}^2 \\
& =    \bs{X}_{t,2} \bs{X}^{-1}_{t,2}   - \eta   \bs{X}_{t,2} \bs{X}^{-1}_{t,1}    \bs{X}_{t,2} \bs{X}^{-1}_{t,1}    \left( \bs{I}_r + \eta \bs{X}_{t,2} \bs{X}^{-1}_{t,1}  \right)^{-1} + \eta \bs{\Lambda}^2.
}
\end{proof}

\begin{corollary}
\label{cor:discretericcatidyn}
For  $\bs{V}_0 = 2 \La_2^{\frac{1}{2}}  \bs{G}_0  \La_2^{\frac{1}{2}}  - \La_1$, we define
\eq{
\bs{V}_{t+1} = \bs{V}_t - \eta \bs{V}_t^2 \left( \bs{I}_r +  \eta  \bs{V}_t\right)^{-1} + \eta \hL^2.  \label{eq:discretericcatisym}
}
If $\La_1$, $\La_2$ and  $\hL$ are mutually  diagonalizable,  and $\bs{X}_{1,t}$ is invertible for $t \leq t^* \in \N$,   we have for   $t \leq t^*$
\eq{
\bs{G}_t &  = \frac{ \frac{\La_1}{\La_2} + \frac{\bs{A}_{t,22}}{\bs{A}_{t,12}}  \frac{  \hL  }{  \La_2 }   }{2}   - \frac{1}{4}      \frac{ \bs{A}^{-1}_{t,12}  \hL}{  \La_2 }   \left( \frac{  \frac{ \hL }{  \La_2 }   \frac{ \bs{A}_{t,11} }{\bs{A}_{t,12}} -  \frac{\La_1}{\La_2} }{2} +      \bs{G}_0  \right)^{-1}    \frac{ \hL \bs{A}^{-1}_{t,12}  }{  \La_2 },   
}
where  $\bs{A}_{11 t}$, $\bs{A}_{t,12}$, $\bs{A}_{t,22}$ are defined with $\hL$.
\end{corollary}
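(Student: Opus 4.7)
The plan is to specialize Lemma \ref{lem:discretericcati2} to the present setting (with $\bs\Lambda$ replaced by $\hL$) and then undo the change of variables $\bs V_t = 2\La_2^{1/2}\bs G_t\La_2^{1/2} - \La_1$. By Lemma \ref{lem:discretericcati2} applied with $\bs\Lambda = \hL$, the recursion \eqref{eq:discretericcatisym} is equivalent to $\bs V_t = \bs X_{t,2}\bs X_{t,1}^{-1}$ (invoking \ref{res:update} on the interval $t\le t^*$ where $\bs X_{t,1}$ is invertible), together with the linear recursion $[\bs X_{t,1};\bs X_{t,2}]^{\top} = (\bs I_{2r}+\eta\bs H)^t[\bs I_r;\bs V_0]^{\top}$. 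Using the block expression of $(\bs I_{2r}+\eta\bs H)^t$ from \ref{res:posdef} yields
\eq{
\bs X_{t,1} = \bs A_{t,11} + \hL^{-1}\bs A_{t,12}\,\bs V_0, \qquad
\bs X_{t,2} = \hL\,\bs A_{t,12} + \bs A_{t,22}\bs V_0.
}

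First I will simplify $\bs V_t$ into a shape that exposes $\bs G_0$. Since $\hL,\La_1,\La_2$ and the three matrices $\bs A_{t,ij}$ are all diagonal in the same basis, they commute with each other, so I can factor $\hL^{-1}\bs A_{t,12}$ out of $\bs X_{t,1}$ to get
\eq{
\bs X_{t,1}^{-1} = \bigl(\hL\,\bs A_{t,11}\bs A_{t,12}^{-1} + \bs V_0\bigr)^{-1}\,\bs A_{t,12}^{-1}\hL.
}
The key algebraic step is to use the symmetry-determinant identity \ref{res:symdet} in the form $\bs A_{t,22}\bs A_{t,11}\bs A_{t,12}^{-1} = \bs A_{t,12} + \bs A_{t,12}^{-1}$ to rewrite the numerator as
\eq{
\hL\bs A_{t,12} + \bs A_{t,22}\bs V_0 \;=\; \bs A_{t,22}\bigl(\hL\bs A_{t,11}\bs A_{t,12}^{-1} + \bs V_0\bigr)\;-\;\hL\bs A_{t,12}^{-1},
}
so that the denominator cancels in the leading term and I obtain
\eq{
\bs V_t = \hL\,\bs A_{t,22}\bs A_{t,12}^{-1} \;-\; (\bs A_{t,12}^{-1}\hL)\,\bigl(\hL\bs A_{t,11}\bs A_{t,12}^{-1} + \bs V_0\bigr)^{-1}\,(\hL\bs A_{t,12}^{-1}).
}

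Finally I translate back: $\bs G_t = \tfrac12\La_2^{-1/2}(\bs V_t+\La_1)\La_2^{-1/2}$. The leading term produces $\tfrac12\bigl(\La_1/\La_2 + (\bs A_{t,22}/\bs A_{t,12})(\hL/\La_2)\bigr)$, matching the first summand in the claim. For the remainder, I substitute $\bs V_0 = 2\La_2^{1/2}\bs G_0\La_2^{1/2} - \La_1$ into the inner inverse and factor out $\La_2^{1/2}$ on both sides, which is valid because $\La_2^{1/2}$ commutes with $\hL\bs A_{t,11}\bs A_{t,12}^{-1}$ and $\La_1$; this gives
\eq{
\bigl(\hL\bs A_{t,11}\bs A_{t,12}^{-1} + \bs V_0\bigr)^{-1} \;=\; \tfrac12\La_2^{-1/2}\!\left(\frac{(\hL/\La_2)(\bs A_{t,11}/\bs A_{t,12}) - \La_1/\La_2}{2} + \bs G_0\right)^{\!-1}\!\La_2^{-1/2}.
}
Conjugating by $\La_2^{-1/2}(\bs A_{t,12}^{-1}\hL)$ yields precisely the stated $-\tfrac14$ correction term, completing the proof.

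The only potentially delicate point is the handling of operator order in the final substitution, since $\bs G_0$ need not be diagonal in the common eigenbasis. This is resolved because all matrices that multiply $\bs G_0$, as well as $\La_2^{1/2}$ on each side, are diagonal in that basis and therefore commute with each other, so factoring $\La_2^{1/2}$ symmetrically across $\bs G_0$ is valid. The remaining manipulations are routine applications of commutativity of diagonal matrices and the identity in \ref{res:symdet}.
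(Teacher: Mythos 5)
Your proposal is correct and follows essentially the same route as the paper: both specialize Lemma \ref{lem:discretericcati2} with $\bs\Lambda=\hL$, write $\bs V_t=\bs X_{t,2}\bs X_{t,1}^{-1}$ using the block form of $(\bs I_{2r}+\eta\bs H)^t$ from \ref{res:posdef}, invoke the identity $\bs A_{t,22}\bs A_{t,11}-\bs A_{t,12}^2=\bs I_r$ from \ref{res:symdet} to extract $\bs A_{t,22}\bs A_{t,12}^{-1}\hL$ as the leading term, and undo the change of variables via commutativity of the diagonal factors. Your only stylistic difference is rewriting the numerator directly as $\bs A_{t,22}(\hL\bs A_{t,11}\bs A_{t,12}^{-1}+\bs V_0)-\hL\bs A_{t,12}^{-1}$ rather than the paper's split-and-recombine presentation; the underlying algebra is identical.
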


\begin{proof}
By using      \eqref{eq:matrixpowerform},  we can write that
\eq{
 \bs{V}_t  
& = \left(  \hL  \bs{A}_{t,12} +  \bs{A}_{t,22} \bs{V}_0   \right) \left(\hL \bs{A}^{-1}_{t,12}  \bs{A}_{t,11}  +   \bs{V}_0 \right)^{-1} \hL  \bs{A}^{-1}_{t,12} \\
& =  \hL  \bs{A}_{t,12}   \! \left( \hL \bs{A}^{-1}_{t,12} \bs{A}_{t,11}  +    \bs{V}_0 \right)^{-1}   \hL  \bs{A}^{-1}_{t,12}  \\
& \quad  +     \bs{A}_{t,22} \Big( \bs{I}_r  -    \bs{A}_{t,11} \bs{A}^{-1}_{t,12}  \hL  \left(  \hL \bs{A}^{-1}_{t,12} \bs{A}_{t,11}   +     \bs{V}_0 \right)^{-1} \! \Big) \hL  \bs{A}^{-1}_{t,12} \\
& =   \bs{A}_{t,22} \bs{A}^{-1}_{t,12}   \hL   -    \bs{A}^{-1}_{t,12} \hL   \left(  \hL  \bs{A}^{-1}_{t,12} \bs{A}_{t,11} +  \bs{V}_0 \right)^{-1}   \hL  \bs{A}^{-1}_{t,12}.
}
Therefore,
\eq{
\bs{G}_t &  = \frac{ \frac{\La_1}{\La_2} + \frac{\bs{A}_{t,22}}{\bs{A}_{t,12}}  \frac{  \hL  }{  \La_2 }   }{2}   - \frac{1}{4}      \frac{ \bs{A}^{-1}_{t,12}  \hL}{  \La_2 }   \left( \frac{  \frac{ \hL }{  \La_2 }   \frac{ \bs{A}_{t,11} }{\bs{A}_{t,12}} -  \frac{\La_1}{\La_2} }{2} +      \bs{G}_0  \right)^{-1}    \frac{ \hL \bs{A}^{-1}_{t,12}  }{  \La_2 }   . 
}
\end{proof}
 
\begin{proposition}
\label{prop:monotoneiteration}
For some  symmetric matrix $\bs{S}$, we consider
\eq{
\bs{V}_{1} =  \bs{V}_{0} +  \eta  \bs{S} -  \eta  \bs{V}_{0}^2 \left( \bs{I}_r + \eta  \bs{V}_{0}  \right)^{-1}. \label{eq:monotoneiteration}
}
If   $\bs{V}^{+}_{0} \succeq  \bs{V}_{0} \succ \tfrac{-1}{\eta} \bs{I}_r$,  we have  $ \bs{V}^{+}_{1} \succeq  \bs{V}_{1}$, where  $\bs{V}^{+}_{1}$ is the next iterate if we use  $\bs{V}^{+}_{0} $ in \eqref{eq:monotoneiteration}.
\end{proposition}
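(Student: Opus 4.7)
My plan is to reduce the update to a form where operator monotonicity is transparent, namely to express $\bs{V}_1$ as an affine function of $-(\bs{I}_r+\eta\bs{V}_0)^{-1}$, and then invoke the operator anti-monotonicity of matrix inversion on the positive-definite cone.

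First I would simplify the nonlinear term. Since $\bs{V}_0$ (assumed symmetric) commutes with $\bs{I}_r+\eta\bs{V}_0$ and hence with its inverse, multiplying $\bs{V}_0-\eta\bs{V}_0^2(\bs{I}_r+\eta\bs{V}_0)^{-1}$ on the right by $\bs{I}_r+\eta\bs{V}_0$ yields $\bs{V}_0(\bs{I}_r+\eta\bs{V}_0)-\eta\bs{V}_0^2=\bs{V}_0$, so
\[
\bs{V}_0-\eta\bs{V}_0^2(\bs{I}_r+\eta\bs{V}_0)^{-1}=\bs{V}_0(\bs{I}_r+\eta\bs{V}_0)^{-1}.
\]
Writing $\eta\bs{V}_0=(\bs{I}_r+\eta\bs{V}_0)-\bs{I}_r$ then gives the key identity
\[
\bs{V}_1=\eta\bs{S}+\tfrac{1}{\eta}\bs{I}_r-\tfrac{1}{\eta}(\bs{I}_r+\eta\bs{V}_0)^{-1}.
\]
The same identity holds with $\bs{V}_0$ replaced by $\bs{V}_0^{+}$.

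Next I would apply the standard fact that $\bs{X}\mapsto\bs{X}^{-1}$ is operator anti-monotone on the positive-definite cone. The hypothesis $\bs{V}_0^{+}\succeq\bs{V}_0\succ-\tfrac{1}{\eta}\bs{I}_r$ with $\eta>0$ gives $\bs{I}_r+\eta\bs{V}_0^{+}\succeq\bs{I}_r+\eta\bs{V}_0\succ 0$, so $(\bs{I}_r+\eta\bs{V}_0^{+})^{-1}\preceq(\bs{I}_r+\eta\bs{V}_0)^{-1}$. Multiplying by $-1/\eta$ reverses the inequality, and then adding the common matrix $\eta\bs{S}+\tfrac{1}{\eta}\bs{I}_r$ preserves it, yielding $\bs{V}_1^{+}\succeq\bs{V}_1$.

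There is no real obstacle here beyond spotting the algebraic simplification; the proposition is essentially a one-line consequence once the update is rewritten as a constant matrix minus a scalar multiple of $(\bs{I}_r+\eta\bs{V}_0)^{-1}$. The only subtle point worth flagging is that the commutativity step requires $\bs{V}_0$ to be symmetric (which is implicit in the use of the Loewner order throughout), and that the constraint $\bs{V}_0\succ-\bs{I}_r/\eta$ is exactly what is needed to keep the relevant matrices inside the positive-definite cone where inversion is well-defined and anti-monotone.
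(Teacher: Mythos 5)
Your proof is correct and follows the same route as the paper: both reduce the update to the identity $\bs{V}_1 = \eta\bs{S} + \tfrac{1}{\eta}\bs{I}_r - \tfrac{1}{\eta}(\bs{I}_r+\eta\bs{V}_0)^{-1}$ and then invoke operator anti-monotonicity of matrix inversion on the positive-definite cone (the paper cites this as Proposition~\ref{prop:monotonecharac}). The only cosmetic difference is that the paper writes the identity compactly as $\bs{V}_1 = \tfrac{1}{\eta}\big(\bs{I}_r - (\bs{I}_r+\eta\bs{V}_0)^{-1}\big) + \eta\bs{S}$ without showing the intermediate algebra that you spell out.
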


\begin{proof}
We have
\eq{
\bs{V}_{1}  =  \frac{1}{\eta} \Big( \bs{I}_r -  \left( \bs{I}_r + \eta  \bs{V}_{0}  \right)^{-1} \Big) +  \eta  \bs{S}.
}
The statement follows by  Proposition \ref{prop:monotonecharac}.
\end{proof}

\section{Proofs for Main Results}
\label{sec:mainproofs}
\subsection{Proof of Propositions \ref{prop:sgdrotation} and \ref{prop:monotoneupdatetext}}
For Proposition \ref{prop:sgdrotation}, we observe that
\eq{
\widehat{\G}(t) \coloneqq\widehat{\W}(t) \widehat{\W}(t)^\top  ~~ \text{and} ~~ \widetilde{\G}(t) \coloneqq \bs{U}(t) \bs{U}(t)^\top = \W(t) ( \W(t)^\top \W(t) )^{-1} \W(t)^\top  
}
have the exact same dynamics. Therefore the statement follows. Proposition \ref{prop:monotoneupdatetext} follows Proposition \ref{prop:monotonecharac}.

\subsection{Decomposition of the population risk}

% REMOVE THIS FOR ARXIV

The fine-tuning step relies on the following decomposition of the population risk:

\begin{proposition}
\label{prop:riskdecomptext}
    For any $\bs{\Omega} \in \R^{\rs \times \rs}$, the population risk defined in \eqref{eq:poprisk} can be written as:
    \eq{
R(\W_t \bs{\Omega}) =  \frac{1}{\rs}  \norm*{ \bs{\Omega} \bs{\Omega}^\top - \tfrac{\sqrt{\rs}}{\normL}  \W_t^\top \Th \bs{\Lambda} \Th^\top \W_t }_F^2 + \frac{1}{\normL^2} \Big( \normL^2 -   \norm{\bs{\Lambda}^{\frac{1}{2}}  \G_t \bs{\Lambda}^{\frac{1}{2}}}_F^2   \Big) ,
}
where  $\G_t = \Th^\top \W_t \W_t^\top \Th$ is the discrete alignment Gram matrix defined in the previous part.
\end{proposition}

We observe that both the second term and the matrix $\bs{W}_t^\top \bs{\Theta} \bs{\Lambda} \bs{\Theta}^\top \bs{W}_t$ are independent of $\bs{\Omega}$. Hence, the fine-tuning step reduces to a least squares problem in the matrix $\bs{\Omega} \bs{\Omega}^\top$ in population, which is approximated via empirical risk minimization over a fresh batch of samples. By standard concentration arguments, a sample size of $N_{\mathrm{Ft}} \geq \rs^2 \mathrm{polylog} d$ suffices to ensure that the empirical minimizer approximates the population solution with high probability. 

\begin{proof}
We begin by noting that $\bs{W}_t$ is an orthonormal matrix. Using this, we can express the population risk as:
\eq{ 
& R(\bs{W}_t \bs{\Omega})  =   \Big \lVert \frac{1}{\sqrt{\rs}} \bs{W}_t \bs{\Omega} \bs{\Omega}^\top \bs{W}_t^\top - \frac{1}{\normL} \bs{\Theta} \bs{\Lambda} \bs{\Theta}^\top \Big \rVert_\text{F}^2 \\
& = \!  \frac{1}{\normL^2}  \! \left( \! \normL^2  \! + \!  \frac{\normL^2}{\rs} \norm{  \bs{\Omega} \bs{\Omega}^\top }_F^2 \! -  \!  \frac{2 \normL}{\sqrt{\rs}} \tr(\bs{\Omega} \bs{\Omega}^\top  \bs{W}_t^\top \bs{\Theta} \bs{\Lambda} \bs{\Theta}^\top \W_t ) \! \pm \!  \norm{ \bs{W}_t^\top \bs{\Theta} \bs{\Lambda} \bs{\Theta}^\top \W_t}_F^2 \! \right) \\
& =  \frac{1}{\normL^2} \left(  \normL^2 -   \norm{ \bs{W}_t^\top \bs{\Theta} \bs{\Lambda} \bs{\Theta}^\top \W_t}_F^2 + \norm*{ \frac{\normL}{\sqrt{\rs}} \bs{\Omega} \bs{\Omega}^\top  -  \bs{W}_t^\top \bs{\Theta} \bs{\Lambda} \bs{\Theta}^\top \W_t }_F^2\right) 
}
By observing that  $\norm{ \bs{W}^\top \bs{\Theta} \bs{\Lambda} \bs{\Theta}^\top \W}_F^2 = \norm{\La^{\frac{1}{2}} \G_t \La^{\frac{1}{2}}}_F^2$, we have the statement.  
\end{proof}

\subsection{Proof of Theorem \ref{thm:gfresult}}
\label{sec:thmgfresult}
We let 
\eq{
t_{\mathrm{sc}} \coloneqq t \sqrt{\rs} \normL, \quad \mathsf{\kappa}_{\mathrm{eff}}  \coloneqq \begin{cases}
r^{\alpha}, & \alpha \in [0,0.5) \\
1, & \alpha > 0.5
\end{cases}, \qquad   
\mathsf{T}_{\mathrm{eff}} \coloneqq  \mathsf{\kappa}_{\mathrm{eff}}  \sqrt{\rs} \normL \log \nicefrac{d}{\rs}.  \label{def:contimescale}
}
and
\eq{
& \ru \coloneqq \begin{cases}
r, & \alpha \in [0,0.5) \\
\ceil{\log^{2.5} d}, & \alpha > 0.5
\end{cases}  \qquad 
\rus\coloneqq \begin{cases}
\floor{ \rs (1 - \log^{\nicefrac{-1}{8}} d) \wedge r} , & \alpha \in [0,0.5) \\
\rs, & \alpha > 0.5.
\end{cases}   \label{def:effwidthcont}
}
In the following part, we will establish the high-dimensional limit of the risk curve and the alignment.
\subsubsection{High-dimensional limit for the alignment}
By Lemma \ref{lem:riccsolution},  we have
\eq{
\bs{G}_U(t_{\mathrm{sc}}) \! =
 \!    \frac{  \bs{I}_r }{\bs{I}_r - \exp  (  -  t \La  )}   - \frac{      \exp (  - 0.5 t \La  ) }{\bs{I}_r - \exp ( -  t \La  )}     \bigg (\G_{U}(0) +   \frac{      \exp ( -   t \La) }{\bs{I}_r - \exp (  -  t \La )}      \bigg )^{-1} \!\!\!  \frac{      \exp ( - 0.5 t \La  ) }{\bs{I}_r - \exp (  -  t \La  )}.      \label{eq:alignment} 
} 
We define the block matrix forms
\eq{
& \bs{G}_U(t) \eqqcolon \begin{bmatrix}
\bs{G}_{U,11}(t) &  \bs{G}_{U,12}(t) \\
\bs{G}_{U,12}^\top(t) & \bs{G}_{U,22}(t)
\end{bmatrix},~~
\La  = \begin{bmatrix}
\La_{\mathrm{eff}} & 0 \\
0 & \La_{22}
\end{bmatrix} , ~~
\bs{\Lambda}_{\mathrm{e},11} \coloneqq \La_{\mathrm{eff}} , ~~ 
\bs{\Lambda}_{\mathrm{e},22}\coloneqq   \begin{bmatrix}
  \La_{22}  & 0 \\
0  &  0
\end{bmatrix},
\label{def:defalignment}
}
where   $\bs{G}_{U,11}(t) ,  \La_{\mathrm{eff}}  \in \R^{\rus \times \rus }$.  The following statement characterizes the time-scales for the alignment terms.  

\begin{proposition}
 \label{prop:alignmenthighdimensional}
$ \mathcal{G}_{\text{init}}$ implies that  $\mathcal{A}(t   \mathsf{T}_{\mathrm{eff}}, \bs{\theta}_j)  = \mathbbm{1} \{ t \mathsf{\kappa}_{\mathrm{eff}}  \geq \tfrac{1}{\lambda_j} \} + o_d(1)$ for $t \not = \lim_{d \to \infty} \frac{1}{\lambda_j \mathsf{\kappa}_{\mathrm{eff}}}$ and $j \leq \rus$.
\end{proposition}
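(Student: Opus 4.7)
The plan is to evaluate $\mathcal{A}(t\mathsf{T}_{\mathrm{eff}}, \bs{\theta}_j) = (\bs{G}_U(t\mathsf{T}_{\mathrm{eff}}))_{jj}$ via the closed-form Riccati solution in Lemma~\ref{lem:riccsolution} together with a scalar Riccati simplification and the concentration bounds on $\bs{G}_U(0)$ from Lemma~\ref{lem:goodevents}. Under the time rescaling $t\mathsf{T}_{\mathrm{eff}}/T_U = t\mathsf{\kappa}_{\mathrm{eff}}\log(d/\rs)$, the exponential factor reads $\exp(-t\mathsf{T}_{\mathrm{eff}}\lambda_j/T_U) = (\rs/d)^{t\mathsf{\kappa}_{\mathrm{eff}}\lambda_j}$, while the scalar ODE $\partial_s g = \lambda g(1-g)$ integrates to $g(s) = g_0/[g_0 + e^{-s\lambda}(1-g_0)]$. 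Thus for $g_0 \asymp \rs/d$ (matching the scale of the diagonal of $\bs{G}_U(0)$), the ratio $e^{-\tau\lambda_j}/g_0 \asymp (\rs/d)^{t\mathsf{\kappa}_{\mathrm{eff}}\lambda_j - 1}$ tends to $0$ or $\infty$ depending on whether $t\mathsf{\kappa}_{\mathrm{eff}}\lambda_j$ exceeds $1$, which identifies the desired transition location.

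For the upper bound on $(\bs{G}_U(t\mathsf{T}_{\mathrm{eff}}))_{jj}$ (the informative side of the indicator when $t\mathsf{\kappa}_{\mathrm{eff}}\lambda_j < 1$), I would combine \ref{event:Gbound} with \ref{event:htZub}/\ref{event:ltZub} to obtain $\bs{G}_U(0) \preceq C(\rs/d)\bs{I}_r$ in Loewner order, and then invoke monotonicity of the Riccati flow in its initialization from Lemma~\ref{lem:riccsolution}. Since the dominating matrix $C(\rs/d)\bs{I}_r$ commutes with $\La$, the induced trajectory is diagonal and decouples into $r$ independent scalar Riccatis following the formula above; substituting $g_0 = C\rs/d$ and $\tau = t\mathsf{\kappa}_{\mathrm{eff}}\log(d/\rs)$ immediately gives $(\bs{G}_U(t\mathsf{T}_{\mathrm{eff}}))_{jj} = o_d(1)$ in the before phase.

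The matching lower bound is the main technical obstacle: when $\rs < r$, $\bs{G}_U(0)$ is rank-deficient and admits no nontrivial diagonal Loewner lower bound (as one can check already for $r=2,\rs=1$ via the $2\times 2$ positivity criterion), so the symmetric sandwich strategy breaks down. I would instead work directly with the Riccati formula
\[
(\bs{G}_U(t\mathsf{T}_{\mathrm{eff}}))_{jj} = \frac{1}{1-e^{-\tau\lambda_j}} - \frac{e^{-\tau\lambda_j}}{(1-e^{-\tau\lambda_j})^2}\,\bigl[(\bs{G}_U(0) + \bs{D}_3)^{-1}\bigr]_{jj}, \qquad \bs{D}_3 = \frac{e^{-\tau\La}}{\bs{I}_r - e^{-\tau\La}},
\]
so that it suffices to upper bound the resolvent entry $[(\bs{G}_U(0)+\bs{D}_3)^{-1}]_{jj}$. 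A Schur complement on the top-$\rus$ coordinate block, using the eigenvalue lower bound $\lambda_{\min}(\bs{Z}_{1:\rus}\bs{Z}_{1:\rus}^\top) \gtrsim (\rs/d)\log^{-1/4}d$ from \ref{event:htZlb}/\ref{event:ltZlb} combined with \ref{event:Zbound} on the top block, together with the additive $\bs{D}_3$ on the rest block, yields $[(\bs{G}_U(0)+\bs{D}_3)^{-1}]_{jj} \lesssim d\,\mathrm{polylog}(d)/\rs$, whence the correction becomes $(\rs/d)^{t\mathsf{\kappa}_{\mathrm{eff}}\lambda_j - 1}\,\mathrm{polylog}(d) \to 0$ whenever $t\mathsf{\kappa}_{\mathrm{eff}}\lambda_j > 1$. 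The cutoff $\rus = \lfloor\rs(1-\log^{-1/8}d)\wedge r\rfloor$ is tuned precisely so that the polylogarithmic condition number of the top-$\rus$ block keeps this Schur bound well below $e^{\tau\lambda_j}$; the excluded time $t = \lim 1/(\lambda_j\mathsf{\kappa}_{\mathrm{eff}})$ is exactly the scalar Riccati's transition point, at which neither bound collapses to a unique limit.
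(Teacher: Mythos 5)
Your overall strategy matches the paper's: an asymmetric Loewner argument where the upper bound on $\mathcal{A}(t\mathsf{T}_{\mathrm{eff}},\bs{\theta}_j)$ comes from dominating $\bs{G}_U(0)$ by a matrix commuting with $\La$ (so the Riccati decouples into scalar logistic flows), while the lower bound---which cannot be obtained by a diagonal Loewner minorant because $\bs{G}_U(0)$ has rank $\rs<r$---is extracted from the closed-form resolvent $\bigl(\bs{G}_U(0)+\bs{D}_3\bigr)^{-1}$ via a Schur complement on the top-$\rus$ block, using $\lambda_{\min}(\bs{Z}_{1:\rus}\bs{Z}_{1:\rus}^\top)\gtrsim(\rs/d)\log^{-1/4}d$. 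Both observations are exactly what the paper does for $\alpha>0$ and $\tau:=t\mathsf{\kappa}_{\mathrm{eff}}\log(d/\rs)$ of moderate size.

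The gap is your claim that $[(\bs{G}_U(0)+\bs{D}_3)^{-1}]_{jj}\lesssim d\,\mathrm{polylog}(d)/\rs$ holds uniformly in $t$. The Schur complement of the top-$\rus$ block equals $\bs{Z}_1\bigl(\bs{I}_{\rs}+\bs{Z}_2^\top\bs{D}_{3,22}^{-1}\bs{Z}_2\bigr)^{-1}\bs{Z}_1^\top+\bs{D}_{3,11}$, and as $\tau$ grows $\bs{Z}_2^\top\bs{D}_{3,22}^{-1}\bs{Z}_2\succeq\bs{Z}_2^\top(e^{\tau\La_{22}}-\bs{I})\bs{Z}_2$ grows without bound, so the first term shrinks and the resolvent entry inflates to order $e^{\tau\lambda_j}\cdot(1+\|\bs{Z}_2^\top e^{\tau\La_{22}}\bs{Z}_2\|)/\lambda_{\min}(\bs{Z}_1\bs{Z}_1^\top)$; the residual correction then scales like $\log^{1/4}d\cdot e^{-\tau(\lambda_j-\lambda_{\rus+1})}$, which does \emph{not} go to zero unless $\tau(\lambda_j-\lambda_{\rus+1})\gg\log\log d$. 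With $j=\rus$ this requires $\tau\gg\rus^{\alpha+1}\log\log d$, far beyond the entry into the after phase $\tau\approx\rus^{\alpha}\log(d/\rs)$ when $\rus$ is polynomial in $d$. The paper proves the polylog resolvent bound only for $\tau\leq(\rus+1)^{\alpha}\log(d/\rs)$---precisely the range where $\bs{Z}_2^\top e^{\tau\La_{22}}\bs{Z}_2\preceq O(\mathrm{polylog}\,d)\bs{I}$ holds---and handles larger $\tau$ with a genuinely separate argument: it rewrites the exact ODE for $\bs{G}_{U,11}$ as a Riccati flow with the shifted drift $\La^-_{\mathrm{eff}}=\La_{\mathrm{eff}}-(\rus+1)^{-\alpha}\bs{I}$ plus a provably nonnegative remainder (using $\bs{G}_{U,11}(\bs{I}-\bs{G}_{U,11})-\bs{G}_{U,12}\bs{G}_{U,12}^\top\succeq 0$) and invokes the comparison theorem of \cite{Barilari2014comparison} together with initialization monotonicity. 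You also omit the $\alpha=0$ case, where the paper works in the eigenbasis of $\bs{G}_U(0)$ (so the flow stays diagonal) and uses the principal-angle alignment and the interlacing event \ref{event:htZlbiso} rather than the resolvent bound.
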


\begin{proof}
For $\alpha = 0$, since the trajectory stays diagonal and the diagonal entries are monotonically increasing,  by using the events \ref{event:Gbound} and \ref{event:htZlbiso} with Lemma \ref{lem:asympkernel} we have the result. 

\smallskip
In the following, we will prove the result for $\alpha > 0$.    By  using Proposition \ref{prop:matrixyounginequality} with   \ref{event:ltZub}  
\eq{
\bs{G}_{U}(0) \preceq
\begin{bmatrix}
2.1 \Z_{1:\rus} \Z_{1:\rus}^\top   & 0 \\
0 & 2.1 \Z_2 \Z_2^\top
\end{bmatrix},
}
where
\eq{
2.1\lambda_{\mathrm{max}} ( \Z_{1:\rus} \Z_{1:\rus}^\top ) \leq  \begin{cases}
 5 (1 + \frac{1}{\sqrt{\varphi}} )^2, & \alpha \in [0,0.5) \\
15,  & \alpha > 0.5.
\end{cases} 
}
Therefore,
\eq{
\bs{G}_{U,11}(t_{\mathrm{sc}}) &  \preceq    
    \frac{ \bs{I}_{\rus} }{\bs{I}_{\rus} \! \!  - \! \exp   (   -  t  \La_{\mathrm{eff}}   )}    \\[0.3em]
    & -  \!   \frac{   \exp   (   -  0.5 t \La_{\mathrm{eff}}  )  }{\bs{I}_{\rus} \! \! -  \!  \exp   (   -  t \La_{\mathrm{eff}} )  }     \bigg (    \frac{O(\rs)}{d} \bs{I}_{\rus}  \!\! + \!   \frac{    \exp   (   -  t \La_{\mathrm{eff}}   )   }{\bs{I}_{\rus} \!\! -  \!  \exp   (   -  t  \La_{\mathrm{eff}}   )  }    \bigg )^{-1} \! \!\! \frac{  \exp   (   -  0.5 t  \La_{\mathrm{eff}}    ) }{\bs{I}_{\rus} \! \! - \!   \exp   (   -  t \La_{\mathrm{eff}}    ) } .
}
Therefore,  by Proposition \ref{prop:asymptoticlimit}, for $j \leq \rus$,
\eq{
\mathcal{A}(t \mathsf{T}_{\mathrm{eff}}, \bs{\theta}_j) \leq  \frac{1}{1 + \Big(\frac{d}{\rs} \frac{1}{\log^3 d} - 1\Big) \frac{d}{\rs}^{- t  \mathsf{\kappa}_{\mathrm{eff}}  j^{-\alpha}}}  = \mathbbm{1} \{ t \mathsf{\kappa}_{\mathrm{eff}}  \geq \tfrac{1}{\lambda_j} \} + o_d(1).
}
Moreover, for $t \leq  (\rus +1)^{\alpha} \log \frac{d}{\rs}$,  by using the events \ref{event:htZub} and \ref{event:ltZub}, we have
\eq{
\Z_2^\top \exp(t \La_{22}) \Z_2 \preceq \begin{cases}
O_{\varphi}(1) \bs{I}_{\rs}, & \alpha \in (0,0.5) \\
O(\log^{2.5} d)  \bs{I}_{\rs}, & \alpha > 0.5.
\end{cases}
}
Therefore, for  $t \leq  (\rus +1)^{\alpha} \log \frac{d}{\rs}$,  we have  $\bs{G}_{U,11}(t_{\mathrm{sc}})  \succeq   \uG{}(t)$ where
\eq{
 \uG{}(t) &  \coloneqq
    \frac{ \bs{I}_{\rus} }{\bs{I}_{\rus}  \! \!- \! \exp   (   -  t  \La_{\mathrm{eff}}   )}  \\[0.3em]
    & - \!  \frac{   \exp   (   -  0.5 t  \La_{\mathrm{eff}}    )  }{\bs{I}_{\rus} \! \! - \!   \exp   (   -  t  \La_{\mathrm{eff}}  )  }     \bigg (  \frac{\rs}{  d} \frac{ O(1) }{\log^4 d} \bs{I}_{\rus}  \!\!   + \!   \frac{    \exp   (   -  t  \La_{\mathrm{eff}}  )   }{\bs{I}_{\rus}\!  \! -    \! \exp   (   -  t   \La_{\mathrm{eff}}   )  }    \bigg )^{-1} \!\!\! \frac{  \exp   (   -  0.5 t  \La_{\mathrm{eff}}    ) }{\bs{I}_{\rus}  \! \! -  \!  \exp   (   -  t  \La_{\mathrm{eff}}    ) },
}
which implies that for  $t < (\rus +1)^{\alpha} \log \frac{d}{\rs}$  
\eq{
\mathcal{A}(t \mathsf{T}_{\mathrm{eff}}, \bs{\theta}_j) \geq  \frac{1}{1 +   O(\log^4 d)  \frac{d}{\rs}^{1 - t   \mathsf{\kappa}_{\mathrm{eff}}   j^{-\alpha}}}  = \mathbbm{1} \{ t  \mathsf{\kappa}_{\mathrm{eff}} \geq \tfrac{1}{\lambda_j} \} + o_d(1).
}
To extend the lower bound for $t >   (\rus +1)^{\alpha} \log \frac{d}{\rs}$ , let us define 
\eq{
t_0 \coloneqq  (\rus +1)^{\alpha} \log \tfrac{d}{\rs} ~~ \text{and} ~~ \La^{-}_{\mathrm{eff}} \coloneqq \La_{\mathrm{eff}} -  (\rus +1)^{- \alpha} \bs{I}_{\rus}.
}
We have for $t > t_0$,
\eq{
\partial_t  \G_{U, 11}(t) & =   \frac{0.5}{T_U}   \Big(  \La^{-}_{\mathrm{eff}}  \G_{U,11}(t)  + \G_{U,11}(t)  \La^{-}_{\mathrm{eff}} -  2   \G_{U,11}(t)   \La^{-}_{\mathrm{eff}}  \G_{U,11}(t)     \Big)  \\
& + \underbrace{ \frac{1}{T_U}  \Big( (\ru+1)^{-\alpha}   \G_{U, 11}(t)  ( \bs{I}_{\rus} - \G_{U, 11}(t)  )  -  \G_{U,12}(t) \La_{22} \G^\top_{U,12}(t)   \Big). }_{\succeq \frac{(\ru+1)^{-\alpha}}{T_U}   \Big(   \G_{U, 11}(t)  \big( \bs{I}_{\rus} - \G_{U, 11}(t) \big )  -  \G_{U,12}(t)  \G^\top_{U,12}(t)   \Big) \succeq 0}
} 
Therefore,  for $t > t_0$, by monotonicity and \cite[Theorem 38]{Barilari2014comparison}, $\bs{G}_{U,11}(t_{\mathrm{sc}})   \succeq \uG{}(t) \succeq \uG{}(t_0)$, where
\eq{
 & \uG{}(t) =  
    \frac{ \bs{I}_{\rus} }{\bs{I}_{\rus} - \exp   (   -  (t - t_0)   \La^{-}_{\mathrm{eff}}  )}   \\
   &   - \frac{   \exp   (   -  0.5 (t - t_0)    \La^{-}_{\mathrm{eff}} )  }{\bs{I}_{\rus} \!\!\!\! -   \exp   (   -  (t - t_0)  \La^{-}_{\mathrm{eff}}    )  }     \bigg (  \underline{\bs{G} }(t_0)  \! + \!   \frac{    \exp   (   -  (t - t_0)     \La^{-}_{\mathrm{eff}} )   }{\bs{I}_{\rus} \!\!\!\! -    \exp   (   -  (t - t_0)     \La^{-}_{\mathrm{eff}}   )  }    \bigg )^{-1} \!\!\!\! \! \frac{  \exp   (   -  0.5 (t - t_0)   \La^{-}_{\mathrm{eff}}  ) }{\bs{I}_{\rus} \!\!\!\! -   \exp   (   - (t - t_0)     \La^{-}_{\mathrm{eff}}     ) }.  
}
Therefore, the result extends to   $t > t_0$ as well.
\end{proof}
\subsubsection{High-dimensional limit for the risk curve}
For  $\mathsf{Err}(t) \coloneqq  \normL \big( \tfrac{\Lae }{\normL}  \! - \! \tfrac{\bs{G}_{W}(t)}{\sqrt{\rs}} \big)$,  by Lemma \ref{lem:riccsolution},  we have
\eq{
 \mathsf{Err}(t_{\mathrm{sc}})  &   \! = \!     \frac{  -  \Lae \exp  ( - t \Lae ) }{\bs{I}_d - \exp  (  - t  \Lae )} \\
 & \! + \! \frac{    \Lae  \exp (  - 0.5 t \Lae  ) }{\bs{I}_d - \exp  ( - t \Lae )}     \bigg(  \frac{\normL}{\sqrt{\rs}} \G_{W}(0) \! + \!  \frac{    \Lae \exp ( - t \Lae  ) }{\bs{I}_d - \exp  (  - t  \Lae )}    \bigg)^{-1} \!\!   \frac{  \Lae  \exp  (   - 0.5 t \Lae  ) }{\bs{I}_d - \exp  (   -   t \Lae  )},   \label{eq:riskdifference} 
}

 We define the block matrix forms
\eq{
\bs{G}_W(t) = \begin{bmatrix}
\bs{G}_{W,11}(t) &  \bs{G}_{W,12}(t) \\
\bs{G}_{W,12}^\top(t) & \bs{G}_{W,22}(t)
\end{bmatrix}, ~~  
\La  = \begin{bmatrix}
\La_{\mathrm{eff}} & 0 \\
0 & \La_{22}
\end{bmatrix}, ~~
\bs{\Lambda}_{\mathrm{e},11} \coloneqq \La_{\mathrm{eff}} , ~~
\bs{\Lambda}_{\mathrm{e},22}\coloneqq   \begin{bmatrix}
  \La_{22}  & 0 \\
0  &  0
\end{bmatrix},
\label{def:defrisk}
}
where   $\bs{G}_{W,11}(t) ,  \La_{\mathrm{eff}}  \in \R^{\ru \times \ru }$. 
Our proof strategy is as follows: In Proposition \ref{prop:riskoffdiagonal},    we show that the off-diagonal and lower-right terms in \eqref{eq:riskdifference} does not contribute to the high-dimensional limit. Then,  in  Proposition \ref{prop:riskhighdimensional},  we characterize the limit of the left-top terms.  Finally, in Proposition \ref{prop:riskmain}, we prove the  asymptotic behaviour of the risk curve.

\begin{proposition}
\label{prop:riskoffdiagonal}
$\mathcal{G}_{\text{init}}$ implies that  $\norm*{ \bs{G}_{W,12}(t  \mathsf{T}_{\mathrm{eff}} ) }_F^2  = o_d(\rs)$   and  $\norm*{ \bs{G}_{W,22}(t \mathsf{T}_{\mathrm{eff}} ) }_F^2 = o_d(\rs)$.
\end{proposition}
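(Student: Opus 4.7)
My approach is to use the closed-form expression for $\bs{G}_W(t_{\mathrm{sc}})$ from Lemma~\ref{lem:riccsolution} to derive an explicit formula for the bottom-right block, and then deduce the off-diagonal bound via PSD structure. Setting $t\mathsf{T}_{\mathrm{eff}} = \tau\sqrt{\rs}\normL$ with $\tau = t\mathsf{\kappa}_{\mathrm{eff}}\log(d/\rs)$, the abuse-of-notation interpretation of $\tL/(\bs{I}_d - \exp(-\tau\Lae))$ restricted to the bottom block yields $\gamma\bs{I}_{d-r}$ with $\gamma \coloneqq (\sqrt{\rs}/\normL)/\tau$. Performing the Schur-complement inversion of the inner matrix $\bs{G}_W(0) + \tL\exp(-\tau\Lae)/(\bs{I}_d - \exp(-\tau\Lae))$, and using the push-through identity $\Z_{1:r}^\top(\Z_{1:r}\Z_{1:r}^\top + F_1)^{-1}\Z_{1:r} = \bs{I}_{\rs} - (\bs{I}_{\rs}+Q)^{-1}$ (with $F_1 \coloneqq \tL_{11}\exp(-\tau\La)/(\bs{I}_r - \exp(-\tau\La))$ and $Q \coloneqq \Z_{1:r}^\top F_1^{-1}\Z_{1:r}$), one arrives at the concise form
\eq{
\bs{G}_{W,22}(t\mathsf{T}_{\mathrm{eff}}) = \gamma\, R\,(\gamma\bs{I}_{d-r}+R)^{-1}, \qquad R \coloneqq \Z_{\text{rest}}(\bs{I}_{\rs}+Q)^{-1}\Z_{\text{rest}}^\top,
}
where $R$ has rank at most $\rs$.

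The bound on $\|\bs{G}_{W,22}\|_F^2$ is then immediate. Since each nonzero eigenvalue of $\bs{G}_{W,22}$ takes the form $\gamma\lambda_i(R)/(\gamma+\lambda_i(R)) \leq \gamma$, we have $\|\bs{G}_{W,22}\|_F^2 \leq \rs\gamma^2$. A direct computation shows that $\gamma \asymp 1/\log(d/\rs) = o_d(1)$ in both regimes: in the heavy-tailed case $\sqrt{\rs}/\normL \asymp r^\alpha$ cancels $\mathsf{\kappa}_{\mathrm{eff}} = r^\alpha$, and in the light-tailed case both factors are $\Theta(1)$. Therefore $\|\bs{G}_{W,22}\|_F^2 = O_d(\rs\gamma^2) = o_d(\rs)$.

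For the off-diagonal block, using $\bs{G}_W = \bs{W}\bs{W}^\top$ in row-block form together with cyclicity of trace and Cauchy--Schwarz on the Frobenius inner product yields $\|\bs{G}_{W,12}\|_F^2 \leq \|\bs{G}_{W,11}\|_F \|\bs{G}_{W,22}\|_F$. To bound $\|\bs{G}_{W,11}\|_F$, I combine three ingredients: the Loewner upper bound $\bs{G}_W(t\mathsf{T}_{\mathrm{eff}}) \preceq \tL/(\bs{I}_d - \exp(-\tau\Lae))$ (from the closed form, whose subtracted term is PSD), Weyl's inequality to control ordered eigenvalues, and the rank constraint $\mathrm{rank}(\bs{G}_{W,11}) \leq \rs\wedge r$ to truncate the Frobenius sum. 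For $j \leq \rs\wedge r$, the exponent $\tau\lambda_j \to \infty$ in both regimes, so $1-e^{-\tau\lambda_j} \to 1$, and the truncated sum gives $\|\bs{G}_{W,11}\|_F^2 \leq (\rs/\normL^2)\sum_{j=1}^{\rs\wedge r}\lambda_j^2(1+o_d(1)) \leq \rs(1+o_d(1))$. Combining with the previous step, $\|\bs{G}_{W,12}\|_F^2 \leq \rs\gamma(1+o_d(1)) = o_d(\rs)$.

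The most delicate step I anticipate is the rank truncation: naively comparing the Frobenius norms of the two sides of the Loewner bound over all $d$ eigenvalues picks up the bottom-block contribution $\gamma\bs{I}_{d-r}$, which inflates the sum by $(d-r)\gamma^2 \gg \rs$ and overwhelms the target. Exploiting $\mathrm{rank}(\bs{G}_W) \leq \rs$ to restrict Weyl's inequality to the top $\rs\wedge r$ eigenvalues, together with the events \ref{event:Zbound} and \ref{event:Gbound} of $\mathcal{G}_{\text{init}}$ to control the $\Z$-dependent quantities, is precisely what delivers the correct rate.
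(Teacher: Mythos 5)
Your route is genuinely different from the paper's (and in places cleaner): you derive the closed form $\bs{G}_{W,22}=\gamma R(\gamma\bs{I}+R)^{-1}$ for the bottom-right block and use the elementary bound $\norm{\bs{G}_{W,12}}_F^2\le\norm{\bs{G}_{W,11}}_F\norm{\bs{G}_{W,22}}_F$ from $\bs{G}_W=\W\W^\top$, together with the Loewner bound $\bs{G}_W\preceq\tL/(\bs{I}_d-\exp(-\tau\Lae))$, Weyl monotonicity, and rank truncation, in place of the paper's Propositions~\ref{prop:offdiagonal} and \ref{prop:offdiagonalfrob}. Each step you wrote is correct, and the computation $\gamma\asymp1/\log(d/\rs)=o_d(1)$ checks out. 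But the entire derivation is carried out at the block cut $r$: you use $\Z_{1:r}$, $\Z_{\text{rest}}$, $\bs{I}_{d-r}$, and the fact that the bottom block of $\Lae$ vanishes. Per \eqref{def:defrisk} the cut is at $\ru$, and by \eqref{def:effwidthcont} that equals $r$ only when $\alpha\in[0,0.5)$; for $\alpha>0.5$ one has $\ru=\ceil{\log^{2.5} d}\ll r$. In that regime the bottom block of $\Lae$ still carries $\lambda_{\ru+1},\dots,\lambda_r\neq0$, the push-through simplification to $\gamma R(\gamma\bs{I}+R)^{-1}$ does not go through, and you would in any case only be bounding a proper sub-block of the $(d-\ru)\times(d-\ru)$ matrix $\bs{G}_{W,22}$ the proposition is actually about. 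So the heavy-tailed case is proved, the light-tailed case is not.

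The fix streamlines your own argument: drop the exact closed form and apply to $\bs{G}_{W,22}$ the same Loewner/rank mechanism you already use for $\bs{G}_{W,11}$. From $\bs{G}_W\preceq\tL/(\bs{I}_d-\exp(-\tau\Lae))$, the bottom block inequality gives a diagonal upper bound whose entries are $\tfrac{\sqrt{\rs}}{\normL}\tfrac{\lambda_j}{1-e^{-\tau\lambda_j}}$ for $\ru<j\le r$ and $\gamma$ beyond. Since $\lambda\mapsto\lambda/(1-e^{-\tau\lambda})$ is increasing and $\tau\lambda_{\ru+1}\asymp\log^{1-2.5\alpha}d\to0$ for $\alpha>0.5$ (which is exactly why $\ru$ is set to $\ceil{\log^{2.5} d}$), the largest entry is $\gamma(1+o_d(1))$; combining with $\mathrm{rank}(\bs{G}_{W,22})\le\rs$ yields $\norm{\bs{G}_{W,22}}_F^2\le\rs\gamma^2(1+o_d(1))=o_d(\rs)$ in both regimes, after which your Cauchy--Schwarz step for $\bs{G}_{W,12}$ closes. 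As a bonus, the resulting argument is deterministic and does not actually invoke $\mathcal{G}_{\text{init}}$, whereas the paper's proof does, since its Woodbury representation and Proposition~\ref{prop:offdiagonalfrob} require high-probability control of $\lambda_{\max}(\Z_{1:\ru}\Z_{1:\ru}^\top)$ and related quantities.
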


\begin{proof}
We let
\eq{
\D_1 \coloneqq \frac{  \bs{\Lambda}_{\mathrm{e},11} \exp(- t  \bs{\Lambda}_{\mathrm{e},11} )}{\bs{I}_{\ru} -\exp(- t  \bs{\Lambda}_{\mathrm{e},11}) }, \quad   \D_2 \coloneqq     \frac{  \bs{\Lambda}_{\mathrm{e},22} \exp(- t  \bs{\Lambda}_{\mathrm{e},22} )}{\bs{I}_{d - \ru} -\exp(- t  \bs{\Lambda}_{\mathrm{e},22}) },   \quad  \Z \coloneqq \begin{bmatrix}
\Z_{1:\ru} \\
\Z_2
\end{bmatrix}.
}
and
\eq{
\begin{bmatrix}
\bs{S}_{11} & \bs{S}_{12} \\
\bs{S}_{12}^\top & \bs{S}_{22}
\end{bmatrix} \coloneqq    \Big(   \begin{bmatrix}
 \frac{\normL}{\sqrt{\rs}}  \Z_{1:\ru}  \Z_{1:\ru} ^\top  + \D_1 &  \frac{\normL}{\sqrt{\rs}} \Z_{1:\ru}  \Z_2^\top \\
 \frac{\normL}{\sqrt{\rs}}\Z_2 \Z_{1:\ru} ^\top &   \frac{\normL}{\sqrt{\rs}}  \Z_2 \Z_2^\top   + \D_2
\end{bmatrix}   \Big)^{-1} .
}
where
\eq{
\bs{S}_{11} & =  \left(   \D_1 +   \Z_{1:\ru}   \left( \frac{\sqrt{\rs}}{\normL} \bs{I}_{\rs} +    \Z_2^\top  \D_2^{-1}  \Z_2 \right)^{-1}  \Z_{1:\ru} ^\top  \right)^{-1}, \\
\bs{S}_{12} & =  - \left(   \D_1 +   \Z_{1:\ru}  \left( \frac{\sqrt{\rs}}{\normL} \bs{I}_{\rs} +    \Z_2^\top  \D_2^{-1}     \Z_2 \right)^{-1}   \Z_{1:\ru} ^\top  \right)^{-1} \Z_{1:\ru}  \Z_2^\top  \left(  \Z_2 \Z_2^\top   +  \D_2 \right)^{-1},  \\
\bs{S}_{22} & =   \left( \D_2    + \Z_2  \left( \frac{\sqrt{\rs}}{\normL} \bs{I}_{\rs} +    \Z_{1:\ru} ^\top   \D_1^{-1}    \Z_{1:\ru}  \right)^{-1}  \Z_2^\top  \right)^{-1}.
}
\paragraph{Off-diagonal terms:} By Proposition  \ref{prop:offdiagonal}
\eq{
 & \tilde{\bs{G}}_{W,12}(t)   \coloneqq  \frac{  \bs{\Lambda}_{\mathrm{e},11} \exp(-0.5 t  \bs{\Lambda}_{\mathrm{e},11} )}{\bs{I}_{\ru} -\exp(- t  \bs{\Lambda}_{\mathrm{e},11}) }  \bs{S}_{12}   \frac{  \bs{\Lambda}_{\mathrm{e},22} \exp(- 0.5 t  \bs{\Lambda}_{\mathrm{e},11} )}{\bs{I}_{d - \ru} -\exp(- t  \bs{\Lambda}_{\mathrm{e},22}) } \\
 &    =   \exp( 0.5 t  \bs{\Lambda}_{\mathrm{e},11} )  \Z_{1:\ru} \left(  \tfrac{\sqrt{\rs}}{\normL} \bs{I}_{\rs} +    \Z_2^\top  \D_2^{-1}  \Z_2 +  \Z_{1:\ru}^\top  \D_1^{-1}  \Z_{1:\ru}    \right)^{-1} \Z_2^\top   \exp( 0.5 t  \bs{\Lambda}_{\mathrm{e},22} )  . 
}
We observe that  $ \lambda_{\mathrm{max}}(\D_2 ) \preceq \frac{1}{t} $ and  $\frac{\sqrt{\rs}}{\normL} \asymp  \mathsf{\kappa}_{\mathrm{eff}}$ .  By using   Proposition \ref{prop:offdiagonalfrob} with $\mathcal{G}_{\text{init}}$ and $\tilde{t} \coloneqq  t  \mathsf{\kappa}_{\mathrm{eff}} \log \tfrac{d}{\rs}$, we write
\eq{
  &  \frac{1}{\rs} \norm*{ \bs{G}_{W,12}(t  \mathsf{T}_{\mathrm{eff}} ) }_F^2 = \frac{1}{\normL^2} \norm{   \tilde{\bs{G}}_{W,12}(   \tilde{t} )  }_F^2\\
 & \leq  \frac{1}{\normL^2}\frac{O(1)}{  ( \mathsf{\kappa}_{\mathrm{eff}} +  \tilde{t} )^2 }   \sum_{i = 1}^{\ru \wedge \rs}     \left(  \lambda_{\mathrm{max}}( \Z_{1:\ru} \Z_{1:\ru}^\top )  \exp(\tilde{t}   \lambda_i) \wedge   ( \mathsf{\kappa}_{\mathrm{eff}} +  \tilde{t} )   \frac{ \lambda_i     \exp( \tilde{t}  \lambda_i) }{    \exp( \tilde{t}  \lambda_i) -   1 } \right). ~~~~  \label{eq:offdiagonalbound}
}
For the heavy-tailed case ($\alpha \in [0,0.5)$),   
\eq{
\eqref{eq:offdiagonalbound}  \leq \frac{O_{\alpha, \varphi, \beta}(1)}{ r \log^2 d} \sum_{ i \leq r}   \mathbbm{1}\{\tilde{t} \lambda_i \leq \log \tfrac{d}{\rs} \} +   \frac{O_{\alpha, \varphi, \beta}(1)}{ r^{1 - \alpha} \log d}  \sum_{i \leq r} \lambda_i  \mathbbm{1} \{\tilde{t} \lambda_i > \log \tfrac{d}{\rs} \}  = o_d(1).
}
For the light-tailed case ($\alpha > 0.5$),    
\eq{
\eqref{eq:offdiagonalbound}  \leq \frac{O_{\alpha, \rs, \beta}(1)}{  \log^2 d} \sum_{ i \leq \rs}   \mathbbm{1}\{\tilde{t} \lambda_i \leq \log \tfrac{d}{\ru \rs} \} +   \frac{O_{\alpha, \varphi, \beta}(1)}{  \log d}  \sum_{i \leq \rs} \lambda_i  \mathbbm{1} \{\tilde{t} \lambda_i > \log \tfrac{d}{\rs \ru}  \}  = o_d(1).
}
\paragraph{Lower-right terms:}  By using Matrix-Inversion lemma, we have
\eq{
- \D_2 + \D_2 \S_{22} \D_2 = - \Z_2 \left(  \frac{\sqrt{\rs}}{\normL} \bs{I}_{\rs} +    \Z_{1:\ru}^\top   \D_1^{-1}     \Z_{1:\ru} +  \Z_2^\top   \D_2^{-1}     \Z_2    \right)^{-1} \Z_2^\top.
}
We observe that  $ \lambda_{\mathrm{max}}(\D_2 ) \preceq \frac{1}{t} $ and  $\frac{\sqrt{\rs}}{\normL} \asymp  \mathsf{\kappa}_{\mathrm{eff}}$.   By using $\tilde{t} \coloneqq  t  \mathsf{\kappa}_{\mathrm{eff}} \log \tfrac{d}{\rs}$, we have
\eq{
  \frac{1}{\sqrt{\rs}}  \bs{G}_{W,22}(t \mathsf{T}_{\mathrm{eff}} )  & = \frac{1}{\normL}   \Z_2 \left(  \frac{\sqrt{\rs}}{\normL} \bs{I}_{\rs} +    \Z_{1:\ru}^\top   \D_1^{-1}     \Z_{1:\ru} +  \Z_2^\top   \D_2^{-1}     \Z_2    \right)^{-1} \Z_2^\top  \\
 &  \preceq \frac{O(1)}{\normL}    \Z_2 \left(   \mathsf{\kappa}_{\mathrm{eff}}    \bs{I}_{\rs}  +  \tilde{t} \Z_2^\top    \Z_2    \right)^{-1} \Z_2^\top. \label{eq:lowerrightbound}
}
For the heavy-tailed case ($\alpha \in [0,0.5)$),   
\eq{
\norm{ \eqref{eq:lowerrightbound} }_F^2 \leq  \frac{O_{\alpha, \varphi, \beta}(1)  }{ r^{- 2 \alpha}\log^2 d} \frac{1}{t^2 r^{2 \alpha} \log^2 d} = o_d(1).
}
For the light-tailed case ($\alpha > 0.5$),    
\eq{
\norm{ \eqref{eq:lowerrightbound} }_F^2 \leq   O_{\alpha, \rs, \beta}(1) \frac{1}{t^2 \log^2 d}  = o_d(1).
}
\end{proof}

\begin{proposition}
\label{prop:riskhighdimensional}
For some $c >0$, let    $\G(0) \coloneqq  \frac{c}{t} \Z_{1:\ru} \Z_{1:\ru}^\top$ and
\eq{
t \in \begin{cases}
\big(0,  \tfrac{(\rus+1)^{\alpha}}{\kappa_{\text{eff}}} \big), &\hspace{-0.3em} \alpha \in [0,0.5) \\[0.5em]
\big(0,  \tfrac{(\rus+1)^{\alpha}}{\kappa_{\text{eff}}} \big) \setminus \{ j^{\alpha}: j \in \N \}, &\hspace{-0.3em} \alpha > 0.5,
\end{cases}\quad 
d \geq \begin{cases}
\Omega_{\varphi, \alpha}(1)& \alpha \in [0,0.5) \\
\Omega_{\rs,\alpha}(1), & \alpha> 0.5.
\end{cases} 
}
  We define
\eq{
\mathsf{Err}_{\ru}(t_{\mathrm{sc}}) &  \coloneqq \frac{  -  \La_{\mathrm{eff}}  \exp ( - t  \La_{\mathrm{eff}}   ) }{\bs{I}_{\ru} - \exp  (  - t   \La_{\mathrm{eff}}  )} \\[0.8em]
& +   \frac{    \La_{\mathrm{eff}}   \exp (  - 0.5 t  \La_{\mathrm{eff}}  ) }{\bs{I}_{\ru} - \exp ( - t   \La_{\mathrm{eff}} )}  \!   \left(   \frac{  \La_{\mathrm{eff}}\exp(- t  \La_{\mathrm{eff}} )}{\bs{I}_{\ru} -\exp(- t  \La_{\mathrm{eff}})}    +  \G(0)  \right)^{-1} \!\!   \frac{     \La_{\mathrm{eff}}  \exp  (  - 0.5 t  \La_{\mathrm{eff}} ) }{\bs{I}_{\ru} - \exp ( - t   \La_{\mathrm{eff}} )} .
}
$ \mathcal{G}_{\text{init}}$ implies that
\eq{
\frac{\norm{ \mathsf{Err}_{\ru}\! \left(t \mathsf{T}_{\mathrm{eff}}  \right)}_F^2}{\normL^2}  = 1 -   \frac{1}{\normL^2}  \sum_{j= 1}^{\rus} \lambda_j^2   \mathbbm{1} \{   t \mathsf{\kappa}_{\mathrm{eff}} \geq  \tfrac{1}{\lambda_j} \}     + o_d(1).  
}
\end{proposition}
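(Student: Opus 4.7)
The plan is to reduce the matrix error to a decoupled scalar analysis via an algebraic identity, then handle the off-diagonal coupling of $\G(0)$ as a controlled perturbation. Introduce the internal time $\tilde{t} := t\mathsf{\kappa}_{\mathrm{eff}}\log(d/\rs)$ and the diagonal matrices on $[\ru]$ defined by $\bs{D}_{jj} := \lambda_j e^{-\tilde{t}\lambda_j}/(1-e^{-\tilde{t}\lambda_j})$, $\bs{F}_{jj} := \lambda_j e^{-\tilde{t}\lambda_j/2}/(1-e^{-\tilde{t}\lambda_j})$, and $\bs{E}_{jj} := e^{\tilde{t}\lambda_j/2}$, so that $\bs{F} = \bs{D}\bs{E}$. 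The entrywise identity $\bs{F}^2 = \bs{D}(\bs{D}+\La_{\mathrm{eff}})$ yields $\bs{F}\bs{D}^{-1}\bs{F} = \bs{D}+\La_{\mathrm{eff}}$, which combined with the resolvent identity $(\bs{D}+\G(0))^{-1} = \bs{D}^{-1} - \bs{D}^{-1}\G(0)(\bs{D}+\G(0))^{-1}$ collapses the error into the compact form
\[
\mathsf{Err}_{\ru}(t\mathsf{T}_{\mathrm{eff}}) \;=\; \La_{\mathrm{eff}} - \bs{E}\bs{H}\bs{E}, \qquad \bs{H} \;:=\; \bs{D} - \bs{D}(\bs{D}+\G(0))^{-1}\bs{D},
\]
with $\bs{H}$ symmetric and satisfying $0 \preceq \bs{H} \preceq \bs{D}$ and $\bs{H} \preceq \G(0)$.

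Next, introduce the diagonal reference $\bs{G}_{\mathrm{d}} := \mathrm{diag}(G_{jj}(0))$ and the decoupled error $\mathsf{Err}_{\ru}^{\mathrm{d}} := \La_{\mathrm{eff}} - \bs{E}\bs{H}^{\mathrm{d}}\bs{E}$ with $\bs{H}^{\mathrm{d}} := \bs{D} - \bs{D}(\bs{D}+\bs{G}_{\mathrm{d}})^{-1}\bs{D}$. Since $\bs{G}_{\mathrm{d}}$ is diagonal, $\mathsf{Err}_{\ru}^{\mathrm{d}}$ decouples into scalar entries
\[
(\mathsf{Err}_{\ru}^{\mathrm{d}})_{jj} \;=\; \frac{\lambda_j y_j(\lambda_j - G_{jj}(0))}{\lambda_j y_j + (1-y_j)G_{jj}(0)}, \qquad y_j := e^{-\tilde{t}\lambda_j}.
\]
Partition $[\ru] = S_- \cup S_+$ with $S_- := \{j \leq \rus : t\mathsf{\kappa}_{\mathrm{eff}}\lambda_j \geq 1\}$, so that $y_j \leq \rs/d$ on $S_-$. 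Under $\mathcal{G}_{\mathrm{init}}$, Wishart diagonal concentration gives $G_{jj}(0) = c\rs/(dt)(1+o_d(1))$ uniformly. A case analysis comparing $\lambda_j y_j$ with $G_{jj}(0)$ then yields $(\mathsf{Err}_{\ru}^{\mathrm{d}})_{jj} = o_d(\lambda_j)$ for $j \in S_-$ and $(\mathsf{Err}_{\ru}^{\mathrm{d}})_{jj} = \lambda_j(1+o_d(1))$ for $j \in S_+$, provided $j$ lies outside the narrow transition band $\{j : \tilde{t}\lambda_j \in [\log(d/\rs)-O(1), \log(d/\rs)+O(1)]\}$. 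The band is empty in the light-tailed case for $t \notin \{j^\alpha : j \in \N\}$; in the heavy-tailed case its width in $j$ is $O(r/\log d)$, so its contribution to $\sum_j(\mathsf{Err}_{\ru}^{\mathrm{d}})_{jj}^2$ is $O(r^{1-2\alpha}/\log d) = o_d(\normL^2)$.

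To bridge $\mathsf{Err}_{\ru}$ and $\mathsf{Err}_{\ru}^{\mathrm{d}}$, the resolvent identity applied to the perturbation $\G(0) - \bs{G}_{\mathrm{d}}$ gives
\[
\mathsf{Err}_{\ru} - \mathsf{Err}_{\ru}^{\mathrm{d}} \;=\; \bs{F}(\bs{D}+\bs{G}_{\mathrm{d}})^{-1}\bigl(\bs{G}_{\mathrm{d}} - \G(0)\bigr)(\bs{D}+\G(0))^{-1}\bs{F}.
\]
Decompose in block form with respect to $S_- \cup S_+$ and apply Schur-complement formulas: on the $(--)$ block $\bs{D}_-$ is exponentially small, so $(\bs{D}+\G(0))^{-1}_{--}$ is dominated by $(\G_{--} - \G_{-+}(\bs{D}_+ + \G_{++})^{-1}\G_{+-})^{-1}$; on the $(++)$ block $\bs{D}_+$ dominates and the inverse is well-conditioned. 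Combined with the Wishart off-diagonal concentration on $\|\G(0) - \bs{G}_{\mathrm{d}}\|_{\mathrm{op}}$, this yields $\|\mathsf{Err}_{\ru} - \mathsf{Err}_{\ru}^{\mathrm{d}}\|_F = o_d(\normL)$. Aggregating gives $\|\mathsf{Err}_{\ru}(t\mathsf{T}_{\mathrm{eff}})\|_F^2 = \|\La_{\mathrm{eff}}\|_F^2 - \sum_{j \in S_-}\lambda_j^2 + o_d(\normL^2)$; replacing $\|\La_{\mathrm{eff}}\|_F^2$ by $\normL^2(1+o_d(1))$ (automatic in heavy-tailed where $\ru = r$; tail bound $\sum_{j > \log^{2.5}d}j^{-2\alpha} = o_d(1)$ in light-tailed for $\alpha > 1/2$) and dividing by $\normL^2$ gives the claim.

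The main obstacle will be the off-diagonal perturbation step: a naive operator-norm product bound is too crude, since $\bs{F}$ has entries up to $\lambda_j\sqrt{d/\rs}$ on $S_-$ while $\|(\bs{D}+\G(0))^{-1}\|_{\mathrm{op}}$ is of order $d/\rs$ there, giving no cancellation. The Schur-complement block analysis is essential to manifest that the exponential amplification of $\bs{F}$ on $S_-$ is exactly compensated by the $d/\rs$-scale of the $(--)$ inverse block (via $\G_{--}$, or via the $\G_{++}$-Schur complement in the light-tailed low-rank case), so that only the normalized Wishart off-diagonal fluctuations of relative order $1/\sqrt{\rs}$ propagate into the final Frobenius bound.
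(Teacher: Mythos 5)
Your algebraic reductions are all correct: the identity $\mathsf{Err}_{\ru} = \La_{\mathrm{eff}} - \bs{E}\bs{H}\bs{E}$ with $\bs{H} = \bs{D} - \bs{D}(\bs{D}+\G(0))^{-1}\bs{D}$, the scalar formula $(\mathsf{Err}_{\ru}^{\mathrm{d}})_{jj} = \lambda_j y_j(\lambda_j - G_{jj}(0))/(\lambda_j y_j + (1-y_j)G_{jj}(0))$, the $S_\pm$ case analysis, and the treatment of the narrow transition band all check out. But the central bridging step, $\|\mathsf{Err}_{\ru} - \mathsf{Err}_{\ru}^{\mathrm{d}}\|_F = o_d(\normL)$, is a genuine gap, and the Schur-complement sketch as written does not close it.

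The issue is that you are trying to show the error \emph{matrix} is close to its diagonal reference, which is strictly stronger than what is needed and is not obviously true. Note that $\mathrm{diag}(\G(0))$ concentrates tightly at $(c/t)\rs/d$ per entry, whereas the full $\G(0) = (c/t)\Z_{1:\ru}\Z_{1:\ru}^\top$ has a Marchenko--Pastur-spread spectrum: its eigenvalues range over an order-one relative interval and its rank is capped at $\rs$ (so $\G(0)$ can even be singular in the heavy-tailed case with $\varphi<1$). Hence $\|\G(0) - \bs{G}_d\|_{\mathrm{op}}$ is comparable to $\|\G(0)\|_{\mathrm{op}}$ itself, and at the boundary indices where $\bs{D}_{jj} \asymp G_{jj}(0) \asymp \rs/d$, the resolvents $(\bs{D}+\G(0))^{-1}$ and $(\bs{D}+\bs{G}_d)^{-1}$ differ multiplicatively by $\Theta(1)$, not $o(1)$. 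You also never bound the $(-,+)$ off-block of $\mathsf{Err}_{\ru}$, which has no counterpart in $\mathsf{Err}_{\ru}^{\mathrm{d}}$; the ``exact compensation'' you appeal to in the final paragraph is asserted, not derived, and it is exactly where the argument would have to do real work.

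The paper's proof sidesteps the entire difficulty via matrix monotonicity, which your proposal does not use. Since $\mathsf{Err}_{\ru} + \bs{D} = \bs{F}(\bs{D}+\G(0))^{-1}\bs{F}$ is PSD and Loewner-antimonotone in $\G(0)$, and since $\|\cdot\|_F^2$ is monotone on the PSD cone, one sandwiches $\G(0)$ between two-block-diagonal bounds --- constant-multiple-of-identity blocks obtained from Proposition~\ref{prop:matrixyounginequality} for one direction and from the Schur-complement bound of Proposition~\ref{prop:schurlowerbound} for the other --- whose bounding constants differ only by $\mathrm{polylog}(d)$ factors, and then shows via Corollary~\ref{cor:asymptoticaux} that both extremes converge to the same limiting Frobenius norm. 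This squeezes the \emph{scalar} $\|\mathsf{Err}_{\ru}\|_F^2$ without ever asking the matrix $\mathsf{Err}_{\ru}$ to be close to anything diagonal. If you want to salvage your route, you should target the weaker claim $\big| \|\mathsf{Err}_{\ru}\|_F^2 - \|\mathsf{Err}_{\ru}^{\mathrm{d}}\|_F^2 \big| = o_d(\normL^2)$ directly; the monotone-sandwich machinery in the paper is precisely the tool that makes that claim essentially automatic.
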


\begin{proof}
We let
\eq{
 \Z_{1:\ru}   \Z_{1:\ru}^\top \coloneqq \begin{bmatrix}
  \Z_{1:\rus}   \Z_{1:\rus}^\top  &  \Z_1   \Z_2^\top \\
   \Z_2   \Z_1^\top &  \Z_2   \Z_2^\top
 \end{bmatrix}, \qquad 
 \La_{\mathrm{eff}}  \coloneqq    \begin{bmatrix}
  \La_{\mathrm{eff},11}   &  0 \\
 0 &    \La_{\mathrm{eff},22} 
 \end{bmatrix},
}
where  $  \La_{\mathrm{eff},11}   \in \R^{ \rus  \times \rus }$,     $\Z_{2}  \in  \R^{ (\ru -  \rus ) \times \rs}$.  Let
 \eq{
  \bs{\Gamma}(t_{\mathrm{sc}})   \coloneqq  \left(   \frac{  \La_{\mathrm{eff}}\exp(- t  \La_{\mathrm{eff}} )}{\bs{I}_{\ru} -\exp(- t  \La_{\mathrm{eff}})}    +    \frac{c}{t} \Z_{1:\ru} \Z_{1:\ru}^\top \right)^{-1}  ~~ \text{and} ~~   \mathsf{Err}(t_{\mathrm{sc}}, \bs{\Gamma} )  \coloneqq \mathsf{Err}_{\ru}(t_{\mathrm{sc}} ). 
 }
By using Proposition \ref{prop:matrixyounginequality} and the events \ref{event:htZub} and \ref{event:ltZub},
\eq{
  \bs{\Gamma}(t_{\mathrm{sc}})   \succeq     \underline{ \bs{\Gamma} }(t_{\mathrm{sc}})  \!\coloneqq
 \!\! \begin{bmatrix}
 \displaystyle \frac{10c}{ t } \frac{\rs}{d}   \bs{I}_{\rus}  +   \frac{   \La_{\mathrm{eff},11}  \exp(- t   \La_{\mathrm{eff},11}  )}{\bs{I}_{\rus} -\exp(- t   \La_{\mathrm{eff},11}  )}   &    \hspace{-8.5em} 0\\[2.5em]
 \hspace{-7em}  0  &  \displaystyle    \hspace{-8.5em} \frac{2 c}{t}  \frac{\rs \log^{2.5} d}{d} \bs{I}_{\ru - \rus} +   \frac{   \La_{\mathrm{eff},22}  \exp(- t   \La_{\mathrm{eff},22}  )}{\bs{I}_{\ru -\rus} -\exp(- t   \La_{\mathrm{eff},22}  )} 
 \end{bmatrix}^{-1} \!\!\!\!\!.
}
For the upper bound,  by using $\varepsilon = \frac{1}{c \log^{3} d}$ in Proposition \ref{prop:schurlowerbound},   and the events \ref{event:htZlb},  \ref{event:ltZlb} and \ref{event:htZub},  \ref{event:ltZub},   we have  for  $t  \leq  (\rus+1)^{\alpha} \log \frac{d}{\rs}$,
\eq{
  \bs{\Gamma}(t_{\mathrm{sc}})  \preceq     \overline{ \bs{\Gamma} }(t_{\mathrm{sc}}) \! \coloneqq \!\!   \begin{bmatrix}
 \displaystyle  \frac{\nicefrac{0.2}{ t } }{   \log^4 d}  \frac{\rs}{d}  \bs{I}_{\rus} \! +  \!   \frac{   \La_{\mathrm{eff},11}  \exp(- t   \La_{\mathrm{eff},11}  )}{\bs{I}_{\rus} -\exp(- t   \La_{\mathrm{eff},11}  )}   & \hspace{-8.5em}  0\\[2.5em]
 \hspace{-8.5em}  0  &   \displaystyle    \hspace{-8.5em}  \frac{\nicefrac{- 1.1}{ t } }{    \log^{\nicefrac{1}{2}} d} \frac{\rs}{d} \bs{I}_{\ru -\rus}  \! + \!     \frac{   \La_{\mathrm{eff},22}  \exp(- t   \La_{\mathrm{eff},22}  )}{\bs{I}_{\ru -\rus} -\exp(- t   \La_{\mathrm{eff},22}  )} 
 \end{bmatrix}^{-1} \!\! \!\! \!\! .
}
Therefore,  for $t  < \frac{(\rus+1)^{\alpha}}{\kappa_{\text{eff}}}$, by Corollary \ref{cor:asymptoticaux}, we have  
\eq{
 \frac{\norm{ \mathsf{Err}(t \mathsf{T}_{\mathrm{eff}},    \bs{\Gamma}  )}_F^2}{\normL^2} \geq \frac{\norm{ \mathsf{Err}(t \mathsf{T}_{\mathrm{eff}},   \underline{ \bs{\Gamma} } )}_F^2}{\normL^2} & =  \frac{1}{\normL^2}  \sum_{j= 1}^{\ru} \lambda_j^2   \mathbbm{1} \{   \tfrac{1 }{\lambda_j} > t \mathsf{\kappa}_{\mathrm{eff}} \}    + o_d(1) \\
 & =  1 -   \frac{1}{\normL^2}  \sum_{j= 1}^{\rus} \lambda_j^2   \mathbbm{1} \{   t \mathsf{\kappa}_{\mathrm{eff}} \geq  \tfrac{1 }{\lambda_j} \}     + o_d(1).
}
On the other hand,   by Corollary \ref{cor:asymptoticaux}, we have  
\eq{
 \frac{\norm{ \mathsf{Err}(t \mathsf{T}_{\mathrm{eff}},    \bs{\Gamma}  )}_F^2}{\normL^2} & \leq \frac{\norm{ \mathsf{Err}(t \mathsf{T}_{\mathrm{eff}},   \overline{ \bs{\Gamma} } )}_F^2}{\normL^2} \\
 & =  \frac{1}{\normL^2}  \sum_{j= 1}^{\rus} \lambda_j^2   \mathbbm{1} \{   \tfrac{1 }{\lambda_j}  > t \mathsf{\kappa}_{\mathrm{eff}} \}      + \frac{1}{\normL^2}  \sum_{j=\rus+ 1}^{\ru} \lambda_j^2     + o_d(1)  \\
 & =  1 -   \frac{1}{\normL^2}  \sum_{j= 1}^{\rus} \lambda_j^2   \mathbbm{1} \{   t \mathsf{\kappa}_{\mathrm{eff}} \geq  \tfrac{1 }{\lambda_j} \}     + o_d(1).
}
\end{proof}

\begin{proposition}
\label{prop:riskmain}
$\mathcal{G}_{\text{init}}$ implies that
\eq{
\frac{\norm{ \mathsf{Err}  \left(t \mathsf{T}_{\mathrm{eff}}  \right)}_F^2}{\normL^2}  = 1 -   \frac{1}{\normL^2}  \sum_{j= 1}^{\rus} \lambda_j^2   \mathbbm{1} \{  t \mathsf{\kappa}_{\mathrm{eff}} \geq  \tfrac{1 }{\lambda_j}    \}   + o_d(1), 
}
for
\eq{
t \in \begin{cases}
(0,  \infty ), & \alpha \in [0,0.5) \\[0.5em]
(0,   \infty ) \setminus \{ j^{\alpha}: j \in \N \}, & \alpha > 0.5,
\end{cases}\quad 
d \geq \begin{cases}
\Omega_{\varphi, \alpha}(1)& \alpha \in [0,0.5) \\
\Omega_{\rs,\alpha}(1), & \alpha> 0.5.
\end{cases} 
}
\end{proposition}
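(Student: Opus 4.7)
The plan is to reduce Proposition~\ref{prop:riskmain} to Proposition~\ref{prop:riskhighdimensional}, closing two gaps: (a) replacing the truncated error $\mathsf{Err}_{\ru}$ with a rank-deficient model initialization by the full error $\mathsf{Err}$ starting from $\G_W(0) = \Z \Z^\top$, and (b) extending the restricted time window of Proposition~\ref{prop:riskhighdimensional} to all $t > 0$ away from transition points. Gap~(a) will be handled by an off-diagonal/lower-right reduction to the $\ru \times \ru$ top-left block $\mathsf{Err}_{11}$ via Proposition~\ref{prop:riskoffdiagonal}, followed by a Schur-complement computation that recasts $\mathsf{Err}_{11}$ in exactly the form of $\mathsf{Err}_{\ru}$ with an effective initialization $\widetilde{\G}_0$ sandwiched between scalar multiples of $\Z_{1:\ru}\Z_{1:\ru}^\top / t$. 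Gap~(b) will be closed by combining monotonicity of the gradient-flow risk with the coincidence (up to $o_d(1)$) of the target limit $\cR_\infty \coloneqq (1/\normL^2)\sum_{j>\rus}\lambda_j^2$ with the universal floor $\cR_{\mathrm{opt}}$.

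For the block reduction, decomposing $\norm{\mathsf{Err}}_F^2$ along the $(\ru, d-\ru)$ partition, Proposition~\ref{prop:riskoffdiagonal} yields $\norm{\mathsf{Err}_{12}}_F^2 = (\normL^2/\rs)\norm{\G_{W,12}}_F^2 = o_d(\normL^2)$; for $\mathsf{Err}_{22} = \bs{\Lambda}_{\mathrm{e},22} - (\normL/\sqrt{\rs})\G_{W,22}$, the first summand either vanishes (heavy-tailed, $\ru = r$) or satisfies $\norm{\bs{\Lambda}_{\mathrm{e},22}}_F^2 = \sum_{j > \ru}\lambda_j^2 \lesssim \ru^{1-2\alpha} = o(1)$ (light-tailed, $\ru = \lceil \log^{2.5} d \rceil$, $\alpha > 1/2$), so $\norm{\mathsf{Err}_{22}}_F^2 = o(\normL^2)$. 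For the Schur step, setting $c \coloneqq \normL/\sqrt{\rs}$ and applying block inversion to \eqref{eq:riskdifference} together with the push-through identity $c\,A^\top(c\,AA^\top + D)^{-1}A = I_\rs - (I_\rs + c\,A^\top D^{-1}A)^{-1}$ gives
\eq{
\mathsf{Err}_{11}(t_{\mathrm{sc}}) = -D_1 + D_1^{1/2} e^{0.5\, t\, \La_{\mathrm{eff}}} (D_1 + \widetilde{\G}_0)^{-1} e^{0.5\, t\, \La_{\mathrm{eff}}} D_1^{1/2}, \quad \widetilde{\G}_0 \coloneqq c\,\Z_{1:\ru} \bigl(I_\rs + c\,\Z_2^\top D_2^{-1} \Z_2\bigr)^{-1} \Z_{1:\ru}^\top,
}
with $D_1, D_2$ the diagonal blocks of $\Lae e^{-t\Lae}/(I - e^{-t\Lae})$. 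On $\mathcal{G}_{\mathrm{init}}$, I would show $\Z_2^\top D_2^{-1} \Z_2 = t\, I_\rs (1 + o_d(1))$ in operator norm: in heavy-tailed this follows from $D_2 = (1/t)I_{d-r}$ combined with event~\ref{event:Zbound}, and in light-tailed the $(r-\ru)$-block carrying $\bs{\Lambda}_{22}$ contributes perturbatively since its eigenvalues are $\lesssim \ru^{-\alpha} = o(1)$. Combined with $c\, t \to \infty$ in the rescaled time, this yields the Loewner sandwich $(c^-/t)\Z_{1:\ru}\Z_{1:\ru}^\top \preceq \widetilde{\G}_0 \preceq (c^+/t)\Z_{1:\ru}\Z_{1:\ru}^\top$ with $c^\pm = 1 \pm o_d(1)$. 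Plugging $c^\pm$ in for the generic positive constant in Proposition~\ref{prop:riskhighdimensional} and re-using its Loewner-sandwich engine through $\underline{\bs{\Gamma}}, \overline{\bs{\Gamma}}$ then gives, for $t \in (0, (\rus+1)^\alpha/\mathsf{\kappa}_{\mathrm{eff}})$ away from transition points, $\norm{\mathsf{Err}_{11}(t\mathsf{T}_{\mathrm{eff}})}_F^2 / \normL^2 = 1 - (1/\normL^2)\sum_{j=1}^{\rus}\lambda_j^2\, \mathbbm{1}\{t\,\mathsf{\kappa}_{\mathrm{eff}} \geq 1/\lambda_j\} + o_d(1)$.

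Finally, for the time extension to $t > (\rus+1)^\alpha/\mathsf{\kappa}_{\mathrm{eff}}$, every indicator equals $1$ so the claimed limit collapses to $\cR_\infty$. Since the gradient-flow risk $s \mapsto \cR(s)$ is non-increasing, evaluating the formula established above at $t' = (\rus+1)^\alpha/\mathsf{\kappa}_{\mathrm{eff}} - \delta$ with $\delta \downarrow 0$ gives $\cR(t \mathsf{T}_{\mathrm{eff}}) \leq \cR_\infty + o_d(1)$. The matching lower bound follows from $\cR \geq \cR_{\mathrm{opt}} = (1/\normL^2)\sum_{j > \rs \wedge r}\lambda_j^2$ together with $\cR_\infty - \cR_{\mathrm{opt}} = (1/\normL^2)\sum_{\rus < j \leq \rs \wedge r}\lambda_j^2 = o_d(1)$: in the heavy-tailed regime, $\rs - \rus \leq \rs \log^{-1/8} d$ forces this gap to be $\lesssim \log^{-1/8} d$, while in the light-tailed regime $\rus = \rs$ makes it trivially zero. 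I expect the main technical hurdle to be the operator-norm sandwich on $\widetilde{\G}_0$ in the light-tailed regime, where $D_2$ is not a scalar multiple of the identity but instead mixes the non-trivial $\bs{\Lambda}_{22}$ spectrum with the $1/t$ convention on the remaining $d - r$ coordinates; securing $c^\pm = 1 \pm o_d(1)$ is precisely what motivates the choice $\ru = \lceil \log^{2.5} d \rceil$ and the quantitative singular-value control encoded in events~\ref{event:ltZlb}--\ref{event:ltZub} of $\mathcal{G}_{\mathrm{init}}$.
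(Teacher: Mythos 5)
Your proposal is correct and follows essentially the same route as the paper's proof: block-decompose $\mathsf{Err}$, discard the off-diagonal and lower-right blocks via Proposition~\ref{prop:riskoffdiagonal}, apply the Schur complement / matrix-inversion lemma to reduce the top-left block to the $\mathsf{Err}_{\ru}$ form of Proposition~\ref{prop:riskhighdimensional} with an effective initialization proportional to $\Z_{1:\ru}\Z_{1:\ru}^\top/t$ (using $\D_2 \approx t^{-1}\bs{I}$ and event~\ref{event:Zbound}), and then extend the limited time window by pairing the monotone decrease of $\norm{\mathsf{Err}(\cdot)}_F^2$ under~\eqref{eq:gf} with the global floor $\cR_{\mathrm{opt}}$. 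One minor algebra slip: after the push-through, the prefactors flanking $(\D_1+\widetilde{\G}_0)^{-1}$ should each be $\D_1 e^{0.5t\La_{\mathrm{eff}}}$, not $\D_1^{1/2}e^{0.5t\La_{\mathrm{eff}}}$; this does not affect the argument's structure or the asymptotics obtained from Proposition~\ref{prop:riskhighdimensional}.
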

 
\begin{proof}
We recall that
\eq{
\D_1 \coloneqq \frac{  \bs{\Lambda}_{\mathrm{e},11} \exp(- t  \bs{\Lambda}_{\mathrm{e},11} )}{\bs{I}_{\ru} -\exp(- t  \bs{\Lambda}_{\mathrm{e},11}) }, \quad   \D_2 \coloneqq     \frac{  \bs{\Lambda}_{\mathrm{e},22} \exp(- t  \bs{\Lambda}_{\mathrm{e},22} )}{\bs{I}_{d - \ru} -\exp(- t  \bs{\Lambda}_{\mathrm{e},22}) },   \quad  \Z \coloneqq \begin{bmatrix}
\Z_{1:\ru} \\
\Z_2
\end{bmatrix},
}
and
\eq{
 \mathsf{Err}(t_{\mathrm{sc}})   & =     \frac{  -  \Lae \exp  ( - t \Lae ) }{\bs{I}_d - \exp  (  - t  \Lae )}  \\
 & +  \frac{    \Lae  \exp (  - 0.5 t \Lae  ) }{\bs{I}_d - \exp  ( - t \Lae )}     \bigg(  \frac{\normL}{\sqrt{\rs}} \Z \Z^\top +   \frac{    \Lae \exp ( - t \Lae  ) }{\bs{I}_d - \exp  (  - t  \Lae )}    \bigg)^{-1}   \frac{  \Lae  \exp  (   - 0.5 t \Lae  ) }{\bs{I}_d - \exp  (   -   t \Lae  )} \\[0.3em]
 &= \begin{bmatrix}
 \mathsf{Err}_{\ru}(t_{\mathrm{sc}})  &  \frac{- 1}{\sqrt{\rs}}   \bs{G}_{W,12}(t_{\mathrm{sc}})   \\
  \frac{- 1}{\sqrt{\rs}}     \bs{G}_{W,12}(t_{\mathrm{sc}})   &  \frac{- 1}{\sqrt{\rs}}  \bs{G}_{W,22}(t_{\mathrm{sc}})
 \end{bmatrix}.
}
Note that  by Proposition \ref{prop:auxbound},  in the time scale we consider we have  $\frac{1 - o_d(1)}{t \mathsf{T}_{\mathrm{eff}}} \leq \lambda_{\mathrm{min}}(\D_2) \leq \lambda_{\mathrm{max}}(\D_2) \leq \frac{1}{t \mathsf{T}_{\mathrm{eff}}}$. The by using \ref{event:Zbound},
\eq{
 & \mathsf{Err}_{\ru}(t_{\mathrm{sc}})   =     \frac{  -  \La_{\mathrm{eff}}  \exp  ( - t  \La_{\mathrm{eff}}  ) }{\bs{I}_{\ru} - \exp  (  - t  \La_{\mathrm{eff}} )}  \\
 & +  \frac{  \La_{\mathrm{eff}}   \exp (  - 0.5 t \La_{\mathrm{eff}}   ) }{\bs{I}_{\ru} - \exp  ( - t \La_{\mathrm{eff}}  )}     \bigg(  \frac{\Theta(1)}{\frac{\normL}{\sqrt{\rs}} +  t} \Z_{1:\ru} \Z_{1:\ru}^\top +   \frac{    \Lae \exp ( - t \La_{\mathrm{eff}}   ) }{\bs{I}_{\ru} - \exp  (  - t  \La_{\mathrm{eff}}  )}    \bigg)^{-1} \!\!   \frac{  \Lae  \exp  (   - 0.5 t \La_{\mathrm{eff}}   ) }{\bs{I}_{\ru} - \exp  (   -   t  \La_{\mathrm{eff}}   )}.
}
By Propositions \ref{prop:riskoffdiagonal} and \ref{prop:riskhighdimensional}, we have
\eq{
\frac{\norm{ \mathsf{Err}  \left(t \mathsf{T}_{\mathrm{eff}}  \right)}_F^2}{\normL^2}  = 1 -   \frac{1}{\normL^2}  \sum_{j= 1}^{(\rs \wedge r)} \lambda_j^2   \mathbbm{1} \{  t \mathsf{\kappa}_{\mathrm{eff}} \geq  \tfrac{1 }{\lambda_j}    \}   + o_d(1), 
}
for
\eq{
t \in \begin{cases}
\big(0,  \tfrac{(\rus+1)^{\alpha}}{\kappa_{\text{eff}}} \big), & \alpha \in [0,0.5) \\[0.5em]
\big(0,  \tfrac{(\rus+1)^{\alpha}}{\kappa_{\text{eff}}} \big) \setminus \{ j^{\alpha}: j \in \N \}, & \alpha > 0.5. \label{eq:errcharacterization0}
\end{cases} 
}
To extend the limit for $t > \tfrac{(\rus+1)^{\alpha}}{\kappa_{\text{eff}}},$ we observe that
\begin{itemize}[leftmargin=*]
\item   $\norm{ \mathsf{Err}  \left(t \right)}_F^2$  non increasing since it corresponds to the objective under \eqref{eq:gf}.
\item   The global optimum of  \eqref{eq:gf}  and the previous item with  \eqref{eq:errcharacterization0} guarantees that for  $t > \tfrac{(\rus+1)^{\alpha}}{\kappa_{\text{eff}}}$,
\eq{
1  \! -   \!  \frac{1}{\normL^2}  \!\!  \sum_{j= 1}^{(\rs \wedge r)} \!\!  \lambda_j^2   \mathbbm{1} \{  t \mathsf{\kappa}_{\mathrm{eff}} \geq  \tfrac{1 }{\lambda_j}    \}   \! \leq \! \frac{\norm{ \mathsf{Err}  \left(t \mathsf{T}_{\mathrm{eff}}  \right)}_F^2}{\normL^2}  \! \leq  \! 1   \! -   \!  \frac{1}{\normL^2} \!\! \sum_{j= 1}^{(\rs \wedge r)} \!\!  \lambda_j^2   \mathbbm{1} \{  t \mathsf{\kappa}_{\mathrm{eff}} \geq  \tfrac{1 }{\lambda_j}    \} + o_d(1).
}
\end{itemize}
Therefore,  the statement extends to  $t > \tfrac{(\rus+1)^{\alpha}}{\kappa_{\text{eff}}}$.
\end{proof}

\subsection{Proof of Theorem \ref{thm:sgdresult}}
We redefine the time-scale and effective-width as:
\eq{
t_{\mathrm{sc}} = t \sqrt{\rs} \normL, \quad \mathsf{\kappa}_{\mathrm{eff}}  = \begin{cases}
r^{\alpha}/\eta, & \alpha \in [0,0.5) \\
1/\eta, & \alpha > 0.5
\end{cases}, \qquad   
\mathsf{T}_{\mathrm{eff}} =  \mathsf{\kappa}_{\mathrm{eff}}  \sqrt{\rs} \normL \log \nicefrac{d}{\rs}.  \label{def:discretetimescale}
}
and 
\eq{
& \ru = \begin{cases}
r, & \alpha \in [0,0.5) \\
\ceil{\log^{2.5} d}, & \alpha > 0.5
\end{cases}, \qquad 
\rus\coloneqq \begin{cases}
\floor{ \rs (1 - \log^{\nicefrac{-1}{8}} d) \wedge r} , & \alpha \in [0,0.5) \\
\rs, & \alpha > 0.5.
\end{cases} \label{def:effwidthdiscrete}
}
We consider the learning rate and fine-tuning sample size given as
\eq{
\eta \asymp \frac{1}{d} \begin{cases}
\frac{1}{r^{\alpha} \log^{20} (1 + d/\rs)}, & \alpha \in [0,0.5) \\
\frac{1}{\ru^{4  \alpha + 3}  \log^{18} d}, & \alpha > 0.5
\end{cases} ~~ \text{and} ~~ N_{\mathrm{Ft}} \asymp \rs^2 \log^5 d.
}
We define the effective learning rate $\lr$ and the hitting time $\mathcal{T}_{\mathrm{hit}}$ as follows:
\eq{
\lr \coloneqq \frac{\eta/2}{\normL \sqrt{\rs}}, \quad  \mathcal{T}_{\mathrm{hit}} \coloneqq \Bigg \{ t \geq 0 ~ \Big  \vert ~      1 - \frac{ \norm{\La^{\frac{1}{2}} \G_{t  } \La^{\frac{1}{2}}  }_F^2}{\normL^2}   \leq \frac{1}{\normL^2} \sum_{j = (\rs \wedge 1) + 1}^r \lambda_j^2 + \frac{10}{\log^{\frac{1}{8}} d}  \Bigg \}.
}
We note that bounding $\mathcal{T}_{\mathrm{hit}} $ suffices to derive sample complexity since by Proposition  \ref{prop:finetuningappendix}, we have
\eq{
R(\bs{W}_t^{\mathrm{final}}) \leq 1 - \frac{ \norm{\La^{\frac{1}{2}} \G_t \La^{\frac{1}{2}}}_F^2}{\normL^2} + \frac{O(1)}{\log d}.
}
The main statement of this part is as follows:

\begin{proposition}
\label{prop:sgdtheoremproof}
The intersection of the following events hold with probability $1 - o_d(1/d^2) - \Omega(1/\rs^2)$:
\begin{enumerate}[leftmargin=*]
\item We have  
\eq{
\mathcal{T}_{\mathrm{hit}}  \leq  \begin{cases}
  \frac{1}{2 \lr }   \left(    \rs  \big(1 -  \log^{\nicefrac{-1}{8}} d  \big) \wedge r  \right)^{\alpha} \log \left( \frac{ 20 d \log^{\frac{3}{4}} (1 + d/\rs)}{\rs}   \right),  & \alpha \in [0,0.5) \\
 \frac{1}{2 \lr }  \rs^{\alpha} \log \left( 20 \frac{d \log^{\nicefrac{3}{4}} d}{\rs}   \right),   &  \alpha > 0.5.
\end{cases}
}
\item  For $t > 0$, 
\begin{itemize}
 \item $\mathcal{A}(t \mathsf{T}_{\mathrm{eff}}, \bs{\theta}_j  ) = \mathbbm{1} \{ \eta t \mathsf{\kappa}_{\mathrm{eff}} \geq \tfrac{1}{\lambda_j} \} +o_d(1)$ for $t \not = \lim_{d \to \infty}\tfrac{1}{\eta \mathsf{\kappa}_{\mathrm{eff}}  \lambda_j}$ and  $j \leq \rus$.
 \item  $\normL^2 - \norm{\La^{\frac{1}{2}} \G_{t \mathsf{T}_{\mathrm{eff}} } \La^{\frac{1}{2}}  }_F^2 = 1 - \sum_{j = 1}^{\rus} \lambda_j^2 \mathbbm{1} \{ \eta t \mathsf{\kappa}_{\mathrm{eff}} \geq \tfrac{1}{\lambda_j} \} +o_d(\normL^2)$. 
\end{itemize}
\end{enumerate}
\end{proposition}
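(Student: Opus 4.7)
}

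The plan is to transfer the continuous-time monotonicity argument behind Theorem~\ref{thm:gfresult} to the discrete, stochastic dynamics of Algorithm~\ref{alg:sgd}, using the bounding framework outlined in \eqref{eq:boundingdiscrete}. Concretely, I would first expand one step of online SGD on the Stiefel manifold in terms of the overlap matrix $\bs{G}_t = \bs{\Theta}^\top \bs{W}_t \bs{W}_t^\top \bs{\Theta}$. Using Proposition~\ref{prop:sgdrotation} to identify the column space of $\bs{W}_t$ with the directional component of the continuous flow, a direct computation (of the type leading to \eqref{eq:gndiscrete}) gives a decomposition
\begin{equation*}
\bs{G}_t = \bs{G}(\bs{G}_{t-1}, \lr) + \bs{E}_t^{\text{det}} + \bs{E}_t^{\text{noise}},
\end{equation*}
where $\bs{G}(\cdot,\cdot)$ is the monotone map \eqref{eq:gdnricc}, $\bs{E}_t^{\text{det}}$ collects $O(\eta^2)$ second-order terms from the polar retraction, and $\bs{E}_t^{\text{noise}}$ is a martingale increment from online sampling.

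The next step is to establish \emph{operator-norm} concentration $\|\sum_{s\le t}\bs{E}_s^{\text{noise}}\|_2$ and $\|\bs{E}_t^{\text{det}}\|_2$. Unlike prior SGD analyses in finite-$r$ settings, Frobenius-norm control incurs a $\sqrt{r_s}$ loss and spoils the scaling law; I would instead use a matrix Freedman/Bernstein inequality applied to the sub-exponential increments $\bs{x}_t \bs{x}_t^\top - \bs{I}_d$ projected through $\bs{W}_{t-1}$, combined with a uniform bound on the predictable variance via the events $\mathcal{G}_{\text{init}}$. Under our choice of $\eta$, this should yield $\|\bs{E}_t^{\text{det}}\| + \|\sum_{s\le t}\bs{E}_s^{\text{noise}}\| = o(\lr)$ with probability $1 - o(1/d^2)$ over the whole trajectory, which in Loewner order implies \eqref{eq:boundingdiscrete} with $\varepsilon = o_d(1)$.

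Given \eqref{eq:boundingdiscrete}, I would then construct two diagonal reference sequences $\bs{G}_t^{\pm}$ obeying $\bs{G}_{t+1}^{\pm} = \bs{G}(\bs{G}_t^{\pm}, (1\pm\varepsilon)\lr)$ with diagonal initializations $\bs{G}_0^{+} \succeq \bs{G}_0 \succeq \bs{G}_0^{-}$ matching the extremal values allowed by $\mathcal{G}_{\text{init}}$ (using \ref{event:Gbound}, \ref{event:htZlb}--\ref{event:htZub} or \ref{event:ltZlb}--\ref{event:ltZub}). Iterating Proposition~\ref{prop:monotoneupdatetext} sandwiches $\bs{G}_t^{-} \preceq \bs{G}_t \preceq \bs{G}_t^{+}$ uniformly in $t$. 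Since the bounding dynamics are diagonal, Corollary~\ref{cor:discretericcatidyn} yields closed-form expressions in terms of the scalar Riccati factors $\bs{A}_{t,ij}$ of Lemma~\ref{lem:discretericcati2}; the ratio bounds \ref{res:ratiobound} then show that the discrete factors $(1\pm\eta\lambda)^t/[(1+\eta\lambda)^t-(1-\eta\lambda)^t]$ approximate $\exp(-t\eta\lambda)/(1-\exp(-t\eta\lambda))$ up to $o_d(1)$ factors under our step size. This reduces the analysis to the same type of asymptotic computation carried out in the continuous case (Propositions~\ref{prop:alignmenthighdimensional}, \ref{prop:riskhighdimensional}, \ref{prop:riskmain}), now with rescaled time $\eta t$ in place of $t$, directly yielding the alignment and risk characterizations in item~2.

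For the hitting time bound in item~1, I would invert the risk characterization: the threshold $\frac{1}{\|\bs{\Lambda}\|_F^2}\sum_{j>\rs\wedge 1}\lambda_j^2 + 10/\log^{1/8}d$ is reached once enough top directions have aligned, and the time for the $j$th reference scalar $G^{-}_{t,jj}$ to cross $1 - o(1)$ is controlled by $\frac{1}{2\lr\lambda_j}\log(d/r_s)$ up to multiplicative $1+o(1)$ factors from the sandwich and from \ref{event:htZlb}/\ref{event:ltZlb}. Taking $j = r_s(1-\log^{-1/8}d) \wedge r$ in the heavy-tailed case, respectively $j = r_s$ in the light-tailed case, and substituting the expression for $\lr$ produces the stated bound. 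The main obstacle is the operator-norm martingale control in the second step: the naive Frobenius bound is too weak by a factor polynomial in $r_s$, and one must exploit the low effective rank of the gradient (only $r_s$ directions contribute) together with sub-exponential tail bounds for the quartic-in-$\bs{x}$ noise, which is what drives the polylogarithmic exponent $C_\alpha$ in the step-size choice.
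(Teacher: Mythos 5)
Your proposal captures the skeleton of the paper's argument (monotone bounding sequences, decoupled diagonal reference dynamics, closed-form discrete Riccati, reduction to the continuous asymptotics), but there is a concrete gap in the noise-control step that the paper handles materially differently.

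You claim that under the given step size, operator-norm matrix-Bernstein yields $\lVert\bs{E}_t^{\text{det}}\rVert + \lVert\sum_{s\le t}\bs{E}_s^{\text{noise}}\rVert = o(\lr)$ uniformly over the trajectory, so the cumulative stochastic error can be absorbed into $\varepsilon = o_d(1)$ in~\eqref{eq:boundingdiscrete}. This is too strong to hold. Over the required horizon $T \gtrsim d\, r^{1+\alpha}\,\mathrm{polylog}(d)$ the martingale sum accumulates to $\Theta(\eta\sqrt{T})$ in operator norm, which is polynomially larger than $\lr = \frac{\eta/2}{\sqrt{\rs}\normL}$; for $\alpha\in[0,0.5)$ one has $\lr\asymp\eta/\sqrt{\rs\, r^{1-2\alpha}}$ while $\eta\sqrt{T}\gtrsim\eta\sqrt{d\,r^{1+\alpha}}$. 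No amount of tuning $C_\alpha$ fixes this. The paper instead controls the noise \emph{relative to a time-dependent reference sequence} $\T_t$ (defined in Section~\ref{sec:defs}) that starts at $\Theta(\c_d\rs/d)\bs{I}$ and grows at the same exponential rate as the signal, proving $-\c_d\,\rk^{-\alpha/2}\,\T_t \preceq \unu{t}\preceq \c_d\,\rk^{-\alpha/2}\,\T_t$ via a matrix Freedman inequality applied in the $\T_t^{-1/2}(\cdot)\T_t^{-1/2}$ metric (Proposition~\ref{prop:friedmanineq}, Corollary~\ref{cor:boundsqv}, Proposition~\ref{prop:Tbadboundaux}). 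Crucially, in~\eqref{eq:boundingdiscrete} the noise appears as its own additive term and is carried along in the bounding sequences, not folded into $\varepsilon$; the bootstrap is run through the stopping time $\mathcal{T}_{\text{bad}}$ which simultaneously certifies the noise-to-reference ratio and the boundedness of the reference sequences (Proposition~\ref{prop:boundstoppedprocess}). Similarly, the deterministic $O(\eta^2)$ corrections are not bounded and summed; they are absorbed into the modified target matrices $\Llf,\Luf$ that drive the bounding Riccati iterations.

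A secondary omission: the sandwich argument only covers $t$ up to a horizon of order $\lr^{-1}\rks^\alpha\log(d\log^{1.5}d/\rs)$. The paper extends the alignment and risk characterization to all $t>0$ by a separate stability-near-the-minimum argument (Proposition~\ref{prop:stable}), which you do not mention. Without it, item~2 in the proposition is established only on a strict sub-interval.
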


\begin{proof}
By using Lemma \ref{lem:goodevents} and Corollary \ref{cor:Tbadbound}, we have   with probability $1 - o_d(1/d^2) - \Omega(1/\rs^2)$:
\eq{
\mathcal{T}_{\text{bad}} \geq   \begin{cases}
\frac{1}{2 \lr}  \left( \rs   \big(1 -  \log^{\frac{- 1}{2}} \!\! d   \big) \wedge r \right)^{\alpha}   \log \left( \frac{d\log^{1.5} d}{\rs} \right),  & \alpha \in [0,0.5) \\[0.5em]
\frac{1}{2 \lr}  \rs^{\alpha}   \log \left( \frac{d\log^{1.5} d}{\rs} \right),    &  \alpha  > 0.5,
\end{cases}
}
where  $\mathcal{T}_{\text{bad}}$  is defined in \eqref{def:Tbad}.  Given the lower bound, by Proposition \ref{prop:boundstoppedprocess},  and the third item of Proposition \ref{prop:lbsystemanalysis}, we have the first item. 

\smallskip
For the second item,  by Proposition \ref{prop:boundstoppedprocess}, and  Proposition \ref{prop:lbsystemanalysis} (for the lower bound) and    Proposition \ref{prop:goodeventres}  (for the upper bound),   we have for $\rus \times \rus$ dimensional top left submatrices $\G_{t,11}$ and $\bs{\Lambda}_{11}$,
\eq{
\frac{1 }{  \frac{1.2}{C_{\text{lb}}}  \frac{d}{\rs}  \exp \left( - 2 t \lr  \bs{\Lambda}_{11}  \right)   + 1 }  - o_d(1)   \preceq \G_{t,11}  \preceq \left(  \frac{C_{\text{ub}} \rs}{d}  \exp \left(  2  \lr t  \bs{\Lambda}_{11}  \right) \wedge 1 \right) +    o_d(1), \label{eq:alignmentdiscretedyn}
}
and  
\eq{
& \norm{\La}_F^2 -  \norm{ \La^{\frac{1}{2}} \G_t \La^{\frac{1}{2}} }_F^2   \geq  \sum_{i = 1}^{\ru}  \lambda_i^2 \Big(  1 - \tfrac{ C_{\text{ub}} \rs}{d}   \exp \left(  2  \lr t \lambda_i  \right) \Big)_{+}  -  o_d(\norm{\La}_F^2) \label{eq:risklowerbounddisc} \\
& \norm{\La}_F^2 -  \norm{ \La^{\frac{1}{2}} \G_t \La^{\frac{1}{2}} }_F^2  \leq  \!\!\!\!\! \sum_{i  =  (\rus     \wedge r) + 1 }^r  \!\!\!\!\!  \lambda_i^2 + \sum_{i = 1}^{\rus} \lambda_i^2  \Bigg(   1 -  \frac{1  }{  \frac{1.2}{C_{\text{lb}}}  \frac{d}{\rs}  \exp \left( - 2 t \lr  \lambda_i  \right)   + 1 }    \Bigg)^2 + o_d(\norm{\La}_F^2),  \label{eq:riskupperbounddisc}
  }
for
\eq{
t \leq 
\begin{cases}
  \frac{1}{2 \lr }   \left(    \rs  \big(1 -  \log^{\nicefrac{-1}{8}} d  \big) \wedge r  \right)^{\alpha} \log \left( \frac{d \log^{1.5} d}{\rs}   \right),  & \alpha \in [0,0.5) \\
 \frac{1}{2 \lr }  \rs^{\alpha} \log \left( \frac{d \log^{1.5} d}{\rs}   \right),   &  \alpha > 0.5.
\end{cases} \label{eq:tlimhere}
}
where
\eq{
C_{\text{ub}} =  \begin{cases}
2.5  \left(1 + \frac{1}{\sqrt{\varphi}} \right)^2, & \alpha \in[0,0.5) \\
15, & \alpha > 0.5,
\end{cases} \quad  
C_{\text{lb}} = \frac{1}{15} \begin{cases}
 \log^{\nicefrac{- 1}{2}} d , & \alpha \in [0,0.5) \\
\rs^{-6}, & \alpha > 0.5.
\end{cases}
}
The high-dimensional limits of the alignment and risk up to the time horizon in \eqref{eq:tlimhere} follow from \eqref{eq:alignmentdiscretedyn} (for the alignment), and from \eqref{eq:riskupperbounddisc} (for the risk) by Proposition \ref{prop:asymptoticlimit}. Proposition \ref{prop:stable} then allows us to extend these results beyond the time limit in \eqref{eq:tlimhere}, yielding the full statement.  
\end{proof}

\subsection{Proof of Corollary \ref{cor:asympriskcont} and Corollary \ref{cor:asympriskdis}}

\label{app:scaling-proof}

Finally, we derive the scaling of prediction risk under power-law second-layer coefficients.  Since Corollary \ref{cor:asympriskdis} is a rescaled version of Corollary \ref{cor:asympriskcont}, we will only consider the latter. 

\begin{proof}[Proof of   Corollary \ref{cor:asympriskcont}]
We will prove heavy   and light-tailed cases  separately. 
\paragraph{Heavy-tailed case ($\alpha \in [0,0.5)$):}   
We define $C \coloneqq \Big( \frac{  (1 - \beta) \sqrt{\varphi}}{\sqrt{1 - 2 \alpha}} \Big)^{\frac{1}{\alpha}}$. We first fix a $(C \varphi)^{\alpha} > t > 0$. By Proposition~\ref{prop:riskmain}, for any  $d \geq \Omega_{\varphi, \alpha}(1)$, we have with probability at least $1 - o(1/d^2)$
\eq{
\mathcal{R}(   t r  \log d    ) = \underbrace{ 1 - \frac{1}{\normL^2} \sum_{i = 1}^{\rs} \lambda_j^2 \mathbbm{1}\{ \tfrac{t r^{\alpha}}{C^{\alpha}} \geq \tfrac{ 1 \pm o_d(1)   }{\lambda_j} \} }_{\coloneqq \mathcal{R}_d( (Ct)^{\frac{1}{\alpha}})}  + o_d(1)
}
where we define $\mathcal{R}_d( (Ct)^{\frac{1}{\alpha}})$ to isolate the main term and make the dependence on the ambient dimension explicit.  By using $\lambda_j = j^{-\alpha}$ in the indicator function, we can rewrite
\eq{
\mathcal{R}_d(t ) =  1 -  \frac{1}{\normL^2} \sum_{i = 1}^{\rs} \lambda_j^2 \mathbbm{1}\{  (1 \pm o_d(1)) t  \geq \tfrac{  j     }{r } \}
}
We define a sequence of measures supported on $\{j/r: j \in [r] \}$, where $\mu_d\{\frac{j}{r}\} \propto j^{-2\alpha}$ for $j = 1, \cdots, r$. We observe the following:
\begin{itemize}
    \item $\mu_d$ converges weakly weakly to a limiting probability measure $\mu$   supported on  $[0,1]$, with cumulative distribution function
    \eq{
    \mu\{ [0, c) \} = \begin{cases}
        c^{1 - 2 \alpha}, & c < 1 \\
        1 & x  \geq 1
    \end{cases}
    }
    \item Moreover, the risk can be expressed as
    \eq{
    \mathcal{R}_d(t ) = 1 - (1 \pm o_d(1) ) \E_{X \sim \mu_d}[\mathbbm{1} \{ (1 \pm o_d(1) ) t \wedge  \varphi  \geq X \}   ]
    }
\end{itemize}
By the Portmanteau theorem \cite{Durrett1993ProbabilityTA}, it follows that for any fixed  $t \in (0,\varphi)$,
\eq{
\mathcal{R}_d(t ) \to 1 -t^{1 - 2\alpha}.
}
almost surely as $d \to \infty$.  The almost sure convergence follows from the Borel-Cantelli lemma \cite{Durrett1993ProbabilityTA} applied to the failure probabilities. 

To extend this result to $t \geq   \varphi$, we observe that by \eqref{eq:gf},   $\mathcal{R}_d(t )$  is non-increasing and  $\inf_{t \geq 0} \mathcal{R}_d(t )  \geq (1 - \varphi^{1 - 2\alpha})_{+} - o_d(1)$. Hence,    for all $t > 0$, we obtain
\eq{
\mathcal{R}_d(t) =   (1 -t^{1 - 2\alpha})_+ \vee  (1 -\varphi^{1 - 2\alpha})_+
}
The desired result for a fixed 
$t > 0$ follows by a change of variable. Finally, since the risk curves are continuous in $t$, the almost sure convergence extends to all $t > 0$   pointwise.

\paragraph{Light-tailed case ($\alpha > 0.5$):}  For this part, we consider the probability space conditioned on $\mathcal{G}_{init}$ which holds with probability at least $1 - o(1/\rs^2)$.  We define 
\eq{
\mathcal{Z} \coloneqq \sum_{j = 1}^{\infty} j^{- 2 \alpha}, \quad 
 C \coloneqq  (\rs \mathcal{Z})^{\frac{1}{2\alpha}}. 
 }
 We first fix a $t \in \big( 0, (C \rs)^{\alpha} \big)  \setminus \{ j^{\alpha} : j \in \N\}$. 
By Proposition~\ref{prop:riskmain}, for any  $d \geq \Omega_{\rs, \alpha}(1)$, we have ,
\eq{
\mathcal{R}(   t   \log d    ) = \underbrace{ 1 - \frac{1}{\normL^2} \sum_{i = 1}^{\rs} \lambda_j^2 \mathbbm{1}\{ \tfrac{t}{C^{\alpha}} \geq \tfrac{ 1 \pm o_d(1)   }{\lambda_j} \} }_{\coloneqq \mathcal{R}_d( (Ct)^{\frac{1}{\alpha}})}  + o_d(1).
}
By using $\lambda_j = j^{-\alpha}$ in the indicator function, we   rewrite
\eq{
\mathcal{R}_d(t ) =  1 -  \frac{1}{\normL^2} \sum_{i = 1}^{\rs} \lambda_j^2 \mathbbm{1}\{  (1 \pm o_d(1)) t  \geq    j      \}
}
We define a sequence of measures supported on $\N$, where $\mu_d\{j \} \propto j^{-2\alpha}$ for $j = 1, \cdots, \rs$. We observe the following:
\begin{itemize}
    \item $\mu_d$ converges weakly weakly to a limiting probability measure $\mu$   supported on  $\N$, such that $\mu\{ j\} = \frac{j^{-2\alpha}}{\mathcal{Z}}$.
    \item Moreover, the risk can be expressed as
    \eq{
    \mathcal{R}_d(t ) =    \E_{X \sim \mu_d}[\mathbbm{1} \{ (1 \pm o_d(1) t \vee \rs < X \}    ]
    }
\end{itemize}
Since $t \not \in \N$, we have
\eq{
\R_d(t) \to  \mu([t \vee \rs, \infty)).
}
By observing that  $\mu([t, \infty)) \in \Theta(t^{1 - 2\alpha})$, the result follows for a fixed $t \in \big( 0, (C \rs)^{\alpha} \big)  \setminus \{ j^{\alpha} : j \in \N\}$. Since the limit is piecewise continuous and non increasing, it is sufficient to take a union over $t \in \{ 0.5, 1.5, \cdots, \rs + 0.5 \}$ to extend the result for all $t > 0$.
\end{proof}

\section{Details of the Fine-tuning Step}
\label{sec:finetuning}

In this part, we describe how to efficiently solve the empirical risk minimization problem used in the fine-tuning step of Algorithm~\ref{alg:sgd}. Recall that this step aims to find a rotation matrix $\bs{\Omega} \in \mathbb{R}^{\rs \times \rs}$ that aligns the learned features with the teacher directions by minimizing the empirical loss over $N_{\mathrm{Ft}}$ fresh samples:
\begin{equation}
\bs{\Omega}_* = \argmin_{\bs{\Omega} \in \mathbb{R}^{\rs \times \rs}} \sum_{j = 1}^{N_{\mathrm{Ft}}} \mathcal{L}\big(\bs{W}_t \bs{\Omega}; (\bs{x}_{t+j}, y_{t+j}) \big), \label{eq:finetuningobj}
\end{equation}
where each sample loss is given by
\begin{equation}
\mathcal{L}\big(\bs{W}_t \bs{\Omega}; (\bs{x}_{t+j}, y_{t+j}) \big) = \frac{1}{16} \Big( y_{t+j} - \frac{1}{\sqrt{\rs}} \tr\big(\bs{\Omega} \bs{\Omega}^\top \bs{W}_t^\top (\bs{x}_{t+j} \bs{x}_{t+j}^\top - \bs{I}_d) \bs{W}_t \big) \Big)^2.
\end{equation}

Let us define  
$\bs{A}_j \coloneqq \bs{W}_t^\top (\bs{x}_{t+j} \bs{x}_{t+j}^\top - \bs{I}_d) \bs{W}_t.$
We observe that the loss becomes quadratic in the symmetric matrix positive semidefinite matrix $\bs{S} \coloneqq \bs{\Omega} \bs{\Omega}^\top$. Then, the fine-tuning objective reduces to a standard least squares problem over the cone of symmetric matrix positive semidefinite matrices:
\begin{equation}
\bs{S}_* \coloneqq  \argmin_{\substack{\bs{S} \in \mathbb{R}^{\rs \times \rs} \\ \bs{S} = \bs{S}^\top \!\!,  \bs{S} \succeq 0}} \underbrace{ \frac{1}{2 N_{\mathrm{Ft}}} \sum_{j = 1}^{N_{\mathrm{Ft}}} \Big( \sqrt{\rs}y_{t+j} - \tr(\bs{S} \bs{A}_j) \Big)^2 }_{\coloneqq \mathrm{Ft}(\bs{S})}. \label{eq:least-squares-S}
\end{equation}
For the following, we also define the global minimum of  the least square objective in \eqref{eq:least-squares-S} as:
\begin{equation}
\bs{S}_{\mathrm{glob}} \coloneqq \argmin_{\substack{\bs{S} \in \mathbb{R}^{\rs \times \rs} \\ \bs{S} = \bs{S}^\top}} \mathrm{Ft}(\bs{S}). \label{eq:least-squares-Sglob}
\end{equation}

\subsection{Characterizing the Minimum}
Since the fine-tuning objective reduces to a least squares regression problem over symmetric matrices, we can write   
\eq{
\mathrm{Ft}(\bs{S}) = \mathrm{Ft}(\bs{S}_{\mathrm{glob}}) + \tr \big( (\bs{S} - \bs{S}_{\mathrm{glob}}) \mathsf{L}(\bs{S} - \bs{S}_{\mathrm{glob}}) \big)
}
where $\mathsf{L}$ is defined as the linear operator acting on symmetric matrices via
\begin{equation}
\mathsf{L}(\bs{S}) \coloneqq \frac{1}{2 N_{\mathrm{Ft}}} \sum_{j=1}^{N_{\mathrm{Ft}}} \tr(\bs{S} \bs{A}_j) \bs{A}_j,
\end{equation}
which corresponds to the empirical second moment operator associated with the covariates  $\bs{A}_j$. We note that the operator  $\mathsf{L}$ is self-adjoint and positive semi-definite on the space of symmetric matrices, and we can write the characterization in \eqref{eq:least-squares-S} equivalently
\begin{equation}
\bs{S}_* \coloneqq  \argmin_{\substack{\bs{S} \in \mathbb{R}^{\rs \times \rs} \\ \bs{S} = \bs{S}^\top \!\!,  \bs{S} \succeq 0}}  \tr \big( (\bs{S} - \bs{S}_{\mathrm{glob}}) \mathsf{L}(\bs{S} - \bs{S}_{\mathrm{glob}}) \big). \label{eq:least-squares-S2}
\end{equation}
We define the projection on the cone of symmetric positive semi-definite matrices as:
\begin{equation}
\mathsf{\Pi}(\widetilde{\bs{S}})   \coloneqq  \argmin_{\substack{\bs{S} \in \mathbb{R}^{\rs \times \rs} \\ \bs{S} = \bs{S}^\top \!\!,  \bs{S} \succeq 0}} \norm{ \bs{S} - \widetilde{\bs{S}} }_F^2  . \label{eq:least-squares-S2}
\end{equation}
In the following, we will show that the operator $\mathsf{L}$ is close to the identity, and thus,  $\bs{S}_*$ is close to  $\mathsf{\Pi} \circ \mathsf{L} (\bs{S}_{\mathrm{glob}})$. Before proceeding, we make the following observations: 
\begin{itemize}[leftmargin=*]
\item  We observe that by the first-order optimality condition applied in \eqref{eq:least-squares-Sglob}, we have
\eq{
\mathsf{L}( \bs{S}_{\mathrm{glob}} ) =  \frac{\sqrt{\rs}}{2 N_{\mathrm{Ft}}} \sum_{j = 1}^{ N_{\mathrm{Ft}}} y_{t+j} \bs{A}_j. \label{eq:firstorderoptimality}
}
\item By the generalized Pythagorean theorem \cite[Lemma 3.1]{Bubeck2014ConvexOA}, we have
\eq{
\norm{\bs{S}_* - \mathsf{\Pi} \circ \mathsf{L} (\bs{S}_{\mathrm{glob}})}_F^2 
&\leq \norm{ \bs{S}_* - \mathsf{L} (\bs{S}_{\mathrm{glob}})}_F^2 - \norm{ \mathsf{\Pi} \circ \mathsf{L} (\bs{S}_{\mathrm{glob}}) -   \mathsf{L} (\bs{S}_{\mathrm{glob}})}_F^2 \\
& = \mathrm{Ft}(\bs{S}_*) - \mathrm{Ft}\big(\mathsf{\Pi} \circ \mathsf{L} (\bs{S}_{\mathrm{glob}}) \big)  \\
& -  \tr \big( (\bs{S}_* - \mathsf{\Pi} \circ \mathsf{L} (\bs{S}_{\mathrm{glob}}))(\mathsf{L} - \mathsf{Id})(\bs{S}_* + \mathsf{\Pi} \circ \mathsf{L} (\bs{S}_{\mathrm{glob}})) \big), \label{eq:generalpythagorean}
}
where we use $\mathsf{Id}$ to denote  the identity map on symmetric matrices.
\end{itemize}

\subsection{Computing the Minimum}
We define the approximate solution for \eqref{eq:finetuningobj} as:
\eq{
\hat{\bs{\Omega}} \coloneqq  \Big( \mathsf{\Pi} \circ \mathsf{L}(\bs{S}_{\mathrm{glob}}) \Big)^{\frac{1}{2}}, \label{eq:approximation}
}
where  $\bs{S} \to \bs{S}^{1/2}$ denotes the square root operator on symmetric positive semidefinite matrices.   Note that the approximation in \eqref{eq:approximation} can be computed by taking the spectral decomposition of $\mathsf{L}(\bs{S}_{\mathrm{glob}})$ given in \eqref{eq:firstorderoptimality}, which requires $\tilde{O}(d \rs^3)$ including the computation of $\mathsf{L}(\bs{S}_{\mathrm{glob}})$. This is negligible compared to the feature learning phase, whose complexity scales as $O(T d \rs)$.  The following statement shows that $\hat{\bs{\Omega}}$ is sufficiently close to the fine-tuning solution $\bs{\Omega}^*$:

\begin{proposition}
\label{prop:finetuningappendix}
Suppose $N_{\mathrm{Ft}} \geq \rs^2 \log^5 d$. Then, with probability at least $1 - 2 d^{-3}$,  the final risk incurred by $\bs{W}_t \hat{\bs{\Omega}}$ is close to that of the optimal fine-tuning solution:
\begin{equation}
R(\bs{W}_t \hat{\bs{\Omega}}) \leq R(\bs{W}_t \bs{\Omega}_*) + \frac{1}{\log d} \leq 1 - \frac{\norm{\La^{\frac{1}{2}} \G_t \La^{\frac{1}{2}}}_F^2}{\normL^2}  + \frac{O(1)}{\log d}.
\end{equation}
\end{proposition}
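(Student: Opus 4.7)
The plan is to leverage the risk decomposition in Proposition \ref{prop:riskdecomptext} to reduce the statement to a pair of Frobenius-norm bounds against the population target
\[
\bs{P} \;\coloneqq\; \tfrac{\sqrt{\rs}}{\normL}\,\bs{W}_t^\top \Th \La \Th^\top \bs{W}_t.
\]
Indeed, Proposition \ref{prop:riskdecomptext} yields $R(\bs{W}_t\bs{\Omega}) = \tfrac{1}{\rs}\|\bs{\Omega}\bs{\Omega}^\top - \bs{P}\|_F^2 + \bigl(1 - \|\La^{1/2}\bs{G}_t\La^{1/2}\|_F^2/\normL^2\bigr)$, and since $\bs{P}$ is itself PSD the population minimum over feasible $\bs{\Omega}$ equals the second constant term. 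It therefore suffices to show that, conditional on the feature-learning trajectory $\bs{W}_t$ (which has orthonormal columns by Algorithm \ref{alg:sgd}), both $\tfrac{1}{\rs}\|\hat{\bs{S}} - \bs{P}\|_F^2$ and $\tfrac{1}{\rs}\|\bs{S}_* - \bs{P}\|_F^2$ are $O(1/\log d)$ with probability $\geq 1-2d^{-3}$, where $\hat{\bs{S}} = \hat{\bs{\Omega}}\hat{\bs{\Omega}}^\top$ and $\bs{S}_* = \bs{\Omega}_*\bs{\Omega}_*^\top$.

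I would first establish the population moment identities $\E[\mathsf{L}] = \mathsf{Id}$ and $\E[\bs{b}] = \bs{P}$. Setting $\bs{z}_j \coloneqq \bs{W}_t^\top\bs{x}_{t+j}\sim\cN(0,\bs{I}_{\rs})$, so that $\bs{A}_j = \bs{z}_j\bs{z}_j^\top - \bs{I}_{\rs}$, Isserlis's formula gives $\E[\bs{z}^\top\bs{S}\bs{z}\cdot\bs{z}\bs{z}^\top] = \tr(\bs{S})\bs{I}_{\rs} + 2\bs{S}$, from which $\tfrac{1}{2}\E[\tr(\bs{S}\bs{A})\bs{A}] = \bs{S}$ and hence $\E[\mathsf{L}]=\mathsf{Id}$. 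A parallel Wick computation, after decomposing $\bs{x} = \bs{W}_t\bs{z} + \bs{u}$ into its in-plane and orthogonal independent Gaussian components, yields $\E[\bs{b}] = \bs{P}$.

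The main technical step will be to concentrate $\mathsf{L}$ around $\mathsf{Id}$ in operator norm and $\bs{b}$ around $\bs{P}$ in Frobenius norm. After a mild truncation on the event $\{\max_j \|\bs{z}_j\|_2^2 \leq \rs + C\log d\}$ (which holds with probability $\geq 1 - d^{-c}$ by $\chi^2$-tail bounds), each summand $\tfrac{1}{2}\bs{A}_j\otimes\bs{A}_j$ is a rank-one self-adjoint operator on the $\binom{\rs+1}{2}$-dimensional space of symmetric matrices with operator norm $\tfrac{1}{2}\|\bs{A}_j\|_F^2 = O(\rs^2)$ and second-moment operator of the same order (controlled via an eighth-moment Gaussian bound). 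Matrix Bernstein then delivers, with probability $\geq 1 - 2d^{-3}$,
\[
\|\mathsf{L} - \mathsf{Id}\|_{\mathrm{op}} + \tfrac{1}{\sqrt{\rs}}\|\bs{b} - \bs{P}\|_F \;\leq\; \widetilde{O}\!\Bigl(\sqrt{\tfrac{\rs^2}{N_{\mathrm{Ft}}}}\Bigr) \;=\; \widetilde{O}(\log^{-5/2} d)
\]
under the hypothesis $N_{\mathrm{Ft}} \asymp \rs^2\log^5 d$. This is the step I expect to absorb most of the work: the quartic Gaussian chaos $\bs{A}_j\otimes\bs{A}_j$ has only sub-exponential tails, so the truncation level and Bernstein variance proxy must be tuned carefully to obtain the $d^{-3}$ tail and the $O(1/\log d)$ residual simultaneously.

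Given these two concentration bounds, the conclusion is short. Since $\hat{\bs{S}} = \mathsf{\Pi}\circ\mathsf{L}(\bs{S}_{\mathrm{glob}}) = \mathsf{\Pi}(\bs{b})$ by \eqref{eq:firstorderoptimality} and the PSD projection is $1$-Lipschitz in $\|\cdot\|_F$ while $\bs{P}\succeq 0$, one has $\|\hat{\bs{S}} - \bs{P}\|_F \leq \|\bs{b} - \bs{P}\|_F = O(\sqrt{\rs}/\log^{5/2} d)$. For $\bs{S}_*$, the generalized Pythagorean inequality \eqref{eq:generalpythagorean}, combined with the optimality $\mathrm{Ft}(\bs{S}_*)\leq\mathrm{Ft}(\hat{\bs{S}})$, gives $\|\bs{S}_* - \hat{\bs{S}}\|_F \leq \|\mathsf{L}-\mathsf{Id}\|_{\mathrm{op}}\,\|\bs{S}_* + \hat{\bs{S}}\|_F$; pairing this with the a-priori bounds $\|\bs{S}_*\|_F, \|\hat{\bs{S}}\|_F \leq O(\sqrt{\rs})$ (obtained on the concentration event by comparing $\mathrm{Ft}(\bs{S}_*)$ with $\mathrm{Ft}(\bs{P})$ together with $\mathsf{L}\succeq\tfrac{1}{2}\mathsf{Id}$) yields $\|\bs{S}_* - \bs{P}\|_F \leq O(\sqrt{\rs}/\log^2 d)$ by the triangle inequality. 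Substituting both estimates back into Proposition \ref{prop:riskdecomptext} proves both inequalities of the statement, since $\tfrac{1}{\rs}\cdot O(\rs/\log^4 d) \leq 1/\log d$ for all $d$ large enough.
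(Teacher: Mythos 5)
Your proposal is correct and follows essentially the same route as the paper's proof: it applies the risk decomposition of Proposition \ref{prop:riskdecomptext}, uses the $1$-Lipschitz PSD projection (together with $\bs{P}\succeq 0$) to bound $\|\mathsf{\Pi}\circ\mathsf{L}(\bs{S}_{\mathrm{glob}})-\bs{P}\|_F$ by $\|\mathsf{L}(\bs{S}_{\mathrm{glob}})-\bs{P}\|_F$, and uses the generalized Pythagorean inequality \eqref{eq:generalpythagorean} plus an a-priori bound $\|\bs{S}_*\|_F=O(\sqrt{\rs})$ to bound $\|\bs{S}_*-\bs{P}\|_F$. The only differences are in the underlying tools: you propose truncated matrix Bernstein on the space of symmetric matrices for the two concentration events, whereas the paper invokes $\varepsilon$-net arguments packaged as Propositions \ref{prop:matrixsensingcondition} and \ref{prop:steinestimator}; and you derive the a-priori bound from $\mathrm{Ft}(\bs{S}_*)\leq\mathrm{Ft}(\bs{P})$ with $\mathsf{L}\succeq\tfrac{1}{2}\mathsf{Id}$ rather than by rearranging the Pythagorean identity, both of which yield the required rates and the $1-2d^{-3}$ tail.
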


\subsubsection{Proof of Proposition \ref{prop:finetuningappendix}}

We define the operator norm of $\mathsf{L}$ as 
\eq{
\norm{\mathsf{L}}_{2} = \sup_{\substack{\bs{S} \in \mathbb{R}^{\rs \times \rs} \\ \bs{S} = \bs{S}^\top}} \norm{\mathsf{L}(\bs{S} )}_F.
}
We consider the intersection of the following events:
\begin{itemize}
    \item $\norm{\mathsf{L} - \mathsf{Id}}_{2} \leq \frac{6}{\sqrt{ \log d }}$
    \item $\Big \lVert  \frac{1}{2 N_{\mathrm{Ft}}} \sum_{j = 1}^{N_{\mathrm{Ft}}} y_{t+j} \bs{A}_j - \frac{1}{\normL} \W_t^\top \Th \La \Th^\top \W_t  \Big \rVert_F^2 \leq \frac{1}{\log d}.$
\end{itemize}
We note that for $d \geq \Omega(1)$ the first item holds with probability $1 - d^{-3}$ by
Proposition \ref{prop:matrixsensingcondition}, where we choose $C = 5$ and $u = \log d$, and the second item holds follows  with probability $1 - d^{-3}$ by Proposition \ref{prop:steinestimator} where we choose $C = 16$. Given the events, we have
\eq{
R(\bs{W}_t \hat{\bs{\Omega}})
& =   \frac{1}{\rs} \norm*{  \mathsf{\Pi} \circ \mathsf{L} (\bs{S}_{\mathrm{glob}}) - \frac{\sqrt{\rs}}{\normL} \W_t^\top \Th \bs{\Lambda} \Th^\top \W_t }_F^2 + \Big( 1 -  \frac{ \norm{\bs{\Lambda}^{\frac{1}{2}}  \G_t \bs{\Lambda}^{\frac{1}{2}}}_F^2}{\normL^2}  \Big)  \\
& \labelrel\leq{ineqq0:ft} \frac{1}{\rs} \norm*{   \mathsf{L} (\bs{S}_{\mathrm{glob}}) - \frac{\sqrt{\rs}}{\normL} \W_t^\top \Th \bs{\Lambda} \Th^\top \W_t }_F^2 + \Big( 1 -  \frac{ \norm{\bs{\Lambda}^{\frac{1}{2}}  \G_t \bs{\Lambda}^{\frac{1}{2}}}_F^2}{\normL^2}  \Big) \\
& \labelrel\leq{ineqq1:ft}  \frac{1}{\log d} +  R(\bs{W}_t \bs{\Omega}_*). 
}
where we use the convexity of the cone of  symmetric positive semi-definite matrices in \eqref{ineqq0:ft} and the second event above in \eqref{ineqq1:ft}. By using \eqref{eq:generalpythagorean}, we have
\eq{
 \norm*{   \bs{S}_* -  \mathsf{\Pi} \circ \mathsf{L} (\bs{S}_{\mathrm{glob}})  }_F 
  \leq   \norm{\mathsf{L} - \mathsf{Id} }_2   \norm*{   \bs{S}_* +  \mathsf{\Pi} \circ \mathsf{L} (\bs{S}_{\mathrm{glob}})  }_F.
  }
 Therefore,  
 \eq{
 \norm*{   \bs{S}_* }_F   \leq  \frac{1 + \norm{\mathsf{L} - \mathsf{Id} }_2}{ 1 - \norm{\mathsf{L} - \mathsf{Id} }_2} \norm{ \mathsf{\Pi} \circ \mathsf{L} (\bs{S}_{\mathrm{glob}})  }_F, 
 }
and thus,
\eq{
 \norm*{   \bs{S}_* -  \mathsf{\Pi} \circ \mathsf{L} (\bs{S}_{\mathrm{glob}})  }_F 
& \leq  \frac{2 \norm{\mathsf{L} - \mathsf{Id} }_2 \norm{ \mathsf{\Pi} \circ \mathsf{L} (\bs{S}_{\mathrm{glob}})  }_F }{ 1  - \norm{\mathsf{L} - \mathsf{Id} }_2}   \labelrel\leq{ineqq2:ft} \frac{15 \rs }{\sqrt{\log d}}
}
where we followed the reasoning in \eqref{ineqq0:ft}-\eqref{ineqq1:ft} to bound $\norm{ \mathsf{\Pi} \circ \mathsf{L} (\bs{S}_{\mathrm{glob}})  }_F$ in \eqref{ineqq2:ft} . Therefore,
\eq{
R(\bs{W}_t \bs{\Omega}_*)
& =  \frac{1}{\rs} \norm*{   \bs{S}_* - \frac{\sqrt{\rs}}{\normL} \W_t^\top \Th \bs{\Lambda} \Th^\top \W_t }_F^2 + \Big( 1 -  \frac{ \norm{\bs{\Lambda}^{\frac{1}{2}}  \G_t \bs{\Lambda}^{\frac{1}{2}}}_F^2}{\normL^2}  \Big) \\
& \leq \frac{2}{\rs} \norm*{   \bs{S}_* -  \mathsf{\Pi} \circ \mathsf{L} (\bs{S}_{\mathrm{glob}})  }_F^2 + \frac{2}{\rs} \norm*{   \mathsf{\Pi} \circ \mathsf{L} (\bs{S}_{\mathrm{glob}})  - \frac{\sqrt{\rs}}{\normL} \W_t^\top \Th \bs{\Lambda} \Th^\top \W_t }_F^2 \\
& \quad + \Big( 1 -  \frac{ \norm{\bs{\Lambda}^{\frac{1}{2}}  \G_t \bs{\Lambda}^{\frac{1}{2}}}_F^2}{\normL^2}  \Big) \\ 
& \leq  \frac{O(1)}{\log d} + \Big( 1 -  \frac{ \norm{\bs{\Lambda}^{\frac{1}{2}}  \G_t \bs{\Lambda}^{\frac{1}{2}}}_F^2}{\normL^2}  \Big).
}

\section{Deferred Proofs for Online SGD}
\label{sec:onlinesgd}
\subsection{Preliminaries}
We consider
\eq{
y_{t +1} = \frac{1}{\norm{\bs{\Lambda}}_F} \sum_{j = 1}^r  \lambda_j  \big( \inner{\bs{\theta}_j}{\bs{x}_{t+1}}^2 -1 \big) + \epsilon_{t+1}  ~~ \text{and} ~~  \hat{y}(\bs{W}_t; \bs{x}_{t+1}) =   \frac{1}{\sqrt{\rs}} \sum_{j = 1}^{\rs} \inner{\bs{w}_{t,j}}{\bs{x}_{t+1}}^2 - 1,
}
where $\norm{\epsilon_{t+1}}_{\psi_2} \leq \sigma$.
We use $\hat{y}_{t+1} \coloneqq \hat{y} (\bs{W}_t;\bs{x}_{t+1})$ and consider
\begin{itemize}
\item The loss function is  $\mathcal{L}(\bs{W}_t; (\bs{x}_{t+1}, y_{t+1})) = \tfrac{  1}{16} \big(y_{t+1}   - \hat{y}_{t+1} \big)^2$
\item The Euclidean gradient is   $\grad   \mathcal{L}(\bs{W}_t)  =   \frac{-1}{4\sqrt{\rs}} \big(y_{t+1} -   \hat{y}_{t+1} \big)  \bs{x}_{t+1} \bs{x}_{t+1}^\top  \W_t$.  Therefore, we have
\eq{
\grad_{\text{St}}  \mathcal{L}(\bs{W}_t)  =  \frac{- 1/4}{\sqrt{\rs}} \left(  \bs{I}_{d} -   \W_t  \W_t^\top  \right)  \big( y_{t+1}  - \hat{y}_{t+1} \big) \bs{x}_{t+1} \bs{x}_{t+1}^\top    \W_t. 
}
\item  We recall that $\G_t = \Th^\top  \W_t \W_t^\top  \Th$. 
\end{itemize}
Then, \eqref{eq:sgd2} reads
\eq{
\widetilde{\bs{W}}_{t+1} & =  \W_t +     \frac{\eta/4}{\sqrt{\rs}} \underbrace{ \left(  \bs{I}_{d} -   \W_t  \W_t^\top  \right)  \big( y_{t+1}  - \hat{y}_{t+1} \big) \bs{x}_{t+1} \bs{x}_{t+1}^\top    \W_t }_{\coloneqq \grad_{\text{St}}  \bs{L}_{t+1}} \\
\bs{W}_{t+1} & =   \widetilde{\bs{W}}_{t+1} \Bigg(  \bs{I}_{\rs} +  \frac{\eta^2/16}{\rs} \underbrace{ \grad_{\text{St}}  \bs{L}_{t+1}^\top \grad_{\text{St}}  \bs{L}_{t+1} }_{\coloneqq \bs{\cP}_{t+1}} \Bigg)^{-1/2}. \tag{SGD} \label{eq:sgd2}
}
We observe that
\eq{
 \frac{\eta^2/16}{\rs} \bs{\cP}_{t+1} & =   \frac{\eta^2/16}{\rs} \big( y_{t+1}  - \hat{y}_{t+1} \big)^2 \W_t^\top  \bs{x}_{t+1} \bs{x}_{t+1}^\top    \left(  \bs{I}_{d} -   \W_t  \W_t^\top  \right)    \bs{x}_{t+1} \bs{x}_{t+1}^\top    \W_t  \\
& =      \frac{\eta^2/16}{\rs} \big( y_{t+1}  - \hat{y}_{t+1} \big)^2  \norm{ \left(  \bs{I}_{d} -   \W_t  \W_t^\top  \right)  \bs{x}_{t+1}  }_2^2   \W_t^\top \bs{x}_{t+1}  \bs{x}_{t+1}^\top  \W_t.
}
Let 
\eq{
c^2_{t+1} \coloneqq   \frac{\eta^2/16}{\rs} \norm{ \bs{\cP}_{t+1} }_2 =    \frac{\eta^2/16}{\rs} \big( y_{t+1}  - \hat{y}_{t+1} \big)^2  \norm{ \left(  \bs{I}_{d} -   \W_t  \W_t^\top  \right)  \bs{x}_{t+1}  }_2^2  \norm{\W_t^\top \bs{x}_{t+1} }_2^2.
}
We define  $\bs{P}_{t+1} \coloneqq    \left(  \bs{I}_{\rs} +  \frac{\eta^2/16}{\rs}  \bs{\cP}_{t+1} \right)^{-1/2}$  and since $ \bs{\cP}_{t+1}$ is $1$-rank, we have
\eq{
\bs{P}_{t+1}^2 = \bs{I}_{\rs} -  \frac{\eta^2/16}{\rs} \frac{\bs{\cP}_{t+1}}{1 +c^2_{t+1} }.  
}
We let 
\eq{
\bs{M}_{t} \coloneqq \Th^\top \W_t ~~\text{and} ~~ \bs{\hat{M}}_{t+1} \coloneqq \Th^\top \bs{\hat{W}}_{t+1}. 
}
We have
\eq{
\bs{\hat{M}}_{t+1}  = \bs{M}_{t}  + \frac{\eta/4}{\sqrt{\rs}}  \Th^\top \grad_{\text{St}}  \bs{L}_{t+1}.
}
By recalling that  $\G_{t} = \bs{M}_{t} \bs{M}_{t}^\top$, we have
\eq{
\G_{t+1}   & = \bs{\hat{M}}_{t+1}  \bs{\hat{M}}_{t+1}^\top + \bs{\hat{M}}_{t+1} (\bs{P}_{t+1}^2 - \bs{I}_{\rs}) \bs{\hat{M}}_{t+1}^\top \\
& =  \G_t + \frac{\eta/4}{\sqrt{\rs}} \bs{M}_{t} \grad_{\text{St}}  \bs{L}_{t+1}^\top   \Th +    \frac{\eta/4}{\sqrt{\rs}}   \Th^\top  \grad_{\text{St}}  \bs{L}_{t+1}  \bs{M}_{t}^\top   +   \frac{\eta^2}{16 \rs}  \Th^\top \grad_{\text{St}}  \bs{L}_{t+1} \grad_{\text{St}}  \bs{L}_{t+1}^\top   \Th   \\
& -    \frac{\eta^2}{16 \rs}   \frac{ \bs{\hat{M}}_{t+1}  \bs{\cP}_{t+1} \bs{\hat{M}}_{t+1}^\top}{1 +c^2_{t+1} }.
}
We have
\eq{
\grad_{\text{St}}  \bs{L}_{t+1}  = \frac{2}{\normL} \left( \bs{I}_d - \W_t \W_t^\top  \right) \Th \bs{\Lambda} \Th^\top \W_t + \big( \grad_{\text{St}}  \bs{L}_{t+1}  - \E_t \left[ \grad_{\text{St}}  \bs{L}_{t+1}  \right] \big), 
}
Therefore,
\eq{
\Th^\top  \grad_{\text{St}}  \bs{L}_{t+1}  \bs{M}_{t}^\top & =   \frac{2}{\normL} \Th^\top  \left( \bs{I}_d - \W_t \W_t^\top  \right) \Th \bs{\Lambda} \Th^\top \W_t \W_t^\top \Th \\
& \quad +   \Th^\top  \big( \grad_{\text{St}}  \bs{L}_{t+1}  - \E_t \left[ \grad_{\text{St}}  \bs{L}_{t+1}  \right] \big) \bs{M}_{t}^\top \\
& =  \frac{2}{\normL} \left(   \bs{I}_{r} - \G_t   \right)  \bs{\Lambda}  \G_t  +   \Th^\top  \big( \grad_{\text{St}}  \bs{L}_{t+1}  - \E_t \left[ \grad_{\text{St}}  \bs{L}_{t+1}  \right] \big)  \bs{M}_{t}^\top.
}
Hence, we have
\eq{
\G_{t+1} & =  \G_t + \frac{\eta/2}{ \normL \sqrt{\rs}} \left(  \bs{\Lambda} \G_t  +  \G_t  \bs{\Lambda} - 2  \G_t \bs{\Lambda}  \G_t  \right) 
  \\
& + \frac{\eta/2}{\sqrt{\rs}}  \sym \left(  \Th^\top \big( \grad_{\text{St}}  \bs{L}_{t+1}  - \E_t \left[ \grad_{\text{St}}  \bs{L}_{t+1}  \right] \big)    \bs{M}_{t}^\top \right)  \\
&  +   \frac{\eta^2}{16 \rs}  \Th^\top \grad_{\text{St}}  \bs{L}_{t+1} \grad_{\text{St}}  \bs{L}_{t+1}^\top   \Th -  \frac{\eta^2}{16 \rs}   \frac{ \bs{\hat{M}}_{t+1}  \bs{\cP}_{t+1} \bs{\hat{M}}_{t+1}^\top}{1 +c^2_{t+1} }
}
On the other hand,
\eq{
&  \bs{\hat{M}}_{t+1}   \bs{\cP}_{t+1} \bs{\hat{M}}_{t+1}^\top   \! = \! \left(  \bs{M}_{t}  \!+ \!\frac{\eta/4}{\sqrt{\rs}}  \Th^\top \grad_{\text{St}}  \bs{L}_{t+1}  \right) \! \bs{\cP}_{t+1}   \!\left(  \bs{M}_{t}  + \frac{\eta/4}{\sqrt{\rs}}  \Th^\top \grad_{\text{St}}  \bs{L}_{t+1}  \right)^\top \\
&    =    \bs{M}_{t} \bs{\cP}_{t+1} \bs{M}_{t}^\top \! + \! \frac{\eta/2}{\sqrt{\rs}} \sym \left( \Th^\top  \grad_{\text{St}}  \bs{L}_{t+1} \bs{\cP}_{t+1}   \bs{M}_{t}^\top  \right)  \! + \!  \frac{\eta^2}{16 \rs }   \Th^\top  \grad_{\text{St}}  \bs{L}_{t+1} \bs{\cP}_{t+1}   \grad_{\text{St}}  \bs{L}_{t+1}^\top   \Th.
}
We collect the higher order terms in a single term defined as follows: 
\eq{
R_{\text{so}}[\G_t] &\coloneqq   \frac{\eta^2}{16 \rs}  \Th^\top  \E_t \left[  \grad_{\text{St}}  \bs{L}_{t+1} \grad_{\text{St}}  \bs{L}_{t+1}^\top   \right]   \Th 
-  \frac{\eta^2}{16 \rs}      \bs{M}_{t} \E_t \left[ \frac{\bs{\cP}_{t+1} }{1 + c_{t+1}^2} \right] \bs{M}_{t}^\top  \\
& -  \frac{\eta^3}{32 \rs^{3/2}} \sym \left( \Th^\top  \E_t \left[ \frac{ \grad_{\text{St}}  \bs{L}_{t+1} \bs{\cP}_{t+1} }{1 + c_{t+1}^2}  \right]  \bs{M}_{t}^\top  \right)  \\
& - \frac{\eta^4}{256 \rs^2}  \Th^\top   \E_t \left[  \frac{ \grad_{\text{St}}  \bs{L}_{t+1} \bs{\cP}_{t+1}   \grad_{\text{St}}  \bs{L}_{t+1}^\top }{1 + c_{t+1}^2} \right]     \Th. 
}
We collect the noise terms in a single term defined as follows:
\eq{
\frac{\eta/2}{\sqrt{\rs}}  \bs{\nu}_{t+1} & \coloneqq \frac{\eta/2}{\sqrt{\rs}}   \sym \left(  \Th^\top  \Big(  \grad_{\text{St}}  \bs{L}_{t+1} - \E_t \left[  \grad_{\text{St}}  \bs{L}_{t+1} \right] \Big)    \bs{M}_{t}^\top \right)  \\
& -  \frac{\eta^2}{16 \rs}    \bs{M}_{t}  \left( \frac{\bs{\cP}_{t+1}}{1 +  c_{t+1}^2 } - \E_t \left[ \frac{\bs{\cP}_{t+1} }{1 + c_{t+1}^2} \right] \right) \bs{M}_{t}^\top  \\
& + \frac{\eta^2}{16 \rs}   \Th^\top \left( \grad_{\text{St}}  \bs{L}_{t+1} \grad_{\text{St}}  \bs{L}_{t+1}^\top    -    \E_t \left[ \grad_{\text{St}}  \bs{L}_{t+1} \grad_{\text{St}}  \bs{L}_{t+1}^\top \right] \right)  \Th   \\
&  -  \frac{\eta^3}{32 \rs^{3/2}} \sym \left( \Th^\top \left( \frac{ \grad_{\text{St}}  \bs{L}_{t+1} \bs{\cP}_{t+1} }{1 + c_{t+1}^2}   -   \E_t \left[ \frac{ \grad_{\text{St}}  \bs{L}_{t+1} \bs{\cP}_{t+1} }{1 + c_{t+1}^2}  \right]  \right) \bs{M}_{t}^\top  \right) \\
& -   \frac{\eta^4}{256 \rs^2}  \Th^\top  \left(  \frac{ \grad_{\text{St}}  \bs{L}_{t+1} \bs{\cP}_{t+1}   \grad_{\text{St}}  \bs{L}_{t+1}^\top }{1 + c_{t+1}^2} -   \E_t \left[  \frac{ \grad_{\text{St}}  \bs{L}_{t+1} \bs{\cP}_{t+1}   \grad_{\text{St}}  \bs{L}_{t+1}^\top }{1 + c_{t+1}^2} \right]   \right)  \Th.
}
With these definitions in hand, we have
\eq{
\G_{t+1}   =  \G_t + \frac{\eta/2}{ \normL \sqrt{\rs}  } \left(  \bs{\Lambda} \G_t  +  \G_t  \bs{\Lambda} - 2  \G_t \bs{\Lambda}  \G_t \right)  + R_{\text{so}}[\G_t] + \frac{\eta/2}{\sqrt{\rs}}   \bs{\nu}_{t+1} . 
}

\subsection{Including second-order terms and monotone bounds}
\label{sec:monotonesec}
For $C > 1$, we define
\begin{align} 
\Llf  \coloneqq  \bs{\Lambda} - C  \normL   \frac{\eta d}{\sqrt{\rs} } \bs{I}_r    ~~ \text{and} ~~   \Luf   \coloneqq  \bs{\Lambda} +  C \normL     \frac{\eta d}{\sqrt{\rs} }  \bs{I}_r.  \label{eq:Llfs} 
\end{align}
We recall the definition of effective learning rate  $\lr =  \frac{\eta/2}{\normL \sqrt{\rs}}$ . 
By Proposition \ref{prop:soterms}, we have
\eq{
 \G_{t+1}  &  \succeq \G_t  
+ \lr \Big(  \Llf   \G_t 
+ \G_t  \Llf- 2 \G_t  \La  \G_t \Big)   - \frac{C}{2} \lr^2 \normL^2 \rs \bs{I}_r + \frac{\eta/2}{\sqrt{\rs}}  \bs{\nu}_{t+1}, \label{eq:sotermsminus}  \\
 \G_{t+1}  & \preceq \G_t   
+ \lr\Big(    \Luf  \G_t 
+ \G_t   \Luf   - 2 \G_t  \La    \G_t \Big)    +  \frac{C}{2} \lr^2 \normL^2 \rs \bs{I}_r + \frac{\eta/2}{\sqrt{\rs}}    \bs{\nu}_{t+1}.  \label{eq:sotermsplus}
}
 
\subsubsection{Heavy tailed case - $\alpha \in [0,0.5)$}
\begin{proposition}
\label{prop:boundforiterationsheavy}
We consider $\alpha \in [0,0.5)$,  $\frac{\rs}{r} \to (0,\infty]$  and   $\eta \ll  \tfrac{1}{d \log^4 d} \sqrt{\tfrac{\rs}{r}}$. We define
\eq{
\bs{V}^{-}_t \coloneqq  2 \La^{\frac{1}{2}}  \G_t   \La^{\frac{1}{2}}  -   \Llf ~~ \text{and} ~~  
\bs{V}^{+}_t \coloneqq  2 \La^{\frac{1}{2}}  \G_t    \La^{\frac{1}{2}}   -  \Luf .
}
For  $d \geq \Omega(1)$,  we have   
\eq{
\bs{\Lambda} +  \tfrac{0.1 r^{- \alpha}}{\log^4 d}  \bs{I}_r   \succ  \Luf \succ \La \succ   \Llf   \succ \bs{\Lambda} -  \tfrac{0.1 r^{- \alpha}}{\log^4 d} \bs{I}_r \label{eq:boundsforLL0}
}
and
\eq{
& \bs{V}^{-}_{t+1} \succeq    \bs{V}^{-}_t \!\!  \left(   \bs{I}_r  +  \frac{\lr}{1 - 1.1 \lr}   \bs{V}^{-}_t   \right)^{-1}    \!+  \lr   \Llf^{2}  - C \lr^2 \normL^2 \rs \La   +   \frac{\eta}{\sqrt{\rs}}    \La^{\frac{1}{2}}  \bs{\nu}_{t+1}  \La^{\frac{1}{2}} \label{eq:lbiter} \\
&  \bs{V}^{+}_{t+1} \preceq    \bs{V}^{+}_t \!\!   \left(     \bs{I}_r  +   \frac{\lr}{1 + 1.1 \lr}   \bs{V}^{+}_t   \right)^{-1}    \! +  \lr   \Luf^{2}   + C \lr^2 \normL^2 \rs \La     +   \frac{\eta}{\sqrt{\rs}}    \La^{\frac{1}{2}}  \bs{\nu}_{t+1}  \La^{\frac{1}{2}}   \label{eq:ubiter}
} 
where the bounding iterations  are monotone in the sense defined in Proposition \ref{prop:monotoneiteration}.
\end{proposition}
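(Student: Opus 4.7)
The plan is to conjugate the one-step bounds \eqref{eq:sotermsminus}--\eqref{eq:sotermsplus} by $2\bs{\Lambda}^{1/2}$ on both sides, thereby converting the drift into a matrix Riccati form in the rescaled variables $\bs{V}^{\pm}_t$, and then to replace the linearized Riccati drift by its exact rational (``fully implicit'') counterpart at a slightly modified effective step size. Monotonicity then follows immediately from Proposition~\ref{prop:monotoneiteration}. The size bound \eqref{eq:boundsforLL0} is a direct computation: for $\alpha \in [0, 1/2)$ one has $\normL \asymp r^{1/2 - \alpha}$, so the bias term in \eqref{eq:Llfs} is $C\normL \eta d/\sqrt{\rs} \asymp \eta d r^{1/2 - \alpha}/\sqrt{\rs}$, which is $\ll r^{-\alpha}/\log^4 d$ whenever $\eta \ll (d \log^4 d)^{-1}\sqrt{\rs/r}$, as assumed.

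\paragraph{Reduction to a discrete Riccati iteration.} Set $\bs{A}_t \coloneqq 2\bs{\Lambda}^{1/2}\bs{G}_t\bs{\Lambda}^{1/2}$, so that $\bs{V}^{-}_t = \bs{A}_t - \bs{\Lambda}_{\ell_1}$ and $\bs{V}^{+}_t = \bs{A}_t - \bs{\Lambda}_{u_1}$. Conjugating \eqref{eq:sotermsminus} by $2\bs{\Lambda}^{1/2}$, the commutativity of diagonal matrices gives $2\bs{\Lambda}^{1/2}(\bs{\Lambda}_{\ell_1}\bs{G}_t + \bs{G}_t \bs{\Lambda}_{\ell_1})\bs{\Lambda}^{1/2} = \bs{\Lambda}_{\ell_1}\bs{A}_t + \bs{A}_t \bs{\Lambda}_{\ell_1}$, while the identity
\[
\bs{A}_t^2 \;=\; 4\bs{\Lambda}^{1/2}\bs{G}_t \bs{\Lambda} \bs{G}_t \bs{\Lambda}^{1/2}
\]
packages the quadratic drift cleanly as $\bs{A}_t^2$. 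Substituting $\bs{A}_t = \bs{V}^{-}_t + \bs{\Lambda}_{\ell_1}$ and expanding, the anticommutator contributions $\{\bs{\Lambda}_{\ell_1}, \bs{V}^{-}_t\}$ coming from $\bs{\Lambda}_{\ell_1}\bs{A}_t + \bs{A}_t \bs{\Lambda}_{\ell_1}$ are cancelled exactly by those coming from $\bs{A}_t^2$, \emph{without} requiring $\bs{V}^{-}_t$ to commute with $\bs{\Lambda}_{\ell_1}$, and the net drift collapses to $\lr(\bs{\Lambda}_{\ell_1}^2 - (\bs{V}^{-}_t)^2)$. Tracking the second-order residual ($-\frac{C}{2}\lr^2\normL^2 \rs \bs{I}_r$ becomes $-C\lr^2\normL^2 \rs \bs{\Lambda}$) and the noise ($\frac{\lr/2}{\sqrt{\rs}}\bs{\nu}_{t+1}$ becomes $\frac{\eta}{\sqrt{\rs}}\bs{\Lambda}^{1/2}\bs{\nu}_{t+1}\bs{\Lambda}^{1/2}$), we obtain the linearized Riccati bound
\[
\bs{V}^{-}_{t+1} \;\succeq\; \bs{V}^{-}_t - \lr (\bs{V}^{-}_t)^2 + \lr \bs{\Lambda}_{\ell_1}^{2} - C\lr^2\normL^2 \rs \bs{\Lambda} + \tfrac{\eta}{\sqrt{\rs}}\bs{\Lambda}^{1/2}\bs{\nu}_{t+1}\bs{\Lambda}^{1/2},
\]
and symmetrically for $\bs{V}^{+}_t$ via \eqref{eq:sotermsplus}, with $\bs{\Lambda}_{\ell_1}$ replaced by $\bs{\Lambda}_{u_1}$ and the $\lr^2$ residual sign flipped.

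\paragraph{Rational replacement and monotonicity; main obstacle.} The Stiefel constraint forces $0 \preceq \bs{G}_t \preceq \bs{I}_r$ at every iterate, so $0 \preceq \bs{A}_t \preceq 2\bs{\Lambda}$. Combined with \eqref{eq:boundsforLL0} and $\lambda_1 \leq 1$, this yields the uniform a priori bound $\|\bs{V}^{\pm}_t\|_2 \leq 1 + o_d(1) \leq 1.1$ for $d$ large. The elementary identity $\bs{V}(\bs{I}_r + \delta \bs{V})^{-1} = \bs{V} - \delta \bs{V}^2 (\bs{I}_r + \delta \bs{V})^{-1}$ then gives
\[
\bigl(\bs{V} - \lr\bs{V}^2\bigr) - \bs{V}(\bs{I}_r + \delta \bs{V})^{-1} \;=\; \bs{V}^2\,(\bs{I}_r + \delta \bs{V})^{-1}\bigl[(\delta - \lr)\bs{I}_r - \lr\delta \bs{V}\bigr],
\]
a product of three commuting self-adjoint factors. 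For the lower bound, choosing $\delta = \lr/(1 - 1.1\lr)$ makes the bracket $\succeq 0$ at every eigenvalue of $\bs{V}^{-}_t \in [-1.1, 1.1]$ (the positive eigenvalues are binding), which upgrades the linearized iteration to \eqref{eq:lbiter}. For the upper bound, the negative eigenvalues of $\bs{V}^{+}_t$ are binding, and $\delta = \lr/(1 + 1.1\lr)$ yields \eqref{eq:ubiter}. Finally, the resulting bounding maps have the form $\bs{V} \mapsto \bs{V}(\bs{I}_r + \delta\bs{V})^{-1} + \bs{S}_{t+1}$ with $\bs{S}_{t+1}$ independent of $\bs{V}$, which is exactly the setting of Proposition~\ref{prop:monotoneiteration}, so the claimed monotonicity follows. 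The main technical subtlety is the non-commutativity of $\bs{G}_t$ with $\bs{\Lambda}$, which would \emph{a priori} produce residual anticommutator terms in the drift; the identity $\bs{A}_t^2 = 4\bs{\Lambda}^{1/2}\bs{G}_t\bs{\Lambda}\bs{G}_t\bs{\Lambda}^{1/2}$ is precisely what makes those residuals cancel, and the $(1 \mp 1.1\lr)^{-1}$ correction is exactly tight under the a priori spectral bound $\|\bs{V}^{\pm}_t\|_2 \leq 1.1$ enforced by the Stiefel retraction.
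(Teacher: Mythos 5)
Your proposal is correct and follows essentially the same route as the paper's proof: you conjugate the one-step bounds \eqref{eq:sotermsminus}--\eqref{eq:sotermsplus} by $\bs{\Lambda}^{1/2}$ (times $2$), reducing to a scalar-Riccati-type drift $\lr(\Llf^2 - (\bs{V}^-_t)^2)$ in the rescaled variable, establish the a priori spectral bound $\|\bs{V}^{\pm}_t\|_2 \leq 1 + o_d(1)$ from the Stiefel constraint $0\preceq\bs{G}_t\preceq\bs{I}_r$ together with \eqref{eq:boundsforLL0}, and then replace the quadratic drift by its rational (resolvent) form at the modified rate $\lr/(1\mp 1.1\lr)$. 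The paper performs the same replacement implicitly (its step \eqref{boundforiterations:eq0}/\eqref{boundforiterations:eq1} citing \eqref{eq:boundsforv}), whereas you make it explicit via the commuting-factor identity $(\bs{V}-\lr\bs{V}^2)-\bs{V}(\bs{I}_r+\delta\bs{V})^{-1}=\bs{V}^2(\bs{I}_r+\delta\bs{V})^{-1}[(\delta-\lr)\bs{I}_r-\lr\delta\bs{V}]$, which is a clean way of exhibiting why exactly the threshold $1.1$ appears; this is a presentational improvement rather than a different argument. Minor nits: the prefactor of the noise term in \eqref{eq:sotermsminus} is $\frac{\eta/2}{\sqrt{\rs}}$, not $\frac{\lr/2}{\sqrt{\rs}}$, though your final expression $\frac{\eta}{\sqrt{\rs}}\bs{\Lambda}^{1/2}\bs{\nu}_{t+1}\bs{\Lambda}^{1/2}$ is right; and the phrase ``conjugating by $2\bs{\Lambda}^{1/2}$'' should really be ``left/right multiplying by $\bs{\Lambda}^{1/2}$ and rescaling by $2$.''
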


\begin{proof}
We first note that since $\normL \asymp r^{\frac{1}{2} - \alpha}$ for  $\alpha \in [0,0.5)$, we have
\eq{
\normL \frac{\eta d}{ \sqrt{\rs}} \ll   \frac{r^{- \alpha}}{\log^4 d}.
}
Therefore,     \eqref{eq:boundsforLL0} holds for $d \geq \Omega(1)$,  which implies
\eq{
 \norm{ \bs{V}^{-}_t  }_2 \vee \norm{ \bs{V}^{+}_t  }_2 \leq  1 + \frac{0.1  r^{- \alpha} }{\log^4 d} , ~~ \text{for all }~ t \in \N.  \label{eq:boundsforv}
}
Therefore, the monotonicity follows from Proposition \ref{prop:monotoneiteration}.

For the remaining part, we introduce the following notation, $\bs{K}_t \coloneqq   \La^{\frac{1}{2}} \G_t  \La^{\frac{1}{2}}$.
For the lower bound,  by \eqref{eq:sotermsminus},  we have
\eq{
\bs{K}_{t+1} \succeq \bs{K}_t + \frac{\lr}{2} \left(   \Llf^2 - (2\bs{K}_t -   \Llf )^2 \right) - \frac{C}{2} \lr^2 \normL^2 \rs \La   +  \frac{\eta/2}{\sqrt{\rs}}    \La^{\frac{1}{2}}  \bs{\nu}_{t+1}  \La^{\frac{1}{2}}.
}
By multiplying both sides with $2$ and subtracting  $ \Llf$ from both sides, we have
\eq{
 &  \bs{V}^{-}_{t+1}\succeq    \bs{V}^{-}_t   - \lr  ( \bs{V}^{-}_t )^2    + \lr     \Llf^2   - C \lr^2 \normL^2 \rs \La  +    \frac{\eta}{\sqrt{\rs}}    \La^{\frac{1}{2}}  \bs{\nu}_{t+1}  \La^{\frac{1}{2}} \\
& \labelrel\succeq{boundforiterations:eq0}  \bs{V}^{-}_t \!  \! - \! \frac{\lr}{1 -  1.1  \lr }   ( \bs{V}^{-}_t )^2 \left( \bs{I}_r + \frac{\lr}{1 - 1.1  \lr } \bs{V}^{-}_t \right)^{-1}  \!\!   + \lr     \Llf^2  \\
& - C \lr^2 \normL^2 \rs  \La \! + \!     \frac{\eta}{\sqrt{\rs}}    \La^{\frac{1}{2}}  \bs{\nu}_{t+1}  \La^{\frac{1}{2}} \\
&  =   \bs{V}^{-}_t  \left( \bs{I}_r +  \frac{\lr}{1 - 1.1  \lr }    \bs{V}^{-}_t  \right)^{-1}     + \lr     \Llf^2   -  C \lr^2 \normL^2 \rs  \La +    \frac{\eta}{\sqrt{\rs}}    \La^{\frac{1}{2}}  \bs{\nu}_{t+1}  \La^{\frac{1}{2}},
}
where we used  \eqref{eq:boundsforv} for \eqref{boundforiterations:eq0}.

\smallskip
For the upper bound,  by \eqref{eq:sotermsplus},  we have
\eq{
\bs{K}_{t+1} \preceq \bs{K}_t + \frac{\lr}{2} \left(   \Luf^2 - (2\bs{K}_t -   \Luf )^2 \right)  +   \frac{C}{2} \lr^2 \normL^2 \rs  \La +  \frac{\eta/2}{\sqrt{\rs}}    \La^{\frac{1}{2}}  \bs{\nu}_{t+1}  \La^{\frac{1}{2}}.
}
By multiplying both sides with $2$ and subtracting  $ \Luf$ from both sides, we get
\eq{
\bs{V}^{+}_{t+1} & \preceq   \bs{V}^{+}_t   - \lr  ( \bs{V}^{+}_t )^2    + \lr     \Luf^2 +  C \lr^2 \normL^2 \rs  \La +    \frac{\eta}{\sqrt{\rs}}    \La^{\frac{1}{2}}  \bs{\nu}_{t+1}  \La^{\frac{1}{2}} \\
& \labelrel\preceq{boundforiterations:eq1}   \bs{V}^{+}_t   -  \frac{\lr}{1+ 1.1 \lr}  ( \bs{V}^{+}_t )^2 \left(  \bs{I}_r + \frac{\lr}{1+ 1.1 \lr} \bs{V}^{+}_t  \right)^{-1}   + \lr     \Luf^2  \\
& +   C \lr^2 \normL^2 \rs  \La +  \frac{\eta}{\sqrt{\rs}}    \La^{\frac{1}{2}}  \bs{\nu}_{t+1}  \La^{\frac{1}{2}} \\
& = \bs{V}^{+}_t  \left( \bs{I}_r +   \frac{\lr}{1+ 1.1 \lr}   \bs{V}^{+}_t  \right)^{-1}     + \lr     \Luf^2 + C \lr^2 \normL^2 \rs  \La  +    \frac{\eta}{\sqrt{\rs}}    \Lus^{\frac{1}{2}}  \bs{\nu}_{t+1}  \Lus^{\frac{1}{2}}, 
}
where we used  \eqref{eq:boundsforv} for \eqref{boundforiterations:eq1}.
\end{proof}

\subsubsection{Light tailed case - $\alpha > 0.5$}
We introduce the submatrix notation
\eq{
 \G_t \eqqcolon \begin{bmatrix}
\G_{t,11} &  \G_{t,12}  \\
  \G_{t,12}^\top &   \G_{t,22}
\end{bmatrix} ~~ 
\bs{\nu}_{t}  \eqqcolon  \begin{bmatrix}
\bs{\nu}_{t,11}  &  \bs{\nu}_{t,12}  \\
\bs{\nu}_{t,12}^\top  &   \bs{\nu}_{t,22} 
\end{bmatrix} ~~ 
  \La \eqqcolon   \begin{bmatrix}
\bs{\Lambda}_{11}   &  0 \\
0 & \bs{\Lambda}_{22}  
\end{bmatrix}   ~~ 
 \Llf \eqqcolon   \begin{bmatrix}
\bs{\Lambda}_{\ell_1, 11}   &  0 \\
0 & \bs{\Lambda}_{\ell_1, 22}  
\end{bmatrix} , 
}
where $\G_{t,11}  ,  \bs{\nu}_{t,11}  ,  \bs{\Lambda}_{11},    \bs{\Lambda}_{\ell_1, 11}    \in \R^{\ru \times \ru}$ for $\ru < r$. Similarly, we define the block matrices of $\Luf$ as  $\bs{\Lambda}_{u_1, 11} \in   \R^{\ru \times \ru}$ and  $\bs{\Lambda}_{u_1, 22}$. We can write  iterations  \eqref{eq:sotermsminus}  and  \eqref{eq:sotermsplus}  for the left-top submatrix as:
\begin{alignat}{2}
&   \G_{t+1, 11}   &&   \succeq    \G_{t,11}  +  \lr \Big(  \Llftop  \G_{t,11}   + \G_{t,11}  \Llftop - 2   \G_{t,11}   \La_{11}  \G_{t,11}    -  2  \G_{t,12}    \La_{22}   \G_{t,12}^\top  \Big) \\
& \ && \quad - \frac{C}{2} \lr^2 \normL^2 \rs \bs{I}_{\ru}  +   \frac{\eta/2}{\sqrt{\rs}} \bs{\nu}_{t+1,11}  \label{eq:sotermsminusslb}    \\
&   \G_{t+1, 11}  &&  \preceq   \G_{t,11} +  \lr\Big(    \Luftop  \G_{t,11}  +  \G_{t,11}  \Luftop  - 2   \G_{t,11}    \La_{11} \G_{t,11}   - 2  \G_{t,12}   \La_{22}  \G_{t,12}^\top  \Big) \\
& \ && \quad + \frac{C}{2} \lr^2 \normL^2 \rs \bs{I}_{\ru}    + \frac{\eta/2}{\sqrt{\rs}}   \bs{\nu}_{t+1,11}.    \label{eq:sotermsplussub}  
\end{alignat}
The following statement is analogous to Proposition \ref{prop:boundforiterationsheavy} in the case $\alpha > 0.5$.

\begin{proposition}
\label{prop:boundforiterationslight}
We consider $\alpha > 0.5$,  $\rs \asymp 1$,   and
\eq{
 \eta \ll  \frac{1}{d \log^3 d} \frac{1}{\ru^{2 + \alpha}} ~~ \text{and} ~~ \ru = \ceil{\log^{2.5} d}.
}
We define $\bs{V}^{+}_t \coloneqq  2 \La_{11}^{\frac{1}{2}}    \G_{t,11}   \La_{11}^{\frac{1}{2}}   -  \Luftop$   and
\eq{
\bs{V}^{-}_t \coloneqq  2 \left( \bs{\Lambda}_{11} \!  - \!   \tfrac{1}{( \ru + 1 )^{\alpha} }    \bs{I}_{\ru} \right)^{\frac{1}{2}} \G_{t,11}  \left( \bs{\Lambda}_{11} \!  - \!   \tfrac{1}{( \ru + 1 )^{\alpha} }    \bs{I}_{\ru}  \right)^{\frac{1}{2}} -   \left( \bs{\Lambda}_{\ell_1, 11} \! - \! \tfrac{1}{(\ru + 1)^{\alpha}} \bs{I}_{\ru} \right).
}
For  $d \geq \Omega(1)$,  we have   
\eq{
\bs{\Lambda}_{11} +  \tfrac{0.1}{\ru^{2 +  \alpha} \log^3 d} \bs{I}_{\ru}     \succ    \Luftop  \succ \bs{\Lambda}_{11}   \succ    \Llftop    \succ \bs{\Lambda}_{11}   -   \tfrac{0.1}{ \ru^{2 + \alpha} \log^3  d}   \bs{I}_{\ru}  \label{eq:boundsforLL1}
}
and
\eq{
 \bs{V}^{-}_{t+1} & \succeq    \bs{V}^{-}_t \!\!  \left(   \bs{I}_{\ru} \! + \!   \tfrac{\lr}{1 - 1.1 \lr}  \bs{V}^{-}_t  \! \right)^{-1}  \!\! \! \!   +  \lr  \left( \bs{\Lambda}_{\ell_1, 11} \! - \! \tfrac{1}{(\ru + 1)^{\alpha}} \bs{I}_{\ru} \! \right)^2 \!\! - \!  C\lr^2 \normL^2 \rs \!  \left( \bs{\Lambda}_{11}  \!  - \!     \tfrac{1}{( \ru + 1 )^{\alpha} } \bs{I}_{\ru}  \right)  \\
& +   \frac{\eta}{\sqrt{\rs}}    \left( \bs{\Lambda}_{11} \!  - \!   \tfrac{1}{( \ru + 1 )^{\alpha} }    \bs{I}_{\ru} \right)^{\frac{1}{2}}  \bs{\nu}_{t+1,11}  \left( \bs{\Lambda}_{11} \!  - \!   \tfrac{1}{( \ru + 1 )^{\alpha} }    \bs{I}_{\ru} \right)^{\frac{1}{2}} \label{eq:lbitersub} \\[0.3em]
 \bs{V}^{+}_{t+1}&   \preceq   \bs{V}^{+}_t  \left( \bs{I}_{\ru} +   \tfrac{\lr}{1+ 1.1 \lr}   \bs{V}^{+}_t  \right)^{-1}     + \lr     \Luftop^2 + C\lr^2 \normL^2 \rs     \bs{\Lambda}_{11}  +      \frac{\eta}{\sqrt{\rs}}    \La_{11}^{\frac{1}{2}}     \bs{\nu}_{t+1,11}  \La_{11}^{\frac{1}{2}}.   \label{eq:ubitersub}
} 
where the bounding iterations  are monotone in the sense defined in Proposition \ref{prop:monotoneiteration}.
\end{proposition}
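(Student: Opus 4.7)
The proof should follow the three-stage template of Proposition~\ref{prop:boundforiterationsheavy}, adapted to the light-tailed regime $\alpha > 1/2$, $\rs \asymp 1$, and to the $\ru$-block truncation. First verify \eqref{eq:boundsforLL1}: since $\alpha > 1/2$ we have $\normL^2 \le \zeta(2\alpha) = O(1)$, so $C \normL \eta d/\sqrt{\rs} = O(\eta d) \ll (\ru^{2+\alpha} \log^3 d)^{-1}$ under the stated step-size assumption, and \eqref{eq:boundsforLL1} follows for $d$ large enough. For the operator-norm bound required to apply the monotone update Proposition~\ref{prop:monotoneiteration}, I would use that the polar retraction in Algorithm~\ref{alg:sgd} produces $\bs{W}_t$ with orthonormal columns, so $P \coloneqq \bs{W}_t \bs{W}_t^\top$ is an orthogonal projection and hence $\G_t = \Th^\top P \Th \preceq \bs{I}_r$. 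Combined with $\G_t \succeq 0$, the same two-sided argument as in \eqref{eq:boundsforv} gives $\|\bs{V}^{\pm}_t\|_2 \le 1 + o_d(1) \le 1.1$, in particular $\bs{V}^{\pm}_t \succ -\lr^{-1} \bs{I}_{\ru}$.

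\textbf{Upper bound \eqref{eq:ubitersub}.} Starting from \eqref{eq:sotermsplussub}, the coupling $-2 \G_{t,12} \La_{22} \G_{t,12}^{\top} \preceq 0$ is simply dropped, which only enlarges an upper bound. After this the per-step inequality is a top-block copy of the heavy-tailed one, so substituting $\bs{V}^{+}_t = 2 \La_{11}^{1/2} \G_{t,11} \La_{11}^{1/2} - \Luftop$, using that $\La_{11}$ and $\Luftop$ commute to obtain the identity $2 \La_{11}^{1/2} \bigl(\Luftop \G_{t,11} + \G_{t,11} \Luftop - 2 \G_{t,11} \La_{11} \G_{t,11}\bigr) \La_{11}^{1/2} = \Luftop^{2} - (\bs{V}^{+}_t)^{2}$, and applying the inversion inequality $\bs{V}^{+}_t - \lr (\bs{V}^{+}_t)^{2} \preceq \bs{V}^{+}_t \bigl(\bs{I}_{\ru} + \tfrac{\lr}{1 + 1.1 \lr} \bs{V}^{+}_t\bigr)^{-1}$ (valid since $\|\bs{V}^{+}_t\|_2 \le 1.1$) produces \eqref{eq:ubitersub} directly.

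\textbf{Lower bound \eqref{eq:lbitersub} — the crux.} Now the coupling term works against the bound and must be controlled from below. The main obstacle, and the step I expect to be the linchpin of the proof, is the refined Loewner inequality
\begin{equation*}
\G_{t,12} \G_{t,12}^{\top} \preceq \G_{t,11} - \G_{t,11}^{2},
\end{equation*}
which I would derive by noting $\G_t - \G_t^{2} = \Th^{\top} P \bigl(\bs{I}_d - \Th \Th^{\top}\bigr) P \Th \succeq 0$ (using $P^2 = P$ and $\Th \Th^{\top} \preceq \bs{I}_d$) and reading off its $(1,1)$-block. Together with $\|\La_{22}\|_{\mathrm{op}} \le (\ru+1)^{-\alpha}$ this yields $-2 \G_{t,12} \La_{22} \G_{t,12}^{\top} \succeq -2(\ru+1)^{-\alpha}\bigl(\G_{t,11} - \G_{t,11}^{2}\bigr)$. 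Plugging into \eqref{eq:sotermsminusslb}, the $-2(\ru+1)^{-\alpha} \G_{t,11}$ piece is absorbed symmetrically into the linear term to convert $\Llftop$ into $\Llftop - (\ru+1)^{-\alpha}\bs{I}_{\ru}$, while the $+2(\ru+1)^{-\alpha} \G_{t,11}^{2}$ piece is absorbed into the quadratic term to convert $\La_{11}$ into $\La_{11} - (\ru+1)^{-\alpha}\bs{I}_{\ru}$. The resulting inequality is precisely the heavy-tailed form \eqref{eq:lbiter} with these shifted coefficients, so conjugation by $2(\La_{11} - (\ru+1)^{-\alpha}\bs{I}_{\ru})^{1/2} \cdot (\La_{11} - (\ru+1)^{-\alpha}\bs{I}_{\ru})^{1/2}$, the identity expressing $2 \tilde{\Llftop} \tilde{\bs{K}}_t + 2 \tilde{\bs{K}}_t \tilde{\Llftop} - 4 \tilde{\bs{K}}_t^{2}$ as $\tilde{\Llftop}^{2} - (\bs{V}^{-}_t)^{2}$, and the $\tfrac{\lr}{1 - 1.1 \lr}$ inversion inequality all transfer verbatim from Proposition~\ref{prop:boundforiterationsheavy} to yield \eqref{eq:lbitersub}. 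Monotonicity of both bounding iterations is then immediate from Proposition~\ref{prop:monotoneiteration} thanks to the operator-norm control established above.

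\textbf{Why the refinement is essential.} The cruder bound $\G_{t,12} \G_{t,12}^{\top} \preceq \G_{t,11}$ (which uses only $\G_t \preceq \bs{I}_r$) supplies only the linear-term shift and leaves an uncompensated residual of order $\lr (\ru+1)^{-\alpha}$ on the right-hand side, which under the step-size schedule $\eta \asymp (d \ru^{2+\alpha} \log^3 d)^{-1}$ dominates the $O(\lr^{2} \normL^{2} \rs)$ discretization budget appearing in \eqref{eq:lbitersub} by a factor of order $\ru^{2} d \log^{3} d$. Exploiting the idempotence $P^{2} = P$ built into the polar retraction — and hence the cancellation via the $\G_{t,11}^{2}$ term — is precisely what lets the book-keeping close.
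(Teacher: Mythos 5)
Your proof is correct and follows essentially the same route as the paper's: drop the coupling term for the upper bound, and for the lower bound use the PSD fact $\G_{t,11}(\bs{I}_{\ru} - \G_{t,11}) - \G_{t,12}\G_{t,12}^\top \succeq 0$ (which you derive via the idempotency $P^2 = P$, while the paper reads it off from $\G_t(\bs{I}_r - \G_t) \succeq 0$) together with $\La_{22} \preceq (\ru+1)^{-\alpha}\bs{I}$ to shift both the linear and quadratic coefficients by $(\ru+1)^{-\alpha}$. The remaining algebra (change of variables, the $\tfrac{\lr}{1\pm 1.1\lr}$ inversion inequality, operator-norm control of $\bs{V}_t^\pm$) is identical to the paper's heavy-tailed argument, as you observe.
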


\begin{proof}
We first note that since  $\rs \asymp 1$ and $\normL \asymp 1$ for  $\alpha > 0.5$, we have
\eq{
 \normL   \frac{\eta d}{ \sqrt{\rs} } \ll  \frac{1}{\ru^{2+ \alpha} \log^3  d} .
}
Therefore,     \eqref{eq:boundsforLL1} holds for $d \geq \Omega(1)$,  which implies
\eq{
 \norm{ \bs{V}^{-}_t  }_2 \vee \norm{ \bs{V}^{+}_t  }_2 \leq  1 + \frac{0.1    }{\ru^{2 +\alpha} \log^3  d} + \frac{1}{(\ru + 1)^{\alpha}} , ~~ \text{for all }~ t \in \N.  \label{eq:boundsforvlight}
}
For the remaining part, we introduce the following notation,
\eq{
\bs{K}^{-}_t \coloneqq   \left( \bs{\Lambda}_{11} \!  - \!   \tfrac{1}{( \ru + 1 )^{\alpha} }    \bs{I}_{\ru} \right)^{\frac{1}{2}} \G_{t,11}  \left( \bs{\Lambda}_{11} \!  - \!   \tfrac{1}{( \ru + 1 )^{\alpha} }    \bs{I}_{\ru}  \right)^{\frac{1}{2}}     ~~ \text{and} ~~  \bs{K}^{+}_t \coloneqq   \La_{11}^{\frac{1}{2}}   \G_{t,11}   \La_{11}^{\frac{1}{2}}.
}
For the upper bound,  since   $\La_{22}  \succ 0$,  by \eqref{eq:sotermsplussub}  we have
\eq{
\bs{K}^{+}_{t+1} \preceq \bs{K}^{+}_t + \frac{\lr}{2} \left(   \Luftop^2 - (2\bs{K}^{+}_t -   \Luftop )^2 \right)  +  \frac{C}{2} \lr^2 \normL^2 \rs \La_{11}  + \frac{\eta/2}{\sqrt{\rs}}    \La_{11}^{\frac{1}{2}}  \bs{\nu}_{t+1,11}  \La_{11}^{\frac{1}{2}}.
}
By multiplying both sides with $2$ and subtracting  $\Luftop$ from both sides, we get
\eq{
\bs{V}^{+}_{t+1} & \preceq   \bs{V}^{+}_t   - \lr  ( \bs{V}^{+}_t )^2    + \lr     \Luftop^2 +  C \lr^2 \normL^2 \rs \La_{11}  +    \frac{\eta}{\sqrt{\rs}}    \La_{11}^{\frac{1}{2}}  \bs{\nu}_{t+1,11}  \La_{11}^{\frac{1}{2}} \\
& \labelrel\preceq{boundlight:ineqq1}   \bs{V}^{+}_t   -  \frac{\lr}{1+ 1.1 \lr}  ( \bs{V}^{+}_t )^2 \left(  \bs{I}_{\ru} + \frac{\lr}{1+ 1.1 \lr} \bs{V}^{+}_t  \right)^{-1}   + \lr     \Luftop^2  \\
& +  C \lr^2 \normL^2 \rs \La_{11} +    \frac{\eta}{\sqrt{\rs}}    \La_{11}^{\frac{1}{2}} \bs{\nu}_{t+1,11}   \La_{11}^{\frac{1}{2}} \\
& = \bs{V}^{+}_t  \left( \bs{I}_{\ru} +   \frac{\lr}{1+ 1.1 \lr}   \bs{V}^{+}_t  \right)^{-1}     + \lr     \Luftop^2  +  C \lr^2 \normL^2 \rs \La_{11} +    \frac{\eta}{\sqrt{\rs}}    \La_{11}^{\frac{1}{2}} \bs{\nu}_{t+1,11}   \La_{11}^{\frac{1}{2}}, 
}
where we used  \eqref{eq:boundsforvlight} for \eqref{boundlight:ineqq1}.

\smallskip
For the lower bound,  we first observe that  $\G_{t,11} (\bs{I}_{r_u} -  \G_{t,11} ) -    \G_{t,12}   \G_{t,12}^\top \succeq 0$ since it corresponds to the left-top submatrix of $\G_t ( \bs{I}_{r} -  \G_t ).$ Therefore, by  \eqref{eq:sotermsminusslb} 
\eq{
 \G_{t+1, 11} \! \!  &  \succeq    \G_{t,11}  
+\! \lr \Big(  \Llftop    \G_{t,11} \!
+ \! \G_{t,11}  \Llftop \! - \! 2   \G_{t,11}   \La_{11}  \G_{t,11} \!  - \! 2  \G_{t,12}    \La_{22}   \G_{t,12}^\top  \Big) \!     \\
&  - \frac{C}{2} \lr^2 \normL^2 \rs \bs{I}_{\ru} + \! \frac{\eta/2}{\sqrt{\rs}}  \bs{\nu}_{t+1,11} - \frac{2\lr }{(\ru + 1)^\alpha} \left( \G_{t,11} (\bs{I}_{r_u} -  \G_{t,11} ) -    \G_{t,12}   \G_{t,12}^\top \right) \\
&  \labelrel\succeq{boundlight:ineqq2}   \! \G_{t,11} \!
+\! \lr \Big(    \big(\Llftop \! - \! \tfrac{1}{(\ru + 1)^\alpha} \bs{I}_{\ru}  \big)  \G_{t,11} \!
+ \! \G_{t,11}  \big(\Llftop \! - \! \tfrac{1}{(\ru + 1)^\alpha} \bs{I}_{\ru} \big)  \Big)  \\ 
&     -   2 \lr   \G_{t,11}    \big(\La_{11} \! - \! \tfrac{1}{(\ru + 1)^\alpha} \bs{I}_{\ru}  \big)    \G_{t,11}  \!   - \frac{C}{2} \lr^2 \normL^2 \rs \bs{I}_{\ru} + \! \frac{\eta/2}{\sqrt{\rs}}  \bs{\nu}_{t+1,11},  
}
where \eqref{boundlight:ineqq2} follows by  $ \La_{22}    \preceq \tfrac{1}{(\ru +1)^{\alpha} } \bs{I}_{r - \ru}$.  Therefore,  we have
\eq{
\bs{K}^{-}_{t+1} & \succeq \bs{K}^{-}_t + \frac{\lr}{2} \left(  \big(\Llftop - \tfrac{1}{(\ru + 1)^\alpha} \bs{I}_{\ru} \big) ^2 - \big(2\bs{K}^{-}_t -   \big(\Llftop - \tfrac{1}{(\ru + 1)^\alpha} \bs{I}_{\ru} \big)  \big)^2 \right)  \\
& - \frac{C}{2} \lr^2 \normL^2 \rs \big(\La_{11} - \tfrac{1}{(\ru + 1)^\alpha} \bs{I}_{\ru} \big) \\
& +   \frac{\eta/2}{\sqrt{\rs}}    \big(\La_{11} - \tfrac{1}{(\ru + 1)^\alpha} \bs{I}_{\ru} \big)^{\frac{1}{2}}   \bs{\nu}_{t+1,11}  \big(\La_{11} - \tfrac{1}{(\ru + 1)^\alpha}  \bs{I}_{\ru} \big)^{\frac{1}{2}}.
}
By multiplying both sides with $2$ and subtracting  $\Llftop$ from both sides, we get
\eq{
\bs{V}^{-}_{t+1} & \succeq   \bs{V}^{-}_t   - \lr  ( \bs{V}^{-}_t )^2    + \lr    \big(\Llftop - \tfrac{1}{(\ru + 1)^\alpha} \bs{I}_{\ru}  \big)^2  - C \lr^2 \normL^2 \rs \big(\La_{11} - \tfrac{1}{(\ru + 1)^\alpha} \bs{I}_{\ru}  \big) \\
& +   \frac{\eta}{\sqrt{\rs}}    \big(\La_{11} - \tfrac{1}{(\ru + 1)^\alpha} \bs{I}_{\ru}  \big)^{\frac{1}{2}}   \bs{\nu}_{t+1,11}  \big(\La_{11} - \tfrac{1}{(\ru + 1)^\alpha} \bs{I}_{\ru}  \big)^{\frac{1}{2}}  \\
& \labelrel\succeq{boundlight:ineqq3}   \bs{V}^{-}_t   -  \frac{\lr}{1 - 1.1 \lr}  ( \bs{V}^{-}_t )^2 \left(  \bs{I}_{\ru} + \frac{\lr}{1 - 1.1 \lr} \bs{V}^{-}_t  \right)^{-1}     + \lr    \big(\Llftop - \tfrac{1}{(\ru + 1)^\alpha} \bs{I}_{\ru} \big)^2    \\
& - C \lr^2 \normL^2 \rs \big(\La_{11} - \tfrac{1}{(\ru + 1)^\alpha} \bs{I}_{\ru}  \big)  \\
& +     \frac{\eta}{\sqrt{\rs}}    \big(\La_{11} - \tfrac{1}{(\ru + 1)^\alpha}  \bs{I}_{\ru} \big)^{\frac{1}{2}}   \bs{\nu}_{t+1,11}  \big(\La_{11} - \tfrac{1}{(\ru + 1)^\alpha}   \bs{I}_{\ru} \big)^{\frac{1}{2}} \\
& = \bs{V}^{-}_t  \left( \bs{I}_{\ru} +   \frac{\lr}{1 - 1.1 \lr}   \bs{V}^{-}_t  \right)^{-1}     + \lr    \big(\Llftop - \tfrac{1}{(\ru + 1)^\alpha}  \bs{I}_{\ru} \big)^2    \\
& - C \lr^2 \normL^2 \rs \big(\La_{11} - \tfrac{1}{(\ru + 1)^\alpha} \bs{I}_{\ru}  \big)  \\
& +     \frac{\eta}{\sqrt{\rs}}    \big(\La_{11} - \tfrac{1}{(\ru + 1)^\alpha} \bs{I}_{\ru}  \big)^{\frac{1}{2}}   \bs{\nu}_{t+1,11}  \big(\La_{11} - \tfrac{1}{(\ru + 1)^\alpha} \bs{I}_{\ru}  \big)^{\frac{1}{2}}, 
}
where we used  \eqref{eq:boundsforvlight} for \eqref{boundlight:ineqq3}.  The monotonicity of the update follows the same argument in the heavy-tailed case.
\end{proof}

\subsection{Definitions and bounding systems}
\label{sec:defs}
To avoid repetition in the derivations, we introduce the following unified notation:
\eq{
\rk \in \{ r, \ru \}, \quad  \sG_{t} \in \{ \G_t, \G_{t,11} \}, \quad  \upnu_{t} \in \{ \bs{\nu}_t,  \bs{\nu}_{t,11}\}.
}

where each variable will take its first value in the heavy-tailed case and its second value in the light-tailed case.
To avoid repetition in the following sections,   we make the following simplifications by slight abuse of notation:
\eq{
\Llf \leftarrow \begin{cases}
  \Llf, & \alpha \in [0,0.5) \\
  \Llftop - \frac{1}{(\ru + 1)^{\alpha}} \bs{I}_{\ru}, & \alpha > 0.5   
\end{cases}
~~ \text{and} ~~ 
\Lls \leftarrow \begin{cases}
  \La, & \alpha \in [0,0.5) \\
  \La_{11} - \frac{1}{(\ru + 1)^{\alpha}} \bs{I}_{\ru}, & \alpha > 0.5   
\end{cases}
}
and
\eq{
\Luf \leftarrow \begin{cases}
  \Luf, & \alpha \in [0,0.5) \\
  \Luftop, & \alpha > 0.5   
\end{cases}
~~ \text{and} ~~ 
\Lus \leftarrow \begin{cases}
  \La, & \alpha \in [0,0.5) \\
  \La_{11}, & \alpha > 0.5.   
\end{cases}
}
The dimension of each block is  $\ru < r$   for $\alpha > 0.5$ and $r$ for $\alpha \in [0,0.5)$, from which readers can distinguish the light tailed case from the heavy tailed case. Throughout the proof, we will also use constants $\c_d \in o_d(1)$ and $\tilde{C} \in O(1)$ that will be specified later.   Moreover,  we make the following definitions:
\begin{itemize}[leftmargin=*]
\item  \textbf{Noise sequence.} For $\unu{0} = 0$, we define the noise sequence $\unu{t+1} \coloneqq \unu{t}  +  \frac{\eta/2}{\sqrt{\rs}}  \upnu_{t+1}$.
\item   \textbf{Reference sequence.} For   $\T_{0}  = \frac{\c_d \rs}{d} \bs{I}_{\rk}$,  we define  the reference sequence
\eq{
\T_{t+1}  = \T_{t} +  2  (1 -  2 \c_d) \lr  \left(  \Llf   \T_{t}    -      \frac{3 \c_d + 1 }{\c_d (1 - 2 \c_d)}   \Lus  \T^2_{t}   \right). 
}
\item \textbf{Bounding systems.}   We define the lower and upper bounding recursions as  
\eq{
\uV{t+1} \! & =      \uV{t} \left( \bs{I}_{\rk} +  \frac{\lr (1 + 2 \c_d)}{1 - 1.2 \lr}  \uV{t}   \right)^{- 1} \!\!\!   +    \frac{\lr (1 + 2 \c_d)}{1 - 1.2 \lr}  \! \! \left(   \frac{\Llf^{2}}{(1 + 2 \c_d)^2} -   \tilde C   \lr  \normL^2 \rs \Llf  \right),  ~~ ~~ \label{eq:lbsystem}   \\ 
\oV{t+1} & =              \oV{t}  \left( \bs{I}_{\rk}  +  \frac{\lr  (1 - 2 \c_d) }{1 + 1.2 \lr}   \oV{t}  \right)^{- 1} \!\!\!  +   \frac{\lr (1 - 2 \c_d)}{1 + 1.2 \lr}  \! \!  \left(    \frac{\Luf^{2}}{(1 - 2 \c_d)^2} + \tilde C    \lr  \normL^2 \rs \Luf \right). ~~~~   \label{eq:ubsystem}
 } 
where the iterates $\{ \uV{t} \}_{t \in \N}$ and $\{ \oV{t} \}_{t \in \N}$ are functions of the bounding sequences   $\{ \uG{t} \}_{t \in \N}$ and $\{\oG{t} \}_{t \in \N}$   as following:
\begin{alignat}{5}
& \uK{t} \coloneqq   \Lls^{\frac{1}{2}} \uG{t} \Lls^{\frac{1}{2}}  ~~ &&\text{and} ~~ &&\uV{t} = 2 \uK{t} - \frac{\Llf}{1+ 2 \c_d}  ~~ &&\text{and} ~~   &&\uG{0} \preceq  \sG_0 -  \T_0,  \\[0.7em]
& \oK{t} \coloneqq   \Lus^{\frac{1}{2}} \oG{t} \Lus^{\frac{1}{2}}  ~~ &&\text{and} ~~ &&\oV{t} = 2 \oK{t} -  \frac{\Luf}{1 - 2 \c_d} ~~ &&\text{and} ~~   &&\oG{0} \succeq  \sG_0  + \T_0.
\end{alignat} 
\item \textbf{Stopping times.} We define a sequence of events $\{  \mathcal{E}_t \}_{t \geq 0}$
\eq{
\mathcal{E}_{t}  \! \coloneqq \!   \begin{cases}
  \left \{    -    \c_d r^{\frac{-\alpha}{2}}  \T_{t}    \preceq   \unu{t}  \preceq  \c_d  r^{\frac{-\alpha}{2}}   \T_{t} \right \}   \! \cap \!  \left \{    -    \c_d^2 r^{-  \alpha}  \T_{t}    \preceq  \bs{\Lambda}^{\frac{1}{2}}  \unu{t} \bs{\Lambda}^{\frac{1}{2}}  \preceq  \c_d^2 r^{- \alpha}   \T_{t} \right \},  & \hspace{-0.75em}  \alpha \in [0,0.5) \\
  \left \{    -    \c_d \ru^{\frac{-\alpha}{2}}  \T_{t}    \preceq   \unu{t}  \preceq  \c_d  \ru^{\frac{ - \alpha}{2}}   \T_{t} \right \}    \! \cap \!  \left \{      \frac{-   \c_d^2}{4} \ru^{- \alpha}  \T_{t}    \preceq  \bs{\Lambda}_{11}^{\frac{1}{2}}  \unu{t} \bs{\Lambda}_{11}^{\frac{1}{2}}  \preceq  \frac{\c_d^2}{4}  \ru^{-  \alpha }   \T_{t} \right \},    &   \hspace{-0.75em}\alpha > 0.5.
\end{cases}
}
We define the stopping times
\eq{
\mathcal{T}_{\text{noise}}(\omega)   \coloneqq \inf \left \{  t \geq 0  ~ \middle \vert ~  \omega \not \in \mathcal{E}_t  \right \} \wedge d^3  ~~ \text{and} ~~
\mathcal{T}_{\text{bounded}}  \coloneqq \inf \left  \{ t \geq 0  ~ \middle \vert ~ \norm{\uG{t}}_2 \vee \norm{\oG{t}}_2 \geq 1.5   \right  \}, \label{def:Tnoise}
}
and  
\eq{
\mathcal{T}_{\text{bad}} \coloneqq   \mathcal{T}_{\text{noise}}  \wedge  \mathcal{T}_{\text{bounded}}   \wedge \{ t \geq 0: \norm{\T_t}_2 >1.2  \c_d \}.    \label{def:Tbad}
}
\end{itemize}

The main result of this section is the following:
\begin{proposition}
\label{prop:boundstoppedprocess}
Let $\c_d$ satisfy \eqref{eq:cdcond} and consider d large enough so that $\c_d  \leq \tfrac{1}{50}.$   Under the learning rate conditions considered in Propositions  \ref{prop:boundforiterationsheavy} and \ref{prop:boundforiterationslight}, we have for $\tilde C > 1 \vee \Omega(1)$
\eq{
\uG{t \wedge \mathcal{T}_{\text{bad}}}  + \T_{t \wedge\mathcal{T}_{\text{bad}}} +  \unu{t \wedge \mathcal{T}_{\text{bad}}}     \preceq \sG_{t \wedge \mathcal{T}_{\text{bad}}}   \preceq \oG{t \wedge \mathcal{T}_{\text{bad}}}  -   \T_{t \wedge\mathcal{T}_{\text{bad}}}  +   \unu{t \wedge\mathcal{T}_{\text{bad}}}.
}
\end{proposition}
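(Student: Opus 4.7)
I would prove the sandwich by induction on $t$, using the two monotone recursions established in Propositions~\ref{prop:boundforiterationsheavy} and~\ref{prop:boundforiterationslight} together with the monotonicity property in Proposition~\ref{prop:monotoneiteration}. The base case $t=0$ is immediate: by construction $\uG{0}\preceq \sG_0-\T_0$, $\oG{0}\succeq \sG_0+\T_0$, and $\unu{0}=0$, so the sandwich holds. For the inductive step, I would work on the event $\{t<\mathcal{T}_{\text{bad}}\}$, on which four things hold simultaneously: the noise bound $\mathcal{E}_t$, the boundedness $\norm{\uG{t}}_2\vee\norm{\oG{t}}_2\le 1.5$, the bound $\norm{\T_t}_2\le 1.2\c_d$, and (by induction) the sandwich itself.

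The plan for the lower inequality is as follows. Starting from \eqref{eq:lbiter}, I rewrite the one-step recursion for $\bs{V}^-_t := 2\Lls^{1/2}\sG_t\Lls^{1/2}-\Llf$ as
\begin{equation*}
\bs{V}^-_{t+1} \succeq \bs{V}^-_t\Big(\bs{I}+\tfrac{\lr}{1-1.1\lr}\bs{V}^-_t\Big)^{-1} + \lr\Llf^{2} - C\lr^2\normL^2\rs\,\Lls + \tfrac{\eta}{\sqrt{\rs}}\Lls^{1/2}\upnu_{t+1}\Lls^{1/2}.
\end{equation*}
The corresponding object for the lower reference is $\uV{t}+2\Lls^{1/2}(\T_t+\unu{t})\Lls^{1/2}$, whose one-step update I would derive from \eqref{eq:lbsystem} together with the definition of $\T_t$ and of $\unu{t+1}=\unu{t}+\tfrac{\eta/2}{\sqrt{\rs}}\upnu_{t+1}$. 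The comparison then reduces to verifying three algebraic facts, in order: (i) the Riccati-type map $V\mapsto V(\bs{I}+\alpha V)^{-1}$ is matrix-monotone in $V$ (Proposition~\ref{prop:monotoneiteration}) and monotone-decreasing in $\alpha\ge 0$, which lets me replace the actual prefactor $\tfrac{\lr}{1-1.1\lr}$ by the inflated $\tfrac{\lr(1+2\c_d)}{1-1.2\lr}$ appearing in \eqref{eq:lbsystem} at the cost of a lower bound; (ii) the drift discrepancy between $\lr\Llf^2-C\lr^2\normL^2\rs\Lls$ and the corresponding term in \eqref{eq:lbsystem}, plus the cross terms produced by $(2\Lls^{1/2}\T_t\Lls^{1/2})$ interacting with $\bs{V}^-_t$ through the Riccati map, is absorbed by the deterministic increment $\T_{t+1}-\T_t = 2(1-2\c_d)\lr(\Llf\T_t - \tfrac{3\c_d+1}{\c_d(1-2\c_d)}\Lus\T_t^2)$, using the event $\{\norm{\T_t}_2\le 1.2\c_d\}$ to bound the quadratic cross terms; and (iii) the stochastic discrepancy $\tfrac{\eta/2}{\sqrt{\rs}}\upnu_{t+1}$, together with the accumulated error from $\unu{t}$ appearing in the Riccati cross terms, is absorbed using the bounds $\pm\c_d r^{-\alpha/2}\T_t$ and $\pm\c_d^2 r^{-\alpha}\T_t$ that hold on $\mathcal{E}_t$.

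The upper inequality follows the same template, but using \eqref{eq:ubiter}, the map $V\mapsto V(\bs{I}+\tfrac{\lr(1-2\c_d)}{1+1.2\lr}V)^{-1}$ (smaller effective step, hence an upper bound), and the sign-flipped perturbation $+\tilde C\lr\normL^2\rs\Luf$. The light-tailed analogue is structurally identical: the only change is that one restricts to the top-left $\ru\times\ru$ block via the submatrix recursions \eqref{eq:sotermsminusslb}--\eqref{eq:sotermsplussub} and uses $\La_{22}\preceq\tfrac{1}{(\ru+1)^\alpha}\bs{I}$ to control the off-diagonal $\sG_{t,12}\La_{22}\sG_{t,12}^\top$ term, which is exactly the perturbation absorbed into the shifted $\Llf-\tfrac{1}{(\ru+1)^\alpha}\bs{I}$ used in Proposition~\ref{prop:boundforiterationslight}.

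The main obstacle I anticipate is step (ii): expanding the Riccati term $\sG_t\Lus\sG_t$ around $\sG_t = \uG{t}+\T_t+\unu{t}$ (or its upper counterpart) produces cross terms of the form $\T_t\Lus\uG{t}$, $\uG{t}\Lus\T_t$, and $\T_t\Lus\T_t$, and showing that these are precisely matched by the drift of $\T_{t+1}-\T_t$ (with its carefully tuned coefficients $(1-2\c_d)$ and $\tfrac{3\c_d+1}{\c_d(1-2\c_d)}$) is the heart of the algebra. This is also what dictates the exact form of the reference sequence and the specific $\c_d$-dependent prefactors appearing in \eqref{eq:lbsystem}--\eqref{eq:ubsystem}. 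Once this absorption identity is verified, the induction closes and the stopping-time truncation at $\mathcal{T}_{\text{bad}}$ ensures all the invariants remain valid throughout.
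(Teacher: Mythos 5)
Your overall architecture matches the paper's: induction on $t$ stopped at $\mathcal{T}_{\text{bad}}$, base case from the initialization of $\uG{0},\oG{0}$, and a one-step comparison that substitutes the induction hypothesis into the monotone Riccati map, then absorbs the perturbation into the drift of $\T_t$ and the noise tolerance on $\mathcal{E}_t$. You also correctly identify the light-tailed restriction to the top-left block and the role of $\La_{22}\preceq(\ru+1)^{-\alpha}\bs{I}$.

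However, your step (i) does not close as stated, and the gap is precisely at the hinge with your step (ii). You propose to convert the actual prefactor $\tfrac{\lr}{1-1.1\lr}$ into the reference prefactor $\tfrac{\lr(1+2\c_d)}{1-1.2\lr}$ by monotonicity of $V\mapsto V(\bs{I}+\alpha V)^{-1}$ in $\alpha$. That monotonicity is true (the eigenvalues $\lambda/(1+\alpha\lambda)$ are decreasing in $\alpha$), so the step goes through \emph{directionally}, but it throws away exactly the slack you will need. When you then expand the correction term $(\bs{I}+\eta\bs{V})^{-1}\bs{\zeta}(\bs{I}+\eta\grave{\bs{V}})^{-1}$ produced by the substitution (Proposition~\ref{prop:monotoneresidual} in the paper), Young-type bounds on the cross terms $\bs{V}\bs{\zeta}+\bs{\zeta}\bs{V}$ unavoidably generate quadratic terms of order $\c_d\lr\bs{V}_t^2$ (see the passage culminating in the $2\c_d\oV{t}^2$ term after \eqref{bsyms:ineqq3}). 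These $\c_d V^2$ terms cannot be absorbed into $\T_{t+1}-\T_t$ (which controls only $\T_t$-linear and $\T_t$-quadratic pieces) nor into the noise budget; they can only be paid for by the step-size change itself. This is what Proposition~\ref{prop:taylorresidual} does: it establishes the stronger two-sided inequality $V(\bs{I}+\eta V)^{-1}\mp\varepsilon\eta V^2 \gtrless V(\bs{I}+\eta(1\pm\varepsilon)V)^{-1}$ up to an $O(\eta^2\D_1)$ error, so the $\varepsilon\eta V^2$ slack is available to absorb the cross-term residue. If you decouple (i) and (ii) as you propose — change the step size first, absorb cross terms second — you will find a leftover $\c_d V^2$ term with nowhere to go. The two steps must be fused.

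One further bookkeeping point worth flagging: the paper's induction hypothesis is not literally $\sG_t\preceq\oG{t}-\T_t+\unu{t}$; it is $\V_t^+\preceq\oV{t}+\oze{t}-\oB{t}+\tfrac{2\c_d\Luf}{1-2\c_d}$ (and symmetrically for the lower bound). The additive constant exactly cancels the mismatch between the normalizations $\V_t^+=2\Lus^{1/2}\sG_t\Lus^{1/2}-\Luf$ and $\oV{t}=2\oK{t}-\tfrac{\Luf}{1-2\c_d}$, so that the hypothesis is \emph{equivalent} to the claimed sandwich. Your outline, which works directly with $\sG_t$, $\oG{t}$, $\T_t$, $\unu{t}$, would need this change of variables to make the Riccati algebra tractable, since the monotone-iteration framework and the propagation inequalities \eqref{eq:lbiter}--\eqref{eq:ubiter} are all phrased in the $\bs{V}$ coordinates.
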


Before starting the proof, we provide an auxiliary statement. 

\begin{lemma}
\label{lem:noisecorollary}
We consider  the learning rate conditions considered in Propositions  \ref{prop:boundforiterationsheavy} and \ref{prop:boundforiterationslight}  with 
\eq{
\c_d \ll \begin{cases}
\tfrac{1}{\log d}, & \alpha \in [0,0.5) \\
\tfrac{\ru^{-1}}{\log d}, & \alpha > 0.5.
\end{cases} \label{eq:cdcond}
}
The event $\mathcal{E}_{t}$ implies for  $d \geq \Omega(1)$ and $t \leq    \mathcal{T}_{\text{bounded}}   \wedge   \{ t : \norm{\T_t}_2 >1.2  \c_d \}$ that
\begin{enumerate}
\item  $- 3  \c_d  \Llf^{\frac{1}{2}} \T_{t}  \Llf^{\frac{1}{2}} \preceq  \Llf  \unu{t} + \unu{t} \Llf $
\item  $ \Luf \unu{t} + \unu{t} \Luf     \preceq  3 \c_d   \Luf^{\frac{1}{2}}  \T_{t}  \Luf^{\frac{1}{2}}$
\item  $  \big( \Lls^{\frac{1}{2}} \unu{t}    \Lls^{\frac{1}{2}} \big)^2   \preceq \frac{\c^2_d}{4}   \Llf  \T_{t}  \Llf$
\item  $  \big( \Lus^{\frac{1}{2}} \unu{t}    \Lus^{\frac{1}{2}} \big)^2   \preceq \frac{\c^2_d}{4}   \Luf  \T_{t}  \Luf$  
% \item    $-5 c_d   \Lus^{\frac{1}{2}}  \T_{n}  \Lus^{\frac{1}{2}}  \preceq \oG{t}   \Lus \unu{n} +  \unu{n}   \Lus   \oG{t} $
\end{enumerate}
\end{lemma}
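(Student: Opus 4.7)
The plan is to reduce to a $\T_t$-free setting and then treat (1)--(2) and (3)--(4) by different Cauchy--Schwarz-type arguments. Since $\T_t,\bs{\Lambda},\Llf,\Luf,\Lls,\Lus$ are all diagonal and hence commute, and $\T_t\succ 0$ under the reference recursion (the entries are non-decreasing from $\T_0 = \tfrac{\c_d\rs}{d}\bs{I}$ as long as $\norm{\T_t}_2\leq 1.2\c_d$), I substitute $\bs{Z}:=\T_t^{-1/2}\unu{t}\T_t^{-1/2}$. The two conditions in $\mathcal{E}_t$ then read $\norm{\bs{Z}}_2\leq\c_d\rk^{-\alpha/2}$ and $\norm{\bs{\Lambda}^{1/2}\bs{Z}\bs{\Lambda}^{1/2}}_2\leq c_{\bs{\Lambda}}\c_d^2\rk^{-\alpha}$, with $(\rk,c_{\bs{\Lambda}})=(r,1)$ heavy and $(\ru,1/4)$ light, and $\bs{\Lambda}$ denoting $\bs{\Lambda}$ or $\bs{\Lambda}_{11}$ respectively. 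By commutativity the four target inequalities become, respectively, $\Llf\bs{Z}+\bs{Z}\Llf\succeq -3\c_d\Llf$, $\Luf\bs{Z}+\bs{Z}\Luf\preceq 3\c_d\Luf$, $(\Lls^{1/2}\bs{Z}\Lls^{1/2})^2\preceq\tfrac{\c_d^2}{4}\Llf^2$, and $(\Lus^{1/2}\bs{Z}\Lus^{1/2})^2\preceq\tfrac{\c_d^2}{4}\Luf^2$.

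\paragraph{Bounds (3)--(4).} I invoke the classical sandwich lemma: if $-\bs{T}\preceq\bs{X}\preceq\bs{T}$ with $\bs{T}\succeq 0$, then $\bs{X}^2\preceq\norm{\bs{T}}_2\bs{T}$ (write $\bs{X}=\bs{T}^{1/2}\bs{Y}\bs{T}^{1/2}$ with $\norm{\bs{Y}}_2\leq 1$ and compute). Applied to $\bs{X}=\bs{\Lambda}^{1/2}\unu{t}\bs{\Lambda}^{1/2}$ with $\bs{T}=c_{\bs{\Lambda}}\c_d^2\rk^{-\alpha}\T_t$, this yields $(\bs{\Lambda}^{1/2}\unu{t}\bs{\Lambda}^{1/2})^2\preceq c_{\bs{\Lambda}}^2\c_d^4\rk^{-2\alpha}\norm{\T_t}_2\T_t$. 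For the light-tailed (3), where $\Lls\neq\bs{\Lambda}_{11}$, I write $\Lls^{1/2}\unu{t}\Lls^{1/2}=\bs{S}\bs{B}\bs{S}$ with $\bs{S}:=\Lls^{1/2}\bs{\Lambda}_{11}^{-1/2}\preceq\bs{I}$ and $\bs{B}:=\bs{\Lambda}_{11}^{1/2}\unu{t}\bs{\Lambda}_{11}^{1/2}$, then use $\bs{B}(\bs{I}-\bs{S}^2)\bs{B}\succeq 0$ to obtain $\bs{S}\bs{B}\bs{S}^2\bs{B}\bs{S}\preceq\bs{S}\bs{B}^2\bs{S}$ and commute $\bs{S}^2$ through. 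Comparison with $\tfrac{\c_d^2}{4}\Llf^2\T_t$ (and $\tfrac{\c_d^2}{4}\Luf^2\T_t$ for (4)) uses $\lambda_{\min}(\Llf)^2\gtrsim r^{-2\alpha}$ (heavy) or $\gtrsim\alpha^2\ru^{-2\alpha-2}$ (light) together with $\norm{\T_t}_2\leq 1.2\c_d$, collapsing to scalar conditions of the form $\c_d^3\lesssim\alpha^2\rk^{-2}$, which are implied by \eqref{eq:cdcond}.

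\paragraph{Bounds (1)--(2).} The key identity is $v^\top(\Llf\bs{Z}+\bs{Z}\Llf)v=2\langle\Llf^{1/2}v,\Llf^{1/2}\bs{Z} v\rangle$. By Cauchy--Schwarz and the factorization $\Llf^{1/2}\bs{Z}=(\Llf^{1/2}\bs{\Lambda}^{-1/2})(\bs{\Lambda}^{1/2}\bs{Z}\bs{\Lambda}^{1/2})\bs{\Lambda}^{-1/2}$ (all diagonal and commuting), using $\norm{\Llf^{1/2}\bs{\Lambda}^{-1/2}}_2\leq 1$ (from $\Llf\preceq\bs{\Lambda}$) and the second noise bound, I obtain
\[
|v^\top\Llf\bs{Z} v|\leq c_{\bs{\Lambda}}\c_d^2\rk^{-\alpha}\sqrt{v^\top\Llf v}\sqrt{v^\top\bs{\Lambda}^{-1}v}.
\]
The maximum ratio $v^\top\bs{\Lambda}^{-1}v/v^\top\Llf v=\max_i\lambda_i^{-1}/\Llf_i$ equals $r^{2\alpha}(1+o(1))$ in the heavy case and $\ru^{2\alpha+1}/\alpha\,(1+o(1))$ in the light case (attained at $e_r$ and $e_\ru$ respectively, using $\Llf_\ru\approx\alpha\ru^{-\alpha-1}$ by a Taylor expansion of $(\ru+1)^{-\alpha}$). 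This gives $|v^\top\Llf\bs{Z} v|\lesssim\c_d^2 v^\top\Llf v$ (heavy) or $\lesssim(\c_d^2\ru^{1/2}/\sqrt{\alpha})\,v^\top\Llf v$ (light); the required $3\c_d$-bound on $\Llf\bs{Z}+\bs{Z}\Llf$ then follows from $\c_d\lesssim 1$ (heavy) or $\c_d\ru^{1/2}\lesssim\sqrt{\alpha}$ (light), both implied by \eqref{eq:cdcond}. Bound (2) is symmetric with $\Luf$ in place of $\Llf$, noting $\norm{\Luf^{1/2}\bs{\Lambda}^{-1/2}}_2=\sqrt{1+\epsilon/\lambda_{\min}(\bs{\Lambda})}=1+o(1)$ since the shift in $\Luf$ is much smaller than $\lambda_{\min}(\bs{\Lambda})$ on the relevant block.

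\paragraph{Main difficulty.} The delicate step is (1)--(2) in the light-tailed regime: $\Llf$ has smallest eigenvalue only $\alpha\ru^{-\alpha-1}$, so the naive operator-norm bound $\norm{\Llf\bs{Z}+\bs{Z}\Llf}_2\leq 2\norm{\Llf}_2\norm{\bs{Z}}_2=O(\c_d\ru^{-\alpha/2})$ overshoots the $e_\ru$-direction target $3\c_d\Llf_\ru=O(\c_d\ru^{-\alpha-1})$ by a factor of order $\ru^{\alpha/2+1}$. The Cauchy--Schwarz argument recovers this loss by exploiting that $\bs{\Lambda}^{1/2}\bs{Z}\bs{\Lambda}^{1/2}$ is quadratically (rather than linearly) small in $\c_d$: the extra factor of $\c_d$ absorbs the eigenvalue-ratio blowup $\ru^{2\alpha+1}/\alpha$, leaving the margin $\c_d\ru^{1/2}\ll 1$ ensured by $\c_d\ll\ru^{-1}/\log d$ from \eqref{eq:cdcond}.
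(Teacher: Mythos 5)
Your proposal is correct and reaches the same conclusion as the paper, but it reorganizes the argument. The paper's proof applies the matrix Young inequality (Proposition~\ref{prop:matrixyounginequality}) directly to $\bs{\Lambda}\widetilde{\bs{\nu}}_t+\widetilde{\bs{\nu}}_t\bs{\Lambda}$, writing it as $\preceq \c_d^2\bs{\Lambda}+\c_d^{-2}\widetilde{\bs{\nu}}_t\bs{\Lambda}\widetilde{\bs{\nu}}_t$ and then using the $\c_d^2\rk^{-\alpha}$ bound on $\bs{\Lambda}^{1/2}\widetilde{\bs{\nu}}_t\bs{\Lambda}^{1/2}$; your Cauchy--Schwarz estimate on $\langle\Llf^{1/2}v,\Llf^{1/2}\bs{Z} v\rangle$ followed by the eigenvalue-ratio $\max_i\lambda_i^{-1}/(\Llf)_i$ is essentially the polarization equivalent of this Young step, just worked out in quadratic-form language. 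For items (3)--(4) the paper uses the one-line observation $\Lls\preceq\bs{I}$ to pass $(\Lls^{1/2}\unu{t}\Lls^{1/2})^2\preceq\Lls^{1/2}\unu{t}^2\Lls^{1/2}$, while you invoke a sandwich lemma $\bs{X}^2\preceq\norm{\bs{T}}_2\bs{T}$; both recover the crucial $\norm{\T_t}_2\leq 1.2\c_d$ factor. Your ``Main difficulty'' paragraph correctly identifies the mechanism at work in the light-tailed regime: the $\c_d^2$-scaling of the bound on $\bs{\Lambda}^{1/2}\widetilde{\bs{\nu}}_t\bs{\Lambda}^{1/2}$, compared with only $\c_d^1$ for $\widetilde{\bs{\nu}}_t$ itself, is what absorbs the small eigenvalue $\alpha\ru^{-\alpha-1}$ of $\Llf$.

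Two expositional points should be corrected. First, the claimed reduction of (3)--(4) to the $\T_t$-free inequality $(\Lls^{1/2}\bs{Z}\Lls^{1/2})^2\preceq\tfrac{\c_d^2}{4}\Llf^2$ is not actually obtained by conjugation: since $\bs{Z}=\T_t^{-1/2}\unu{t}\T_t^{-1/2}$ does not commute with $\T_t$, conjugating $(\Lls^{1/2}\unu{t}\Lls^{1/2})^2\preceq\tfrac{\c_d^2}{4}\Llf\T_t\Llf$ by $\T_t^{-1/2}$ gives $\Lls^{1/2}\bs{Z}\Lls^{1/2}\,\T_t\,\Lls^{1/2}\bs{Z}\Lls^{1/2}\preceq\tfrac{\c_d^2}{4}\Llf^2$, with $\T_t$ remaining in the middle. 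Your subsequent argument is unaffected because you apply the sandwich lemma to $\bs{\Lambda}^{1/2}\unu{t}\bs{\Lambda}^{1/2}$ directly rather than to its $\bs{Z}$-form, but the stated reduction should be removed or fixed. Second, the scalar condition you write for (3)--(4), ``$\c_d^3\lesssim\alpha^2\rk^{-2}$,'' cannot be implied by~\eqref{eq:cdcond} in the heavy-tailed case: there $\rk=r\asymp d^\beta$ grows polynomially while $\c_d=1/\log^{3.5}d$, so $\c_d^3\gg r^{-2}$. The condition one actually needs after comparing $c_{\bs{\Lambda}}^2\c_d^4\rk^{-2\alpha}\norm{\T_t}_2\T_t$ against $\tfrac{\c_d^2}{4}\Llf^2\T_t$ entry-wise is the much weaker $\c_d^3\lesssim 1$ (heavy) and $\c_d^3\lesssim\alpha/\ru$ (light), both of which \eqref{eq:cdcond} does ensure, so the proof closes; the stated form is simply a miscalculation.
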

\begin{proof}
 For notational convenience, we define  $\widetilde{\bs{\nu}}_{t} \coloneqq  \T_t^{\frac{-1}{2}} \unu{t}  \T_t^{\frac{-1}{2}}$.   We initially observe that  $\mathcal{E}_{t}$ implies for $\alpha \in [0.0.5)$ that
\eq{
 \bs{\Lambda} \widetilde{\bs{\nu}}_{t} +  \widetilde{\bs{\nu}}_{t}   \bs{\Lambda}
&\preceq  \c_d^2  \bs{\Lambda} + \frac{1}{\c^2_d} \bs{\Lambda}^{\frac{-1}{2}} \left( \bs{\Lambda}^{\frac{1}{2}} \widetilde{\bs{\nu}}_{t}  \bs{\Lambda} ^{\frac{1}{2}} \right)^2   \bs{\Lambda}^{\frac{-1}{2}} \preceq   \c^2_d \bs{\Lambda} + \c^2_d r^{- \alpha} \bs{I}_{r} \\
\bs{\Lambda}  \widetilde{\bs{\nu}}_{t}  +  \widetilde{\bs{\nu}}_{t}   \bs{\Lambda}
&\succeq  - \c_d^2  \bs{\Lambda} - \frac{1}{\c^2_d} \bs{\Lambda}^{\frac{-1}{2}} \left( \bs{\Lambda}^{\frac{1}{2}}  \widetilde{\bs{\nu}}_{t} \bs{\Lambda} ^{\frac{1}{2}} \right)^2   \bs{\Lambda}^{\frac{-1}{2}} \succeq -   \c^2_d \bs{\Lambda} - \c^2_d r^{- \alpha} \bs{I}_{r}. 
}
For  $\alpha > 0.5$, these bounds become
\eq{
 \bs{\Lambda}_{11} \widetilde{\bs{\nu}}_{t} +  \widetilde{\bs{\nu}}_{t}   \bs{\Lambda}_{11}
&\preceq  \frac{\c_d^2}{4}   \bs{\Lambda}_{11} + \frac{4}{\c^2_d} \bs{\Lambda}_{11}^{\frac{-1}{2}} \left( \bs{\Lambda}_{11}^{\frac{1}{2}} \widetilde{\bs{\nu}}_{t}  \bs{\Lambda}_{11} ^{\frac{1}{2}} \right)^2   \bs{\Lambda}_{11}^{\frac{-1}{2}} 
\preceq   \frac{\c^2_d}{4} \bs{\Lambda}_{11} + \frac{\c^2_d}{4} \ru^{- \alpha} \bs{I}_{\ru} \\
\bs{\Lambda}_{11}  \widetilde{\bs{\nu}}_{t}  +  \widetilde{\bs{\nu}}_{t}   \bs{\Lambda}_{11}
&\succeq  - \frac{\c_d^2}{4} \ru^{-1}  \bs{\Lambda}_{11} - \frac{4}{\c^2_d} \bs{\Lambda}_{11}^{\frac{-1}{2}} \left( \bs{\Lambda}_{11}^{\frac{1}{2}}  \widetilde{\bs{\nu}}_{t} \bs{\Lambda}_{11}^{\frac{1}{2}} \right)^2   \bs{\Lambda}_{11}^{\frac{-1}{2}} \succeq -   \frac{\c^2_d}{4} \ru^{-1}\bs{\Lambda}_{11} - \frac{\c^2_d}{4} \ru^{-  \alpha} \bs{I}_{\ru}. 
}
In the following, we will use these bounds.  For the first item, we have
\eq{
\Llf   \widetilde{\bs{\nu}}_{t}  +  \widetilde{\bs{\nu}}_{t} \Llf 
& \succeq  \begin{cases}
- \c_d \left( \c_d \bs{\Lambda} + \c_d r^{- \alpha} \bs{I}_{r} + r^{ \frac{- \alpha}{2}}  C  \normL   \frac{\eta d}{\sqrt{\rs} } \bs{I}_{r} \right), & \alpha \in [0,0.5) \\[0.3em]
- \c_d \left( \frac{\c_d}{4} \bs{\Lambda}_{11} + \frac{\c_d}{4}  \ru^{- \alpha} \bs{I}_{\ru} +  \ru^{ \frac{ - \alpha}{2}}  C  \normL    \frac{\eta d}{\sqrt{\rs}} \bs{I}_{\ru}     \right), & \alpha > 0.5
\end{cases} \\
& \succeq   - 3 \c_d  \Llf.  
}
For the second item, we have
\eq{
\Luf    \widetilde{\bs{\nu}}_{t}  +   \widetilde{\bs{\nu}}_{t} \Luf  
& \preceq
\begin{cases}
 \c_d \left(   \c_d \bs{\Lambda} + \c_d r^{- \alpha} \bs{I}_{r}  + r^{\frac{- \alpha}{2}}  C  \normL     \frac{\eta d}{\sqrt{\rs}} \bs{I}_{r}  \right) , & \alpha \in [0,0.5) \\[0.3em]
 \c_d \left(   \frac{\c_d}{4} \bs{\Lambda}_{11} + \frac{\c_d}{4} \ru^{-  \alpha} \bs{I}_{\ru}  + \ru^{ \frac{- \alpha}{2}}  C  \normL     \frac{\eta d}{\sqrt{\rs}} \bs{I}_{\ru}   \right) , & \alpha > 0.5
\end{cases} \\
& \preceq 3  \c_d \Luf.  
}
For the third item, we immediately observe that  $\big( \Lls^{\frac{1}{2}} \unu{t}    \Lls^{\frac{1}{2}} \big)^2 \preceq  \Lls^{\frac{1}{2}} \unu{t}^2  \Lls^{\frac{1}{2}}$ . Therefore,
\eq{
\T_t^{\frac{-1}{2}} \Lls^{\frac{1}{2}} \unu{t}^2  \Lls^{\frac{1}{2}} \T_t^{\frac{-1}{2}} \preceq 1.2 \c_d   \Lls^{\frac{1}{2}}  \widetilde{\bs{\nu}}_{t}^2  \Lls^{\frac{1}{2}} \preceq 1.2 \c_d^3  \rk^{- \alpha}  \Lls 
\preceq \frac{\c_d^2}{4}   \Llf ^2.
}
For the fourth item, we  observe that  
\eq{
\big( \Lus^{\frac{1}{2}} \unu{t}    \Lus^{\frac{1}{2}} \big)^2 =  \Lus^{\frac{1}{2}} \unu{t} \Lus  \unu{t} \Lus^{\frac{1}{2}} \preceq  \left(1 + \frac{0.1 \rk^{-\alpha}}{\log^3 d} \right)  \Lus^{\frac{1}{2}} \unu{t}^2  \Lus^{\frac{1}{2}}. 
}
Therefore
\eq{
 \left(1 + \frac{0.1 \rk^{-\alpha}}{\log^3 d} \right) \T_t^{\frac{-1}{2}} \Lus^{\frac{1}{2}} \unu{t}^2  \Lus^{\frac{1}{2}} \T_t^{\frac{-1}{2}} \preceq 1.25 \c_d   \Lus^{\frac{1}{2}}  \widetilde{\bs{\nu}}_{t}^2  \Lus^{\frac{1}{2}} \preceq 1.25 \c_d^3 \rk^{-\alpha}  \Lus \preceq \frac{\c_d^2}{4}   \Luf ^2.
}
\end{proof}

\subsubsection{Proof of Proposition \ref{prop:boundstoppedprocess}}
\begin{proof}
For the proof, we introduce the following notations  
\eq{
\uze{t} \coloneqq 2   \Lls^{\frac{1}{2}} \unu{t}  \Lls^{\frac{1}{2}}   ~~ \text{and} ~~ \oze{t}  \coloneqq  2 \Lus^{\frac{1}{2}} \unu{t}  \Lus^{\frac{1}{2}} ~~ \text{and} ~~
 \uB{t} \coloneqq 2 \Lls^{\frac{1}{2}} \T_t   \Lls^{\frac{1}{2}} ~~ \text{and} ~~   \oB{t} \coloneqq 2 \Lus^{\frac{1}{2}} \T_t   \Lus^{\frac{1}{2}}  .
}
Using this notation, we obtain:
\eq{
\uB{t+1} & \preceq \uB{t} +    \frac{2  (1 - 2 \c_d)   \lr}{1 - 1.1 \lr}   \left(   \Llf \uB{t}   -   \frac{1. 5 \c_d + 0.5 }{\c_d (1 - 2 \c_d)} \uB{t}^2  \right)   + 10  \lr^2 \c_d \Llf   \label{eq:Bund} \\
\oB{t+1}&  \preceq  \oB{t} +   \frac{2  (1 - 2 \c_d)   \lr}{1 + 1.1 \lr}   \left(   \Luf \oB{t}   -   \frac{1. 5 \c_d + 0.5}{\c_d (1 - 2 \c_d)} \oB{t}^2  \right)    + 10  \lr^2 \c_d \Luf  \label{eq:Bov}
}
Before proceeding with the proof, we observe that  the following inequalities hold:
\eq{
\norm{\Lls^{-1} \Llf}_2 \leq 1, ~   \norm{\Llf^{-1} \Lls}_2 \leq  \frac{1}{1 - \frac{0.1}{\log^4  d}} ~~ \text{and} ~~  \norm{\Lus^{-1} \Luf}_2 \leq 1, ~  \norm{\Luf^{-1} \Lus}_2 \leq  \frac{1}{1 - \frac{0.1}{\log^4  d}}. \label{eq:boundsforratios}
}
These bounds will be used in the following whenever we apply Propositions  \ref{prop:monotoneresidual} and \ref{prop:taylorresidual}, without explicitly restating them each time. We will establish the upper and lower bounds simultaneously for $\rk \in \{ r, \ru \}$.
\paragraph{Upper bound proof:}  We will use proof by induction. Specifically, we will show that  for $t <   \mathcal{T}_{\text{bad}}$,
\eq{
\V^{+}_t  \preceq \oV{t} + \oze{t}  - \oB{t} + \frac{2 \c_d \Luf}{1 - 2 \c_d}  ~~ \Rightarrow  ~~    \V^{+}_{t+1} \preceq  \oV{t+1} + \oze{t+1}  - \oB{t+1} + \frac{2 \c_d \Luf}{1 - 2 \c_d}.  \label{eq:inductionstepub}
}
Since the base case holds at $t = 0$ and  $\mathcal{T}_{\text{bad}} > 0$,  it remains to prove  \eqref{eq:inductionstepub}.  By \eqref{eq:ubiter},  we have 
\eq{
 \bs{V}^{+}_{t+1} &   \preceq      \left( \oV{t} + \oze{t} - \oB{t} + \frac{2 \c_d \Luf}{1 - 2 \c_d} \right) \! \left(  \! \bs{I}_{\rk}  \! +  \! \frac{ \lr}{1+ 1.1  \lr}     ( \oV{t} \! + \! \oze{t} \! -\! \oB{t} \! + \! \frac{2 \c_d \Luf}{1 - 2 \c_d})   \right)^{-1}  \\
 &  +   \lr   \Luf^2       +  C \lr^2 \normL^2 \rs \Lus  +  \frac{\eta}{\sqrt{\rs}}     \Lus^{\frac{1}{2}} \upnu_{t+1}  \Lus^{\frac{1}{2}} \\
 & =     \oV{t}   \left(   \bs{I}_{\rk}  +    \frac{ \lr}{1+ 1.1  \lr}  \oV{t}   \right)^{-1}  \!\!\!\!  +  \lr  \Luf^2    +  C \lr^2 \normL^2 \rs \Lus      +  \frac{\eta}{\sqrt{\rs}}   \Lus^{\frac{1}{2}} \upnu_{t+1} \Lus^{\frac{1}{2}}   \\
 &  +     \left(   \bs{I}_{\rk}  +    \frac{ \lr}{1+ 1.1  \lr}\oV{t}   \right)^{-1}  \!\!\!  (  \oze{t} - \oB{t} + \frac{2 \c_d \Luf}{1 - 2 \c_d}   )   \left(   \bs{I}_{\rk}  +   \frac{ \lr}{1+ 1.1  \lr}      (  \oV{t} + \oze{t} - \oB{t} + \frac{2 \c_d \Luf}{1 - 2 \c_d}  ) \right)^{-1} \!\!\!\!  .  \label{eq:upsystem0}
}
  By using Proposition \ref{prop:monotoneresidual},  we have       for $t <  \mathcal{T}_{\text{bad}}$ 
\eq{
& \left(   \bs{I}_{\rk}  +    \frac{ \lr}{1+ 1.1  \lr}\oV{t}   \right)^{-1}   (  \oze{t} - \oB{t} + \frac{2 \c_d \Luf}{1 - 2 \c_d}   )   \left(   \bs{I}_{\rk}  +   \frac{ \lr}{1+ 1.1  \lr}      (  \oV{t} + \oze{t} - \oB{t} + \frac{2 \c_d \Luf}{1 - 2 \c_d}  ) \right)^{-1} \\
 & \preceq     \left( \oze{t} - \oB{t} + \frac{2 \c_d  \Luf}{1 - 2 \c_d} \right)  -    
 \frac{\lr}{1 + 1.1 \lr}    \oV{t}  \left( \oze{t} - \oB{t} +  \frac{2 \c_d  \Luf}{1 - 2 \c_d}  \right)  \\
 & -  \frac{\lr}{1 + 1.1 \lr}  \left( \oze{t} - \oB{t} +  \frac{2 \c_d  \Luf}{1 - 2 \c_d}  \right)   \oV{t}  
-    \frac{\lr}{1 + 1.1 \lr}   \left( \oze{t}  - \oB{t} +  \frac{2 \c_d  \Luf}{1 - 2 \c_d}  \right)^2  +  \frac{ \lr^2 \c_d^2}{(1 + 1.1 \lr)^2} \oV{t}^4  \\
& +    \frac{2 \lr^2/\c_d^2}{(1 + 1.1 \lr)^2}   \left( \oze{t}  - \oB{t} +  \frac{2 \c_d  \Luf}{1 - 2 \c_d}  \right)^2  +\frac{ \lr^2 \c_d^2}{(1 + 1.1 \lr)^2} \oV{t}  \!\! \left( \oze{t}  - \oB{t} +  \frac{2 \c_d  \Luf}{1 -2 \c_d}  \right)^2  \!\!  \oV{t}    \\
&    + \frac{\lr^2}{(1 +  1.1 \lr)^2}    \left( \oze{t} - \oB{t} +   \frac{2 \c_d  \Luf}{1 - 2 \c_d}  \right)  \!  \oV{t}  \!   \left( \oze{t} - \oB{t} +  \frac{2 \c_d  \Luf}{1 - 2 \c_d} \right)  + \lr^3 \tilde C_1 \Luf    \\
&     + \frac{\lr^2}{(1 + 1.1  \lr)^2}    \left( \oze{t} - \oB{t} +  \frac{2 \c_d  \Luf}{1 - 2 c_d}  \right)^3     + \frac{\lr^2}{(1 +1.1  \lr)^2}    \oV{t} \left( \oze{t} - \oB{t} +  \frac{2 \c_d  \Luf}{1 - 2 \c_d} \right)    \oV{t}   
}  
for some  $\tilde C_1 = O(1)$ .     We have the following: First: 
\eq{
\c_d^2  \oV{t}^4   \! + \! \c_d^2 \oV{t}   \left( \oze{t}  -  \oB{t} +  \frac{2 \c_d  \Luf}{1 -2 \c_d}  \right)^2   \oV{t} \! + \!   \oV{t} \left( \oze{t} - \oB{t} +  \frac{2 \c_d  \Luf}{1 - 2 \c_d} \right)    \oV{t}  \! 
&  \labelrel\preceq{bsyms:ineqq0}  4 \c_d \oV{t}^2  \\
 & \labelrel\preceq{bsyms:ineqq1}      \frac{1}{15} \frac{1 + 1.1 \lr}{1 +1.2\lr}  \oV{t}^2,
}
where \eqref{bsyms:ineqq0}  follows by $\norm{ \oV{t}  }_2 \leq 5$ and   $\norm*{   \oze{t}  - \oB{t} +  \frac{2 \c_d  \Luf}{1 - 2 \c_d}  }_2 \leq 2.5 \c_d$,  and  \eqref{bsyms:ineqq1}  follows by  $\c_d  \leq \tfrac{1}{50}$.  Second,
\eq{
& \frac{2}{\c_d^2}  \left( \oze{t}  - \oB{t} +  \frac{2 \c_d  \Luf}{1 - 2 \c_d}  \right)^2 \! + \!  \left( \oze{t} -\oB{t} +   \frac{2 \c_d  \Luf}{1 - 2 \c_d}  \right)  \! \oV{t}  \!  \left( \oze{t} - \oB{t} +  \frac{2 \c_d  \Luf}{1 - 2 \c_d} \right) \!   \\
&  + \!  \left( \oze{t} + \oB{t} +  \frac{2 \c_d  \Luf}{1 - 2 c_d}  \right)^3 \!\!  \labelrel\preceq{bsyms:ineqq2}  \frac{3}{\c_d^2}  \left( \oze{t}  - \oB{t} +  \frac{2 \c_d  \Luf}{1 - 2 \c_d}  \right)^2 \!\!\!  \preceq    \frac{9}{\c_d^2}  \left( \oze{t}^2  + \oB{t}^2 \right) + \frac{36}{(1 - 2\c_d)^2}  \Luf^2,
}
where    \eqref{bsyms:ineqq2}  follows by $\norm{ \oV{t}  }_2 \leq 5$ and   $\norm*{   \oze{t} -  \oB{t} +  \frac{2 \c_d  \Luf}{1 - 2 \c_d}  }_2 \leq 2.5 \c_d$. Third:
\eq{
& - \!  \oV{t} \!  \left( \oze{t} \! - \! \oB{t} \! +  \! \frac{2 \c_d  \Luf}{1 - 2 \c_d}  \right)  \! - \!   \left( \oze{t} \! - \! \oB{t} \! +  \! \frac{2 \c_d  \Luf}{1 - 2 \c_d}  \right)   \oV{t}  \! -  \!   \left( \oze{t} - \oB{t} +  \frac{2 \c_d  \Luf}{1 - 2 \c_d}  \right)^2 \\
& +  \frac{9\lr/\c_d^2}{1 + 1.1\lr}   \left( \oze{t}^2  + \oB{t}^2 \right)  \\
& \preceq - 2 (\oK{t} \oze{t} + \oze{t} \oK{t} ) + 2 (\oK{t} \oB{t} + \oB{t} \oK{t}) - \frac{4 \c_d}{1 - 2 \c_d} (\oK{t} \Luf + \Luf \oK{t}  ) + 3  \left( \oze{t}^2  + \oB{t}^2 \right) \\
& +  (\Luf \oze{t}  + \oze{t} \Luf ) -     (\Luf \oB{t} + \oB{t} \Luf) + \frac{4 \c_d (1 - \c_d)}{(1 - 2 \c_d)^2} \Luf^2  \\ 
& \labelrel\preceq{bsyms:ineqq3}  8 \c_d  \oK{t}^2 - \frac{4 \c_d}{1 - 2 \c_d} (\oK{t} \Luf + \Luf \oK{t}  )  + \frac{4 \c_d (1 - \c_d)}{(1 - 2 \c_d)^2} \Luf^2 \\
& - (2 - 4 \c_d) \Luf \oB{t}  + \left(3 + \frac{1}{\c_d} \right) \oB{t}^2 \\
& = 2 \c_d \oV{t}^2 + \frac{2 \c_d}{1 - 2 \c_d}  \Luf^2  - (2 - 4 \c_d) \Luf \oB{t}  + \left(3 + \frac{1}{\c_d} \right) \oB{t}^2,
}
where we used  Proposition \ref{prop:matrixyounginequality}, and the second and fourth items in Lemma \ref{lem:noisecorollary} in \eqref{bsyms:ineqq3}. Therefore, we have
\eq{
\eqref{eq:upsystem0} &  \preceq   \oV{t}   \left(   \bs{I}_{\rk}  +    \frac{ \lr}{1+ 1.1  \lr}  \oV{t}   \right)^{-1}    + \frac{ 2 \c_d \lr}{1 + 1.1 \lr}   \oV{t}^2 + \frac{1}{10} \frac{(1 - 2 \c_d) \lr^2}{(1 + 1.1 \lr) (1 + 1.2 \lr)} \oV{t}^2 + \frac{2 \c_d  \Luf}{1 -2 \c_d}  \\
& + \frac{\lr}{1 + 1.1 \lr} \left( \frac{\Luf^2}{1 - 2 \c_d} + \tilde C  \lr \normL^2 \rs  \Lus    \right) + \oze{t+1} - \oB{t}  \\
& -  \frac{2 (1 -2  \c_d) \lr}{1 + 1.1 \lr}\left( \Luf \oB{t}  - \frac{1. 5 \c_d + 0.5 }{\c_d (1 - 2 \c_d)} \oB{t}^2  \right) \\
& \labelrel\preceq{bsyms:ineqq4}   \oV{t}   \left(   \bs{I}_{\rk}  +    \frac{\lr (1 - 2 \c_d)}{1+ 1.2  \lr}  \oV{t}   \right)^{-1} \!\!\!   + \frac{\lr }{1 + 1.2 \lr} \left( \frac{\Luf^2}{1 - 2 \c_d} +   \tilde C  \lr \normL^2 \rs  \Luf      \right)  \\
& +  \oze{t+1} + \frac{2 \c_d  \Luf}{1 -2 \c_d}  - \oB{t+1} \\
& \preceq  \oV{t+1}  + \oze{t+1} + \frac{2 \c_d  \Luf}{1 -2 \c_d}  - \oB{t+1},
}
where we used Proposition \ref{prop:taylorresidual} and \eqref{eq:Bov}  in \eqref{bsyms:ineqq4}. 
\paragraph{Lower bound proof:}   Similar to the upper bound proof,  here we will show that  for $t <   \mathcal{T}_{\text{bad}}$,
\eq{
\uV{t} + \uze{t} + \uB{t} - \frac{2 \c_d \Llf}{1 + 2 \c_d}  \preceq  \V^{-}_{t} ~~ \Rightarrow  ~~  \uV{t+1} + \uze{t+1} + \uB{t+1} - \frac{2 \c_d}{1 + 2 \c_d} \Llf  \preceq  \V^{-}_{t+1}. \label{eq:inductionsteplb}
}
Since the base case holds at $t = 0$ and  $\mathcal{T}_{\text{bad}} > 0$,  it remains to prove  \eqref{eq:inductionsteplb}.    By \eqref{eq:ubiter},  we have 
\eq{
& \bs{V}^{-}_{t+1}    \succeq     \left( \uV{t} \! + \! \uze{t} \! + \! \uB{t} \! - \! \frac{2 \c_d  \Llf}{1 + 2 \c_d}  \right)  \! \left(   \bs{I}_{\rk}  +   \frac{ \lr}{1 - 1.1  \lr}     (  \uV{t} \! + \! \uze{t}  \! + \! \uB{t} \!  - \!  \frac{2 \c_d  \Llf}{1 + 2 \c_d}  )   \right)^{-1}   +  \lr   \Llf^2 \\
 & - C 
 \lr^2 \normL^2 \rs \Lls       +   \frac{\eta}{\sqrt{\rs}}    \Lls^{\frac{1}{2}}  \upnu_{t+1}  \Lls^{\frac{1}{2}} \\
 & =     \uV{t}   \left(   \bs{I}_{\rk}  +    \frac{ \lr}{1 - 1.1  \lr}  \uV{t}   \right)^{-1}  \!\!\!  +  \lr  \Llf^2     - C 
 \lr^2 \normL^2 \rs \Lls  +  \frac{\eta}{\sqrt{\rs}}     \Lls^{\frac{1}{2}}  \upnu_{t+1}  \Lls^{\frac{1}{2}} \\  
 &  +     \left(   \bs{I}_{\rk}  +    \frac{ \lr}{1 -1.1  \lr}\uV{t}   \right)^{-1}   \!\!\!\!\!  ( \uze{t} + \uB{t} - \frac{2 \c_d  \Llf}{1 + 2 \c_d}   )   \left(   \bs{I}_{\rk}  +   \frac{ \lr}{1 - 1.1  \lr}      ( \uV{t}  + \uze{t} + \uB{t} -  \frac{2 \c_d  \Llf}{1 + 2 \c_d}  )   \right)^{-1} \!\!\!\!\! \!\! . \label{eq:lbsystem0}
}
By using Proposition \ref{prop:monotoneresidual},  we have       for $t <    \mathcal{T}_{\text{bad}} $
\eq{
&  \left(   \bs{I}_{\rk} \! +  \! \frac{\lr}{1 - 1.1 \lr}    \uV{t}   \right)^{-1} \!\!\! \left ( \uze{t} \! + \! \uB{t} -  \frac{2 \c_d  \Llf}{1 + 2 \c_d} \right ) \!    \left(   \bs{I}_{\rk}  \! + \! \frac{\lr}{1 - 1.1 \lr}   ( \uV{t}  + \uze{t} + \uB{t} - \frac{2 \c_d  \Llf}{1 + 2 \c_d}) \!     \right)^{-1}  \\
& \succeq    \left( \uze{t} + \uB{t} - \frac{2 \c_d  \Llf}{1 + 2 \c_d} \right)  -    
 \frac{\lr}{1 - 1.1 \lr}    \uV{t}  \left( \uze{t} + \uB{t} -  \frac{2 \c_d  \Llf}{1 + 2 \c_d}  \right) -  \frac{ \lr^2 \c_d^2}{(1 - 1.1 \lr)^2} \uVq{t}  \\
 & -  \frac{\lr}{1 - 1.1 \lr}  \left( \uze{t} + \uB{t} -  \frac{2 \c_d  \Llf}{1 + 2 \c_d}  \right)   \uV{t}       -    \frac{\lr}{1 - 1.1 \lr}   \left( \uze{t}  + \uB{t} -  \frac{2 \c_d  \Llf}{1 + 2 \c_d}  \right)^2   \\
 & -    \frac{2 \lr^2/\c_d^2}{(1 - 1.1 \lr)^2}   \left( \uze{t}  + \uB{t} -  \frac{2 \c_d  \Llf}{1 + 2 \c_d}  \right)^2   - \frac{ \lr^2 \c_d^2}{(1 - 1.1 \lr)^2} \uV{t}   \! \left( \uze{t}  + \uB{t} -  \frac{2 \c_d  \Llf}{1 + 2 \c_d}  \right)^2  \!\!  \uV{t}   \\
&     + \frac{\lr^2}{(1 -  1.1 \lr)^2}  \! \!  \left( \uze{t} \! + \! \uB{t} \! -  \! \frac{2 \c_d  \Llf}{1 + 2 \c_d}  \right)  \!  \uV{t}  \!   \left( \uze{t} \! +  \!\uB{t} \! -  \! \frac{2 \c_d  \Llf}{1 + 2 \c_d} \right)   - \lr^3 \tilde C_2 \Llf   \\
&      + \frac{\lr^2}{(1 -1.1  \lr)^2}    \left( \uze{t} + \uB{t} -  \frac{2 \c_d  \Llf}{1 + 2 \c_d}  \right)^3     + \frac{\lr^2}{(1 -1.1  \lr)^2}    \uV{t} \left( \uze{t} + \uB{t} -  \frac{2 \c_d  \Llf}{1 + 2 \c_d} \right)    \uV{t} 
}
for some  $\tilde C_2 = O(1)$ .  We have the following: First: 
\eq{
-  \!   \c_d^2   \uVq{t}   \!  -  \! \c_d^2 \uV{t}  \! \left( \uze{t}  + \uB{t} \! -  \! \frac{2 \c_d  \Llf}{1 + 2 \c_d}  \right)^2  \!   \uV{t} \! + \!    \uV{t}  \! \left( \uze{t} \! + \! \uB{t} \! -  \!  \frac{2 \c_d  \Llf}{1 + 2 \c_d} \right)    \uV{t}   \!\!
&  \labelrel\succeq{bsyms:ineqq5}  \! \! - 3.2 \c_d \uVs{t} \\
 & \labelrel\succeq{bsyms:ineqq6}   \!  -  \frac{1}{15} \frac{1 - 1.1 \lr}{1 - 1.2\lr}  \uVs{t}.
}
where \eqref{bsyms:ineqq5}  follows by $\norm{ \uV{t}  }_2 \leq 5$ and   $\norm*{   \uze{t}  + \uB{t} -  \frac{2 \c_d  \Llf}{1 + 2 \c_d}  }_2 \leq 2.5 \c_d$,  and  \eqref{bsyms:ineqq6}  follows by  $\c_d  \leq \tfrac{1}{50}$. 
Second:
\eq{
& - \frac{2}{\c_d^2}   \!  \left( \uze{t}  \! +  \! \uB{t}  \! -  \! \frac{2 \c_d  \Llf}{1 + 2 \c_d}  \right)^2 \! + \!    \left( \uze{t}  \! +  \!\uB{t}  \! -  \!   \frac{2 c_d  \Llf}{1 + 2 \c_d}  \right)   \! \uV{t}   \!   \left( \uze{t}  \! +  \! \uB{t}  \! -   \! \frac{2 \c_d  \Llf}{1 + 2 \c_d} \right)   \\
&   +  \!  \left( \uze{t}  \! +  \! \uB{t}  \! -  \frac{2 \c_d  \Llf}{1 + 2 \c_d}  \right)^3  \labelrel\succeq{bsyms:ineqq7}  \frac{- 3}{\c_d^2} \left( \uze{t}  + \uB{t} -  \frac{2 \c_d  \Llf}{1 + 2 \c_d}  \right)^2   \succeq  \frac{- 9}{\c_d^2} \left( \uze{t}^2  + \uB{t}^2 \right) - 36   \Llf^2,
}
where    \eqref{bsyms:ineqq7}  follows by $\norm{ \uV{t}  }_2 \leq 5$ and   $\norm*{   \uze{t}  + \uB{t} -  \frac{2 \c_d  \Llf}{1 + 2 \c_d}  }_2 \leq 2.5 \c_d$.
Third:
\eq{
&   -     \uV{t}  \! \left( \uze{t}  \! +  \! \uB{t}  \! -  \!  \frac{2 \c_d  \Llf}{1 + 2 \c_d}  \right)  \! -  \!  \left( \uze{t} + \uB{t} -  \frac{2 \c_d  \Llf}{1 + 2 \c_d}  \right) \!  \uV{t}  \!  -       \! \left( \uze{t}   \! +  \! \uB{t}  \! -  \!  \frac{2 \c_d  \Llf}{1 + 2 \c_d}  \right)^2   \!    \\
& - \! \frac{9 \lr/ \c_d^2}{1 - 1.1 \lr}   \left( \uze{t}^2   \! +  \! \uB{t}^2 \right)  \\ 
& \succeq  -     2   \left(  \uK{t}  \uze{t} + \uze{t}   \uK{t} \right)      -     2   \left(  \uK{t}  \uB{t} + \uB{t}   \uK{t} \right)    +   \frac{4 \c_d}{1 + 2 \c_d} \left(   \uK{t}  \Llf + \Llf \uK{t}  \right) - 3 ( \uze{t}^2 +  \uB{t}^2)  \\
&  +     \left(  \Llf \uze{t} +  \uze{t}  \Llf  \right)  +     \left(  \Llf \uB{t} +  \uB{t} \Llf  \right) -   \frac{4 \c_d (1 + \c_d)  \Llf^2}{ ( 1 + 2 \c_d )^2}   
    \\
&    \labelrel\succeq{bsyms:ineqq8}   - 8 c_d \uK{t}^2    +   \frac{4 \c_d}{1 + 2 \c_d} \left(   \uK{t}  \Llf + \Llf \uK{t}  \right)  -   \frac{4 \c_d (1 +   \c_d)  \Llf^2}{ ( 1 + 2 \c_d )^2}    \\
& +  (2 - 4 \c_d)   \Llf \uB{t}     -  \left(3 + \frac{1}{\c_d} \right)  \uB{t}^2 \\
& =     - 2 \c_d \uVs{t}  -   \frac{2 \c_d   \Llf^2}{ ( 1 + 2 \c_d )}     +  (2 - 4 \c_d)  \uB{t}    \Llf   -  \left(3 + \frac{1}{\c_d} \right)  \uB{t}^2,
}
where we used Proposition \ref{prop:matrixyounginequality},  and the first and third items in Lemma \ref{lem:noisecorollary} in \eqref{bsyms:ineqq8}.  Therefore, we have
\eq{
&  \eqref{eq:lbsystem0} \succeq   \uV{t}   \left(  \! \bs{I}_r  +    \frac{ \lr}{1 - 1.1  \lr}  \uV{t}  \!  \right)^{-1} \!\!\!\!   -    \frac{ 2 \c_d\lr}{1 - 1.1  \lr}  \uVs{t}   -   \frac{(1 + 2 \c_d) \lr^2 / 15 }{(1 - 1.1  \lr)(1 - 1.2 \lr)} \uVs{t}  - \frac{2 \c_d}{1 + 2 \c_d} \Llf  \\
&  +  \frac{\lr}{1 - 1.1 \lr} \left( \frac{\Llf^2}{1 + 2 \c_d} -   \tilde C   \lr  \normL^2 \rs \Lls   \right) +  \uze{t+1} +  \uB{t}  \\
& +   \frac{ 2 (1 - 2 \c_d) \lr}{1 - 1.1  \lr}  \left(      \Llf  \uB{t}   -   \frac{1. 5 \c_d + 0.5 }{\c_d (1 - 2 \c_d)}  \uB{t}^2 \right) \\
&  \labelrel\succeq{bsyms:ineqq9} \!     \uV{t}   \left(   \bs{I}_r  +    \frac{ \lr (1 + 2 \c_d)}{1 - 1.2  \lr}  \uV{t}   \right)^{-1} \!\!\!\!\!\!   +  \frac{\lr}{1 - 1.1 \lr} \left( \frac{\Llf^2}{1 + 2 \c_d} \! - \!  \tilde C   \lr  \normL^2 \rs \Llf \right)  +  \uze{t+1} \\
& +  \uB{t+1}    - \frac{2 \c_d}{1 + 2 \c_d} \Llf \\
& =   \uV{t+1}    +  \uze{t+1} +  \uB{t+1}    - \frac{2 \c_d}{1 + 2 \c_d} \Llf, 
}
where we used Proposition \ref{prop:taylorresidual}  and \eqref{eq:Bund} in \eqref{bsyms:ineqq9}. 
\end{proof}

\subsection{Analysis of the bounding systems}
\subsubsection{Lower bounding system}  
In this section, we consider   \eqref{eq:lbsystem}. For notational convenience, we multiply both sides by the factor $(1 + 2 \c_d)$ and use a generic learning rate $\lr$, i.e.,
\eq{
\uV{t+1} = \uV{t} (\bs{I}_{\rk} + \lr   \uV{t}  )^{-1} + \lr \left(  \Llf^2 - \tilde C   \lr \normL^2 \rs \Llf \right), ~~ \text{where} ~ \uV{t} = 2 \Lls^{\frac{1}{2}}  \uG{t}  \Lls^{\frac{1}{2}} - \Llf.
}
The main result of this section is stated in Proposition~\ref{prop:lbsystemanalysis}. To establish it, we first prove an auxiliary result, Lemma~\ref{lem:dncond}.   For the following,   we define
\eq{
 \hL \coloneqq  \sqrt{   \Llf^2  -   \tilde C   \lr \normL^2 \rs \Llf    } = \text{diag}(\{ \hat{\lambda}_i \}_{i = 1}^r), \quad \D_t \coloneqq   \frac{ \Lls^{-1} \hL   \left(  \frac{\A_{t,11}}{\A_{t,12}}   -  \bs{I}_{\rk} \right) }{2}  -  \frac{1.1 \c_d \rs}{d} \bs{I}_{\rk}.
}
 By  Corollary \ref{cor:discretericcatidyn},  we have
\eq{
 \uG{t} \! = \!   \frac{1}{2}  \left( \frac{\Llf}{\Lls} \! + \! \frac{\A_{t,22}}{\A_{t,12}} \frac{\hL}{\Lls} \right)  
   \! -  \!  \frac{1}{4}    \frac{\A_{t,12}^{-1}  \hL}{\Lls}    \left( \frac{  \frac{\hL}{\Lls} \left(  \frac{\A_{t,11}}{\A_{t,12}}  - \bs{I}_{\rk} \right) }{2} \!  + \!   \frac{ \left(\hL - \Llf  \right)}{2 \Lls} +\uG{0} \! \right)^{-1}  \!\!\!\!   \frac{\hL \A_{t,12}^{-1}}{\Lls},
   \label{eq:uGdyn}
}
where $\A_{t,11}$,   $\A_{t,12}$,  and $ \A_{t,22}$ are defined as in \ref{res:posdef} with   $\hL$.  \emph{For $\alpha = 0$,  we will consider $\{ \uG{t} \}_{t \in \N}$ in the basis of  $\uG{0}$ without writing explicitly, which will imply that $\{ \uG{t} \}_{t \in \N}$ is  diagonal due to the rotational symmetry for  $\alpha = 0$.}

\smallskip
We further decompose $\{ \uG{t} \}_{t \in N}$ and related matrices to isolate their top-left submatrices of dimension  $\rks \in \{ r_{\star}, r_{u_\star}  \}$, where  $r_{\star}  < r$  and $r_{u_\star} < \ru$ which we will denote as $\rks < \rk$.  The decompositions are as follows:
\eq{
\uG{t}   \coloneqq \begin{bmatrix}
\uG{t,11} & \uG{t,12}  \\[0.1em]
\uG{t,12}^\top & \uG{t,22}  
\end{bmatrix},  \quad
\hL \coloneqq  \begin{bmatrix}
\hL_{11} & 0 \\
0 & \hL_{22} 
\end{bmatrix},  \quad
\D_t \coloneqq  \begin{bmatrix}
\D_{t,1} & 0 \\
0 &  \D_{t,2}
\end{bmatrix},  \quad
 \Z_{1:\rk}   \coloneqq \begin{bmatrix}
\Z_{1:\rks}\\
\Z_2
\end{bmatrix},  
}
where $\uG{t,11}, \D_{t,1},  \hL_{11} \in \R^{\rks \times \rks}$.
% We note that
% \eq{
%  \frac{\Lls^{-1}  \hL \left(  \frac{\A_{t,11}}{\A_{t,12}}  - \bs{I}_{\rk} \right) }{2} & + \frac{  \Lls^{-1}  \left(\hL -  \Llf  \right)}{2}  - \frac{\c_d \rs}{d} \bs{I}_{\rk}    \succeq   \frac{\Lls^{-1}  \hL \left(  \frac{\A_{t,11}}{\A_{t,12}}  - \bs{I}_{\rk} \right) }{2} -   \tilde C  \lr \bs{I}_{\rk} - \frac{\c_d \rs}{d} \bs{I}_{\rk}   \succeq   \D_t.
% }
We define
\eq{
\bs{\Gamma}_{t} \coloneqq  \D_t +   \frac{1}{1.05}   \Z_{1:\rk}  \Z_{1:\rk}^\top  = \begin{bmatrix}
  \frac{1}{1.05}  \Z_{1:\rks}  \Z_{1:\rks}^\top +   \bs{D}_{t,1} &    \frac{1}{1.05 d}   \Z_{1:\rks} \Z_2^\top \\[0.5em]
  \frac{1}{1.05} \Z_2  \Z_{1:\rks}^\top  &     \frac{1}{1.05}   \Z_2 \Z_2^\top +   \bs{D}_{t,2}
\end{bmatrix}  
}
and
\eq{
\bs{\Gamma}_{t}^{-1} \coloneqq   \begin{bmatrix}
(\bs{\Gamma}_{t}^{-1}  )_{11} & ( \bs{\Gamma}_{t}^{-1} )_{12} \\
 (\bs{\Gamma}_{t}^{-\top} )_{12} &  (\bs{\Gamma}_{t}^{-1}  )_{22} 
\end{bmatrix}
}
whenever  $\bs{\Gamma}_{t}$ is invertible.  Lemma \ref{lem:dncond} is stated as follows:
\begin{lemma}
\label{lem:dncond}
We consider the following setting:
\begin{alignat}{4}
&\alpha \in [0, 0.5): \quad 
&& \frac{\rs}{r} \to \varphi \in (0, \infty),  \quad  
&&  \lr \ll \frac{1}{d\, r^{1 - \alpha} \log^4 d},  \quad  
&&  \c_d = \frac{1}{\log^{3.5} d},  
    \\[0.5em]
& \alpha > 0.5: \quad 
&& \rs  \asymp 1, \quad  
&&\lr \ll \frac{1}{d\, \ru^{2+\alpha} \log^3 d}, \quad  
&& \c_d = \frac{1}{\ru \log^{2.5} d}. 
\end{alignat}
$\mathcal{G}_{\text{init}}$ implies the following:
\begin{itemize}[leftmargin = *]
\item  For $\alpha \geq 0$ and $ K \leq \rks \leq \rk$,  we have for $\lr t  \leq  \frac{1}{2}   (K + 1)^{\alpha}   \log \left( \frac{d \log^{1.5} d}{\rs} \right)$,
\eq{
\D_t \! \succeq   \! \frac{   \Lls^{-1} \hL   \left(  \frac{\A_{t,22}}{\A_{t,12}}   -  \bs{I}_{\rk} \right) }{2} \! - \!  \frac{1.2 \c_d \rs}{d}  \bs{I}_{\rk}, \quad \bs{D}_{t,2} \! \succeq  \!  \frac{\log^3 d - 1}{\log^3 d} \! \left(  \frac{0.5  \rs}{d \log^{1.5} d} \right)^{\left(  \frac{K+1}{\rks + 1} \right)^{\alpha}} \!\!\!\!\!  \bs{I}_{\rk - \rks}.
}
\item For   $r_{\star}= \floor{  \rs \big(1 -  \log^{\frac{-1}{2}} \! d  \big) \wedge r}$ and  $r_{u_{\star}} = \rs$,  and  $\lr t \leq   \frac{1}{2}  (\rks  + 1)^{\alpha}  \log \left( \frac{d \log^{1.5} d}{\rs}   \right)$,  we have
\eq{
\bs{\Gamma}_{t} \succeq    \frac{ \Lls^{-1} \hL   \left(  \frac{\A_{t,22}}{\A_{t,12}}   -  \bs{I}_{\rk} \right) }{2}  + \begin{bmatrix}
  \frac{C_1  \rs }{d \log^{4.5} d} \bs{I}_{\rks} & 0 \\
0 &  - \frac{C_2   \rs}{d \log^2 d} \bs{I}_{\rk - \rks}
\end{bmatrix}    \succ 0,  \label{eq:mnlb}
}
where
\eq{
C_1 = \begin{cases}
\frac{1}{10}, & \alpha \in [0,0.5) \\[0.25em]
\left( \frac{1}{1.1 \rs^6} - \frac{1.3}{\sqrt{\log d}}  \right) , & \alpha > 0.5
\end{cases}   ~~ \text{and} ~~ C_2 = \begin{cases}
2.1 \left(1 + \frac{1}{\sqrt{\varphi}} \right)^2, & \alpha \in [0,0.5) \\[0.25em]
2, & \alpha > 0.5.
\end{cases} 
}
Within the same time interval, we have
\eq{
\bs{\Gamma}_{t}^{-1} \preceq \left(     \frac{    \Lls^{-1} \hL   \left(  \frac{\A_{t,22}}{\A_{t,12}}   -  \bs{I}_{\rk} \right) }{2}  + \begin{bmatrix}
  \frac{C_1 \rs}{d \log^{4.5} d} \bs{I}_{\rks} & 0 \\
0 &    \frac{- C_2 \rs}{d \log^2 d}\bs{I}_{\rk - \rks}
\end{bmatrix}  \right)^{-1}  .  \label{eq:mninvub}
}
\item  For   $\alpha > 0$,  we have 
\eq{
( \bs{\Gamma}_{t}^{-1}  )_{11}  \preceq \left(\D_{t,1} + \frac{1}{2} \Z_{1:\rks} \Z_{1:\rks}^\top \right)^{-1},
}
for   $0.001 >  \delta \geq \log^{\frac{-1}{4}} \! d$
\eq{
\begin{cases}
r_{\star} = \floor{  \rs \big(1 -  \delta  \big) \wedge r } ~~ \text{and} ~~\lr t \leq   \frac{1}{2}   \left(    \rs  \big(1 -  \sqrt{\delta}  \big) \wedge r  \right)^{\alpha} \log \left( \frac{d \log^{1.5} d}{\rs}   \right),  & \alpha \in (0,0.5) \\[0.25em]
r_{u_\star} = \rs  ~~ \text{and} ~~\lr t \leq   \frac{1}{2}  \rs^{\alpha} \log \left( \frac{d \log^{1.5} d}{\rs}   \right),   &  \alpha > 0.5
\end{cases}
}
provided that
\eq{
\begin{cases}
d \geq \Omega_{\beta}(1) \vee \exp \big(  2.5 \alpha^{-8} \big) ,  & \alpha \in (0,0.5) \\[0.4em]
d \geq \Omega_{\rs}(1)  ,  & \alpha > 0.5.
\end{cases}
}
\end{itemize}
\end{lemma}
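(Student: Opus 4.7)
\textbf{Setup and first item.} My plan is to parse each bound through the two structural identities from Lemma \ref{lem:discretericcati2}: the identity $\A_{t,11} + \eta \bs{\Lambda} \A_{t,12} = \A_{t,22}$ from \ref{res:symdet}, and the sandwich bound for the ratios $\A_{t,22} \A_{t,12}^{-1}$ and $\A_{t,11}\A_{t,12}^{-1}$ in \ref{res:ratiobound}. For the first bound on $\D_t$, I will write $\Lls^{-1}\hL(\A_{t,11}/\A_{t,12} - \bs{I}_{\rk}) = \Lls^{-1}\hL(\A_{t,22}/\A_{t,12} - \bs{I}_{\rk}) - \eta \Lls^{-1}\hL \bs{\Lambda}$ using \ref{res:symdet}, and under the stated learning rate regime the correction $\eta \Lls^{-1}\hL\bs{\Lambda}$ is of size $\eta \normL \asymp \lr \normL^2 \sqrt{\rs}$, which under the $\c_d$ scalings is $o(\c_d \rs / d)$, absorbing cleanly into the gap between $\frac{1.1 \c_d \rs}{d}$ and $\frac{1.2\c_d \rs}{d}$. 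For the lower bound on $\bs{D}_{t,2}$, I will use \ref{res:ratiobound} with $\eta$ replaced by $\eta$ and $\bs{\Lambda}$ by $\hL$: for each diagonal index $i>\rks$, the scalar ratio is bounded below by $\frac{(1+\eta\hat\lambda_i)^t+(1-\eta\hat\lambda_i)^t}{(1+\eta\hat\lambda_i)^t-(1-\eta\hat\lambda_i)^t}$, and multiplying this by $\hat\lambda_i/(2\lambda_i)$ gives a quantity that, after using $\log(1\pm x) \approx \pm x$ to first order and the constraint $\lr t \le \tfrac{1}{2}(K+1)^\alpha \log(d\log^{1.5} d/\rs)$, reduces to the claimed lower bound $(1-\log^{-3}\! d)(0.5\rs / d\log^{1.5}\! d)^{((K+1)/(\rks+1))^\alpha}$. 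The $(K+1)/(\rks+1)$ exponent arises because $\lambda_i/\lambda_{K+1} \le ((K+1)/(i+1))^\alpha \le 1$ for $i > \rks \ge K$.

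\textbf{Second item (positive definiteness and inverse bound for $\bs{\Gamma}_t$).} I will combine the lower bound on $\D_t$ from the first item with the eigenvalue estimates on $\Z_{1:\rks}\Z_{1:\rks}^\top$ and $\Z_2\Z_2^\top$ from the good events $\mathcal{G}_{\text{init}}$. For $\alpha \in [0,0.5)$, the events \ref{event:htZlb}, \ref{event:htZlbiso}, \ref{event:htZub} give the top-left block a lower bound of order $\rs/(d\log^{1/2}\! d)$ (via the $r_\star$ choice $\rs(1-\log^{-1/2}\! d)\wedge r$), dominating the $-\c_d \rs/d$ correction by a factor $\log^4\! d$, and the bottom-right block inherits $\lambda_{\max}\le 2\rs(1+1/\sqrt\varphi)^2/d$ from \ref{event:htZub}. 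For $\alpha>0.5$, the analogous events \ref{event:ltZlb}, \ref{event:ltZub} supply the needed eigenvalue bounds, with the $\rs^{-6}$ factor in $C_1$ coming from \ref{event:ltZlb}. The lower bound \eqref{eq:mnlb} thus reduces to a straightforward block-diagonal comparison, after which \eqref{eq:mninvub} follows by taking inverses of block matrices (using $\bs{A} \succeq \bs{B} \succ 0 \Rightarrow \bs{A}^{-1} \preceq \bs{B}^{-1}$) and observing that the bound matrix in \eqref{eq:mnlb} is itself block-diagonal.

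\textbf{Third item (Schur complement bound).} Here I will invoke the Schur complement identity
\eq{
(\bs{\Gamma}_t^{-1})_{11} = \Big(\bs{\Gamma}_{t,11} - \bs{\Gamma}_{t,12}\bs{\Gamma}_{t,22}^{-1}\bs{\Gamma}_{t,12}^\top\Big)^{-1}.
}
The goal is to show the Schur correction $\bs{\Gamma}_{t,12}\bs{\Gamma}_{t,22}^{-1}\bs{\Gamma}_{t,12}^\top$ is negligible compared to $\tfrac{1}{2}\Z_{1:\rks}\Z_{1:\rks}^\top + \D_{t,1}$, which amounts to showing that $\bs{\Gamma}_{t,11} - \bs{\Gamma}_{t,12}\bs{\Gamma}_{t,22}^{-1}\bs{\Gamma}_{t,12}^\top \succeq \D_{t,1} + \tfrac12\Z_{1:\rks}\Z_{1:\rks}^\top$. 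I will expand $\bs{\Gamma}_{t,11} = \D_{t,1} + \frac{1}{1.05}\Z_{1:\rks}\Z_{1:\rks}^\top$ and use the bound $\bs{\Gamma}_{t,22} \succeq \bs{D}_{t,2} + \frac{1}{1.05}\Z_2\Z_2^\top$ from the first item to control the Schur correction via a Cauchy--Schwarz-type inequality in the form $\Z_{1:\rks}\Z_2^\top \big(\Z_2\Z_2^\top + c\bs{I}\big)^{-1}\Z_2\Z_{1:\rks}^\top \preceq \Z_{1:\rks}\Z_{1:\rks}^\top$. The key quantitative input is the lower bound on $\bs{D}_{t,2}$ of essentially $(\rs/d)^{((\rks+1)/(\rks+1))^\alpha}=\rs/d$ up to polylogs, which together with $\|\Z_2 Z_2^\top\|\lesssim \rs/d$ and the refined choice $r_\star = \lfloor \rs(1-\delta)\wedge r\rfloor$ ensures that the correction loses a factor $\log^{1/2}\! d$ compared to the leading term, yielding the desired factor $\tfrac{1}{1.05}-\text{(small)} \ge \tfrac12$.

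\textbf{Main obstacle.} The third item is the crucial and most delicate step: the refined threshold $r_\star = \lfloor \rs(1-\delta)\wedge r\rfloor$ (with a larger $\delta\ge \log^{-1/4}\! d$) coexists with the effective-width choice $\rus = \lfloor \rs(1-\log^{-1/8}\! d)\wedge r\rfloor$ appearing elsewhere. The Schur complement bound must be tight enough that the leading term $\tfrac12 \Z_{1:\rks}\Z_{1:\rks}^\top$ is not absorbed by the correction, and this necessitates juggling three small scales ($\delta$, $\c_d$, and $\lr t \lambda_i$). The heavy-tailed case is harder because of the $\alpha$-dependent exponent $((K+1)/(\rks+1))^\alpha$ in the lower bound on $\D_{t,2}$: for $\alpha$ near $0.5$ the base $\rs/d$ is only weakly amplified, which is why the threshold $d\ge \exp(2.5\alpha^{-8})$ appears. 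The final bookkeeping of these competing scales, combined with verifying the intermediate bound \eqref{eq:mnlb} with the precise constants $C_1,C_2$, is where the proof will require the most care.
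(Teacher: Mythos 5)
Your first item matches the paper's approach: \ref{res:symdet} to pass from $\A_{t,11}/\A_{t,12}$ to $\A_{t,22}/\A_{t,12}$, absorb the $\lr \hL$ correction into the $1.1\to1.2$ slack, then \ref{res:ratiobound} for the $\bs{D}_{t,2}$ lower bound. That part is fine. However, there are two genuine gaps elsewhere.

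\textbf{Second item.} You describe the lower bound \eqref{eq:mnlb} as ``a straightforward block-diagonal comparison,'' but $\bs{\Gamma}_t = \D_t + \frac{1}{1.05}\Z_{1:\rk}\Z_{1:\rk}^\top$ has a nonzero off-diagonal block $\frac{1}{1.05}\Z_{1:\rks}\Z_2^\top$, which you never control. The paper's proof invokes Proposition~\ref{prop:schurlowerbound}---a Schur-complement-based lower bound of the form $\Z\Z^\top \succeq \varepsilon\,\mathrm{diag}(\Z_1\Z_1^\top,0) - \frac{\varepsilon}{1-\varepsilon}\mathrm{diag}(0,\Z_2\Z_2^\top)$---with $\varepsilon = 1/\log^2 d$ (heavy-tailed) or $\varepsilon = 1/(\ru\log^2 d)$ (light-tailed). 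That choice of $\varepsilon$ is what produces the specific polylog scales in $C_1$ (e.g.\ the top-left block gets $\varepsilon\cdot\rs/(5d\log d) = \rs/(5d\log^3 d)$, not the $\rs/(d\log^{1/2}d)$ you wrote), and what makes $C_2$ of order the $\varepsilon/(1-\varepsilon)$-weighted $\|\Z_2\Z_2^\top\|_2$. Without some version of this decoupling your claim of ``straightforward block-diagonal comparison'' has nothing to compare.

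\textbf{Third item.} Here the gap is more serious: you write that the lower bound on $\bs{D}_{t,2}$ is ``essentially $(\rs/d)^{((\rks+1)/(\rks+1))^\alpha}=\rs/d$ up to polylogs.'' But $\bs{D}_{t,2}\succeq(\rs/d)\cdot\mathrm{polylog}$ is actually too \emph{small}: the Schur correction is of size $\|\Z_2\Z_2^\top\|_2\asymp\rs/d$, so a lower bound of the same order (or smaller, by polylogs) cannot make $\Z_2^\top\bs{D}_{t,2}^{-1}\Z_2 \preceq O(1)\bs{I}_{\rs}$ with a constant below one. You must instead exploit the degree of freedom in the first item: choose $K < \rks$ strictly --- specifically $K=\lfloor(1-\sqrt\delta)\rs\rfloor$ against $r_\star=\lfloor\rs(1-\delta)\rfloor$ for $\alpha\in(0,0.5)$, and $K=\rs-1$ for $\alpha>0.5$ --- so that the exponent becomes $(\frac{K+1}{\rks+1})^\alpha \le 1 - c\alpha\sqrt\delta < 1$, which amplifies the base to $(\frac{\rs}{d\log^{1.5}d})^{1-c\alpha\sqrt\delta}\gg\frac{\rs\log d}{d}$. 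That amplification is exactly why the $\delta\ge\log^{-1/4}d$ and $d\ge\exp(2.5\alpha^{-8})$ thresholds appear. Your stated Cauchy--Schwarz bound $\Z_{1:\rks}\Z_2^\top(\Z_2\Z_2^\top+c\bs{I})^{-1}\Z_2\Z_{1:\rks}^\top\preceq\Z_{1:\rks}\Z_{1:\rks}^\top$ also never uses the size of $c$, so it cannot deliver a coefficient below $1$; you would need the sharper estimate $\preceq\frac{\|\Z_2\Z_2^\top\|_2}{\|\Z_2\Z_2^\top\|_2+c}\Z_{1:\rks}\Z_{1:\rks}^\top$ with $c\gg\|\Z_2\Z_2^\top\|_2$. (The paper uses the equivalent Woodbury-type identity $(\bs{\Gamma}_t^{-1})_{11}=\bigl(\D_{t,1}+\tfrac{1}{1.05}\Z_{1:\rks}(\bs{I}_{\rs}+\tfrac{1}{1.05}\Z_2^\top\bs{D}_{t,2}^{-1}\Z_2)^{-1}\Z_{1:\rks}^\top\bigr)^{-1}$ so that the quantity that must be $O(1)$ is directly $\|\Z_2^\top\bs{D}_{t,2}^{-1}\Z_2\|_2$.)
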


\begin{proof}
For the first part of the first item,  by   \ref{res:symdet},  we have
\eq{
\D_t  & =   \frac{   \Lls^{-1} \hL   \left(  \frac{\A_{t,22}}{\A_{t,12}}   -  \bs{I}_{\rk} \right) }{2} - \frac{\lr}{2}   \Lls^{-1} \hL^2  -    \frac{1.1 \c_d \rs}{d}  \bs{I}_{\rk}   \labelrel\succeq{mnlb:ineqq99} \frac{   \Lls^{-1} \hL   \left(  \frac{\A_{t,22}}{\A_{t,12}}   -  \bs{I}_{\rk} \right) }{2}  -    \frac{1.2 \c_d \rs}{d}  \bs{I}_{\rk},
}
where  \eqref{mnlb:ineqq99} follows $\lr \ll \tfrac{\c_d \rs}{d}$.
Moreover,  since    $\Lls^{-1} \hL \succeq ( 1 - \tfrac{1}{\log^3 d} )\bs{I}_{\rk}$, by \ref{res:ratiobound}, we have
\eq{
\D_{t,2}  &  \succeq \left(1 - \frac{1}{\log^3 d} \right)  \left[   \frac{\left(\bs{I}_{\rk - \rks} - \lr \hL_{22} \right)^{t}}{\left(\bs{I}_{\rk - \rks} + \lr \hL_{22} \right)^t - \left(\bs{I}_{\rk - \rks} - \lr \hL_{22} \right)^t} -     \frac{1.3 \c_d \rs}{d}  \bs{I}_{\rk - \rks}  \right]. \label{eq:dn2lb}
}
We observe that
\eq{
 \frac{\left(\bs{I}_{\rk - \rks} - \lr \hL_{22} \right)^{t}}{\left(\bs{I}_{\rk - \rks} + \lr \hL_{22} \right)^t \!\! - \! \left(\bs{I}_{\rk - \rks} - \lr \hL_{22} \right)^t}  
 & \succeq  \frac{ \bs{I}_{\rk - \rks}  }{ \exp \left( \frac{2 t\lr \hL_{22} }{ 1 - \lr \hL_{22}}  \right)- \bs{I}_{\rk - \rks}} \\
& \labelrel\succeq{mnlb:ineqq0}   \left( \frac{\rs}{d \log^{1.5} d} \right)^{(1 + 2\lr \rks^{-\alpha}) \left(\frac{K+1}{\rks + 1} \right)^{\alpha}}      \bs{I}_{\rk - \rks}  \\
& \labelrel\succeq{mnlb:ineqq1}  \left(  \frac{0.9  \rs}{d \log^{1.5} d} \right)^{\left(\frac{K+1}{\rks + 1} \right)^{\alpha}}       \bs{I}_{\rk - \rks},
}
where  we use   $\lr t \leq   \frac{1}{2} (K + 1)^{\alpha}   \log \left( \frac{d \log^{1.5} d}{\rs}   \right)$   in \eqref{mnlb:ineqq0} and $d \geq \Omega(1)$ in  \eqref{mnlb:ineqq1}.  By \eqref{eq:dn2lb}, the first item follows.
 
For the second item,   by Proposition \ref{prop:schurlowerbound} with  $\varepsilon = \tfrac{1}{\log^2 d}$ for $\alpha \in [0,0.5)$ and $\varepsilon = \tfrac{1}{\ru \log^2 d}$ for  $\alpha> 0.5$, we have
\eq{
\bs{\Gamma}_{t} & \succeq  \frac{  \Lls^{-1} \hL   \left(  \frac{\A_{t,22}}{\A_{t,12}}   -  \bs{I}_{\rk} \right) }{2}  - \frac{1.2 \c_d \rs}{d} \bs{I}_{\rk}    
+\frac{1}{1.05}  \begin{bmatrix}
  \varepsilon   \Z_{1:\rks} \Z_{1:\rks}^\top & 0 \\
   0 & \frac{- \varepsilon}{1 - \varepsilon}  \Z_2 \Z_2^\top
   \end{bmatrix}.
}
For   $\alpha \in [0,0.5)$,  since  $\c_d = \tfrac{1}{\log^{3.5} d}$,  by \ref{event:htZlb}, we have 
\eq{
   \frac{\varepsilon}{1.05}  \Z_{1:r_{\star}} \Z_{1:r_{\star}}^\top - \frac{1.2 \c_d \rs}{d} \bs{I}_{r_{\star}}
\succeq  \frac{\rs}{d \log^3 d} \frac{1}{6.25} \bs{I}_{r_{\star}} - \frac{1.2 \c_d \rs}{d}  \bs{I}_{r_{\star}} \succ   \frac{1}{10}  \frac{\rs}{d \log^3 d} \bs{I}_{r_{\star}} .
}
Similarly  by  \ref{event:htZub},  we have 
\eq{
\frac{-  \varepsilon}{1 - \varepsilon} \frac{1}{1.05}  \Z_2 \Z_2^\top  - \frac{1.2 \c_d \rs}{d} \bs{I}_{r - r_{\star}} \succeq -   \left(1 + \frac{1}{\sqrt{\varphi}} \right)^2    \frac{ 2.1 \rs}{d \log^2 d} \bs{I}_{r -  r_{\star}}.
}
For   $\alpha > 0.5$,  since $\c_d = \tfrac{1}{\ru  \log^{2.5} d}$ and $\ru = \ceil{\log^{2.5} d} $,  by \ref{event:ltZlb},  we have 
\eq{
   \frac{\varepsilon}{1.05}  \Z_{1:\rus} \Z_{1:\rus}^\top - \frac{1.2 \c_d \rs}{d} \bs{I}_{r_{u_\star}}
& \succ   \frac{\rs}{d \log^{4.5} d} \left( \frac{1}{1.1 \rs^6} - \frac{1.3}{\sqrt{\log d}}  \right) \bs{I}_{r_{u_\star}} .
}
Similarly  by  \ref{event:ltZub},
\eq{
\frac{-  \varepsilon}{1 - \varepsilon} \frac{1}{1.05 d}  \Z_2 \Z_2^\top  - \frac{ 1.2 \c_d \rs}{d} \bs{I}_{\ru - r_{u_\star}} \succeq        \frac{ - 2 \rs}{d \log^2 d} \bs{I}_{\ru -  r_{u_\star}}.
}

Therefore,  we have   \eqref{eq:mnlb}.   By Proposition \ref{prop:monotonecharac}, we have  \eqref{eq:mninvub}.

For the last item, we have
\eq{
(\bs{\Gamma}_{t}^{-1}  )_{11}  
%& \! = \! \left( \D_{t,1} + \frac{1}{1.05} \Z_{1:\rks} \Z_{1:\rks}^\top - \frac{1}{1.05}   \Z_{1:\rks} \Z_2^\top \left( \frac{1}{1.05}   \Z_2 \Z_2^\top +   \bs{D}_{t,2}  \right)^{-1}   \frac{1}{1.05 d}   \Z_2 \Z_{1:\rks}^\top  \right)^{-1} \\
& = \left( \D_{t,1}  + \frac{1}{1.05} \Z_{1:\rks}  \left( \bs{I}_{\rs} +  \frac{1}{1.05} \Z_2^\top   \bs{D}_{t,2}^{-1}   \Z_2  \right)^{-1} \Z_{1:\rks}^\top   \right)^{-1}. \label{eq:gninvtop}
}
For $\alpha \in (0,0.5),$  if $r_{\star} = r$, the statement follows. If not by the first item,  for  $K =   \floor{  \big(1 - \sqrt{\delta}  \big) \rs}$  and $r_{\star} =   \floor{  \rs \big(1 -  \delta  \big)}$, we have
\eq{
\frac{K+1}{r_{\star} + 1} \leq  \frac{1 - \sqrt{\delta}}{1 - \delta} + \frac{2}{\rs}
\leq 1 - 0.9 \sqrt{\delta} ~ \Rightarrow  ~ \left( \frac{K+1}{r_{\star}  + 1} \right)^{\alpha} \leq 1 -    \alpha 0.9 \sqrt{\delta}.
}
Therefore,
\eq{
 \bs{D}_{t,2} \succeq   \left(  \frac{0.5\rs}{d  \log^{1.5} d } \right)^{ 1 -    \alpha 0.9 \sqrt{\delta} } \bs{I}_{r - r_{\star}} 
\labelrel\succeq{mnlb:ineqq2}    \frac{0.5\rs}{d  \log^{1.5} d }    \left(  \frac{d}{\rs}   \right)^{\log^{\! \nicefrac{- 1}{4}} \! d}      \bs{I}_{r - r_{\star}} 
\labelrel\succeq{mnlb:ineqq3}  \frac{  \rs \log d }{d} \bs{I}_{r - r_{\star}},
}
where we used $d \geq \Omega(1) \vee \exp(2.5 \alpha^{-8})$ in \eqref{mnlb:ineqq2} and  $ d \geq \Omega_{\beta}(1)$ in \eqref{mnlb:ineqq3}. By \eqref{eq:gninvtop} and \ref{event:htZub}, we have the statement for $\alpha \in (0,0.5)$.  For $\alpha > 0.5$,    $K =  r_{\star} =   \rs$, we have
\eq{
 \left( \frac{K+1}{r_{\star}  + 1} \right)^{\alpha} \leq  \left( 1 + \frac{1}{\rs + 1} \right)^{0.5} \leq 1 - \frac{1}{2 (\rs + 1)}.
 }
 Therefore,
 \eq{
 \bs{D}_{t,2} \succeq   \left(  \frac{0.5\rs}{d  \log^{1.5} d } \right)^{1 - \frac{1}{2(\rs + 1)} } \bs{I}_{\ru - r_{u_\star}} \succeq  \frac{  \rs \log^8 d }{d} \bs{I}_{r - r_{\star}}
}
for $d \geq \Omega_{\rs}(1)$.  By \eqref{eq:gninvtop} and   \ref{event:ltZub}, we have the statement for $\alpha > 0.5$.  
\end{proof}

\begin{proposition}
\label{prop:lbsystemanalysis}
Let
\eq{
\uG{0} = 
(1 + 2\c_d) \left( \sG_0 - \frac{\c_d \rs}{d} \bs{I}_{\rk}\right),
}
Under the parameter choice in Lemma \ref{lem:dncond},
$\mathcal{G}_{\text{init}}$ guarantees  that:
\begin{itemize}[leftmargin=*]
\item We have $\Omega \big(  -  \log^{\frac{-1}{2}} \! d \big) \bs{I}_{\rk} \preceq \uG{t}$ whenever
\eq{
\lr t \leq    
\begin{cases}
\frac{1}{2}  \left( \rs   \big(1 -  \log^{\frac{- 1}{2}} \!\! d   \big) \wedge r \right)^{\alpha}   \log \left( \frac{d\log^{1.5} d}{\rs} \right),  & \alpha \in [0,0.5) \\[0.5em]
\frac{1}{2}  \rs^{\alpha}   \log \left( \frac{d\log^{1.5} d}{\rs} \right),    &  \alpha  > 0.5
\end{cases} .
}
\item Let $\bs{\Lambda}_{11} $be the $\rks \times \rks$ dimensional top-left sub-matrix of $\bs{\Lambda}$. Given  $0.001 \geq \delta \geq \log^{\frac{-1}{4}} \! d$ and  $\rks = \Big \{ r_{\star} = 
\floor{  \rs \big(1 -  \delta  \big) \wedge r },
 r_{u_\star} =  \rs \Big \}$,  we have  
\eq{
 \uG{t,11}   \succeq       \frac{1 - \frac{10}{\log^3 d}}{  \frac{1.2}{C_{\text{lb}}}  \frac{d}{\rs}  \exp \left( - 2 \lr t  \bs{\Lambda}_{11}  \right)   + 1 }
}
and
\eq{ 
\norm{\hL}_F^2 -  \norm{ \hL_1^{\frac{1}{2}} \uG{t,11} \hL_1^{\frac{1}{2}} }_F^2  \leq  \!\!\!\!\! \sum_{i  =  (\rks     \wedge r) + 1 }^r  \!\!\!\!\!  \hat{\lambda}_i^2 + \sum_{i = 1}^{\rks} \hat{\lambda}_i^2  \bigg(   1 -    \frac{1 - \frac{10}{\log^3 d}}{  \frac{1.2}{C_{\text{lb}}}  \frac{d}{\rs}  \exp \left( - 2 \lr t  \lambda_i  \right)   + 1 }      \bigg)^2, 
  }
for
\eq{
C_{\text{lb}} = \frac{1}{15} \begin{cases}
\delta^2, & \alpha \in [0,0.5) \\
\frac{1}{\rs^6}, & \alpha > 0.5
\end{cases} ~~ \text{and} ~~
\lr t \leq 
\begin{cases}
  \frac{1}{2}   \left(    \rs  \big(1 -  \sqrt{\delta}  \big) \wedge r  \right)^{\alpha} \log \left( \frac{d \log^{1.5} d}{\rs}   \right),  & \alpha \in [0,0.5) \\
 \frac{1}{2}  \rs^{\alpha} \log \left( \frac{d \log^{1.5} d}{\rs}   \right),   &  \alpha > 0.5.
\end{cases}
}
\item  For   $\delta =  \log^{\frac{- 1}{4}} \!\! d$,  we define
\eq{
\mathcal{T}_{\text{lb}} \coloneqq \inf \left \{  n \geq 0 ~ \middle \vert ~ \norm{\hL}_F^2 -  \norm{ \hL_1^{\frac{1}{2}} \uG{t,11} \hL_1^{\frac{1}{2}} }_F^2  \leq  \sum_{j =  (\rs     \wedge r) + 1 }^r  \lambda_j^2 +   \frac{3  \norm{\hL}_F^2 }{\log^{\frac{1}{8}} d} 
\right \}.
}
Then,  
\eq{
\mathcal{T}_{\text{lb}}  \leq   \begin{cases}
\frac{1}{2 \lr}    \left(   \rs \big(1 - \log^{\frac{- 1}{8}} \! d  \big) \wedge r  \right)^{\alpha} \log \left( \frac{  20 d \log^{\frac{3}{4}}   (1 + d/\rs)}{\rs}   \right),  & \alpha \in [0,0.5) \\ 
 \frac{1}{2 \lr}  \rs^{\alpha} \log \left( \frac{  20 d \log^{\frac{3}{4}} \! d}{\rs}   \right), & \alpha > 0.5.
\end{cases}
}
\end{itemize}
\end{proposition}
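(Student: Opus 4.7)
The plan is to combine the closed-form expression \eqref{eq:uGdyn} for $\uG{t}$ with the spectral bounds on $\bs{\Gamma}_t$ and $\bs{\Gamma}_t^{-1}$ already established in Lemma~\ref{lem:dncond}. The first step is to recognize that the matrix appearing inside the inversion in \eqref{eq:uGdyn}, namely $\frac{\hL}{2\Lls}\bigl(\frac{\A_{t,11}}{\A_{t,12}} - \bs{I}_{\rk}\bigr) + \frac{\hL-\Llf}{2\Lls} + \uG{0}$, is, under $\mathcal{G}_{\text{init}}$ and the prescribed initialization, a small perturbation of $\bs{\Gamma}_t$. Indeed, with $\uG{0} = (1+2\c_d)(\sG_0 - \c_d\rs/d\cdot\bs{I}_{\rk})$ and $\sG_0 \succeq \tfrac{1}{1.05}\Z_{1:r}\Z_{1:r}^\top$ from \ref{event:Gbound}, together with the definition $\D_t = \frac{\hL}{2\Lls}(\frac{\A_{t,11}}{\A_{t,12}}-\bs{I}_{\rk}) - \frac{1.1\c_d\rs}{d}\bs{I}_{\rk}$ and the bound $\hL \succeq (1 - 1/\log^3 d)\Llf$, the inner matrix lies above $\bs{\Gamma}_t - O(\c_d\rs/d)\bs{I}_{\rk}$, so applying \eqref{eq:mninvub} (or its refinement in the third item of Lemma~\ref{lem:dncond}) yields operator-norm upper bounds on the inverted factor in \eqref{eq:uGdyn}.

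For the global lower bound (item~1), I would substitute this $\bs{\Gamma}_t^{-1}$ upper bound into \eqref{eq:uGdyn} and simplify using the identity $\A_{t,22}\A_{t,11} - \A_{t,12}^2 = \bs{I}_{\rk}$ from \ref{res:symdet} together with the $\coth$-type estimates on $\A_{t,22}/\A_{t,12}$ and $\A_{t,11}/\A_{t,12}$ from \ref{res:ratiobound}. For the allowed time horizon, $\A_{t,12}^{-2} \exp(2\lr t \hL)$ is bounded by the first item of Lemma~\ref{lem:dncond}, which guarantees $\D_{t,2} \succeq \Omega(\rs/(d\log^{1.5} d))\bs{I}$ on the tail block, and hence the subtracted term in \eqref{eq:uGdyn} never cancels the positive $\Llf/\Lls \succeq (1-o(1))\bs{I}_{\rk}$ piece by more than a $\log^{-1/2} d$ slack.

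For item~2, I would restrict to the top-left block of \eqref{eq:uGdyn} and replace the crude operator bound by the sharper one $(\bs{\Gamma}_t^{-1})_{11} \preceq (\D_{t,1} + \tfrac{1}{2}\Z_{1:\rks}\Z_{1:\rks}^\top)^{-1}$ from the third item of Lemma~\ref{lem:dncond}, paired with the $\Z_{1:\rks}\Z_{1:\rks}^\top \preceq C_{\text{ub}}\rs/d\cdot\bs{I}_{\rks}$ upper bound from \ref{event:htZub}/\ref{event:ltZub}. Again using \ref{res:ratiobound} to approximate $\A_{t,12}^{-2}(\frac{\A_{t,22}}{\A_{t,12}}+\frac{\A_{t,11}}{\A_{t,12}})$ by $4\exp(-2\lr t\hL_{11})/(1-\exp(-2\lr t\hL_{11}))^2$, the $(1,1)$-block collapses to $\bigl(1 + \tfrac{1.2}{C_{\text{lb}}}\tfrac{d}{\rs}\exp(-2\lr t\bs{\Lambda}_{11})\bigr)^{-1}$, where the $10/\log^3 d$ slack absorbs the ratio $\hL_{11}/\bs{\Lambda}_{11} \geq 1 - 1/\log^3 d$, the $1.05$ factor in \ref{event:Gbound}, and the gap between $1.2$ and $1.1$ in Lemma~\ref{lem:dncond}. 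The Frobenius bound then follows by diagonalizing: indices $i > \rks$ contribute $\hat\lambda_i^2$ and those $i \leq \rks$ contribute $\hat\lambda_i^2(1-\uG{t,11,ii})^2$ via the elementary inequality $1 - x^2 \leq 2(1-x) + (1-x)^2$ applied to the already-established lower bound on the diagonal entries.

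For item~3 the strategy is to choose $\delta = \log^{-1/4} d$ (so that $\rks = r_\star$ absorbs all but $\sum_{i > \rs\wedge r}\lambda_i^2$ of $\norm{\hL}_F^2$ up to an $\Omega(\log^{-1/4} d)\norm{\hL}_F^2$ slack) and calibrate $\lr t$ so that the exponential factor $\frac{1.2}{C_{\text{lb}}}\frac{d}{\rs}\exp(-2\lr t\lambda_i)$ is at most $\log^{-1/8} d$ uniformly over the mass-carrying indices. Inverting this inequality using $\lambda_i \geq (\rks+1)^{-\alpha}$ produces precisely the $\tfrac{1}{2\lr}(\rks+1)^\alpha \log(20 d\log^{3/4}(1+d/\rs)/\rs)$ bound, once one accounts for $C_{\text{lb}}^{-1} \sim \log^{1/2} d$ in the heavy-tailed case (resp.\ $\rs^6$ in the light-tailed case), which contribute the $\log^{3/4}(1 + d/\rs)$ correction inside the logarithm. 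The main obstacle here--and where the bookkeeping is heaviest--is reconciling the multiple competing $\log d$ budgets: the $\c_d$-slack from redefining $\uG{0}$, the $\delta$-slack from excluding the bottom $r-r_\star$ indices, the $1/\log^3 d$-slack from $\hL$ vs.\ $\Llf$, and the $C_{\text{lb}}$-dependent prefactor, all of which must together leave room for the final $3/\log^{1/8} d$ budget in the definition of $\mathcal{T}_{\text{lb}}$.
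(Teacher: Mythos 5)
Your proposal tracks the paper's argument very closely: both start from the closed form \eqref{eq:uGdyn}, lower-bound the inner matrix by $\bs{\Gamma}_t$ using \ref{event:Gbound} and the fact $\Llf \succ \hL$, apply the second and third items of Lemma~\ref{lem:dncond} to control the inverted factor, and close via the identity $\A_{t,22}\A_{t,11}-\A_{t,12}^2=\bs{I}_{\rk}$ from \ref{res:symdet} together with the $\coth$-type estimates of \ref{res:ratiobound}. The paper performs the algebraic rewrite $\big(\tfrac{\A_{t,22}}{\A_{t,12}}+\bs{I}\big) - \A_{t,12}^{-1}(\cdots)^{-1}\A_{t,12}^{-1} \succeq \tfrac{2\tilde{\D}_{lb}(\A_{t,22}/\A_{t,12}+\bs{I})}{\A_{t,22}/\A_{t,12}-\bs{I}+2\tilde{\D}_{lb}}$ explicitly and then reads off the bottom-right block's $-\log^{-1/2}d$ budget from $(\A_{t,22}/\A_{t,12}+\bs{I})_{22}\succeq (2+0.9\rs/(d\log^{1.5}d))\bs{I}$, which is derived from \ref{res:ratiobound}, not from the first item of Lemma~\ref{lem:dncond} as you attribute it; this mis-citation is inconsequential, but worth correcting.

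The one step that does not survive inspection is your Frobenius derivation. You claim the $i\leq\rks$ indices contribute $\hat\lambda_i^2(1-\uG{t,11,ii})^2$ ``via $1-x^2\leq 2(1-x)+(1-x)^2$.'' That inequality is trivially true (the right side equals $1-x^2+2(1-x)^2$) but it bounds $1-x^2$ \emph{from above by something larger}, which is not what is needed; moreover, from $\uG{t,11}\succeq\mathrm{diag}(g_i)$ with both sides positive semidefinite, the consequence is $\norm{\hL_1^{1/2}\uG{t,11}\hL_1^{1/2}}_F^2\geq\sum_i\hat\lambda_i^2 g_i^2$, which yields a per-index contribution $\hat\lambda_i^2(1-g_i^2)=\hat\lambda_i^2(1-g_i)(1+g_i)\leq 2\hat\lambda_i^2(1-g_i)$. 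That is strictly larger than the $(1-g_i)^2$ form in the displayed bound for $g_i\in(0,1)$, so the elementary-inequality route you cite cannot yield it. For item~3 and its use in Proposition~\ref{prop:sgdtheoremproof} the slack $2(1-g_i)=O(\log^{-1/4}d)$ is still well within the $3\norm{\hL}_F^2/\log^{1/8}d$ budget, so the calibration conclusion you reach is correct; but the Frobenius step as you have written it is a non-sequitur and should be replaced by the PSD-comparison argument above with the quadratic relaxed to $1-g_i^2$.
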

 
\begin{proof}
For $\alpha > 0.5$,  we assume that $d$ is large enough to guarantee that $\left( \frac{1}{1.1 \rs^6} - \frac{1.3}{\sqrt{\log d}}  \right)  > 0$.  We observe that  
\eq{
 \frac{\Lls^{-1}  \hL \left(  \frac{\A_{t,11}}{\A_{t,12}}  - \bs{I}_{\rk} \right) }{2} & + \frac{  \Lls^{-1}  \left(\hL -  \Llf  \right)}{2}  + \uG{0}   \succeq   \bs{\Gamma}_t,
} 
where we used \ref{event:Zbound}, $\Lls \succeq \Llf$ and $\lr \normL^2 \rs \bs{I}_{\rk} \ll \tfrac{\c_d \rs}{d} \Llf$.

\smallskip
For the first item,  by using $\rks = \big \{ r_{\star} = 
\floor{  \rs \big(1 -  \log^{\frac{-1}{2}} \! d  \big) \wedge r}, 
 r_{u_\star} =  \rs \big \}$,  we define  
\eq{
& \D_{lb}   \coloneqq    \begin{bmatrix}
 \tfrac{C_1 \rs}{d \log^{4.5} d} \bs{I}_{\rks} & 0 \\
0 &   \tfrac{-  C_2 \rs}{d \log^2 d} \bs{I}_{\rk - \rks}
\end{bmatrix} ~~ \text{and} ~~  \tilde{\D}_{lb} \coloneqq  \frac{\Lls}{\hL}   \D_{lb} .
}
We introduce submatrix notation for block-diagonal matrices. Specifically, we write
\eq{
\tilde{\D}_{lb} =   \begin{bmatrix} 
\tilde{\D}_{lb,1} & 0 \\
0 & \tilde{\D}_{lb,2}
\end{bmatrix} ~~ \text{and} ~~ 
\frac{\A_{t,22}}{\A_{t,12}} \pm \bs{I}_{\rk} =   \begin{bmatrix} 
\left( \frac{\A_{t,22}}{ \A_{t,12} } \pm  \bs{I}_{\rk} \right)_{11} & 0 \\
0 &  \left( \frac{\A_{t,22}}{ \A_{t,12} } \pm \bs{I}_{\rk} \right)_{22}
\end{bmatrix},  \label{eq:submatrixlb}
}
where the block dimensions of each submatrix match those of $\D_{lb}$.  We start with proving the lower bound part.  By the second item in Lemma \ref{lem:dncond},  we have
\eq{
\uG{t} &\succeq \frac{1}{2} \sqrt{  \frac{\hL}{\Lls} } \left(  \left(  \frac{\A_{t,22}}{\A_{t,12}} + \bs{I}_{\rk}  \right)  -     \A_{t,12}^{-1}    \left(  \left(  \frac{\A_{t,22}}{\A_{t,12}}     - \bs{I}_{\rk} \right)  + 2  \tilde{\D}_{lb} \right)^{-1}     \A_{t,12}^{-1}    \right)   \sqrt{  \frac{\hL}{\Lls} } \\
& + \frac{\frac{\Llf}{\Lls}  - \frac{\hL}{\Lls}}{2}   \\
&   \succeq   \frac{1}{2} \sqrt{  \frac{\hL}{\Lls} } \left(  \left(  \frac{\A_{t,22}}{\A_{t,12}} + \bs{I}_{\rk}  \right)  -     \A_{t,12}^{-1}    \left(  \left(  \frac{\A_{t,22}}{\A_{t,12}}     - \bs{I}_{\rk} \right)  + 2   \tilde{\D}_{lb}\right)^{-1}    \A_{t,12}^{-1}    \right)   \sqrt{  \frac{\hL}{\Lls} } ,  \label{eq:lb4lbsys0}
}
where we used   $\Llf \succ \hL$ in the second line.
We have
\eq{
\Big(   \frac{\A_{t,22}}{\A_{t,12}}  \!+ \!  \bs{I}_{\rk}  \Big)   \! - \!    \A_{t,12}^{-1}  &  \Big(  \Big(  \frac{\A_{t,22}}{\A_{t,12}}     -  \bs{I}_{\rk} \Big)     +   2  \tilde{\D}_{lb} \Big)^{-1}    \A_{t,12}^{-1}  \\
  & = \frac{(\A_{t,22} + \A_{t,12}) ( \A_{t,22} - \A_{t,12} + 2  \tilde{\D}_{lb}  \A_{t,12}  )  - \bs{I}_{\rk} }{ \A_{t,12}  ( \A_{t,22} - \A_{t,12} + 2 \tilde{\D}_{lb} \A_{t,12}  ) } \\
  & \labelrel\succeq{lbcondd:ineqq0}   \frac{  2 \tilde{\D}_{lb}   \left( \frac{\A_{t,22}}{ \A_{t,12} } + \bs{I}_{\rk} \right) }{   \frac{\A_{t,22}}{ \A_{t,12} } - \bs{I}_{\rk}  + 2 \tilde{\D}_{lb}  } \\[0.5em]
  &=  \begin{bmatrix}
    \frac{  2  \tilde{\D}_{lb,1}   \left( \frac{\A_{t,22}}{ \A_{t,12} } + \bs{I}_{\rk} \right)_{11} }{  \left( \frac{\A_{t,22}}{ \A_{t,12} } - \bs{I}_{\rk} \right)_{11}  + 2  \tilde{\D}_{lb,1}   } & 0 \\
    0 &      \frac{  2 \tilde{\D}_{lb,2}    \left( \frac{\A_{t,22}}{ \A_{t,12} } + \bs{I}_{\rk} \right)_{22} }{  \left( \frac{\A_{t,22}}{ \A_{t,12} } - \bs{I}_{\rk} \right)_{22} + 2  \tilde{\D}_{lb,2}   } 
\end{bmatrix},   \label{eq:lbmatrix}
}
where we used   $\A_{t,22}^2 - \A_{t,12}^2 \succ \bs{I}_{\rk}$ (by \ref{res:symdet}) and   $\A_{t,22} - \A_{t,12} + 2 \tilde{\D}_{lb} \A_{t,12} \succ 0$ (by \eqref{eq:mnlb})  in \eqref{lbcondd:ineqq0}.
Since    $ \tfrac{\A_{t,22}}{ \A_{t,12} } - \bs{I}_{\rk}  \succ 0$ and  $ \tilde{\D}_{lb,1} \succ 0$,  it is enough to look at the bottom-right  submatrix in \eqref{eq:lbmatrix} for the lower bound part.  We have  
\eq{
  \frac{  2  \tilde{\D}_{lb,2}  \left( \frac{\A_{t,22}}{ \A_{t,12} } + \bs{I}_{\rk} \right)_{22} }{  \left( \frac{\A_{t,22}}{ \A_{t,12} } - \bs{I}_{\rk} \right)_{22} + 2 \tilde{\D}_{lb,2}   }   =  \frac{  2  \tilde{\D}_{lb,2}     \left( \frac{\A_{t,22}}{ \A_{t,12} } + \bs{I}_{\rk} \right)_{22} }{  \left( \frac{\A_{t,22}}{ \A_{t,12} } + \bs{I}_{\rk} \right)_{22} - 2  \bs{I}_{\rk- \rks}  + 2  \tilde{\D}_{lb,2}   }.  \label{eq:lb4lbsys}
}
Note that by  \ref{res:ratiobound},
\eq{
\left( \frac{\A_{t,22}}{ \A_{t,12} } + \bs{I}_{\rk} \right)_{2}  & \succeq \frac{2 (\bs{I}_{\rk - \rks} + \lr \hL_2  )^t }{ (\bs{I}_{\rk - \rks} + \lr \hL_2  )^t -  (\bs{I}_{\rk - \rks} - \lr \hL_2  )^t}   \\
% & \succeq \frac{2 \bs{I}_{r - \ru}  }{ \bs{I}_{r - \ru} -  (\bs{I}_{r - \ru}- \lr^2 \hL^2_2  )^n \exp \left( - 2 n \lr \hL_2  \right)} \\
&  \succeq 2 \bs{I}_{\rk - \rks}  +  \frac{ 2 (\bs{I}_{\rk - \rks} - \lr^2 \hL^2_2  )^t \exp \left( - 2 t \lr \hL_2  \right)  }{ \bs{I}_{\rk - \rks} -  (\bs{I}_{\rk - \rks} - \lr^2 \hL^2_2  )^t \exp \left( - 2 t \lr \hL_2  \right)} \\
&\labelrel\succeq{lbcondd:ineqq1}  \left( 2  + \frac{0.9 \rs}{d \log^{1.5} d}  \right) \bs{I}_{\rk -  \rks}, 
}
where we use $\lr t \leq \frac{1}{2} (\rks + 1)^{\alpha} \log \left( \tfrac{d \log^{1.5} d}{\rs}  \right)$ in \eqref{lbcondd:ineqq1}.  Hence, for $d \geq \Omega(1)$
\eq{
 \eqref{eq:lb4lbsys} \succeq   \frac{ 2  \left( 2  + \frac{0.9 \rs}{d \log^{1.5} d}  \right)   \tilde{\D}_{lb,2}   }{       \frac{0.9 \rs}{d \log^{1.5} d}   \bs{I}_{\rk - \rks}  +   \tilde{\D}_{lb,2}  } \labelrel\succeq{lbcondd:ineqq05}    \frac{ 12   \tilde{\D}_{lb,2}   }{       \frac{\rs}{d \log^{1.5} d}   \bs{I}_{\rk - \rks}    }  \labelrel\succeq{lbcondd:ineqq075}   \frac{- 15 C_2}{\log^{0.5} d}   \bs{I}_{\rk - \rks},  \label{eq:lbb}
}
where we used   $  \tilde{\D}_{lb,2}  \succeq \frac{- 1.1 C_2 \rs}{d \log^2 d} \bs{I}_{\rk}$ in \eqref{lbcondd:ineqq05} and \eqref{lbcondd:ineqq075}.  The  first item follows from \eqref{eq:lbb}.

\smallskip
For the second and third items,  let  $\big( \tfrac{\A_{t,22}}{ \A_{t,12} } \pm \bs{I}_{\rk} \big)_{11}$ denote the $\rks \times \rks$ dimensional top-left submatrices with  $\rks = \big \{ r_{\star} =  \floor{  \rs \big(1 - \delta   \big) \wedge r },  r_{u_\star} =  \rs \big \}$.   By using the third item in Lemma \ref{lem:dncond}, we immediately observe that  for $\alpha > 0$,    $\uG{t,11} \succeq 0$ and
 \eq{
 \uG{t,11}   \labelrel\succeq{lbcondd:ineqq2}   \left (1 - \frac{10}{\log^3 d} \right)   \frac{1}{2} \frac{   \frac{2 C_{\text{lb}} \rs}{d}  \left( \tfrac{\A_{t,22}}{ \A_{t,12} } + \bs{I}_{\rk} \right)_{11} }{ \left( \tfrac{\A_{t,11}}{ \A_{t,12} } - \bs{I}_{\rk} \right)_{11} + \frac{2 C_{\text{lb}} \rs}{d} \bs{I}_{\rks}   }, \label{eq:gntoplb}
}
 for 
\eq{
C_{\text{lb}} = \frac{1}{15} \begin{cases}
\delta^2, & \hspace{-0.5em} \alpha \in (0,0.5) \\
\frac{1}{\rs^6}, &  \hspace{-0.5em} \alpha > 0.5
\end{cases} ~ \text{and} ~ 
 \lr t \leq \begin{cases}
 \frac{1}{2} \left(   \rs \big(1 - \sqrt{\delta}  \big) \wedge r \right)^{\alpha} \log \left( \tfrac{d \log^{1.5} d}{\rs}  \right),  &  \hspace{-0.5em} \alpha \in (0,0.5)   \\
 \frac{1}{2} \rs^{\alpha}  \log \left( \tfrac{d \log^{1.5} d}{\rs}  \right) ,  &  \hspace{-0.5em} \alpha > 0.5, 
\end{cases}  \label{eq:etacond}
}
where we  used   $ \Lls \succeq \hL  \succeq   \left(1 - \tfrac{0.5}{\log^4 d} \right) \Lls$, and  followed the steps in \eqref{eq:lb4lbsys0}- \eqref{eq:lbmatrix} with  \ref{event:htZlb} and \ref{event:ltZlb}  to obtain \eqref{lbcondd:ineqq2}. Then,  by \ref{res:ratiobound}, we have
\eq{
 \frac{1}{2} \frac{   \frac{2C_{\text{lb}}   \rs}{d}  \left( \tfrac{\A_{t,22}}{ \A_{t,12} } + \bs{I}_{\rk} \right)_{11} }{ \left( \tfrac{\A_{t,11}}{ \A_{t,12} } - \bs{I}_{\rk} \right)_{11} + \frac{2 C_{\text{lb}}  \rs}{d} \bs{I}_{\rks}   }  
& \succeq \frac{\bs{I}_{\rks}}{\left( \frac{1}{C_{\text{lb}}} \frac{d}{\rs} - 1  \right) \frac{(\bs{I}_{\rks} - \lr \hL_1)^t}{ (\bs{I}_{\rks} + \lr \hL_1)^t  } + \bs{I}_{\rks} }   \\
& \succeq \frac{\bs{I}_{\rks}}{  \frac{1.1}{C_{\text{lb}}}  \frac{d}{\rs}  \exp \left( - 2  \lr t \hL_1  \right)   + \bs{I}_{\rks} }. \label{eq:lbuG}
}
Consequently, by observing $\hL \succeq \Llf - \tilde{C} \lr \bs{I}_{\rk}$ and using the lower bounds for $\Llf$ in Propositions \ref{prop:boundforiterationsheavy} and \ref{prop:boundforiterationslight}, we have
\eq{
  \uG{t,11}   \succeq       \frac{1 - \frac{10}{\log^3 d}}{  \frac{1.2}{C_{\text{lb}}}  \frac{d}{\rs}  \exp \left( - 2 \lr t  \bs{\Lambda}_{11}  \right)   + 1 },
}
where $\bs{\Lambda}_{11}$ denotes the $\rks \times \rks$ dimensional top-left sub-matrix of $\bs{\Lambda}$.
Therefore, 
\eq{
 \norm{\hL}_F^2 -  \norm{ \hL_1^{\frac{1}{2}} \uG{t,11} \hL_1^{\frac{1}{2}} }_F^2  \leq  \!\!\!\!\! \sum_{i  =  (\rks     \wedge r) + 1 }^r  \!\!\!\!\!  \hat{\lambda}_i^2 +
  \sum_{i = 1}^{\rks} ~\hat{\lambda}_i^2~  \Bigg(   1 -  \frac{1 - \frac{10}{\log^3 d}}{  \frac{1.2}{C_{\text{lb}}}  \frac{d}{\rs}  \exp \left( - 2   \lr t \lambda_i  \right)   + 1 }    \Bigg)^2,   \label{eq:uburisk}
 }
which proves the second item for $\alpha > 0$.  Moreover, since \eqref{eq:uGdyn} is in the eigenbasis of $\uG{0}$, the arguments in \eqref{eq:lb4lbsys0}-\eqref{eq:lbmatrix} and the condition in \ref{event:htZlbiso}  extend \eqref{eq:gntoplb} to $\alpha = 0$ in the eigenbasis of $\uG{0}$ for $d \geq \Omega(1)$.   Given \eqref{eq:gntoplb}, we can extend  \eqref{eq:uburisk} to $\alpha = 0$ as the Frobenious norm is basis independent.

\smallskip 
For  the third item,  for $\alpha>0.5$ and  $t \geq   \frac{1}{2 \lr}  \rs^{\alpha} \log \big( \tfrac{  20 d \log^{\frac{3}{4}} d}{\rs}   \big)$,  we have 
\eq{
\eqref{eq:uburisk} &  \leq    \sum_{i  =  (\rs    \wedge r) + 1 }^r \hat{\lambda}_i^2 +   \frac{\norm{\hL}_F^2 }{\log^{\frac{1}{2}} \! d} 
\leq    \sum_{i  =  (\rs    \wedge r) + 1 }^r  \lambda_i^2 +   \frac{\norm{\hL}_F^2 }{\log^{\frac{1}{2}} \! d},
}
which gives us the corresponding bound for $\mathcal{T}_{lb}$.    

For  $\alpha \in [0,0.5)$ and  $t \geq   \frac{1}{2 \lr}    \big(   \rs \big(1 - \log^{\frac{- 1}{8}} \!\! d  \big) \wedge r  \big)^{\alpha} \log \big( \tfrac{  20 d \log^{\frac{3}{4}} (1 +d/\rs)}{\rs}   \big)$,  we have 
\eq{
\eqref{eq:uburisk} & \leq   \sum_{i  =  (\rs    \wedge r) + 1 }^r \hat{\lambda}_i^2 +  \sum_{i  =  \floor{  \rs \big(1 - \log^{\frac{- 1}{8}} \!\! d  \big) \wedge r} + 1 }^{\rs \wedge r} \hspace{-2em} \hat{\lambda}_i^2    \\
& +
 \sum_{i = 1}^{\floor{  \rs \big(1 - \log^{\frac{- 1}{8}} \!\! d  \big) \wedge r} }  \hat{\lambda}_i^2  \hspace{0.1em} \Bigg(1 -  \frac{1 - \frac{10}{\log^3 d}}{  \frac{1.2}{C_{\text{lb}}}  \frac{d}{\rs}  \exp \left( - 2   \lr t \lambda_i  \right)   + 1 }  \Bigg)^2   \\
& \leq   \sum_{i  =  (\rs    \wedge r) + 1 }^r \lambda_i^2 + \frac{3 \norm{\hL}_F^2 }{\log^{\frac{1}{8}} \! d}, \label{eq:lburiskheavy}
}
which gives us its bound for $\mathcal{T}_{lb}$.  
\end{proof}

\subsubsection{Upper bounding system}  

In this section, we consider   \eqref{eq:ubsystem}. For notational convenience, we multiply both sides by the factor $(1 - 2 \c_d)$ and use a generic learning rate $\lr$, i.e.,
\eq{
\oV{t+1} = \oV{t} (\bs{I}_{\rk} + \lr   \oV{t}  )^{-1} + \lr \left(  \Luf^2 + \tilde C   \lr \normL^2 \rs \Luf \right), ~~ \text{where} ~~ \oV{t} = 2 \Lus^{\frac{1}{2}} \oG{t}  \Lus^{\frac{1}{2}} - \Luf.
}
The main result of this section is stated in Proposition~\ref{prop:goodeventres}. To establish it, we first prove an auxiliary result:

\begin{lemma}
\label{lem:ubsgo}
The following statement holds:
\begin{itemize}
\item  The reference sequence satisfies $\T_t \succeq \frac{\c_d \rs}{d} \bs{I}_{\rk}$  and   $\{ t \geq 0: \norm{\T_t}_2 >1.2  \c_d \}  = \infty$.
\item For $r_{u_\star} = 2 \rs$, we have
\eq{
\begin{cases}
\oG{0} = \frac{2.2 \left( 1 + \frac{1}{\sqrt{\varphi}}  \right)^2 \rs}{d} \bs{I}_r  \succeq (1 - 2 \c_d) \left(  \G_0 + \frac{\c_d \rs}{d} \bs{I}_r \right),  & \alpha \in [0,0.5) \\[0.8em]
\oG{0} = \frac{5.5}{d}  \left[ \begin{smallmatrix}
2 \rs  \bs{I}_{r_{u_\star}} & 0 \\
0 & \ru  \bs{I}_{r - r_{u_\star}}
\end{smallmatrix} \right]   \succeq (1 - 2 \c_d) \left( \G_{0,11} + \frac{\c_d \rs}{d} \bs{I}_{\ru}\right) ,   & \alpha > 0.5
\end{cases}  \label{eq:initialuG}
}
provided that $\mathcal{G}_{\text{init}}$ holds.
\end{itemize}
\end{lemma}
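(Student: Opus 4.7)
The two items can be handled independently, since Item~1 is about the deterministic reference sequence and Item~2 is about the random initialization.

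For Item~1, I would exploit that $\Llf$ and $\Lus$ are diagonal, together with the diagonal initialization $\T_0 = \tfrac{\c_d \rs}{d}\bs{I}_{\rk}$, to conclude that the recursion preserves diagonality, so it suffices to analyze, for each index $i$, the scalar logistic-type recursion
\begin{equation*}
x_{t+1} \;=\; x_t + 2(1-2\c_d)\,\lr\, \lambda_{\ell_1,i}\, x_t \;-\; \tfrac{2(3\c_d+1)}{\c_d}\, \lr\, \lambda_i\, x_t^2.
\end{equation*}
The nontrivial fixed point is $x^{\ast} = \tfrac{(1-2\c_d)\c_d\,\lambda_{\ell_1,i}}{(3\c_d+1)\,\lambda_i}$, and by \eqref{eq:boundsforLL0} (or its light-tailed analogue in Proposition~\ref{prop:boundforiterationslight}) we have $\lambda_{\ell_1,i}/\lambda_i \leq 1$, hence $x^{\ast} < \c_d$. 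Starting from $x_0 = \tfrac{\c_d \rs}{d} \ll x^{\ast}$, I would show by induction that the map $f(x) \coloneqq x + 2\lr x\bigl[(1-2\c_d)\lambda_{\ell_1,i}-\tfrac{3\c_d+1}{\c_d}\lambda_i x\bigr]$ leaves $[x_0, 1.2\c_d]$ invariant: monotonicity of $f$ toward $x^{\ast}$ gives the lower bound $f(x) \geq x_0$, while the no-overshoot bound follows by noting that the maximum increment of $f$ on this interval is $O(\lr\,\lambda_{\ell_1,i}\,\c_d) = o(\c_d)$ under the step-size hypotheses of Propositions~\ref{prop:boundforiterationsheavy}--\ref{prop:boundforiterationslight}, so a jump from just below $x^{\ast}$ lands strictly below $1.2\c_d$. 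Diagonalizing back then yields the two bounds in Item~1 simultaneously.

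For Item~2, I would start from \ref{event:Gbound}, which gives $\G_0 \preceq 1.05\,\Z_{1:r}\Z_{1:r}^\top$, and combine it with the appropriate operator-norm bound at the correct scale. In the heavy-tailed case, \ref{event:htZub} directly gives $\lambda_{\max}(\Z_{1:r}\Z_{1:r}^\top) \leq \tfrac{2\rs}{d}(1+\varphi^{-1/2})^2$, so
\begin{equation*}
(1-2\c_d)\!\left(\G_0 + \tfrac{\c_d\rs}{d}\bs{I}_r\right) \;\preceq\; \tfrac{2.2\rs}{d}\!\left(1+\tfrac{1}{\sqrt{\varphi}}\right)^{\!2}\bs{I}_r
\end{equation*}
for $d$ large, which is \eqref{eq:initialuG} for $\alpha\in[0,0.5)$. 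In the light-tailed case, since $\ru = \lceil \log^{2.5}\!d\rceil \gg \rs$, a uniform operator-norm bound on the $\ru\times\ru$ submatrix $\G_{0,11}$ would overshoot the target $\tfrac{11\rs}{d}\bs{I}_{r_{u_\star}}$ on the top-left block by a factor $\ru/\rs$. Instead I would split $\Z_{1:\ru}$ into its first $r_{u_\star}=2\rs$ rows $\Z_A$ and the remaining rows $\Z_B$, apply the block-diagonal matrix Young inequality (Proposition~\ref{prop:matrixyounginequality}) to $\Z_{1:\ru}\Z_{1:\ru}^\top$, and invoke \ref{event:ltZub} with $m=2\rs$ (giving $\lambda_{\max}(\Z_A\Z_A^\top)\leq \tfrac{10\rs}{d}$) and with $m=\ru$ (giving $\lambda_{\max}(\Z_B\Z_B^\top)\leq \tfrac{5\ru}{d}$) to produce a block-diagonal upper bound at the correct scale on each block.

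The main obstacle is the light-tailed sub-case of Item~2: a naive application of the matrix Young inequality contributes a factor of~$2$ on each block, which barely exceeds the constants $\tfrac{11\rs}{d}$ and $\tfrac{5.5\ru}{d}$ claimed in \eqref{eq:initialuG}. Tightening this will likely require a weighted version of Proposition~\ref{prop:matrixyounginequality} with the weight tuned so the cross-block contribution $\Z_A(\Z^\top\Z)^{-1}\Z_B^\top$ is absorbed into the larger bottom-right block (using that $\|\Z_A(\Z^\top\Z)^{-1}\Z_B^\top\|_{\mathrm{op}} = O(\sqrt{\rs\ru}/d)$ under the initialization events), or a Schur-complement argument that exploits the low rank ($\leq \rs$) of $\G_{0,11}$ to avoid the factor-$2$ slack. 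Once the geometric bounds are in place, the $(1-2\c_d)$ prefactor and the additive $\tfrac{\c_d\rs}{d}$ correction are easily absorbed, since they are of strictly smaller order than the slack in the operator-norm bounds.
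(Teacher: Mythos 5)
Your Item~1 argument is correct and is in substance the same as the paper's. The paper normalizes the reference recursion to the standard logistic form $\hat{\T}_{t+1} = \hat{\T}_t + 2(1-2\c_d)\lr\,\Llf\,\hat{\T}_t(\bs{I}_{\rk}-\hat{\T}_t)$ (via $\hat{\T}_t = \tfrac{\Lus}{\Llf}\tfrac{3\c_d+1}{\c_d(1-\c_d)}\T_t$) and then invokes Proposition~\ref{prop:1dsystemaux}; your direct scalar-logistic analysis proves the same interval invariance without the change of variables. The heavy-tailed half of Item~2 is also correct and matches the paper's one-liner: combine \ref{event:Gbound} with \ref{event:htZub}, and the $(1-2\c_d)$ prefactor and additive $\c_d\rs/d$ term are absorbed in the $2.1\to 2.2$ slack.

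The genuine gap is in the light-tailed half of Item~2, and neither of your two proposed fixes actually closes it when paired with the events as stated. Write $\Z_{1:\ru}\Z_{1:\ru}^\top$ in $2\rs$ / $(\ru-2\rs)$ block form. The Schur-complement / low-rank argument you allude to (which is the correct framework, and is equivalent to the weighted Young inequality after a block-diagonal change of variables) reduces the desired domination $\oG{0}\succeq 1.05\,\Z_{1:\ru}\Z_{1:\ru}^\top$ to the rank-$\rs$ condition $\lambda_{\max}\bigl(\tfrac{d}{11\rs}\Z_{1:2\rs}^\top\Z_{1:2\rs} + \tfrac{d}{5.5\ru}\Z_{2\rs+1:\ru}^\top\Z_{2\rs+1:\ru}\bigr) \le 1/1.05$. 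Substituting the \emph{stated} bounds of \ref{event:ltZub}, $\lambda_{\max}(\Z_{1:2\rs}\Z_{1:2\rs}^\top)\le 10\rs/d$ and $\lambda_{\max}(\Z_{2\rs+1:\ru}\Z_{2\rs+1:\ru}^\top)\le 5\ru/d$, gives $\tfrac{10}{11}+\tfrac{5}{5.5}\approx 1.82$, which exceeds $1/1.05\approx 0.95$ by almost a factor of two, so the condition fails; and your alternative of absorbing the off-diagonal block via its operator norm $\asymp\sqrt{\rs\ru}/d$ fails as well, since this swamps the available top-left slack $\asymp\rs/d$ (here $\ru=\lceil\log^{2.5}d\rceil\gg\rs$, so $\sqrt{\rs\ru}/d\gg\rs/d$). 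The actual resolution, which the paper's terse proof does not make explicit either, is to carry the \emph{intermediate} Wishart bound from inside the proof of Lemma~\ref{lem:goodevents}, namely $\lambda_{\max}(\Z_{1:m}\Z_{1:m}^\top)\le\tfrac{\rs}{d}\bigl(1.4+\sqrt{m/\rs}\bigr)^2$, rather than its loosened form $5(\rs\vee m)/d$. This gives $\approx 7.92\rs/d$ for $m=2\rs$ and $\approx (\ru/d)(1+o_d(1))$ for $m=\ru$, so the sum becomes $\tfrac{7.92}{11}+\tfrac{1}{5.5}(1+o_d(1))\approx 0.90 < 1/1.05$, which closes the argument for $d\ge\Omega_{\rs}(1)$.
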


For the following,  we introduce $\hat{\T}_t \coloneqq \tfrac{\Lus}{\Llf} \tfrac{ \left( 3 \c_d +  1 \right)}{\c_d (1 - \c_d)} \T_t$.  Note that  for $d \geq \Omega(1)$, we have  
\eq{
\hat{\T}_{t+1}  = \hat{\T}_t  +  2 (1 -   2 \c_d)  \lr   \Llf  \hat{\T}_t \Big( \bs{I}_{\rk}  -  \hat{\T}_t   \Big) ~~ \text{and} ~~  \frac{\c_d}{1.1} \frac{\Llf}{\Lus} \preceq \frac{\T_t}{\hat{\T}_t} \preceq \c_d \bs{I}_{\rk}. \label{eq:auxtdyn}
}
By  Proposition \ref{prop:1dsystemaux}, we have
\eq{
1.1 \wedge  \hat{\T}_{0, ii}  \exp \left(  2 \lr t  \lambda_i    \right)   
&\geq \hat{\T}_{t,ii} \\
& \geq  \frac{1}{2}   \begin{cases}
1 \wedge  \hat{\T}_{0, ii}  \exp \Bigg(  \frac{  (1 -   2 \c_d)       2 \lr t   \big(\lambda_i - \frac{0.1 r^{-\alpha} }{ \log^{4} \! d} \big)   }{1 + 2  (1 -2 \c_d) \lr \lambda_i } \Bigg),  & \hspace{-0.5em} \alpha \in [0,0.5) \\[0.9em]
1 \wedge  \hat{\T}_{0, ii}  \exp \Bigg(  \frac{  (1 -  2 \c_d)       2 \lr  t \big(\lambda_i - \frac{1}{(\ru + 1)^{\alpha}} - \frac{0.1 }{\ru^{2 + \alpha} \log^{4} \! d} \big)   }{1  + 2  (1 - 2 \c_d) \lr \lambda_i } \Bigg),  &  \hspace{-0.5em}\alpha > 0.5.
\end{cases}
 \label{eq:auxtdyn2}
}

\begin{proof}[Proof of Lemma \ref{lem:ubsgo}]
For the first item,  by   Proposition \ref{prop:1dsystemaux}, we have   
\eq{
\hat{\T}_t \succeq  \hat{\T}_0 \labelrel\Rightarrow{ubsgo:if0}  \T_t \succeq   \T_0 = \frac{\c_d \rs}{d} \bs{I}_{\rk},
}
where we multiplied each side with $ \tfrac{\c_d (1 - \c_d)}{ 3 \c_d +  1} \tfrac{\Llf}{\Lus}$ for \eqref{ubsgo:if0}. Moreover,  by \eqref{eq:auxtdyn}-\eqref{eq:auxtdyn2}, we have
\eq{
 \T_t  \preceq \c_d   \hat{\T}_t   \preceq 1.1  \bs{I}_{\rk} \Rightarrow  \{ t \geq 0: \norm{\T_t}_2 >1.2  \c_d \}  = \infty.
}
The second item    follows \ref{event:Gbound} and \ref{event:htZub} (for $ \alpha \in [0,0.5)$) and \ref{event:ltZub} (for $ \alpha > 0.5$).
\end{proof}

\begin{proposition}
\label{prop:goodeventres}
We consider $\rk \in \{ r, \ru \}$,  where $\ru = \ceil{\log^{2.5} d}$,  and
\begin{alignat}{4}
&\alpha \in [0, 0.5): \quad 
&& \frac{\rs}{r} \to \varphi \in (0, \infty),  \quad  
&&  \lr \ll \frac{1}{d\, r^{1 - \alpha} \log^4 d},  \quad  
&&  \c_d = \frac{1}{\log^{3.5} d},  
    \\[0.5em]
& \alpha > 0.5: \quad 
&& \rs  \asymp 1, \quad  
&&\lr \ll \frac{1}{d\, \ru^{2+\alpha} \log^3 d}, \quad  
&& \c_d = \frac{1}{\ru \log^{2.5} d}. 
\end{alignat}
If   $\oG{0}$ are taken as  in  \eqref{eq:initialuG}, we have the following:
\begin{itemize}[leftmargin = *]
\item  $\{ \oG{t} \}_{n \in \N}$ is diagonal  and satisfies
\eq{
\frac{\rs}{d} \bs{I}_{\rk}  \preceq \oG{t+1}  \preceq     \oG{t} +  \lr \big( (1 + \c_d)  \Luf \oG{t} + (1 + \c_d) \oG{t} \Luf - 2   \oG{t} \Lus \oG{t} \big)   \preceq 1.1 \bs{I}_{\rk}.
}
\item   For $\alpha \in [0,0.5)$ and $d \geq \Omega(1)$, we have for $t\leq \frac{1}{2\lr} r^{\alpha} \log \left( \frac{d \log^{1.5} d}{\rs}  \right):$
\begin{itemize}[leftmargin=*]
\item   $\T_t^{- \frac{1}{2}} \oG{j} \T_t^{- \frac{1}{2}} \preceq  \frac{11  \left(1 + \frac{1}{\sqrt{\varphi}} \right)^2 }{\c_d}   \bs{I}_r$ for $0 \leq j \leq t$.
\item  $\T_t^{- \frac{1}{2}} \left( \lr \sum_{j = 1}^t \oG{j - 1} \right) \T_t^{- \frac{1}{2}} \preceq \frac{5.5  \left(1 + \frac{1}{\sqrt{\varphi}} \right)^2}{\c_d} (2 \lr t\vee r^{\alpha})   \bs{I}_r$.
\item  $ \oG{t}   \preceq \left(  1.1 \oG{0}  \exp \left(  2  \lr t  \bs{\Lambda}  \right) \wedge  \bs{I}_r \right) +    o_d(1)$
\item $\normL^2 - \tr(\bs{\Lambda} \oG{t} \bs{\Lambda} ) \geq \sum_{i = 1}^r  \lambda_i^2 \Big(  1 - \tfrac{2.5  \left(1 + \frac{1}{\sqrt{\varphi}} \right)^2  \rs}{d}   \exp \left(  2  \lr t \lambda_i  \right) \Big)_{+}  -   o_d(1)$.
\end{itemize}
\item For $\alpha> 0.5$ and $d \geq \Omega_{\rs}(1)$,  we have   for  $t\leq \frac{1}{2\lr} \rs^{\alpha} \log \left( \frac{d \log^{1.5} d}{\rs}  \right):$
\begin{itemize}[leftmargin=*]
\item   $\T_t^{- \frac{1}{2}} \oG{j} \T_t^{- \frac{1}{2}} \preceq  \frac{26.4 \ru}{\c_d}   \bs{I}_{\ru}$ for $0 \leq j \leq t$.
\item  $\T_t^{- \frac{1}{2}} \left( \lr \sum_{j = 1}^t \oG{j - 1} \right) \T_n^{- \frac{1}{2}} \preceq \frac{15 \ru}{\c_d} (2 \rs)^{\alpha} \log d \bs{I}_{\ru}$.
\item  $   \oG{t}     \preceq \left(  1.1 \oG{0}  \exp \left(  2  \lr t  \bs{\Lambda}_{11}  \right) \wedge  \bs{I}_{r_u} \right) +    o_d(1)$.
\item $\norm{ \bs{\Lambda}_{11}}_F^2  - \tr(\bs{\Lambda}_{11} \oG{t,11} \bs{\Lambda}_{11} ) \geq   \sum_{i = 1}^{\rs} \! \lambda_i^2 \Big(  1 \! - \! \tfrac{12.1  \rs}{d}   \exp \left(  2  \lr t  \lambda_i \right)   \Big)_{+} \!\!\! +   \sum_{i = \rs + 1}^{\ru} \! \lambda_i^2  \! -  \!  o_d(1).$  
\end{itemize}
\end{itemize}
\end{proposition}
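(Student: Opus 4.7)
The key structural observation is that all matrices entering the recursion ($\Luf$, $\Lus$, and $\oG{0}$ by Lemma~\ref{lem:ubsgo}) are diagonal, so $\{\oG{t}\}_{t \in \N}$ stays diagonal and the proposition decouples into $\rk$ independent scalar recursions. To derive the recursive upper bound in item~1, I would expand $(\bs{I}_{\rk}+\lr\oV{t})^{-1}=\bs{I}_{\rk}-\lr\oV{t}+O(\lr^{2}\oV{t}^{2})$, substitute $\oV{t}=2\Lus^{1/2}\oG{t}\Lus^{1/2}-\Luf$ back into the recursion for $\oV{t+1}$, and collect terms. The higher-order remainder gets absorbed into the $(1+\c_d)$ factor using $\Luf\preceq(1+\tfrac{0.1\rk^{-\alpha}}{\log^{4}d})\Lus$ from Propositions~\ref{prop:boundforiterationsheavy}--\ref{prop:boundforiterationslight}. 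The uniform lower bound $\tfrac{\rs}{d}\bs{I}_{\rk}\preceq\oG{t}$ and upper bound $\oG{t}\preceq 1.1\,\bs{I}_{\rk}$ are then proved simultaneously by induction: $1.1$ is a super-solution of the scalar recursion because the damping $-2\lr\lambda_i x^{2}$ overwhelms the drift $2\lr(1+\c_d)\lambda_i x$ for $x$ near $1$, while the lower bound persists because the update is monotone in its initialization (Proposition~\ref{prop:monotoneiteration}) and the reference $\T_t$ already satisfies $\T_t\succeq\tfrac{\c_d\rs}{d}\bs{I}_{\rk}$ by Lemma~\ref{lem:ubsgo}.

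\textbf{Coordinatewise logistic comparison.} For the bulleted items, I would work coordinatewise and compare $\oG{t,ii}$ with the rescaled reference $\hat{\T}_{t,ii}$ from the discussion after Lemma~\ref{lem:ubsgo}, which satisfies the clean logistic recursion~\eqref{eq:auxtdyn}. The iteration for $\oG{t,ii}$ has the same logistic form but with linear and quadratic coefficients differing by only $\Theta(\c_d)$; since the time horizon satisfies $\lr t\lambda_i\lesssim\log d$ and $\c_d\log d=o_d(1)$ under the parameter choice, the multiplicative distortion between the two sequences stays within $1+o_d(1)$ throughout. A short induction then yields $\oG{t,ii}\leq(C/\c_d)\T_{t,ii}$ with $C=\Theta((1+1/\sqrt{\varphi})^{2})$ (heavy-tailed) or $C=\Theta(\rs)$ (light-tailed), matching the first bullet once we note that $\T_t$ is coordinatewise non-decreasing (from the logistic form and~\eqref{eq:auxtdyn2}, so $\T_t\succeq\T_j$ for $j\leq t$). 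The third bullet $\oG{t}\preceq(1.1\oG{0}\exp(2\lr t\bs{\Lambda})\wedge\bs{I}_{\rk})+o_d(1)$ follows by iterating the linear majorant $\oG{t+1,ii}\leq\oG{t,ii}(1+2\lr(1+\c_d)\lambda_i)$ before saturation, using $(1+x)\leq e^x$, and invoking the uniform bound from item~1 afterwards; the $o_d(1)$ error absorbs the $\exp(2\c_d\lr t\lambda_i)=1+o_d(1)$ discrepancy.

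\textbf{Cumulative sum and risk bound (main obstacle).} The hardest item is the cumulative-sum bound $\T_t^{-1/2}(\lr\sum_{j=1}^t\oG{j-1})\T_t^{-1/2}\preceq\tfrac{C}{\c_d}(2\lr t\vee r^{\alpha})\bs{I}_{\rk}$, which requires controlling how the growing reference sequence aggregates over time. The plan is to split the time axis at the coordinate-dependent crossover $t_i^{\star}\coloneqq\inf\{t:\hat{\T}_{t,ii}\geq\tfrac{1}{2}\}$. Before $t_i^{\star}$, the closed-form bound~\eqref{eq:auxtdyn2} gives $\hat{\T}_{t,ii}\asymp\hat{\T}_{0,ii}\exp(2(1-2\c_d)\lr t\lambda_i)$, so the partial sum forms a geometric series with ratio $(1+2\lr\lambda_i)^{-1}$, yielding $\lr\sum_{j\leq t}\hat{\T}_{j,ii}\lesssim\hat{\T}_{t,ii}/\lambda_i\leq r^{\alpha}\hat{\T}_{t,ii}$ since $\lambda_i=i^{-\alpha}\geq r^{-\alpha}$. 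After $t_i^{\star}$, the reference $\hat{\T}_{t,ii}$ is pinned at $\Theta(1)$ and the sum contributes at most $\lr t\cdot\hat{\T}_{t,ii}$; combining the two regimes and applying the pointwise bound $\oG{j,ii}\leq(C/\c_d)\T_{j,ii}$ gives the claimed estimate, with the light-tailed case requiring only the substitution $r\rightsquigarrow\rs$ in the argument. The final (risk) bullet then follows directly by substituting the third bullet into $\normL^{2}-\tr(\bs{\Lambda}\oG{t}\bs{\Lambda})=\sum_i\lambda_i^{2}(1-\oG{t,ii})$, using the explicit $\oG{0}$ from~\eqref{eq:initialuG} and $1-(x\wedge 1)\geq(1-x)_+$ coordinatewise.
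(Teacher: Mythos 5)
Your proposal follows the same overall strategy as the paper's proof: reduce to scalar recursions via diagonality, compare each coordinate to the reference sequence $\T_t$ (or its rescaled version $\hat{\T}_t$) via the logistic structure, split time at the saturation/crossover epoch, and deduce the cumulative-sum bound by geometric-series summation in the pre-saturation phase. The only genuine difference in organization is cosmetic: you compare $\oG{j}$ to $\T_j$ and then lift via monotonicity of $\T$, whereas the paper compares $\oG{j}$ and $\T_t$ directly through a common saturated exponential envelope $1\wedge\tfrac{\rs}{d}\exp(2\lr t\lambda_i)$; both routes yield the same constants.

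That said, two technical attributions are off, and it is worth flagging them. First, the higher-order remainder in item~1 consists of \emph{two} pieces: the $\lr^2\oV{t}^3(\bs{I}+\lr\oV{t})^{-1}$ residue from expanding the resolvent (bounded via $-2\Lus\preceq\oV{t}^3(\bs{I}+\lr\oV{t})^{-1}\preceq2\Lus$), \emph{and} the noise correction $\tilde C\lr\normL^2\rs\Luf$ built directly into the recursion~\eqref{eq:ubsystem}. Both together give a remainder of size $\lesssim\lr^2\normL^2\rs$, and the absorption into the $(1+\c_d)$ factor comes from the learning-rate condition $\lr\normL^2\rs\ll\c_d\rk^{-\alpha}\tfrac{\rs}{d}$ combined with the lower bound $\oG{t}\succeq\tfrac{\rs}{d}\bs{I}_{\rk}$; the $\Luf\preceq(1+\tfrac{0.1\rk^{-\alpha}}{\log^4 d})\Lus$ comparison you cite plays no role here (and that slack, of order $\log^{-4}d$, is much smaller than $\c_d$ anyway). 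Second, the light-tailed case is not quite a mechanical substitution $r\rightsquigarrow\rs$: because the lower bound~\eqref{eq:auxtdyn2} degrades for coordinates with small $\lambda_i$, the paper's proof splits coordinates at $i\lessgtr 2\rs\log^{1/\alpha}d$, bounding the large-$i$ block separately by the trivial $\T_{t,ii}\geq\tfrac{\c_d\rs}{d}$ and the near-constant $\oG{t,ii}\lesssim\oG{0,ii}$; omitting this split is why the light-tailed constants involve the extra factor $\ru$. Neither point is a fatal gap — the strategy is sound — but both would need to be repaired to produce the stated constants.
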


\begin{proof}
Given that   $\tfrac{\rs}{d} \bs{I}_{\rk} \preceq \oG{t} \preceq 1.1 \bs{I}_{\rk}$,  we have
\eq{
\oG{t+1} & \labelrel\preceq{gnbound:ineqq1} \oG{t} +  \lr  \left(  \Luf \oG{t} + \oG{t} \Luf - 2  \oG{t} \Lus \oG{t} \right) + 1.1 \tilde C \lr^2 \normL^2 \rs \bs{I}_{\rk} \\
  & \labelrel\preceq{gnbound:ineqq2}  \oG{t} +  \lr  \left( (1 + \c_d)  \Luf \oG{t} + (1 +  \c_d) \oG{t} \Luf - 2   \oG{t} \Lus \oG{t} \right),   \\[0.5em]
  \oG{t+1} & \labelrel\succeq{gnbound:ineqq3} \oG{t} +  \lr  \left(  \Luf \oG{t} + \oG{t} \Luf - 2  \oG{t} \Lus \oG{t} \right) - 1.1 \tilde C \normL^2 \rs \lr^2 \bs{I}_{\rk} \\
  & \labelrel\succeq{gnbound:ineqq4}  \oG{t} +  \lr  \left( (1 - \c_d)  \Luf \oG{t} + (1 - \c_d) \oG{t} \Luf - 2   \oG{t} \Lus \oG{t} \right) 
}
where we use  $- 2 \Lus \preceq  \oV{t}^3 \big(\bs{I}_r +   \lr  \oV{t} \big)^{-1} \preceq 2 \Lus$   in  \eqref{gnbound:ineqq1} and    \eqref{gnbound:ineqq3},  and  we use   $\lr  \normL^2 \rs \ll \c_d \rk^{- \alpha}\frac{\rs}{d}$   in  \eqref{gnbound:ineqq2} and  \eqref{gnbound:ineqq4}.   By  Proposition \ref{prop:1dsystemaux}, we have   $\frac{\rs}{d} \bs{I}_{\rk} \preceq \oG{t+1} \preceq 1.1 \bs{I}_{\rk}$, hence, the induction hypothesis holds. Therefore, we have the first item. 

\smallskip
By using the first item and  Proposition \ref{prop:1dsystemaux}, we can write for the given time horizons in second and third items that
\eq{
\oG{t} & \preceq    \Big(   \oG{0} \exp \big(   2 (1 + \c_d) \lr t  \Luf  \big)  \wedge     \left(   1 +    (1 + \c_d)^2 \lr^2  \Luf^2  \right) \bs{I}_{\rk}    \Big) \\
& \preceq  \begin{cases}  
 \displaystyle 
\left(  1.1 \oG{0}   \exp \left(  2  \lr t  \bs{\Lambda}  \right) \wedge \bs{I}_{\rk} \right) +    2 \lr^2 \bs{I}_{\rk}  \\[0.2em]
 \displaystyle 
1.2   \left( \oG{0}   \exp \left(  2  \lr t  \bs{\Lambda}  \right)  \wedge \bs{I}_{\rk} \right),
\end{cases}  \label{eq:ubuG}
}
where both upper bounds in \eqref{eq:ubuG} are valid and will be used in different parts of the proof.  The third sub-items immediately follow from the first bound.

\smallskip
For $\alpha \in [0,0.5)$, we have
\eq{
\hat{\T}_{t, ii}  \geq \frac{1}{3} \big( 1 \wedge  \hat{\T}_{0, ii}  \exp \left(  2 \lr t  \lambda_i    \right) \big) 
& \Rightarrow \T_{t, ii}  \geq \frac{1}{4} \big( \c_d \wedge  \T_{0, ii}  \exp \left(  2 \lr t  \lambda_i    \right) \big) \\
& \labelrel \Rightarrow {gnbound:if5}\T_{t, ii}  \geq \frac{\c_d}{4}  \big( 1 \wedge  \frac{\rs}{d}  \exp \left(  2 \lr t  \lambda_i    \right) \big),
}
where we used $\T_0 = \tfrac{\c_d \rs}{d} \bs{I}_{\rk}$ in \eqref{gnbound:if5}.
Therefore by \eqref{eq:initialuG} and   the second bound in \eqref{eq:ubuG},   we have  for $j \leq t$
\eq{
 \frac{\oG{j, ii} }{ \T_{t,ii}  } \leq  \frac{  1.2 \left( 1 \wedge  \left(1 + \frac{1}{\sqrt{\varphi}} \right)^2 \frac{2.2  \rs}{d} \exp \left(  2 \lr t  \lambda_i  \right)  \right) }{0.25 \c_d \big(1 \wedge   \frac{\rs}{d} \exp \left(   2 \lr   t  \lambda_i  \right)   \big)  }   \leq \frac{11}{\c_d}   \left(1 + \frac{1}{\sqrt{\varphi}} \right)^2 .
}
On the other hand,    by using the second bound in \eqref{eq:ubuG}, 
\eq{
\frac{\lr \sum_{j = 0}^{t - 1} \oG{j, ii} }{ \T_{t,ii} } & \leq  \frac{11 \left( 1  + \frac{1}{\sqrt{\varphi}} \right)^2 }{\c_d} \frac{ \lr \left( t \wedge   \frac{\rs}{d} \sum_{j = 0}^{t - 1} \exp(2\lr j \lambda_i)  \right)}{ \big(1 \wedge   \frac{\rs}{d} \exp \left(   2 \lr  t   \lambda_i     \right)   \big) }  \\
&  \leq    \frac{5.5  \left( 1  + \frac{1}{\sqrt{\varphi}} \right)^2 }{\c_d}   \begin{cases}
\frac{1}{\lambda_i}, & 2  \lr t \leq \frac{\log \frac{d}{\rs}}{  \lambda_i} \\[0.4em]
2 \lr t,  &  2 \lr t > \frac{\log \frac{d}{\rs}}{ \lambda_i}
\end{cases} \\
& \leq    \frac{5.5 \left( 1  + \frac{1}{\sqrt{\varphi}} \right)^2}{\c_d}   (2  \lr t \vee r^{\alpha}).
}
Lastly,       by using the first bound in \eqref{eq:ubuG},  we get
\eq{
\normL^2 - \tr(\bs{\Lambda} \oG{t} \bs{\Lambda} ) \geq \sum_{i = 1}^r  \lambda_i^2 \Big(  1 - \tfrac{2.5  \left(1 + \frac{1}{\sqrt{\varphi}} \right)^2  \rs}{d}   \exp \left(  2  \lr t  \lambda_i  \right)  \Big)_{+}  -   2 \lr^2 \normL^2.
}
For $\alpha > 0.5$,  we have for  $i \leq 2 \rs \log^{\frac{1}{\alpha}} \! d$ and $d \geq \Omega_{\rs}(1)$,
\eq{
\hat{\T}_{t, ii}  \geq \frac{1}{3} \big( 1 \wedge  \hat{\T}_{0, ii}  \exp \left(  2 \lr t  \lambda_i    \right) \big) \Rightarrow \T_{t, ii}  \geq \frac{1}{4} \big( \c_d \wedge  \T_{0, ii}  \exp \left(  2 \lr t \lambda_i    \right) \big).
}
Therefore,  we have
\eq{
\T_{t, ii}  \geq  \c_d
\begin{cases}
0.25  \big( 1 \wedge  \frac{\rs}{d}  \exp \left(  2 \lr t  \lambda_i    \right) \big),  & i \leq  2 \rs \log^{\frac{1}{\alpha}} \! d \\
\frac{\rs}{d},  & i > 2 \rs \log^{\frac{1}{\alpha}} \! d.
\end{cases}
}
On the other hand, for $\lr t \leq \frac{1}{2} \rs^{\alpha} \log (\frac{d \log^{1.5} d}{\rs} )$ and  $i > 2 \rs \log^{\frac{1}{\alpha}} \! d$, we have for $d \geq \Omega(1)$.
\eq{
\oG{t, ii} \leq  1.2   \left( \oG{0, ii}   \exp \left(  2  \lr t  \lambda_i  \right)  \wedge 1 \right) \leq    1.2   \left( \oG{0, ii}   \exp \left(  \frac{\rs^{\alpha} \log (\frac{d \log d}{\rs} ) }{2^{\alpha} \rs^{\alpha} \log d } \right)  \wedge 1 \right) \leq 1.5 \oG{0,  ii}.  
}
Therefore,  for  $\lr t \leq \frac{1}{2} \rs^{\alpha} \log (\frac{d \log^{1.5} d}{\rs} )$, 
\eq{
 \frac{\oG{j, ii} }{ \T_{t,ii}  } & \leq   \begin{cases}
 \displaystyle 
 \frac{  1.2 \left( 1 \wedge  \frac{5.5 \ru  \rs}{d} \exp \left(  2 \lr t  \lambda_i  \right)  \right) }{0.25 \c_d \big(1 \wedge   \frac{\rs}{d} \exp \left(   2 \lr   t  \lambda_i  \right)   \big)  }  , & i \leq 2 \rs \log^{\frac{1}{\alpha}} \! d \\[1.3em]
 \displaystyle 
  \frac{d}{ \c_d \rs}\frac{8.25 \ru  \rs}{d},  & i >  2 \rs \log^{\frac{1}{\alpha}} \! d
\end{cases} \\[0.2em]
& \leq \frac{26.4 \ru}{\c_d}.
}
Moreover,  for  $\lr t \leq \frac{1}{2} \rs^{\alpha} \log (\frac{d \log^{1.5} d}{\rs} )$,
\eq{
\frac{\lr \sum_{j = 0}^{t - 1} \oG{j, ii} }{ \T_{t,ii} } & \leq 
\begin{cases}
 \displaystyle
 \frac{26.4 \ru }{\c_d} \frac{ \lr \left( t \wedge   \frac{\rs}{d} \sum_{j = 0}^{t - 1} \exp(2\lr j \lambda_i  )  \right)}{ \big(1 \wedge   \frac{\rs}{d} \exp \left(   2 \lr t   \lambda_i  \right)   \big) } , & i \leq 2 \rs \log^{\frac{1}{\alpha}} \! d  \\[1.3em]
  \displaystyle \frac{d}{\c_d \rs}
   \frac{8.25 \ru  \rs}{d} \lr t  ,  & i >2 \rs \log^{\frac{1}{\alpha}} \! d
\end{cases} \\
&  \leq    \frac{13.2  \ru }{\c_d}   \begin{cases}
\frac{1}{\lambda_i}, & 2  \lr t \leq \frac{\log \frac{d}{\rs}}{  \lambda_i} ~ \text{and} ~  i \leq 2 \rs \log^{\frac{1}{\alpha}} \! d \\[0.4em]
2 \lr t,  &  \text{otherwise}   
\end{cases}  \\
&  \leq   \frac{13.2  \ru }{\c_d}   (2 \lr t \vee (2\rs)^{\alpha} \log d) \leq    \frac{15   \ru }{\c_d}  (2\rs)^{\alpha} \log d  .
}
Finally for  $\lr t \leq \frac{1}{2} \rs^{\alpha} \log (\frac{d \log^{1.5} d}{\rs} )$ and $d \geq \Omega_{\rs}(1)$,   by using the first bound in \eqref{eq:ubuG},
\eq{
& \norm{\bs{\Lambda}_{11}}_F^2    - \tr(\bs{\Lambda}_{11} \oG{t,11} \bs{\Lambda}_{11} ) \geq \sum_{i = 1}^{\rs}  \lambda_i^2 \Big(  1 - \tfrac{12.1  \rs}{d}   \exp \left(  2  \lr t  \lambda_i \right)  \Big) _{+} 
  \\
& + \sum_{i = \rs + 1}^{2 \rs}  \!\! \lambda_i^2 \Big(  1 - \tfrac{12.1  \rs}{d}   \exp \left(  2  \lr t   \lambda_i  \right)  \Big)_{+}  \!\! + \!\!\sum_{i = 2\rs + 1}^{\ru} \!\!  \lambda_i^2 \Big(  1 - \tfrac{6.05 \ru \rs}{d}   \exp \left(  2  \lr t  \lambda_i  \right)  \Big)_{+}   
\!\! -   2 \lr^2 \normL^2 \\
& \labelrel\geq{gnbound:ineqq5}  \sum_{i = 1}^{\rs}  \lambda_i^2 \Big(  1 - \tfrac{12.1  \rs}{d}   \exp \left(  2  \lr t  \lambda_i \right)   \Big)_{+}   + \left(1 - \frac{1}{\log d} \right) \sum_{i = \rs + 1}^{2 \rs}  \lambda_i^2    \\
& +\left (1 - 6.05 \ru \log^{\frac{1.5}{\sqrt{2}}} d \left( \frac{\rs}{d} \right)^{1 - \frac{1}{\sqrt{2}}}  \right) \sum_{i = 2\rs + 1}^{\ru}  \lambda_i^2  
-   2 \lr^2 \normL^2 \\
& \geq   \sum_{i = 1}^{\rs}  \lambda_i^2 \Big(  1 - \tfrac{12.1  \rs}{d}   \exp \left(  2  \lr t  \lambda_i \right)  \Big)_{+}  + \left (1 - 6.05 \ru \log^{\frac{1.5}{\sqrt{2}}} d \left( \frac{\rs}{d} \right)^{1 - \frac{1}{\sqrt{2}}}  \right) \sum_{i = \rs + 1}^{\ru}  \lambda_i^2  \\
& -    \frac{(\rs + 1)^{1 + 2\alpha}}{\log d} -   2 \lr^2 \normL^2,
}  
where we used the bounds for $t, d$ in  \eqref{gnbound:ineqq5}.
\end{proof}

\subsection{Bounds for the second-order terms}
We recall 
\eq{
R_{\text{so}}[\G_t] &=  \frac{\eta^2}{16\rs}  \Th^\top  \E_t \left[  \grad_{\text{St}}  \bs{L}_{t+1} \grad_{\text{St}}  \bs{L}_{t+1}^\top   \right]   \Th \\
& -  \frac{\eta^2}{16\rs}      \bs{M}_{t} \E_t \left[ \frac{\bs{\cP}_{t+1} }{1 + c_{t+1}^2} \right] \bs{M}_{t}^\top  -  \frac{\eta^3}{32 \rs^{3/2}} \sym \left( \Th^\top  \E_t \left[ \frac{ \grad_{\text{St}}  \bs{L}_{t+1} \bs{\cP}_{t+1} }{1 + c_{t+1}^2}  \right]  \bs{M}_{t}^\top  \right)  \\
& -   \frac{\eta^4}{256\rs^2 }  \Th^\top   \E_t \left[  \frac{ \grad_{\text{St}}  \bs{L}_{t+1} \bs{\cP}_{t+1}   \grad_{\text{St}}  \bs{L}_{t+1}^\top }{1 + c_{t+1}^2} \right]     \Th   \label{eq:secondorderterms}.
}

\begin{proposition}
\label{prop:soterms}
For $\eta \ll d^{-1/2}$, there exists a universal constant $C > 0$ such that
\eq{
- C \Big(  \frac{\eta^2 d}{\rs} \G_t +  &   \eta^2    \bs{I}_r    \Big) \preceq R_{so}[\G_t]    \preceq C \left(  \frac{\eta^2 d}{\rs } \G_t +  \eta^2    \bs{I}_r    \right).
}
\end{proposition}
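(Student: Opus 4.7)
The plan is to bound each of the four summands in $R_{\text{so}}[\G_t]$ separately and then combine, using (i) the Stiefel constraint $\W_t^\top\W_t=\bs{I}_{\rs}$ (so $\|\W_t\|_2=1$, $\|\bs{M}_t\|_2\le 1$), (ii) the rank-one factorization
\[
\grad_{\text{St}}\bs{L}_{t+1}=\tfrac{-1/4}{\sqrt{\rs}}\,(y_{t+1}-\hat y_{t+1})\,(\bs{I}_d-\W_t\W_t^\top)\bs{x}_{t+1}\,(\W_t^\top\bs{x}_{t+1})^\top,
\]
and (iii) explicit Gaussian moment bounds for $\bs{x}_{t+1}\sim\cN(0,\bs{I}_d)$. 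A preliminary input is that $(y_{t+1}-\hat y_{t+1})^2$ is a centered normalized quadratic form in $\bs{x}_{t+1}$ and, by Gaussian hypercontractivity combined with the normalizations $\|\bs{\Lambda}\|_{\mathrm F}^{-1}$ and $\rs^{-1/2}$, satisfies $\E_t[(y_{t+1}-\hat y_{t+1})^{2k}]=O_k(1)$ uniformly in $(\W_t,\bs{\Theta},\bs{\Lambda})$.

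\paragraph{Structural reduction.}
Using the rank-one form of $\bs{\cP}_{t+1}$ and the inequality $\bs{\cP}_{t+1}/(1+c_{t+1}^2)\preceq \min\bigl(\bs{\cP}_{t+1},\;(16\rs/\eta^2)\bs{I}_{\rs}\bigr)$, one checks that the fourth and third summands in \eqref{eq:secondorderterms} are dominated (via Cauchy--Schwarz in Loewner order) by geometric means of the first two summands, losing only extra factors of $\eta^2$. Hence it suffices to bound in matrix order
\[
T_1 := \tfrac{\eta^2}{16\rs}\,\bs{\Theta}^\top\E_t\!\bigl[\grad_{\text{St}}\bs{L}_{t+1}\grad_{\text{St}}\bs{L}_{t+1}^\top\bigr]\bs{\Theta},\qquad
T_2 := \tfrac{\eta^2}{16\rs}\,\bs{M}_t\E_t[\bs{\cP}_{t+1}]\bs{M}_t^\top.
\]
The two terms correspond to different ``directions'' coming out of the quadratic form: for $T_1$, writing $\bs{\xi}:=\bs{\Theta}^\top\bs{x}_{t+1}-\bs{M}_t\W_t^\top\bs{x}_{t+1}$, I will use
\[
\bs{\Theta}^\top\grad_{\text{St}}\bs{L}_{t+1}\grad_{\text{St}}\bs{L}_{t+1}^\top\bs{\Theta}=\tfrac{1/16}{\rs}(y_{t+1}-\hat y_{t+1})^2\|\W_t^\top\bs{x}_{t+1}\|^2\,\bs{\xi}\bs{\xi}^\top,
\]
with $\E_t[\bs{\xi}\bs{\xi}^\top]=\bs{I}_r-\G_t\preceq\bs{I}_r$; for $T_2$, the analogous rank-one matrix is $(\bs{M}_t\W_t^\top\bs{x}_{t+1})(\bs{M}_t\W_t^\top\bs{x}_{t+1})^\top$ with conditional covariance $\G_t$.

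\paragraph{Term-by-term bounds.}
Testing against an arbitrary unit vector $v\in\mathbb{R}^r$ and applying Cauchy--Schwarz to separate scalar from matrix factors,
\[
v^\top T_1 v\;\le\;\tfrac{\eta^2}{256\rs^2}\sqrt{\E_t[(y_{t+1}-\hat y_{t+1})^4\|\W_t^\top\bs{x}_{t+1}\|^4]\cdot\E_t[(v^\top\bs{\xi})^4]}.
\]
Wick's formula yields $\E_t[(v^\top\bs{\xi})^4]\le 3(v^\top(\bs{I}_r-\G_t)v)^2\le 3$, while the chi-squared fourth moment and hypercontractivity give $\E_t[(y-\hat y)^4\|\W_t^\top\bs{x}\|^4]=O(\rs^2)$; combining these produces $T_1\preceq C\eta^2\bs{I}_r$. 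For $T_2$ the same template with $\E_t[(v^\top\bs{M}_t\W_t^\top\bs{x})^4]\le 3(v^\top\G_t v)^2$ and $\E_t[(y-\hat y)^4\|(\bs{I}_d-\W_t\W_t^\top)\bs{x}\|^4]=O(d^2)$ yields $T_2\preceq C(\eta^2 d/\rs)\G_t$. Combining these two bounds with the reductions for terms 3 and 4 gives the claimed upper bound on $R_{\text{so}}[\G_t]$, and the lower bound follows by applying the same estimates to the absolute values of the signed terms.

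\paragraph{Main obstacle.}
The principal difficulty is that all appearing random quantities --- $(y_{t+1}-\hat y_{t+1})^2$, $\|\W_t^\top\bs{x}_{t+1}\|^2$, $\|(\bs{I}_d-\W_t\W_t^\top)\bs{x}_{t+1}\|^2$, and the matrix-valued $\bs{\xi}\bs{\xi}^\top$ or $(\bs{M}_t\W_t^\top\bs{x})(\bs{M}_t\W_t^\top\bs{x})^\top$ --- are quadratic in the same Gaussian vector and therefore dependent; the orthogonal decomposition $\bs{x}_{t+1}=\W_t\W_t^\top\bs{x}_{t+1}+(\bs{I}_d-\W_t\W_t^\top)\bs{x}_{t+1}$ only decouples the directional pieces, while the scalar factor $(y-\hat y)^2$ mixes across both subspaces. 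The argument sketched above resolves this by decoupling via Cauchy--Schwarz after absorbing $(y-\hat y)^{2k}$ into a uniformly $O_k(1)$ factor; the remaining purely Gaussian sixth-moment tensors factorize along the $\W_t$/$(\bs{I}_d-\W_t\W_t^\top)$ split, producing exactly the $\G_t$ vs.\ $\bs{I}_r$ structure required by the statement.
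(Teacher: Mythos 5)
Your proposal is correct and follows essentially the same strategy as the paper's proof: isolate the four summands of $R_{\text{so}}[\G_t]$, exploit the rank-one structure of $\grad_{\text{St}}\bs{L}_{t+1}$ and $\bs{\cP}_{t+1}$ together with the orthogonal split $\W_t\W_t^\top\bs{x}$ vs.\ $(\bs{I}_d-\W_t\W_t^\top)\bs{x}$ to read off either a $\G_t$ or an $(\bs{I}_r-\G_t)$ factor from each term, and absorb the scalar factor $(y-\hat y)^2\,\|\cdot\|^2$ using Gaussian moment bounds. Two small points of divergence are worth noting. First, your ``geometric means of the first two summands'' framing is imprecise: what is actually used for the third summand is the matrix Young inequality $\sym(\bs{A}\bs{B})\preceq\tfrac{\lambda}{2}\bs{A}\bs{A}^\top+\tfrac{1}{2\lambda}\bs{B}^\top\bs{B}$ applied to the factorization $\Theta^\top\Lst\bigl(\tfrac{\bs{\cP}_{t+1}}{1+c_{t+1}^2}\bigr)^{1/2}\cdot\bigl(\tfrac{\bs{\cP}_{t+1}}{1+c_{t+1}^2}\bigr)^{1/2}\bs{M}_t^\top$, and the resulting extra term carries a factor $\eta^2 d$ (not $\eta^2$); it is absorbed precisely because $\eta\ll d^{-1/2}$. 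For the fourth summand your inequality $\tfrac{\bs{\cP}_{t+1}}{1+c_{t+1}^2}\preceq(16\rs/\eta^2)\bs{I}_{\rs}$ gives the cleaner direct domination $T_4\preceq T_1$ with no Young step at all, which is a slightly more elementary route than the paper's separate moment estimate. Second, you include the $\tfrac{-1/4}{\sqrt{\rs}}$ prefactor inside $\grad_{\text{St}}\bs{L}_{t+1}$, whereas the paper's convention (fixed by the update $\widetilde{\bs{W}}_{t+1}=\W_t+\tfrac{\eta/4}{\sqrt{\rs}}\grad_{\text{St}}\bs{L}_{t+1}$) keeps that factor outside; with your convention the $\tfrac{1}{16\rs}$ gets counted twice in the $T_1$ coefficient, producing the $\tfrac{\eta^2}{256\rs^2}$ in your test-vector estimate. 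This is harmless here because you retain the $v^\top(\bs{I}_r-\G_t)v$ (resp.\ $v^\top\G_t v$) factors and the extra $\rs$-scalings cancel, but the mismatch would propagate errors in the $T_3$ computation if one tried to carry it out literally in your normalization.
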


\begin{proof} 
We bound each term in  \eqref{eq:secondorderterms}.    In the following,   $\bs{v}$   denotes a generic unit norm vector with proper dimensionality.  For the first term,
\eq{
& \Th^\top    \E_t \Big[ \Lst  \Lst^\top \Big]  \Th \\
&= \Th^\top  \left( \bs{I}_d - \W_t \W_t^\top \right)  \E_t \left[ (y_{t+1} - \hat{y}_{t+1})^2 \norm{\W_t^\top \bs{x}_{t+1}}_2^2 \bs{x}_{t+1} \bs{x}_{t+1}^\top \right]   \left( \bs{I}_d - \W_t \W_t^\top \right)    \Th.  
}
We have
\eq{
 \E_t \left[ (y_{t+1} - \hat{y}_{t+1})^2  \norm{\W_t^\top \bs{x}_{t+1}}_2^2  \inner{\bs{v}}{\bs{x}_{t+1}}^2 \right]  \leq C \rs.
}
Therefore,
\eq{
0  \preceq  \frac{\eta^2}{16 \rs}   \Th^\top   \E_t \left[ \Lst  \Lst^\top \right]  \Th   \preceq   C \eta^2  (\bs{I}_r - \G_t).
}
For the second term,
\eq{
& \bs{M}_t  \E_t \left[ \frac{ \Pst}{1+ c_{t+1}^2}  \right]  \bs{M}_t^\top \\
& =  \bs{M}_t  \E_t \left[ \frac{ \big(y_{t+1} - \hat{y}_{t+1}\big)^2  \norm{  \left( \bs{I}_d - \W_t \W_t^\top \right) \bs{x}_{t+1}  }_2^2 \W_t^\top \bs{x}_{t+1}  \bs{x}_{t+1}^\top  \W_t  }{1+ c_{t+1}^2}  \right]  \bs{M}_t^\top. 
}
We have
\eq{
\E_t \left[ \frac{ \big(y_{t+1} - \hat{y}_{t+1}\big)^2  \norm{  \left( \bs{I}_d - \W_t \W_t^\top \right) \bs{x}_{t+1}  }_2^2  \inner{\bs{v}}{\W_t^\top \bs{x}_{t+1}}^2 }{1+ c_{t+1}^2}  \right] \leq C d.
}
Therefore,
\eq{
0  \preceq   \frac{\eta^2}{16 \rs} \bs{M}_t  \E_t \left[ \frac{ \Pst}{1+ c_{t+1}^2}  \right]  \bs{M}_t^\top  \preceq    C   \frac{\eta^2 d}{\rs}    \G_t.
}
For the third term by using Proposition \ref{prop:matrixyounginequality},
\eq{
\frac{\eta^3}{32 \rs^{3/2}} &  \sym \left( \Th^\top   \E_t \left[ \frac{ \Lst \Pst }{1 + c_{t+1}^2}  \right]  \bs{M}_{t}^\top  \right) \\
&  \preceq C \left(  \frac{\eta^4}{\rs^2 d}  \Th^\top  \E_t \left[ \frac{ \Lst \Pst }{1 + c_{t+1}^2}  \right]  \E_t \left[ \frac{ \Pst \Lst^\top }{1 + c_{t+1}^2}  \right]  \Th + \frac{\eta^2 d }{\rs}  \G_t  \right)
}
We have
\eq{
 & \Th^\top    \E_t \left[ \frac{ \Lst \Pst }{1 + c_{t+1}^2}  \right]    \\
& =   \Th^\top  \!\!  \left( \bs{I}_d - \W_t \W_t^\top \right)   \\
& \qquad \times \E_t \left[ \frac{  \big(y_{t+1} - \hat{y}_{t+1}\big)^3 }{1 + c_{t+1}^2}   \norm{  \left( \bs{I}_d - \W_t \W_t^\top \right) \bs{x}_{t+1}  }_2^2  \norm{\W_t^\top \bs{x}_{t+1}}_2^2    \bs{x}_{t+1}  \bs{x}_{t+1}^\top \W_t \right] .
}
Then, by using Cauchy-Schwartz inequality, we can show that
\eq{
\norm*{  \E_t \left[ \frac{ \big(y_{t+1} - \hat{y}_{t+1}\big)^3 }{1 + c_{t+1}^2}   \norm{  \left( \bs{I}_d - \W_t \W_t^\top \right) \bs{x}_{t+1}  }_2^2  \norm{\W_t^\top \bs{x}_{t+1}}_2^2    \bs{x}_{t+1}  \bs{x}_{t+1}^\top \W_t \right]  }_2 \leq C d \rs.
}
Therefore,
\eq{
\frac{\eta^4}{\rs^2 d}  \Th^\top  \E_t \left[ \frac{ \Lst \Pst }{1 + c_{t+1}^2}  \right]  \E_t \left[ \frac{ \Pst \Lst^\top }{1 + c_{t+1}^2}  \right]  \Th \leq C  \eta^4 d (\bs{I}_r - \G_t).
}
We get
\eq{
\frac{\eta^3}{32 \rs^{3/2}} & \sym \left( \Th^\top   \E_t \left[ \frac{ \Lst \Pst }{1 + c_{t+1}^2}  \right]  \bs{M}_{t}^\top  \right) \preceq C \left( \eta^4 d  (\bs{I}_r - \G_t)+ \frac{\eta^2 d }{\rs}  \G_t  \right).
}
By repeating the argument with the lower bound in Proposition \ref{prop:matrixyounginequality}, we can also show
\eq{
\frac{\eta^3}{32 \rs^{3/2}} & \sym \left( \Th^\top   \E_t \left[ \frac{ \Lst \Pst }{1 + c_{t+1}^2}  \right]  \bs{M}_{t}^\top  \right) \succeq - C \left( \eta^4 d  (\bs{I}_r - \G_t)   + \frac{\eta^2 d }{\rs}  \G_t  \right).
}
For the last term, we write
\eq{
&\Th^\top  \E_t  \left[ \frac{  \Lst \Pst   \Lst^\top}{1+ c_{t+1}^2}   \right]  \Th   \\
& =  \Th^\top \!  \left( \bs{I}_d - \W_t \W_t^\top \right)   \\
& \quad \times \E_t \! \left[ \frac{ \big(y_{t+1} - \hat{y}_{t+1}\big)^4 }{1+ c_{t+1}^2}  \norm{\W_t^\top \bs{x}_{t+1}}_2^4 \norm{ \left( \bs{I}_d \! - \! \W_t \W_t^\top \right)   \bs{x}_{t+1}}_2^2   \bs{x}_{t+1}  \bs{x}_{t+1}^\top   \right]  \left( \bs{I}_d - \W_t \W_t^\top \right)  \Th.
}
We have
\eq{
 \E_t \left[ \frac{  \big(y_{t+1} - \hat{y}_{t+1}\big)^4 }{1+ c_{t+1}^2}  \norm{\W_t^\top \bs{x}_{t+1}}_2^4 \norm{ \left( \bs{I}_d - \W_t \W_t^\top \right)   \bs{x}_{t+1}}_2^2  \inner{\bs{v}}{\bs{x}_{t+1} }^2   \right]     \leq C d \rs^2.
}
Therefore,
\eq{
0 \preceq \frac{\eta^4}{\rs^2} \Th^\top  \E_t  \left[ \frac{  \Lst \Pst   \Lst^\top}{1+ c_{t+1}^2}   \right]  \Th   \preceq  C  \eta^4 d   (\bs{I}_r - \G_t).
}
By using $\G_t \succeq 0$ and $\eta \ll d^{-1/2}$, the result follows.
\end{proof}

\subsection{Noise characterization}
To prove the noise concentration bound for both the heavy-tailed and light-tailed cases simultaneously, we introduce some new notation. Specifically, we define the submatrix notation:
\eq{
\Th \eqqcolon \begin{bmatrix}
\Th_1 & \Th_2
\end{bmatrix}
 ~~ \text{and} ~~
\bs{M}_t \eqqcolon
  \begin{bmatrix}
 \bs{M}_{t,1}   \\
 \bs{M}_{t,2}
\end{bmatrix}  =
\begin{bmatrix}
\bs{\Theta}_1^\top \W_t \\
\bs{\Theta}_2^\top \W_t
\end{bmatrix},
}
where $\bs{\Theta}_1 \in \R^{d \times \ru}$ and   $\bs{M}_{t,1}  \in \R^{\ru \times \rs}$.  We note that $\G_{t,11} =   \bs{M}_{t,1}  \bs{M}_{t,1}^\top$.    To unify the treatment of the heavy-tailed and light-tailed cases, we use the following notation to represent both cases:
\eq{
\sTh \coloneqq \{ \Th, \Th_1 \}    \quad  \sM_{t} \coloneqq \{ \bs{M}_t,  \bs{M}_{t,1} \} .
}
With the new notation,  we have
 \eq{
\frac{\eta/2}{\sqrt{\rs}} \upnu_{t+1} & = \frac{\eta/2}{\sqrt{\rs}}   \sym \left( \sTh^\top  \Big(  \grad_{\text{St}}  \bs{L}_{t+1} - \E_t \left[  \grad_{\text{St}}  \bs{L}_{t+1} \right] \Big)    \sM_{t}^\top \right)  \\
& -  \frac{\eta^2}{16 \rs}     \sM_{t}  \left( \frac{\bs{\cP}_{t+1}}{1 +  c_{t+1}^2 } - \E_t \left[ \frac{\bs{\cP}_{t+1} }{1 + c_{t+1}^2} \right] \right)  \sM_{t}^\top  \\
& + \frac{\eta^2}{16 \rs}    \sTh^\top \left( \grad_{\text{St}}  \bs{L}_{t+1} \grad_{\text{St}}  \bs{L}_{t+1}^\top    -    \E_t \left[ \grad_{\text{St}}  \bs{L}_{t+1} \grad_{\text{St}}  \bs{L}_{t+1}^\top \right] \right)  \sTh  \\
&  -  \frac{\eta^3}{32 \rs^{3/2}} \sym \left(  \sTh^\top \left( \frac{ \grad_{\text{St}}  \bs{L}_{t+1} \bs{\cP}_{t+1} }{1 + c_{t+1}^2}   -   \E_t \left[ \frac{ \grad_{\text{St}}  \bs{L}_{t+1} \bs{\cP}_{t+1} }{1 + c_{t+1}^2}  \right]  \right) \sM_{t}^\top  \right) \\
& -   \frac{\eta^4}{256 \rs^2} \sTh^\top  \left(  \frac{ \grad_{\text{St}}  \bs{L}_{t+1} \bs{\cP}_{t+1}   \grad_{\text{St}}  \bs{L}_{t+1}^\top }{1 + c_{t+1}^2} -   \E_t \left[  \frac{ \grad_{\text{St}}  \bs{L}_{t+1} \bs{\cP}_{t+1}   \grad_{\text{St}}  \bs{L}_{t+1}^\top }{1 + c_{t+1}^2} \right]   \right)  \sTh. \label{eq:noiserecall}
% & =  \frac{\eta/2}{\sqrt{\rs}}    \bs{A}_{n+1}  -  \frac{\eta^2/16}{\rs}    \bs{B}_{n+1} + \frac{\eta^2/16}{\rs} \bs{C}_{n+1}    -  \frac{\eta^3/32}{\rs^{3/2}} \bs{D}_{n+1}  -   \frac{\eta^4/256}{\rs^2} \bs{F}_{n+1}
}
For  $\rk \in \{ r, \ru \}$ and $\bs{T}_1,  \bs{T}_2 \in \R^{\rk \times \rk}$ be a deterministic symmetric positive definite matrices, we define
\eq{
& \mathcal{A}_{t+1} \left(    \bs{T}_1 ,   \bs{T}_2  \right) \equiv \left \{   \norm*{  \bs{T}_1   \sTh^\top    \grad_{\text{St}}  \bs{L}_{t+1}  \sM_{t}^\top     \bs{T}_2  }_2  \leq   \frac{L^2}{2}   \sqrt{   \tr \big(  \bs{T}^2_1   (\bs{I}_{\rk} - \sG_t)   \big)   \tr(  \bs{T}^2_2  \sG_t)   }         \right \} \\
& \mathcal{B}_{t+1} \left(    \bs{T}_1 ,   \bs{T}_2  \right)  \equiv \left \{   \norm*{  \bs{T}_1    \sM_{t} \bs{\cP}_{t+1}  \sM_{t}^\top  \bs{T}_2  }_2  \leq   \frac{L^4 d}{2}     \sqrt{  \tr(  \bs{T}_1^2 \sG_t  ) \tr(  \bs{T}_2^2 \sG_t  ) }    \right \} \\
& { \small \mathcal{C}_{t+1} \left(    \bs{T}_1 ,   \bs{T}_2  \right) \! \equiv \! \left \{   \norm*{  \bs{T}_1     \sTh^\top  \grad_{\text{St}}  \bs{L}_{t+1} \grad_{\text{St}}  \bs{L}_{t+1}^\top \sTh   \bs{T}_2  }_2 \! \leq \! \!   \frac{L^4 \rs}{2} \!   \sqrt{  \tr\big(\T_1^2 (\bs{I}_{\rk} \! - \!\sG_t) \big)  \tr\big(\T_2^2 (\bs{I}_{\rk} \! - \! \sG_t) \big) }   \right \} } \\
& \mathcal{D}_{t+1} \left(    \bs{T}_1 ,   \bs{T}_2  \right)  \equiv \left \{   \norm*{  \bs{T}_1  \sTh^\top    \grad_{\text{St}}  \bs{L}_{t+1} \bs{\cP}_{t+1}   \sM_{t}^\top     \bs{T}_2}_2  \leq   \frac{L^6  d  \rs}{2}  \sqrt{ \tr \big(\bs{T}_1^2 (\bs{I}_{\rk} - \sG_t) \big)    \tr \big(\bs{T}^2_2  \sG_t \big)   }  \right \} \\
& { \footnotesize \mathcal{F}_{t+1} \left(    \bs{T}_1 ,   \bs{T}_2  \right)  \!  \equiv \! \left \{   \norm*{  \bs{T}_1  \sTh^\top   \grad_{\text{St}}  \bs{L}_{t+1} \bs{\cP}_{t+1}   \grad_{\text{St}}  \bs{L}_{t+1}^\top  \sTh    \bs{T}_2 }_2 \! \leq \! \!   \frac{L^8  d  \rs^2 }{2} \!  \sqrt{  \tr\big(\T_1^2 (\bs{I}_{\rk} \! - \! \sG_t) \big)  \tr\big(\T_2^2 (\bs{I}_{\rk} \! - \! \sG_t) \big) }    \right \} } .
}  
We start with the following statement:  
\begin{proposition}
\label{prop:noisevarbounds}
 Let $e_{t+1} \coloneqq (y_{t+1} - \hat{y}_{t+1})$. There exists a universal constant $C > 0$ such that for $L \geq 2  e (\sqrt{8} + \sigma)$, the following statements hold:
\begin{enumerate}[leftmargin=*]
\item 
We have  $\mpr_t \left[  \mathcal{A}_{t+1} \left(    \bs{T}_1 ,   \bs{T}_2  \right) \cap  \{ \abs{e_{t+1}} \leq L \} \right] \geq  1- 2 e^{\frac{- L/e}{(\sqrt{8} + \sigma)}}$.  Moreover,   
\eq{
&  \E_t \left[   \big(    \sym \left( \bs{T}_1   \sTh^\top    \grad_{\text{St}}  \bs{L}_{t+1}  \sM_{t}^\top   \bs{T}_2  \right)    \big)^2  \right]  \\
&  \hspace{6em} \preceq  C \Big( \tr(  \bs{T}^2_2 \sG_t )    \bs{T}_1  (\bs{I}_{\rk} - \sG_t)   \bs{T}_1 +    \tr \big(  \bs{T}^2_1    (\bs{I}_{\rk} - \sG_t)   \big)  \bs{T}_2  \sG_t  \bs{T}_2  \Big).
}  
\item  We have  $\mpr_t \left[  \mathcal{B}_{t+1} \left(    \bs{T}_1 ,   \bs{T}_2  \right)  \cap  \{ \abs{e_{t+1}} \leq L \}  \right] \geq  1- 2 e^{\frac{- L/e}{(\sqrt{8} + \sigma)}}$.  Moreover,   
\eq{
&  \E_t \left[   \big(   \sym \left( \bs{T}_1      \sM_{t} \bs{\cP}_{t+1}  \sM_{t}^\top  \bs{T}_2 \right)   \big)^2  \right] \\
& \hspace{6em} \preceq  C d^2 \left( \tr(  \bs{T}_2^2 \sG_t )  \bs{T}_1 \sG_t  \bs{T}_1 +  \tr(  \bs{T}_1^2 \sG_t )  \bs{T}_2 \sG_t \bs{T}_2   \right). 
}
\item  We have   $\mpr_t \left[  \mathcal{C}_{t+1} \left(    \bs{T}_1 ,   \bs{T}_2  \right)   \cap  \{ \abs{e_{t+1}} \leq L \}  \right] \geq   1- 2 e^{\frac{- L/e}{(\sqrt{8} + \sigma)}}$.  Moreover,   
\eq{
&  \E_t \Big[   \big( \sym \left(   \bs{T}_1    \sTh^\top \grad_{\text{St}}  \bs{L}_{t+1} \grad_{\text{St}}  \bs{L}_{t+1}^\top  \bs{T}_2 \right)   \big)^2  \Big] \\ 
&  \hspace{4em}  \preceq   C \rs^2 \Big( \tr\big(  \bs{T}_2^2 (\bs{I}_{\rk} -  \sG_t ) \big)  \bs{T}_1 (\bs{I}_{\rk} -  \sG_t )   \bs{T}_1 + \tr\big(  \bs{T}_1^2 (\bs{I}_{\rk} -  \sG_t ) \big)  \bs{T}_2 (\bs{I}_{\rk} -  \sG_t )   \bs{T}_2    \Big).
}
\item We have  $\mpr_t \left[ \mathcal{D}_{t+1} \left(    \bs{T}_1 ,   \bs{T}_2  \right)   \cap  \{ \abs{e_{t+1}} \leq L \}  \right] \geq  1- 2 e^{\frac{- L/e}{(7/2 + \sigma)}}$.  Moreover,   
\eq{
&  \E_t \left[   \big(      \sym \left( \bs{T}_1  \sTh^\top    \grad_{\text{St}}  \bs{L}_{t+1} \bs{\cP}_{t+1}   \sM_{t}^\top  \bs{T}_2 \right)   \big)^2  \right] \\
& \hspace{3em} \preceq  C d^2 \rs^2  \Big( \tr(  \bs{T}_2^2 \sG_t )    \bs{T}_1  (\bs{I}_{\rk} - \sG_t)   \bs{T}_1 +    \tr \big(  \bs{T}_1^2   (\bs{I}_{\rk} - \sG_t)   \big)  \bs{T}_2 \sG_t  \bs{T}_2  \Big). 
% & \E_t \left[  \norm*{   \bs{T}^{\frac{-1}{2}}    sym \left( \Th^\top    \grad_{\text{St}}  \bs{L}_{j+1} \bs{\cP}_{t+1}   \bs{M}_{j}^\top  \right)    \bs{T}^{\frac{-1}{2}} }_2  \indic{ \mathcal{\bs{D}}_{j+1}^c}   \right]  \leq  2 \sqrt{ C} d \rs  \sqrt{ tr(  \bs{T}^{-1} \G_j)  tr\left(  \bs{T}^{-1} (\bs{I}_r - \G_j) \right) } e^{\frac{- t}{ 7 e}}.
}
\item We have  $\mpr_t \left[  \mathcal{F}_{t+1} \left(    \bs{T}_1 ,   \bs{T}_2  \right)  \cap  \{ \abs{e_{t+1}} \leq L \}   \right] \geq  1- 2 e^{\frac{- L/e}{ (4 \sqrt{2} + \sigma)}}$.  Moreover,  
\eq{
&  \E_t \left[  \big(  \sym \left(   \bs{T}_1    \sTh^\top   \grad_{\text{St}}  \bs{L}_{t+1} \bs{\cP}_{t+1}   \grad_{\text{St}}  \bs{L}_{t+1}^\top  \sTh     \bs{T}_2  \right)  \big)^2  \right] \\
& \hspace{2em}\preceq C d^2 \rs^4 \Big(  \tr \big(\bs{T}_2^2 (\bs{I}_{\rk} - \sG_t) \big)     \bs{T}_1 (\bs{I}_{\rk} - \sG_t)  \bs{T}_1  +  \tr \big(\bs{T}_1^2 (\bs{I}_{\rk} - \sG_t) \big)     \bs{T}_2 (\bs{I}_{\rk} - \sG_t)  \bs{T}_2  \Big).
}
\end{enumerate}
\end{proposition}

\begin{proof}
First, we derive a concentration bound for $\abs{e_{t+1}}.$ By Corollary  \ref{cor:hc1} and Proposition  \ref{prop:basicconcentration} we have   
\eq{
\E_t \left[   \abs{e_{t+1}}^p \right] \leq \big( \E_t \left[   \abs{y_{t+1}}^p \right]^{\frac{1}{p}} + \E_t \left[   \abs{\hat{y}_{t+1}}^p \right]^{\frac{1}{p}} \big)^p
\leq (\sqrt{8} + \sigma)^{p} p^p   ~  \text{for} ~   p \geq 2, 
}
which implies $\mpr_t \left[ \abs{e_{t+1}} \geq u   \right] \leq  e^{\frac{- u/e}{ (\sqrt{8} + \sigma) } }$ for $u \geq 2  e (\sqrt{8} + \sigma)$. In the following,  we prove each item separately.     

\smallskip
\textbf{First item. } We define
\eq{
\bs{T}_1  \sTh^\top   \grad_{\text{St}}  \bs{L}_{t+1}     \sM_{t}^\top   \bs{T}_2 =   \underbrace{   \bs{T}_1     \sTh^\top (\bs{I}_d - \W_t \W_t^\top) e_{t+1} \bs{x}_{t+1} }_{\coloneqq \bs{u}_{t+1}}    \underbrace{  \bs{x}_{t+1}^\top \W_t \W_t^\top \sTh \bs{T}_2 }_{\coloneqq \bs{v}^\top_{t+1}}. 
}
For $u, L > 0$
\eq{
& \mpr_t \Big[ \norm*{    \bs{u}_{t+1}  \bs{v}_{t+1}^\top  }_2 \geq   u L   \sqrt{  \tr \big(  \bs{T}^2_1   (\bs{I}_{\rk} - \sG_t)   \big)  \tr(   \bs{T}^2_2 \sG_t)  }        ~~ \text{or} ~~ \abs{e_{t+1}} \geq L  \Big] \\
& \leq  \mpr_t \Big[  \norm*{    \bs{u}_{t+1}  \bs{v}_{t+1}^\top  }_2 \geq   u L   \sqrt{  \tr \big(  \bs{T}^2_1   (\bs{I}_{\rk} - \sG_t)   \big)  \tr(   \bs{T}^2_2 \sG_t)  }        ~~ \text{and} ~~ \abs{e_{t+1}} \leq L  \Big] 
+  \mpr_t\Big[ \abs{e_{t+1}} \geq L   \Big] \\
& \leq   \mpr_t \Big[   \norm*{   \indic{\abs{e_{t+1}} \leq L  }  \bs{u}_{t+1}  \bs{v}_{t+1}^\top  }_2 \geq   u L   \sqrt{  \tr \big(  \bs{T}^2_1   (\bs{I}_{\rk} - \sG_t)   \big)  \tr(   \bs{T}^2_2 \sG_t)  }   \Big]   +  \mpr_t \Big[ \abs{e_{t+1}} \geq L  \Big].
}
We have for $p \geq 2$
\eq{
&  \E_t \left[   \norm*{    \indic{\abs{e_{t+1}} \leq L  }  \bs{u}_{t+1}  \bs{v}_{t+1}^\top  }_2^p  \right]  \\
& \qquad \leq   L^p   \E_t \left[    \norm{  \bs{T}_1 \sTh^\top   (\bs{I}_d - \W_t \W_t^\top)   \bs{x}_{t+1} }_2^p \right]   \E_t \left[ \norm{  \bs{T}_2  \sTh^\top \W_t \W_t^\top  \bs{x}_{t+1} }_2^p   \right]  \\
& \qquad  \labelrel\leq{nb:ineqq0} L^p \left( \frac{p}{2} \right)^p \Big(3 \tr \big(  \bs{T}^2_1   (\bs{I}_{\rk} - \sG_t)   \big)    \tr(  \bs{T}^2_2  \sG_t)   \Big)^{\frac{p}{2}},
}
where we used  Corollary \ref{cor:hc2} in  \eqref{nb:ineqq0}. By Proposition \ref{prop:basicconcentration},  we have for $u \geq 2e$
\eq{
 \mpr_t \Big[  \norm*{   \indic{\abs{e_{t+1}} \leq t  } \bs{u}_{t+1}  \bs{v}_{t+1}^\top  }_2  \geq    u L      \sqrt{   \tr \big(  \bs{T}^2_1   (\bs{I}_{\rk} - \sG_t)   \big)   \tr(  \bs{T}^2_2  \sG_t)   }        \Big] \leq   e^{- \frac{u}{e}}.
}
By choosing $u = \frac{L}{2}$, we have the probability bound.

\smallskip
For the variance bound, we have
\eq{
\E_{t} \left[      \sym \left(   \bs{T}_1   \sTh^\top   \grad_{\text{St}}  \bs{L}_{t+1}     \sM_{t}^\top   \bs{T}_2  \right)^2 \right] = \E_t \left[      \sym \left(   \bs{u}_{t+1}  \bs{v}_{t+1}^\top  \right)^2 \right].
}
By using Proposition \ref{prop:matrixyounginequality}, we have
\eq{
& \E_t \left[      \sym \left(   \bs{u}_{t+1}  \bs{v}_{t+1}^\top  \right)^2 \right] \\
& \preceq  \bs{T}_1   \sTh^\top (\bs{I}_d - \W_t \W_t^\top)   \E_t \left[ e_{t+1}^2 \norm{ \bs{T}_2 \sTh^\top \W_t \W_t^\top  \bs{x}_{t+1} }_2^2      \bs{x}_{t+1}  \bs{x}_{t+1}^\top  \right]   (\bs{I}_d - \W_t \W_t^\top) \sTh \bs{T}_1  \\
&  +   \bs{T}_2 \sTh^\top \W_t \W_t^\top     \E_t \left[ e_{t+1}^2 \norm{ \bs{T}_1 \sTh^\top   (\bs{I}_d - \W_t \W_t^\top)   \bs{x}_{t+1} }_2^2      \bs{x}_{t+1}  \bs{x}_{t+1}^\top  \right]     \W_t \W_t^\top \sTh  \bs{T}_2  \\
&  \labelrel\preceq{nb:ineqq1}   C \Big( \tr(  \bs{T}^2_2 \sG_t )    \bs{T}_1  (\bs{I}_{\rk} - \sG_t)   \bs{T}_1 +    \tr \Big(  \bs{T}^2_1    (\bs{I}_{\rk} - \sG_t)   \Big)  \bs{T}_2  \sG_t  \bs{T}_2  \Big),
}
where we used the Cauchy-Schwartz inequality in  \eqref{nb:ineqq1}.
 
\textbf{Second item. }  We define
\eq{
\bs{T}_1  \sM_t  \bs{\cP}_{t+1} \sM_t^\top \bs{T}_2   = \underbrace{ e_{t+1}^2  \norm{ \left(  \bs{I}_{d} -   \W_t  \W_t^\top  \right)  \bs{x}_{t+1}  }_2^2 \bs{T}_1  \sTh^\top \W_t \W_t^\top \bs{x}_{t+1}  }_{\coloneqq \bs{u}_{t+1}}  \underbrace{ \bs{x}_{t+1}^\top  \W_t  \W_t^\top \sTh \bs{T}_2 }_{\coloneqq  \bs{v}_{t+1}^\top}.
}
We have for $p \geq 2$
\eq{
& \E_t \left[   \norm*{    \indic{\abs{e_{t+1}} \leq L  } \bs{u}_{t+1}  \bs{v}_{t+1}^\top  }_2^p  \right]    \\
& \leq   L^{2p}   \E_t \left[      \norm{ \left(  \bs{I}_{d} -   \W_t  \W_t^\top  \right)  \bs{x}_{t+1}  }_2^{2p}  \right]   \E_t \left[  \norm{\bs{T}_1  \sTh^\top \W_t \W_t^\top \bs{x}_{t+1}}_2^{2p}    \right]^{\frac{1}{2}}  \\
& \quad \times \E_t \left[  \norm{\bs{T}_2  \sTh^\top \W_t \W_t^\top \bs{x}_{t+1}}_2^{2p}    \right]^{\frac{1}{2}}   \\
& \labelrel\leq{nb:ineqq2}   L^{2p} p^{2p} \Big(3 d  \sqrt{  \tr(  \bs{T}_1^2 \sG_t  ) \tr(  \bs{T}_2^2 \sG_t  ) } \Big)^{p},  
}
where we use   Corollary \ref{cor:hc2} in \eqref{nb:ineqq2}.   By Proposition \ref{prop:basicconcentration},  we have for $u \geq (2e)^2$
\eq{
 \mpr_t \Big[  \norm*{    \indic{\abs{e_{t+1}} \leq L  } \bs{u}_{t+1}  \bs{v}_{t+1}^\top  }_2   \geq    u L^2    3 d  \sqrt{  \tr(  \bs{T}_1^2 \sG_t  ) \tr(  \bs{T}_2^2 \sG_t  ) }      \Big] \leq e^{- \frac{u^{1/2}}{e}}.
}
By choosing $u = \tfrac{L^2}{6}$, we have the probability bound.   For the variance bound, we have
\eq{
\E_t \left[     \big(  \sym \left( \bs{T}_1  \sM_t  \bs{\cP}_{t+1} \sM_t^\top \bs{T}_2  \right) \big)^2 \right] = \E_t \left[      \sym \left(   \bs{u}_{t+1}  \bs{v}_{t+1}^\top  \right)^2 \right].
}
By using Proposition \ref{prop:matrixyounginequality}, we have
\eq{
& \E_t \left[      \sym \left(   \bs{u}_{t+1}  \bs{v}_{t+1}^\top  \right)^2 \right] \\
& \preceq     \bs{T}_1 \sTh^\top \W_t \W_t^\top   \\
& \quad \times \E_t \left[ e_{t+1}^4  \norm{ \left(  \bs{I}_{d} -   \W_t  \W_t^\top  \right)  \bs{x}_{t+1}  }_2^4 \norm{ \bs{T}_2 \sTh^\top   \W_t \W_t^\top   \bs{x}_{t+1} }_2^2      \bs{x}_{t+1}  \bs{x}_{t+1}^\top  \right]    \W_t \W_t^\top \sTh  \bs{T}_1  \\
& +   \bs{T}_2 \sTh^\top \W_t \W_t^\top  \\
& \quad \times \E_t \left[ e_{t+1}^4  \norm{ \left(  \bs{I}_{d} -   \W_t  \W_t^\top  \right)  \bs{x}_{t+1}  }_2^4 \norm{ \bs{T}_1 \sTh^\top   \W_t \W_t^\top   \bs{x}_{t+1} }_2^2      \bs{x}_{t+1}  \bs{x}_{t+1}^\top  \right]    \W_t \W_t^\top \sTh  \bs{T}_2  \\
& \labelrel\preceq{nb:ineqq3}   C d^2 \left( \tr(  \bs{T}_2^2 \sG_t )  \bs{T}_1 \sG_t  \bs{T}_1 +  \tr(  \bs{T}_1^2 \sG_t )  \bs{T}_2 \sG_t  \bs{T}_2   \right) ,
}
where we use  the Cauchy-Schwartz inequality in  \eqref{nb:ineqq3}. 

\smallskip
\textbf{Third item. }  We define
\eq{
& \bs{T}_1 \sTh^\top  \grad_{\text{St}}  \bs{L}_{t+1}  \grad_{\text{St}}  \bs{L}_{t+1}^\top \sTh   \bs{T}_2  \\
& = \underbrace{ e_{t+1}^2   \norm{\W_t^\top \bs{x}_{t+1}}_2^2  \bs{T}_1 \sTh^\top   (\bs{I}_d - \W_t \W_t^\top) \bs{x}_{t+1} }_{\coloneqq \bs{u}_{t+1}} \underbrace{ \bs{x}_{t+1}^\top    (\bs{I}_d - \W_t \W_t^\top)   \sTh  \bs{T}_2  }_{\coloneqq \bs{v}_{t+1}^\top}.
}
We have for $p \geq 2$
\eq{
&  \E_t \left[   \norm*{    \indic{\abs{e_{t+1}} \leq L  } \bs{u}_{t+1}  \bs{v}_{t+1}^\top  }_2^p  \right]     \\
& \leq L^{2p}  \E_t \Big[  \norm{\W_t^\top \bs{x}_{t+1}}_2^{2p}     \Big]  \E_t \Big[ \norm{\bs{T}_1  \sTh^\top  (\bs{I}_d - \W_t \W_t^\top) \bs{x}_{t+1}}_2^{2p}   \Big]^{\frac{1}{2}} \\
& \times \E_t \Big[ \norm{\bs{T}_2  \sTh^\top  (\bs{I}_d - \W_t \W_t^\top) \bs{x}_{t+1}}_2^{2p}   \Big]^{\frac{1}{2}} \\
& \labelrel\leq{nb:ineqq31}  L^{2p} p^{2p} \left(3 \rs \sqrt{  \tr\big(\T_1^2 (\bs{I}_{\rk} - \sG_t) \big)  \tr\big(\T_2^2 (\bs{I}_{\rk} - \sG_t) \big) } \right)^p,
}
where we use   Corollary \ref{cor:hc2} in \eqref{nb:ineqq31}.  
By Proposition \ref{prop:basicconcentration},  we have for $u \geq (2e)^2$
\eq{
 \mpr_t \Big[  \norm*{    \indic{\abs{e_{t+1}} \leq L  } \bs{u}_{t+1}  \bs{v}_{t+1}^\top   }_2   \geq    u L^2   3 \rs \sqrt{  \tr\big(\T_1^2 (\bs{I}_{\rk} - \sG_t) \big)  \tr\big(\T_2^2 (\bs{I}_{\rk} - \sG_t) \big) }  \Big] \leq e^{- \frac{u^{1/2}}{e}}.
}
By choosing $u = \tfrac{L^2}{6}$, we have the probability bound.      For the variance bound, we have
\eq{
\E_t \left[   \big(  \sym \left(  \bs{T}_1  \sTh^\top      \grad_{\text{St}}  \bs{L}_{t+1}  \grad_{\text{St}}  \bs{L}_{t+1}^\top \sTh   \bs{T}_2   \right) \big)^2 \right] = \E_t \left[      \sym \left(   \bs{u}_{t+1}  \bs{v}_{t+1}^\top  \right)^2 \right].
}
By using Proposition  \ref{prop:matrixyounginequality}, we have
\eq{
& \E_t \left[      \sym \left(   \bs{u}_{t+1}  \bs{v}_{t+1}^\top  \right)^2 \right] \\
& \preceq  \!   \bs{T}_1  \sTh^\top \!\! \left( \bs{I}_d  \! - \!  \W_t \W_t^\top    \right) \\
& \quad \times \E_t \!  \big[ e_{t+1}^4  \norm{  \W_t^\top  \bs{x}_{t+1}  }_2^4 \norm{ \bs{T}_2 \sTh^\top   (\bs{I}_d \! - \! \W_t \W_t^\top)   \bs{x}_{t+1} }_2^2      \bs{x}_{t+1}  \bs{x}_{t+1}^\top  \big] \left( \bs{I}_d -  \W_t \W_t^\top    \right)   \sTh  \bs{T}_1  \\
& + \! \bs{T}_2  \sTh^\top \!\! \left( \bs{I}_d -  \W_t \W_t^\top    \right)  \\
& \quad \times  \E_t \! \big[ e_{t+1}^4  \norm{  \W_t^\top  \bs{x}_{t+1}  }_2^4 \norm{ \bs{T}_1 \sTh^\top   (\bs{I}_d \! - \! \W_t \W_t^\top)   \bs{x}_{t+1} }_2^2      \bs{x}_{t+1}  \bs{x}_{t+1}^\top  \big]    \left( \bs{I}_d \! -  \! \W_t \W_t^\top    \right)   \sTh  \bs{T}_2  \\
& \labelrel\preceq{nb:ineqq32}   C \rs^2 \big( \tr\big(  \bs{T}_2^2 (\bs{I}_{\rk} -  \sG_t ) \big)  \bs{T}_1 (\bs{I}_{\rk} -  \sG_t )   \bs{T}_1 + \tr\big(  \bs{T}_1^2 (\bs{I}_{\rk} -  \sG_t ) \big)  \bs{T}_2 (\bs{I}_{\rk} -  \sG_t )   \bs{T}_2    \big),
}
where we used  the Cauchy-Schwartz inequality in  \eqref{nb:ineqq32}.
 
\textbf{Fourth item. }  We define
\eq{
& \bs{T}_1  \sTh^\top    \grad_{\text{St}}  \bs{L}_{t+1} \bs{\cP}_{t+1}   \sM_{t}^\top   \bs{T}_2   \\
& = \underbrace{ e_{t+1}^3 \norm{ (\bs{I}_d - \W_t \W_t^\top) \bs{x}_{t+1}  }_2^2 \norm{\W_t^\top \bs{x}_{t+1}}_2^2   \bs{T}_1  \sTh^\top   (\bs{I}_d - \W_t \W_t^\top) \bs{x}_{t+1} }_{\coloneqq \bs{u}_{t+1}} \underbrace{ \bs{x}_{t+1}^\top \W_t \W_t^\top \sTh \bs{T}_2 }_{\coloneqq \bs{v}_{t+1}^\top}.
}
We have for $p \geq 2$
\eq{
& \E_t \left[   \norm*{    \indic{\abs{e_{t+1}} \leq L  } \bs{u}_{t+1}  \bs{v}_{t+1}^\top   }_2^p  \right]    \\
% & \leq   \E_t \Big[ \abs{y_{j+1}}^{3p} \indic{  \abs{y_{j+1}} \leq t}   \norm{ \left(  \bs{I}_{d} -   \W_t  \W_t^\top  \right)  \bs{x}_{t+1}  }_2^{2p} \norm{\W_t^\top \bs{x}_{t+1}}_2^{2p}   \norm{\bs{T}_1  \Th^\top  (\bs{I}_d - \W_t \W_t^\top) \bs{x}_{t+1}}_2^{p}  \norm{\bs{T}_2  \Th^\top \W_t \W_t^\top \bs{x}_{t+1} }_2^p   \Big] \\
& \leq  L^{3p}   \E_t \left[      \norm{ \left(  \bs{I}_{d} -   \W_t  \W_t^\top  \right)  \bs{x}_{t+1}  }_2^{2p}  \norm{\bs{T}_1  \sTh^\top  (\bs{I}_d - \W_t \W_t^\top) \bs{x}_{t+1}}_2^{p}  \right]   \\
& \quad  \times \E_t \left[   \norm{\W_t^\top \bs{x}_{t+1}}_2^{2p} \norm{\bs{T}_2  \sTh^\top \W_t \W_t^\top \bs{x}_{t+1}}_2^{p}    \right]  \\
& \leq  L^{3p} (2p)^p (\sqrt{3} d)^p p^{\frac{p}{2}} \left( \sqrt{3} \tr \big(\bs{T}^2_1 (\bs{I}_{\rk} - \sG_t) \big)    \right)^{\frac{p}{2}} (2p)^p (\sqrt{3} \rs)^p p^{\frac{p}{2}} \left( \sqrt{3} \tr \big(\bs{T}^2_2   \sG_t \big)    \right)^{\frac{p}{2}}  \\
& =    L^{3p} (12 \sqrt{3})^p  p^{3p}  \left( d \rs  \sqrt{  \tr \big(\bs{T}^2_1 (\bs{I}_{\rk} - \sG_t) \big)     \tr \big(\bs{T}^2_2   \sG_t \big)  }   \right)^{p} . 
}
By Proposition \ref{prop:basicconcentration},  we have for $u \geq (2e)^3$
\eq{
 \mpr_t \Big[  \norm*{     \indic{\abs{e_{t+1}} \leq L  } \bs{u}_{t+1}  \bs{v}_{t+1}^\top }_2   \geq    u L^3   12 \sqrt{3}  d \rs \sqrt{ \tr \big(\bs{T}_1^2 (\bs{I}_{\rk} - \sG_t) \big)    \tr \big(\bs{T}^2_2  \sG_t \big)   }      \Big] \leq  e^{- \frac{u^{1/3}}{e}}.
}
By choosing $u = \tfrac{L^3}{24 \sqrt{3}}$, we have the probability bound.     For the variance bound, we have
\eq{
\E_t \left[  \big( \sym   \left(  \bs{T}_1  \sTh^\top   \grad_{\text{St}}  \bs{L}_{t+1} \bs{\cP}_{t+1}   \sM_t^\top   \bs{T}_2   \right) \big)^2 \right] = \E_t \left[      \sym \left(   \bs{u}_{t+1}  \bs{v}_{t+1}^\top  \right)^2 \right].
}
By using Proposition \ref{prop:matrixyounginequality}, we have
\eq{
& \E_t \left[      \sym \left(   \bs{u}_{t+1}  \bs{v}_{t+1}^\top  \right)^2 \right] \\
& \preceq  \bs{T}_1 \sTh^\top \!  (\bs{I}_d \! - \! \W_t \W_t^\top) \\
& \quad \times \E_t   \left[  e_{t+1}^6 \norm{(\bs{I}_d \! - \! \W_t \W_t^\top) \bs{x}_{t+1} }_2^4 \norm{\W_t^\top \bs{x}_{t+1} }_2^4 \norm{\bs{T}_2 \sTh^\top \W_t \W_t^\top \bs{x}_{t+1}  }_2^2  \bs{x}_{t+1} \bs{x}_{t+1}^\top  \right]  \\
& \qquad \times (\bs{I}_d \! - \! \W_t \W_t^\top) \sTh  \bs{T}_1 \\
& +  \bs{T}_2 \sTh^\top  \W_t \W_t^\top  \\
& \quad \times  \E_t \left[  e_{t+1}^6 \norm{(\bs{I}_d - \W_t \W_t^\top) \bs{x}_{t+1} }_2^4 \norm{\W_t^\top \bs{x}_{t+1} }_2^4 \norm{\bs{T}_1  \sTh^\top (\bs{I}_d - \W_t \W_t^\top)   \bs{x}_{t+1}  }_2^2  \bs{x}_{t+1} \bs{x}_{t+1}^\top  \right]   \\
& \qquad \times  \W_t \W_t^\top  \sTh  \bs{T}_2 \\
& \preceq  C d^2 \rs^2  \Big( \tr(  \bs{T}_2^2 \sG_t )    \bs{T}_1  (\bs{I}_{\rk} - \sG_t)   \bs{T}_1 +    \tr \big(  \bs{T}_1^2   (\bs{I}_{\rk} - \sG_t)   \big)  \bs{T}_2 \sG_t  \bs{T}_2  \Big)  .
}

\smallskip
\textbf{Fifth item. }  We define
\eq{
   \bs{T}_1  \sTh^\top   \grad_{\text{St}} &  \bs{L}_{t+1} \bs{\cP}_{t+1}   \grad_{\text{St}}  \bs{L}_{t+1}^\top  \sTh   \bs{T}_2 \\
& = \underbrace{e_{t+1}^4 \norm{(\bs{I}_d - \W_t \W_t^\top) \bs{x}_{t+1} }_2^2 \norm{\W_t^\top \bs{x}_{t+1}}_2^4  \bs{T}_1   \sTh^\top (\bs{I}_d - \W_t \W_t^\top) \bs{x}_{t+1} }_{\coloneqq \bs{u}_{t+1}} \\
& \quad \times \underbrace{ \bs{x}_{t+1}^\top   (\bs{I}_d - \W_t \W_t^\top) \sTh  \bs{T}_2  }_{\coloneqq \bs{v}_{t+1}^\top}.
}
We have for $p \geq 2$
\eq{
& \E_t \left[   \norm*{     \indic{\abs{e_{t+1}} \leq L } \bs{u}_{t+1}  \bs{v}_{t+1}^\top   }_2^p  \right]    \\
& \leq  L^{4p}  \E_t \Big[ \norm{ \left(  \bs{I}_{d} \! - \!  \W_t  \W_t^\top  \right)  \bs{x}_{t+1}  }_2^{2p}   \norm{\W_t^\top \bs{x}_{t+1}}_2^{4p}   \\
& \hspace{5em} \times \norm{\bs{T}_1  \sTh^\top  \! (\bs{I}_d \!-\! \W_t \W_t^\top) \bs{x}_{t+1}}_2^{p}  \norm{\bs{T}_1  \sTh^\top \!  (\bs{I}_d\!  - \! \W_t \W_t^\top) \bs{x}_{t+1}}_2^{p} \Big]   \\
& \leq   L^{4p}  \E_t \Big[ \norm{ \left(  \bs{I}_{d} -   \W_t  \W_t^\top  \right)  \bs{x}_{t+1}  }_2^{4p} \Big]^{\frac{1}{2}} \E_t \Big[  \norm{\bs{T}_1 \sTh^\top  (\bs{I}_d - \W_t \W_t^\top) \bs{x}_{t+1}}_2^{4p}\Big]^{\frac{1}{4}}   \\
& \quad  \times   \E_t \Big[  \norm{\bs{T}_2 \sTh^\top  (\bs{I}_d - \W_t \W_t^\top) \bs{x}_{t+1}}_2^{4p}\Big]^{\frac{1}{4}}  \E_t \Big[ \norm{\W_t^\top \bs{x}_{t+1}}_2^{4p}     \Big] \\
& \leq  L^{4p} (2p)^{p} (\sqrt{3} d)^p (2 p)^p \Big( 3 \tr \big(\bs{T}_1^2 (\bs{I}_{\rk} - \sG_t) \big) \tr \big(\bs{T}_2^2   (\bs{I}_{\rk} - \sG_t) \big)  \Big)^{\frac{p}{2}} (2p)^{2p} (\sqrt{3} \rs)^{2p} \\
& =   L^{4p} (2 \sqrt{3})^{4p} p^{4p} \left( d \rs^2  \sqrt{ \tr \big(\bs{T}_1^2 (\bs{I}_{\rk} - \sG_t) \big) \tr \big(\bs{T}_2^2   (\bs{I}_{\rk} - \sG_t)  \big)  } \right)^p.
}
By Proposition \ref{prop:basicconcentration},  we have for $u \geq (2e)^4$
\eq{
& \mpr_t \Big[  \norm*{   \indic{\abs{e_{t+1}} \leq L  } \bs{u}_{t+1}  \bs{v}_{t+1}^\top  }_2   \! \geq  \!   u L^4   (2 \sqrt{3})^4  d \rs^2    \sqrt{ \tr \big(\bs{T}_1^2 (\bs{I}_{\rk} \! - \! \sG_t) \big) \tr \big(\bs{T}_2^2   (\bs{I}_{\rk} \! - \! \sG_t)  \big)  }      \Big]  \! \leq  \! 2 e^{- \frac{u^{1/4}}{e}}.
}
By choosing  $u = \tfrac{L^4}{2 (2 \sqrt{3})^4}$,  we have the probability bound.  For the variance bound, we have
\eq{
 \E_t \left[  \big( \sym  \left(   \bs{T}_1    \sTh^\top   \grad_{\text{St}}  \bs{L}_{t+1} \bs{\cP}_{t+1}   \grad_{\text{St}}  \bs{L}_{t+1}^\top  \sTh   \bs{T}_2 \right) \big)^2 \right] = \E_t \left[      \sym \left(   \bs{u}_{t+1}  \bs{v}_{t+1}^\top  \right)^2 \right].
}
By using Proposition \ref{prop:matrixyounginequality}, we have
\eq{
& \E_t \left[      \sym \left(   \bs{u}_{t+1}  \bs{v}_{t+1}^\top  \right)^2 \right] \\
& \preceq    \bs{T}_1  \sTh^\top   (\bs{I}_d - \W_t \W_t^\top)         \\
&  \quad   \times  \E_t \left[ e_{t+1}^8  \norm{  \! \left(  \bs{I}_{d} -   \W_t  \W_t^\top  \right) \!  \bs{x}_{t+1}  }_2^4  \norm{ \W_t^\top  \bs{x}_{t+1}   }_2^8  \norm{ \bs{T}_2 \sTh^\top   (\bs{I}_d - \W_t \W_t^\top)   \bs{x}_{t+1} }_2^2      \bs{x}_{t+1}  \bs{x}_{t+1}^\top  \right]   \\
&  \qquad \times (\bs{I}_d - \W_t \W_t^\top) \sTh  \bs{T}_1  \\
& +    \bs{T}_2  \sTh^\top   (\bs{I}_d - \W_t \W_t^\top)         \\
&  \quad   \times  \E_t \left[ e_{t+1}^8  \norm{ \! \left(  \bs{I}_{d} -   \W_t  \W_t^\top  \right) \! \bs{x}_{t+1}  }_2^4  \norm{ \W_t^\top  \bs{x}_{t+1}   }_2^8  \norm{ \bs{T}_1 \sTh^\top   (\bs{I}_d - \W_t \W_t^\top)   \bs{x}_{t+1} }_2^2      \bs{x}_{t+1}  \bs{x}_{t+1}^\top  \right] \\
& \qquad \times \! (\bs{I}_d - \W_t \W_t^\top) \sTh  \bs{T}_2  \\
&  \preceq   C d^2 \rs^4 \Big(  \tr \big(\bs{T}_2^2 (\bs{I}_{\rk} - \sG_t) \big)     \bs{T}_1 (\bs{I}_{\rk} - \sG_t)  \bs{T}_1  +  \tr \big(\bs{T}_1^2 (\bs{I}_{\rk} - \sG_t) \big)     \bs{T}_2 (\bs{I}_{\rk} - \sG_t)  \bs{T}_2  \Big).
}
\end{proof}

By recalling the definitions  $\{ \T_t \}_{t \in \N}$, $  \bs{\Lambda}$, $ \bs{\Lambda}_{11}$ in Sections \ref{sec:monotonesec} and  \ref{sec:defs}, we define the event:
\eq{
\mathcal{A}_{t +1} \coloneqq   \begin{cases}
\mathcal{A}_{t +1} \left( \T_t^{\frac{-1}{2}},  \T_t^{\frac{-1}{2}} \right) \cap   \mathcal{A}_{t +1} \left(  \bs{\Lambda}^{\frac{1}{2}} \T_t^{\frac{-1}{2}},  \T_t^{\frac{-1}{2}}  \bs{\Lambda}^{\frac{1}{2}} \right) \cap \{ e_{t+1} \leq L \} , & \alpha \in [0,0.5) \\[0.35em]
\mathcal{A}_{t +1} \left( \T_t^{\frac{-1}{2}},  \T_t^{\frac{-1}{2}} \right) \cap   \mathcal{A}_{t +1} \left(  \bs{\Lambda}_{11}^{\frac{1}{2}} \T_t^{\frac{-1}{2}},  \T_t^{\frac{-1}{2}}  \bs{\Lambda}_{11}^{\frac{1}{2}} \right) \cap \{ e_{t+1} \leq L \} , & \alpha > 0.5.  
\end{cases}
}
We define the events  $ \mathcal{B}_{t+1}$, $\mathcal{C}_{t+1}$,   $\mathcal{D}_{t+1}$, and  $\mathcal{F}_{t+1}$ in the same way.  Based on these events, we define the clipped versions of the noise matrices: 
\eq{
 & \mathsf{A}_{t+1}  \coloneqq \sym \left(  \sTh^\top    \grad_{\text{St}}  \bs{L}_{t+1} \sM_{t}^\top  \indic{\mathcal{A}_{t+1} } - \E_t \left[ \sTh^\top \grad_{\text{St}}  \bs{L}_{t+1} \sM_{t}^\top \indic{\mathcal{A}_{t+1} } \right]   \right)  \\[0.7em]
 & \mathsf{B}_{t+1}   \coloneqq     \frac{  \sM_{t} \bs{\cP}_{t+1}  \sM_{t}^\top \indic{\mathcal{B}_{t+1} } }{1 +  c_{t+1}^2 } - \E_t \left[ \frac{  \sM_{t} \bs{\cP}_{t+1}  \sM_{t}^\top  \indic{\mathcal{B}_{t+1} } }{1 + c_{t+1}^2} \right]  \\[0.7em]
  & \mathsf{C}_{t+1}   \coloneqq    \sTh^\top  \grad_{\text{St}}  \bs{L}_{t+1} \grad_{\text{St}}  \bs{L}_{t+1}^\top   \sTh  \indic{\mathcal{C}_{t+1} }    -    \E_t \left[ \sTh^\top \grad_{\text{St}}  \bs{L}_{t+1} \grad_{\text{St}}  \bs{L}_{t+1}^\top  \sTh  \indic{\mathcal{C}_{t+1} }   \right]  \\[0.7em]
 & \mathsf{D}_{t+1}   \coloneqq   \sym \left( \frac{  \sTh^\top \grad_{\text{St}}  \bs{L}_{t+1} \bs{\cP}_{t+1}  \sM_{t}^\top \indic{\mathcal{D}_{t+1} } }{1 + c_{t+1}^2}   -   \E_t \left[ \frac{  \sTh^\top \grad_{\text{St}}  \bs{L}_{t+1} \bs{\cP}_{t+1}  \sM_{t}^\top \indic{\mathcal{D}_{t+1} } }{1 + c_{t+1}^2}  \right]     \right) \\[0.7em]
  & \mathsf{F}_{t+1}   \coloneqq   \frac{   \sTh^\top \grad_{\text{St}}  \bs{L}_{t+1} \bs{\cP}_{t+1}   \grad_{\text{St}}  \bs{L}_{t+1}^\top \sTh \indic{\mathcal{F}_{t+1} } }{1 + c_{t+1}^2} -   \E_t \left[  \frac{   \sTh^\top \grad_{\text{St}}  \bs{L}_{t+1} \bs{\cP}_{t+1}   \grad_{\text{St}}  \bs{L}_{t+1}^\top \sTh \indic{\mathcal{F}_{t+1} } }{1 + c_{t+1}^2} \right] .    \label{eq:clippedversions}
}
Let  $ \mathsf{X} \in \left \{  \tfrac{\eta/2}{\sqrt{\rs}} \mathsf{A},    \tfrac{\eta^2/16}{\rs} \mathsf{B},    \tfrac{\eta^2/16}{\rs} \mathsf{C},    \tfrac{\eta^3/32}{\rs^{3/2}} \mathsf{D},  \tfrac{\eta^4/256}{\rs^2}  \mathsf{F}  \right \}$ and
\eq{
\mathsf{\Gamma}_1 \coloneqq \begin{cases}
 \bs{I}_{r}, & \alpha \in [0,0.5) \\
 \bs{I}_{\ru} & \alpha > 0.5 
\end{cases}  \qquad   \mathsf{\Gamma}_2 \coloneqq  \begin{cases}
\bs{\Lambda}^{\frac{1}{2}}, & \alpha \in [0,0.5) \\
\bs{\Lambda}_{11}^{\frac{1}{2}}, & \alpha > 0.5.
\end{cases}
}
For  $\ell \in \{1,2\}$,  we define:
\begin{align} 
& \text{Quad}_{k,t}^{(\ell)}( \mathsf{X}) \coloneqq     \sum_{j = 1}^k  \E_{j - 1} \left[ \Big( \mathsf{\Gamma}_\ell  \T_t^{\frac{-1}{2}} \mathsf{X}_{j}  \T_t^{\frac{-1}{2}}  \mathsf{\Gamma}_\ell \Big)^2 \right].
\end{align}
We have the following corollary.
\begin{corollary}  
\label{cor:boundsqv}
Let  $\rk \in \{ r, \ru \}$,  $\rks \in \{ r, \rs\}$ and
\eq{
\mathsf{S}_t \coloneqq \lr \sum_{j =1}^t \sG_{j-1}    ~~ \text{and} ~ \lr = \frac{\eta}{\sqrt{\rs} \normL} ~~ \text{and} ~~ \ru = \ceil{\log^{2.5} d}.
}
Assume the following conditions hold:
\begin{itemize}
\item $\T_t \succeq \frac{\c_d \rs}{d} \bs{I}_{\rk}$, 
\item  $\T_t^{\frac{-1}{2}}  \mathsf{S}_t \T_t^{\frac{-1}{2}}  \preceq   \frac{C_{\text{ub}}}{\c_d}  \begin{cases} (2  \lr t \vee r^{\alpha})   \bs{I}_{r}, & \alpha \in [0,0.5) \\
\rs^{\alpha} \log d \bs{I}_{\ru} ,   & \alpha > 0.5, 
   \end{cases}$
\item  $\T_t^{\frac{-1}{2}} \sG_{j} \T_t^{\frac{-1}{2}} \preceq  \frac{C_{\text{ub}}}{\c_d} \bs{I}_{\rk}$ for $j  \leq t -1$.
\end{itemize}
 Let 
\eq{
\begin{cases} 
\mathsf{p}_1 = 1   ~~ \text{and} ~~  \mathsf{p}_2 = 1 - \alpha, & \alpha \in [0,1) \\
\mathsf{p}_1 = 1  ~ ~ \text{and} ~~  \mathsf{p}_2 = \frac{2 \log \log \ru}{\log \ru}  & \alpha = 1 \\
\mathsf{p}_1 = 1  ~ ~ \text{and} ~~  \mathsf{p}_2 = 0  & \alpha > 1.
\end{cases}
}
For  $\lr t \leq \tfrac{1}{2} \rks^{\alpha} \log \left( \tfrac{d \log^{1.5} d}{\rs} \right)$,  the following results hold:  
\paragraph{(a) Quadratic variation bounds.} We have: 
\eq{
\norm{ \text{Quad}_{t,t}^{(\ell)}( \mathsf{X}) }_2 \leq   \frac{C \log d}{\c_d^2} \frac{  \normL  \rk^{\mathsf{p}_\ell} \rks^{\alpha} }{\rs^{3/2}}    \begin{cases}
 C_{\text{ub}} \eta d      & \mathsf{X} =  \tfrac{\eta/2}{\sqrt{\rs}} \mathsf{A}  \\[0.35em]
 C_{\text{ub}}^2 \eta^3 d^2 ,    & \mathsf{X} =  \tfrac{\eta^2/16}{\rs}  \mathsf{B} \\[0.35em] 
\eta^3 d^2 ,     & \mathsf{X} = \tfrac{\eta^2/16}{\rs}   \mathsf{C} \\[0.35em] 
 C_{\text{ub}} \eta^5 d^3 ,     & \mathsf{X} = \tfrac{\eta^3/32}{\rs^{3/2}}  \mathsf{D} \\[0.35em] 
\eta^7 d^2 ,    & \mathsf{X} = \tfrac{\eta^4/256}{\rs^{2}}  \mathsf{F}. 
\end{cases}
}
\paragraph{(b) Operator norm bounds.} For $L \geq 8 \sqrt{2} e$,  there exists $C > 0$ such that
\eq{
 \norm*{   \mathsf{\Gamma}_\ell \T_t^{\frac{-1}{2}} \mathsf{X}_{j}  \T_t^{\frac{-1}{2}}  \mathsf{\Gamma}_\ell}_2 &   \! \leq \!  r_{j, t}^{(\ell)}(\mathsf{X})  \! \coloneqq  \!   \begin{cases}
 \frac{\eta L^2}{2\sqrt{\rs}}  \sqrt{   \tr \big(    \mathsf{\Gamma}_\ell^2  \bs{T}_t^{-1}  \big)   \tr(    \mathsf{\Gamma}_\ell^2  \bs{T}_t^{\frac{-1}{2}} \sG_{j-1}   \bs{T}_t^{\frac{-1}{2}}) } ,       & \mathsf{X} =  \tfrac{\eta/2}{\sqrt{\rs}} \mathsf{A}  \\[0.35em]
\frac{\eta^2 L^4}{16 \rs}  d       \tr(   \mathsf{\Gamma}_\ell^2  \bs{T}_t^{\frac{-1}{2}} \sG_{j - 1}   \bs{T}_t^{\frac{-1}{2}} )    ,     & \mathsf{X} =  \tfrac{\eta^2/16}{\rs}  \mathsf{B} \\[0.35em] 
\frac{\eta^2 L^4}{16}       \tr\big(  \mathsf{\Gamma}_\ell^2  \bs{T}_t^{-1} \big),    & \mathsf{X} =  \tfrac{\eta^2/16}{\rs}  \mathsf{C} \\[0.35em] 
\frac{\eta^3 L^6}{32 \sqrt{\rs}}    d  \sqrt{   \tr \big(    \mathsf{\Gamma}_\ell^2  \bs{T}_t^{-1}  \big)   \tr(    \mathsf{\Gamma}_\ell^2  \bs{T}_t^{\frac{-1}{2}} \sG_{j-1}   \bs{T}_t^{\frac{-1}{2}}) }  ,   & \mathsf{X} = \tfrac{\eta^3/32}{\rs^{3/2}}  \mathsf{D} \\[0.35em] 
 \tfrac{\eta^4 L^8}{ 256 }       d      \tr\big(  \mathsf{\Gamma}_\ell^2  \bs{T}_t^{-1} \big),   & \mathsf{X} = \tfrac{\eta^4/256}{\rs^{2}}  \mathsf{F}  
\end{cases}  \\
& \leq   \frac{C}{\c_d} \frac{\rk^{\mathsf{p}_\ell} }{\rs}    \begin{cases}
L^2 \sqrt{C_{\text{ub}}} \eta \sqrt{d},       & \mathsf{X} =  \tfrac{\eta/2}{\sqrt{\rs}} \mathsf{A}  \\[0.35em]
L^4  C_{\text{ub}} \eta^2 d  ,     & \mathsf{X} =  \tfrac{\eta^2/16}{\rs}  \mathsf{B} \\[0.35em] 
L^4 \eta^2 d  ,    & \mathsf{X} =  \tfrac{\eta^2/16}{\rs}  \mathsf{C} \\[0.35em] 
L^6 \sqrt{C_{\text{ub}}} \eta^3 d^{3/2}  ,   & \mathsf{X} = \tfrac{\eta^3/32}{\rs^{3/2}}  \mathsf{D} \\[0.35em] 
L^8 \eta^4 d^2 ,   & \mathsf{X} = \tfrac{\eta^4/256}{\rs^{2}}  \mathsf{F} . 
\end{cases} 
}
\end{corollary}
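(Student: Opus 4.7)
The proof of both (a) and (b) proceeds by instantiating Proposition~\ref{prop:noisevarbounds} with $\bs{T}_1 = \bs{T}_2 = \mathsf{\Gamma}_\ell \T_t^{-1/2}$ (using that conjugation by symmetric matrices commutes with $\sym$), then invoking the three standing hypotheses. The first hypothesis yields $\tr(\mathsf{\Gamma}_\ell^2 \T_t^{-1}) \leq (d/\c_d \rs)\tr(\mathsf{\Gamma}_\ell^2) \leq C d\rk^{\mathsf{p}_\ell}/(\c_d \rs)$, where for $\ell=2$ the exponent $\mathsf{p}_\ell$ is read off by noting $\tr(\bs{\Lambda}) = \sum_{i=1}^{\rk} i^{-\alpha}$, which is $\Theta(\rk^{1-\alpha})$ for $\alpha\in[0,1)$, $\Theta(\log\rk)$ at $\alpha=1$, and $\Theta(1)$ for $\alpha>1$. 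The third hypothesis yields the complementary estimate $\tr(\mathsf{\Gamma}_\ell^2 \T_t^{-1/2}\sG_{j-1}\T_t^{-1/2}) \leq (C_{\text{ub}}/\c_d)\tr(\mathsf{\Gamma}_\ell^2)$, which will be used throughout.

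For part~(b), on the events $\mathcal{A}_j,\ldots,\mathcal{F}_j$ the clipped $\mathsf{X}_j$ coincide with the un-centered quantities up to a conditional expectation contributing at most a universal constant to the operator norm; hence the deterministic bounds of Proposition~\ref{prop:noisevarbounds} directly produce the explicit first form of $r^{(\ell)}_{j,t}(\mathsf{X})$ in the statement. Simplifying via the two trace displays above and collecting the prefactors $\eta^k/\rs^{k/2}$ in each of the five cases yields the second, compact form.

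For part~(a), the variance bound of Proposition~\ref{prop:noisevarbounds} expresses each $\E_{j-1}[(\cdot)^2]$ as a sum of terms of the form $a_j V_j$ with $a_j = \tr(\mathsf{\Gamma}_\ell^2 \T_t^{-1/2}Q_{j-1}\T_t^{-1/2})$ and $V_j = \mathsf{\Gamma}_\ell \T_t^{-1/2}Q'_{j-1}\T_t^{-1/2}\mathsf{\Gamma}_\ell$, where $Q_{j-1},Q'_{j-1}\in\{\sG_{j-1},\,\bs{I}_{\rk}-\sG_{j-1}\}$. We bound $\sum_j a_j V_j$ in Loewner order by $(\sup_j a_j)\sum_j V_j$: the supremum is at most $(C_{\text{ub}}/\c_d)\tr(\mathsf{\Gamma}_\ell^2)$ or $\tr(\mathsf{\Gamma}_\ell^2 \T_t^{-1})$ according to whether $Q=\sG$ or $\bs{I}_{\rk}-\sG$, while $\sum_j V_j$ is either $\lr^{-1}\mathsf{\Gamma}_\ell \T_t^{-1/2}\mathsf{S}_t\T_t^{-1/2}\mathsf{\Gamma}_\ell$, with operator norm at most $C_{\text{ub}}\rks^\alpha\log d/(\c_d\lr)$ by the second hypothesis (in either tail regime, absorbing the time factor into a single $\rks^\alpha\log d$), or at most $t\,\mathsf{\Gamma}_\ell\T_t^{-1}\mathsf{\Gamma}_\ell$, controlled by $t\leq \rks^\alpha\log d/(2\lr)$ together with $\|\T_t^{-1}\|_2\leq d/(\c_d\rs)$. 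Substituting $\lr=\eta/(\sqrt{\rs}\normL)$ and multiplying by the case-dependent per-step variance prefactor (e.g.\ $\eta^2/\rs$ for $\mathsf{A}$, $\eta^4 d^2/\rs^2$ for $\mathsf{B}$, $\eta^4$ for $\mathsf{C}$) collapses to the displayed bound $(C\log d/\c_d^2)\cdot(\normL\rk^{\mathsf{p}_\ell}\rks^\alpha/\rs^{3/2})\cdot(\text{case prefactor})$.

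The main technical subtlety is choosing the right direction of the Loewner majorization $\sum_j a_j V_j \preceq (\sup a_j)\sum V_j$ in each case: the alternative $(\sup\|V_j\|_2)(\sum a_j)\bs{I}$ is strictly looser by a factor of $d/(\c_d\rs)$ whenever $Q'=\sG_{j-1}$. Tracking this choice across the five cases is what yields the linear appearance of $C_{\text{ub}}$ in the $\mathsf{A},\mathsf{D}$ bounds, the quadratic appearance in $\mathsf{B}$, and the absence of $C_{\text{ub}}$ from $\mathsf{C},\mathsf{F}$.
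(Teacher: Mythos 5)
Your proposal is correct and matches the paper's proof strategy: both parts are obtained by instantiating Proposition~\ref{prop:noisevarbounds} with $\bs{T}_1=\bs{T}_2=\mathsf{\Gamma}_\ell\bs{T}_t^{-1/2}$, substituting the three standing hypotheses, and collecting the case-dependent prefactors via $\lr^{-1}=\sqrt{\rs}\normL/\eta$.

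Two minor imprecisions worth flagging. First, for $\alpha=1$ you claim $\mathsf{p}_2$ is ``read off'' from $\tr(\bs{\Lambda})=\Theta(\log\rk)$, but $\rk^{\mathsf{p}_2}=\exp(2\log\log\ru)=(\log\ru)^2$, so the two do not match exactly; the needed direction $\tr(\mathsf{\Gamma}_2^2)\leq C\rk^{\mathsf{p}_2}$ nonetheless holds, so this is only an expository slip. Second, your concluding remark that the alternative Loewner majorization $(\sup_j\|V_j\|_2)(\sum_j a_j)\bs{I}$ is ``strictly looser by a factor of $d/(\c_d\rs)$'' when $Q'=\sG_{j-1}$ is not quite right: once one applies the third hypothesis to bound $\|V_j\|_2\leq C_{\text{ub}}/\c_d$ and the running-time constraint $\lr t\leq\tfrac12\rks^\alpha\log(d\log^{1.5}d/\rs)$ to the sum, the two majorizations in fact produce the same order. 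The correct bookkeeping for the $C_{\text{ub}}$ exponents across the five cases is simply which of the two hypotheses (the $\T_t$-lower bound versus the $\sG$-bounds) you feed into each factor, and your main argument does this correctly; the stated comparison factor is just a misattribution of where the tightness comes from, not an error in the derivation of the bounds themselves.
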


 \begin{proof}
\textit{Quadratic variation bounds.} We will use the variance bounds given  in Proposition \ref{prop:noisevarbounds}.   For $\mathsf{X} =  \tfrac{\eta/2}{\sqrt{\rs}} \mathsf{A}$,  we have
\eq{
 \text{Quad}_{t,t}^{(\ell)}(  \tfrac{\eta/2}{\sqrt{\rs}} \mathsf{A} )
&  \preceq \frac{C \eta \normL}{\sqrt{\rs}}     \left(  \tr \Big( \mathsf{\Gamma}_\ell^2 \bs{T}_t^{\frac{-1}{2}} \mathsf{S}_t \bs{T}_t^{\frac{-1}{2}} \Big) \mathsf{\Gamma}_\ell    \bs{T}_t^{-1} \mathsf{\Gamma}_\ell   +    \tr \big(   \mathsf{\Gamma}_\ell^2 \bs{T}_t^{-1}     \big)  \mathsf{\Gamma}_\ell  \bs{T}_t^{\frac{-1}{2}} \S_t \bs{T}_t^{\frac{-1}{2}} \mathsf{\Gamma}_\ell \right) \\
&  \preceq   \frac{C  \eta \normL}{\sqrt{\rs}}    \frac{C_{\text{ub}}  \rk^{\mathsf{p}_\ell} \rks^{\alpha} \log d}{\c_d^2}  \frac{d}{\rs} \bs{I}_{\rk}.
}
For $\mathsf{X} =  \tfrac{\eta^2/16}{\rs}  \mathsf{B}$,  we have
\eq{
 \text{Quad}_{t,t}^{(\ell)}(    \tfrac{\eta^2/16}{\rs}  \mathsf{B} )
& \preceq    \frac{C C_{\text{ub}} \eta^3 d^2  \normL}{  \rs^{3/2}}  \sup_{j  \leq t} \left(\tr \big( \mathsf{\Gamma}_\ell^2   \bs{T}_t^{\frac{-1}{2}}  \sG_{j-1} \bs{T}_t^{\frac{-1}{2}}     \big) \right)   \mathsf{\Gamma}_\ell   \bs{T}_t^{\frac{-1}{2}}  \mathsf{S}_t \bs{T}_t^{\frac{-1}{2}}  \mathsf{\Gamma}_\ell \\
&  \preceq    \frac{C \eta^3 d^2  \normL}{  \rs^{3/2}} \frac{C_{\text{ub}}^2 \rk^{\mathsf{p}_\ell} \rks^{\alpha} \log d}{\c_d^2} \bs{I}_{\rk}.
}
For $\mathsf{X} =  \tfrac{\eta^2/16}{\rs}  \mathsf{C}$,  we have
\eq{
 \text{Quad}_{t,t}^{(\ell)}(    \tfrac{\eta^2/16}{\rs}  \mathsf{C} )
\preceq    C \eta^4 t  \tr(  \mathsf{\Gamma}_\ell^2 \bs{T}_t^{-1}  )   \mathsf{\Gamma}_\ell \bs{T}_t^{-1} \mathsf{\Gamma}_\ell \preceq \frac{C \eta^3 d^2 \normL }{\rs^{3/2}} \frac{\rk^{\mathsf{p}_\ell} \rks^{\alpha} \log d}{\c_d^2} \bs{I}_{\rk}.
}
For $ \mathsf{X} = \tfrac{\eta^3/32}{\rs^{3/2}}  \mathsf{D}$,  we have
\eq{
 \text{Quad}_{t,t}^{(\ell)}(  \tfrac{\eta^3/32}{\rs^{3/2}}  \mathsf{D}) &  \! \preceq \!   \frac{C \eta^5 d^2  \normL}{\sqrt{\rs}} \!  \left(  \tr(   \mathsf{\Gamma}_\ell^2 \bs{T}_t^{\frac{-1}{2}} \mathsf{S}_t \bs{T}_t^{\frac{-1}{2}} )     \mathsf{\Gamma}_\ell \bs{T}_t^{-1}  \mathsf{\Gamma}_\ell   +    \tr \big(    \mathsf{\Gamma}_\ell^2 \bs{T}_t^{-1}   \big)  \mathsf{\Gamma}_\ell  \bs{T}_t^{\frac{-1}{2}}  \mathsf{S}_t   \bs{T}_t^{\frac{-1}{2}}   \mathsf{\Gamma}_\ell \right) \\
 & \preceq     \frac{C \eta^5 d^2  \normL}{\sqrt{\rs}} \frac{C_{\text{ub}} \rk^{\mathsf{p}_\ell} \rks^{\alpha} \log d}{\c_d^2}  \frac{d}{\rs} \bs{I}_{\rk}.
 }
For $ \mathsf{X} = \tfrac{\eta^4/256}{\rs^{2}}  \mathsf{F}  $,  we have
\eq{
 \text{Quad}_{t,t}^{(\ell)}(  \tfrac{\eta^4/256}{\rs^{2}}  \mathsf{F} ) \preceq      C \eta^8 d^2  t   \tr(   \mathsf{\Gamma}_\ell^2 \bs{T}_t^{-1}  )   \mathsf{\Gamma}_\ell   \bs{T}_t^{-1}  \mathsf{\Gamma}_\ell \preceq \frac{C \eta^7 d^2 \normL}{\rs^{3/2}} \frac{\rk^{\mathsf{p}_\ell} \rks^{\alpha} \log d}{\c_d^2} \bs{I}_{\rk}.
} 
\textit{Operator Norm Bounds. }  We will use the events defined  in Proposition \ref{prop:noisevarbounds}.    For $\mathsf{X} =  \tfrac{\eta/2}{\sqrt{\rs}} \mathsf{A}$,  we have
\eq{
r_{j, t}^{(\ell)}( \tfrac{\eta/2}{\sqrt{\rs}} \mathsf{A}) =   \frac{\eta/2}{\sqrt{\rs}} L^2  \sqrt{ \tr(   \mathsf{\Gamma}_\ell^2 \T_t^{-1})  \tr(   \mathsf{\Gamma}_\ell^2 \T_t^{ \frac{-1}{2}} \sG_{j-1} \T_t^{ \frac{-1}{2}})}
  \leq  \frac{C  L^2}{\c_d} \frac{\sqrt{C_{\text{ub}}} \eta \sqrt{d} \rk^{\mathsf{p}_\ell} }{\rs}.
}
For $\mathsf{X} =  \tfrac{\eta^2/16}{\rs}  \mathsf{B}$,  we have
\eq{
r_{j, t}^{(\ell)}(\tfrac{\eta^2/16}{\rs}  \mathsf{B}) =   \frac{\eta^2}{16 \rs}  L^4 d  \,  \tr(\mathsf{\Gamma}_\ell^2 \T_t^{ \frac{-1}{2}} \sG_{j-1} \T_t^{ \frac{-1}{2}})
 \leq  \frac{C L^4}{\c_d} \frac{C_{\text{ub}}  \eta^2 d \rk^{\mathsf{p}_\ell} }{\rs}.
}  
For $\mathsf{X} =  \tfrac{\eta^2/16}{\rs}  \mathsf{C}$,  we have
\eq{
r_{j, t}^{(\ell)}(\tfrac{\eta^2/16}{\rs}  \mathsf{C}) =   \frac{\eta^2}{16 \rs} L^4   \tr( \mathsf{\Gamma}_\ell^2  \T_t^{-1})
\leq   \frac{C L^4}{\c_d} \frac{  \eta^2 d \rk^{\mathsf{p}_\ell} }{\rs}.
}
For $ \mathsf{X} = \tfrac{\eta^3/32}{\rs^{3/2}}  \mathsf{D}$,  we have
\eq{
r_{j, t}^{(\ell)}( \tfrac{\eta^3/32}{\rs^{3/2}}  \mathsf{D})  = \frac{\eta^3}{32 \sqrt{\rs}} L^6 d      \sqrt{ \tr(\mathsf{\Gamma}_\ell^2  \T_t^{ \frac{-1}{2}} \sG_{j-1} \T_t^{ \frac{-1}{2}}) \tr(\mathsf{\Gamma}_\ell^2  \T_t^{-1})} 
\leq  \frac{C L^6}{\c_d} \frac{\sqrt{C_{\text{ub}}} \eta^3 d^{3/2} \rk^{\mathsf{p}_\ell}}{\rs}.
}
For $ \mathsf{X} = \tfrac{\eta^4/256}{\rs^2}  \mathsf{F}$,  we have
\eq{
r_{j, t}^{(\ell)}(  \tfrac{\eta^4/256}{\rs^2}  \mathsf{F} )  = \frac{\eta^4}{256} L^8 d    \tr(\mathsf{\Gamma}_\ell^2 \T_t^{-1}) \leq  \frac{C L^8}{\c_d} \frac{\eta^4 d^2 \rk^{\mathsf{p}_\ell}}{\rs}.
 } 
 \end{proof}

\begin{proposition}
\label{prop:friedmanineq}
Let $\{  \mathsf{Y}_t,  ~ t =  1,2 \cdots \}$  be a symmetric-matrix martingale with difference sequence  $\{  \mathsf{X}_{t} \coloneqq \mathsf{Y}_{t+1} - \mathsf{Y}_t ,  ~ t = 1,2 \cdots \}$,  whose values are symmetric matrices with dimension $r \leq d$.  Let    $\{ \T_t, ~ t =  1,2,\cdots \}$ be a deterministic sequence,   whose values are positive semi-definite matrices with the same dimensionality.  
Assume that the difference sequence is uniformly bounded in the sense that for a predictable triangular sequence $\{ r_{j, t} \}_{j \leq t}$, we have
\eq{
\lambda_{\text{max}} (   \T_t^{\frac{-1}{2}} \mathsf{X}_j  \T_t^{\frac{-1}{2}} ) \leq r_{j, t} ~~ \text{for} ~~ j = 1, 2, \cdots, t. 
}
 Define the predictable uniform bound and quadratic variation process of the martingale:
\eq{
R_{k,t} \coloneqq \max_{j \leq k} r_{j,t} ~~ \text{and} ~~ \text{Quad}_{k,t}(\mathsf{X}) \coloneqq  \sum_{j = 1}^k \E_{j -1} \left[ \left( \T_t^{\frac{-1}{2}} \mathsf{X}_{j}  \T_t^{\frac{-1}{2}} \right)^2   \right]  ~~ \text{for} ~~ k \leq t = 1, 2, \cdots. 
}
Let $\mathcal{T} \leq d^3$ be a bounded stopping time.  Then,  for any deterministic $\sigma^2, \widetilde{L} > 0$
\eq{
& \mpr  \left[  \exists t \leq  \mathcal{T},   \mathsf{Y}_t \not \preceq u  \T_t  ~~ \text{and} ~~   \max_{t \leq  \mathcal{T} }  \norm{ \text{Quad}_{t,t} (\mathsf{X}) }_2 \leq \sigma^2  ~~ \text{and} ~~  \max_{t \leq  \mathcal{T} }R_{t,t} \leq \widetilde{L} \right] \\
& \hspace{25em} \leq d^4 \cdot \exp \left(  \frac{- u^2/2}{\sigma^2 + \widetilde{L} u/3} \right).
}
\end{proposition}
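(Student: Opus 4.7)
\textbf{Proof proposal for Proposition \ref{prop:friedmanineq}.}
The statement is a matrix Freedman inequality adapted to a time-varying deterministic normalization $\bs{T}_t$. The plan is to reduce it, for each fixed $t$, to the standard matrix Freedman inequality of Tropp applied to a rescaled martingale, and then to take a union bound over $t \leq \mathcal{T} \leq d^3$.

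\textbf{Step 1 (reduction to a union bound).} Since $\mathcal{T}$ is integer-valued and $\mathcal{T} \leq d^3$, the event in the statement is contained in
\begin{align}
\bigcup_{t=1}^{d^3} \Big\{ \mathsf{Y}_t \not\preceq u\bs{T}_t,\ \|\text{Quad}_{t,t}(\mathsf{X})\|_2 \leq \sigma^2,\ R_{t,t} \leq \widetilde{L} \Big\}.
\end{align}
So it suffices to prove that each of these $d^3$ events has probability at most $d \exp\!\big(\!-u^2/(2\sigma^2 + 2\widetilde{L}u/3)\big)$, which after the union bound yields the desired $d^4$ prefactor.

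\textbf{Step 2 (rescaling and stopping).} Fix $t \in \{1, \ldots, d^3\}$. Because $\bs{T}_t$ is deterministic, the sequence $\widetilde{\mathsf{Y}}^{(t)}_s \coloneqq \bs{T}_t^{-1/2} \mathsf{Y}_s \bs{T}_t^{-1/2}$ is a symmetric-matrix martingale for $s \leq t$ with increments $\widetilde{\mathsf{X}}^{(t)}_s = \bs{T}_t^{-1/2} \mathsf{X}_s \bs{T}_t^{-1/2}$, and the event $\{\mathsf{Y}_t \not\preceq u\bs{T}_t\}$ coincides with $\{\lambda_{\max}(\widetilde{\mathsf{Y}}^{(t)}_t) \geq u\}$. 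Define the predictable stopping time
\begin{align}
\tau_t \coloneqq \inf\{s \leq t : \|\text{Quad}_{s,t}(\mathsf{X})\|_2 > \sigma^2 \text{ or } R_{s,t} > \widetilde{L}\},
\end{align}
with the convention $\inf \emptyset = \infty$, and consider the stopped martingale $\widetilde{\mathsf{Y}}^{(t)}_{s \wedge \tau_t}$. By construction its increments satisfy $\lambda_{\max}(\widetilde{\mathsf{X}}^{(t)}_{s}) \leq \widetilde{L}$ for $s \leq \tau_t$, and its cumulative conditional variance is bounded in operator norm by $\sigma^2$.

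\textbf{Step 3 (Tropp's matrix Freedman).} Apply the matrix Freedman inequality (Tropp, Theorem 1.2) to the stopped martingale $\widetilde{\mathsf{Y}}^{(t)}_{s \wedge \tau_t}$ living in a space of $d \times d$ symmetric matrices (extending by zero blocks from $r \leq d$ to $d$): for every $u > 0$,
\begin{align}
\mpr\!\Big[\exists s \leq t:\ \lambda_{\max}\big(\widetilde{\mathsf{Y}}^{(t)}_{s \wedge \tau_t}\big) \geq u\Big] \leq d \exp\!\bigg(\frac{-u^2/2}{\sigma^2 + \widetilde{L}u/3}\bigg).
\end{align}
On the event $\{\|\text{Quad}_{t,t}\|_2 \leq \sigma^2,\ R_{t,t} \leq \widetilde{L}\}$ we have $\tau_t > t$, so $\widetilde{\mathsf{Y}}^{(t)}_{t \wedge \tau_t} = \widetilde{\mathsf{Y}}^{(t)}_t$; the $t$-th summand in Step 1 is therefore contained in the event on the left-hand side above and inherits the same bound. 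Summing over $t \leq d^3$ multiplies the bound by $d^3$, producing the advertised $d^4$ prefactor.

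\textbf{Main obstacle.} There is no serious analytical obstacle; the only subtlety is conceptual: the normalization $\bs{T}_t$ is time-dependent, which prevents a single application of matrix Freedman to the raw martingale $\mathsf{Y}_s$. The trick is to freeze $t$ and rescale by $\bs{T}_t^{-1/2}$ before applying Freedman, which is legitimate precisely because $\bs{T}_t$ is deterministic (so the rescaled sequence is still a martingale in $s$ and the conditional second moments are exactly $\text{Quad}_{s,t}(\mathsf{X})$). The factor $d^3$ from the union bound is harmless since $\mathcal{T}$ is assumed bounded by $d^3$; relaxing this to an arbitrary stopping time would require a more delicate maximal inequality, but it is not needed here.
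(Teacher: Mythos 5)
Your approach is sound and reaches the same bound by a slightly different route than the paper. Both proofs start with the union bound over deterministic $t \leq d^3$ and both rescale by the deterministic $\bs{T}_t^{-1/2}$ so that for each fixed $t$ one is dealing with a genuine symmetric-matrix martingale. Where you differ is in how the random conditions on $R_{t,t}$ and $\|\text{Quad}_{t,t}\|_2$ are handled: you stop the rescaled martingale at a predictable stopping time $\tau_t$ so that Tropp's matrix Freedman inequality (Theorem 1.2 in Tropp, ``Freedman's inequality for matrix martingales'') can be invoked as a black box, whereas the paper does not stop but instead embeds the indicator $\indic{R_{k,t} \le \widetilde{L}}$ directly into a trace-exponential supermartingale built from Tropp's cumulant bound (his Lemma 6.7) and carries out the optional-sampling and Chernoff optimization by hand. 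Your version is cleaner in that it reuses an off-the-shelf theorem; the paper's inline derivation is self-contained and makes explicit where the indicator enters. Both yield the $r \le d$ dimension factor times the $d^3$ union-bound factor, i.e.\ the advertised $d^4$ prefactor.

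The one gap, which is minor and fixable: you stop at $\tau_t$ rather than at $\tau_t - 1$, and you then assert that the stopped increments obey $\lambda_{\max}(\widetilde{\mathsf{X}}^{(t)}_s) \le \widetilde{L}$ ``for $s \le \tau_t$'' and that the stopped quadratic variation is $\le \sigma^2$ in operator norm. Both statements can fail precisely at $s = \tau_t$: by definition $\tau_t$ is the first time $R_{s,t} > \widetilde{L}$ or $\|\text{Quad}_{s,t}\|_2 > \sigma^2$, so the increment $\widetilde{\mathsf{X}}_{\tau_t}$ (and hence the last summand in $\text{Quad}_{\tau_t,t}$) is exactly the offending term and need not satisfy the uniform bound. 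Consequently the hypotheses of Tropp's Theorem 1.2 — almost-sure increment bound and almost-sure control of the predictable quadratic variation — are not satisfied for the process $\widetilde{\mathsf{Y}}^{(t)}_{s \wedge \tau_t}$. The fix is to stop at $\tau_t - 1$: since $R_{s,t}$ and $\text{Quad}_{s,t}$ are each $\mathcal{F}_{s-1}$-measurable (the $r_{j,t}$ being predictable), $\tau_t$ is a predictable stopping time and $\tau_t - 1$ is itself a stopping time, so $\widetilde{\mathsf{Y}}^{(t)}_{s \wedge (\tau_t - 1)}$ is still a martingale. For this process the increments are $\widetilde{\mathsf{X}}_s\,\indic{s < \tau_t}$, which are bounded by $\widetilde{L}$ almost surely, and the stopped quadratic variation is $\text{Quad}_{(t\wedge(\tau_t-1)),t}$ whose operator norm is $\le \sigma^2$ almost surely. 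After this adjustment the rest of your argument — Tropp's bound and the observation that $\tau_t > t$ on the event of interest, so the stopped and unstopped processes coincide — goes through unchanged.
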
 

\begin{proof}
We have
\eq{
 \mathcal{E}_{\text{target}}  \equiv  \big \{  \exists t \leq  \mathcal{T},   & \mathsf{Y}_t   \not \preceq u  \T_t  ~ \text{and}  ~    \max_{t \leq  \mathcal{T} }  \norm{ \text{Quad}_{t,t} (\mathsf{X}) }_2  \leq \sigma^2  ~ \text{and}  ~  \max_{t \leq  \mathcal{T} }R_{t,t} \leq \widetilde{L}   \big \}  \\
 & \subseteq \bigcup_{t = 0}^{\mathcal{T}}  \big \{ \exists k \leq t,      \mathsf{Y}_k  \not \preceq u  \T_t  ~ \text{and}  ~  \norm{ \text{Quad}_{t,t} (\mathsf{X})   }_2 \leq \sigma^2 ~ \text{and} ~ R_{t,t} \leq \widetilde{L} \big \}.
}
Therefore, we have
\eq{
\mpr \big[  \mathcal{E}_{\text{target}}   \big] \leq \sum_{n = 1}^{d^3} \mpr \left[  \exists k \leq t, ~   \mathsf{Y}_k  \not \preceq u  \T_t  ~~ \text{and} ~~  \norm{ \text{Quad}_{t,t}(\mathsf{X}) }_2 \leq \sigma^2 ~ \text{and} ~ R_{t,t} \leq \widetilde{L} \right]. \label{eq:unionbound}
}
In the following, we will bound the each term in the right hands-side of  \eqref{eq:unionbound}.   By \cite[
Lemma 6.7]{Tropp2010UserFriendlyTB}, we have for $k = 1, \cdots, t$ and $\theta > 0$,
\eq{
\indic{R_{k,t} \leq \widetilde{L}} \E_{k -1} \Big[ e^{ \frac{\theta}{L}   \T_t^{\frac{-1}{2}} \mathsf{X}_k  \T_t^{\frac{-1}{2}} } \Big] \preceq \indic{R_{k,t} \leq \widetilde{L}}  \exp \left( \frac{e^{\theta} - \theta - 1}{\widetilde{L}^2} \E_{k - 1} \left[  (  \T_t^{\frac{-1}{2}} \mathsf{X}_{k}  \T_t^{\frac{-1}{2}})^2  \right]    \right).  \label{eq:cgfbound}
}
For notational convenience call $g(\theta) \coloneqq  e^{\theta} - \theta - 1$.
We define a super martingale such that for $0< k \leq t$,
\eq{
S_k  \coloneqq \tr \left( \exp \left( \frac{\theta}{L} \T_t^{\frac{-1}{2}}  \mathsf{Y}_k \T_t^{\frac{-1}{2}} - \frac{g(\theta)}{\widetilde{L}^2}  \text{Quad}_{k,t}(\mathsf{X})  \right) \right) \indic{R_{k,t} \leq \widetilde{L}},
} 
with initial values   $R_{0,t} = 0$, $\mathsf{Y}_0 =  \text{Quad}_{0,t} = 0$, and thus,  $S_0 = r$.  Note that by \eqref{eq:cgfbound} and \cite[Corollary 3.3]{Tropp2010UserFriendlyTB}, we can show that $\E_{k-1} S_k \leq S_{k-1}$. We define a stopping time and an event
\eq{
& \mathcal{T}_{\text{hit}} \coloneqq   \{ k \geq 0 ~ \vert ~  \lambda_{\text{max}} ( \T_t^{\frac{-1}{2}}  \mathsf{Y}_k  \T_t^{\frac{-1}{2}} )  \geq u      \} \wedge t, \\
& \mathcal{E}_{\text{hit}} \coloneqq \{ \mathcal{T}_{\text{hit}} \leq t \} \cap \{   \norm{ \text{Quad}_{t,t}(\mathsf{X})}_2 \leq \sigma^2 ~ \text{and} ~ R_{t,t} \leq \widetilde{L}  \} .
}
We have
\eq{
\indic{ \mathcal{E}_{\text{hit}}} S_{\mathcal{T}_{\text{hit}}} \geq  \indic{ \mathcal{E}_{\text{hit}}} \exp \left( \frac{\theta}{\widetilde{L}} u - \frac{g(\theta)}{\widetilde{L}^2} \sigma^2   \right) & \labelrel\Rightarrow{eq:matrixfreedman} r \geq \mpr \left[   \mathcal{E}_{\text{hit}} \right]  \exp \left(\frac{\theta}{\widetilde{L}} u - \frac{g(\theta)}{\widetilde{L}^2} \sigma^2   \right) \\
& \Rightarrow r \inf_{\theta > 0}  \exp \left(- \theta  \frac{u}{\widetilde{L}}+ g(\theta) \frac{\sigma^2}{\widetilde{L}^2}   \right)  \geq \mpr \left[   \mathcal{E}_{\text{hit}} \right], 
}
where we use Doob's optional sampling theorem in \eqref{eq:matrixfreedman}.  Since the infimum is attained at $\theta > 0$ and the convex conjugate of $g(\theta)$ is $g^\star(\theta) = (\theta +1) \log(\theta +1) -  \theta$, we have
\eq{
\mpr \left[   \mathcal{E}_{\text{hit}} \right] \leq r \cdot \exp \left( - \frac{\sigma^2}{\widetilde{L}^2} g^\star \left (\frac{u \widetilde{L}}{\sigma^2} \right) \right) \leq r \cdot  \exp \left( \frac{- u^2/2}{\sigma^2 + \widetilde{L} u/3}  \right),
}
where we used  $g^\star(\theta) \geq \frac{u^2/2}{1 + u/3}$ in the last step. By $r \leq d$ and  \eqref{eq:unionbound}, we have the statement.
\end{proof}

\begin{proposition}
\label{prop:Tbadboundaux}
Let  $\mpr_0$ denote the conditional probability conditioned on $\W_0$. We consider  $\ru = \ceil{\log^{2.5} d}$,  and 
\begin{alignat}{5}
&\alpha \in [0, 0.5): ~
&& \frac{\rs}{r} \to \varphi,  \quad  
&&  \eta \asymp \tfrac{1}{d r^{\alpha} \log^{20}(1+ d/\rs)},  \quad  
&&  \c_d = \frac{1}{\log^{3.5} d}, \quad
&& C_{\text{ub}} = 12 \left( 1 + \tfrac{1}{\sqrt{\varphi}} \right)^2 \\[0.5em]
& \alpha > 0.5: ~
&& \rs  \asymp 1, \quad  
&&\eta \asymp \tfrac{1}{d\,  \ru^{4 \alpha + 3} \log^{18} d}, \quad  
&& \c_d = \frac{1}{\ru \log^{2.5} d}, \quad
&& C_{\text{ub}} = 2^{\alpha} 30 \ru.
\end{alignat}
For $\alpha \in [0, 0.5)$,  we define $\mathcal{T} \coloneqq  \mathcal{T}_{\text{bad}} \wedge \frac{1}{2\lr}  \left( \rs   \big(1 -  \log^{\frac{- 1}{2}} \!\! d   \big) \wedge r \right)^{\alpha}   \log \left( \frac{d\log^{1.5} d}{\rs} \right).$ We have for $d \geq \Omega_{\alpha, \varphi, \beta}(1)$
\eq{
& \mpr_0  \left[ \sup_{t \leq \mathcal{T}}  \norm{ \T_{t}^{^{\frac{-1}{2}}} \unu{t} \T_{t}^{^{\frac{-1}{2}}}  }_2 \vee r^{\frac{\alpha}{2}} \norm{ \bs{\Lambda}^{\frac{1}{2}} \T_{t}^{^{\frac{-1}{2}}} \unu{t} \T_{t}^{^{\frac{-1}{2}}}  \bs{\Lambda}^{\frac{1}{2}}  }_2 \geq \c_d  r^{\frac{- \alpha}{2}} ~~ \text{and} ~~ \mathcal{G}_{\text{init}} \right] \\
& \hspace{25em} \leq 20  d^4 \exp ( - \log^2 d).
}
For $\alpha> 0.5$, we set $\mathcal{T} \coloneqq  \mathcal{T}_{\text{bad}} \wedge \frac{1}{2\lr}  \rs^{\alpha}   \log \left( \frac{d\log^{1.5} d}{\rs} \right)$. We have for $d\geq \Omega_{\alpha, \rs}(1)$
\eq{
& \mpr_0 \left[ \sup_{t \leq \mathcal{T}}  \norm{ \T_{t}^{^{\frac{-1}{2}}} \unu{t} \T_{t}^{^{\frac{-1}{2}}}  }_2 \vee \ru^{\frac{\alpha}{2}} \norm{ \bs{\Lambda}_{11}^{\frac{1}{2}} \T_{t}^{^{\frac{-1}{2}}} \unu{t} \T_{t}^{^{\frac{-1}{2}}}  \bs{\Lambda}_{11}^{\frac{1}{2}}  }_2 \geq \c_d  \ru^{\frac{- \alpha}{2}} ~~ \text{and} ~~  \mathcal{G}_{\text{init}}   \right] \\
& \hspace{25em} \leq 20  d^4 \exp ( - \log^2 d).
}
 \end{proposition}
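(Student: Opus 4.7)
\textbf{Proof proposal for Proposition \ref{prop:Tbadboundaux}.}
My plan is to decompose the noise martingale $\unu{t} = \frac{\eta/2}{\sqrt{\rs}}\sum_{j=1}^{t}\upnu_{j}$ into the five components appearing in~\eqref{eq:noiserecall}, replace each component by its clipped counterpart $\mathsf{A}_{t+1},\ldots,\mathsf{F}_{t+1}$ from~\eqref{eq:clippedversions} with the tail cutoff $L = C_{0}\log d$ for a sufficiently large absolute constant $C_{0}$, and apply the matrix Freedman inequality (Proposition~\ref{prop:friedmanineq}) to each clipped martingale with the quadratic-variation and operator-norm estimates supplied by Corollary~\ref{cor:boundsqv}. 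I will do this twice --- once with $\mathsf{\Gamma}_{1}=\bs{I}$ (to bound $\|\T_{t}^{-1/2}\unu{t}\T_{t}^{-1/2}\|_{2}$) and once with $\mathsf{\Gamma}_{2}=\bs{\Lambda}^{1/2}$ or $\bs{\Lambda}_{11}^{1/2}$ (to bound the $\bs{\Lambda}$-weighted norm) --- and take a union bound over the five components, the two signs (Freedman gives a one-sided tail), and the five $\ell$-choices.

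The first step is to verify the three hypotheses of Corollary~\ref{cor:boundsqv} on $\{t\le\mathcal{T}\}\cap\mathcal{G}_{\text{init}}$. By definition of $\mathcal{T}_{\text{bad}}$ and Lemma~\ref{lem:ubsgo} we have $\T_{t}\succeq\frac{\c_{d}\rs}{d}\bs{I}_{\rk}$ and $\|\oG{t}\|_{2}\le 1.5$; Proposition~\ref{prop:boundstoppedprocess} then gives $\sG_{t}\preceq\oG{t}-\T_{t}+\unu{t}$, and on the complement of the event we are trying to rule out, the noise term is controlled, so that the second and third bullet conditions of Corollary~\ref{cor:boundsqv} are inherited from the corresponding bounds for $\oG{t}$ and $\lr\sum_{j}\oG{j-1}$ established in the second/third items of Proposition~\ref{prop:goodeventres}, with the constant $C_{\text{ub}}$ as declared in the statement. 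This puts us in position to use Corollary~\ref{cor:boundsqv}(a)--(b) verbatim.

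Now fix a component $\mathsf{X}\in\{\tfrac{\eta/2}{\sqrt{\rs}}\mathsf{A},\tfrac{\eta^{2}/16}{\rs}\mathsf{B},\tfrac{\eta^{2}/16}{\rs}\mathsf{C},\tfrac{\eta^{3}/32}{\rs^{3/2}}\mathsf{D},\tfrac{\eta^{4}/256}{\rs^{2}}\mathsf{F}\}$ and a choice $\ell\in\{1,2\}$, and apply Proposition~\ref{prop:friedmanineq} with the quadratic variation bound $\sigma^{2}$ and uniform cutoff $\widetilde{L}$ read off from Corollary~\ref{cor:boundsqv}. The target deviation is $u=\c_{d}\rk^{-\alpha/2}$ for $\ell=1$ and $u=\c_{d}^{2}\rk^{-\alpha}$ for $\ell=2$. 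The Freedman exponent $\frac{u^{2}/2}{\sigma^{2}+\widetilde{L}u/3}$ must dominate $\log^{2} d$. The binding component is the first-order term $\tfrac{\eta/2}{\sqrt{\rs}}\mathsf{A}$, for which Corollary~\ref{cor:boundsqv} gives
\begin{align}
\sigma^{2}\;\lesssim\;\tfrac{\log d}{\c_{d}^{2}}\tfrac{\normL\,\rk^{\mathsf{p}_{\ell}}\rks^{\alpha}}{\rs^{3/2}}\,C_{\text{ub}}\,\eta d,\qquad \widetilde{L}\;\lesssim\;\tfrac{L^{2}\sqrt{C_{\text{ub}}}}{\c_{d}}\tfrac{\rk^{\mathsf{p}_{\ell}}}{\rs}\eta\sqrt{d}.
\end{align}
Inserting $\normL^{2}\asymp r^{1-2\alpha}$, $\rk=r$, $\rks^{\alpha}=r^{\alpha}$, $\rs\asymp r$ in the heavy-tailed regime and $\normL\asymp 1$, $\rk=\ru$, $\rks=\rs\asymp 1$ in the light-tailed regime, the condition $u^{2}/\sigma^{2}\gg\log^{2}d$ reduces to $\eta\ll 1/(d\,r^{\alpha}\,\mathrm{polylog}(d))$ and $\eta\ll 1/(d\,\ru^{4\alpha+3}\,\mathrm{polylog}(d))$ respectively, which are precisely the learning rates assumed. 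The subleading condition $u\gg\widetilde{L}\log^{2}d$ reduces similarly. The four higher-order noise components contribute quadratic variations proportional to $\eta^{3}$ or $\eta^{7}$, and uniform cutoffs of order $\eta^{2}d$ or higher, so the Bernstein inequality for them is strictly easier under the same step-size constraint.

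Finally, I absorb the time union bound: Proposition~\ref{prop:friedmanineq} already produces a $d^{4}$ prefactor, and the five clipping events fail on $\{t\le d^{3}\}$ with total probability at most $10 d^{3}e^{-L/(4\sqrt{2}e)}$ by Proposition~\ref{prop:noisevarbounds}, which is $o(d^{4}e^{-\log^{2}d})$ once $L=C_{0}\log d$ with $C_{0}$ large. On the intersection of the clipping events the clipped martingale agrees with the true one, so the $\sup_{t\le\mathcal{T}}$ bound on $\unu{t}$ follows by triangle inequality across the five components and both choices of $\ell$, yielding the claimed $20 d^{4}e^{-\log^{2} d}$ tail in both regimes. \emph{The main obstacle} is the bookkeeping in the paragraph above: all the $\c_{d}$, $C_{\text{ub}}$, $\normL$, $\rk^{\mathsf{p}_{\ell}}$, $\rks^{\alpha}$, and $\rs$ factors in $\sigma^{2}$ and $\widetilde{L}$ must be carefully tracked against the polylogarithmic slack in $\eta$, and the $\ell=2$ case is the tightest since the target $u=\c_{d}^{2}\rk^{-\alpha}$ is smaller while $\rk^{\mathsf{p}_{2}}=\rk^{1-\alpha}$ in the quadratic variation is larger.
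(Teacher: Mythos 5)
Your approach mirrors the paper's: decompose $\unu{t}$ into the five clipped components of~\eqref{eq:clippedversions}, verify the hypotheses of Corollary~\ref{cor:boundsqv}, apply the matrix Freedman bound of Proposition~\ref{prop:friedmanineq}, and union over components, weights $\ell\in\{1,2\}$, and time. The structure is right, but there is a concrete error in the choice of truncation level.

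You set the clip threshold $L = C_{0}\log d$ with a constant $C_{0}$. The paper sets $L = \log^{2}d$, and this is not a cosmetic choice. By Proposition~\ref{prop:noisevarbounds}, each clipping event at a single time step fails with conditional probability of order $e^{-L/(4\sqrt{2}e)}$, so over $d^{3}$ steps the total clipping-failure probability is of order $d^{3}e^{-L/(4\sqrt{2}e)}$. With your $L = C_{0}\log d$ this is $d^{3 - C_{0}/(4\sqrt{2}e)}$, which is only \emph{polynomially} small in $d$. Your claim that this is "$o(d^{4}e^{-\log^{2}d})$ once $C_{0}$ is large" is therefore false: $d^{4}e^{-\log^{2}d}$ decays super-polynomially, so no constant $C_{0}$ can make a polynomial tail negligible against it. To bring the clipping-failure probability into the same (or better) regime as the Freedman tail one must let $L$ grow at least like $\log^{2}d$, which is exactly what the paper does; this growth still leaves enough polylogarithmic slack in the step-size choices ($\eta \sim 1/(d\,r^{\alpha}\log^{20}(1+d/\rs))$ resp.\ $\eta \sim 1/(d\,\ru^{4\alpha+3}\log^{18}d)$) to absorb the resulting $L^{2},\dots,L^{8}$ factors in $\widetilde{L}$ and $\sigma^{2}$ from Corollary~\ref{cor:boundsqv}.

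A smaller point worth tightening: the statement that "on the intersection of the clipping events the clipped martingale agrees with the true one" is imprecise. Each clipped difference (e.g.\ $\mathsf{A}_{t+1}$) subtracts its own conditional mean $\E_t[\,\cdot\,\indic{\mathcal{A}_{t+1}}]$ rather than the true conditional mean, so even on the intersection of the clipping events there is a deterministic bias term that should be estimated and shown to be negligible, or else the decomposition of $\unu{t}$ into $\sum_{j}\mathsf{X}_{j}$ needs an extra triangle-inequality step absorbing the cumulative bias into the target deviation.
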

 
\begin{proof}
For notational convenience, we introduce $\mathcal{X} \coloneqq \big \{  \tfrac{\eta/2}{\sqrt{\rs}} \mathsf{A},    \tfrac{\eta^2/16}{\rs} \mathsf{B},    \tfrac{\eta^2/16}{\rs} \mathsf{C},    \tfrac{\eta^3/32}{\rs^{3/2}} \mathsf{D},  \tfrac{\eta^4/256}{\rs^2}  \mathsf{F} \big \}$.  For both cases, we will set the clip threshold to $L = \log^2 d$.  We introduce the notation  $R_{t}^{(\ell)} \coloneqq \max_{\mathsf{X} \in\mathcal{X} } \max_{j \leq t} r_{j,t}^{(\ell)}( \mathsf{X})$ and  $\norm{ \text{Quad}^{(\ell)}_{t}  }_2 \coloneqq \max_{\mathsf{X} \in\mathcal{X} }  \norm{\text{Quad}^{(\ell)}_{t,t}(\mathsf{X}) }_2$ for $\ell = 1,2$.

\smallskip
For $\alpha \in [0,0.5)$, we can write for all $\mathsf{X} \in \mathcal{X} $,
\eq{
 \mpr_0 & \left[ \sup_{t \leq \mathcal{T}}  \norm{ \T_{t}^{\frac{-1}{2}} \unu{t}\T_{t}^{\frac{-1}{2}}  }_2 \vee r^{\frac{\alpha}{2}} \norm{ \bs{\Lambda}^{\frac{1}{2}} \T_{t}^{\frac{-1}{2}} \unu{t} \T_{t}^{\frac{-1}{2}}  \bs{\Lambda}^{\frac{1}{2}}  }_2 \geq \c_d  r^{\frac{- \alpha}{2}} ~~ \text{and} ~~ \mathcal{G}_{\text{init}}\right]	\\
&  \leq \sum_{\mathsf{X} \in \mathcal{X} } \mpr_0 \Big[ \sup_{t \leq \mathcal{T}}  \Big \lVert \T_{t}^{\frac{-1}{2}} \big( \sum_{j \leq t}  \mathsf{X}_j \big) \T_{t}^{\frac{-1}{2}}  \Big \rVert_2   \geq \frac{\c_d  r^{\frac{- \alpha}{2}} }{10} ~~ \text{and} ~~  \mathcal{G}_{\text{init}} \Big]	 \\
& + \sum_{\mathsf{X} \in \mathcal{X} } \mpr_0 \Big[ \sup_{t \leq \mathcal{T}}      \Big \lVert  \bs{\Lambda}^{\frac{1}{2}} \T_{t}^{\frac{-1}{2}} \big( \sum_{j \leq t}  \mathsf{X}_j \big) \T_{t}^{\frac{-1}{2}}  \bs{\Lambda}^{\frac{1}{2}} \Big \rVert_2   \geq \frac{\c_d r^{- \alpha} }{10}  ~~ \text{and} ~~ \mathcal{G}_{\text{init}}   \Big].	
}
By Propositions \ref{prop:boundstoppedprocess} and \ref{prop:goodeventres} and Corollary \ref{cor:boundsqv},  $\mathcal{G}_{\text{init}}$ implies the events
\begin{alignat}{4}
& \mathcal{E}_{\mathrm{ht},1} \equiv \bigg \lbrace &&\max_{t \leq \mathcal{T}}  \norm{ \text{Quad}^{(1)}_{t}}_2 \leq \frac{O_{\alpha,\beta, \varphi} \left( r^{- \alpha} \right)}{\log^{12} d} ~~&& \text{and} ~~   
&&\max_{t \leq \mathcal{T}}  R_{t}^{(1)}   \leq  \frac{O_{\alpha,\beta, \varphi} \left( r^{- \alpha} \right)}{\sqrt{d} \log^{12.5} d}  \bigg \rbrace     \\[0.25em]
&  \mathcal{E}_{\mathrm{ht},2} \equiv \bigg \lbrace && \max_{t \leq \mathcal{T}}  \norm{ \text{Quad}^{(2)}_{t}}_2 \leq \frac{O_{\alpha,\beta, \varphi} \left( r^{- 2\alpha} \right)}{\log^{12} d} ~~&& \text{and} ~~ 
&& \max_{t \leq \mathcal{T}}  R_{t}^{(2)}    \leq  \frac{O_{\alpha,\beta, \varphi} \left( r^{- 2 \alpha} \right)}{\sqrt{d} \log^{12.5} d}  \bigg \rbrace   . 
\end{alignat}
Therefore, by using Proposition \ref{prop:friedmanineq}
\eq{
  \mpr_0 \Big[ \sup_{t \leq \mathcal{T}}  \Big \lVert \T_{t}^{\frac{-1}{2}} \big( \sum_{j \leq t}  \mathsf{X}_j \big) \T_{t}^{\frac{-1}{2}}  \Big \rVert_2 &  \geq \frac{\c_d  r^{\frac{- \alpha}{2}} }{10} ~~ \text{and} ~~  \mathcal{G}_{\text{init}} \Big]   \\
 & \leq  \mpr_0 \Big[ \sup_{t \leq \mathcal{T}}  \Big \lVert \T_{t}^{\frac{-1}{2}} \big( \sum_{j \leq t}  \mathsf{X}_j \big) \T_{t}^{\frac{-1}{2}}  \Big \rVert_2   \geq \frac{\c_d  r^{\frac{- \alpha}{2}} }{10}  ~ \text{and}  ~   \mathcal{E}_{\mathrm{ht},1} \Big]\\
 & \leq 2 d^4 \exp \left( - \log^2 d \right). 
}
Similarly,
\eq{
  & \mpr_0 \Big[ \sup_{t \leq \mathcal{T}}  \Big \lVert  \bs{\Lambda}^{\frac{1}{2}} \T_{t}^{\frac{-1}{2}} \big( \sum_{j \leq t}  \mathsf{X}_j \big) \T_{t}^{\frac{-1}{2}}   \bs{\Lambda}^{\frac{1}{2}} \Big \rVert_2   \geq \frac{\c_d  r^{-\alpha } }{10}  ~~ \text{and} ~~  \mathcal{G}_{\text{init}} \Big]   \\
 & \quad  \leq  \mpr_0 \Big[ \sup_{t \leq \mathcal{T}}  \Big \lVert  \bs{\Lambda}^{\frac{1}{2}} \T_{t}^{\frac{-1}{2}} \big( \sum_{j \leq t}  \mathsf{X}_j \big) \T_{t}^{\frac{-1}{2}}  \bs{\Lambda}^{\frac{1}{2}} \Big \rVert_2   \geq \frac{\c_d  r^{- \alpha}}{10} ~ \text{and}  ~  \mathcal{E}_{\mathrm{ht},2} \Big]\\
 &  \quad  \leq 2  d^4 \exp \left( - \log^2 d \right). 
}
For $\alpha > 0.5$, we can write for all $\mathsf{X} \in \mathcal{X} $, 
\eq{
& \mpr_0 \left[ \sup_{t \leq \mathcal{T}}  \norm{ \T_{t}^{\frac{-1}{2}} \unu{t}\T_{t}^{\frac{-1}{2}}  }_2 \vee \ru^{\frac{\alpha}{2}} \norm{ \bs{\Lambda}_{11}^{\frac{1}{2}} \T_{t}^{\frac{-1}{2}} \unu{t} \T_{t}^{\frac{-1}{2}}  \bs{\Lambda}_{11}^{\frac{1}{2}}  }_2 \geq \c_d  \ru^{\frac{- \alpha}{2}} ~~ \text{and} ~~ \mathcal{G}_{\text{init}}\right]	\\
&  \leq \sum_{\mathsf{X} \in \mathcal{X} } \mpr_0 \Big[ \sup_{t \leq \mathcal{T}}  \Big \lVert \T_{t}^{\frac{-1}{2}} \big( \sum_{j \leq t}  \mathsf{X}_j \big) \T_{t}^{\frac{-1}{2}}  \Big \rVert_2   \geq \frac{\c_d  \ru^{\frac{- \alpha}{2}} }{10} ~~ \text{and} ~~  \mathcal{G}_{\text{init}} \Big]	 \\
& + \sum_{\mathsf{X} \in \mathcal{X} } \mpr_0 \Big[ \sup_{t \leq \mathcal{T}}      \Big \lVert  \bs{\Lambda}_{11}^{\frac{1}{2}} \T_{t}^{^{\frac{-1}{2}}} \big( \sum_{j  \leq t}  \mathsf{X}_j  \big) \T_{t}^{\frac{-1}{2}}  \bs{\Lambda}_{11}^{\frac{1}{2}} \Big \rVert_2   \geq \frac{\c_d \ru^{- \alpha} }{10}  ~~ \text{and} ~~ \mathcal{G}_{\text{init}}   \Big] 
}
By Propositions \ref{prop:boundstoppedprocess} and \ref{prop:goodeventres} and Corollary \ref{cor:boundsqv},  $\mathcal{G}_{\text{init}}$ implies  the events
\begin{alignat}{4}
& \mathcal{E}_{\mathrm{lt},1} \equiv \bigg \lbrace &&\max_{t \leq \mathcal{T}}  \norm{ \text{Quad}^{(1)}_{t}}_2 \leq \frac{O_{\alpha, \rs}( \ru^{- 4 \alpha})}{\log^{12} d} ~~&& \text{and} ~~   
&& \max_{t \leq \mathcal{T}}   R_{t}^{(1)}     \leq  \frac{ O_{\rs}( \ru^{- 4 \alpha - 1}) }{\sqrt{d} \log^{11.5} d}   \bigg \rbrace  \\[0.25em]
&  \mathcal{E}_{\mathrm{lt},2} \equiv \bigg \lbrace && \max_{t \leq \mathcal{T}}  \norm{ \text{Quad}^{(2)}_{t}}_2 \leq \frac{O_{\alpha, \rs}(\ru^{- 4 \alpha - (\alpha \wedge 1)}  )}{\log^{12} d  \log^{-2} \ru}  ~~&& \text{and} ~~ 
&& \max_{t \leq \mathcal{T}}   R_{t}^{(2)}    \leq   \frac{ O_{\rs}( \ru^{- 4 \alpha - 1}  \ru^{- (\alpha \wedge 1)})  }{\sqrt{d} \log^{11.5} d  \log^{-2} \ru}   \bigg \rbrace . 
\end{alignat}
Therefore, by using Proposition \ref{prop:friedmanineq}
\eq{
 \mpr_0 \Big[ \sup_{t \leq \mathcal{T}}  \Big \lVert \T_{t}^{\frac{-1}{2}} \big( \sum_{j \leq t}  \mathsf{X}_j \big) \T_{t}^{\frac{-1}{2}}  \Big \rVert_2 &   \geq \frac{\c_d  \ru^{\frac{- \alpha}{2}} }{10} ~~ \text{and} ~~  \mathcal{G}_{\text{init}} \Big]    \\
 & \leq  \mpr_0 \Big[ \sup_{n \leq \mathcal{T}}  \Big \lVert \T_{t}^{\frac{-1}{2}} \big( \sum_{j \leq t}  \mathsf{X}_j \big) \T_{t}^{\frac{-1}{2}}  \Big \rVert_2   \geq \frac{\c_d  \ru^{\frac{- \alpha}{2}} }{10} ~ \text{and}  ~  \mathcal{E}_{\mathrm{lt},1}      \Big]\\
 &  \leq  2 d^4 \exp \left( - \log^2 d \right). 
}
Similarly,
\eq{
   & \mpr_0 \Big[ \sup_{t \leq \mathcal{T}}  \Big \lVert  \bs{\Lambda}_{11}^{\frac{1}{2}} \T_{t}^{\frac{-1}{2}} \big( \sum_{j \leq t}  \mathsf{X}_j \big) \T_{t}^{\frac{-1}{2}}   \bs{\Lambda}_{11}^{\frac{1}{2}} \Big \rVert_2   \geq \frac{\c_d  \ru^{-\alpha } }{10} ~~ \text{and} ~~  \mathcal{G}_{\text{init}} \Big]  \\
 & \quad \leq  \mpr_0 \Big[ \sup_{t \leq \mathcal{T}}  \Big \lVert  \bs{\Lambda}_{11}^{\frac{1}{2}} \T_{t}^{\frac{-1}{2}} \big( \sum_{j \leq t}  \mathsf{X}_j \big) \T_{t}^{\frac{-1}{2}}   \bs{\Lambda}_{11}^{\frac{1}{2}} \Big \rVert_2   \geq \frac{\c_d  \ru^{- \alpha}}{10} ~~ \text{and} ~~ \mathcal{E}_{\mathrm{lt},2}   \Big]\\
 & \quad  \leq 2 d^4 \exp \left( - \log^2 d \right). 
}
\end{proof} 

\begin{corollary}
\label{cor:Tbadbound}
Consider  $\rks =   \{ r_{\star} = 
\floor{  \rs \big(1 -  \log^{\nicefrac{-1}{2}} d \big) \wedge r }, r_{u_\star} =  \rs  \}$ and the parameters in Proposition   \ref{prop:Tbadboundaux}. We have
\eq{
\mpr_0 \left[  \mathcal{T}_{\text{bad}} \geq   \tfrac{1}{2\lr} \rks  \log \left( \tfrac{d\log^{1.5} d}{\rs} \right)   ~ \text{and} ~   \mathcal{G}_{\text{init}} \right] \geq1 - 20 d^4 \exp(- \log^2 d).
}
\end{corollary}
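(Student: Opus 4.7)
The plan is to show that none of the three events defining $\mathcal{T}_{\text{bad}}$ can be triggered before the target time $T_\star := \tfrac{1}{2\lr} \rks^\alpha \log(\tfrac{d\log^{1.5} d}{\rs})$, by handling each of the three constituent stopping times separately and combining them by a union bound on the small failure probabilities.

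First, the reference-sequence term is immediate: by the first item of Lemma \ref{lem:ubsgo}, the sequence $\{\T_t\}_{t\in\N}$ satisfies $\|\T_t\|_2 \le 1.2\c_d$ for all $t$, so the third stopping time in the definition of $\mathcal{T}_{\text{bad}}$ is identically $+\infty$ and never binds. Second, for $\mathcal{T}_{\text{bounded}}$, the upper bounding sequence $\oG{t}$ is analyzed in Proposition \ref{prop:goodeventres}: the first item there shows $\oG{t} \preceq 1.1\, \bs{I}_{\rk}$ on $\mathcal{G}_{\text{init}}$ for the relevant time window, so $\|\oG{t}\|_2 < 1.5$ up to $T_\star$. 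For the lower-bounding sequence $\uG{t}$, the first item of Proposition \ref{prop:lbsystemanalysis} gives $\uG{t} \succeq \Omega(-\log^{-1/2}d)\bs{I}_{\rk}$ on the same time horizon, and the construction of \eqref{eq:lbsystem} combined with Proposition \ref{prop:monotoneiteration} keeps $\uG{t}$ bounded above by $\oG{t}$ (in matrix order), so $\|\uG{t}\|_2 < 1.5$ as well. Consequently $\mathcal{T}_{\text{bounded}} \ge T_\star$ deterministically on $\mathcal{G}_{\text{init}}$.

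Third, and this is the only probabilistic piece, we must control $\mathcal{T}_{\text{noise}}$. Here we directly invoke Proposition \ref{prop:Tbadboundaux} which, when applied with the stopping time $\mathcal{T} = \mathcal{T}_{\text{bad}} \wedge T_\star$, bounds the two noise operator norms $\|\T_t^{-1/2}\unu{t}\T_t^{-1/2}\|_2$ and the $\bs{\Lambda}$-weighted analogue by $\c_d\, \rk^{-\alpha/2}$ (respectively $\c_d^2 \rk^{-\alpha}$) uniformly on $\mathcal{T}$, with failure probability at most $20\,d^4 \exp(-\log^2 d)$. These are precisely the two inequalities defining the noise event $\mathcal{E}_t$; thus on the intersection of this high-probability event with $\mathcal{G}_{\text{init}}$, we have $\mathcal{E}_t$ holding for every $t \le \mathcal{T}$, which forces $\mathcal{T}_{\text{noise}} > \mathcal{T}$.

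Combining the three bounds, on the joint good event we have $\mathcal{T}_{\text{bad}} = \mathcal{T}_{\text{noise}} \wedge \mathcal{T}_{\text{bounded}} \wedge \{t:\|\T_t\|_2 > 1.2\c_d\} \ge T_\star$, giving the claimed bound. The only delicate point is the self-referential character of the argument: the bounds in Proposition \ref{prop:Tbadboundaux} are stated conditional on remaining below $\mathcal{T}_{\text{bad}}$, while we use them to prove $\mathcal{T}_{\text{bad}} \ge T_\star$. This is resolved by the standard stopped-process trick already built into the proposition (the sup is over $t \le \mathcal{T}_{\text{bad}} \wedge T_\star$), so an elementary bootstrap closes the loop.
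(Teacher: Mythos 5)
Your proof follows essentially the same three-part structure as the paper's: the reference-sequence term is eliminated via Lemma \ref{lem:ubsgo}, the boundedness term via the first items of Propositions \ref{prop:lbsystemanalysis} and \ref{prop:goodeventres}, the noise term via Proposition \ref{prop:Tbadboundaux}, and the circular definition of $\mathcal{T}_{\text{bad}}$ is closed by the bootstrap you describe. The overall argument is sound and matches what the paper does.

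One step deserves a correction. You justify $\|\uG{t}\|_2 < 1.5$ by asserting that ``the construction of \eqref{eq:lbsystem} combined with Proposition \ref{prop:monotoneiteration} keeps $\uG{t}$ bounded above by $\oG{t}$.'' Proposition \ref{prop:monotoneiteration} establishes monotonicity with respect to the \emph{initialization of a fixed update map}; it does not by itself compare the trajectories of \eqref{eq:lbsystem} and \eqref{eq:ubsystem}, which use different drifts ($\Llf$ vs.\ $\Luf$) and different effective step sizes. The upper bound you actually need for $\uG{t}$ comes more directly from the internal structure of \eqref{eq:lbsystem}: the iteration for $\uV{t}$ has the stable fixed point $\hL = \sqrt{\Llf^2/(1+2\c_d)^2 - \tilde C \lr \normL^2 \rs \Llf} \preceq \bs{\Lambda}$, and since $\uV{0} \preceq \hL$ for the small random initialization on $\mathcal{G}_{\text{init}}$, Proposition \ref{prop:monotoneiteration} (applied with the fixed point as the larger initialization) keeps $\uV{t} \preceq \hL$ for all $t$, whence $\uG{t} \preceq \frac{1}{2}\Lls^{-1/2}\bigl(\hL + \tfrac{\Llf}{1+2\c_d}\bigr)\Lls^{-1/2} \preceq \bs{I}_{\rk} \prec 1.5\,\bs{I}_{\rk}$. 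This is a small attribution error rather than a break in the argument, since the intermediate claim you state is not needed and the conclusion is correct. Otherwise the proof is a fine, somewhat more explicit version of the paper's terse three-line argument.
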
 
 
\begin{proof}
By using the first items in Proposition \ref{prop:lbsystemanalysis} and \ref{prop:goodeventres} ,and Lemma \ref{lem:ubsgo},  $\mathcal{G}_{\text{init}}$ implies that
\eq{
\mathcal{T}_{\text{bad}} \geq  \mathcal{T}_{\text{noise}} \wedge    \tfrac{1}{2\lr} \rks  \log \left( \tfrac{d\log^{1.5} d}{\rs} \right).
}
On the other hand,  within the (negation) of the events given in Proposition \ref{prop:Tbadboundaux}, we have
\eq{
\mathcal{T}_{\text{noise}} > \mathcal{T}_{\text{bad}}   \wedge    \tfrac{1}{2\lr} \rks  \log \left( \tfrac{d\log^{1.5} d}{\rs} \right).
}
Therefore, the statement follows.
\end{proof}

\subsection{Stability near minima}
In this section, we will establish that given \eqref{eq:sgd2} is near global minimum it will stay near global minimum.  For the statement, we (re)introduce the block matrix notation: $\rks =   \{ r_{\star} = 
\floor{  \rs \big(1 -  \log^{\nicefrac{-1}{8}} d \big) \wedge r }, r_{u_\star} =  \rs  \}$, we have 
\eq{
\G_t = \begin{bmatrix}
\G_{t,11} &  \G_{t,12} \\
\G_{t,12}^\top & \G_{t,22}
\end{bmatrix} ~~
\bs{\nu}_t = \begin{bmatrix}
\bs{\nu}_{t,11} &  \bs{\nu}_{t,12} \\
\bs{\nu}_{t,12}^\top &  \bs{\nu}_{t,22}
\end{bmatrix} ~~
 \bs{\Lambda} = \begin{bmatrix}
 \bs{\Lambda}_{11} & 0 \\
0 &   \bs{\Lambda}_{22}
\end{bmatrix}, ~~ 
 \bs{\Lambda}_{\ell_j}= \begin{bmatrix}
 \bs{\Lambda}_{\ell_j, 11} & 0 \\
0 &   \bs{\Lambda}_{\ell_j, 22}
\end{bmatrix}, \
}
where   $\bs{G}_{t,11} ,   \bs{\nu}_{t,11},  \bs{\Lambda}_{11},  \bs{\Lambda}_{\ell_j, 11}  \in \R^{\rks  \times \rks }$ and $ \bs{\Lambda}_{\ell_j}$ is introduced   \eqref{eq:Llfs}.  We define the following iterations:
\begin{itemize}[leftmargin=*]
\item Given $ \uG{0} = \bs{I}_{\rks} - \frac{1}{\log d }$ diagonal and $\uV{t}  =    2 \Llstop^{\frac{1}{2}} \uG{t}  \Llstop^{\frac{1}{2}} -   \Llftop$,  we define
\eq{
 \uV{t+1} & =    \uV{t}  \left(   \bs{I}_{\rks}  +  \frac{\lr}{1 - 1.1 \lr}  \uV{t}    \right)^{-1} \\
 & \qquad +   \frac{\lr}{1 - 1.1 \lr} \left(  \Llftop^2  - \frac{8.1}{\rks^{\alpha} \log d}   \Llstop -   \frac{O(1)}{\log^2 d}   \Llstop^{2}     \right).
}
 \item For  $\unu{0} = 0$,  $\unu{t+1} =   \unu{t}   +  \tfrac{\eta/2}{\sqrt{\rs}}   \bs{\nu}_{t+1,11}$.
\item We define a sequence of events $\{  \mathcal{E}_t \}_{t \geq 0}$
\eq{
\mathcal{E}_{t}  \coloneqq   
  \left \{      \frac{ -  \rks^{-\frac{\alpha}{2}}}{\log^2 d} \bs{I}_{\rks}   \preceq   \unu{t}  \preceq   \frac{\rks^{-\frac{\alpha}{2}} }{\log^2 d}   \bs{I}_{\rks}   \right \}  \cap  \left \{        \frac{- \rks^{-  \alpha} }{\log^4 d}   \bs{I}_{\rks}    \preceq  \bs{\Lambda}_{11}^{\frac{1}{2}}  \unu{t} \bs{\Lambda}_{11}^{\frac{1}{2}}  \preceq    \frac{\rks^{- \alpha}  }{\log^4 d} \bs{I}_{\rks}  \right \},  
}
We define the stopping times
\eq{
\mathcal{T}_{\text{noise}}(\omega)   \coloneqq \inf \left \{  t \geq 0  ~ \middle \vert ~  \omega \not \in \mathcal{E}_t  \right \} \wedge d^3, \quad 
 \mathcal{T}_{\mathrm{bounded}} \coloneqq \inf \left \{  t \geq 0 ~ \middle \vert ~  \G_t  \not \succeq  \bs{I}_{\rks}  - \frac{2}{\log d }    \bs{I}_{\rks}  \right \}.
} 
and  $\mathcal{T}_{\text{stable}} \coloneqq   \mathcal{T}_{\text{noise}}  \wedge  \mathcal{T}_{\text{bounded}} $.
\end{itemize}
We have the following statement:
\begin{proposition}
\label{prop:stable}
Consider  the parameters in Proposition   \ref{prop:Tbadboundaux}. 
\eqref{eq:sgd2} guarantees that if $\G_{0,11} \succeq \bs{I}_{\rks} - \frac{1}{\log d },$  we have   $\G_{t,11} \succeq  \bs{I}_{\rks}  - \frac{2}{\log d }$ for $t \leq  \frac{\rks^{\alpha} \log^2 d}{\eta}$ with probability $1 - d^4 \exp(- \log^2 d)$.
\end{proposition}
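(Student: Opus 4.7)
}
The plan is to specialize the monotone-comparison framework of Section~\ref{sec:defs} to an initial condition near the global minimum, lower-bounding only the top-left $\rks \times \rks$ block $\G_{t,11}$. The first observation is that the Stiefel constraint makes $\W_t \W_t^\top$ a rank-$\rs$ projection, so $\G_t^2 \preceq \G_t$; reading this in block form gives $\G_{t,12}\G_{t,12}^\top \preceq \G_{t,11}(\bs{I}_{\rks} - \G_{t,11})$. Consequently, as long as $\G_{t,11} \succeq (1 - 5/\log d)\bs{I}_{\rks}$, the quadratic cross-term in \eqref{eq:sotermsminusslb} is absorbed cleanly,
\begin{equation*}
\G_{t,12} \bs{\Lambda}_{22} \G_{t,12}^\top \preceq \tfrac{5\, \|\bs{\Lambda}_{22}\|_2}{\log d}\,\bs{I}_{\rks} \preceq \tfrac{5 \rks^{-\alpha}}{\log d}\,\bs{I}_{\rks},
\end{equation*}
contributing only an $O(\rks^{-\alpha}/\log d)$ perturbation to the effective drift. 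This is the source of the damping term $\tfrac{5}{\rks^\alpha \log d}\Llstop$ appearing in the statement's bounding recursion.

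Substituting this estimate together with the second-order bound of Proposition~\ref{prop:soterms} into \eqref{eq:sotermsminusslb} and repeating the matrix-monotone Riccati manipulation of Propositions~\ref{prop:boundforiterationsheavy}--\ref{prop:boundforiterationslight}, one obtains $\V^-_{t+1} \succeq \uV{t+1} + \tfrac{\eta}{\sqrt{\rs}}\bs{\Lambda}_{11}^{1/2} \upnu_{t+1} \bs{\Lambda}_{11}^{1/2}$ where $\uV{t}$ is exactly the deterministic iteration written in the statement. Stability of this recursion is then a scalar exercise: since $\uV{0} = 2\Llstop^{1/2}\uG{0}\Llstop^{1/2} - \Llftop$ differs from $\Llftop$ only by an $O(1/\log d)$ multiple of $\Llstop$, Proposition~\ref{prop:1dsystemaux} (applied entry-wise in the eigenbasis, as in the analysis of $\hat{\T}_t$ around \eqref{eq:auxtdyn}) shows that $\uV{t}$ stays within an $O(1/\log d)$ ball of its attractive fixed point $\Llftop$ throughout the horizon $t \leq \rks^\alpha \log^2 d/\eta$, yielding $\uG{t} \succeq (1 - 3/\log d)\bs{I}_{\rks}$.

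The noise control is the direct analog of Proposition~\ref{prop:Tbadboundaux}: define the stopping time $\mathcal{T}_{\text{stable}}$ as in the setup, pick the reference matrix $\T_t \asymp \rks^{-\alpha/2} \bs{I}_{\rks}$ tailored to the target accuracy $\rks^{-\alpha/2}/\log^2 d$, clip at $L = \log^2 d$, and apply the matrix Freedman bound of Proposition~\ref{prop:friedmanineq} with the quadratic-variation and operator-norm estimates from Corollary~\ref{cor:boundsqv}. Near the minimum the bound $\sG_t \preceq \bs{I}_{\rks}$ makes every variance estimate at least as tight as in the feature-learning phase, so the probability of violating the event $\{\unu{t} \preceq (\rks^{-\alpha/2}/\log^2 d)\bs{I}_{\rks}\}$ up to time $\rks^\alpha \log^2 d/\eta$ remains at most $d^4 \exp(-\log^2 d)$. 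A standard stopping-time bootstrap (as in Proposition~\ref{prop:boundstoppedprocess}) then shows that on this event $\G_{t,11} \succeq \uG{t} + \unu{t} \succeq (1 - 3/\log d - 1/\log^2 d)\bs{I}_{\rks} \succeq (1 - 5/\log d)\bs{I}_{\rks}$, closing the induction that justified Step 1.

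The main obstacle is simultaneously handling the three small parameters governing the stable regime: the $1/\log d$ distance to $\bs{I}_{\rks}$ needed to apply the Stiefel projection bound in Step 1, the $\rks^{-\alpha}/\log d$ damping it produces, and the longer horizon $\rks^\alpha \log^2 d/\eta$ (which exceeds the feature-learning horizon by a factor of $\log^{1/2}\!d$). The bootstrap will only close if the deterministic contraction rate of the Riccati iteration dominates \emph{both} the $\rks^{-\alpha}/\log d$ perturbation from the off-diagonal and the accumulated $\rks^{-\alpha/2}/\log^2 d$ noise over this longer horizon. Verifying that the fixed point lies within $O(1/\log d)$ of $\bs{I}_{\rks}$ (rather than at some larger deviation that would shrink the safety margin) is the delicate part, and requires a careful bookkeeping of the implicit constants in Corollary~\ref{cor:boundsqv} when $\c_d = 1/\log^{3.5}\!d$ (resp.\ $1/(\ru\log^{2.5}\!d)$) is replaced by the tighter $1/\log^4 d$ scale of the present regime.
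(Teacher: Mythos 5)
Your plan follows the paper's route: bound $\G_{t,12}\bs{\Lambda}_{22}\G_{t,12}^\top$ via the Stiefel constraint (your block inequality $\G_{t,12}\G_{t,12}^\top \preceq \G_{t,11}(\bs{I}_{\rks}-\G_{t,11})$ is a slightly tighter variant of the paper's $\G_{t,11}^2 + \G_{t,12}\G_{t,12}^\top \preceq \bs{I}_{\rks}$, both valid since $\W_t$ is Stiefel), reduce to the deterministic noise-damped Riccati iterate $\uV{t}$ via Proposition~\ref{prop:1dsystemaux}, and close the stopping-time bootstrap with the matrix Freedman bound at clip level $L=\log^2 d$, which is exactly what the paper does. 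The one notational slip is that the comparison should read $\V^-_{t} \succeq \uV{t} + \uze{t}$ with the \emph{cumulative} noise $\uze{t} = 2\Llstop^{1/2}\unu{t}\Llstop^{1/2}$ rather than the single-step increment $\tfrac{\eta}{\sqrt{\rs}}\bs{\Lambda}_{11}^{1/2}\upnu_{t+1}\bs{\Lambda}_{11}^{1/2}$; this is precisely why the $-O(\log^{-2}d)\Llstop^2$ damping term must appear in the $\uV{t}$ iteration, to absorb the $\uze{t}^2$ and $\uV{t}\uze{t}$ cross-terms generated when accumulated noise is pushed through the Riccati map.
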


\begin{proof}
We define $\uze{t}   \coloneqq 2  \Llstop^{\frac{1}{2}} \unu{t}  \Llstop^{\frac{1}{2}} $.  We make the following observations observations:
\begin{itemize}[leftmargin=*]
\item Since  $\G_{t,11}^2 + \G_{t,12} \G_{t,12}^\top \preceq \bs{I}_{\rks}$  for   $t \leq   \mathcal{T}_{\text{bounded}}$, we have
\eq{
\G_{t,12} \G_{t,12}^\top \preceq \frac{4}{\log d} \bs{I}_{\rks}
}
Therefore, by  using   \eqref{eq:sotermsminusslb}, we have for $t \leq \mathcal{T}_{\text{bounded}}$ 
\eq{
 \G_{t+1, 11}    & \succeq    \G_{t,11}  +  \lr \Big(  \Llftop  \G_{t,11}   + \G_{t,11}  \Llftop - 2   \G_{t,11}   \Llstop  \G_{t,11}  \Big)  -  \frac{4 \lr}{\rks^{\alpha} \log d} \bs{I}_{\rks}  \\
&- C \lr^2  \normL^2 \rs \bs{I}_{\rks}   +    \frac{\eta/2}{\sqrt{\rs}} \bs{\nu}_{t+1,11}
}
Then, if we define $\bs{V}_{t}^{-} \coloneqq   2 \Llstop^{\frac{1}{2}} \G_{t,11}  \Llstop^{\frac{1}{2}} -   \Llftop$, we have
\eq{
\bs{V}_{t+1}^{-} & \succeq  \bs{V}_{t}^{-}  - \lr ( \bs{V}_{t+1}^{-} )^2 + \lr   \Llftop^2 - \Big( \frac{8 \lr}{\rks^{\alpha} \log d} + C \lr^2 \normL^2\rs \Big) \Llstop \\
& +  \frac{\eta}{\sqrt{\rs}}  \Llstop^{\frac{1}{2}} \bs{\nu}_{t+1,11}  \Llstop^{\frac{1}{2}}  \\
& \succeq  \bs{V}^{-}_t  \left( \bs{I}_r +  \frac{\lr}{1 - 1.1  \lr }    \bs{V}^{-}_t  \right)^{-1} \!\!\!\!  + \lr \left(  \Llftop^2  - \frac{8.1}{\rks^{\alpha} \log d} \Llstop   \right)   + \Llstop^{\frac{1}{2}} \bs{\nu}_{t+1,11}  \Llstop^{\frac{1}{2}} 
}
\item To derive an upper-bound for $\uG{t}$,  assuming $\uG{t} \preceq 1.1 \bs{I}_{\rks}$,  we have
\eq{
 \uV{t+1} & =    \uV{t}  -  \frac{\lr}{1 - 1.1 \lr}  \uVs{t}  +    \frac{\lr^2}{(1 - 1.1 \lr)^2}  \uVc{t}  \left(   \bs{I}_{\rks} +  \frac{\lr}{1 - 1.1 \lr}  \uV{t}    \right)^{-1} \\   
 &\quad  + \frac{\lr}{1 - 1.1 \lr} \left(  \Llftop^2  - \frac{8.1}{\rks^{\alpha} \log d}   \Llstop -   \frac{O(1)}{\log^2 d}   \Llstop^{2}     \right) \\
& \preceq      \uV{t}  -  \frac{\lr}{1 - 1.1 \lr}  \uVs{t}  +  \frac{\lr}{1 - 1.1 \lr}    \Llftop^2 .
}
Then,  by Proposition   \ref{prop:1dsystemaux}, we have  $\uG{t+1} \preceq 1.1 \bs{I}_{\rks}$.  Since the bound holds for $t = 0$, it holds for all $t \in \N$. 
\item  To derive a lower-bound,  we first observe that by monotonicity $\uV{0} \succ 0$ . Therefore, assuming $\uV{t} \succ \uV{0}$
\eq{
 \uV{t+1} &\succeq     \uV{t}  -   \frac{\lr/2}{1 - 1.1 \lr}  (   2 \Llstop^{\frac{1}{2}} \uG{t}  \Llstop^{\frac{1}{2}} -   \Llftop  )^2 \\
 & +  \frac{\lr/2}{1 - 1.1 \lr} \left(   \Llftop^2 - \frac{8.1}{\rks^{\alpha} \log d}   \Llstop -   \frac{O(1)}{\log^2 d}   \Llstop^{2}     \right)  \\
&  \succeq     \uV{t} \!  -  \!  \frac{\lr/2}{1 - 1.1 \lr} \left(   2 \Llstop^{\frac{1}{2}} \uG{t}  \Llstop^{\frac{1}{2}} \! -  \!   \sqrt{\left(   \Llftop^2 - \tfrac{8.1}{\rks^{\alpha} \log d}   \Llstop -   \tfrac{O(1)}{\log^2 d}   \Llstop^{2}     \right) }  \right)^2  \\
  & +  \frac{\lr/2}{1 - 1.1 \lr} \left(  \Llftop^2 -    \frac{8.1}{\rks^{\alpha} \log d}   \Llstop -   \frac{O(1)}{\log^2 d}   \Llstop^{2}     \right) .
}
Then,  by Proposition   \ref{prop:1dsystemaux}, we have  $\uV{t} \succeq \uV{0}$. 
\end{itemize}

We start our proof by showing that $\bs{V}_{t}^{-}  \succeq \uV{t} + \uze{t}$ for $t \leq \mathcal{T}_{\text{stable}}$. Assuming the statement holds for $t \in \N$, we have
\eq{
\bs{V}_{t+1}^{-}  & \succeq  (\uV{t} + \uze{t})  \left( \bs{I}_r +  \frac{\lr}{1 - 1.1  \lr }  (\uV{t} + \uze{t})  \right)^{-1}    + \lr \left(  \Llftop^2  - \frac{8.1}{\rks^{\alpha} \log d}   \Llstop  \right) \\
& + \Llstop^{\frac{1}{2}} \bs{\nu}_{t+1,11}  \Llstop^{\frac{1}{2}}  \\
& =  \uV{t}  \left( \bs{I}_r +  \frac{\lr}{1 - 1.1  \lr }  \uV{t} \right)^{-1}  +     \lr \left(  \Llftop^2  - \frac{8.1}{\rks^{\alpha} \log d}   \Llstop  \right)  \\
& + \left( \bs{I}_r +  \frac{\lr}{1 - 1.1  \lr }  \uV{t} \right)^{-1}  \uze{t}  \left( \bs{I}_r +  \frac{\lr}{1 - 1.1  \lr }  ( \uV{t} +   \uze{t} ) \right)^{-1} 
+ \Llstop^{\frac{1}{2}} \bs{\nu}_{t+1,11}  \Llstop^{\frac{1}{2}} 
}
We have for $t \leq \mathcal{T}_{\text{stable}}$
\eq{
& \left( \bs{I}_r +  \frac{\lr}{1 - 1.1  \lr }  \uV{t} \right)^{-1}   \uze{t}  \left( \bs{I}_r +  \frac{\lr}{1 - 1.1  \lr }  ( \uV{t} +   \uze{t} ) \right)^{-1}  \\
&  =    \uze{t} -    \frac{\lr}{1 - 1.1  \lr }  \uV{t}  \uze{t} -    \frac{\lr}{1 - 1.1  \lr }   \uze{t} \uV{t}  -     \frac{\lr}{1 - 1.1  \lr }   \uze{t}^2 \\
& -    \frac{\lr^2}{(1 - 1.1  \lr)^2 } \uV{t}  \left( \bs{I}_r +  \frac{\lr}{1 - 1.1  \lr }  \uV{t} \right)^{-1}  \!\! \uze{t}  \!\!  \left( \bs{I}_r +  \frac{\lr}{1 - 1.1  \lr }  ( \uV{t} +   \uze{t} ) \right)^{-1} \!\! ( \uV{t} +   \uze{t} ) \\
& \succeq    \uze{t} -    \frac{\lr}{1 - 1.1  \lr }    \frac{1}{\log^2 d} \uVs{t} - \frac{\lr}{1 - 1.1  \lr }  ( 1 +  \log^2 d ) \uze{t} ^2  -    \frac{\lr^2}{(1 - 1.1  \lr)^2 }    \frac{O(\rks^{-\alpha})}{\log^4 d}  \bs{I}_{\rks}  \\
& \succeq  \uze{t} -    \frac{\lr}{1 - 1.1  \lr }    \frac{O(1)}{\log^2 d} \Llstop^{2} .  
}
Since  $\bs{V}_{0}^{-} = \uV{0} + \uze{0},$ the claim follows. Then,  by the third item above,  we have  for $t \leq \mathcal{T}_{\text{stable}}$
\eq{
\G_t \succeq  \uG{0} + \unu{t} \succeq  \bs{I}_{\rks} - \frac{1}{\log d}  \bs{I}_{\rks} - \frac{\rks^{\frac{-\alpha}{2}}}{\log^2 d} \bs{I}_{\rks} \succeq  \bs{I}_{\rks} - \frac{2}{\log d}  \bs{I}_{\rks} \Rightarrow   \mathcal{T}_{\text{noise}} \leq   \mathcal{T}_{\text{bounded}}.
}
In the following, we will bound   $ \mathcal{T}_{\text{noise}}$. We have
\eq{
\E_t \left[ \unu{t+1}^2 \right]  \labelrel\preceq{pstable:ineqq3}  \unu{t}^2       + O (\eta^2) \bs{I}_{\rks} \Rightarrow   \E \left[ \unu{t}^2 \right] \preceq O(\eta^2 t)  \bs{I}_{\rks}
}
By clipping strategy we used with $L = \log^2 d$ in  \eqref{eq:clippedversions}, and defining $\mathsf{\Gamma}_1 \coloneqq \bs{I}_{\rks}$,  $\mathsf{\Gamma}_2   \coloneqq \bs{\Lambda}_{11}^{\frac{1}{2}}, $ and
\begin{align} 
& \text{Quad}_{k,t}^{(\ell)}( \mathsf{X}) \coloneqq     \sum_{j = 1}^k  \E_{j - 1} \left[ \Big( \mathsf{\Gamma}_{\ell}  \T_t^{\frac{-1}{2}} \mathsf{X}_{j}  \T_t^{\frac{-1}{2}}  \mathsf{\Gamma}_\ell \Big)^2 \right], ~ \ell \in \{1,2\},
\end{align}
we can show that the following events hold: For any $T \in \N$,
\begin{alignat}{4}
& \widehat{\mathcal{E}}_{\mathrm{ht},1} \equiv \bigg \lbrace &&\max_{t \leq T}  \norm{ \text{Quad}^{(1)}_{t,t}}_2 \leq O(\eta^2 T) ~~&& \text{and} ~~   
&&\max_{t \leq T}  R_{t,t}^{(1)}   \leq  O(\eta \rks^{\frac{1}{2}} \log^2 d)  \bigg \rbrace     \\[0.25em]
&  \widehat{\mathcal{E}}_{\mathrm{ht},2} \equiv \bigg \lbrace && \max_{t \leq T}  \norm{ \text{Quad}^{(2)}_{t,t}}_2 \leq O_{\alpha}(\eta^2 T \rks^{\mathsf{p}_2 -1} )~~&& \text{and} ~~ 
&& \max_{t \leq T}  R_{t,t}^{(2)}    \leq   O_{\alpha}(\eta \rks^{\mathsf{p}_2 -\frac{1}{2}} \log^2 d )  \bigg \rbrace   . 
\end{alignat}
where $\mathsf{p}_2$  is defined in Corollary \ref{cor:boundsqv}. By using Proposition \ref{prop:friedmanineq}, we can show that with probability $d^4 \exp(- \log^2 d)$,   $\mathcal{T}_{\text{noise}} \geq \frac{\rks^{\alpha} \log^2 d}{\eta}$.
\end{proof}

\section{Auxiliary Statements}
\label{sec:auxstats}
\subsection{Matrix bounds}
\begin{proposition}
\label{prop:matrixyounginequality}
For $\bs{A}, \bs{B} \in \R^{d \times r},$ we have
\eq{
- \bs{A}^\top \bs{A} - \bs{B}^\top \bs{B} \preceq \bs{A}^\top \bs{B} + \bs{B}^\top \bs{A} \preceq \bs{A}^\top \bs{A} + \bs{B}^\top \bs{B}.
}
If $r = d$, then $(\bs{A} + \bs{A}^\top)^2 \preceq 2 \bs{A}^\top \bs{A} + 2 \bs{A} \bs{A}^\top$.   Moreover, if $\bs{A}_1, \cdots, \bs{A}_k$ are symmetric matrices,
\eq{
\left( \sum_{i = 1}^k  \bs{A}_i \right)^2 \preceq k   \sum_{i = 1}^k  \bs{A}_i^2.
}
\end{proposition}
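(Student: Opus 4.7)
}

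All three inequalities reduce to the positive semidefiniteness of an explicit squared expression, so the plan is to write each as an ``expand a square'' identity.

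For the first inequality, I would start from the trivial facts
\eq{
(\bs{A} + \bs{B})^\top (\bs{A} + \bs{B}) \succeq 0 \quad \text{and} \quad (\bs{A} - \bs{B})^\top (\bs{A} - \bs{B}) \succeq 0.
}
Expanding the first gives $\bs{A}^\top \bs{A} + \bs{A}^\top \bs{B} + \bs{B}^\top \bs{A} + \bs{B}^\top \bs{B} \succeq 0$, which rearranges to the claimed lower bound on $\bs{A}^\top \bs{B} + \bs{B}^\top \bs{A}$; expanding the second and rearranging yields the upper bound. This disposes of the general rectangular case.

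For the second inequality (the case $r = d$), the key observation is the algebraic identity
\eq{
2\bs{A}^\top \bs{A} + 2\bs{A} \bs{A}^\top - (\bs{A} + \bs{A}^\top)^2
= \bs{A}^\top \bs{A} - \bs{A}^2 - (\bs{A}^\top)^2 + \bs{A} \bs{A}^\top
= (\bs{A} - \bs{A}^\top)^\top (\bs{A} - \bs{A}^\top),
}
which is manifestly positive semidefinite. So the plan here is simply to verify this identity by direct expansion and conclude.

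For the third inequality about $k$ symmetric matrices, I would argue by testing against an arbitrary vector $\bs{v}$. Using symmetry $\bs{A}_i^\top = \bs{A}_i$ and the Cauchy--Schwarz inequality in $\R^d$,
\eq{
\bs{v}^\top \Big(\sum_{i=1}^k \bs{A}_i \Big)^{\!2} \bs{v}
= \Big\| \sum_{i=1}^k \bs{A}_i \bs{v} \Big\|_2^2
\leq k \sum_{i=1}^k \norm{\bs{A}_i \bs{v}}_2^2
= k \sum_{i=1}^k \bs{v}^\top \bs{A}_i^2 \bs{v}.
}
Since $\bs{v}$ was arbitrary, this gives the Loewner bound.

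There is no real obstacle here: each step is a one-line expansion or an application of Cauchy--Schwarz, and the proof should occupy only a few lines in total.
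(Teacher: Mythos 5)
Your proof is correct. The first and second parts are essentially identical to the paper's argument: expanding $(\bs{A}\pm\bs{B})^\top(\bs{A}\pm\bs{B})\succeq 0$ for the rectangular case, and the identity $(\bs{A}-\bs{A}^\top)^\top(\bs{A}-\bs{A}^\top) = 2\bs{A}^\top\bs{A}+2\bs{A}\bs{A}^\top - (\bs{A}+\bs{A}^\top)^2$ for the square case (the paper states the substitution $\bs{A}\leftarrow -\bs{A}$ to get the lower bound, which is equivalent to your $(\bs{A}+\bs{B})^\top(\bs{A}+\bs{B})$ route).

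Your third part takes a genuinely different route. The paper expands $(\sum_i\bs{A}_i)^2$ into diagonal and cross terms and then applies the first inequality pairwise to each cross term $\bs{A}_i\bs{A}_j+\bs{A}_j\bs{A}_i \preceq \bs{A}_i^2+\bs{A}_j^2$ (using symmetry), summing to get the factor $k$. You instead scalarize by testing the quadratic form against an arbitrary vector $\bs{v}$, rewrite $\bs{v}^\top(\sum_i\bs{A}_i)^2\bs{v} = \|\sum_i\bs{A}_i\bs{v}\|_2^2$ using symmetry, and then invoke the elementary vector inequality $\|\sum_i\bs{u}_i\|_2^2\le k\sum_i\|\bs{u}_i\|_2^2$. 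Both are one-paragraph proofs; the paper's is self-contained within the proposition (it reuses its own first part), whereas yours leans on the standard $\ell_2$ power-mean bound, which arguably makes the role of the symmetry hypothesis a bit more transparent (it is used exactly twice: once to write the square as a Gram form, once to drop the transposes). Either is acceptable.
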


\begin{proof}
We have
\eq{
(\bs{A} - \bs{B})^\top (\bs{A} - \bs{B}) \succeq 0 \Rightarrow \bs{A}^\top \bs{A} + \bs{B}^\top \bs{B} \succeq  \bs{A}^\top \bs{B} + \bs{B}^\top \bs{A}.
}
By using $\bs{A} \leftarrow - \bs{A}$, we obtain the left inequality too. For the second inequality, we have
\eq{
(\bs{A} + \bs{A}^\top)^2 = \bs{A}^\top \bs{A} + \bs{A} \bs{A}^\top + \bs{A} \bs{A} + \bs{A}^\top \bs{A}^\top \\
(\bs{A} - \bs{A}^\top)^\top (\bs{A} - \bs{A}^\top) =   \bs{A}^\top \bs{A} + \bs{A} \bs{A}^\top - \bs{A} \bs{A} - \bs{A}^\top \bs{A}^\top 
}
Therefore,  $(\bs{A} + \bs{A}^\top)^2  \preceq  2 \left(  \bs{A}^\top \bs{A} + \bs{A} \bs{A}^\top \right)$.  For the last statement,
\eq{
\left( \sum_{i = 1}^k  \bs{A}_i \right)^2 = \sum_{i = 1}^k   \bs{A}_i^2 + \sum_{i = 1}^k \sum_{j = i +1}^k      \bs{A}_i  \bs{A}_j +  \sum_{i = 1}^k  \sum_{j = i +1}^k        \bs{A}_j \bs{A}_i \preceq   k   \sum_{i = 1}^k  \bs{A}_i^2,
}
where we use the first statement in the last inequality.
\end{proof}

\begin{proposition}
\label{prop:blockmatrixposdef}
Consider a symmetric square matrix with block partition
\eq{
\bs{M} = \begin{bmatrix}
\bs{A} & \bs{B} \\
\bs{B}^\top & \bs{C}
\end{bmatrix}.
}
If $\bs{A}$ is invertible,  then  $\bs{M} \succ 0$ if and only if $\bs{A} \succ 0$ and $\bs{C} - \bs{B}^\top \bs{A}^{-1} \bs{B} \succ 0$.
\end{proposition}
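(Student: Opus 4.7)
The plan is to prove both directions via the block $\bs{L}\bs{D}\bs{L}^\top$ factorization of $\bs{M}$. Specifically, using that $\bs{A}$ is invertible, I would verify by direct multiplication that
\[
\bs{M} = \bs{L}\,\bs{D}\,\bs{L}^\top, \qquad \bs{L} = \begin{bmatrix} \bs{I} & 0 \\ \bs{B}^\top \bs{A}^{-1} & \bs{I} \end{bmatrix}, \qquad \bs{D} = \begin{bmatrix} \bs{A} & 0 \\ 0 & \bs{C} - \bs{B}^\top \bs{A}^{-1} \bs{B} \end{bmatrix}.
\]
The key point is that $\bs{L}$ is lower block-triangular with identity blocks on the diagonal, hence invertible, so $\bs{M}$ and $\bs{D}$ are congruent via an invertible transformation. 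By Sylvester's law of inertia (or just the definition of positive definiteness under invertible congruence), $\bs{M} \succ 0$ if and only if $\bs{D} \succ 0$, and since $\bs{D}$ is block-diagonal, $\bs{D} \succ 0$ if and only if both diagonal blocks $\bs{A}$ and $\bs{C} - \bs{B}^\top \bs{A}^{-1} \bs{B}$ are positive definite.

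For a self-contained argument without invoking Sylvester, I would spell out each direction. The forward direction: assuming $\bs{M} \succ 0$, test with $\bs{v} = (\bs{u}^\top, 0)^\top$ to get $\bs{u}^\top \bs{A} \bs{u} > 0$ for all nonzero $\bs{u}$, hence $\bs{A} \succ 0$; then for any nonzero $\bs{w}_2$, choose $\bs{v} = \bs{L}^{-\top}(0,\bs{w}_2^\top)^\top$ so that $\bs{v}^\top \bs{M} \bs{v} = \bs{w}_2^\top(\bs{C} - \bs{B}^\top \bs{A}^{-1}\bs{B})\bs{w}_2 > 0$. The reverse direction: assuming both blocks of $\bs{D}$ are positive definite, for any nonzero $\bs{v}$ set $\bs{w} = \bs{L}^\top \bs{v} \neq 0$; then $\bs{v}^\top \bs{M} \bs{v} = \bs{w}^\top \bs{D} \bs{w} > 0$ since $\bs{D} \succ 0$.

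There is no real obstacle here; this is the standard Schur complement characterization, and the only technical step is the direct verification of the factorization identity, which reduces to checking the $(2,2)$-block equals $\bs{B}^\top \bs{A}^{-1}\bs{B} + (\bs{C} - \bs{B}^\top \bs{A}^{-1}\bs{B}) = \bs{C}$ and the off-diagonal blocks equal $\bs{B}$. The invertibility of $\bs{L}$ (needed to carry positive definiteness through the congruence) is immediate from its triangular structure with identity diagonal blocks.
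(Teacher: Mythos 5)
Your proposal is correct and uses exactly the same block $\bs{L}\bs{D}\bs{L}^\top$ (Schur complement) factorization as the paper's proof; you simply spell out the congruence argument in more detail than the paper, which leaves it implicit after writing the factorization.
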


\begin{proof}
If $\bs{A}$ is invertible, we have
\eq{
\begin{bmatrix}
\bs{A} & \bs{B} \\
\bs{B}^\top & \bs{C}
\end{bmatrix} = 
\begin{bmatrix}
\bs{I} & 0 \\
\bs{B}^\top  \bs{A}^{-1} & \bs{I}
\end{bmatrix}
\begin{bmatrix}
\bs{A} & 0 \\
0 & \bs{C} - \bs{B}^\top \bs{A}^{-1} \bs{B}
\end{bmatrix}
\begin{bmatrix}
\bs{I} &\bs{A}^{-1} \bs{B} \\
0 & \bs{I}
\end{bmatrix}.
}
Note that
\eq{
\begin{bmatrix}
\bs{I} & 0 \\
\bs{B}^\top  \bs{A}^{-1} & \bs{I}
\end{bmatrix}^{-1} =  \begin{bmatrix}
\bs{I} & 0 \\
- \bs{B}^\top  \bs{A}^{-1} & \bs{I}
\end{bmatrix}.
}
Therefore,  the statement follows.
\end{proof}

\begin{proposition}
\label{prop:schurlowerbound}
Let $\ru < r$ and $\Z \in \R^{r \times \rs}$ such that
\eq{
\Z = \begin{bmatrix}
\Z_1 \\
\Z_2
\end{bmatrix},   ~~ \text{where} ~~ \Z_1 \in \R^{\ru \times \rs},  \Z_2 \in \R^{r - \ru \times \rs}. 
}
For ant  $0 \leq  \varepsilon < 1$
\eq{
\Z \Z^\top & \succeq  \varepsilon    \begin{bmatrix}
\Z_1 \Z_1^\top & 0 \\
0 & 0
\end{bmatrix}
+ (1 - \varepsilon)  \begin{bmatrix}
\Z_1 \Z_1^\top - \Z_1 \Z_2^\top (\Z_2 \Z_2^\top)^{+} \Z_2 \Z_1^\top  & 0  \\
0 & 0
\end{bmatrix} \\
&
- \frac{\varepsilon}{1 - \varepsilon}   \begin{bmatrix}
0 & 0 \\
0 & \Z_2 \Z_2^\top
\end{bmatrix},
}
where $\A \to \A^+$ denotes the pseudo inverse operator.
\end{proposition}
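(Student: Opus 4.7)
My plan is to directly compute the difference $\bs{M} := \Z\Z^\top - \text{RHS}$ and exhibit it as a Gram matrix, which immediately gives $\bs{M} \succeq 0$.

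First, I would simplify the top-left block of $\bs{M}$. After the cancellation
\[
\Z_1\Z_1^\top - \varepsilon\Z_1\Z_1^\top - (1-\varepsilon)\bigl(\Z_1\Z_1^\top - \Z_1\Z_2^\top(\Z_2\Z_2^\top)^+\Z_2\Z_1^\top\bigr) = (1-\varepsilon)\,\Z_1\bs{P}\Z_1^\top,
\]
where $\bs{P} := \Z_2^\top(\Z_2\Z_2^\top)^+\Z_2$ is the orthogonal projection onto the row space of $\Z_2$, and after absorbing the $\frac{\varepsilon}{1-\varepsilon}\Z_2\Z_2^\top$ correction into the bottom-right block, I would arrive at
\[
\bs{M} = \begin{bmatrix} (1-\varepsilon)\,\Z_1\bs{P}\Z_1^\top & \Z_1\Z_2^\top \\ \Z_2\Z_1^\top & \tfrac{1}{1-\varepsilon}\,\Z_2\Z_2^\top \end{bmatrix}.
\]

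The key observation is that $\Z_2\bs{P} = \Z_2$: indeed, $\Z_2(\Z_2^\top(\Z_2\Z_2^\top)^+\Z_2) = (\Z_2\Z_2^\top)(\Z_2\Z_2^\top)^+\Z_2 = \Z_2$, since $(\Z_2\Z_2^\top)(\Z_2\Z_2^\top)^+$ is the orthogonal projection onto the range of $\Z_2\Z_2^\top$, which already contains the columns of $\Z_2$. Equivalently $\bs{P}\Z_2^\top = \Z_2^\top$, so the cross blocks can be rewritten as $\Z_1\Z_2^\top = \Z_1\bs{P}\Z_2^\top$ and $\Z_2\Z_1^\top = \Z_2\bs{P}\Z_1^\top$. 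Using $\bs{P} = \bs{P}^\top = \bs{P}^2$, I would then factor $\bs{M}$ as
\[
\bs{M} = \begin{bmatrix} \sqrt{1-\varepsilon}\,\Z_1\bs{P} \\ \tfrac{1}{\sqrt{1-\varepsilon}}\,\Z_2 \end{bmatrix}\begin{bmatrix} \sqrt{1-\varepsilon}\,\Z_1\bs{P} \\ \tfrac{1}{\sqrt{1-\varepsilon}}\,\Z_2 \end{bmatrix}^\top \succeq 0,
\]
which proves the claim.

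There is no real obstacle here since everything is elementary linear algebra once the factorization is identified. The only subtle point is handling the pseudoinverse when $\Z_2\Z_2^\top$ is singular, but the identity $\Z_2\bs{P} = \Z_2$ holds in complete generality, so no invertibility assumption on $\Z_2\Z_2^\top$ is needed.
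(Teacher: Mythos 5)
Your proof is correct. The paper argues differently: it evaluates the quadratic form $\bs{x}^\top \Z\Z^\top \bs{x}$ in block coordinates, splits the $\Z_2\Z_2^\top$ term as $\tfrac{1}{1-\varepsilon}-\tfrac{\varepsilon}{1-\varepsilon}$, and then lower-bounds by minimizing over $\bs{x}_2$, which produces the Schur-complement term with the pseudoinverse. Your approach instead computes the difference $\bs{M}$ directly in block form and exhibits it as a Gram matrix $\bs{A}\bs{A}^\top$, using the identity $\bs{P}\Z_2^\top = \Z_2^\top$ (equivalently $\Z_2\bs{P}=\Z_2$) to replace the off-diagonal blocks $\Z_1\Z_2^\top$ by $\Z_1\bs{P}\Z_2^\top$ so that the factorization closes. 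Both arguments verify the same underlying fact (the Schur complement of $\tfrac{1}{1-\varepsilon}\Z_2\Z_2^\top$ in $\bs{M}$ vanishes), but your Gram factorization is a pleasant global restatement: it sidesteps the pointwise minimization, avoids picking a minimizer $\bs{x}_2^\ast$ that must be justified when $\Z_2\Z_2^\top$ is singular, and makes the rank-deficient case transparent via the cleanly stated projection identity.
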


\begin{proof}
We will denote $\bs{x} \in \R^r$ as
\eq{
 \bs{x}  = \begin{bmatrix}
\bs{x}_1 \\
\bs{x}_2
\end{bmatrix}  ~~ \text{where} ~~ \bs{x}_1 \in \R^{\ru},  \bs{x}_2 \in \R^{r - \ru}.
}
We have
\eq{
\bs{x}^\top  \Z \Z^\top   \bs{x} & =  \left( \bs{x}_1^\top  \Z_1 \Z_1^\top   \bs{x}_1  +  2 \bs{x}_1^\top  \Z_1 \Z_2^\top   \bs{x}_2 + \frac{1}{1 - \varepsilon} \bs{x}_2^\top  \Z_2 \Z_2^\top   \bs{x}_2 \right) - \frac{\varepsilon}{1 - \varepsilon} \bs{x}_2^\top  \Z_2 \Z_2^\top   \bs{x}_2 \\
& \labelrel\geq{schurbound:ineq0}   \left( \bs{x}_1^\top  \Z_1 \Z_1^\top   \bs{x}_1 - (1 - \varepsilon)   \bs{x}_1^\top  \Z_1  \Z_2^\top (\Z_2 \Z_2^\top)^{+} \Z_2 \Z_1^\top   \bs{x}_1 \right)  - \frac{\varepsilon}{1 - \varepsilon} \bs{x}_2^\top  \Z_2 \Z_2^\top   \bs{x}_2   ,
}
where we minimized the first term in the first line over $\bs{x}_2$ in \eqref{schurbound:ineq0}.  Since  \eqref{schurbound:ineq0} holds for all $\bs{x}$, the statement follows,
\end{proof}

\begin{proposition}
\label{prop:monotonecharac}
Let $\bs{A} \in \R^{r \times r}$ be a symmetric matrix.  For $\bs{S} \succ - \bs{A}$,   $\bs{S} \to   - (\bs{S} + \bs{A})^{-1}$ is monotone.
\end{proposition}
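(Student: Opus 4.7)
The plan is to reduce the claim to the well-known operator antitonicity of matrix inversion on the positive definite cone, namely: if $\bs{X}, \bs{Y} \in \R^{r\times r}$ are symmetric and $0 \prec \bs{X} \preceq \bs{Y}$, then $\bs{Y}^{-1} \preceq \bs{X}^{-1}$. Granting this fact, the proposition is essentially a one-line translation.

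First, suppose $\bs{S}_1, \bs{S}_2 \succ -\bs{A}$ with $\bs{S}_1 \succeq \bs{S}_2$. Set $\bs{Y} \coloneqq \bs{S}_1 + \bs{A}$ and $\bs{X} \coloneqq \bs{S}_2 + \bs{A}$. By hypothesis, both matrices are symmetric and strictly positive definite, and
\[
\bs{Y} - \bs{X} \;=\; \bs{S}_1 - \bs{S}_2 \;\succeq\; 0,
\]
so $0 \prec \bs{X} \preceq \bs{Y}$. Applying the antitonicity of inversion yields $\bs{X}^{-1} \succeq \bs{Y}^{-1}$, i.e.\ $(\bs{S}_2+\bs{A})^{-1} \succeq (\bs{S}_1+\bs{A})^{-1}$, which is equivalent to $-(\bs{S}_1+\bs{A})^{-1} \succeq -(\bs{S}_2+\bs{A})^{-1}$, exactly the required monotonicity.

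The remaining task is to justify the antitonicity lemma. I would do this by conjugation with $\bs{Y}^{-1/2}$ (which exists since $\bs{Y} \succ 0$): the inequality $\bs{X} \preceq \bs{Y}$ is equivalent, after sandwiching by $\bs{Y}^{-1/2}$, to $\bs{Y}^{-1/2}\bs{X}\bs{Y}^{-1/2} \preceq \bs{I}_r$. Since $\bs{Y}^{-1/2}\bs{X}\bs{Y}^{-1/2}$ is symmetric positive definite, its eigenvalues lie in $(0,1]$, so those of its inverse lie in $[1,\infty)$, giving $(\bs{Y}^{-1/2}\bs{X}\bs{Y}^{-1/2})^{-1} = \bs{Y}^{1/2}\bs{X}^{-1}\bs{Y}^{1/2} \succeq \bs{I}_r$. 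Sandwiching again by $\bs{Y}^{-1/2}$ produces $\bs{X}^{-1} \succeq \bs{Y}^{-1}$, as needed.

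There is essentially no obstacle here; the only point to be mindful of is the nondegeneracy condition $\bs{S}_i \succ -\bs{A}$ (strict rather than weak), which ensures that $\bs{X}$ and $\bs{Y}$ are both invertible so that $\bs{Y}^{-1/2}$ and the inverses $\bs{X}^{-1}, \bs{Y}^{-1}$ used in the argument are well-defined.
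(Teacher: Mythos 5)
Your proof is correct, but it takes a different route from the paper's. You reduce the claim to the standard antitonicity of matrix inversion on the positive definite cone and then establish that lemma by conjugating with $\bs{Y}^{-1/2}$ — the classical Löwner--Heinz argument. The paper instead invokes an explicit resolvent (Woodbury-type) identity: writing $\bs{B} \coloneqq \bs{S}_2 + \bs{A}$ and $\bs{C} \coloneqq \bs{S}_1 - \bs{S}_2 \succ 0$, it rewrites
\begin{align}
\bs{B}^{-1} - (\bs{B}+\bs{C})^{-1} = \bs{B}^{-1}\bigl(\bs{C}^{-1} + \bs{B}^{-1}\bigr)^{-1}\bs{B}^{-1} \succ 0,
\end{align}
so positivity of the difference is visible directly as a congruence of a positive definite matrix; the weak inequality $\bs{S}_1 \succeq \bs{S}_2$ is then handled by perturbing to $\bs{S}_1 + \varepsilon\bs{I}_r$ and letting $\varepsilon \downarrow 0$. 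Your argument avoids that limit step by working with the weak order throughout, which is slightly cleaner; the paper's identity, on the other hand, gives quantitative control over the gap $-(\bs{S}_1+\bs{A})^{-1} + (\bs{S}_2+\bs{A})^{-1}$ itself, which is the same algebraic device the paper leans on elsewhere. Both are complete and correct.
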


\begin{proof}
Let $\bs{S}_1 \succ \bs{S}_2 \succ - \bs{A}$ . We have
\eq{
 - (\bs{S}_1 + \bs{A})^{-1} +  (\bs{S}_2 + \bs{A})^{-1} \! = \!   (\bs{S}_2 + \bs{A})^{-1} (  (\bs{S}_1 - \bs{S}_2)^{-1}  +  (\bs{S}_2 + \bs{A})^{-1} )^{-1}  (\bs{S}_2 + \bs{A})^{-1} \succ 0. \label{eq:monotoneproof}
}
 For  $\bs{S}_1 \succeq \bs{S}_2,$ we can use    $\bs{S}_1 + \varepsilon \bs{I}_r$  in \eqref{eq:monotoneproof} and take $\varepsilon \downarrow 0$
\end{proof}
\subsubsection{Additional bounds for continuous-time analysis}

\begin{proposition}
\label{prop:offdiagonal}
For a symmetric positive definite $\D_1$, $\D_2$, and $C > 0$, we have
\eq{
\D_1 \Big(\D_1   +   \Z_1 \big( C \bs{I}_{\rs} +    \Z_2^\top    \D_2^{-1}   \Z_2 \big)^{-1} &  \Z_1^\top  \Big)^{-1}  \Z_1 \Z_2^\top  \left(   \Z_2 \Z_2^\top   +  C \D_2 \right)^{-1} \D_2 \\
& =  \Z_1 \big( C \bs{I}_{\rs} +   \Z_2 ^\top   \D_2^{-1} \Z_2 + \Z_1^\top \D_1^{-1} \Z_1   \big)^{-1}   \Z_2^\top.
}
\end{proposition}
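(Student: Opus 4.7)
The strategy is to apply the Woodbury identity to the outer inverse on the left-hand side and then reduce the resulting expression using the structure relating $Z_2$, $D_2$ and the scalar $C$. Concretely, set $M \coloneqq C\,\bs I_{\rs} + \Z_2^\top \D_2^{-1}\Z_2$ so that the inner inverse on the LHS is exactly $M^{-1}$ and the claimed RHS is $\Z_1(M + \Z_1^\top \D_1^{-1}\Z_1)^{-1}\Z_2^\top$.

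First I would apply Woodbury to rewrite
\eq{
\bigl(\D_1 + \Z_1 M^{-1}\Z_1^\top\bigr)^{-1} \;=\; \D_1^{-1} \;-\; \D_1^{-1}\Z_1\,(M + \Z_1^\top\D_1^{-1}\Z_1)^{-1}\,\Z_1^\top \D_1^{-1}.
}
Left-multiplying by $\D_1$ and right-multiplying by $\Z_1$, and then using the push-through identity $A - A(M+A)^{-1}A = (M+A)^{-1}M \cdot A$ with $A = \Z_1^\top \D_1^{-1}\Z_1$, the LHS collapses to
\eq{
\Z_1\,(M + \Z_1^\top \D_1^{-1}\Z_1)^{-1}\, M\, \Z_2^\top\,(\Z_2\Z_2^\top + C\D_2)^{-1}\,\D_2.
}
So the entire identity reduces to establishing the scalar-type simplification
\eq{
M\,\Z_2^\top\,(\Z_2\Z_2^\top + C\D_2)^{-1}\,\D_2 \;=\; \Z_2^\top.
}
This last equality is immediate from the factorization
\eq{
M\,\Z_2^\top \;=\; \bigl(C\bs I_{\rs} + \Z_2^\top \D_2^{-1}\Z_2\bigr)\Z_2^\top \;=\; \Z_2^\top\D_2^{-1}\bigl(C\D_2 + \Z_2\Z_2^\top\bigr),
}
after which $(Z_2 Z_2^\top + CD_2)^{-1}$ cancels and the remaining $\Z_2^\top \D_2^{-1}\D_2 = \Z_2^\top$ gives the desired right-hand side.

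The routine checks will just be verifying invertibility of the matrices that appear (which follows from $\D_1,\D_2 \succ 0$ and $C>0$, so that $M \succ 0$ and $\Z_2\Z_2^\top + C\D_2 \succ 0$), and confirming compatibility of dimensions. The only mildly delicate point is the push-through step: one must be careful to factor $M$ on the correct side, since $M$ does not in general commute with $\Z_1^\top \D_1^{-1}\Z_1$, but the algebra $A - A(M+A)^{-1}A = (M+A)^{-1}[(M+A)A - A^2] = (M+A)^{-1}MA$ works without requiring any commutation. Thus the proof is essentially two applications of the matrix inversion lemma chained by the identity $M\Z_2^\top = \Z_2^\top\D_2^{-1}(CD_2 + Z_2Z_2^\top)$; no further estimates are needed.
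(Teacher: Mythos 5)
Your proof follows the same route as the paper's: apply the matrix inversion lemma (Woodbury) to the outer inverse, simplify the resulting expression $\Z_1 - \Z_1(M+A)^{-1}A$ (with $A=\Z_1^\top\D_1^{-1}\Z_1$, $M=C\bs{I}_{\rs}+\Z_2^\top\D_2^{-1}\Z_2$) into $\Z_1(M+A)^{-1}M$, and finish with the cancellation $M\Z_2^\top=\Z_2^\top\D_2^{-1}(C\D_2+\Z_2\Z_2^\top)$ so the $\D_2$ factor absorbs. The conclusion and the three key steps all match the paper's derivation, so the argument is correct in substance.

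One wrinkle worth fixing: the auxiliary ``push-through identity'' you quote, namely $A - A(M+A)^{-1}A = (M+A)^{-1}MA$, is false in general, and the derivation you offer for it is also flawed. Indeed $(M+A)^{-1}\bigl[(M+A)A - A^2\bigr] = A - (M+A)^{-1}A^2$, which equals $A - A(M+A)^{-1}A$ only when $A$ commutes with $(M+A)^{-1}$ --- precisely the commutation you claim to have avoided. The correct version factors $A$ on the \emph{left}: $A - A(M+A)^{-1}A = A\bigl(\bs{I}_{\rs} - (M+A)^{-1}A\bigr) = A(M+A)^{-1}M$. Luckily none of this matters for your proof: after Woodbury and the two multiplications, what you actually have is $\Z_1 - \Z_1(M+A)^{-1}A = \Z_1\bigl(\bs{I}_{\rs}-(M+A)^{-1}A\bigr) = \Z_1(M+A)^{-1}M$, which needs no push-through at all --- just the trivial factorization $\bs{I}_{\rs}-(M+A)^{-1}A = (M+A)^{-1}M$. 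That is exactly the step the paper takes, and your final answer coincides with it.
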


\begin{proof}
We have
\eq{
  \big(\D_1   & +   \Z_1 \big(C \bs{I}_{\rs} +    \Z_2^\top \D_2^{-1}   \Z_2 \big)^{-1}   \Z_1^\top  \big)^{-1}  \\
  & = \D_1^{-1} - \D_1^{-1}   \Z_1 \big( C \bs{I}_{\rs} +    \Z_2^\top   \D_2^{-1}  \Z_2 + \Z_1^\top \D_1^{-1} \Z_1 \big)^{-1}    \Z_1^\top   \D_1^{-1}. 
}
Therefore,
\eq{
  \D_1 \big(\D_1  & +   \Z_1 \big(\bs{I}_{\rs} +   \Z_2^\top \D_2^{-1}  \Z_2 \big)^{-1} \!\!  \Z_1^\top  \big)^{-1}  \Z_1   \\
& =    \Z_1  \Big(\bs{I}_{\rs}  - \big( C \bs{I}_{\rs} \! +   \!  \Z_2^\top \D_2^{-1}  \Z_2 + \Z_1^\top \D_1^{-1} \Z_1 \big)^{-1}  \!\!   \Z_1^\top   \D_1^{-1} \Z_1 \Big) \\
& =   \Z_1    \big( C \bs{I}_{\rs} +    \Z_2^\top \D_2^{-1}  \Z_2 + \Z_1^\top \D_1^{-1} \Z_1 \big)^{-1}     \big(C  \bs{I}_{\rs} +   \Z_2^\top \D_2^{-1}   \Z_2 \big)
}
Then,
\eq{
  \Z_1    \big(C  \bs{I}_{\rs} +    \Z_2^\top \D_2^{-1}  \Z_2 + \Z_1^\top \D_1^{-1} \Z_1 \big)^{-1}   &  \big(  C \bs{I}_{\rs} +   \Z_2^\top \D_2^{-1}   \Z_2 \big) \Z_2^\top  \left(    \Z_2 \Z_2^\top   +  C  \D_2\right)^{-1} \D_2  \\
& = \Z_1 \left(  C \bs{I}_{\rs} +  \Z_2 ^\top \D_2^{-1} \Z_2 + \Z_1^\top \D_1^{-1} \Z_1   \right)^{-1}   \Z_2^\top .
}
\end{proof}

\begin{proposition}
\label{prop:offdiagonalfrob}
For some diagonal positive definite   $\bs{A} \coloneqq \mathrm{diag}(\{a_j\}_{j = 1}^{\ru})$ and $\bs{B} \coloneqq \mathrm{diag}(\{b_j\}_{j = 1}^{d -\ru})$, we let 
\eq{
\D_1 \coloneqq   \frac{  \bs{A}  \exp(- t    \bs{A} ) }{\bs{I}_{\ru} -\exp(- t  \bs{A} ) }, \quad   \D_2 \coloneqq     \frac{  \bs{B}  \exp(- t    \bs{B} )}{\bs{I}_{d - \ru} -\exp(- t    \bs{B} ) },
}
For some $\Z_1  \in \R^{\ru \times \rs}$,    $\Z_2  \in \R^{(d - \ru) \times \rs}$,  and $C > 0$, we define
\eq{
\M \coloneqq  \exp(0.5t  \bs{A}  ) \Z_1 \left( C \bs{I}_{\rs} +  \Z_2^\top \D_2^{-1} \Z_2 +  \Z_1^\top \D_1^{-1}   \Z_1  \right)^{-1}   \Z_2^\top \exp(0.5t  \bs{B}  ).
}
We have 
\eq{
\norm{\M}_F^2 \leq  \tilde{C}
\sum_{i = 1}^{\ru \wedge \rs}     \left(  \lambda_{\mathrm{max}}( \Z_1 \Z_1^\top )  \exp(t (a_i + b_i) ) \wedge    \left(C +   \tfrac{\lambda_{\mathrm{min}}(\Z_2^\top \Z_2)}{\lambda_{\mathrm{max}}(\D_2)} \right)  \frac{  a_i     \exp(t (a_i +b_i) ) }{  \exp(t a_i)  - 1 } \right) 
}
where 
\eq{
\tilde{C} = \frac{\lambda_{\mathrm{max}}(\Z_2^\top \Z_2)}{   \left( C +   \tfrac{\lambda_{\mathrm{min}}(\Z_2^\top \Z_2)}{\lambda_{\mathrm{max}}(\D_2)} \right)^2  }
}
\end{proposition}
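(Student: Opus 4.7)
The plan is to express $\norm{\M}_F^2$ as the trace of a product of two positive semidefinite $\rs \times \rs$ matrices and then bound the eigenvalues of each factor separately before combining them via the von Neumann trace inequality. Writing $K \coloneqq C \bs{I}_{\rs} + \Z_1^\top \D_1^{-1} \Z_1 + \Z_2^\top \D_2^{-1} \Z_2$ and setting
\[
\bs{Q} \coloneqq K^{-1/2} \Z_1^\top e^{t \bs{A}} \Z_1 K^{-1/2}, \qquad \bs{R} \coloneqq K^{-1/2} \Z_2^\top e^{t \bs{B}} \Z_2 K^{-1/2},
\]
the cyclic trace identity yields $\norm{\M}_F^2 = \tr(\bs{Q} \bs{R})$. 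Since $\bs{Q}$ factors through an $\ru$-dimensional intermediate space, it has at most $\ru \wedge \rs$ nonzero eigenvalues, so von Neumann's trace inequality for PSD matrices gives
\[
\norm{\M}_F^2 \leq \sum_{k=1}^{\ru \wedge \rs} \lambda_k(\bs{Q})\, \lambda_k(\bs{R}),
\]
where eigenvalues are arranged in decreasing order; throughout, the diagonals of $\bs{A}$ and $\bs{B}$ are also assumed decreasing, as in the paper's applications.

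\textbf{Bounding $\lambda_k(\bs{R})$.} From $\Z_2^\top \D_2^{-1} \Z_2 \succeq [\lambda_{\min}(\Z_2^\top \Z_2)/\lambda_{\max}(\D_2)] \bs{I}_{\rs}$ we obtain $K \succeq \alpha_C \bs{I}_{\rs}$ with $\alpha_C \coloneqq C + \lambda_{\min}(\Z_2^\top \Z_2)/\lambda_{\max}(\D_2)$, and hence $\bs{R} \preceq \alpha_C^{-1} \Z_2^\top e^{t \bs{B}} \Z_2$. Applying $\sigma_k(AB) \leq \sigma_k(A)\,\norm{B}_2$ with $A = e^{0.5 t \bs{B}}$ and $B = \Z_2$ yields $\lambda_k(\bs{R}) \leq \alpha_C^{-1} e^{t b_k}\, \lambda_{\max}(\Z_2^\top \Z_2)$.

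\textbf{Bounding $\lambda_k(\bs{Q})$.} The same argument for $\bs{Q}$ produces bound (a): $\lambda_k(\bs{Q}) \leq \alpha_C^{-1} e^{t a_k}\, \lambda_{\max}(\Z_1 \Z_1^\top)$. For the sharper bound (b), I would exploit the $\D_1$ term in $K$: from $K \succeq \Z_1^\top \D_1^{-1} \Z_1$, a Woodbury-type manipulation applied to $\bs{Y} \coloneqq \D_1^{-1/2} \Z_1$, via the identity $\bs{Y}(\bs{Y}^\top \bs{Y} + \epsilon \bs{I})^{-1} \bs{Y}^\top = \bs{Y} \bs{Y}^\top (\bs{Y} \bs{Y}^\top + \epsilon \bs{I})^{-1}$ with $\epsilon \downarrow 0$ to absorb rank-deficiency, yields the operator inequality $\Z_1 K^{-1} \Z_1^\top \preceq \D_1$. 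Conjugating by $e^{0.5 t \bs{A}}$ and using $e^{0.5 t \bs{A}} \D_1 e^{0.5 t \bs{A}} = \bs{A}/(\bs{I}_{\ru} - e^{-t \bs{A}})$, the $k$-th largest eigenvalue of the resulting diagonal matrix gives $\lambda_k(\bs{Q}) \leq a_k / (1 - e^{-t a_k}) = a_k e^{t a_k}/(e^{t a_k} - 1)$.

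\textbf{Assembly and main obstacle.} Combining the minimum of bounds (a) and (b) for $\lambda_k(\bs{Q})$ with the bound on $\lambda_k(\bs{R})$ and summing from $k = 1$ to $\ru \wedge \rs$ produces exactly the claimed estimate with $\tilde C = \lambda_{\max}(\Z_2^\top \Z_2)/\alpha_C^2$. The main technical subtlety is the passage from the Loewner lower bound $K \succeq \Z_1^\top \D_1^{-1} \Z_1$ to the operator-norm upper bound $\Z_1 K^{-1} \Z_1^\top \preceq \D_1$: the Woodbury identity handles the invertible case cleanly, but one must carefully regularize and take $\epsilon \downarrow 0$ to cover the possibility that $\Z_1^\top \D_1^{-1} \Z_1$ is rank-deficient (which occurs when $\ru < \rs$).
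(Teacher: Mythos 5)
Your proposal is correct and follows essentially the same route as the paper's proof. Both write $\norm{\M}_F^2 = \tr(\bs{Q}\bs{R})$ with $\bs{Q}$ and $\bs{R}$ equal (up to the choice of $\M\M^\top$ versus $\M^\top\M$ per factor) to the paper's $\M_1^\top\M_1$ and $\M_2\M_2^\top$, invoke the rank bound together with von Neumann's trace inequality, and then bound $\lambda_k(\bs{R})$ via the $\Z_2$ term in $K$ and $\lambda_k(\bs{Q})$ by the minimum of a crude bound and a sharper one coming from the $\D_1$ term. Two small remarks. First, for bound (b) the paper reaches $\Z_1 K^{-1}\Z_1^\top \preceq \D_1$ by a push-through identity (rewriting $\M_1\M_1^\top$ as $\frac{1}{\alpha_C}\tilde{\Z}_1\tilde{\Z}_1^\top(\alpha_C\tilde{\D}_1 + \tilde{\Z}_1\tilde{\Z}_1^\top)^{-1}\alpha_C\tilde{\D}_1$ and observing it is dominated by both $\alpha_C\tilde{\D}_1$ and by $\lambda_{\max}(\Z_1\Z_1^\top)e^{t\bs{A}}$), whereas you prove the same inequality directly; since $C > 0$ already makes $K$ strictly dominate $\Z_1^\top\D_1^{-1}\Z_1$, your $\epsilon\downarrow 0$ regularization can be dropped. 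Second, the intermediate display $\bs{R} \preceq \alpha_C^{-1}\Z_2^\top e^{t\bs{B}}\Z_2$ is not a Loewner inequality in general because $K$ and $\Z_2^\top e^{t\bs{B}}\Z_2$ need not commute; what you actually use (and what suffices) is the eigenvalue inequality $\lambda_k(\bs{R}) \leq \alpha_C^{-1}\lambda_k(\Z_2^\top e^{t\bs{B}}\Z_2)$, which follows either from $\norm{K^{-1}}_2 \leq \alpha_C^{-1}$ and submultiplicativity of singular values, or (as the paper does) by stating the Loewner bound for the transposed Gram matrix $\tilde{\Z}_2 K^{-1}\tilde{\Z}_2^\top$, where it does hold by congruence.
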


\begin{proof}
For convenience, we will use 
\begin{alignat}{2}
& \tilde{\D}_{1} \coloneqq   \frac{   \bs{A}  }{\bs{I}_{\ru} -\exp(- t   \bs{A}  ) },  \quad  && \tilde{\D}_{2} =   \frac{ \bs{B} }{\bs{I}_{d - \ru} -\exp(- t  \bs{B}  ) },  \\
& \tilde{\Z}_1  \coloneqq    \exp(0.5t   \bs{A}   ) \Z_1 , \quad  && \tilde{\Z}_2 \coloneqq   \exp(0.5t   \bs{B}  ) \Z_2.
\end{alignat}
We let
\eq{
& \M_1  \coloneqq   \tilde{\Z}_1  \left( C  \bs{I}_{\rs} + \tilde{\Z}_2^\top  \tilde{\D}_{2}^{-1}  \tilde{\Z}_2 +    \tilde{\Z}_1^\top  \tilde{\D}_{1}^{-1}    \tilde{\Z}_1   \right)^{\frac{-1}{2}}, \\
&
\M_2 \coloneqq   \left(  C \bs{I}_{\rs} + \tilde{\Z}_2^\top  \tilde{\D}_{2}^{-1}  \tilde{\Z}_2 +    \tilde{\Z}_1^\top    \tilde{\D}_{1}^{-1}    \tilde{\Z}_1   \right)^{\frac{-1}{2}}  \tilde{\Z}_2^\top
}
We observe that
\eq{
\norm{\M}_F^2 = \tr(   \M_1^\top  \M_1  \M_2 \M_2^\top ) \leq \sum_{i =1}^{\ru \wedge \rs} \lambda_i(  \M_1 \M_1^\top )    \lambda_i(  \M_2^\top \M_2  )
}
where we used that  $\mathrm{rank}(\M_1 \M_1^\top)  \leq \ru \wedge \rs$ and  Von Neumann's trace inequality in the last part. We have
\eq{
 \M_2^\top \M_2 & \preceq    \exp(0.5t \bs{B} ) \Z_2^\top   \big(  C \bs{I}_{\rs} +  \tfrac{1}{\lambda_{\mathrm{max}}(\D_2)}  \Z_2^\top  \Z_2  \big)^{-1}  \Z_2  \exp(0.5t  \bs{B}  )  \\
& \preceq  \frac{\lambda_{\mathrm{max}}(\Z_2^\top \Z_2)}{C + \frac{\lambda_{\mathrm{max}}(\Z_2^\top \Z_2)}{\lambda_{\mathrm{max}}(\D_2)}}   \exp( t  \bs{B}  ) 
}
On the other hand,
\eq{
& \M_1 \M_1^\top   \preceq     \tilde{\Z}_1  \left(  \left(C +   \frac{\lambda_{\mathrm{min}}(\Z_2^\top \Z_2)}{\lambda_{\mathrm{max}}(\D_2)} \right) \bs{I}_{\rs}  +    \tilde{\Z}_1^\top \tilde{\bs{D}}_1^{-1}    \tilde{\Z}_1   \right)^{-1} \tilde{\Z}_1^\top \\
& = \frac{1}{   C +   \tfrac{\lambda_{\mathrm{min}}(\Z_2^\top \Z_2)}{\lambda_{\mathrm{max}}(\D_2)}  }    \tilde{\Z}_1  \left(   \bs{I}_{\rs}  +    \tilde{\Z}_1^\top  \left( \left(C +   \tfrac{\lambda_{\mathrm{min}}(\Z_2^\top \Z_2)}{\lambda_{\mathrm{max}}(\D_2)} \right) \tilde{\bs{D}}_1 \right)^{-1}    \tilde{\Z}_1   \right)^{-1} \tilde{\Z}_1^\top \\
 & = \frac{1}{   C +   \tfrac{\lambda_{\mathrm{min}}(\Z_2^\top \Z_2)}{\lambda_{\mathrm{max}}(\D_2)}  }    \tilde{\Z}_1  \tilde{\Z}_1^\top     \left(   \left(C +   \tfrac{\lambda_{\mathrm{min}}(\Z_2^\top \Z_2)}{\lambda_{\mathrm{max}}(\D_2)} \right) \tilde{\bs{D}}_1  +   \tilde{\Z}_1  \tilde{\Z}_1^\top \right)^{-1}   \left( \left(C +   \tfrac{\lambda_{\mathrm{min}}(\Z_2^\top \Z_2)}{\lambda_{\mathrm{max}}(\D_2)} \right) \tilde{\bs{D}}_1  \right)
}
We have the following at the same time:
\begin{itemize}[leftmargin=*]
\item $  \tilde{\Z}_1  \tilde{\Z}_1^\top     \left(   \left(C \! + \!   \tfrac{\lambda_{\mathrm{min}}(\Z_2^\top \Z_2)}{\lambda_{\mathrm{max}}(\D_2)} \right) \tilde{\bs{D}}_1  +   \tilde{\Z}_1  \tilde{\Z}_1^\top \right)^{-1}  \!\!\!   \left(C +   \tfrac{\lambda_{\mathrm{min}}(\Z_2^\top \Z_2)}{\lambda_{\mathrm{max}}(\D_2)} \right) \tilde{\bs{D}}_1    \preceq   \left(C +   \tfrac{\lambda_{\mathrm{min}}(\Z_2^\top \Z_2)}{\lambda_{\mathrm{max}}(\D_2)} \right) \tilde{\bs{D}}_1   $  
\item    $ \tilde{\Z}_1  \tilde{\Z}_1^\top     \left(   \left(C +   \tfrac{\lambda_{\mathrm{min}}(\Z_2^\top \Z_2)}{\lambda_{\mathrm{max}}(\D_2)} \right) \tilde{\bs{D}}_1  +   \tilde{\Z}_1  \tilde{\Z}_1^\top \right)^{-1}  \!\!\! \left(C +   \tfrac{\lambda_{\mathrm{min}}(\Z_2^\top \Z_2)}{\lambda_{\mathrm{max}}(\D_2)} \right) \tilde{\bs{D}}_1    \preceq       \lambda_{\mathrm{max}}( \Z_1 \Z_1^\top ) \exp(t \bs{A} )$
\end{itemize}
Therefore,  for $i \leq r \wedge \rs,$ we have
\eq{
\lambda_i(\M_1 \M_1^\top ) \leq    \frac{1}{   C +   \tfrac{\lambda_{\mathrm{min}}(\Z_2^\top \Z_2)}{\lambda_{\mathrm{max}}(\D_2)}  }     \left(  \lambda_{\mathrm{max}}( \Z_1 \Z_1^\top )  \exp(t a_i ) \wedge    \left(C +   \tfrac{\lambda_{\mathrm{min}}(\Z_2^\top \Z_2)}{\lambda_{\mathrm{max}}(\D_2)} \right)  \tfrac{  a_i     \exp(t a_i) }{   \exp(t a_i)  - 1} \right) 
}
Therefore,
\eq{
\norm{\M}_F^2 \leq  \tilde{C}
\sum_{i = 1}^{\ru \wedge \rs}     \left(  \lambda_{\mathrm{max}}( \Z_1 \Z_1^\top )  \exp(t (a_i + b_i) ) \wedge    \left(C +   \tfrac{\lambda_{\mathrm{min}}(\Z_2^\top \Z_2)}{\lambda_{\mathrm{max}}(\D_2)} \right)  \frac{  a_i     \exp(t (a_i +b_i) ) }{  \exp(t a_i)  - 1 } \right) .
}
\end{proof}

\subsubsection{Additional bounds for discrete-time analysis}
\begin{proposition}
\label{prop:residualhigherorder} 
For some positive definite diagonal matrices    $\bs{D}_0,   \bs{D}_1  \in \R^{r \times r}$ and symmetric matrices $\G, \bs{\nu} \in \R^{r \times r}$, we let
\eq{
\bs{V} \coloneqq 2  \bs{D}_0^{\frac{1}{2}}  \G  \bs{D}_0^{\frac{1}{2}} -  \bs{D}_1  ~~ \text{and} ~~  \bs{\zeta} \coloneqq \bs{D}_0^{\frac{1}{2}}  \bs{\nu}   \bs{D}_0^{\frac{1}{2}}  ~~ \text{and} ~~   \grave{\bs{V}}  \coloneqq  \bs{V}  +  \bs{\zeta}, 
}
where
\begin{itemize}
\item $\norm{\G}_2 \leq L_{G}$ and $\norm{\bs{\nu}}_2 \leq L_{\nu}$  and  $\norm{\bs{D}_0 }_2 \leq L_0$.
\item  $ \norm{ \bs{D}_0^{-1}  \bs{D}_1 }_2  \leq  L_{1/0}$ and  $ \norm{ \bs{D}_0  \bs{D}_1^{-1} }_2  \leq  L_{0/1}$.
\item For notational convenience,  let $L_F \coloneqq  2 L_{G} +  L_{1/0}$ and $L_{\grave{F}} \coloneqq  2(L_{G} + L_{\nu}) +  L_{1/0}$ . 
\end{itemize}
For   $0 \leq \eta < \tfrac{1}{  L_{\grave{F}}  L_0}$,  we have that $(\bs{I}_r + \eta  \bs{V})$ and   $(\bs{I}_r + \eta  \grave{\bs{V}})$ are invertible and  the following bounds holds:
{\small
\begin{alignat}{2}
&   -  C_1 \bs{D}_1   \preceq \bs{V}^2  \left( \bs{I}_r + \eta   \bs{V} \right)^{-1}  \!\!  \bs{\zeta}     \grave{\bs{V}}   \left( \bs{I}_r + \eta \grave{\bs{V}} \right)^{-1} \!\! \preceq C_1 \bs{D}_1, \quad  &&   - C_2 \bs{D}_1  \! \preceq \! \bs{V} \bs{\zeta}     \grave{\bs{V}}^2   \left( \bs{I}_r + \eta \grave{\bs{V}} \right)^{-1}  \!\!  \preceq C_2 \bs{D}_1   ~~~~    \label{eq:hobound1} \\
&     -   C_3 \bs{D}_1  \preceq \bs{V}^3  \left( \bs{I}_r + \eta   \bs{V} \right)^{-1}  \!\! \bs{\zeta} \preceq C_3 \bs{D}_1, \quad 
&& -  C_4 \bs{D}_1 \! \preceq  \! \bs{\zeta}  \grave{\bs{V}}^3  \left( \bs{I}_r + \eta \grave{\bs{V}} \right)^{-1}    \preceq C_4 \bs{D}_1,  ~~~~  \label{eq:hobound2}
\end{alignat} }
where
\begin{alignat}{2}
& C_1 =  \frac{ L_{\nu} L_{0/1} L_F^2  L_{\grave{F}}  L_0^3}{ \big( 1 - \eta  L_{F} L_{0} \big) \big(1 - \eta  L_{\grave{F}}  L_{0} \big)}, \quad  &&    C_2 =  \frac{ L_{\nu} L_{0/1}  L_F  L_{\grave{F}} ^2 L_0^3}{1 - \eta  L_{\grave{F}}   L_{0}} \\
&C_3 =  \frac{ L_{\nu} L_{0/1}    L_F^3  L_0^3}{1 - \eta   L_F  L_{0}},    
&&  C_4 = \frac{ L_{\nu} L_{0/1}  L_{\grave{F}}^3  L_0^3}{1 - \eta  L_{\grave{F}}  L_{0}}. 
\end{alignat}
\end{proposition}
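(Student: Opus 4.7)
The plan is to reduce each of the four products to a form $\bs{D}_0^{1/2} \bs{K}_i \bs{D}_0^{1/2}$ for an auxiliary matrix $\bs{K}_i$ whose operator norm matches the numerator of $C_i/L_{0/1}$, and then to translate this into a Loewner bound against $\bs{D}_1$ via the diagonal comparison $\bs{D}_0 \preceq L_{0/1}\bs{D}_1$ implied by $\|\bs{D}_0\bs{D}_1^{-1}\|_2\leq L_{0/1}$.

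First I would introduce the conjugated matrices $\tilde{F} := 2\G - \bs{D}_0^{-1}\bs{D}_1$, $\tilde{\grave{F}} := \tilde F + 2\bs{\nu}$, and $\bs{\nu}$ itself, so that $\bs{V} = \bs{D}_0^{1/2}\tilde F\bs{D}_0^{1/2}$, $\grave{\bs{V}} = \bs{D}_0^{1/2}\tilde{\grave F}\bs{D}_0^{1/2}$, and $\bs{\zeta} = \bs{D}_0^{1/2}\bs{\nu}\bs{D}_0^{1/2}$. The triangle inequality gives $\|\tilde F\|_2\leq 2L_G + L_{1/0} = L_F$ and $\|\tilde{\grave F}\|_2\leq L_{\grave F}$, hence $\|\bs{V}\|_2\leq L_F L_0$ and $\|\grave{\bs{V}}\|_2\leq L_{\grave F}L_0$. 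Combined with $\eta L_{\grave F}L_0<1$ (and $L_F\leq L_{\grave F}$), the Neumann series establishes invertibility of $\bs{I}_r+\eta\bs{V}$ and $\bs{I}_r+\eta\grave{\bs{V}}$; the identity $\bs{I}_r+\eta\bs{V} = \bs{D}_0^{1/2}(\bs{D}_0^{-1}+\eta\tilde F)\bs{D}_0^{1/2}$ yields the sandwich bound $\|\bs{D}_0^{1/2}(\bs{I}_r+\eta\bs{V})^{-1}\bs{D}_0^{1/2}\|_2 = \|(\bs{D}_0^{-1}+\eta\tilde F)^{-1}\|_2 \leq L_0/(1-\eta L_F L_0)$, and analogously for $\grave{\bs{V}}$.

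Next, for each of the four products I would commute $(\bs{I}_r+\eta\bs{V})^{-1}$ past $\bs{V}$ (and similarly for $\grave{\bs{V}}$, using that they commute with their resolvents) so as to arrange that the outermost factors are both $\bs{D}_0^{1/2}$, and every internal $\bs{D}_0^{-1/2}$ introduced by an inverse is cancelled by an adjacent $\bs{D}_0^{1/2}$ from a neighboring $\bs{V}$, $\grave{\bs{V}}$, or $\bs{\zeta}$. For Product 3 this is immediate: $\bs{V}^3(\bs{I}_r+\eta\bs{V})^{-1}\bs{\zeta} = \bs{D}_0^{1/2}\bs{K}_3\bs{D}_0^{1/2}$ with $\bs{K}_3 = \tilde F\bs{D}_0\tilde F\bs{D}_0\tilde F(\bs{D}_0^{-1}+\eta\tilde F)^{-1}\bs{\nu}$, whose norm is $\leq L_\nu L_F^3 L_0^3/(1-\eta L_F L_0)$. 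For Product 1, rewriting $\grave{\bs{V}}(\bs{I}_r+\eta\grave{\bs{V}})^{-1} = (\bs{I}_r+\eta\grave{\bs{V}})^{-1}\grave{\bs{V}}$ produces the desired cancellation and gives $\bs{K}_1 = \tilde F\bs{D}_0\tilde F\bs{A}\bs{\nu}\bs{A}'\tilde{\grave F}$, where $\bs{A} := \bs{D}_0^{1/2}(\bs{I}_r+\eta\bs{V})^{-1}\bs{D}_0^{1/2}$ and $\bs{A}' := \bs{D}_0^{1/2}(\bs{I}_r+\eta\grave{\bs{V}})^{-1}\bs{D}_0^{1/2}$; sub-multiplicativity then yields $\|\bs{K}_1\|_2 \leq L_\nu L_F^2L_{\grave F}L_0^3/[(1-\eta L_F L_0)(1-\eta L_{\grave F}L_0)]$. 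Products 2 and 4 are handled identically, and the resulting norms match $C_i/L_{0/1}$ precisely.

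Finally, if $\bs{M} = \bs{D}_0^{1/2}\bs{K}_i\bs{D}_0^{1/2}$ then $\sym(\bs{M}) = \bs{D}_0^{1/2}\sym(\bs{K}_i)\bs{D}_0^{1/2}$ and $\pm\sym(\bs{K}_i)\preceq \|\bs{K}_i\|_2\bs{I}_r$, so $\pm\sym(\bs{M})\preceq \|\bs{K}_i\|_2\bs{D}_0$; applying $\bs{D}_0\preceq L_{0/1}\bs{D}_1$ (valid since $\bs{D}_0,\bs{D}_1$ are diagonal) delivers $\pm\sym(\bs{M})\preceq L_{0/1}\|\bs{K}_i\|_2\bs{D}_1 = C_i\bs{D}_1$. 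The main obstacle is purely bookkeeping in the middle step: one must carefully choose on which side of each resolvent to commute the matching $\bs{V}$ or $\grave{\bs{V}}$ so that every internal $\bs{D}_0^{-1/2}$ is annihilated, leaving exactly one outer $\bs{D}_0^{1/2}$ on each side. This is precisely what turns the naive $L_0^4$ operator-norm estimate into the sharper $L_0^3$ appearing in $C_i$, together with the $L_{0/1}$ factor needed for the final conversion from $\bs{D}_0$ to $\bs{D}_1$.
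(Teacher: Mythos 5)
Your proposal is correct and follows essentially the same argument as the paper: conjugate each product by $\bs{D}_0^{\pm 1/2}$ so that the internal $\bs{D}_0^{-1/2}$ factors from the resolvents cancel, bound the resulting matrix in operator norm using $\|\bs{V}\|_2 \le L_F L_0$, $\|\grave{\bs{V}}\|_2 \le L_{\grave F} L_0$ and the resolvent estimate, and translate the operator-norm bound into a Loewner bound via $\bs{D}_0 \preceq L_{0/1}\bs{D}_1$. Your version is a bit more explicit than the paper's (in particular, spelling out that the Loewner bounds are on the symmetric part and showing why $\sym(\bs{D}_0^{1/2}\bs{K}\bs{D}_0^{1/2}) = \bs{D}_0^{1/2}\sym(\bs{K})\bs{D}_0^{1/2}$), but the decomposition, the key commutations, and the resulting constants $C_1,\dots,C_4$ are identical.
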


\begin{proof}
Note that $\norm{\bs{V}}_2  \vee \norm{\grave{\bs{V}}}_2 \leq  L_{\grave{F}}  L_0$,  therefore,  if   $0 \leq \eta < \frac{1}{L_{\grave{F}} L_0}$,  $(\bs{I}_r + \eta  \bs{V})$ and   $(\bs{I}_r + \eta  \grave{\bs{V}})$ are invertible.   For the following, we introduce the notation
\eq{
\grave{\G} \coloneqq \G + \bs{\nu}  ~~ \text{and} ~~ \F = 2  \G +  \bs{D}_0^{-1}  \bs{D}_1 ~~ \text{and} ~~ \grave{\F} = 2 \grave{\G} +  \bs{D}_0^{-1}  \bs{D}_1.
}
Note that we have $\norm{  \F }_2 \leq     L_F$   and  $ \norm{  \grave{\F} }_2 \leq     L_{\grave{F}}$.  For the left  part of   \eqref{eq:hobound1}, we write
\eq{
 \bs{D}_0^{- \frac{1}{2}}\bs{V}^2  & \left( \bs{I}_r + \eta   \bs{V} \right)^{-1}   \bs{\zeta}     \grave{\bs{V}}   \left( \bs{I}_r + \eta \grave{\bs{V}} \right)^{-1} \bs{D}_0^{- \frac{1}{2}} \\
& = \F   \bs{D}_0 \F   \bs{D}_0  \left( \bs{I}_r + \eta   \F   \bs{D}_0  \right)^{-1} \bs{\nu}  \bs{D}_0\grave{\F} \left( \bs{I}_r + \eta      \bs{D}_0 \grave{\F}  \right)^{-1}.
}
Therefore, we have
\eq{
\Big \lVert  \F   \bs{D}_0 \F   \bs{D}_0  \big( \bs{I}_r  + \eta   \F   \bs{D}_0  \big)^{-1} \bs{\nu} &  \bs{D}_0\grave{\F} \big( \bs{I}_r + \eta      \bs{D}_0 \grave{\F}  \big)^{-1}  \Big \rVert_2 \\
&  \leq \frac{\norm{\F}^2_2 \norm{\grave{\F}}_2  \norm{\D_0}_2^3 \norm{\bs{\nu}}_2}{\big( 1 - \eta \norm{\F }_2 \norm{\D_0}_2   \big) \big( 1 - \eta \norm{\grave{\F} }_2 \norm{\D_0}_2   \big) } \\
& \leq \frac{L_F^2 L_{\grave{F}} L_0^3 L_{\nu}}{ (1 - \eta L_{F} L_{0}) (1 - \eta L_{\grave{F}} L_{0})}.
}
Therefore, we have the  bound.  For the right  part of   \eqref{eq:hobound1}, we write
\eq{
 \bs{D}_0^{- \frac{1}{2}} \bs{V} \bs{\zeta}     \grave{\bs{V}}^2   \left( \bs{I}_r + \eta \grave{\bs{V}} \right)^{-1}   \bs{D}_0^{- \frac{1}{2}}
= \F   \bs{D}_0 \bs{\nu} \D_0 \grave{\F}    \D_0 \grave{\F}   \left( \bs{I}_r + \eta  \D_0 \grave{\F}   \right)^{-1} 
}
Therefore, we have
\eq{
\norm*{   \F   \bs{D}_0 \bs{\nu} \D_0 \grave{\F}    \D_0 \grave{\F}   \left( \bs{I}_r + \eta  \D_0 \grave{\F}   \right)^{-1}   }_2    \leq \frac{\norm{\F}_2 \norm{\grave{\F}}^2_2  \norm{\D_0}_2^3 \norm{\bs{\nu}}_2}{1 - \eta \norm{\grave{\F} }_2 \norm{\D_0}_2 }  \leq \frac{L_F L_{\grave{F}}^2 L_0^3 L_{\nu}}{1 - \eta L_{\grave{F}} L_0},
}
which gives us the bound.  For  the left  part of   \eqref{eq:hobound2}, we write
\eq{
 \bs{D}_0^{- \frac{1}{2}}   \bs{V}^3  \left( \bs{I}_r + \eta   \bs{V} \right)^{-1} \bs{\zeta}  \bs{D}_0^{- \frac{1}{2}} 
 = (\F \D_0)^3  \left(  \bs{I}_r + \eta   \F   \D_0  \right)^{-1} \bs{\nu}
}
 Therefore, we have
 \eq{
\norm*{    (\F \D_0)^3  \left(  \bs{I}_r + \eta   \F   \D_0  \right)^{-1} \nu  }_2 &  \leq \frac{\norm{\F}^3_2   \norm{\D_0}_2^3  \norm{\bs{\nu}}_2 }{ 1 - \eta \norm{\F }_2 \norm{\D_0}_2  }  \leq \frac{L_{\nu} L_F^3  L_0^3}{1 - \eta L_F L_{0}} ,
}
which gives us the  bound.  The  the right  part of   \eqref{eq:hobound2} can be derived similarly. 
\end{proof}
 
\begin{proposition}
\label{prop:monotoneresidual}
Let $\bs{V},  \grave{\bs{V}}\in \R^{r \times r}$ be symmetric matrices such that  $\grave{\bs{V}} =  \bs{V} +  \bs{\zeta}$ .  We have
\eq{
\grave{\bs{V}} \left( \bs{I}_r + \eta \grave{\bs{V}}   \right)^{-1} -   \bs{V}  \left( \bs{I}_r + \eta  \bs{V}  \right)^{-1} = \left( \bs{I}_r + \eta   \bs{V} \right)^{-1}   \bs{\zeta}  \left( \bs{I}_r + \eta  \grave{\bs{V}}  \right)^{-1}.
}
Moreover,  given that
\eq{
\M \coloneqq      \bs{\zeta} - \eta   \bs{V} \bs{\zeta}  - \eta  \bs{\zeta}  \bs{V}  - \eta  \bs{\zeta}^2  +    \eta^2  \bs{\zeta}   \bs{V}  \bs{\zeta}  +    \eta^2  \bs{\zeta} ^3   + \eta^2    \bs{V}   \bs{\zeta}    \bs{V} 
}
under the conditions of  Proposition \ref{prop:residualhigherorder}, we have for any $\c_d > 0$,
\eq{
  -     \frac{2}{\c_d^2}  \eta^2  \bs{\zeta} ^2  \!  - \!   \eta^2 \c_d^2 \bs{V}^4  \!  -  \!     \eta^2 \c_d^2 \bs{V}  \bs{\zeta}^2 \bs{V}    \!  -  \!    C   \eta^3  \D_1 & \preceq 
 \left( \bs{I}_r + \eta   \bs{V} \right)^{-1}    \bs{\zeta}  \left( \bs{I}_r + \eta  \grave{\bs{V}}  \right)^{-1}  -  \M \\
 & \preceq    \frac{2}{\c_d^2}  \eta^2  \bs{\zeta}^2 +  \eta^2 \c_d^2  \bs{V}^4 +     \eta^2 \c_d^2 \bs{V}  \bs{\zeta}^2 \bs{V} +  C \eta^3  \D_1
}
where $C = C_1 + C_2 + C_3 + C_4$, i.e., the sum of the constants given in   Proposition \ref{prop:residualhigherorder}.
\end{proposition}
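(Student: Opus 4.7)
For the first claim I would write $\grave{\bs{V}}(\bs{I}_r+\eta\grave{\bs{V}})^{-1} - \bs{V}(\bs{I}_r+\eta\bs{V})^{-1} = (\bs{I}_r+\eta\bs{V})^{-1}\bigl[(\bs{I}_r+\eta\bs{V})\grave{\bs{V}} - \bs{V}(\bs{I}_r+\eta\grave{\bs{V}})\bigr](\bs{I}_r+\eta\grave{\bs{V}})^{-1}$ and observe that the $\eta\bs{V}\grave{\bs{V}}$ cross terms inside the bracket cancel, leaving $\grave{\bs{V}} - \bs{V} = \bs{\zeta}$. Invertibility of the two resolvents follows from $\|\bs{V}\|_2\vee\|\grave{\bs{V}}\|_2 \leq L_{\grave{F}}L_0 < \eta^{-1}$.

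For the second claim I would set $A := (\bs{I}_r+\eta\bs{V})^{-1}$ and $B := (\bs{I}_r+\eta\grave{\bs{V}})^{-1}$, exploit the commutations $A\bs{V} = \bs{V}A$ and $B\grave{\bs{V}} = \grave{\bs{V}}B$, and iterate the Neumann relations $A = \bs{I}_r - \eta\bs{V}A$, $B = \bs{I}_r - \eta\grave{\bs{V}}B$ inside $A\bs{\zeta}B$. A direct iteration leaves an awkward residue of the form $-\eta^3\bs{V}^2 A\bs{\zeta}\grave{\bs{V}}$, which I would convert using the identity $\bs{V}^2 A\bs{\zeta}\grave{\bs{V}} = \bs{V}^2 A\bs{\zeta}\grave{\bs{V}}B + \eta\bs{V}^2 A\bs{\zeta}\grave{\bs{V}}^2 B$; the generated $\eta^4$ piece then cancels the only other fourth-order term that the direct expansion produces, yielding the clean decomposition
\[
A\bs{\zeta}B = \bs{\zeta} - \eta\bs{V}\bs{\zeta} - \eta\bs{\zeta}\grave{\bs{V}} + \eta^2\bs{V}\bs{\zeta}\grave{\bs{V}} + \eta^2\bs{\zeta}\grave{\bs{V}}^2 + \eta^2\bs{V}^2\bs{\zeta} + R,
\]
with $R = -\eta^3\bigl(\bs{\zeta}\grave{\bs{V}}^3 B + \bs{V}^3 A\bs{\zeta} + \bs{V}\bs{\zeta}\grave{\bs{V}}^2 B + \bs{V}^2 A\bs{\zeta}\grave{\bs{V}}B\bigr)$. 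The four summands of $R$ are exactly the four cubic forms controlled in Proposition~\ref{prop:residualhigherorder}, so $-C\eta^3\bs{D}_1 \preceq R \preceq C\eta^3\bs{D}_1$ with $C = C_1+C_2+C_3+C_4$.

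Next I would substitute $\grave{\bs{V}} = \bs{V}+\bs{\zeta}$ into the order-$\leq 2$ polynomial part and subtract $\M$: the $-\eta\bs{\zeta}^2$, $\eta^2\bs{\zeta}\bs{V}\bs{\zeta}$, $\eta^2\bs{\zeta}^3$, and $\eta^2\bs{V}\bs{\zeta}\bs{V}$ pieces cancel by design, leaving exactly $\eta^2(\bs{V}^2\bs{\zeta} + \bs{\zeta}\bs{V}^2 + \bs{V}\bs{\zeta}^2 + \bs{\zeta}^2\bs{V})$. Two invocations of matrix Young (Proposition~\ref{prop:matrixyounginequality}) then finish the job: $\mathsf{A} = \c_d\bs{V}^2$, $\mathsf{B} = \c_d^{-1}\bs{\zeta}$ gives $\bs{V}^2\bs{\zeta} + \bs{\zeta}\bs{V}^2 \preceq \c_d^2\bs{V}^4 + \c_d^{-2}\bs{\zeta}^2$, while $\mathsf{A} = \c_d\bs{\zeta}\bs{V}$, $\mathsf{B} = \c_d^{-1}\bs{\zeta}$ gives $\bs{V}\bs{\zeta}^2 + \bs{\zeta}^2\bs{V} \preceq \c_d^2\bs{V}\bs{\zeta}^2\bs{V} + \c_d^{-2}\bs{\zeta}^2$. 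The matching lower bounds come from the $-\mathsf{A}^\top\mathsf{A} - \mathsf{B}^\top\mathsf{B}$ side of the same inequality, and combining with the two-sided control of $R$ produces the claimed bound with factor $2/\c_d^2$ in front of $\eta^2\bs{\zeta}^2$.

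The hard part will be the Neumann-expansion bookkeeping itself: arranging the iteration so that the $\eta^3$ remainder lands precisely in the four forms bounded by Proposition~\ref{prop:residualhigherorder}, rather than a generic $O(\eta^3)$ error. The critical move is the $\bs{V}^2 A\bs{\zeta}\grave{\bs{V}}$ rewrite above, whose $\eta^4$ companion is exactly what is needed to annihilate the only surviving fourth-order residue from the direct expansion. Once this cancellation is recognized, every remaining step reduces to routine algebra plus one invocation of each of Propositions~\ref{prop:matrixyounginequality} and~\ref{prop:residualhigherorder}.
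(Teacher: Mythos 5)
Your proposal is correct and follows essentially the same route as the paper's proof. Both establish the first identity by factoring out the two resolvents and using commutativity of $\bs{V}$ with $(\bs{I}_r+\eta\bs{V})^{-1}$; both then expand $A\bs{\zeta}B$ (where $A,B$ are the resolvents) until the remainder sits at order $\eta^3$ inside precisely the four resolvent-weighted cubic forms of Proposition~\ref{prop:residualhigherorder}, observe that subtracting $\M$ leaves the four $\eta^2$ cross terms $\bs{V}^2\bs{\zeta}+\bs{\zeta}\bs{V}^2+\bs{V}\bs{\zeta}^2+\bs{\zeta}^2\bs{V}$ (the paper's $\M_1+\M_2$), and control those by matrix Young. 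One small stylistic point in your favor: you carry the $\c_d$ scaling through the Young step explicitly (choosing $\c_d\bs{V}^2,\c_d^{-1}\bs{\zeta}$ and $\c_d\bs{\zeta}\bs{V},\c_d^{-1}\bs{\zeta}$), whereas the paper's text displays the unscaled Young bound and then silently restores the $\c_d$ factors in the conclusion. Your intermediate narrative about an ``awkward $\eta^4$ cancellation'' is not actually needed — iterating $A=\bs{I}_r-\eta\bs{V}A$ and $B=\bs{I}_r-\eta\grave{\bs{V}}B$ directly terminates cleanly at the four cubic remainders with no fourth-order residue — but since you arrive at the correct decomposition this is a presentational quirk, not a gap.
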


\begin{proof}
We write
\eq{
\grave{\bs{V}}   \big( \bs{I}_r + \eta \grave{\bs{V}}   \big)^{-1}   & -   \big( \bs{I}_r + \eta  \bs{V}  \big)^{-1}    \bs{V}   \\
& =   \big( \bs{I}_r + \eta  \bs{V}  \big)^{-1}  \Big(   \big( \bs{I}_r + \eta  \bs{V}  \big) (  \bs{V} +  \bs{\zeta} ) -    \bs{V}    \big( \bs{I}_r + \eta \grave{\bs{V}}   \big) \Big) \big( \bs{I}_r + \eta \grave{\bs{V}}   \big)^{-1}  \\
& =   \left( \bs{I}_r + \eta   \bs{V} \right)^{-1}   \bs{\zeta}  \left( \bs{I}_r + \eta  \grave{\bs{V}}  \right)^{-1}.
}
For the second part, we write
\eq{
 & \left( \bs{I}_r + \eta   \bs{V} \right)^{-1}   \bs{\zeta}  \left( \bs{I}_r + \eta  \grave{\bs{V}}  \right)^{-1}   \\
& = \left(  \bs{I}_r - \eta \bs{V}  \left( \bs{I}_r + \eta   \bs{V} \right)^{-1} \right)  \bs{\zeta} \left(  \bs{I}_r - \eta  \grave{\bs{V}}  \left( \bs{I}_r + \eta    \grave{\bs{V}} \right)^{-1} \right) \\
& =   \bs{\zeta}  -   \eta \bs{V}   \left(  \bs{I}_r - \eta \bs{V} + \eta^2 \bs{V}^2      \left( \bs{I}_r + \eta   \bs{V} \right)^{-1} \right)   \bs{\zeta}   \\  
& - \eta  \bs{\zeta} \grave{\bs{V}}  \left(  \bs{I}_r - \eta \grave{\bs{V}} + \eta^2  \grave{\bs{V}}^2   \left( \bs{I}_r + \eta   \bs{V} \right)^{-1} \right)  
 +   \eta^2 \bs{V}  \left( \bs{I}_r + \eta   \bs{V} \right)^{-1}   \bs{\zeta} \grave{\bs{V}}  \left( \bs{I}_r + \eta    \grave{\bs{V}} \right)^{-1} \\
& =   \bs{\zeta} - \eta   \bs{V} \bs{\zeta}  - \eta  \bs{\zeta}  \bs{V}  - \eta  \bs{\zeta}^2 + \  \eta^2  \bs{V}^2    \bs{\zeta} +  \eta^2  \bs{\zeta}  \grave{\bs{V}} ^2  - \eta^3 \bs{V}^3  \left( \bs{I}_r + \eta \bs{V}   \right)^{-1} \bs{\zeta}   \\
&  -   \eta^3  \bs{\zeta}  \grave{\bs{V}} ^3  \left( \bs{I}_r + \eta \grave{\bs{V}}   \right)^{-1} + \eta^2    \bs{V}   \left( \bs{I}_r + \eta \bs{V}   \right)^{-1} \bs{\zeta}  \grave{\bs{V}}   \left( \bs{I}_r + \eta \grave{\bs{V}}   \right)^{-1}  \label{eq:residual}
}
We have
\eq{
 \eta^2  \bs{V}^2    \bs{\zeta} +  \eta^2  \bs{\zeta}  \grave{\bs{V}} ^2 
=   \eta^2  \bs{V}^2    \bs{\zeta} +  \eta^2  \bs{\zeta}  (  \bs{V} + \bs{\zeta} )^2 =      \underbrace{  \eta^2  \bs{V}^2    \bs{\zeta}  + \eta^2   \bs{\zeta}    \bs{V}^2 + \eta^2  \bs{\zeta}^2  \bs{V}  }_{\coloneqq \M_1}   +    \eta^2  \bs{\zeta}   \bs{V}  \bs{\zeta}  +   \eta^2  \bs{\zeta} ^3 .
}
Moreover,
\eq{
\eta^2    \bs{V} &  \left( \bs{I}_r + \eta \bs{V}   \right)^{-1} \bs{\zeta}  \grave{\bs{V}}   \left( \bs{I}_r + \eta \grave{\bs{V}}   \right)^{-1} \\
& =  \eta^2    \bs{V}   \bs{\zeta}  \grave{\bs{V}}   \left( \bs{I}_r + \eta \grave{\bs{V}}   \right)^{-1} -  \eta^3    \bs{V}^2   \left( \bs{I}_r + \eta \bs{V}   \right)^{-1} \bs{\zeta}  \grave{\bs{V}}   \left( \bs{I}_r + \eta \grave{\bs{V}}   \right)^{-1} \\
& =   \eta^2    \bs{V}   \bs{\zeta}  \grave{\bs{V}}    -  \eta^3   \bs{V}   \bs{\zeta}   \grave{\bs{V}}^2   \left( \bs{I}_r + \eta \grave{\bs{V}}   \right)^{-1} 
-  \eta^3    \bs{V}^2   \left( \bs{I}_r + \eta \bs{V}   \right)^{-1} \bs{\zeta}  \grave{\bs{V}}   \left( \bs{I}_r + \eta \grave{\bs{V}}   \right)^{-1} \\
& =  \eta^2    \bs{V}   \bs{\zeta}    \bs{V}  +  \underbrace{ \eta^2   \bs{V}   \bs{\zeta}^2 }_{\coloneqq \M_2}    -  \eta^3   \bs{V}   \bs{\zeta}   \grave{\bs{V}}^2   \left( \bs{I}_r + \eta \grave{\bs{V}}   \right)^{-1}  -  \eta^3    \bs{V}^2   \left( \bs{I}_r + \eta \bs{V}   \right)^{-1} \bs{\zeta}  \grave{\bs{V}}   \left( \bs{I}_r + \eta \grave{\bs{V}}   \right)^{-1} 
}
By Proposition \ref{prop:matrixyounginequality}, we have
\eq{
 -  2 \eta^2  \bs{\zeta} ^2   -  \eta^2  \bs{V}^4 -     \eta^2 \bs{V}  \bs{\zeta}^2 \bs{V}  \preceq \M_1 + \M_2 \preceq   2 \eta^2  \bs{\zeta}^2 +  \eta^2  \bs{V}^4 +     \eta^2 \bs{V}  \bs{\zeta}^2 \bs{V}.
}
Therefore by Proposition \ref{prop:residualhigherorder}, we have
\eq{
  -  \frac{2}{\c_d^2} \eta^2  \bs{\zeta} ^2  \!  - \!  \eta^2 \c_d^2 \bs{V}^4 \! -   \!  \eta^2 \c_d^2 \bs{V}  \bs{\zeta}^2 \bs{V}   \! -  \! C   \eta^3  \D_1 & \preceq 
  \left( \bs{I}_r + \eta   \bs{V} \right)^{-1}    \bs{\zeta}  \left( \bs{I}_r + \eta  \grave{\bs{V}}  \right)^{-1}  -  \M \\
 & \preceq    \frac{2}{\c_d^2} \eta^2  \bs{\zeta}^2 +  \eta^2 \c_d^2  \bs{V}^4 +     \eta^2 \c_d^2 \bs{V}  \bs{\zeta}^2 \bs{V} +  C \eta^3  \D_1.
}
\end{proof} 

\begin{proposition}
\label{prop:taylorresidual}
By using the notation  in Proposition \ref{prop:residualhigherorder},   we consider
\eq{
\eta < \frac{1}{L_F L_0} ~~ \text{and} ~~   0 < \varepsilon <  \frac{0.5/\eta}{L_F L_0} -1 \label{eq:taylorresidualparams}
}
Then,
\eq{
& \bs{V} \left(  \bs{I}_r + \eta  \bs{V} \right)^{-1} -   \varepsilon \eta  \bs{V}^2  \succeq  \bs{V} \left(  \bs{I}_r + \eta (1 +\varepsilon )  \bs{V} \right)^{-1}  -  2.5 \varepsilon \eta^2 C \bs{D}_1  \label{eq:resstatement1} \\
& \bs{V} \left(  \bs{I}_r + \eta  \bs{V} \right)^{-1} +    \varepsilon \eta  \bs{V}^2  \preceq  \bs{V} \left(  \bs{I}_r + \eta (1 - \varepsilon )  \bs{V} \right)^{-1}  +1.5 \varepsilon \eta^2 C \bs{D}_1, \label{eq:resstatement2}
}
where   $C = \frac{  L_{0/1}    L_F^3  L_0^3}{1 - \eta   L_F  L_{0}}$.
\end{proposition}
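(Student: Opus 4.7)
The plan is to reduce each of the two Loewner-order inequalities to a $\bs{D}_1$-bound on an explicit third-order residual, by combining a resolvent identity with a Neumann-type expansion. The starting point is the algebraic identity $A^{-1}-B^{-1}=A^{-1}(B-A)B^{-1}$, applied with $A=\bs{I}_r+\eta\bs{V}$ and $B=\bs{I}_r+\eta(1\pm\varepsilon)\bs{V}$, which gives
\[
\bs{V}(\bs{I}_r+\eta\bs{V})^{-1}-\bs{V}(\bs{I}_r+\eta(1\pm\varepsilon)\bs{V})^{-1}=\pm\varepsilon\eta\,\bs{V}^2(\bs{I}_r+\eta\bs{V})^{-1}(\bs{I}_r+\eta(1\pm\varepsilon)\bs{V})^{-1}.
\]
Expanding each resolvent on the right via $(\bs{I}_r+\eta X)^{-1}=\bs{I}_r-\eta X(\bs{I}_r+\eta X)^{-1}$ and multiplying by $\bs{V}^2$ produces
\[
\bs{V}^2(\bs{I}_r+\eta\bs{V})^{-1}(\bs{I}_r+\eta(1\pm\varepsilon)\bs{V})^{-1}=\bs{V}^2-\eta\bs{V}^3(\bs{I}_r+\eta\bs{V})^{-1}-\eta(1\pm\varepsilon)\bs{V}^3(\bs{I}_r+\eta(1\pm\varepsilon)\bs{V})^{-1}+\eta^2(1\pm\varepsilon)\bs{V}^4(\bs{I}_r+\eta\bs{V})^{-1}(\bs{I}_r+\eta(1\pm\varepsilon)\bs{V})^{-1}.
\]
Substituting back, the leading $\bs{V}^2$ term cancels exactly against the $\mp\varepsilon\eta\bs{V}^2$ on the LHS of each target inequality, so the whole statement reduces to bounding a weighted sum of three $\eta^2$-order resolvent products by a multiple of $C\bs{D}_1$.

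To control each such residual in Loewner order, I will lift the conjugation-by-$\bs{D}_0^{-1/2}$ technique used in the proof of Proposition~\ref{prop:residualhigherorder}. Writing $\bs{V}=\bs{D}_0^{1/2}\bar{\bs{V}}\bs{D}_0^{1/2}$ with $\norm{\bar{\bs{V}}}_2\leq L_F$, repeated insertion of $\bs{D}_0^{1/2}\bs{D}_0^{-1/2}$ yields $\bs{D}_0^{-1/2}\bs{V}^k\bs{D}_0^{-1/2}=\bar{\bs{V}}(\bs{D}_0\bar{\bs{V}})^{k-1}$, while a Neumann series gives $\bs{D}_0^{1/2}(\bs{I}_r+\eta\bs{V})^{-1}\bs{D}_0^{-1/2}=(\bs{I}_r+\eta\bs{D}_0\bar{\bs{V}})^{-1}$, and similarly for the $(1\pm\varepsilon)$-resolvent. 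Together they give, for instance,
\[
\bs{D}_0^{-1/2}\bs{V}^3(\bs{I}_r+\eta\bs{V})^{-1}\bs{D}_0^{-1/2}=\bar{\bs{V}}(\bs{D}_0\bar{\bs{V}})^2(\bs{I}_r+\eta\bs{D}_0\bar{\bs{V}})^{-1},
\]
whose operator norm is bounded by $L_F^3L_0^2/(1-\eta L_F L_0)$. The $\bs{V}^4$ double-resolvent term is handled analogously after factoring $\bs{D}_0^{1/2}(\bs{I}_r+\eta\bs{V})^{-1}(\bs{I}_r+\eta(1\pm\varepsilon)\bs{V})^{-1}\bs{D}_0^{-1/2}$ as the product $(\bs{I}_r+\eta\bs{D}_0\bar{\bs{V}})^{-1}(\bs{I}_r+\eta(1\pm\varepsilon)\bs{D}_0\bar{\bs{V}})^{-1}$, a rearrangement that is legal because both resolvents are polynomials in $\bs{V}$ and hence commute. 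Converting each operator-norm estimate into a Loewner estimate via $\pm\bs{M}\preceq\norm{\bs{D}_0^{-1/2}\bs{M}\bs{D}_0^{-1/2}}_2\,\bs{D}_0\preceq L_{0/1}\norm{\bs{D}_0^{-1/2}\bs{M}\bs{D}_0^{-1/2}}_2\,\bs{D}_1$ then produces a $C\bs{D}_1$-type bound on each of the three residuals.

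To finish, I will collect the three residuals with their scalar coefficients $1$, $(1\pm\varepsilon)$, and $\eta(1\pm\varepsilon)$. The step-size condition $\eta(1+\varepsilon)L_F L_0<1/2$ (the stated hypothesis) forces every denominator $1-\eta(1\pm\varepsilon)L_F L_0$ to exceed $1/2$ and makes the $\bs{V}^4$-term strictly subdominant to the two $\bs{V}^3$-terms (it carries an extra factor $\eta L_F L_0<1/2$). A short bookkeeping step then yields the advertised $2.5\,\varepsilon\eta^2C\bs{D}_1$ remainder for the $(1+\varepsilon)$-inequality and $1.5\,\varepsilon\eta^2C\bs{D}_1$ for the $(1-\varepsilon)$-inequality, the gap between $2.5$ and $1.5$ tracing back to the sum-of-coefficients $1+(1+\varepsilon)$ versus $1+(1-\varepsilon)$ on the two dominant $\bs{V}^3$-terms, combined with the fact that for the lower inequality the $\bs{V}^4$ contribution enters with the favorable sign (since $\bs{V}^4\succeq 0$) and can simply be dropped. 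The main technical nuisance, rather than any deep obstacle, is keeping the two resolvents with different scalings organised simultaneously throughout the $\bs{D}_0^{-1/2}$-conjugation; commutativity of all factors with $\bs{V}$ is what makes the clean conjugate-resolvent factorisation possible.
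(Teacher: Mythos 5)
Your proof takes essentially the same route as the paper's: a Neumann-type second-order expansion of the resolvent around $\bs{I}_r + \eta(1\pm\varepsilon)\bs{V}$, followed by a Loewner bound on the residual via the $\bs{D}_0^{-1/2}$-conjugation argument from Proposition~\ref{prop:residualhigherorder} (the $C_3$ bound with $L_\nu=1$). The two derivations produce algebraically equivalent identities, just grouped differently: the paper expands $\bs{V}(\bs{I}_r+\eta\bs{V})^{-1}$ two steps and re-sums so that the residual contains only the plain $\bs{V}^3(\bs{I}_r+\eta\bs{V})^{-1}$ (plus a sign-definite $\bs{V}^4$ double-resolvent term), whereas your resolvent-identity start leaves a residual containing both $\bs{V}^3(\bs{I}_r+\eta\bs{V})^{-1}$ and $\bs{V}^3(\bs{I}_r+\eta(1\pm\varepsilon)\bs{V})^{-1}$. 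This matters for the final bookkeeping: the $\bs{D}_0$-conjugation bound for $\bs{V}^3(\bs{I}_r+\eta(1+\varepsilon)\bs{V})^{-1}$ carries a denominator $1-\eta(1+\varepsilon)L_F L_0$, which is strictly smaller than $1-\eta L_F L_0$, so that term is controlled by $\frac{L_{0/1}L_F^3 L_0^3}{1-\eta(1+\varepsilon)L_F L_0}\bs{D}_1$, not by $C\bs{D}_1$. You cannot then land on exactly $2.5\varepsilon\eta^2 C\bs{D}_1$ simply by summing the scalar weights $1+(1+\varepsilon)$; you either need to re-collapse the $(1+\varepsilon)$-resolvent back onto the plain one (which reproduces the paper's decomposition verbatim), or invoke the hypothesis $\eta(1+\varepsilon)L_F L_0 < 1/2$ to cap that factor by $2$, which gives a somewhat different constant. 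The "short bookkeeping step" is therefore the one place where the proposal is not self-contained. To be fair, the paper's own final numerical step (reducing $(2\varepsilon\pm\varepsilon^2)$ to $2.5\varepsilon$ or $1.5\varepsilon$) also only holds for restricted $\varepsilon$, and in the downstream application $\varepsilon = O(\c_d) = o_d(1)$ so the exact constants are inessential; so this is a presentational gap rather than a fundamentally wrong approach.
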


\begin{proof}
For the lower bound, we have
\eq{
& \bs{V} \left(  \bs{I}_r + \eta  \bs{V} \right)^{-1}   -   \varepsilon \eta  \bs{V}^2  \\
 & =  \V - (1 + \varepsilon) \eta \V^2 + \eta^2 \V^3 (\bs{I}_r + \eta \V )^{-1} \\
& =    \V - (1 + \varepsilon) \eta \V^2  + (1 + \varepsilon)^2 \eta^2 \V^3 (\bs{I}_r + (1 + \varepsilon) \eta \V )^{-1}  \\
&  -  (2 \varepsilon +  \varepsilon^2 )  \eta^2 \V^3 (\bs{I}_r +  \eta \V )^{-1}  + (1 + \varepsilon)^2  \varepsilon \eta^3 \V^4 (\bs{I}_r  +  (1+ \varepsilon) \eta \V )^{-1}  \left(  \bs{I}_r + \eta  \bs{V} \right)^{-1}   \\
& \succeq   \V \left(  \bs{I}_r + \eta (1 +\varepsilon )  \V \right)^{-1}  - 2.5 \varepsilon \eta^2 C \D_1,
}
where we used $C_3$  with $L_{\nu} = 1$ in Proposition \ref{prop:residualhigherorder} in the last step.  For the upper bound,
\eq{
 & \bs{V} \left(  \bs{I}_r + \eta  \bs{V} \right)^{-1}  +   \varepsilon \eta  \bs{V}^2  \\
 &  =  \V - (1 - \varepsilon) \eta \V^2 + \eta^2 \V^3 (\bs{I}_r + \eta \V )^{-1} \\
& =    \V - (1 - \varepsilon) \eta \V^2  + (1 - \varepsilon)^2 \eta^2 \V^3 (\bs{I}_r + (1 - \varepsilon) \eta \V )^{-1}  \\
&  +  (2 \varepsilon -  \varepsilon^2 )  \eta^2 \V^3 (\bs{I}_r +  \eta \V )^{-1}  - (1 - \varepsilon)^2  \varepsilon \eta^3 \V^4 (\bs{I}_r  +  (1- \varepsilon) \eta \V )^{-1}  \left(  \bs{I}_r + \eta  \bs{V} \right)^{-1}   \\
& \preceq   \V \left(  \bs{I}_r + \eta (1 +\varepsilon )  \V \right)^{-1}  + 1.5 \varepsilon \eta^2 C \D_1.
}
\end{proof}

\begin{lemma}
\label{lem:expanti2}
For any $\eta \in \R$ and $t\in \N$, we have
\eq{
\begin{bmatrix}
\bs{I}_{r} &  \eta  \bs{I}_{r} \\
\eta \bs{\Lambda}^2 & \bs{I}_{r}
\end{bmatrix}  ^t =  \begin{bmatrix}
\frac{ \left(\bs{I}_r + \eta  \bs{\Lambda} \right)^t +  \left(\bs{I}_r - \eta  \bs{\Lambda} \right)^t}{2}&  \bs{\Lambda}^{-1} \frac{ \left(\bs{I}_r + \eta  \bs{\Lambda} \right)^t -  \left(\bs{I}_r - \eta  \bs{\Lambda} \right)^t}{2} \\[0.5em]
\bs{\Lambda} \frac{ \left(\bs{I}_r + \eta  \bs{\Lambda} \right)^t -  \left(\bs{I}_r - \eta  \bs{\Lambda} \right)^t}{2} & \frac{ \left(\bs{I}_r + \eta  \bs{\Lambda} \right)^t +  \left(\bs{I}_r  - \eta  \bs{\Lambda} \right)^t}{2}
\end{bmatrix} .  \label{eq:expanti}
}
\end{lemma}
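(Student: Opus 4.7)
Since $\bs{\Lambda}$ is diagonal, the block matrix on the left has entries that all commute, so the identity can be verified entry-wise and the natural approach is either direct diagonalization or induction on $t$. The plan is to proceed by diagonalization, which makes the combinatorics transparent, and to cross-check the base and inductive steps.

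First I would introduce the change-of-basis matrix
\eq{
\bs{P} \coloneqq \begin{bmatrix} \bs{I}_r & \bs{I}_r \\ \bs{\Lambda} & -\bs{\Lambda} \end{bmatrix}, \qquad \bs{P}^{-1} = \frac{1}{2}\begin{bmatrix} \bs{I}_r & \bs{\Lambda}^{-1} \\ \bs{I}_r & -\bs{\Lambda}^{-1} \end{bmatrix},
}
which is well-defined because $\bs{\Lambda}$ is a positive-definite diagonal matrix. A direct block multiplication shows that
\eq{
\begin{bmatrix} \bs{I}_r & \eta \bs{I}_r \\ \eta \bs{\Lambda}^2 & \bs{I}_r \end{bmatrix} \bs{P} = \bs{P} \begin{bmatrix} \bs{I}_r + \eta\bs{\Lambda} & 0 \\ 0 & \bs{I}_r - \eta\bs{\Lambda} \end{bmatrix},
}
which uses only the fact that $\bs{\Lambda}$ and $\bs{I}_r$ commute, together with the factorizations $\bs{\Lambda} \pm \eta\bs{\Lambda}^2 = \bs{\Lambda}(\bs{I}_r \pm \eta\bs{\Lambda})$. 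Raising to the $t$-th power and conjugating back via $\bs{P}^{-1}$ yields
\eq{
\bs{P}\,\mathrm{diag}\!\left((\bs{I}_r+\eta\bs{\Lambda})^t,\,(\bs{I}_r-\eta\bs{\Lambda})^t\right)\bs{P}^{-1},
}
and expanding this $2\times 2$ block product entry by entry produces exactly the right-hand side of \eqref{eq:expanti}.

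As a sanity check (and as an alternative proof that does not require forming $\bs{P}^{-1}$), I would also verify \eqref{eq:expanti} by induction on $t$. The base $t=0$ is immediate since both sides equal $\bs{I}_{2r}$. For the inductive step, abbreviate $\bs{A}_t \coloneqq \tfrac{1}{2}\bigl((\bs{I}_r+\eta\bs{\Lambda})^t + (\bs{I}_r-\eta\bs{\Lambda})^t\bigr)$ and $\bs{B}_t \coloneqq \tfrac{1}{2}\bigl((\bs{I}_r+\eta\bs{\Lambda})^t - (\bs{I}_r-\eta\bs{\Lambda})^t\bigr)$, and multiply the claimed formula on the left by $\bigl[\begin{smallmatrix} \bs{I}_r & \eta\bs{I}_r \\ \eta\bs{\Lambda}^2 & \bs{I}_r \end{smallmatrix}\bigr]$. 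The $(1,1)$ block becomes $\bs{A}_t + \eta\bs{\Lambda}\bs{B}_t$, which by factoring $(1\pm\eta\bs{\Lambda})$ in each summand equals $\bs{A}_{t+1}$; the $(1,2)$ block becomes $\bs{\Lambda}^{-1}\bs{B}_t + \eta\bs{A}_t = \bs{\Lambda}^{-1}\bs{B}_{t+1}$ by the same token, and the lower row is analogous. This closes the induction.

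There is essentially no obstacle: the proof is purely algebraic, and the only subtlety is the presence of $\bs{\Lambda}^{-1}$ in the top-right block of the closed form, which is harmless because $\bs{\Lambda}$ is a positive-definite diagonal matrix. The identity as stated is simply the matrix analogue of the scalar identity $\bigl[\begin{smallmatrix} 1 & \eta \\ \eta\lambda^2 & 1\end{smallmatrix}\bigr]^t = \tfrac{1}{2}\bigl[\begin{smallmatrix} (1+\eta\lambda)^t+(1-\eta\lambda)^t & \lambda^{-1}((1+\eta\lambda)^t-(1-\eta\lambda)^t) \\ \lambda((1+\eta\lambda)^t-(1-\eta\lambda)^t) & (1+\eta\lambda)^t+(1-\eta\lambda)^t\end{smallmatrix}\bigr]$, lifted componentwise via the diagonal structure of $\bs{\Lambda}$.
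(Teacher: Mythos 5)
Your proof is correct, but it takes a genuinely different route from the paper's. The paper observes that the matrix is $\bs{I}_{2r} + \eta\bs{A}$ with $\bs{A} = \bigl[\begin{smallmatrix} 0 & \bs{I}_r \\ \bs{\Lambda}^2 & 0 \end{smallmatrix}\bigr]$, applies the binomial theorem to $(\bs{I}_{2r}+\eta\bs{A})^t$, computes that even powers $\bs{A}^{2k}$ are block-diagonal with $\bs{\Lambda}^{2k}$ and odd powers $\bs{A}^{2k+1}$ are block-antidiagonal, and then collapses the even-$k$ and odd-$k$ binomial sums into $\tfrac{1}{2}\big((\bs{I}_r+\eta\bs{\Lambda})^t \pm (\bs{I}_r-\eta\bs{\Lambda})^t\big)$. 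You instead diagonalize the matrix explicitly via the change of basis $\bs{P} = \bigl[\begin{smallmatrix} \bs{I}_r & \bs{I}_r \\ \bs{\Lambda} & -\bs{\Lambda} \end{smallmatrix}\bigr]$, which reduces the $t$-th power to $\bs{P}\,\mathrm{diag}\big((\bs{I}_r+\eta\bs{\Lambda})^t, (\bs{I}_r-\eta\bs{\Lambda})^t\big)\bs{P}^{-1}$; your secondary induction argument is a third independent verification. Both main routes are sound. The paper's binomial approach is marginally more elementary and sidesteps the need to write down $\bs{P}^{-1}$ (and in fact keeps the argument valid without assuming $\bs{\Lambda}$ is invertible, since the off-diagonal block sum $\sum_{\text{odd }k}\binom{t}{k}\eta^k\bs{\Lambda}^{k-1}$ is well defined on its own); your diagonalization makes the eigenstructure transparent and scales naturally to arbitrary analytic functions of the block matrix, not just powers, at the cost of requiring $\bs{\Lambda}^{-1}$ to exist. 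For the purposes of this paper, where $\bs{\Lambda}$ is always positive definite and only powers are needed, the two are interchangeable.
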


\begin{proof}
We observe that
\eq{
\bs{A}  \coloneqq \begin{bmatrix}
0 &   \bs{I}_{r} \\
 \bs{\Lambda}^2 & 0
\end{bmatrix} ~ \Rightarrow  ~ \eqref{eq:expanti}  = \sum_{k = 0}^t \binom{t}{k} \eta^k   \bs{A}^k.
}
Note that 
\eq{
\bs{A}^{2k} = \begin{bmatrix}
\bs{\Lambda}^{2k} & 0 \\
0 &  \bs{\Lambda}^{2k} 
\end{bmatrix} ~~ \text{and} ~~
\bs{A}^{2k +1} = \begin{bmatrix}
0 & \bs{\Lambda}^{2k} \\
 \bs{\Lambda}^{2k + 2} & 0
\end{bmatrix}. 
}
Therefore,
\eq{
 \sum_{k = 0}^t \binom{t}{k} \eta^k   \bs{A}^k & = \begin{bmatrix}
  \sum_{\substack{k = 0 \\ \text{k even}}}^t \binom{t}{k} \eta^k   \bs{\Lambda}^k &    \sum_{\substack{k = 0 \\ \text{k odd}}}^t \binom{t}{k} \eta^k   \bs{\Lambda}^{k-1} \\[1em]
  \sum_{\substack{k = 0 \\ \text{k odd}}}^t \binom{t}{k} \eta^k   \bs{\Lambda}^{k+1} &   \sum_{\substack{k = 0 \\ \text{k even}}}^t \binom{t}{k} \eta^k   \bs{\Lambda}^k
 \end{bmatrix} \\[0.5em]
& =  \begin{bmatrix}
\frac{ \left(\bs{I}_r + \eta  \bs{\Lambda} \right)^t +  \left(\bs{I}_r - \eta  \bs{\Lambda} \right)^t}{2}&  \bs{\Lambda}^{-1} \frac{ \left(\bs{I}_r + \eta  \bs{\Lambda} \right)^t -  \left(\bs{I}_r - \eta  \bs{\Lambda} \right)^t}{2} \\[0.5em]
\bs{\Lambda} \frac{ \left(\bs{I}_r +  \eta  \bs{\Lambda} \right)^t -  \left(\bs{I}_r - \eta  \bs{\Lambda} \right)^t}{2} & \frac{ \left(\bs{I}_r + \eta  \bs{\Lambda} \right)^t +  \left(\bs{I}_r - \eta  \bs{\Lambda} \right)^t}{2}
\end{bmatrix} . 
}
\end{proof}

\subsection{Some moment bounds and concentration inequalities}

\begin{lemma}[Hypercontractivity]
\label{lem:hypercontractivity}
Let $P_k : \R^d \to \R$ be a polynomial of degree-$k$ and $\bs{x} \sim \cN(0,\bs{I}_{d})$.  For $q \geq 2$, we have $
\E  \left[ P_k(\bs{x})^q \right]^{1/q} \leq (q-1)^{k/2}  \E  \left[ P_k(\bs{x})^2 \right]^{1/2} .$
\end{lemma}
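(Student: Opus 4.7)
My plan is to derive this from Nelson's Gaussian hypercontractive inequality applied to the Ornstein--Uhlenbeck semigroup, using the Hermite expansion of $P_k$ as the bridge.

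First, I would write $P_k = \sum_{\ell=0}^{k} P_k^{(\ell)}$, where $P_k^{(\ell)}$ denotes the orthogonal projection of $P_k$ onto the $\ell$-th Hermite chaos (i.e., the span of products of Hermite polynomials whose total degree equals $\ell$). Since $P_k$ has degree $k$, no components of degree $>k$ appear, and orthogonality gives $\|P_k\|_2^2 = \sum_{\ell=0}^k \|P_k^{(\ell)}\|_2^2$. Next I would recall that the Ornstein--Uhlenbeck semigroup $T_\rho$ (Mehler's formula) acts diagonally on Hermite chaoses by $T_\rho P_k^{(\ell)} = \rho^\ell P_k^{(\ell)}$, and that Nelson's theorem states $\|T_\rho g\|_q \le \|g\|_2$ whenever $\rho^2 \le 1/(q-1)$.

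The trick is then to choose $\rho := 1/\sqrt{q-1}$ and define the ``inflated'' polynomial $g := \sum_{\ell=0}^k \rho^{-\ell} P_k^{(\ell)}$, which is well-defined because the sum is finite. By construction $T_\rho g = P_k$, so Nelson's inequality gives
\[
\E[P_k(\bs{x})^q]^{1/q} = \|T_\rho g\|_q \le \|g\|_2.
\]
Then by orthogonality of the Hermite chaoses,
\[
\|g\|_2^2 = \sum_{\ell=0}^k \rho^{-2\ell} \|P_k^{(\ell)}\|_2^2 \le \rho^{-2k} \sum_{\ell=0}^k \|P_k^{(\ell)}\|_2^2 = (q-1)^k \, \E[P_k(\bs{x})^2],
\]
since $\rho^{-2\ell}$ is increasing in $\ell$ and maximized at $\ell = k$. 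Combining these yields the stated bound.

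The only substantive input is Nelson's hypercontractive inequality itself, which I would cite rather than reprove; the rest is a clean bookkeeping argument using the Hermite expansion. The main ``obstacle,'' if any, is keeping the conventions for Hermite chaos decomposition in the multivariate setting straight (as $\bs{x}$ is $d$-dimensional), but tensorization is automatic since the components of $\bs{x}$ are independent standard Gaussians and the OU semigroup factorizes across coordinates.
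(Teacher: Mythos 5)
Your proof is correct. The paper states this lemma without proof, treating it as the standard Gaussian hypercontractivity bound for degree-$k$ polynomials (it is classical, going back to Nelson and appearing in textbook form, e.g., as a consequence of the $(2,q)$-hypercontractivity of the Ornstein--Uhlenbeck semigroup). Your argument — decompose $P_k$ into Hermite chaoses, invert the OU semigroup on the finite chaos expansion, apply Nelson's inequality with $\rho = 1/\sqrt{q-1}$, and bound $\rho^{-2\ell}$ by $\rho^{-2k}$ — is exactly the canonical derivation, and the tensorization remark correctly handles the multivariate case. No gap.
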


\begin{lemma}
\label{lem:hansonwright}
Let  $\bs{x} \sim \cN(0,\bs{I}_{d})$ and $\bs{S} \in \R^{d \times d}$ be a symmetric matrix.  For $u > 0$,
\eq{
\mpr \left[  \abs{ \bs{x}^\top  \bs{S}  \bs{x} - \tr(\bs{S})  } \geq 2 \norm{\bs{S}}_F u + 2 \norm{\bs{S}}_2 u^2 \right] \leq 2 e^{- u^2}.
}
\end{lemma}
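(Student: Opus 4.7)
}

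The plan is a standard one-dimensional MGF/Chernoff argument exploiting the rotational invariance of $\cN(0,\bs{I}_d)$. First, diagonalize: writing $\bs{S}=\bs{U}\bs{\Lambda}\bs{U}^\top$ with $\bs{\Lambda}=\mathrm{diag}(\lambda_1,\dots,\lambda_d)$ and $\bs{y}=\bs{U}^\top\bs{x}\sim\cN(0,\bs{I}_d)$, we reduce the centered quadratic form to a weighted sum of centered chi-squares:
\[
\bs{x}^\top\bs{S}\bs{x}-\tr(\bs{S})=\sum_{i=1}^d\lambda_i(y_i^2-1),\qquad y_i\stackrel{\text{iid}}{\sim}\cN(0,1).
\]
Note $\sum_i\lambda_i^2=\norm{\bs{S}}_F^2$ and $\max_i|\lambda_i|=\norm{\bs{S}}_2$.

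Next, I would compute the one-variable MGF exactly: $\E[e^{\theta\lambda_i(y_i^2-1)}]=e^{-\theta\lambda_i}(1-2\theta\lambda_i)^{-1/2}$, valid whenever $2\theta\lambda_i<1$. Taking logarithms and using the elementary Taylor estimate $-\tfrac12\log(1-s)-\tfrac{s}{2}\le \tfrac{s^2}{4(1-|s|)}$ for $|s|<1$ with $s=2\theta\lambda_i$, I obtain the sub-exponential MGF bound
\[
\log\E\bigl[e^{\theta\lambda_i(y_i^2-1)}\bigr]\;\le\;\frac{(\theta\lambda_i)^2}{1-2|\theta\lambda_i|}\;\le\;2(\theta\lambda_i)^2\quad\text{whenever }|\theta|\le\tfrac{1}{4\norm{\bs{S}}_2}.
\]
Summing over $i$ by independence gives $\log\E[e^{\theta(\bs{x}^\top\bs{S}\bs{x}-\tr\bs{S})}]\le 2\theta^2\norm{\bs{S}}_F^2$ on the same range of $\theta$.

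The Chernoff step then yields, for every $t\ge0$,
\[
\mpr\bigl[\bs{x}^\top\bs{S}\bs{x}-\tr(\bs{S})\ge t\bigr]\;\le\;\exp\Bigl(-\sup_{0\le\theta\le 1/(4\norm{\bs{S}}_2)}\bigl(\theta t-2\theta^2\norm{\bs{S}}_F^2\bigr)\Bigr),
\]
and optimizing in $\theta$ gives the standard sub-exponential (Bernstein-type) tail
\[
\mpr\bigl[\bs{x}^\top\bs{S}\bs{x}-\tr(\bs{S})\ge t\bigr]\;\le\;\exp\!\Bigl(-\min\!\Bigl(\tfrac{t^2}{8\norm{\bs{S}}_F^2},\tfrac{t}{8\norm{\bs{S}}_2}\Bigr)\Bigr).
\]
The same argument applied to $-\bs{S}$ handles the lower tail, costing a factor of two by the union bound. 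Finally, to match the statement I would plug in $t=2\norm{\bs{S}}_F u+2\norm{\bs{S}}_2 u^2$: when the Gaussian regime dominates ($\norm{\bs{S}}_F u\ge\norm{\bs{S}}_2 u^2$) the first piece exceeds $u^2$; in the Poissonian regime the second piece exceeds $u^2$.

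The only genuinely delicate step is bookkeeping the constants so that the final exponent is exactly $u^2$ rather than $cu^2$. The Taylor bound above is rather loose, and the most direct Bernstein optimization gives an extra $\tfrac18$ factor; I would tighten it either by sharpening the MGF estimate to $\log\E[e^{\theta\lambda_i(y_i^2-1)}]\le\theta^2\lambda_i^2/(1-2|\theta\lambda_i|)$ and choosing $\theta=u/(\norm{\bs{S}}_F+2\norm{\bs{S}}_2 u)$ (a single-step Bernstein trick that directly produces the $u^2$ in the exponent), or by splitting into the two cases and matching constants by hand. This calibration is the only real obstacle; everything else is routine.
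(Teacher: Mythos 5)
Your approach is essentially the same as the paper's: both first diagonalize $\bs{S}$ to reduce the centered quadratic form to a weighted sum of centered $\chi^2_1$ variables, and the paper then simply cites the Laurent--Massart lemma, which is exactly the MGF/Chernoff bound you re-derive. The ``delicate'' constant-matching you flag is not a real obstacle: your proposed fix --- using the sharper one-variable bound $\log\E[e^{s(Z^2-1)}]\le s^2/(1-2|s|)$ for $|s|<\tfrac12$, summing to $\theta^2\norm{\bs{S}}_F^2/(1-2\theta\norm{\bs{S}}_2)$, and choosing $\theta = u/(\norm{\bs{S}}_F+2\norm{\bs{S}}_2 u)$ --- is precisely Laurent--Massart's calibration and yields the exponent $-u^2$ exactly, after which the union bound over $\pm\bs{S}$ gives the factor $2$.
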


\begin{proof}
We note that   $\bs{x}^\top  \bs{S}  \bs{x}  - \tr(\bs{S})$ has the same distribution with $\sum_{i = 1}^d \lambda_i(\bs{S}) (Z_i^2 - 1)$, where   $Z_i \sim_{iid} \cN(0,1)$.  By using the Laurent-Massart lemma \cite{LaurentMassart}, we have the result.
\end{proof}

\begin{corollary}
\label{cor:hc1}
Let  $y =   \bs{x}^\top \bs{S} \bs{x} - \tr(\bs{S})$  and $\bs{x} \sim \cN(0,\bs{I}_{d})$.  For $p \geq 2$,  we have $\E [ \abs{y}^p ]^{\frac{1}{p}} \leq (p-1) \sqrt{2} \norm{\bs{S}}_F$.
\end{corollary}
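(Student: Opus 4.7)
The plan is to apply hypercontractivity directly. Observe that $y = \bs{x}^\top \bs{S} \bs{x} - \tr(\bs{S})$ is a polynomial of degree $k=2$ in the Gaussian vector $\bs{x}$, and it is centered since $\E[\bs{x}^\top \bs{S} \bs{x}] = \tr(\bs{S})$. Therefore Lemma~\ref{lem:hypercontractivity} with $k=2$, $q=p$ immediately yields
\[
\E[|y|^p]^{1/p} \leq (p-1)^{2/2}\, \E[y^2]^{1/2} = (p-1)\, \E[y^2]^{1/2}.
\]

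Next I would compute $\E[y^2]$ in closed form. By the orthogonal invariance of the standard Gaussian, we may assume without loss of generality that $\bs{S}$ is diagonal with eigenvalues $\{\lambda_i(\bs{S})\}_{i=1}^d$. Writing $\bs{x} = (Z_1,\dots,Z_d)$ with $Z_i \sim_{iid} \cN(0,1)$,
\[
y = \sum_{i=1}^d \lambda_i(\bs{S}) (Z_i^2 - 1),
\]
so by independence and $\E[(Z_i^2-1)^2]=2$,
\[
\E[y^2] = 2 \sum_{i=1}^d \lambda_i(\bs{S})^2 = 2\,\norm{\bs{S}}_F^2.
\]

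Combining the two displays gives $\E[|y|^p]^{1/p} \leq (p-1)\sqrt{2}\,\norm{\bs{S}}_F$, which is the claim. There is no real obstacle here: the corollary is a direct packaging of Gaussian hypercontractivity with the standard variance identity for Gaussian quadratic forms, and the only thing that needs attention is ensuring the polynomial degree is correctly identified as $2$ (so that the prefactor in Lemma~\ref{lem:hypercontractivity} is $(p-1)$ and not a higher power). An alternative route, which I would only fall back on if hypercontractivity were not available, is to integrate the Hanson--Wright tail bound from Lemma~\ref{lem:hansonwright} using $\E[|y|^p] = \int_0^\infty p t^{p-1}\mpr[|y|\geq t]\,dt$; this also yields a bound of order $p\,\norm{\bs{S}}_F + p^2\norm{\bs{S}}_2$, but with a worse constant and an extra operator-norm term, so the hypercontractivity argument is preferable.
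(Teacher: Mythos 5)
Your proof is correct and takes essentially the same route as the paper: the paper's proof is just the one-liner ``By observing that $\E[|y|^2] = 2\|\bs{S}\|_F^2$, we have the result,'' which implicitly invokes Lemma~\ref{lem:hypercontractivity} with degree $k=2$ exactly as you spelled out. Your diagonalization computation of $\E[y^2]$ matches the standard variance identity the paper leaves implicit.
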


\begin{proof}
By observing that  $\E [ \abs{y}^2 ] = 2 \norm{\bs{S}}_F^2$, we have the result.
\end{proof}

\begin{corollary}
\label{cor:hc2}
For $\bs{A} \in \R^{d \times r}$, $p \geq 2$ and $\bs{x} \sim \cN(0,\bs{I}_{d})$, we have $ \E[\norm{\bs{A}^\top \bs{x}}_2^{2p}]^{\frac{1}{p}} \leq  \sqrt{ 3 }  (p-1)  \tr(\bs{A}^\top \bs{A})$.
\end{corollary}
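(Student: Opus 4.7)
The plan is to reduce the statement to a moment bound for a quadratic form and then apply the Gaussian hypercontractivity inequality already recorded as Lemma \ref{lem:hypercontractivity}.

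First I would observe that $\norm{\bs{A}^\top \bs{x}}_2^{2} = \bs{x}^\top \bs{S} \bs{x}$ where $\bs{S} \coloneqq \bs{A} \bs{A}^\top$ is positive semidefinite with $\tr(\bs{S}) = \tr(\bs{A}^\top \bs{A})$. Hence
\eq{
\E\bigl[\norm{\bs{A}^\top \bs{x}}_2^{2p}\bigr]^{1/p} = \E\bigl[(\bs{x}^\top \bs{S} \bs{x})^p\bigr]^{1/p}.
}
The map $\bs{x} \mapsto \bs{x}^\top \bs{S} \bs{x}$ is a polynomial of degree $k=2$, so Lemma \ref{lem:hypercontractivity} with $q=p \geq 2$ yields
\eq{
\E\bigl[(\bs{x}^\top \bs{S} \bs{x})^p\bigr]^{1/p} \leq (p-1)\,\E\bigl[(\bs{x}^\top \bs{S} \bs{x})^2\bigr]^{1/2}.
}

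Next I would evaluate the second moment explicitly. Using the standard Isserlis/Wick identity for Gaussian quadratic forms,
\eq{
\E\bigl[(\bs{x}^\top \bs{S} \bs{x})^2\bigr] = \tr(\bs{S})^2 + 2\norm{\bs{S}}_F^2.
}
Since $\bs{S}$ is PSD, its singular values equal its eigenvalues $\sigma_i \geq 0$, and therefore
\eq{
\norm{\bs{S}}_F^2 = \sum_i \sigma_i^2 \leq \Bigl(\sum_i \sigma_i\Bigr)^{\!2} = \tr(\bs{S})^2,
}
so the second moment is at most $3\tr(\bs{S})^2$. Combining this with the hypercontractivity bound gives
\eq{
\E\bigl[\norm{\bs{A}^\top \bs{x}}_2^{2p}\bigr]^{1/p} \leq (p-1)\sqrt{3}\,\tr(\bs{S}) = \sqrt{3}\,(p-1)\,\tr(\bs{A}^\top \bs{A}),
}
which is the claim. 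There is no real obstacle here; the only point to double-check is the PSD-based Frobenius/trace inequality, which is what forces the factor $\sqrt{3}$ (rather than the looser $\sqrt{2}$ one would get from Corollary \ref{cor:hc1} after also paying an extra $\tr(\bs{S})$ term from the mean).
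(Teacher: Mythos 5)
Your proof is correct and follows essentially the same route as the paper: apply Lemma~\ref{lem:hypercontractivity} to the degree-$2$ polynomial $\bs{x}^\top\bs{S}\bs{x}$ with $\bs{S}=\bs{A}\bs{A}^\top$, compute $\E[(\bs{x}^\top\bs{S}\bs{x})^2]=\tr(\bs{S})^2+2\norm{\bs{S}}_F^2$ by the Wick identity, and then bound $\norm{\bs{S}}_F^2\le\tr(\bs{S})^2$ using positive semidefiniteness. No changes needed.
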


\begin{proof}
By Lemma \ref{lem:hypercontractivity}, we have $ \E[\norm{\bs{A}^\top \bs{x}}_2^{2p}]^{\frac{1}{p}} \leq  (p-1)   \E[\norm{\bs{A}^\top \bs{x}}_2^{4}]^{\frac{1}{2}}$.   For $\bs{S} = \bs{A} \bs{A}^\top$, we have
\eq{
\E[\norm{\bs{A}^\top \bs{x}}_2^{4}] = \E [ (\bs{x}^\top \bs{S} \bs{x} )^2 ] = \tr( \E [ (\bs{x}^\top \bs{S} \bs{x} ) \bs{x} \bs{x}^\top ]  \bs{S}  ).
}
We have
\eq{
 \E [ (\bs{x}^\top \bs{S} \bs{x} ) \bs{x} \bs{x}^\top ]  = \tr(\bs{S}) \bs{I}_d + 2 \bs{S}   \Rightarrow   \E[\norm{\bs{A}^\top \bs{x}}_2^{4}] =   \tr(\bs{S})^2 + 2 \norm{\bs{S}}_F^2 \labelrel\leq{hc2:ineqq0} 3 \tr(\bs{S})^2,   
}
where  \eqref{hc2:ineqq0} follows that $\bs{S}$ is positive semi-definite.  Since $\tr(\bs{S}) = \tr(\bs{A}^\top \bs{A})$, we have the statement.
\end{proof}

\begin{proposition}
\label{prop:matrixsensingcondition}
Let $\bs{x}_j \sim_{i.i.d} \cN(0, \bs{I}_r)$, for $j \in [N]$. There exists a constant $c > 0$ such that for  $\delta =  \frac{u ( r + \sqrt{Cr \log d} +  C \log d) }{\sqrt{ N } }$, we have
\eq{
\mpr \Bigg[   \sup_{ \substack{ \bs{S} \in \R^{r  \times r} \\ \norm{\bs{S}}_F = 1 } } \abs*{ \frac{1}{N} \sum_{j = 1}^N  \tfrac{1}{2} \tr \big(\bs{S} (\bs{x}_j \bs{x}_j^\top - \bs{I}_r) \big)^2 -  1  }  \geq \max \{ 2 \delta, \delta^2 \}  & +  10 d^{-C/2} \Bigg] \\
& \leq d^2 \exp(- c u^2)   +  2 N d^{-C}.
}
\end{proposition}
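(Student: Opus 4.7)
The plan is to recast the supremum as operator-norm concentration of a sum of independent rank-one random matrices and then apply a truncated matrix-Bernstein inequality. Set $\bs{v}_j := \mathrm{vec}(\bs{x}_j\bs{x}_j^\top - \bs{I}_r) \in \R^{r^2}$, so that $\tfrac{1}{2}\tr(\bs{S}(\bs{x}_j\bs{x}_j^\top - \bs{I}_r))^2 = \tfrac{1}{2}\langle \mathrm{vec}(\bs{S}),\bs{v}_j\rangle^2$; since this depends on $\bs{S}$ only through its symmetric part, the supremum reduces to symmetric unit-Frobenius matrices (which is what the application in Proposition~\ref{prop:finetuningappendix} needs). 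Let $\bs{\Pi}$ denote the orthogonal projector in $\R^{r^2}$ onto vectorized symmetric matrices. The identity $\E[\langle \mathrm{vec}(\bs{S}),\bs{v}_1\rangle^2] = 2\norm{\bs{S}}_F^2$ (Corollary~\ref{cor:hc1}) yields
\eq{
\sup_{\substack{\bs{S}=\bs{S}^\top \\ \norm{\bs{S}}_F=1}} \left|\tfrac{1}{N}\sum_{j=1}^N \tfrac{1}{2}\langle \mathrm{vec}(\bs{S}),\bs{v}_j\rangle^2 - 1\right| = \Big\|\bs{\Pi}\Big(\tfrac{1}{2N}\sum_{j=1}^N \bs{v}_j\bs{v}_j^\top - \tfrac{1}{2}\E[\bs{v}_1\bs{v}_1^\top]\Big)\bs{\Pi}\Big\|_{\mathrm{op}}.
}

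Next I truncate for boundedness. Applying Lemma~\ref{lem:hansonwright} with $\bs{S}=\bs{I}_r$ and union-bounding over $j\in[N]$, the event $\mathcal{G}:=\{\max_j \norm{\bs{x}_j}_2^2 - r \leq 2\sqrt{Cr\log d}+2C\log d\}$ has probability $\geq 1 - 2Nd^{-C}$, and on $\mathcal{G}$ each $\norm{\bs{v}_j}_2 \leq K$ for some $K \lesssim r + \sqrt{Cr\log d}+C\log d$. Let $\tilde{\bs{v}}_j := \bs{v}_j\mathbbm{1}_{\mathcal{G}_j}$ and
\eq{
\bs{X}_j := (2N)^{-1}\bs{\Pi}\!\left(\tilde{\bs{v}}_j\tilde{\bs{v}}_j^\top - \E[\tilde{\bs{v}}_1\tilde{\bs{v}}_1^\top]\right)\bs{\Pi}.
}
The $\{\bs{X}_j\}$ are independent, mean-zero, symmetric, with $\norm{\bs{X}_j}_{\mathrm{op}}\leq K^2/N$ almost surely. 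A Cauchy--Schwarz estimate together with $\E\norm{\bs{v}_1}_2^4 = O(r^4)$ (from the $\chi^2_r$ moments of $\norm{\bs{x}_1}_2^2$) bounds the truncation bias by $\norm{\E[\bs{v}_1\bs{v}_1^\top]-\E[\tilde{\bs{v}}_1\tilde{\bs{v}}_1^\top]}_{\mathrm{op}} \lesssim r^2 d^{-C/2}$, which after the $(2N)^{-1}$ rescaling is absorbed into the $10d^{-C/2}$ slack.

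The matrix-Bernstein variance proxy is then controlled by noting that, for every unit $\bs{w}\in\R^{r^2}$,
\eq{
\bs{w}^\top\E\!\left[\norm{\bs{v}_1}_2^2\,\bs{v}_1\bs{v}_1^\top\right]\bs{w} \leq \sqrt{\E\norm{\bs{v}_1}_2^4}\,\sqrt{\E\langle\bs{v}_1,\bs{w}\rangle^4} \lesssim r^2,
}
where the second factor is $O(1)$ by Corollaries~\ref{cor:hc1}--\ref{cor:hc2} applied to the degree-two polynomial $\langle\bs{v}_1,\bs{w}\rangle$. Hence $v := \norm{\sum_j \E[\bs{X}_j^2]}_{\mathrm{op}} \lesssim r^2/N$, and the symmetric matrix-Bernstein inequality~\cite{Tropp2010UserFriendlyTB} gives
\eq{
\mpr\!\left[\Big\|\sum_j \bs{X}_j\Big\|_{\mathrm{op}} > t\right] \leq 2r^2\exp\!\left(-\tfrac{t^2/2}{v + K^2 t/(3N)}\right).
}
Inverting the tail at threshold $2r^2 e^{-cu^2}\leq d^2 e^{-cu^2}$ yields $t \leq C'(ur/\sqrt{N} + u^2 K^2/N)$; using $K \asymp r + \sqrt{Cr\log d}+C\log d$ and $\delta = uK/\sqrt{N}$, this simplifies to $t \leq C''(\delta + \delta^2) \leq c\max(2\delta,\delta^2)$ up to absolute constants. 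Combined with $\mpr[\mathcal{G}^c]\leq 2Nd^{-C}$ and the $10d^{-C/2}$ bias slack, this gives the proposition.

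The main technical obstacle is the interpolation between Bernstein regimes: obtaining the sharper form $\max(2\delta,\delta^2)$ of the statement (rather than the looser $\delta + \delta^2$ that the matrix-Bernstein bound produces directly) requires a careful crossover argument at $\delta \sim 1$, balancing the sub-Gaussian variance-dominated tail against the sub-exponential operator-norm-dominated tail. A secondary issue is verifying that the truncation bias, which couples the fourth-moment bound on $\bs{v}_1$ with the $\chi^2$ tail probability $d^{-C}$, is cleanly absorbed into the $10 d^{-C/2}$ deterministic error without having to rescale $K$ or tighten the truncation threshold.
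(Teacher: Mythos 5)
Your high-level structure mirrors the paper's: truncate each sample via Hanson--Wright to the event $\mathcal{E}_j = \{\|\bs{x}_j\|_2^2 \le r + 2\sqrt{Cr\log d} + 2C\log d\}$, bound the bias introduced by truncation, and then apply a Bernstein-type operator-norm concentration result to the truncated sum. Where the paper applies Vershynin's Theorem~5.41 (covariance concentration for matrices with bounded, independent, isotropic rows) as a black box, you re-derive the needed concentration from the matrix Bernstein inequality on the vectorized rank-one updates $\bs{X}_j$. These two routes are genuinely distinct in their technical machinery---Vershynin's theorem is proved via a $1/4$-net of the sphere plus scalar Bernstein, while you invoke the intrinsic-dimension-free matrix bound of Tropp directly---but they are essentially parallel, and either can be made to yield the $\max(2\delta,\delta^2)$ form after the standard sub-Gaussian/sub-exponential crossover. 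Your observation that the supremum can be restricted to symmetric $\bs{S}$ (since $\tr(\bs{S}(\bs{x}\bs{x}^\top-\bs{I}))$ depends only on $\sym(\bs{S})$, and non-symmetric $\bs{S}$ would have $\E[\cdot]\ne 1$) is correct and in fact necessary for the statement to be interpreted properly; the paper is silently using this.

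There is, however, a genuine slip in your truncation-bias step. You bound $\|\E[\bs{v}_1\bs{v}_1^\top] - \E[\tilde{\bs{v}}_1\tilde{\bs{v}}_1^\top]\|_{\mathrm{op}} \lesssim r^2 d^{-C/2}$ by Cauchy--Schwarzing against $\|\bs{v}_1\|_2^2$ and using $\E\|\bs{v}_1\|_2^4 = O(r^4)$, and then claim this is absorbed into the $10\,d^{-C/2}$ slack ``after the $(2N)^{-1}$ rescaling.'' But the bias term receives no $1/N$ factor: decomposing $\tfrac{1}{2N}\sum_j\bs{v}_j\bs{v}_j^\top - \bs{\Sigma}$ on the good event $\mathcal{G}$ yields a fluctuating part $\sum_j \bs{X}_j$ (which carries the $1/N$ from its definition) plus the deterministic bias $\tfrac12\big(\E[\tilde{\bs{v}}_1\tilde{\bs{v}}_1^\top] - \E[\bs{v}_1\bs{v}_1^\top]\big)$, which stands alone. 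With your bound this is $\tfrac{r^2}{2}d^{-C/2}$, which exceeds $10\,d^{-C/2}$ whenever $r$ grows with $d$. The fix, and what the paper actually does, is to apply Cauchy--Schwarz to the \emph{quadratic form}: for any unit $\bs{w}$, $\bs{w}^\top\E[\bs{v}_1\bs{v}_1^\top\mathbbm{1}_{\mathcal{E}_1^c}]\bs{w} = \E[\langle\bs{v}_1,\bs{w}\rangle^2\mathbbm{1}_{\mathcal{E}_1^c}]\le \sqrt{\E\langle\bs{v}_1,\bs{w}\rangle^4}\sqrt{\mathbb{P}[\mathcal{E}_1^c]}$, and then bound $\E\langle\bs{v}_1,\bs{w}\rangle^4 = O(1)$ uniformly by Corollary~\ref{cor:hc1} applied to the degree-two chaos $\tr(\bs{S}(\bs{x}\bs{x}^\top-\bs{I}_r))$ with $\|\bs{S}\|_F=1$. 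This gives a dimension-free bias bound of order $d^{-C/2}$ (the paper's $9\sqrt{2}\,d^{-C/2}$). You already invoke exactly this fourth-moment bound in your variance estimate, so this is a slip in execution rather than a missing idea.
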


\begin{proof}
We observe that
\eq{
\frac{1}{2} \norm{ \bs{x}_j \bs{x}_j^\top - \bs{I}_r }_F \leq  \frac{1}{2} \left( \norm{\bs{x}_j}_2^2 + r \right).
}
By using Lemma \ref{lem:hansonwright}, we can derive 
\eq{
\mpr \big[ \underbrace{ \norm{\bs{x}_j}_2^2 \leq r + 2 \sqrt{r} \sqrt{ C \log d }  + 2 C \log d }_{\eqqcolon \mathcal{E}_j}  \big] \geq 1 - 2 d^{-C}.
}
We have
\eq{
\Big \lvert \E \left[    \tfrac{1}{2} \tr \big(\bs{S} (\bs{x}_j \bs{x}_j^\top - \bs{I}_r) \big)^2 \indic{ \mathcal{E}_j } \right] - 1 \Big \rvert  & =   \tfrac{1}{2} \E \big[ \tr \big(\bs{S} (\bs{x}_j \bs{x}_j^\top - \bs{I}_r) \big)^2 \indic{ \mathcal{E}^c_j } \big]   \\
& \leq   \tfrac{1}{2} \E \left[ \tr \big(\bs{S} (\bs{x}_j \bs{x}_j^\top - \bs{I}_r) \big)^4 \right]^{1/2} \sqrt{2} d^{-C/2} \\
 & \leq 9 \sqrt{2} d^{-C/2}. 
}
By using \cite[Theorem 5.41]{vershynin2010introduction},  for $\delta =  \tfrac{u ( r + \sqrt{Cr \log d} +  C \log d) }{\sqrt{ N } }$,  we have
\eq{
& \mpr \Bigg[  \sup_{ \substack{ \bs{S} \in \R^{r  \times r} \\ \norm{\bs{S}}_F = 1 } } \abs*{ \frac{1}{N} \sum_{j = 1}^N  \tfrac{1}{2} \tr \big(\bs{S} (\bs{x}_j \bs{x}_j^\top - \bs{I}_r) \big)^2 -  1  } \geq  \max \{ 2 \delta, \delta^2 \}   + 10 d^{-C/2}  \Bigg] \\
& \leq 
\mpr \Bigg[ \!  \sup_{ \substack{ \bs{S} \in \R^{r  \times r} \\ \norm{\bs{S}}_F = 1 } } \abs*{ \frac{1}{N} \sum_{j = 1}^N  \tfrac{1}{2} \tr \big(\bs{S} (\bs{x}_j \bs{x}_j^\top\! - \! \bs{I}_r) \big)^2  \indic{ \mathcal{E}_j } \! - \!  \E \left[  \tfrac{1}{2} \tr \big(\bs{S} (\bs{x}_j \bs{x}_j^\top \! - \! \bs{I}_r) \big)^2  \indic{ \mathcal{E}_j } \right] }  \! \geq \! \max \{ 2 \delta, \delta^2 \}  \! \Bigg] \\
& +  2 N d^{-C} \\
& \leq  d^2 \exp(- c u^2)   +  2 N d^{-C} .
}
\end{proof}

\begin{proposition}
\label{prop:steinestimator}
Let $\bs{x}_j \sim_{i.i.d} \cN(0, \bs{I}_d)$, for $j \in [N]$, and $\W \in \R^{d \times r}$ be an orthonormal matrix. For a fixed $\bs{S} \in \R^{d \times d}$, $C \geq 16$ and $N \geq C r \log d$, we have
\eq{
\mpr \Big[  \Big \lVert \frac{1}{N}  \sum_{j = 1}^N \frac{1}{2} \tr \big(\bs{S} (\bs{x}_j \bs{x}_j^\top \!  - \!   \bs{I}_d) \big) \W^\top  (\bs{x}_j \bs{x}_j^\top \! - \! \bs{I}_d)  \W  \!  -  \!   \W ^\top  \bs{S}  \W  \Big \rVert_2  \! \geq &    24 e \norm{\bs{S}}_F \Big(   \sqrt{ \tfrac{C r}{N} }  \! + \!    d^{\frac{-C}{2}} \Big) \Big] \\
&\leq  2 e^{\frac{-Cr}{8}} + 2N d^{-C}.
}
\end{proposition}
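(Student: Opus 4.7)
The plan is to apply a scalar Bernstein-type inequality to each ``matrix entry'' $\bs{u}^\top \bar{\bs{Z}}\bs{u}$, uniformly in $\bs{u}$ via a $1/2$--net of the unit sphere of $\mathbb{R}^r$, after identifying the mean and controlling the heavy-tailed summands by truncation with Lemma~\ref{lem:hansonwright}.

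First I would write each summand as $\bs{Z}_j := \tfrac{1}{2} a_j(\bs{y}_j\bs{y}_j^\top - \bs{I}_r)$ with $a_j := \bs{x}_j^\top \bs{S}\bs{x}_j - \tr(\bs{S})$ and $\bs{y}_j := \W^\top \bs{x}_j \sim \cN(0,\bs{I}_r)$. Decomposing $\bs{x}_j = \W\bs{y}_j + \bs{z}_j$ with $\bs{z}_j \perp \bs{y}_j$ and applying the Isserlis identity $\E[(\bs{y}^\top \bs{T}\bs{y})y_iy_j] = 2T_{ij}+\tr(\bs{T})\delta_{ij}$ with $\bs{T} := \W^\top \bs{S}\W$ yields $\E[\bs{Z}_j] = \W^\top \bs{S}\W$ (we may take $\bs{S}$ symmetric since only its symmetric part enters $a_j$). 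Set $\bar{\bs{Z}} := \tfrac{1}{N}\sum_j \bs{Z}_j - \W^\top \bs{S}\W$, and define
\[
\mathcal{E}_j := \bigl\{|a_j| \leq 2\|\bs{S}\|_F\sqrt{C\log d} + 2\|\bs{S}\|_2 C\log d\bigr\} \cap \bigl\{\|\bs{y}_j\|_2^2 \leq r + 2\sqrt{Cr\log d} + 2C\log d\bigr\}.
\]
Two applications of Lemma~\ref{lem:hansonwright} give $\mpr[\mathcal{E}_j^c] \lesssim d^{-C}$, union-bounded over $j \leq N$ to obtain the $2Nd^{-C}$ failure contribution, while Cauchy--Schwartz combined with the moment bound Corollary~\ref{cor:hc1} controls the truncation bias $\|\E[\bs{Z}_j\indic{\mathcal{E}_j^c}]\|_2 \lesssim \|\bs{S}\|_F\,d^{-C/2}$, supplying the additive $d^{-C/2}$ term in the stated rate.

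For any fixed unit $\bs{u} \in \mathbb{R}^r$, the centered scalar $\xi_j := \bs{u}^\top(\bs{Z}_j\indic{\mathcal{E}_j} - \E[\bs{Z}_j\indic{\mathcal{E}_j}])\bs{u} = \tfrac{1}{2}a_j((\bs{y}_j^\top \bs{u})^2 - 1)\indic{\mathcal{E}_j} - (\text{const})$ has per-sample variance $\E[\xi_j^2] \leq c_0 \|\bs{S}\|_F^2$ \emph{independently of $r$}: since $\bs{y}_j^\top \bs{u} \sim \cN(0,1)$, $\E[((\bs{y}_j^\top\bs{u})^2-1)^4]$ is a universal constant, and Corollary~\ref{cor:hc1} at $p=4$ gives $\E[a_j^4] \lesssim \|\bs{S}\|_F^4$. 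Applying Bernstein's inequality to $\tfrac{1}{N}\sum_j \xi_j$ in the variance-dominated regime (ensured by $N \geq Cr\log d$, up to constants) yields $\mpr\bigl[|\bs{u}^\top \bar{\bs{Z}}\bs{u}| \geq 12 e\|\bs{S}\|_F\sqrt{Cr/N}\bigr] \leq 2\exp(-c_1 Cr)$. Because $\bar{\bs{Z}}$ is symmetric, $\|\bar{\bs{Z}}\|_2 \leq 2\max_{\bs{u}\in \mathcal{N}}|\bs{u}^\top \bar{\bs{Z}}\bs{u}|$ for a $1/2$--net $\mathcal{N}$ of the unit sphere in $\mathbb{R}^r$ with $|\mathcal{N}| \leq 6^r$; a union bound absorbs $\log|\mathcal{N}| \leq r\log 6$ into the exponent provided $C \geq 16$, delivering the stated $2e^{-Cr/8}$ failure probability and the prefactor $24e$ (factor $2$ from the net, factor $12e$ from scalar Bernstein).

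The main obstacle will be ensuring that the scalar Bernstein range term is actually dominated by the variance term under the hypothesis $N \geq Cr\log d$ alone. Because the naive truncation yields a range of order $\|\bs{S}\|_F\sqrt{\log d}(r+\log d)$, a direct Bernstein would require $N \gtrsim Cr(r+\log d)^2\log d$. This is resolved either by using a sub-Weibull Bernstein inequality that directly exploits the moment growth $\E[|\xi_j|^p]^{1/p} \lesssim p^2\|\bs{S}\|_F$ (from hypercontractivity applied jointly to $a_j$ and $(\bs{y}_j^\top \bs{u})^2 - 1$) and places us in the CLT-like regime whenever $Nt^2/\sigma_0^2 \lesssim$ the sub-Weibull threshold, or by a two-scale truncation that discharges the $\|\bs{y}_j\|^2$ tail through the bias term. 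Matching the precise numerical constants $24e$ and $C/8$ additionally requires tracking the hypercontractive prefactors $(p-1)^{k/2}$ with $k=2$ at the optimized moment order.
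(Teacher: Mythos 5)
Your high-level plan (truncation, scalar Bernstein, $\varepsilon$-net over $S^{r-1}$, bias accounting via Cauchy--Schwarz) is the same as the paper's, and you correctly anticipate the crux: with a naive truncation the Bernstein range parameter is too large. But your two proposed fixes do not quite resolve it, whereas the paper has a cleaner device.

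The paper truncates \emph{only} the factor $a_j := \tr(\bs{S}(\bs{x}_j\bs{x}_j^\top - \bs{I}_d))$, via $\mathcal{E}_j := \{|a_j| \leq 4\sqrt{C\log d}\}$, and never truncates $\|\W^\top\bs{x}_j\|^2$. The key step is the $p$-th moment estimate
\[
\E\bigl[\,|a_j(\langle\bs{v},\bs{x}_j\rangle^2-1)|^p\,\indic{\mathcal{E}_j}\bigr]
\leq (4\sqrt{C\log d})^{\,p-2}\,\E\bigl[a_j^2\,|\langle\bs{v},\bs{x}_j\rangle^2-1|^p\bigr]
\leq (12e)^2 (8\sqrt{2}e\sqrt{C\log d})^{\,p-2}\,\tfrac{p!}{2},
\]
which pulls out the truncation bound from only $p-2$ powers of $a_j$, leaving the full $a_j^2$ and the untruncated $|\langle\bs{v},\bs{x}_j\rangle^2-1|^p$ to be estimated by Cauchy--Schwarz and hypercontractivity (Corollary~\ref{cor:hc1}). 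This produces a Bernstein moment condition with variance $O(1)$ and scale $L = O(\sqrt{C\log d})$, \emph{independent of $r$}; applying Tropp's scalar Bernstein~\cite[Theorem 6.2]{Tropp2010UserFriendlyTB} and a $9^r$-point net then gives the stated bound because $Lu = O(\sqrt{C})$ under $N \geq Cr\log d$.

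Your proposal truncates on $\|\bs{y}_j\|^2$ as well, so the per-sample range becomes $O(r\sqrt{\log d}\,\|\bs{S}\|_F)$, and you correctly diagnose that a vanilla Bernstein then needs $N \gtrsim r^2$-type conditions. Your first proposed fix --- sub-Weibull Bernstein with $\|\xi_j\|_{\psi_{1/2}} \lesssim \|\bs{S}\|_F$ --- is the natural thought, but it does not close the gap under the stated hypothesis: the sub-Weibull tail of exponent $1/2$ gives a failure exponent of order $\min\{Cr,\ (CrN)^{1/4}\}$, and even with $N \geq Cr\log d$ one only obtains $(CrN)^{1/4}\gtrsim \sqrt{Cr}\,(\log d)^{1/4}$, which is dominated by the $r\log 6$ coming from the net for $r \gg (\log d)^{1/2}$ --- exactly the regime $r \asymp d^\beta$ of interest. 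Your second fix (``two-scale truncation'') is in the right direction but, as stated, is not an argument; the paper's one-sided truncation together with the Bernstein \emph{moment condition} (rather than a uniform range bound) is the way to make it precise. A minor additional issue: a $1/2$-net gives $\|\bar{\bs{Z}}\|_2 \leq \frac{1}{1-2\varepsilon}\max_{\bs{u}\in\mathcal{N}}|\bs{u}^\top\bar{\bs{Z}}\bs{u}|$, which blows up at $\varepsilon=1/2$; you need a $1/4$-net (cardinality $9^r$) to get the stated factor of $2$.
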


\begin{proof}
    Without loss of generality, we assume $\norm{\bs{S}}_F = 1$. By using Lemma \ref{lem:hansonwright}, we have
    \eq{
    \mpr \big[ \underbrace{ \abs{  \tr \big(\bs{S} (\bs{x}_j \bs{x}_j^\top - \bs{I}_d) \big)  } \leq 4 \sqrt{C \log d} }_{\eqqcolon \mathcal{E}_j } \big] \geq 1 - 2 d^{-C}.
    }
    For the following, we fix a  $\bs{v} \in S^{d-1}$. First, to bound the bias due to clipping, we write:
    \eq{
    & \abs*{ \E \left[  \tr \big(\bs{S} (\bs{x}_j \bs{x}_j^\top - \bs{I}_d) \big)   (\inner{\bs{v}}{\bs{x}}^2 - 1) \indic{\mathcal{E}_j^c} \right] } \\
    & \leq \E \left[  \tr \big(\bs{S} (\bs{x}_j \bs{x}_j^\top - \bs{I}_d) \big)^4 \right]^{\frac{1}{4}} \E \left[ (\inner{\bs{v}}{\bs{x}}^2 - 1)^4\right]^{\frac{1}{4}} \sqrt{2} d^{-C/2}  \leq 18 \sqrt{2} d^{- C/2}.
    }
    On the other hand, to bound the moments of the clipped random variable, we have for $p \geq 2$,
    \eq{
    & \E \left[  \abs{\tr \big(\bs{S} (\bs{x}_j \bs{x}_j^\top - \bs{I}_d) \big) (\inner{\bs{v}}{\bs{x}}^2 - 1) }^p  \indic{\mathcal{E}_j}  \right] \\
     & \leq (4 \sqrt{C \log d})^{p -2} \E \left[  \tr \big(\bs{S} (\bs{x}_j \bs{x}_j^\top - \bs{I}_d) \big)^2 \abs{(\inner{\bs{v}}{\bs{x}}^2 - 1) }^p    \right] \leq (12 e)^2  (8 \sqrt{2} e \sqrt{C \log d})^{p -2}  \frac{p!}{2}.
    }
    % By using \cite[Theorem 6.2]{Tropp2010UserFriendlyTB}, for a fixed $\bs{v} \in S^{d-1}$, we have
    % \eq{
    % & \mpr \left[ \Big \lvert    \frac{1}{N} \sum_{j = 1}^N \frac{1}{2} \tr \big(\bs{S} (\bs{x}_j \bs{x}_j^\top - \bs{I}_d) \big)   ( \inner{\bs{v}}{\bs{x}_j}^2  \! -   \! 1)  \indic{\mathcal{E}_j} \!  - \! \E \Big[ \frac{1}{2} \tr \big(\bs{S} (\bs{x} \bs{x}_j^\top  \! -  \! \bs{I}_d) \big)   ( \inner{\bs{v}}{\bs{x}_j}^2 -  1)  \indic{\mathcal{E}_j}  \Big]  \Big \rvert   \! \geq  \! 12 e u    \right] \\
    % &\leq 2 \exp \Big( \frac{- N u^2/2}  { 1 +  u \sqrt{C \log d}  } \Big).
    % }
    By using $\varepsilon$-cover argument, we can derive
    \eq{
    & \mpr \left[ \Big \lVert \frac{1}{N}  \sum_{j = 1}^N \frac{1}{2} \tr \big(\bs{S} (\bs{x}_j \bs{x}_j^\top   -  \bs{I}_d) \big) \W^\top (\bs{x}_j \bs{x}_j^\top - \bs{I}_d)  \W   -     \W ^\top \bs{S}  \W  \Big \rVert_2 \geq 24 e u  + 18 \sqrt{2} d^{- C/2}  \right] \\
    &\leq 2 \cdot 9^r \exp \Big( \frac{- N u^2/2}  { 1 +  u \sqrt{C \log d}  } \Big) + 2Nd^{-C}.  
    }
    By using $u = \sqrt{C r / N}$, we have the result.
\end{proof}

\begin{proof}
Without loss of generality, we assume $\norm{ \bs{S} }_F = 1$. We have
\eq{
 \Big \lVert  \sum_{j = 1}^N y_j  (\bs{x}_j \bs{x}_j^\top - \bs{I}_d) -  \bs{S}  \Big  \rVert_F^2
 \leq   \sup_{ \substack{ \bs{S} \in \R^{r  \times r} \\ \norm{\bs{S}}_F = 1 } } \abs*{ \frac{1}{N} \sum_{j = 1}^N  \tfrac{1}{2} \tr \big(\bs{S} (\bs{x}_j \bs{x}_j^\top - \bs{I}_r) \big)^2 -  1  }^2.
}
 Hence, by considering the event in Proposition \ref{prop:matrixsensingcondition}, we have the statement.
\end{proof}

\begin{proposition}
\label{prop:basicconcentration}
Let $X \in \R$ be a random variable such that for some $K,C > 0$, $\E[ \abs{X}^p ] \leq C K^p p^{p c}$ for some $c > 0$ and $p \geq k$. Then, 
$\mpr \left[ \abs{X} \geq K u  \right] \leq C e^{- \frac{u^{1/c}}{e}}$ for $u \geq (k e)^{c}$.
\end{proposition}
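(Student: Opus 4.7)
The proof is a one-shot Markov / Chernoff-type argument applied at order $p$, with $p$ tuned to the tail level $u$. Starting from the moment hypothesis $\E[|X|^p]\le C K^p p^{pc}$, valid for every $p\ge k$, Markov's inequality gives
\begin{equation*}
\mpr\bigl[|X|\ge Ku\bigr]
=\mpr\bigl[|X|^p\ge (Ku)^p\bigr]
\le \frac{\E[|X|^p]}{(Ku)^p}
\le C\left(\frac{p^c}{u}\right)^{\!p}.
\end{equation*}
The right-hand side depends only on the free parameter $p$ (subject to $p\ge k$), and the plan is to minimize it by the usual logarithmic derivative computation.

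Taking $\log$ of the right-hand side (without the constant $C$) gives $p(c\log p-\log u)$. Differentiating with respect to $p$ and setting the derivative to zero yields $c\log p+c-\log u=0$, i.e.\ the optimum $p_\star=u^{1/c}/e$. At this value one has $p_\star^c=u/e^c$, so $p_\star^c/u=e^{-c}$ and $(p_\star^c/u)^{p_\star}=e^{-cp_\star}=e^{-c\,u^{1/c}/e}$. In the regime $c\ge 1$ this is at most $e^{-u^{1/c}/e}$, which is the stated bound; for the range of $c$ that appears in all the applications of this lemma within the paper (see Proposition~\ref{prop:noisevarbounds}, where $c\in\{1,2,3,4\}$), the inequality $c\ge 1$ is exactly what is used.

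The only verification needed is that $p_\star$ is an admissible choice, i.e.\ $p_\star\ge k$. This translates into $u^{1/c}/e\ge k$, or equivalently $u\ge(ke)^c$, which is precisely the hypothesis of the proposition. There is essentially no obstacle: the computation is a one-line optimization, and the constant $C$ is carried through Markov's inequality unchanged, matching the prefactor in the statement. If one wanted to be fastidious about the non-integer character of $p_\star$, one could round to an integer (or even work with moments of arbitrary real order via Jensen, since the bound $p^{pc}$ is continuous in $p$), at the cost of at most a factor in the constants that can be absorbed in $C$.
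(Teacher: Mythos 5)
Your proof is correct and is essentially identical to the paper's: the paper's entire proof of this proposition reads ``Use Markov inequality with $p = \tfrac{u^{1/c}}{e}$,'' which is precisely your optimization. You go a bit further than the paper in actually tracking the resulting exponent, and in doing so you correctly observe a subtlety the paper's one-line proof leaves implicit: at $p_\star=u^{1/c}/e$ the Markov bound evaluates to $C e^{-c\,u^{1/c}/e}$, which matches the stated $C e^{-u^{1/c}/e}$ only when $c\ge 1$. For $c<1$ the optimized Markov bound is strictly weaker than the claim, so the argument (the paper's included) does not establish the proposition as stated for that range of $c$. As you note, this is harmless in context: every invocation in the paper (Proposition~\ref{prop:noisevarbounds} and its corollaries) uses $c\in\{1,2,3,4\}$, for which the bound holds with room to spare. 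Your check that the admissibility constraint $p_\star\ge k$ is exactly equivalent to the hypothesis $u\ge(ke)^c$ is also correct, and the remark about non-integer $p$ is fine but unnecessary since Markov's inequality applies to any real $p>0$.
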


\begin{proof}
Use Markov inequality with $p = \tfrac{u^{1/c}}{e}$.
\end{proof}

\subsection{Miscellaneous}
\begin{proposition}
\label{prop:1dsystemaux}
We consider $\eta  \leq \tfrac{1}{10}$. The following statements holds:
\begin{itemize}[leftmargin=*]
\item For $0.2 \geq \delta > 0$, let
\eq{
u_{t+1} = u_t + \eta u_t (1 - u_t), ~~ 1 + \delta  \geq u_0 \geq 0.
}
We have   $1 + \big( \delta \vee \tfrac{\eta^2}{4} \big)  \geq \sup_{t} u_t \geq 0$.  Moreover, $t^* = \inf \{ t : u_t  \geq 1 \}$, we have $u_{t+1} \geq u_t$ for $t < t^*$ and $ u_{t^*} \geq u_t \geq 1$ for $t \geq t^*$.
\item For   $0.5 > \varepsilon > 0$ and $1.1> \overline{u}_0 \geq u_0 \geq \underline{u}_0  > 0$,  let
\eq{
 \overline{u}_{t+1}  =  \overline{u}_t + \eta  (1 + \varepsilon)   \overline{u}_t (1 -   \overline{u}_t) ~~ \text{and} ~~   \underline{u}_{t+1} =  \underline{u}_t + \eta    \underline{u}_t (1 -  \underline{u}_t).
}
and
\eq{
& u_t + \eta   u_t (1 - u_t) \leq u_{t+1} \leq  u_t + \eta  (1 + \varepsilon)   u_t (1 - u_t).
}
We have  
\eq{
\frac{1}{2} \left( 1 \wedge \underline{u}_0 e^{\frac{\eta t}{1 + \eta}} \right) \leq  \underline{u}_t\leq u_t \leq  \overline{u}_t \leq   \left(  1.1 \wedge \overline{u}_0 e^{\eta (1 + \varepsilon) t} \right).
}
\end{itemize}
\end{proposition}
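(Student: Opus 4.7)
}
The plan is to reduce everything to the algebraic identity $u_{t+1}-1=(u_t-1)(1-\eta u_t)$, which makes the invariant sets, monotonicity, and comparison all transparent. For item~1, I would split on the position of $u_0$ relative to $1$. If $u_0\in[0,1]$, the identity gives $u_{t+1}-1=(u_t-1)(1-\eta u_t)\in(u_t-1,0)$ because $\eta u_t<1$, so by induction $u_t\in[0,1]$ and the sequence is non-decreasing; hence $t^\star=\infty$ (or $t^\star=0$ if $u_0=1$) and the monotonicity claim holds vacuously. If $u_0\in(1,1+\delta]$, the same identity yields $u_{t+1}-1=(u_t-1)(1-\eta u_t)\in(0,u_t-1)$ since $\eta u_t\le \eta(1+\delta)<1$, so $u_t$ is non-increasing and lies in $[1,u_0]$, giving $t^\star=0$ and $u_{t^\star}\ge u_t\ge 1$ for all $t$. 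In either case $\sup_t u_t\le\max(1,u_0)\le 1+\delta\le 1+(\delta\vee \eta^2/4)$.

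For item~2, I would first establish the sandwich $\underline u_t\le u_t\le \overline u_t$ by induction, using that the maps $g(x)=x+\eta x(1-x)$ and $f(x)=x+\eta(1+\varepsilon)x(1-x)$ are both monotone increasing on $[0,1.1]$: their derivatives are bounded below by $1-2\eta(1+\varepsilon)(1.1)\ge 1-1.2\cdot 0.15>0$. Invariance of $[0,1.1]$ for $\overline u_t$ (respectively of $(0,1)$ for $\underline u_t$) follows from the item~1 analysis applied with $\eta\leftarrow \eta(1+\varepsilon)$; the induction hypothesis $\underline u_t\le u_t\le \overline u_t$ then propagates by monotonicity of $g$ and $f$ together with the sandwich assumption on $u_{t+1}$. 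The upper exponential bound on $\overline u_t$ follows from $\overline u_{t+1}/\overline u_t=1+\eta(1+\varepsilon)(1-\overline u_t)\le 1+\eta(1+\varepsilon)\le e^{\eta(1+\varepsilon)}$ while $\overline u_t\le 1$, combined with the fact that $\overline u_t$ is monotonically decreasing toward $1$ when $\overline u_0>1$, so $\overline u_t\le \overline u_0\le 1.1$ in all cases.

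The delicate part is the lower bound $\underline u_t\ge \tfrac{1}{2}(1\wedge \underline u_0 e^{\eta t/(1+\eta)})$. A direct attempt to factor $(1+\eta/2)$ per step only works while $\underline u_t\le 1/2$ and yields the wrong exponent $\eta/(2+\eta)$. The trick I would use is the substitution $v_t\coloneqq 1/\underline u_t-1$, which linearizes the logistic dynamics multiplicatively: direct computation gives
\[
v_{t+1}=\frac{v_t(v_t+1-\eta)}{v_t(1+\eta)+1},
\]
and cross-multiplication reduces the inequality $v_{t+1}\le v_t/(1+\eta)$ to $1-\eta^2\le 1$, which is automatic. Iterating gives $v_t\le v_0/(1+\eta)^t$, hence
\[
\underline u_t\ge \frac{\underline u_0(1+\eta)^t}{\underline u_0(1+\eta)^t+(1-\underline u_0)}\ge \frac{\underline u_0(1+\eta)^t}{\underline u_0(1+\eta)^t+1}.
\]
Splitting on whether $\underline u_0(1+\eta)^t\gtrless 1$ and using $(1+\eta)^t\ge e^{\eta t/(1+\eta)}$ (via $\log(1+\eta)\ge \eta/(1+\eta)$) delivers the claimed bound.

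The main obstacle is precisely this lower bound: avoiding a case split at $\underline u_t=1/2$ requires the Riccati-type substitution above, and the exponent $\eta/(1+\eta)$ is sharp in the sense that it comes from the logarithmic inequality $\log(1+\eta)\ge \eta/(1+\eta)$, which is tighter than the naive $\log(1+\eta/2)\ge \eta/(2+\eta)$. Everything else is bookkeeping via the identity $u_{t+1}-1=(u_t-1)(1-\eta u_t)$ and monotonicity of $g$ and $f$ on $[0,1.1]$.
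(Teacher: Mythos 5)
Your proof is correct and takes a more self-contained route than the paper's for the second item. The paper cites an external lemma (Lemma~A.2 of \cite{arous2024high}) for the lower bound on $\underline u_t$ and splits at $\underline u_t = 0.5$, whereas your substitution $v_t = 1/\underline u_t - 1$ linearizes the recursion and yields the exact logistic bound directly; that is a genuinely cleaner derivation. Your item~1 argument via the factorization $u_{t+1}-1 = (u_t-1)(1-\eta u_t)$ is also tighter than the paper's (which introduces a bound at $t^\star$ for a case that, as your factorization shows, cannot actually occur when $u_0 < 1$). Two small points you should make explicit. First, you apply item~1 to $\overline u_t$ with the modified step $\eta(1+\varepsilon)$, but $\eta(1+\varepsilon)$ can be as large as $0.15$, beyond the stated $\eta \le \tfrac{1}{10}$; the argument of item~1 does extend to that range (the derivative $1 + \eta(1+\varepsilon)(1-2x)$ stays positive on $[0,1.2]$ and the sign-invariance still holds), but it should be said. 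The paper shares this implicit gap, since it also invokes the first item for $\overline u_t$. Second, your $v_t$ substitution presumes $\underline u_t < 1$ so that $v_t > 0$, but the hypothesis only gives $\underline u_0 > 0$; if $\underline u_0 \ge 1$ then $\underline u_t \ge 1$ for all $t$ by item~1, and the stated lower bound $\tfrac{1}{2}(1 \wedge \cdot) \le 1 \le \underline u_t$ holds trivially, so this case just needs a one-line remark.
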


\begin{proof}
If $t \geq t^*$,    by monotonicity of the update, we have  $1 \leq u_{t+1} \leq u_t \leq u_{t^*}$.  If $t^* > 0$, then for $t < t^*$,  we have $1 \geq u_t \geq 0$ and $u_t (1 - u_t) \geq 0$,  and thus,  we have $u_{t+1} \geq u_t \geq 0$.  Next, we observe that  $u_{t^*} \leq 1 + 0.25 \eta$ and  by monotonicity of the update for $t \geq t^*$,   we have $1 \leq u_t \leq u_{t*}$.
Hence, it is sufficient to bound $u_{t^*}$ to bound  $\sup_{t} u_t$.  Note that,  we have $1 \geq u_{t^* - 1} \geq 1 - 0.25 \eta$, and thus,
\eq{
\frac{u_{t^*}}{u_{t^* - 1}} = 1 +  \eta (1 - u_{t^* - 1}) \leq 1 + \eta^2 \Rightarrow u_{t^* } \leq 1 + \frac{\eta^2}{4}.
}
For the second item,   by monotonicity, we have  $0 < \underline{u}_{t} \leq u_t \leq \overline{u}_{t} < 1.1$.   Moreover,   by  \cite[Lemma A.2]{arous2024high},  we have for $t < t_u \coloneqq \inf \{ t : \underline{u}_t \geq 0.5\}$ 
\eq{
\frac{ \underline{u}_0  e^{\frac{\eta t}{1 + \eta} } }{1   + \underline{u}_0  e^{\frac{\eta t }{1 + \eta} } } \leq \underline{u}_t
\Rightarrow   \frac{\underline{u}_0 }{2 }  e^{\frac{\eta t}{1 + \eta} }  \leq \underline{u}_0 .
}
For $t \geq t_u,$  by the first item,  we have  $\underline{u}_t \geq 0.5$. Therefore, we have
\eq{
\frac{1}{2} \left( \underline{u}_0 e^{\frac{\eta t}{1 + \eta} }  \wedge 1 \right) \leq \underline{u}_t.
}
On the other hand,   for all $t \in \N$,  we have $ \overline{u}_t  \leq \overline{u}_{0} e^{\eta (1 + \varepsilon) t}$. By the first item, we have   $ \overline{u}_t  \leq \overline{u}_{0} e^{\eta (1 + \varepsilon) t} \wedge 1.1$.
\end{proof}

\begin{proposition}
\label{prop:auxbound}
For $t, \lambda > 0$, we have
\eq{
\frac{1}{t \exp(t \lambda)} \leq \frac{\lambda}{\exp(t \lambda) - 1} \leq \frac{1}{t}.
}
\end{proposition}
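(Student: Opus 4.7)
The plan is to reduce the two inequalities to the standard tangent-line bounds for the exponential, namely $e^y \geq 1 + y$ for all $y \in \mathbb{R}$. Introducing the substitution $x \coloneqq t\lambda > 0$ and multiplying both inequalities through by the positive quantity $t$, the claim is equivalent to
\begin{equation}
\frac{1}{e^x} \;\leq\; \frac{x}{e^x - 1} \;\leq\; 1, \qquad x > 0,
\end{equation}
so it suffices to prove this single two-sided bound.

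For the upper bound $\tfrac{x}{e^x-1}\le 1$, I would cross-multiply (noting $e^x - 1 > 0$ since $x>0$) to obtain the equivalent statement $x \leq e^x - 1$, which is exactly the tangent-line inequality $e^x \geq 1 + x$ at $y=x$. For the lower bound $\tfrac{1}{e^x} \leq \tfrac{x}{e^x - 1}$, I would again cross-multiply to obtain $e^x - 1 \leq x e^x$, then divide by $e^x$ to get $1 - e^{-x} \leq x$, i.e., $e^{-x} \geq 1 - x$, which is the tangent-line inequality $e^y \geq 1+y$ applied at $y = -x$.

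There is no real obstacle here: both inequalities are restatements of the convexity of the exponential via its tangent at the origin, and the substitution $x = t\lambda$ is just bookkeeping to collapse the two parameters into one. The only thing to be careful about is the direction of the inequalities when multiplying by $e^x - 1$, which is safe since $x > 0$ guarantees $e^x - 1 > 0$.
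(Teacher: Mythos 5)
Your proof is correct. For the upper bound you and the paper use the same tangent-line inequality $e^{t\lambda}-1\geq t\lambda$, but for the lower bound your route is genuinely different and cleaner: you reduce $e^x - 1 \leq x e^x$ to $e^{-x}\geq 1-x$, i.e., a second application of the same tangent-line bound at $-x$. The paper instead forms the difference $\frac{1}{t}-\frac{\lambda}{e^{t\lambda}-1}=\frac{e^{t\lambda}-t\lambda-1}{t(e^{t\lambda}-1)}$ and bounds the numerator by $t\lambda(e^{t\lambda}-1)$ via a term-by-term Taylor-series comparison ($\sum_{k\geq 2}\frac{(t\lambda)^k}{k!}\leq t\lambda\sum_{k\geq 1}\frac{(t\lambda)^k}{k!}$). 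Your version avoids the series manipulation entirely and makes the symmetry of the two bounds (tangent line at $+x$ and at $-x$) transparent; the paper's version has the minor pedagogical merit of exhibiting the exact gap $\frac{1}{t}-\frac{\lambda}{e^{t\lambda}-1}\leq\lambda$, which is a slightly sharper intermediate statement than what is strictly needed. Both are fully rigorous.
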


\begin{proof}
The upper bound follows  $\exp(t \lambda) - 1 \geq t \lambda$. For the lower bound,
\eq{
\frac{1}{t} -  \frac{\lambda}{\exp(t \lambda) - 1}  = \frac{\exp(t \lambda) - t \lambda - 1}{t \big( \exp(t \lambda) - 1 \big) }. \label{eq:diff}
}
We have
\eq{
\exp(t \lambda) - t \lambda - 1 \leq \sum_{k = 2}^{\infty} \frac{(t \lambda)^k}{k !} & =  t \lambda  \sum_{k = 1}^{\infty} \frac{(t \lambda)^k}{(k+1) !}  \leq  t \lambda  \sum_{k = 1}^{\infty} \frac{(t \lambda)^k}{k !} =   t \lambda \big( \exp(t \lambda) - 1 \big).
}
Therefore,
\eq{
\eqref{eq:diff} \leq  \lambda \Rightarrow  \frac{1}{t} \leq   \frac{\lambda}{\exp(t \lambda) - 1} + \lambda \Rightarrow  \frac{1}{t \exp(t \lambda)} \leq \frac{\lambda}{\exp(t \lambda) - 1}.
}
\end{proof}

\begin{lemma}
\label{lem:asympkernel}
Let $\rs \asymp d^{\gamma}$, $\gamma \in [0,1)$, and $\log^{-1} d \ll C_d \ll \log^{10} d $.  We define   $F_d, G_d, H_d$ as
\eq{
& F_d(u)  \coloneqq  \Bigg( 1 - \frac{1}{ 1  + \left( \frac{d C_d  }{\rs}  \frac{1}{u}    - 1 \right) \left( \frac{d}{\rs} \right)^{- \frac{1}{u} } }  \Bigg)^2, \qquad   G_d(u) \coloneqq   1 - \frac{1}{ 1  + \left( \frac{d C_d }{\rs}     - 1 \right) \left( \frac{d}{\rs} \right)^{- \frac{1}{u} } }, \\[0.6em]
&   H_d(u) \coloneqq \left( 1 - C_d\left( \tfrac{d}{\rs} \right)^{ \frac{1}{u} -1} \right)_+,  
}
  We have
\begin{itemize}
\item For any $C > 0$,   $ \sup_{u \leq \log^{C} \! d}  \abs{ F_d(u)  } \leq 1$ for $d \geq \Omega_{C}(1)$.
\item  $ \sup_{u}  \abs{ G_d(u)  } \vee  \abs{ H_d(u)  } \leq 1$.
\item For any   $\delta \in (0,0.5)$, let  $\mathcal{C}_{\delta} \coloneqq \{  u \geq 0 : ~ \abs{u - 1} < \delta  \}$.   For any compact $\mathcal{K} \subset   (0,\infty]  \setminus  \mathcal{C}_{\delta}$, we have $F_d(u) \xrightarrow{d \to \infty} \mathbbm{1} \{ u > 1 \}$
 uniformly on $\mathcal{K}.$ 
\item   For any compact $\mathcal{K} \subseteq  [0,\infty]  \setminus    \mathcal{C}_{\delta}$, we have
\eq{
G_d(u) , H_d(u )  \xrightarrow{d \to \infty} \mathbbm{1} \{ u > 1 \},  \quad  G^2_d(u) \xrightarrow{d \to \infty} \mathbbm{1} \{ u > 1 \}  
}
all uniformly on $\mathcal{K}.$ 
\end{itemize}
\end{lemma}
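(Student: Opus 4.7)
Let $\rho := d/\rs \asymp d^{1-\gamma}$, which tends to infinity since $\gamma < 1$. I plan to rewrite the three functions in the reduced form
\[
F_d(u) = \left(\frac{X_d(u)}{1+X_d(u)}\right)^2, \qquad G_d(u) = \frac{Y_d(u)}{1+Y_d(u)}, \qquad H_d(u) = \bigl(1 - Z_d(u)\bigr)_+,
\]
with $X_d(u) := \bigl(\tfrac{\rho C_d}{u} - 1\bigr)\rho^{-1/u}$, $Y_d(u) := (\rho C_d - 1)\rho^{-1/u}$, and $Z_d(u) := C_d\, \rho^{(1-u)/u}$, then extract the asymptotics from $\rho\to\infty$ together with the regime $\log^{-1}\!d \ll C_d \ll \log^{10}\!d$.

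\textbf{Sup bounds (items one and two).} For $u \le \log^{C}\!d$ one has $\rho C_d/u \ge \rho C_d/\log^{C}\!d \gg d^{1-\gamma}/\log^{C+1}\!d \to \infty$, so $X_d \ge 0$ once $d$ is large, and since $t \mapsto t/(1+t)$ maps $[0,\infty)$ into $[0,1)$ this yields $F_d \in [0,1)$. The same argument (simpler, as the $1/u$ factor is absent) gives $Y_d \ge 0$ and $G_d \in [0,1)$ for every $u$, while $H_d \in [0,1]$ is immediate from the truncation.

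\textbf{Uniform limits (items three and four).} Since $\mathcal{K} \cap \mathcal{C}_\delta = \emptyset$, I will split $\mathcal{K} = \mathcal{K}_+ \cup \mathcal{K}_-$ with $\mathcal{K}_+ \subseteq [1+\delta,\infty]$ and $\mathcal{K}_- \subseteq [0, 1-\delta]$ (and $\mathcal{K}_- \subseteq [a, 1-\delta]$ for some $a>0$ when handling $F_d$, since then $0 \notin \mathcal{K}$). Set $c_\delta := \delta/(1+\delta)$ and $c'_\delta := \delta/(1-\delta)$. On $\mathcal{K}_+$ one has $1 - 1/u \ge c_\delta$, so $\rho^{1-1/u} \ge \rho^{c_\delta}$; this forces $X_d, Y_d \to \infty$ uniformly (via $C_d \rho^{c_\delta} \gg \rho^{c_\delta}/\log d \to \infty$) and $Z_d \le C_d\rho^{-c_\delta} \to 0$ uniformly (using $C_d \ll \log^{10}\!d \ll \rho^{c_\delta}$), hence $F_d, G_d, H_d \to 1$. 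On $\mathcal{K}_-$ one has $1/u - 1 \ge c'_\delta$, so $\rho^{1-1/u} \le \rho^{-c'_\delta}$; the sandwich $X_d \le (C_d/a)\rho^{-c'_\delta}$ and $Y_d \le C_d \rho^{-c'_\delta}$ gives $F_d, G_d \to 0$ uniformly, while $Z_d \ge C_d\rho^{c'_\delta} \to \infty$ uniformly yields $H_d \equiv 0$ eventually. The boundary points $u=\infty$ (for $G_d, H_d$) and $u=0$ (item four only) are handled by continuity of the reduced form, since $\rho^{-1/\infty}=1$ gives $G_d(\infty), H_d(\infty) \to 1$, and $\rho^{-1/0^+}=0$ gives $G_d(0), H_d(0) = 0$. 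Finally, $G_d^2 \to \mathbbm{1}\{u > 1\}$ uniformly follows from the $[0,1]$-valued uniform convergence $G_d \to \mathbbm{1}\{u > 1\}$.

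\textbf{Main obstacle.} No serious analytic difficulty is expected: the polynomial rates $\rho^{\pm c_\delta}$ comfortably absorb the polylogarithmic factors $C_d$ and $1/u$ in the sandwich bounds, and the only real care is to keep the sandwiching constants independent of $u$ within each piece of $\mathcal{K}$. Verifying the compactification at $u=\infty$ for $G_d, H_d$ and the limit at $u=0$ in item four amounts to a continuity check at the boundary that is settled by the same estimates.
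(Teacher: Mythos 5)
Correct, and essentially the same approach as the paper: confirm nonnegativity of the reduced quantities so that the map $t \mapsto t/(1+t)$ yields the sup bounds, then split a compact $\mathcal{K}$ avoiding $\mathcal{C}_\delta$ into the pieces $u \ge 1+\delta$ and $u \le 1-\delta$ and let the polynomial rates $\rho^{\pm c_\delta}$ with $\rho = d/\rs \asymp d^{1-\gamma}$ dominate the polylogarithmic $C_d$. One caveat: your uniform lower bound on $X_d$ over $\mathcal{K}_+$ relies on the $u$-dependent prefactor $\rho C_d/u$ staying large, which needs $\mathcal{K}$ bounded — indeed $X_d(\infty) = -1$ makes $F_d(\infty)$ undefined, so item three's $(0,\infty]$ should really read $(0,\infty)$; the paper's own proof silently works with $[0,\infty)\setminus\mathcal{C}_\delta$, so this is a defect of the stated lemma rather than of your argument.
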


\begin{proof}
For the first item,  if $u \leq \log^{C} d$, for $d \geq \Omega_{C}(1)$ 
\eq{
\frac{d C_d u}{\rs}      - 1  \geq    \frac{d}{\rs}   \frac{t}{\log^{C+1} d}    - 1  > 0 ~ .
}
Therefore,  $\abs{F_d(u) } \leq 1$.  For the second item, since  $\tfrac{d C_d }{\rs}  > 1$ for $d   \geq \Omega(1)$,  the item follows.

For the third item,  since  $E \coloneqq [0,\infty)  \setminus  \mathcal{C}_{\delta}$ is closed in $ [0,\infty)$,  it suffices to establish the result on small open intervals around each point of  of $E$ within  $[0,\infty)$.  Fix $u_0 \in E$ and choose $\epsilon \in (0,\delta/2)$.  Since  $B(u_0, \epsilon) \coloneqq (u_0 - \epsilon, u_0 + \epsilon) \cap [0,\infty)$ is convex it can be either in   $P_{>} \coloneqq \{ u : ~ u  > 1 + \delta/2   \}$  or  $P_{<} \coloneqq  \{ u  : ~   u  < 1  -\delta/2 \}$. Without loss of generality let us assume it is in  $P_{<}$. Then,
\eq{
\sup_{ u \in B(u_0, \epsilon)  \subset P_{<}}  \abs{F_d(u)}  \leq  1 - \frac{1}{ 1  + \left( O_{\delta}(C_d)  - \left( \frac{d}{\rs} \right)^{- 1 } \right) \left( \frac{d}{\rs} \right)^{- O_{\delta}(1) } }     \to 0.
}
A similar step can be repeated if   $B_{\epsilon} \subset P_{<}$.

\smallskip
For the last item,  we first observe that uniform convergence of   $G_d(u)$  implies the uniform convergence of  $G^2_d(u) $. Therefore, we will only prove the first result.  Since  $E \coloneqq [0,\infty]  \setminus  \mathcal{C}_{\delta}$,  is compact, and thus, $P_{>} \cap E$ and $P_{<} \cap E$ are also compact,   we can directly use these sets. Without loss of generality let us use  $P_{<} \cap E$. Then,
\eq{
\sup_{ u \in P_{<} \cap E}  \abs{G_d(u)} \vee \abs{H_d(u)} \leq  \left(1 -C_d \tfrac{d}{\rs}^{ O_{\delta}(1) } \right)_+    \to 0.     
}
A similar step can be repeated if  $P_{>} \cap E$. Therefore, the statement follows.
\end{proof} 

 \begin{proposition}
 \label{prop:asymptoticlimit}
 Let $\ru \leq r$ and
\eq{
 t \in \begin{cases}
(0,\infty), & \alpha \in [0,0.5) \\
(0,\infty) \setminus \{ j^{\alpha}: j \in \N \}, & \alpha > 0.5,
\end{cases}  \quad  \mathsf{\kappa}_{\mathrm{eff}}  \coloneqq \begin{cases}
r^{\alpha}, & \alpha \in [0,0.5) \\
1, & \alpha > 0.5.
\end{cases} 
}
We have  
\begin{itemize}
    \item For $\mathsf{K} \in \{  G, H \}$ and $t \not = \lim_{d \to \infty} \frac{1}{\lambda_j \mathsf{\kappa}_{\mathrm{eff}}}$, we have
    \eq{
    \mathsf{K}_d( \tfrac{1}{\lambda_j t   \mathsf{\kappa}_{\mathrm{eff}}} )  -   \mathbbm{1} \{   \tfrac{1 }{\lambda_j}  > t \mathsf{\kappa}_{\mathrm{eff}} \}      = o_d(1). \label{eq:alignmentasymptform}
    }
    \item For $\mathsf{K} \in \{ F, G, H \}$, 
    \eq{
\frac{1}{\normL^2} \sum_{j= 1}^{\ru} \lambda^2_j \left( \mathsf{K}_d( \tfrac{1}{\lambda_j t   \mathsf{\kappa}_{\mathrm{eff}}})  -   \mathbbm{1} \{   \tfrac{1 }{\lambda_j}  > t \mathsf{\kappa}_{\mathrm{eff}} \}     \right)  = o_d(1).   \label{eq:riskasymptform}
}
\end{itemize}
 \end{proposition}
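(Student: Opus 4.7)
The strategy is to exploit the uniform convergence statements in Lemma~\ref{lem:asympkernel} by decomposing the index range according to whether $u_{d,j} := 1/(\lambda_j t\,\mathsf{\kappa}_{\mathrm{eff}})$ lies in a neighborhood of the discontinuity at $u=1$ or is safely bounded away from it.

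\textbf{Item 1 (pointwise statement).} I would first show that for each fixed $j$ with $t\neq \lim_{d\to\infty} 1/(\lambda_j\mathsf{\kappa}_{\mathrm{eff}})$, the sequence $\{u_{d,j}\}_d$ lies in a compact subset of $[0,\infty]\setminus \mathcal{C}_\delta$ for some $\delta>0$. In the light-tailed regime, $u_{d,j}=j^\alpha/t$ is independent of $d$ and the assumption $t\notin\{j^\alpha:j\in\mathbb{N}\}$ keeps $u_{d,j}$ bounded away from $1$. In the heavy-tailed regime, $u_{d,j}=j^\alpha/(t r^\alpha)\to 0$, so the set $\{u_{d,j}\}_d\cup\{0\}$ is compact and avoids $\mathcal{C}_\delta$ for small $\delta$. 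The fourth bullet of Lemma~\ref{lem:asympkernel} then yields $\mathsf{K}_d(u_{d,j})-\mathbbm{1}\{u_{d,j}>1\}=o_d(1)$ for $\mathsf{K}\in\{G,H\}$.

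\textbf{Item 2 (weighted sum).} Fix $\delta\in(0,1/2)$ and partition $[\,\ru\,]$ into
\[
J_<=\{j:u_j<1-\delta\},\qquad J_\sim=\{j:|u_j-1|\le\delta\},\qquad J_>=\{j:u_j>1+\delta\}.
\]
On $J_<\cup J_>$, the values $\{u_j\}$ lie in the compact set $[0,1-\delta]\cup[1+\delta,\mathsf{M}_d]$ for a slowly growing cap $\mathsf{M}_d$ (e.g.\ $\mathsf{M}_d=\log d$, which accommodates $F_d$ via the first bullet of Lemma~\ref{lem:asympkernel}; any $j$ with $u_j>\mathsf{M}_d$ is handled by the uniform bound $|\mathsf{K}_d|\le 1$ on that domain, combined with the trivial fact $\mathbbm{1}\{u_j>1\}=1$ so the difference has size $o_d(1)$ too). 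By the fourth bullet of Lemma~\ref{lem:asympkernel}, $|\mathsf{K}_d(u_j)-\mathbbm{1}\{u_j>1\}|\le\varepsilon_d(\delta)$ with $\varepsilon_d(\delta)\to 0$, so this contribution is bounded by $\varepsilon_d(\delta)$.

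On $J_\sim$, I use the crude bound $|\mathsf{K}_d(u_j)-\mathbbm{1}\{u_j>1\}|\le 2$ and control the total $\lambda_j^2$-weight in the transition zone:
\begin{itemize}[leftmargin=*,itemsep=0.3mm,topsep=0.6mm]
\item Heavy-tailed ($\alpha\in[0,1/2)$): $J_\sim=\{j:(1-\delta)^{1/\alpha}t^{1/\alpha}r\le j\le(1+\delta)^{1/\alpha}t^{1/\alpha}r\}\cap[\,\ru\,]$ has cardinality $O_\alpha(\delta\,t^{1/\alpha}r)$, and each $\lambda_j^2\asymp t^{-2}r^{-2\alpha}$ on this range, giving $\sum_{J_\sim}\lambda_j^2=O_\alpha(\delta\,t^{1/\alpha-2}r^{1-2\alpha})$. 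Since $\|\bs\Lambda\|_F^2\asymp r^{1-2\alpha}$, the relative contribution is $O_\alpha(\delta\,t^{1/\alpha-2})$.
\item Light-tailed ($\alpha>1/2$): the assumption $t\notin\{j^\alpha:j\in\mathbb{N}\}$ forces $\kappa:=\inf_j|u_j-1|>0$, so choosing $\delta<\kappa$ gives $J_\sim=\emptyset$ and the contribution is exactly zero.
\end{itemize}
Combining the two estimates and letting $d\to\infty$ then $\delta\to 0$ yields $o_d(1)$ in both cases.

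\textbf{Main obstacle.} The heavy-tailed transition estimate is the delicate step: it requires a uniform counting estimate for indices $j$ with $j^\alpha$ within a $\delta$-multiplicative band of $tr^\alpha$, together with the matching pointwise estimate on $\lambda_j^2$. A subtler point is the treatment of $F_d$, whose uniform convergence in Lemma~\ref{lem:asympkernel} is only stated on compact subsets of $(0,\infty]$; one must verify that the tiny values of $u_j$ (e.g.\ $j=1$ in the heavy-tailed case) do not produce boundary issues, which I would handle by checking directly from the formula that $F_d(u)\to 0$ as $u\downarrow 0$ with $d\to\infty$ (the factor $(d/\rs)^{-1/u}$ crushes the other terms), giving uniform convergence on $[0,1-\delta]$ as needed.
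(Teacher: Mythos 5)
Your proof is correct and follows essentially the same route as the paper: split the sum according to whether $u_j = 1/(\lambda_j t\,\mathsf{\kappa}_{\mathrm{eff}})$ lies in the $\delta$-neighborhood $\mathcal{C}_\delta$ of the jump, apply the uniform convergence of Lemma~\ref{lem:asympkernel} on the complement, and show the $\lambda_j^2$-mass in $\mathcal{C}_\delta$ is $O_\alpha(\delta)$ relative to $\normL^2$ (vanishing as $\delta\downarrow 0$ after $d\to\infty$). The only cosmetic difference is that the paper phrases the transition-zone estimate as weak convergence of the rescaled spectral measures $\mu_d\{j/r\}\propto j^{-2\alpha}$ (with a Portmanteau step), whereas you count indices and verify $\sum_{J_\sim}\lambda_j^2/\normL^2 = O_\alpha(\delta\,t^{1/\alpha-2})$ directly — these are the same calculation. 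Your extra remark about $F_d$ near $u\downarrow 0$ (where Lemma~\ref{lem:asympkernel}'s third bullet only covers compacts of $(0,\infty]$, not $[0,\infty]$) is a genuine point the paper's proof passes over silently, and your direct check that $F_d(u)\to 0$ uniformly on $[0,1-\delta]$ via the super-exponential decay of $(d/\rs)^{-1/u}$ patches it correctly.
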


\begin{proof}
The first item immediately follows Lemma \ref{lem:asympkernel}. In the following, we will prove the second item for the heavy and light tailed cases separately.
\paragraph{For $\alpha \in [0,0.5)$:} We define a sequence of measures $\mu_d \{  \nicefrac{j}{r}   \} \propto j^{- 2 \alpha}$,  $j \leq [r]$.  We observe that
\begin{itemize}
\item We have $\mu_d \to \mu$ weakly such that $\mu$ is supported on $[0,1]$ and $\mu([0,\tau]) =   \tau^{1 - 2\alpha}$ for $\tau \in [0,1]$.
\item Moreover,  $  \eqref{eq:riskasymptform} =   \E_{X \sim \mu_d} \big[ ( \mathsf{K}_d( X^{\alpha}/t ) -  \mathbbm{1} \{ X^{\alpha}  >  t  \}) \mathbbm{1} \{ X \leq \tfrac{\ru}{r}  \}  \}    \big]$.
\end{itemize}
By using the   $\mathcal{C}_{\delta}$ definition in Lemma \ref{lem:asympkernel}:
\eq{
& \abs*{   \E_{X \sim \mu_d} \big[ ( \mathsf{K}_d( X^{\alpha}/t ) -  \mathbbm{1} \{ X^{\alpha}  >  t  \}) \mathbbm{1} \{ X \leq \tfrac{\ru}{r}  \}  \}    \big]  } \\
& \leq    \E_{X \sim \mu_d} \left[ \abs{  \mathsf{K}_d( X^{\alpha}/t ) -  \mathbbm{1} \{ X^{\alpha}  >  t   \}  } \mathbbm{1} \{ X^{\alpha}  \in [0,1] \setminus \mathcal{C}_{\delta} \}  \right] \\ 
 & +   \E_{X \sim \mu_d} \left[ \abs{  \mathsf{K}_d( X^{\alpha}/ t ) -  \mathbbm{1} \{ X^{\alpha}  >  t   \}  } \mathbbm{1} \{ X^{\alpha}  \in \mathcal{C}_{\delta} \}  \right]  \\
 & \labelrel\leq{riskasymp:ineqq0}  o_d(1) + \mpr_{X \sim \mu}[ X^{\alpha}  \in C_{\delta} ],
}  
where we used the second item in Lemma \ref{lem:asympkernel} for \eqref{riskasymp:ineqq0}.  Since     $\mpr_{X \sim \mu}[ X^{\alpha}  \in C_{\delta} ] \xrightarrow{\delta \to 0} 0$, we have the first result.

\smallskip
\paragraph{For $\alpha > 0.5$:} We define a sequence of measures $\mu_d \{  j  \} \propto j^{- 2 \alpha}$,  $j \leq [r]$.  We observe that
\begin{itemize}
\item We have $\mu_d \to \mu$ weakly such that $ \mu\{ j \} \propto j^{- 2 \alpha}$ for $j \in \N$.
\item Moreover,  $\eqref{eq:riskasymptform}  =  \E_{X \sim \mu_d} \big[  ( \mathsf{K}_d( X^{\alpha}/t ) -  \mathbbm{1} \{ X^{\alpha}  >  t   \}  ) \mathbbm{1} \{ X \leq \ru \}  \big]$.
\end{itemize}
Let  $t  \in \big( (j-1)^{\alpha}, j^{\alpha} \big)$ for some $j  \in \N$.  For small enough $\delta > 0$,  we have
\eq{
&  \abs*{ \E_{X \sim \mu_d} \big[  ( \mathsf{K}_d( X^{\alpha}/t ) -  \mathbbm{1} \{ X^{\alpha}  >  t   \}  ) \mathbbm{1} \{ X \leq \ru \}  \big]  }    \\
&  =   \E_{X \! \sim \mu_d} \! \left[ \abs{  \mathsf{K}_d( X^{\alpha}/t ) -  \mathbbm{1} \{ X^{\alpha}   \! >  \!  t   \}  } \mathbbm{1} \{ X   \! \in  \! [0,\ru] \}    \mathbbm{1} \{ X^{\alpha}  \!\not \in  \! \mathcal{C}_{\delta} \}  \right]    \labelrel={riskasymp:ineqq1} o_d(1), 
}  
where we used   both items in Lemma \ref{lem:asympkernel} for \eqref{riskasymp:ineqq1}.
\end{proof}

\begin{corollary}
\label{cor:asymptoticaux}
For  $1 \geq c_d \gg \log^{-5} d $, we define
\eq{
g_d(\lambda, t)\coloneqq \frac{- \lambda  \exp(- t \lambda) }{1 - \exp(- t \lambda)} + \frac{\lambda^2 \exp(- t \lambda)}{(1 - \exp(-t\lambda))^2} \Big(  \frac{c_d}{t} \frac{\rs}{d} + \frac{\lambda \exp(-t \lambda)}{1 - \exp(- t \lambda)}   \Big)^{-1}
}
Let $\ru \leq r$ and
\eq{
 \mathsf{\kappa}_{\mathrm{eff}}  \coloneqq \begin{cases}
r^{\alpha}, & \alpha \in [0,0.5) \\
1, & \alpha > 0.5
\end{cases}, \qquad   \mathsf{T}_{\mathrm{eff}} \coloneqq  \mathsf{\kappa}_{\mathrm{eff}} \log \nicefrac{d}{\rs}.
}
We have 
\eq{
\frac{1}{\normL^2} \left( \sum_{j= 1}^{\ru} g^2_d(\lambda_j; t   \mathsf{T}_{\mathrm{eff}} ) -   \sum_{j= 1}^{\ru} \lambda_j^2   \mathbbm{1} \{   \tfrac{1 }{\lambda_j}  > t \mathsf{\kappa}_{\mathrm{eff}} \}     \right)  = o_d(1)  
}
for any fixed
\eq{
 t \in \begin{cases}
(0,\infty), & \alpha \in [0,0.5) \\
(0,\infty) \setminus \{ j^{\alpha}: j \in \N \}, & \alpha > 0.5.
\end{cases} 
}
\end{corollary}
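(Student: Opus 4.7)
The plan is to reduce the statement to the second item of Proposition~\ref{prop:asymptoticlimit} applied to the kernel $F_d$ of Lemma~\ref{lem:asympkernel}, by exhibiting an exact algebraic identity
$$\frac{g_d^2(\lambda,\, t\mathsf{T}_{\mathrm{eff}})}{\lambda^2} \;=\; F_d\!\left(\frac{1}{\lambda\, t\,\mathsf{\kappa}_{\mathrm{eff}}}\right)$$
for the correct choice of the parameter $C_d$ appearing in $F_d$. Once this identification is made and the indicator $\mathbbm{1}\{1/\lambda_j > t\mathsf{\kappa}_{\mathrm{eff}}\}$ is recognized as $\mathbbm{1}\{u_j > 1\}$ with $u_j := 1/(\lambda_j t\mathsf{\kappa}_{\mathrm{eff}})$, the conclusion is immediate.

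The first step is the algebraic reduction. Writing $\eta := \exp(-t\mathsf{T}_{\mathrm{eff}}\lambda)$ and collecting $g_d(\lambda, t\mathsf{T}_{\mathrm{eff}})$ over a common denominator, one obtains
$$g_d(\lambda, t\mathsf{T}_{\mathrm{eff}}) \;=\; \frac{\lambda\,\eta\left(\lambda - \tfrac{c_d\rs}{t\mathsf{T}_{\mathrm{eff}} d}\right)}{(1-\eta)\tfrac{c_d\rs}{t\mathsf{T}_{\mathrm{eff}} d} + \lambda\,\eta}.$$
Substituting $u = 1/(\lambda t\mathsf{\kappa}_{\mathrm{eff}})$ gives $t\mathsf{T}_{\mathrm{eff}}\lambda = \log(d/\rs)/u$ and $\eta = (d/\rs)^{-1/u}$, and dividing numerator and denominator by $\lambda$ yields
$$\frac{g_d(\lambda, t\mathsf{T}_{\mathrm{eff}})}{\lambda} \;=\; \frac{(1-\beta_d)\,\eta}{(1-\eta)\,\beta_d + \eta}, \qquad \beta_d := \frac{c_d\,u\,\rs}{d\log(d/\rs)}.$$
On the other hand, direct manipulation of $F_d$ gives $\sqrt{F_d(u)} = (a_d-1)\eta/(1 + (a_d-1)\eta)$ with $a_d := C_d d/(\rs u)$. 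A short cross-multiplication shows that these two rational functions of $\eta$ coincide iff $a_d\beta_d = 1$, which pins down
$$C_d \;:=\; \frac{\log(d/\rs)}{c_d}.$$

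The second step is to verify that this choice of $C_d$ satisfies the admissibility range $\log^{-1}d \ll C_d \ll \log^{10}d$ demanded by Lemma~\ref{lem:asympkernel}. In both tail regimes $\rs \asymp d^{\gamma}$ with $\gamma \in [0,1)$, so $\log(d/\rs) \asymp \log d$. Combined with the hypothesis $1 \geq c_d \gg \log^{-5}d$, this gives $\log d \lesssim C_d \ll \log^{6}d$, well inside the admissible window.

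With $g_d^2/\lambda^2 = F_d(u)$ established pointwise in $\lambda$, the target sum is precisely the one covered by the second item of Proposition~\ref{prop:asymptoticlimit} with $\mathsf{K} = F$; the exclusion $t \notin \{j^{\alpha}: j\in \N\}$ in the light-tailed regime is exactly the condition assumed in the corollary. The only genuinely non-routine step is spotting the substitution $u = 1/(\lambda t\mathsf{\kappa}_{\mathrm{eff}})$ together with $C_d = \log(d/\rs)/c_d$ that turns $g_d^2/\lambda^2$ into $F_d(u)$; the remaining algebra, the check of the growth rate of $C_d$, and the invocation of the asymptotic proposition are straightforward.
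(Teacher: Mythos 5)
Your proof is correct and follows essentially the same approach as the paper: the paper also begins by rewriting $g_d(\lambda;t) = \lambda\bigl(1 - \tfrac{1}{\,1 - e^{-t\lambda} + \tfrac{d}{\rs}\tfrac{\lambda t}{c_d}e^{-t\lambda}}\bigr)$, identifies $g_d^2(\lambda_j; t\mathsf{T}_{\mathrm{eff}}) = \lambda_j^2 F_d\bigl(\tfrac{1}{\lambda_j t \mathsf{\kappa}_{\mathrm{eff}}}\bigr)$, and then invokes Proposition~\ref{prop:asymptoticlimit}. You spell out the implicit step that the paper leaves tacit, namely that the matching parameter is $C_d = \log(d/\rs)/c_d$ and that this lies in the admissible window $\log^{-1}d \ll C_d \ll \log^{10}d$ of Lemma~\ref{lem:asympkernel}; this is a useful explication but not a different argument.
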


\begin{proof}
We observe that
\eq{
g_d( \lambda; t) =  \lambda  \left( 1 - \frac{1}{ 1 - \exp(- t \lambda)  + \frac{d}{\rs} \frac{\lambda t}{c_d}  \exp(- t \lambda) }  \right)   }
Therefore, we have  $g^2_d(\lambda; t  \mathsf{T}_{\mathrm{eff}}) =  \lambda^2 F_d( \tfrac{1}{\lambda_j  t \mathsf{\kappa}_{\mathrm{eff}}})$.
Then,  by Proposition  \ref{prop:asymptoticlimit}
\eq{
\frac{1}{\normL^2} \Bigg( \sum_{j= 1}^{\ru} g^2_d(\lambda_j; t  \mathsf{T}_{\mathrm{eff}})  -  &   \sum_{j= 1}^{\ru} \lambda_j^2\mathbbm{1} \{ \tfrac{1}{\lambda_j} \geq  t \mathsf{\kappa}_{\mathrm{eff}} \}   \Bigg)  \\
& =   \frac{1}{\normL^2}  \sum_{j= 1}^{\ru}    \lambda_j^2 \Big(   F_d( \tfrac{1}{\lambda_j  t \mathsf{\kappa}_{\mathrm{eff}}}) -  \mathbbm{1} \{ \tfrac{1}{\lambda_j} >  t \mathsf{\kappa}_{\mathrm{eff}} \}   \Big) = o_d(1).
}
\end{proof}

\end{document}